\documentclass[11pt]{article}
\pdfoutput=1

\usepackage{mathrsfs}
\usepackage{amsmath,amssymb}
\usepackage{bm}
\usepackage{natbib}
\usepackage[usenames]{color}
\usepackage{amsthm}

\usepackage{multirow} 
\usepackage{enumitem}
\usepackage{bbm}

\usepackage[colorlinks,
linkcolor=red,
anchorcolor=blue,
citecolor=blue
]{hyperref}

\usepackage{mylatexstyle}

\usepackage{setspace}
\usepackage[left=1in, right=1in, top=1in, bottom=1in]{geometry}

\usepackage{xcolor}

\ifdefined\final
\usepackage[disable]{todonotes}
\else
\usepackage[textsize=tiny]{todonotes}
\fi
\allowdisplaybreaks

\author
{
}

\date{}

\newcommand{\yq}{y_{\mathrm{query}}}
\newcommand{\xq}{\mathbf{x}_{\mathrm{query}}}

\newenvironment{nscenter}
 {\parskip=5pt\par\nopagebreak\centering}
 {\par\noindent\ignorespacesafterend}

\begin{document}

\title{\huge Transformers Efficiently Perform In-Context Logistic Regression via Normalized Gradient Descent}

\author
{
Chenyang Zhang\thanks{%\scriptsize Department of Statistics \& Actuarial Science, School of Computing \& Data Science, 
\scriptsize School of Computing \& Data Science, The University of Hong Kong; {\tt chyzhang@connect.hku.hk}}
\qquad
Yuan Cao\thanks{
%\scriptsize Department of Statistics \& Actuarial Science, School of Computing \& Data Science, 
\scriptsize School of Computing \& Data Science, 
	The University of Hong Kong;  {\tt yuancao@hku.hk}}
}

\maketitle

\begin{abstract}
Transformers have demonstrated remarkable in-context learning (ICL) capabilities. The strong ICL performance of transformers is commonly believed to arise from their ability to implicitly execute certain algorithms on the context, thereby enhancing prediction and generation. In this work, we investigate how transformers with softmax attention perform in-context learning on linear classification data. We first construct a class of multi-layer transformers that can perform in-context logistic regression, with each layer exactly performing one step of normalized gradient descent on an in-context loss. 
Then, we show that our constructed transformer can be obtained through (i) training a single self-attention layer supervised by one-step gradient descent, and (ii) recurrently applying the trained layer to obtain a looped model. Training convergence guarantees of the self-attention layer and out-of-distribution generalization guarantees of the looped model are provided. Our results advance the theoretical understanding of ICL mechanism by showcasing how softmax transformers can effectively act as in-context learners.
\end{abstract}

%------------------------------------------------
\section{Introduction}
Transformers have achieved remarkable success in a wide range of applications, including natural language processing \citep{vaswani2017attention, devlin2018bert, touvron2023llama}, computer vision \citep{dosovitskiy2020image, rao2021dynamicvit, yuan2021tokens, zhang2025mr}, and reinforcement learning \citep{janner2021offline, reed2022generalist,kimopenvla25}. One widely recognized interpretation for their empirical success is their ability to perform in-context learning (ICL): pre-trained transformers are capable of performing previously unseen tasks based on demonstrations and examples in the prompt, without requiring any additional task-specific fine-tuning \citep{brown2020language}.

% re-trained transformers can perform new tasks simply based on a few examples in the prompt, without requiring any task-specific fine-tuning \citep{brown2020language}.

% pre-trained transformers can make prediction on new tasks when prompted with a sequence of training examples from the same task, without requiring any task-specific fine-tuning \citep{brown2020language}.

% empirical mean-square error(linear regression) given by context training examples referred as 
% given a sequence of training examples from cas prompt, transformers can adapt their predictions to a new test input without any explicit parameter updates \citep{brown2020language}.
A line of recent works interpret the in-context learning (ICL) capability of transformers from an algorithmic perspective, viewing transformers as models that can implicitly execute certain learning algorithms on the context examples.
%A line of recent works interpret the ICL learning abilities from the perspective that transformers can implicitly perform certain algorithms to learn the underlying mapping from the context training examples. 
Specifically, \citet{garg2022can} proposes a theoretical framework for ICL in terms of learning a hypothesis class, and empirically shows that transformers can in-context learn the linear function class. Motivated by this empirical finding, several recent works attempt to theoretically study how transformers perform in-context learning on linear regression tasks. \citet{akyurek2022learning, von2023transformers} construct multi-layer transformers with linear attention that can execute gradient descent on the an ``in-context loss'' defined on the context data, thereby enabling in-context learning of linear regression. 
 \citet{ahn2023transformers, zhang2024trained, huangcontext} further provide training guarantees for single-layer transformers with linear or softmax attention, showing that such models can be trained to solve in-context regression problems. Beyond standard linear regression, \citet{guo2023transformers, bai2024transformers} further demonstrate the in-context learning capacities of transformers by constructing multi-head ReLU transformers capable of performing a variety of learning algorithms, including ridge regression, Lasso regression, and generalized linear models. More recently, \citet{freitrained, shentraining2025} study how single-layer transformers with linear attention can be trained to solve in-context classification on Gaussian mixture data.

 % \citet{zhang2024context} consider a linear transformer block and show that the MLP component can learn to simulate the initialization of in-context gradient descent. 
 
% However, the majority of these works focus on regression settings, and the theoretical understanding for classification tasks remains much more limited.

 % Nevertheless, their results neither extend to the more commonly used softmax attention layer nor accommodate multi-layer transformer architectures. Moreover, 
% , from various perspectives.

Several recent works also investigate in-context learning and other learning tasks with looped transformers, in which the same parameter matrices are shared across different layers. 
\citet{gatmiry2024can, chen2025bypassing} theoretically show that looped transformers still implement gradient descent for in-context linear regression.
\citet{yanglooped24} empirically demonstrates that looped transformers can achieve performance comparable to that of standard transformers in in-context learning, while using significantly fewer parameters. \citet{geiping2025scaling} further shows that at inference time, transformers can effectively benefit from increased depth by recurrently applying a trained block.

% Although model depth is usually limited by computational budgets during training, 
% Moreover, several recent works investigate the looped-transformers, an architecture shares the same parameter matrices across different layers, from different aspects. \citet{gatmiry2024can,chen2025bypassing} theoretically validates that looped transformers still implement gradient descent for in-context linear regression. \citet{yanglooped24} empirically shows that looped transformers can achieve comparable performance with standard transformers when performing in-context learning, while using fewer parameters. While the model's depth is usually limited by computational constraints during training, \citet{geiping2025scaling} shows that at the inference stage, transformers can benefit from increasing the depth by recurrently applying the trained block, achieving better performance. 

% Motivated by the limited understanding of ICL on classification tasks and the benefits of a looped structure, i
In this work, we study how a multi-layer (looped) transformer with softmax attention performs in context learning on classification tasks. Following the settings in~\citet{huangtransformers2025}, we consider using transformers to solve an ``in-context weight prediction'' task, and investigate transformer's expressive power, training guarantees, and out-of-distribution (O.O.D.) generalization performance on this task. The major contributions of this work are as follows.

% Specifically, for the classification context dataset $D_n=\{(\xb_i, y_i)\}_{i=1}^n\subset \RR^{d}\times \{\pm 1\}$, the label $y_i$ is assumed to be determined by a ground-truth classifier $\btheta^*$, as $y_i = \sign(\la\xb_i, \btheta^*\ra)$ for all $i\in [n]$. Then, for any context dataset $D_n$, the goal of transformers is to output a prediction for its corresponding classifier $\btheta^*$. 

% multi-layer transformers with linear attentions can execute gradient descent on the empirical mean-square error(linear regression) given by context training examples, refered as in-context loss, and hence in-context learn linear regression model.

\begin{itemize}[leftmargin = *]
\item We establish expressive power guarantees and demonstrate that there exists a class of multi-layer softmax transformers that can perform in-context logistic regression (defined on exponential loss function)\footnote{In the implicit-bias literature, linear classification with exponential-tailed losses is often treated under the umbrella of the term ``logistic regression'' and analyzed under a unified framework. In this work, following this convention, we slightly abuse the term ``in-context logistic regression'' to refer specifically to the in-context linear classification setting with the exponential loss.} via normalized gradient descent. Specifically, the hidden layers' outputs of an $L$-layer transformer exactly match the first $L$ iterates of normalized gradient descent on the in-context loss of logistic regression. Leveraging this exact equivalence and the implicit bias of normalized gradient descent, we further prove that the transformer’s output converges in direction to the maximum-margin solution of the context dataset as the depth $L$ increases.
%Utilizing the equivalence between the output of transformers and the iterates of normalized gradient descent, and the implicit bias result of normalized gradient descent~\citep{ji2021characterizing}, we proves that the output of transformers in direction converges to the maximum-margin solution of the context dataset as the number of layers $L$ increases. 
\item %We observe that the transformers constructed for performing in-context logistic regression naturally admit a looped structure. Inspired by this observation, 
We also study whether our constructed models can be obtained through training. We consider the strategy to first train a single-layer transformer, and then obtain a looped transformer by recurrently applying the trained layer. For this training problem, our results precisely characterize the existence of a unique minimizer, establish a linear convergence rate, and demonstrate that the obtained model aligns well with the ones constructed in our expressive power guarantees. Interestingly, our results show that the transformer learns normalized gradient descent even though it is supervised by a ``gradient descent teacher''.

% the required parameterizations for performing in-context logistic regression, validating that the desired transformers can be obtained through training. 
\item We validate the capacities of the obtained looped transformers to solve in-context weight prediction by providing an O.O.D. generalization bound. Notably, this result is a point-wise high probability guarantee that holds for any input, and is stronger than the in-expectation guarantees commonly adopted in the in-context learning literature. Under mild assumptions of the number of layers $L$, the generalization bound is given as $\tilde\cO\big(\tfrac{d}{n}\big)$, matching the PAC learning sample complexity lower bound \citep{long1995sample}.
\item From a technical perspective, our training analysis develops several novel proof techniques, including an analysis based on an approximate training procedure, the application of the Newton–Kantorovich theorem, and the derivation of a Polyak-Lojasiewicz inequality. We believe these novel proof techniques may help the analysis of transformers with softmax attention in broad scenarios, and thus may be of independent interest.
\end{itemize}
\noindent\textbf{Notation.} Given two sequences $\{x_n\}$ and $\{y_n\}$, we denote $x_n = \cO(y_n)$ if there exist some absolute constant $C_1 > 0$ and $N > 0$ such that $|x_n|\le C_1 |y_n|$ for all $n \geq N$. Similarly, we denote $x_n = \Omega(y_n)$ if there exist $C_2 >0$ and $N > 0$ such that $|x_n|\ge C_2 |y_n|$ for all $n > N$. We say $x_n = \Theta(y_n)$ if $x_n = \cO(y_n)$ and $x_n = \Omega(y_n)$ both holds.  We use $\tilde \cO(\cdot)$, $\tilde \Omega(\cdot)$, and $\tilde \Theta(\cdot)$ to hide logarithmic factors in these notations respectively. Moreover, we denote $x_n=\poly(y_n)$ if $x_n=O( y_n^{D})$ for some positive constant $D$, and $x_n = \polylog(y_n)$ if $x_n= \poly( \log (y_n))$. For two scalars $a$ and $b$, we denote $a \vee b = \max\{a, b\}$ and $a\wedge b =\min\{a,b\}$. For any $n\in \NN_+$, we use $[n]$ to denote the set $\{1, 2, \cdots, n\}$. We use $\mathbf{1}_n$ to denote a $ n$-dimensional vector with all 1 entries. For any $d_1, d_2\in \NN_{+}$, we denote $\Ib_{d_1}$the $d_1\times d_1$ identity matrix, and $\mathbf{0}_{d_1\times d_2}$ the $d_1\times d_2$ matrix with all its entries being zero. We denote $\SSS^{d-1}$ the $d$-dimensional unit sphere, and $\cU(\SSS^{d-1})$ the uniform distribution over $\SSS^{d-1}$. For any $\ab\in \RR^{d}$ and a positive definite matrix $\bSigma\in \RR^{d\times d}$, $\cN(\ab, \bSigma)$ denotes the $d$-dimensional Gaussian distribution with mean $\ab$ and covariance matrix $\bSigma$. We use $\phi(\cdot)$, and $\Phi(\cdot)$ to denote the p.d.f. and c.d.f. of standard normal distribution, respectively. For any matrix $\Ab$, we use $\lambda_{i}(\Ab)$ to denote its $i$-th eigenvalue. In addition, we write $[\Ab]_{i:j, k}$ to denote the subvector formed by entries in rows $i$ through $j$ of the $k$-th column. Similarly, $[\Ab]_{i, j:k}$ denote the subvector formed by entries in columns $j$ through $k$ of the $i$-th row.

\section{Related works}
\noindent\textbf{In-context learning of transformers.} 
In-context learning capacities of transformers are first formalized in~\citet{garg2022can}, with experiments on linear function hypothesis classes. Given this formulation, many recent works attempt to investigate the in-context learning capabilities under different settings. \citet{akyurek2022learning,von2023transformers} theoretically demonstrate the expressive power of transformers, showing that they can implicitly perform algorithms on context data like gradient descent. \citet{ahn2023transformers} explicitly constructs a multi-layer linear transformer capable of conducting preconditioned gradient descent for linear regression, and provides the characterization of the loss landscape and training convergence. \citet{zhang2024trained} investigates the training of a single-layer linear transformer on the context data embedded from linear regression samples, and provides both in-distribution and out-of-distribution generalization guarantees. \citet{huangcontext} extends this result to the single-layer softmax transformer, while requiring a strong assumption regarding the training strategy. Also, for in-context linear regression, \citet{siyu2024training, chen2024transformers} investigates the mechanism of the multi-heads from the training and expressive power perspectives respectively. \citet{zhang2024context} investigates the training of a single-layer transformer connected with an MLP block, and shows it can simulate the one-step of gradient descent with learnable initialization. \cite{bai2024transformers, guo2023transformers} further extend the scope of in-context algorithms implemented by transformers, showing that they can perform empirical risk minimization for linear regression, ridge regression, Lasso, and more general generalized linear models.~\citet{freitrained,shentraining2025} investigates the training of single-layer linear transformer to solve in-context classification with Gaussian mixture inputs. 
~\citet{chencan2024} study the role of depth in transformers through a set of sequence learning tasks, showing that while a single attention layer can achieve memorization, reasoning and generalization require multiple attention layers.~\citet{chen2025towards} studies how test-time computation in transformers can be understood through in-context linear regression with randomness and sampling.~\citet{cao2025transformers} investigates the expressive power of Transformers in Bayesian network sequence modeling, showing that they can implement in-context maximum likelihood estimation and autoregressive sampling.~\citet{lirobustness2025} study the context hijacking phenomenon through the lens of an optimization procedure with heterogeneous learning rates.~\citet{anwar2024understanding} study in-context linear regression through an adversarial lens, showing that adversarial attacks transfer poorly across transformer seeds and between transformers and classical learning algorithms.

\textbf{Optimization of transformers.} 
Besides the studies focusing on the ICL capacities of transformers, several recent works study the training behavior under other settings. 
\citet{zhang2020adaptive, kunstnernoise2023, pan2023toward, li2024optimization, zhang2024transformers} investigate how transformers behave when trained by different optimizers, covering distinct theoretical and empirical settings. \citet{litheoretical, jelassi2022vision} studies the training of shallow ViT-type transformers under certain specified initializations. \citet{gao2024global} addresses the global convergence of transformers given certain prerequisites. \citet{ildizself2024, chenunveiling, nichanitransformers2024, shi2025towards} study how transformers are trained on Markovian data and how attention mechanisms recover the underlying transition structure. Specifically, \citet{ildizself2024} characterize self-attention as a context-conditioned Markov chain, \citet{chenunveiling} show that induction heads emerge as a copier–selector–classifier mechanism on $n$-gram data, \citet{nichanitransformers2024} demonstrates that attention gradients recover latent causal graphs, and \citet{shi2025towards} shows that attention selects parent states while values implement Markov transitions in random walks. In addition, some works provide learning guarantees of transformers in certain statistical tasks, including the sparse-linear classifier~\citep{zhang2025transformer}, sparse token selection~\citep{wangtransformers24}, one-nearest neighbor selection~\citep{li2024one}, maximum hard-margin classifier~\citep{tarzanagh2023transformers, ataee2023max}, the compositions of functions~\citep{wang2025learning}, implementation of spectral methods and EM updates on Gaussian mixture models~\citep{he2025learning, he2025transformers}, and ``teacher-student'' distillation~\citep{zhangtransformers2026}. 

% ~\citet{li2024one} analyzes transformer training behavior in the context of one-nearest neighbor selection. \citet{he2025transformers} theoretically characterizes Softmax attention as approximating the Expectation and Maximization updates in EM for Gaussian mixture models. \citet{he2025learning} show that multi-layer Transformers can provably learn and implement spectral methods for Gaussian mixture models via pre-training.

% To understanding the training behaviors on transformers, the

\textbf{Implicit bias of logistic regression.}
A line of theoretical works studies the implicit bias of different optimizers on logistic regression \citep{soudry2018implicit, ji2019implicit,nacson2019convergence,qian2019implicit,ji2021characterizing,wang2022does,zhang2024implicit,wang2024achieving}.
\citet{soudry2018implicit} proves that the iterates of gradient descent directionally converge to the maximum $\ell_2$-margin solution on separable data, and \citet{ji2019implicit} extends this result to the settings with non-separable data. While the previous results are established for full-batch gradient descent, \citet{nacson2019convergence} studies the implicit bias of stochastic gradient descent, and demonstrates the same directional convergence results as full-batch gradient descent. \citet{ji2021characterizing} propose a primal-dual
framework, and demonstrate a fast polynomial convergence rate for implicit bias of normalized gradient descent. \citet{wang2024achieving} further proposes an exponentially adaptive learning rate for gradient descent, which achieves a linear convergence rate. \citet{wu2023implicit} studies the implicit bias of gradient descent under the ``edge of stability'' setting, where the learning rate can be set as arbitrarily large. Besides gradient descent, several works have investigated adaptive gradient-based optimization methods that incorporate momentum. \citet{qian2019implicit} studies the implicit bias
of AdaGrad, and shows a directional convergence toward a maximum-margin solution under certain preconditioned norm. \citet{wang2022does} studies shows that the momentum does not change the implicit bias of gradient descent. \citet{zhang2024implicit} demonstrates an implicit bias towards the maximum $\ell_{\infty}$-margin of Adam.

\section{Problem setups}
In this section, we introduce the problem setting of in-context learning for weight prediction and the multi-layer softmax transformer models considered in this work.  
 %across different prompts $P_\tau$ can be different samples from the function class distribution , and the model is supposed to be adapted to all different prompts with their distinct target functions simultaneously. 
 
 %Specifically, we say a model has the ICL ability 

% If the target function $h_{\theta}(\cdot)$ is unknown but fixed, this problem falls into a classical supervised learning regime, and many classic models and algorithm has 

% given a sequence of context examples $\{(\xb_i, y_i)\}_{i=1}^n$. The combination of the context examples and query input $P = [(\xb_1, y_1), \ldots, (\xb_n, y_n), \xb_{\mathrm{query}}]$

% given a sequence of pairs $\{(\xb_i, y_i)\}_{i=1}^n \subset\RR^d \times \RR$ from some unknown distribution $\cP$, and we are expected to predict the unknown label $y_{\mathrm{query}}$ of the query $\xb_{\mathrm{query}}$. 

\noindent\textbf{In-context learning for weight predictions.} 
In-context learning (ICL) refers to a learning framework in which the input consists of a collection of context data pairs $D_n=\{(\xb_i, y_i): \xb_i\in \cX, y_i\in \cY\}_{i=1}^n\in\cD$, together with a query input $\xq\in \cX$, whose label $\yq\in\cY$ is unknown. An in-context learning model $f(\cdot, \cdot):\cX\times \cD\to \cY$ is then expected to infer the underlying feature-label mapping from $D_n$, and produce a prediction for the unknown label $\yq$ in the form of $\hat y_{\mathrm{query}} = f(\xq, D_n)$. As demonstrated in recent theoretical works \citep{ahn2023transformers, bai2024transformers, zhang2024trained}, in-context learning models like transformers typically handle this type of task by implicitly performing certain learning algorithms to fit a predictor $\hat g(\cdot):\cX\to \cY$ based on the context dataset $D_n$, and then generate the final prediction via $\hat y_{\mathrm{query}} = \hat g(\xq)$. 

Beyond the classic in-context learning setting where the goal is to output a prediction for $\yq$,
%implicitly learning a predictor $g_{D_n}(\cdot)$ and only outputting the value $g_{D_n}(\xq)$ for a given $\xq$, 
\citet{huangtransformers2025} further proposes the problem of ``in-context weight prediction'' under the linear regression setting. In this task, the predictor
$\hat g(\cdot) = \la\hat\btheta ,\cdot\ra$ is a linear model, and the in-context learner is required to explicitly output this weight vector $\hat\btheta$.
% based solely on the context set $D_n$, 
% which serves as a linear predictor via $g_{D_n}(\xq) = \la\hat\btheta, \xq\ra$. 
Specifically, they assume that each in-context set $D_n=\{(\xb_i,y_i)\}_{i=1}^n$ admits a ground truth vector $\btheta^*$ such that $y_i = \la\btheta^*, \xb_i\ra$, and the objective is to estimate $\btheta^*$ by $\hat\btheta$. Motivated by \citet{huangtransformers2025}, in this paper, we consider ``in-context weight prediction" in classification,

\begin{definition}\label{def:linear_classification_model}
Let $\cD_{\btheta^*}$ be a distribution over the $d$-dimensional unit sphere $\SSS^{d-1}$, and $\cD_{\xb}$ be a distribution over $\RR^d$. Then the context dataset $D_n=\{(\xb_i,y_i)\}_{i=1}^n \subset \RR^d\times\{\pm 1\}$ and its corresponding ground-truth vector $\btheta^*$ are generated from a joint distribution $\cD$ as:
\begin{enumerate}[leftmargin=*]
    \item The ground truth vector $\btheta^*$ is generated from $\cD_{\btheta^*}$.
    \item Each feature vector $\xb_i$ is generated from $\cD_{\xb}$, $i\in [n]$.
    \item Each label is determined as $y_i=\sign(\la \xb_i, \btheta^*\ra)$, $i\in [n]$.
\end{enumerate}
\end{definition}

% We assume that the context data set $D_n=\{(\xb_i,y_i)\}_{i=1}^n \subset \RR^d\times\{\pm 1\}$ and its corresponding ground-truth vector $\btheta^*$ are generated from a joint distribution $\cD$ as

% is generated through a ground-truth vector $\btheta^*$,  formalized in the following definition.
% %Definition~\ref{def:linear_classification_model}.
% \begin{definition}\label{def:linear_classification_model}
%     For each context data set $D_n=\{(\xb_i, y_i)\}_{i=1}^n\subset \RR^d\times\{\pm 1\}$, there exists a corresponding ground-truth vector $\btheta^*\in \SSS^{d-1}$, such that $y_i = \sign(\la\xb_i, \btheta^*\ra)$.
% \end{definition}

Note that the sign function is invariant to positive rescaling, rendering each label $y_i$ determined solely by the direction of $\btheta^*$. Consequently, we may assume without loss of generality that $\btheta^*$ lies on the unit sphere $\SSS^{d-1}$. For the same reason, we only require the predicted weight $\hat\btheta$ to approximate $\btheta^*$ up to its direction, quantified by $\big\|\tfrac{\hat\btheta}{\|\hat\btheta\|_2}-\btheta^*\big\|_2$.
%the normalized Euclidean distance $\big\|\tfrac{\hat\btheta}{\|\hat\btheta\|_2}-\btheta^*\big\|_2$.

% . Accordingly, for any $\btheta\in\RR^d$, we evaluate its quality as a weight prediction of $\btheta^*$ via the normalized Euclidean distance, defined as 
% \begin{align}\label{eq:eval_loss}
%     \cL_{\mathrm{Eval}}(\btheta) = \bigg\|\frac{\btheta}{\|\btheta\|_2}-\btheta^*\bigg\|_2.
% \end{align}
\noindent\textbf{Transformers with softmax attention.} We consider solving the in-context weight prediction tasks by transformers. The embedding matrix $\Zb_0$ for the context dataset 
$D_n=\{(\xb_i, y_i)\}_{i=1}^n$, which serves as the input to the transformer, is defined as
% The embedding matrix of the context dataset $D_n=\{(\xb_i, y_i)\}_{i=1}^n$, which serves as the input to transformers, is formulated as 
\begin{align}\label{eq:def_embedding}
    \Zb_{0}= \begin{bmatrix}
        \zb_{1} & \zb_{2} & \cdots & \zb_{n} & \mathbf{0}_d \\
\mathbf{0}_d & \mathbf{0}_d & \cdots & \mathbf{0}_d & \btheta_{0}%^{(0)} 
    \end{bmatrix}\in \RR^{2d\times (n+1)},
\end{align}
where $\zb_i = y_i\cdot\xb_i$ for all $i\in [n]$, in alignment with the common settings in linear classification. 
This choice does not restrict the input format: if the transformer takes the concatenated vector $[\xb_i^\top,y_i]^\top$ as input, as commonly considered in prior works, an embedding layer can transform it into $\zb_i=y_i\xb_i$, and the formal derivations are provided in Appendix~\ref{app:embedding_trick}.
% This choice does not restrict the input format: if the transformer takes the concatenated vector $[\xb_i^\top,y_i]^\top$ as input, a standard two-layer ReLU embedding layer can exactly transform $[\xb_i^\top,y_i]^\top$ into $\zb_i=y_i\xb_i$ under the boundedness condition on $\xb_i$; see Appendix~\ref{app:embedding_trick} for details. 
In addition, $\btheta_{0}$ serves as an initialization for the prediction of $\btheta^*$. 
% Its corresponding position is refined by the transformers to produce the final estimate.
%as an initial estimate of the ground-truth vector. 
% The corresponding position in the embedding matrix acts as a parameter slot, whose output is taken as the in-context weight prediction.
% serves as a placeholder: its corresponding position in the embedding matrix is designated to store the predicted parameter vector produced by the transformer. 
With the input matrix in the form of~\eqref{eq:def_embedding}, a standard self-attention layer \citep{vaswani2017attention} is defined as 
\begin{align}\label{eq:def_tf_one_layer}
    \mathtt{SA}(\Zb;\Vb, \Wb) = \Vb \Zb\mathrm{softmax}(\Zb^\top \Wb \Zb + \Mb).
\end{align}
In the formulation above, $\Vb, \Wb \in \RR^{2d\times 2d}$ denote the value and key-query parameter matrices of the self-attention layer, respectively. 
Following the convention of most theoretical studies~\citep{zhang2024trained,zhang2024context,huangcontext,wangtransformers24,zhang2025transformer},
we reparameterize the original key and query matrices %, $\Wb_K$ and $\Wb_Q$, 
 into a single trainable parameter matrix $\Wb$.
The mask matrix $\Mb=\begin{bmatrix} \mathbf{0}_{n\times (n+1)}\\ -\infty\cdot\mathbf{1}_{n+1}^\top \end{bmatrix}$ 
prevents attention to the last query column.
% We follow the convention in the most theoretical studies \citep{zhang2024trained,zhang2024context,huangcontext,wangtransformers24, zhang2025transformer} reparameterize the original key matrix $\Wb_{K}$ and query matrix $\Wb_{Q}$ into one trainable parameter matrix $\Wb\in\RR^{2d\times 2d}$. $\Mb\in \RR^{(n+1)\times(n+1)}$ denotes a fixed mask matrix to implement masking when assigning the attention weights. In this paper, we set $\Mb = \begin{pmatrix} \mathbf{0}_{n\times (n+1)}\\ -\infty\cdot\mathbf{1}_{n+1}^\top \end{pmatrix}$ to prevent the attentions to the last query column. 
To define an $L$-layer transformer, we denote
$(\Vb_{0:L-1}, \Wb_{0:L-1})=\big[(\Vb_0,\Wb_0), \ldots, (\Vb_{L-1},\Wb_{L-1})\big]$
as the collection of parameter pairs across layers.
Then, building upon the single self-attention layer defined in~\eqref{eq:def_tf_one_layer},
an $L$-layer transformer with residual connections and parameters
$(\Vb_{0:L-1}, \Wb_{0:L-1})$ is defined recursively as
\begin{align}\label{def:tf_multiple}
    &\mathtt{TF}(\Zb_0; \Vb_{0:L-1}, \Wb_{0:L-1})=
    \Zb_L \in \RR^{2d\times(n+1)}, \notag\\
    &\Zb_{l+1}=
    \Zb_l + \mathtt{SA}(\Zb_l; \Vb_l, \Wb_l),
    \ l = 0,\ldots,L-1.
\end{align}
% For the input matrix $\Zb_0$ defined in~\eqref{eq:def_embedding},
We read the entries in $\Zb_L$ located at the same position as $\btheta_0$ in $\Zb_0$, i.e., $\btheta_L=
[\Zb_L]_{d+1:2d,\, n+1}$, as the predicted weight vector corresponding to the input $\Zb_0$. This setup is consistent with the setting in~\citet{huangtransformers2025}.

% take $\btheta_L=
% \big[\mathtt{TF}(\Zb_0; \Vb_{0:L-1}, \Wb_{0:L-1})\big]_{d+1:2d,\, n+1}$, i.e., the output located at the same position as $\btheta_0$ in $\Zb_0$, as the in-context weight prediction corresponding to $\Zb_0$, consistent with the setting in~\citet{huangtransformers2025}.

\section{Main results}
\subsection{Overview of Theoretical Results}
\label{subsec:theory_overview}
In this section, we present the theoretical results on how transformers can perform in-context logistic regression and solve in-context weight prediction tasks in classification. 
Before presenting the technical details, we first provide a high-level roadmap and summary of our these conclusions. 
We begin by introducing the in-context loss for linear classification, the empirical risk defined on the context dataset. 
Theorem~\ref{thm:normalized_gd} then establishes the expressive-power result that under appropriate parameterizations, an $L$-layer softmax transformer can exactly implement $L$ steps of normalized gradient descent on this in-context loss. 
% This result identifies normalized gradient descent as the algorithmic mechanism realized by softmax attention for in-context logistic regression. 
Building on this characterization, Corollary~\ref{thm:implicit_bias} applies the implicit bias theory of normalized gradient descent and shows that, for linearly separable context data, the transformers' output directionally converges to the in-context maximum-margin solution. 
We then move from expressivity to learning guarantees. 
In Theorem~\ref{thm:main_result_training}, we consider training a single-layer softmax transformer using supervision from a one-step gradient-descent teacher, and show that the trained parameters exactly converge to an NGD-implementing structures.
This demonstrates that the NGD mechanism characterized in Theorem~\ref{thm:normalized_gd} can be achieved through training, rather than merely existing as an explicit expressivity construction.
Finally, Theorem~\ref{thm:inference} validates the O.O.D. generalization behavior of the looped transformer obtained by recurrently applying the trained single-layer block from Theorem~\ref{thm:main_result_training}. Under log-concave feature distributions, Theorem~\ref{thm:inference} shows that its prediction error is controlled by the implicit-bias error from Corollary~\ref{thm:implicit_bias} and the finite-sample statistical error of the in-context maximum-margin solution.
% Finally, Theorem~\ref{thm:inference} uses the trained single-layer block from Theorem~\ref{thm:main_result_training} as the shared block of a looped transformer. 
% Combining the implicit-bias error from Corollary~\ref{thm:implicit_bias} with a finite-sample statistical error, we establish an O.O.D. generalization guarantee under log-concave feature distributions. 
Figure~\ref{fig:theory_roadmap} provides a schematic illustration of this roadmap, highlighting the assumptions, main results, and logical flow underlying our theoretical guarantees.

% In this section, we present the theoretical results on how transformers can perform in-context logistic regression and solve in-context weight prediction tasks in classification. Before presenting the technical results, we first provide an overview of the logical relationship among the main theorems in Figure~\ref{fig:theory_roadmap}. 
% We start by defining the transformer architecture and the input format. 
% Theorem~\ref{thm:normalized_gd} then establishes the expressive-power result, showing that the forward pass of an $L$-layer softmax transformer can exactly implement $L$ steps of normalized gradient descent. 
% Based on this characterization, Corollary~\ref{thm:implicit_bias} derives the implicit-bias consequence: under linearly separable context data, the transformer output directionally converges to the in-context maximum-margin solution. 
% We then move from expressivity to trainability. Theorem~\ref{thm:main_result_training} proves that, under gradient-descent supervision, the parameter matrices of a single-layer transformer converge linearly to the structure characterized in Theorem~\ref{thm:normalized_gd}, showing that the transformer can indeed be trained to implement normalized gradient descent. 
% Finally, Theorem~\ref{thm:inference} establishes the O.O.D. generalization guarantee for the trained looped transformer under log-concave feature distributions.

\begin{figure}[t]
    \centering
    \includegraphics[width=0.98\textwidth]{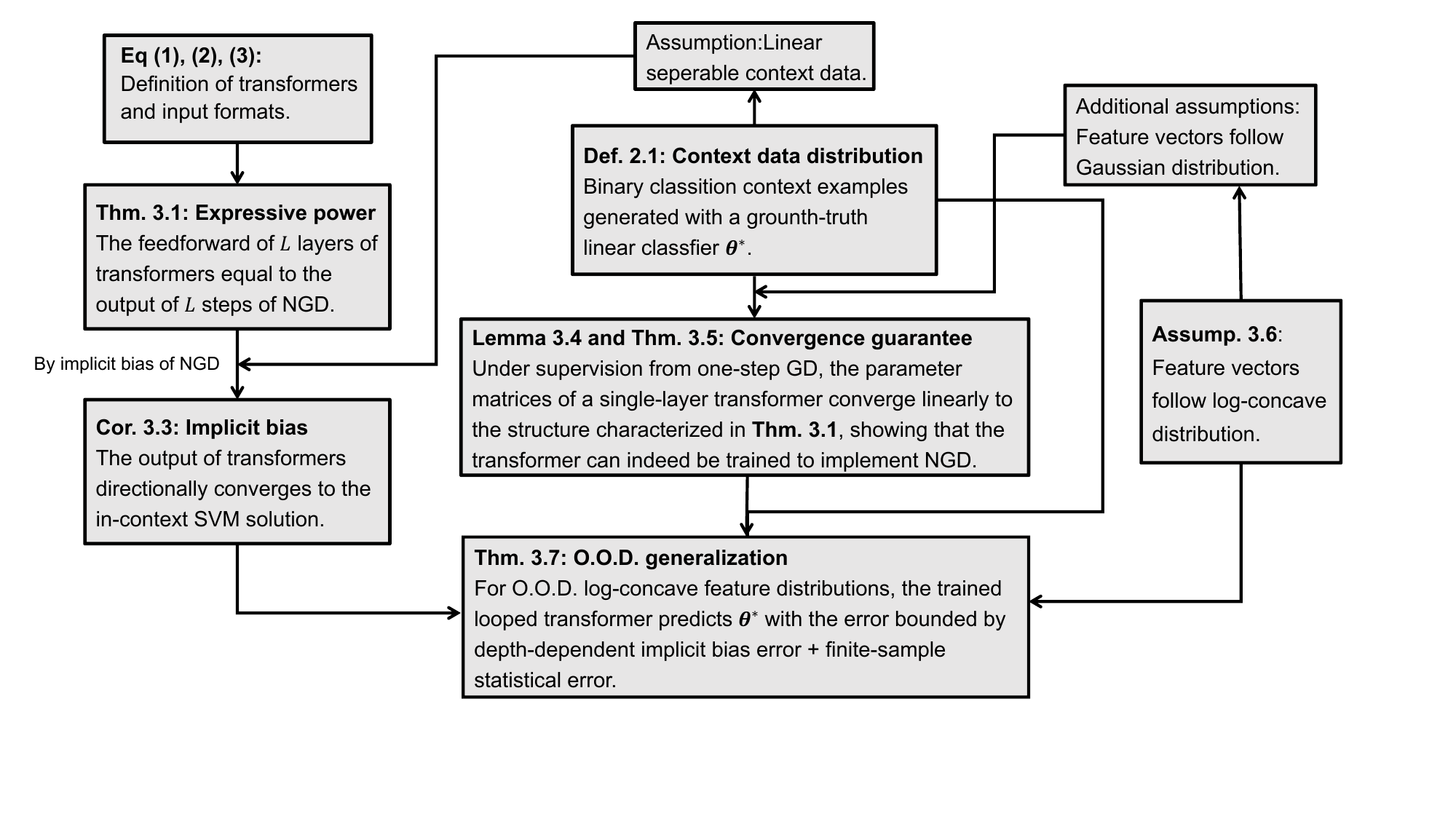}
    \vspace{-40pt}
    \caption{
    High-level roadmap of the theoretical framework, illustrating the assumptions, main results, and logical flow underlying our expressivity, implicit-bias, trainability, and O.O.D. generalization guarantees.
    }
    \label{fig:theory_roadmap}
\end{figure}

\subsection{Deep transformers can perform in-context logistic regression via normalized gradient descent}
% \subsection{Multiple-layer softmax transformer can implement \CC{normalized gradient descent with respect to exponential loss} or \CC{in-context logistic regression}}
% While the evaluation loss in~\eqref{eq:eval_loss} provides a natural metric to quantify the accuracy of any predicted $\btheta\in \RR^d$, it cannot be directly involved to supervise an in-context learner, since the ground-truth parameter $\btheta^*$ is not observable. Instead,
% In classical statistics, linear logistic regression is commonly used to learn a linear classifier by minimizing an empirical risk defined on the training data. Given a classification dataset \CC{of feature–label pairs}, it seeks a parameter vector that minimizes an objective induced by exponential-tailed loss functions\CC{, without requiring access to the underlying ground-truth parameter.}
%Instead, linear logistic regression is a classic statistical model designed to produce a good linear classifier for classification tasks. Given a dataset of feature-label pairs, it seeks a parameter vector that achieves the empirical risk minimization with respect to particular exponential-tailed loss functions. In addition, the empirical risk is solely determined by the dataset and does not involve the unobserved ground truth, without requiring access to the underlying ground-truth parameter. 

% We study the 
% Motivated by linear logistic regression, 
 % we demonstrate how transformers can perform in-context logistic regression by implementing certain optimization algorithm that minimizes $\cL_{\mathrm{ICL}}$. Specifically,

To study how transformers solve in-context logistic regression, we define the empirical risk on the context dataset and refer to it as ``in-context loss". Specifically, for any $\btheta\in\RR^d$, its in-context loss on $D_n =\{(\xb_i, y_i)\}_{i=1}^n$ is given as 
\begin{align}\label{eq:def_icl_loss}
    \cL_{\mathrm{ICL}}(\btheta) = \frac{1}{n}\sum_{i=1}^n \ell(\la \btheta, y_i\cdot\xb_{i}\ra),
\end{align}
where $\ell(\cdot):\RR\times \RR\to \RR$ is a commonly chosen exponential-tailed loss function,
%arising in linear classification models, 
such as the exponential or logistic loss. In this work, we adopt the exponential loss, i.e. $\ell(x)=e^{-x}$, to enable cleaner mathematical results. In the following, we show that there exist a class of $L$-layer transformers that can exactly implement $L$ steps of \emph{normalized gradient descent} (NGD) on $\cL_{\mathrm{ICL}}$, as formalized in Theorem~\ref{thm:normalized_gd}.

\begin{theorem}\label{thm:normalized_gd}
    Consider an $L$-layer transformer $\mathtt{TF}(\cdot)$ in~\eqref{def:tf_multiple} with parameter matrices 
    $(\Vb_{0:L-1}, \Wb_{0:L-1})$ of the form
    % satisfy that for $l= 0, 1, \ldots, L-1$,
    \begin{align*}
        \Vb_l =\begin{bmatrix}
            \mathbf{0}_{d\times d}& \mathbf{0}_{d\times d}\\
            \tilde\alpha_{l}\cdot\Ib_d& \mathbf{0}_{d\times d}
        \end{bmatrix} \ , \
        \Wb_l =\begin{bmatrix}
            \mathbf{0}_{d\times d}&-\Ib_d\\
            \Ab_{1, l}&\Ab_{2, l}
        \end{bmatrix} %\  \ l= 0, 1, \ldots, L-1,
    \end{align*}
    for $l= 0, 1, \ldots, L-1$, 
    where $\Ab_{1, l}, \Ab_{2, l}$ are arbitrary $d\times d$ matrices, and $\tilde \alpha_l >0$. 
    %\CC{For each $l\in [L]$, let $\btheta_l$ represent the vector in the $l$-th layer output $\Zb_l$ located at the same position as $\btheta_0$ in $\Zb_0$, i.e. $\btheta_l = [\Zb_l]_{d+1:2d, n+1}$}. 
    Then for any input matrix $\Zb_0$ of the form \eqref{eq:def_embedding}, the transformer gives hidden layer outputs $\Zb_l$, $l=1,\ldots, L$, such that $\{\btheta_l = [\Zb_l]_{d+1:2d, n+1} \}_{l=1}^L$ are the iterates of normalized gradient descent on $\cL_{\mathrm{ICL}}(\btheta)$ with learning rates $\{\tilde{\alpha}_l\}_{l=0}^{L-1}$: %for $l = 0,1,\ldots,L-1$, 
\begin{align}\label{eq:updates_ngd}
    \btheta_{l+1}
    = \btheta_{l}
      - \tilde\alpha_l\frac{\nabla \cL_{\mathrm{ICL}}(\btheta_l)}{\cL_{\mathrm{ICL}}(\btheta_l)}, \quad l = 0,1,\ldots,L-1. %\qquad 
\end{align}
% where $\tilde\alpha_l$ denotes the learning rate of $l$-th iterative step.
%     let $\Zb_l$ be the $l$-th hidden layer outputs of this transformer and define $\btheta_l = [\Zb_l]_{d+1:2d, n+1}$.
%     Then the sequence $\{\btheta_l\}_{l=1}^L$ corresponds to iterates obtained by implementing $L$-steps normalized gradient descent on $\cL_{\mathrm{ICL}}(\btheta)$: for $l = 0,1,\ldots,L-1$, 
% \begin{align}\label{eq:updates_ngd}
%     \btheta_{l+1}
%     = \btheta_{l}
%       - \tilde\alpha_l\frac{\nabla \cL_{\mathrm{ICL}}(\btheta_l)}{\cL_{\mathrm{ICL}}(\btheta_l)}, %\qquad 
% \end{align}
% where $\tilde\alpha_l$ denotes the learning rate of $l$-th iterative step.
% where $\cL_{ICL}(\btheta)$ is defined under the exponential loss $\ell(y,\hat y)=e^{-y\hat y}$.    
    % Then for the input embedding matrix $\Zb_0$ as defined in~\eqref{eq:def_embedding}, its predicted $\btheta_L= [\mathtt{TF}(\Zb_0; \{\Vb_{l}, \Wb_{l}\})]_{d+1:2d, n+1}$ is computed through a $L$ step normalized gradient descent. Specifically, $\btheta_L$ is obtained through the following procedure:
    % \begin{align*}
    %     \btheta_{l+1} = \btheta_{l} - \eta\frac{\nabla \cL_{ICL}(\btheta_l) }{\cL_{ICL}(\btheta_l)},  \ \text{for} \ l= 0, 1, \ldots, L-1,
    % \end{align*}
    % where $\cL_{ICL}(\btheta_l)$ is defined under the exponential loss $\ell(y, \hat y)=e^{-y \hat y}$.
\end{theorem}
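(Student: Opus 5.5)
We argue by induction on $l$, proving a slightly stronger invariant: for every $l\in\{0,\ldots,L\}$, the hidden output has the form
\begin{align*}
\Zb_l=\begin{bmatrix} \zb_1 & \cdots & \zb_n & \mathbf{0}_d\\ \mathbf{0}_d & \cdots & \mathbf{0}_d & \btheta_l\end{bmatrix}.
\end{align*}
That is, the first $n$ columns and the top-right block never change, and only the bottom-right slot evolves. The base case $l=0$ is exactly the embedding \eqref{eq:def_embedding}. For the inductive step it suffices to compute $\mathtt{SA}(\Zb_l;\Vb_l,\Wb_l)$ explicitly. We will show that it vanishes except in the bottom-right block, and that there it equals $-\tilde\alpha_l\nabla\cL_{\mathrm{ICL}}(\btheta_l)/\cL_{\mathrm{ICL}}(\btheta_l)$.

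First we compute the value term $\Vb_l\Zb_l$. Since the top block row of $\Vb_l$ is zero and its bottom block row is $[\tilde\alpha_l\Ib_d,\ \mathbf{0}]$, the product has zero top half. Its bottom half is $\tilde\alpha_l[\zb_1,\ldots,\zb_n,\mathbf{0}_d]$, because the top half of the last column of $\Zb_l$ is $\mathbf{0}_d$.

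Next we compute the attention scores $\Zb_l^\top\Wb_l\Zb_l$. Write column $j$ of $\Zb_l$ as $(\ab_j,\bb_j)$, with $\ab_j=\zb_j$, $\bb_j=\mathbf{0}$ for $j\le n$ and $\ab_{n+1}=\mathbf{0}$, $\bb_{n+1}=\btheta_l$. Then the $(i,j)$ entry is
\begin{align*}
-\ab_i^\top\bb_j+\bb_i^\top\Ab_{1,l}\ab_j+\bb_i^\top\Ab_{2,l}\bb_j .
\end{align*}
For rows $i\le n$ we have $\bb_i=\mathbf{0}$, so the entry reduces to $-\zb_i^\top\bb_j$. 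This is $0$ when $j\le n$ and $-\la\zb_i,\btheta_l\ra$ when $j=n+1$. The last row $i=n+1$ is masked to $-\infty$, so $\Ab_{1,l}$ and $\Ab_{2,l}$ never affect the outcome. This is why they can be arbitrary.

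Apply the column-wise softmax. For $j\le n$ all unmasked scores are zero, giving uniform weights $1/n$ on rows $1,\ldots,n$. For $j=n+1$ the weight on row $i$ is
\begin{align*}
p_i=\frac{e^{-\la\zb_i,\btheta_l\ra}}{\sum_k e^{-\la\zb_k,\btheta_l\ra}}.
\end{align*}

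Multiplying the two pieces, column $j\le n$ of $\mathtt{SA}$ is $(\mathbf{0},\tilde\alpha_l\bar\zb)$, where $\bar\zb=\frac1n\sum_i\zb_i$. This is the main obstacle. With the row-wise sum convention $\Zb\,\mathrm{softmax}(\cdot)$ used here, these columns are not obviously zero, which would break the invariant by writing a nonzero vector into the bottom half of the first $n$ columns. The cleanest fix is to weaken the invariant. Allow the bottom-left blocks to be arbitrary vectors $\bb_j^{(l)}$, and recheck the scores. For $i\le n$ the entry $-\ab_i^\top\bb_j$ now feeds these perturbations into the columns $j\le n$. The key point is that the $(n+1)$-th column's scores still only involve $\ab_i=\zb_i$ (unchanged, because the top half of $\mathtt{SA}$ is always zero) and $\bb_{n+1}=\btheta_l$. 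Likewise, the value term $\Vb_l\Zb_l$ reads only the top halves. Hence the update of the last column never depends on the contaminated entries, and we track only the $(n+1)$-th column together with the top halves. Both are preserved exactly by induction.

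For the last column, the update is
\begin{align*}
\btheta_{l+1}=\btheta_l+\tilde\alpha_l\sum_i p_i\zb_i .
\end{align*}
With $\ell(x)=e^{-x}$ we have $\nabla\cL_{\mathrm{ICL}}(\btheta)=-\frac1n\sum_i e^{-\la\btheta,\zb_i\ra}\zb_i$ and $\cL_{\mathrm{ICL}}(\btheta)=\frac1n\sum_i e^{-\la\btheta,\zb_i\ra}$. Their ratio is exactly $-\sum_i p_i\zb_i$, so the update is precisely \eqref{eq:updates_ngd}. The identity the proof hinges on is that softmax normalization equals division by the exponential loss value.
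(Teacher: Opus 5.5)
Your value computation and the closing identity (softmax normalization equals division by $\cL_{\mathrm{ICL}}$) are correct. You are also right that the first $n$ columns get written to: after layer $0$ their bottom halves equal $\tilde\alpha_0\frac1n\sum_i\zb_i\neq\mathbf{0}$.

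The gap is in your fix. Once the bottom halves $\bb_i$ of columns $i\le n$ are arbitrary, your own score formula gives, for $i\le n$,
\begin{align*}
[\Zb_l^\top\Wb_l\Zb_l]_{i,n+1}=-\la\zb_i,\btheta_l\ra+\bb_i^\top\Ab_{2,l}\btheta_l .
\end{align*}
So the claim that the last column's scores involve only $\zb_i$ and $\btheta_l$ is false. The term $\bb_i^\top\Ab_{2,l}\btheta_l$ varies with $i$ whenever the $\bb_i$ differ. In that case the last-column weights are no longer $\propto e^{-\la\zb_i,\btheta_l\ra}$, and the update is not NGD for arbitrary $\Ab_{2,l}$. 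Relatedly, masking the last row is not why $\Ab_{1,l},\Ab_{2,l}$ are irrelevant. From layer $1$ on they enter rows $1,\ldots,n$ through $\bb_i^\top\Ab_{1,l}\ab_j$ and $\bb_i^\top\Ab_{2,l}\bb_j$.

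What is missing is that the contamination is identical across the first $n$ columns, not arbitrary. This is the invariant the paper uses:
\begin{align*}
\Zb_l=\begin{bmatrix}\Zb_{\mathrm{context}}&\mathbf{0}_d\\ \tilde\btheta_l\mathbf{1}_n^\top&\btheta_l\end{bmatrix},
\end{align*}
where $\tilde\btheta_l$ is in fact the NGD iterate started from $\mathbf{0}_d$. Suppose $\bb_i=\tilde\btheta_l$ for all $i\le n$.
\begin{itemize}
\item In column $j\le n$ the scores are $-\la\zb_i,\tilde\btheta_l\ra+\tilde\btheta_l^\top(\Ab_{1,l}\zb_j+\Ab_{2,l}\tilde\btheta_l)$, and the second part is constant in $i$. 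By shift invariance of the column-wise softmax, every column $j\le n$ gets the same weights $\propto e^{-\la\zb_i,\tilde\btheta_l\ra}$. Hence every such column gets the same update, which preserves the invariant.
\item In column $n+1$ the extra term $\tilde\btheta_l^\top\Ab_{2,l}\btheta_l$ is constant in $i$ and drops out.
\end{itemize}
With this stronger invariant your argument goes through and coincides with the paper's proof.
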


% Theorem~\ref{thm:normalized_gd} establishes the equivalence between multi-layer softmax transformers and a gradient-based optimization procedure. %In particular, 

% Theorem~\ref{thm:normalized_gd} demonstrates that under appropriate parameterizations, the outputs of softmax attention layers exactly match the iterates of normalized gradient descent applied to the in-context loss $\cL_{\mathrm{ICL}}$.  Consequently, the forward pass of $L$-layer transformers can be interpreted as performing in-context logistic regression through $L$ steps of normalized gradient descent. In particular, we note that the update rule in~\eqref{eq:updates_ngd} slightly differs from the standard definition of normalized gradient descent, as it normalizes the gradient by $\cL_{\mathrm{ICL}}(\btheta)$ instead of $\|\nabla\cL_{\mathrm{ICL}}(\btheta)\|_2$. However, this form of normalization term is commonly adopted in theoretical studies of logistic regression~\citep{nacson2019convergence, ji2021characterizing, wang2024achieving}, and is also referred to as normalized gradient descent. Our use of this terminology follows this convention.

Theorem~\ref{thm:normalized_gd} demonstrates that, under appropriate parameterizations, the outputs of softmax attention layers exactly match the iterates of normalized gradient descent applied to the in-context loss $\cL_{\mathrm{ICL}}$. Consequently, the forward pass of an $L$-layer transformer can be interpreted as performing in-context logistic regression through $L$ steps of normalized gradient descent. Specifically, Figure~\ref{fig:ngd_illustration} illustrates how a single softmax self-attention layer constructs the attention weights, recovers the normalized-gradient direction, and implements one step of normalized gradient descent.

\begin{figure}[t]
    \centering
    \includegraphics[width=\textwidth]{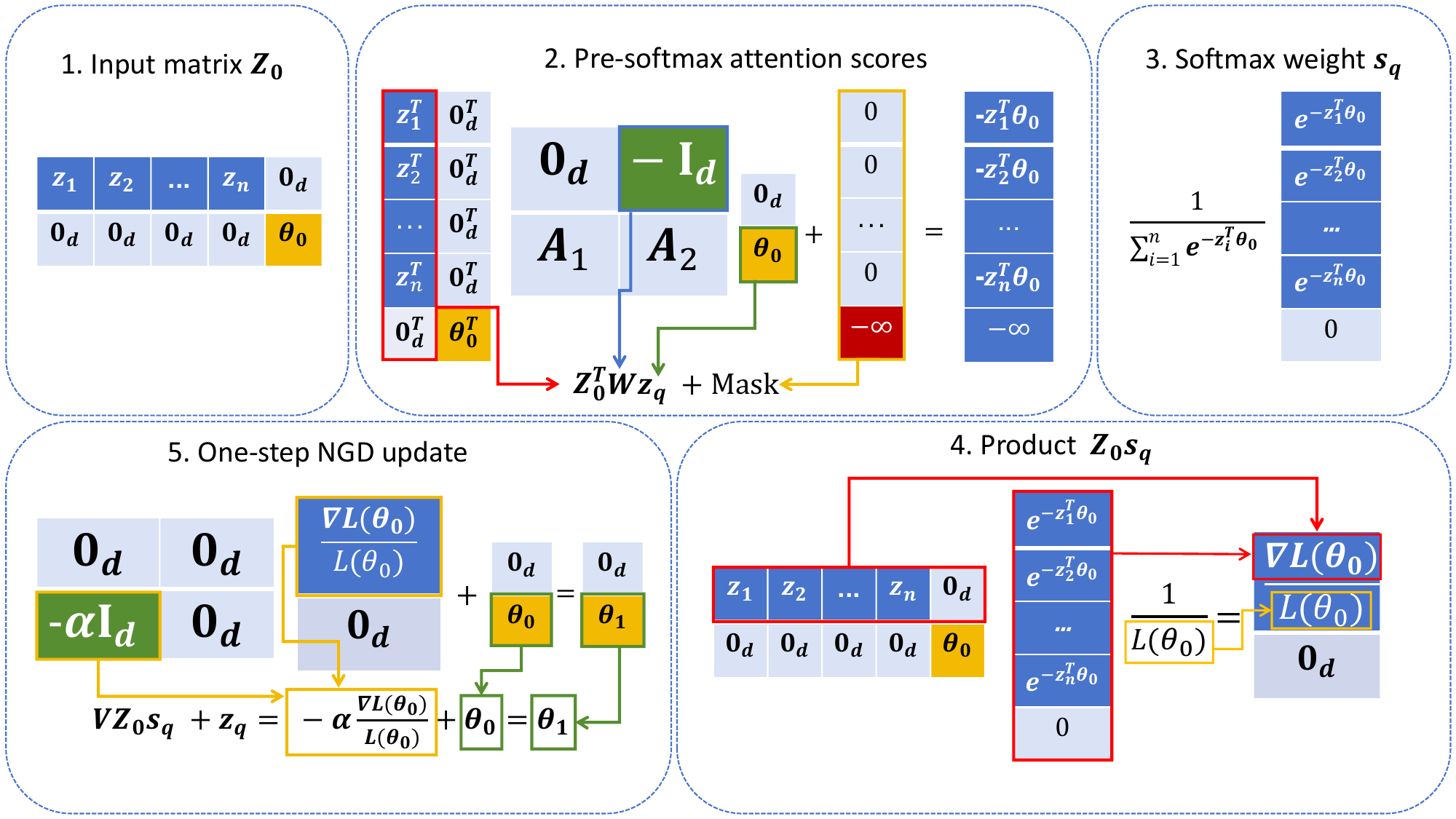}
    \caption{
    Illustration of the one-step mechanism in Theorem~\ref{thm:normalized_gd}.
    Starting from the input matrix $\Zb_0$, a single softmax self-attention layer first constructs the pre-softmax attention scores, then obtains the softmax weight vector $\sbb_q$, and finally computes the attention output that matches one step of normalized gradient descent on the in-context loss $\cL_{\mathrm{ICL}}$.
    }
    \label{fig:ngd_illustration}
\end{figure}

As an expressive-power result, Theorem~\ref{thm:normalized_gd} does not impose any assumptions on the context examples. In addition, the attention-only construction should be understood as a minimal construction that isolates the role of softmax attention: adding common architectural components, such as MLP layers, gated attention, or positional encodings, does not affect the validity of the expressive-power conclusion, since these components can be parameterized so that the resulting model preserves the same input-output mapping as the attention-only transformers defined in~\eqref{def:tf_multiple}.  Moreover, we note that the update rule in~\eqref{eq:updates_ngd} slightly differs from the standard definition of normalized gradient descent, as it normalizes the gradient by $\cL_{\mathrm{ICL}}(\btheta)$ instead of $\|\nabla\cL_{\mathrm{ICL}}(\btheta)\|_2$. However, this form of normalization term is commonly adopted in theoretical studies of logistic regression~\citep{nacson2019convergence, ji2021characterizing, wang2024achieving}, and is also referred to as normalized gradient descent. Our use of this terminology follows this convention. We further note that if an RMSNorm-style layer normalization is incorporated into the construction, the induced update can be transformed into the standard normalized-gradient-descent form, with normalization by $\|\nabla\cL_{\mathrm{ICL}}(\btheta)\|_2$.

Several recent works \citep{ahn2023transformers,bai2024transformers} show that transformers with linear/ReLU attention can perform in-context linear regression with gradient descent. In comparison, our result in Theorem~\ref{thm:normalized_gd} shows that transformers with softmax attention can perform in-context logistic regression with normalized gradient descent. Notably, \citet{bai2024transformers} also covers results on in-context logistic regression, and shows that multi-head ReLU attention layers can approximate gradient descent updates. However, their results rely on universal approximation by multi-head ReLU attention, and 
require $\tilde{\mathcal{O}}(\epsilon^{-2})$ heads per layer to achieve an approximation error $\epsilon$. In comparison, our result considers softmax attention, only requires a single head per layer, and the correspondence to normalized gradient descent is exact and does not suffer from any approximation error.

\begin{remark}\label{remark:theorem_rescaling_ngd}
Theorem~\ref{thm:normalized_gd} also enjoys an important advantage
%exhibits a strength 
that it accommodates arbitrary parameterizations of the blocks $\Ab_{1, l}$ and $\Ab_{2, l}$ within $\Wb_{l}$. In fact, this parameterization of $\Wb_l$ can be further generalized to $\Wb_l =
\begin{bmatrix}
\mathbf{0}_{d\times d} & -\tilde\beta\cdot\Ib_d \\
\Ab_{1,l} & \Ab_{2,l}
\end{bmatrix}$, with $\tilde \beta$ being any positive scalar. Under this more general form, it can be shown that $L$-layer transformers still perform in-context logistic regression via normalized gradient descent, but on a rescaled context dataset $\tilde D_n = \{\tilde\beta\cdot \xb_i, y_i\}_{i=1}^n$, and with rescaled learning rates $\{\tilde\alpha_l/\tilde\beta\}_{l=0}^{L-1}$. This flexibility in allowing a broad class of parameterizations for $\Wb_l$ plays a key role in our subsequent analysis in Subsection~\ref{subsec:training}. The proof of Theorem~\ref{thm:normalized_gd} is also demonstrated for this generalized version in Appendix~\ref{appendix:proof_ngd}
% through implementing $L$ steps of normalized gradient descent, with $\tilde\alpha_l/\tilde\beta$ acting as the learning rate at $l$-th step. This flexibility in allowing a broad class of parameterizations for $\Wb_l$ plays a key role in our subsequent analysis in Section \CC{XXX}.
\end{remark}

Notably, regardless of the distributions $\cD_{\btheta^*},\cD_{\xb}$, any context dataset $D_n$ following Definition~\ref{def:linear_classification_model} can always be linearly separated by its corresponding $\btheta^*$. For such linear separable datasets, a remarkable line of works \citep{soudry2018implicit,ji2019implicit,nacson2019convergence, wang2024achieving} have shown that (normalized) gradient descent on logistic/exponential loss has an implicit bias towards the maximum-margin solution $\btheta_{\mathrm{SVM}}(D_n)
    = \argmax_{\|\btheta\|_2\le 1}\min_{i\in[n]}\langle\btheta, y_i\cdot\xb_i\rangle$.
Specifically, Theorem 4.3 in~\citet{ji2021characterizing} shows that the $L$-th iterate of normalized gradient descent with constant learning rate $\tilde\alpha$ converges to the maximum margin solution in direction with a convergence rate $\cO(\log(n)/(\tilde\alpha L))$. Combining this result and Theorem~\ref{thm:normalized_gd}, we have the following corollary.

% Here, we use the notation $\btheta_{\mathrm{SVM}}$ as this classifier is precisely the standard hard-margin SVM solution. 

% $\btheta_{\mathrm{SVM}}(D_n)
%     = \argmax_{\|\btheta\|_2\le 1}\min_{i\in[n]}\langle\btheta, y_i\cdot\xb_i\rangle$ as the maximum-margin solution. We use this subscript as this classifier is precisely the standard hard-margin SVM solution (without bias term) on linearly separable data.
% For logistic regression on linearly separable data, \citet{ji2021characterizing} proves that the iterates of normalized gradient descent converge in direction to this max-margin solution.
% A series of works~\citep{soudry2018implicit, ji2019implicit, ji2021characterizing}
% %, zhang2024implicit}
% have demonstrated that, for linearly separable data and exponentially-tailed losses (e.g., logistic or exponential loss), the iterates of gradient-based methods converge in direction to this max-margin solution.
% this maximum-margin solution serves as the asymptotic convergence direction for a wide range of gradient-based optimizers for linear classification with exponential-tailed loss (logistic regression).
% Building on this result, we can directly characterize the directional convergence of the predicted weight $\btheta_L$ when $\mathtt{TF}(\cdot)$ is parameterized as in Theorem~\ref{thm:normalized_gd}.

% Suppose that the context dataset $D_n = \{(\xb_i, y_i)\}_{i=1}^n$ satisfies Assumption~\ref{assumption:linear_separability},
% with
\begin{corollary}\label{thm:implicit_bias}
Suppose that an $L$-layer transformer is parameterized as in Theorem~\ref{thm:normalized_gd} with $\tilde \alpha_l = \tilde \alpha \leq \cO(1)$ for all $l=0,\ldots,L-1$. Then for any context dataset $D_n = \{(\xb_i, y_i)\}_{i=1}^n$ following Definition~\ref{def:linear_classification_model} and any $\btheta_0$ with $\| \btheta_0 \|_2 = \cO(1)$, the predicted weight $\btheta_L$ by this transformer directionally converges to $\btheta_{\mathrm{SVM}}(D_n)$ as
\begin{align*}
    \bigg\|
    \frac{\btheta_L}{\|\btheta_L\|_2}
    - \btheta_{\mathrm{SVM}}(D_n)
\bigg\|
\le
\mathcal{O}\!\left(\frac{\log n}{\tilde \alpha L}\right).
\end{align*}
% let  $\btheta_{\mathrm{SVM}}(D_n)$ be its maximum-margin solution. Then for an $L$-layer transformer $\mathtt{TF}(\cdot)$ parameterized as in Theorem~\ref{thm:normalized_gd} with $\tilde \alpha_l = \tilde \alpha \leq \cO(1)$ for all $l\in [L]$,
% its predicted weight $\btheta_L$ directionally converges to $\btheta_{\mathrm{SVM}}(D_n)$ as
% \begin{align*}
%     \bigg\|
%     \frac{\btheta_L}{\|\btheta_L\|_2}
%     - \btheta_{\mathrm{SVM}}(D_n)
% \bigg\|
% \le
% \mathcal{O}\!\left(\frac{\log n}{\tilde \alpha L}\right).
% \end{align*}
\end{corollary}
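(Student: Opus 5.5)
The plan is to reduce Corollary~\ref{thm:implicit_bias} to two ingredients: the exact equivalence of Theorem~\ref{thm:normalized_gd}, and the known implicit-bias rate for normalized gradient descent (Theorem 4.3 in~\citet{ji2021characterizing}).

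\emph{Step 1 (reduction to NGD).} With all $\tilde\alpha_l=\tilde\alpha$, Theorem~\ref{thm:normalized_gd} applies to any input $\Zb_0$ of the form~\eqref{eq:def_embedding}. It gives that $\btheta_L=[\Zb_L]_{d+1:2d,n+1}$ is exactly the $L$-th iterate of the recursion $\btheta_{l+1}=\btheta_l-\tilde\alpha\nabla\cL_{\mathrm{ICL}}(\btheta_l)/\cL_{\mathrm{ICL}}(\btheta_l)$, started at the given $\btheta_0$. For the exponential loss this is the same as gradient descent on $\log\cL_{\mathrm{ICL}}$, which is the normalized-GD scheme analyzed by~\citet{ji2021characterizing}. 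This holds for any choice of $\Ab_{1,l},\Ab_{2,l}$, so from here on the question is purely about an optimization algorithm; no transformer structure remains.

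\emph{Step 2 (separability and normalization).} Next I check the hypotheses of the cited theorem. By Definition~\ref{def:linear_classification_model}, every $\zb_i=y_i\xb_i$ satisfies $\la\btheta^*,\zb_i\ra=|\la\btheta^*,\xb_i\ra|\ge 0$. This is strictly positive almost surely, as long as $\cD_{\xb}$ puts no mass on the hyperplane orthogonal to $\btheta^*$; I would state this as the implicit nondegeneracy condition. So $D_n$ is linearly separable, the margin $\gamma=\max_{\|\btheta\|_2\le1}\min_i\la\btheta,\zb_i\ra$ is positive, and $\btheta_{\mathrm{SVM}}(D_n)$ is well defined and unique by strict convexity. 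The result of~\citet{ji2021characterizing} assumes $\|\zb_i\|_2\le1$ and a step size bounded by an absolute constant; the condition $\tilde\alpha\le\cO(1)$ is meant to supply the latter. If the data are not normalized, I would use the rescaling from Remark~\ref{remark:theorem_rescaling_ngd}: dividing the data by $B=\max_i\|\xb_i\|_2$ turns the iteration into NGD on the rescaled data with a rescaled step size. Directions and the SVM direction are unchanged, and the constants $B$ and $\gamma$ are absorbed into the $\cO(\cdot)$.

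\emph{Step 3 (invoking the rate, and the main obstacle).} I would then apply Theorem 4.3 of~\citet{ji2021characterizing} directly. It gives $\|\btheta_L/\|\btheta_L\|_2-\btheta_{\mathrm{SVM}}\|_2\le\cO\big(\log n/(\tilde\alpha L)\big)$ for NGD started at the origin. The main obstacle is that the cited result uses zero initialization, whereas here $\btheta_0$ is arbitrary with $\|\btheta_0\|_2=\cO(1)$. I would handle this by redoing the key potential argument:
\begin{itemize}
\item The log-sum-exp smoothed margin $\psi(\btheta)=-\log(n\cL_{\mathrm{ICL}}(\btheta))$ increases by at least $\tilde\alpha\gamma$ per step, up to second-order terms controlled by smoothness when $\tilde\alpha\le\cO(1)$.
\item The norm $\|\btheta_L\|_2$ grows at most like $\tilde\alpha L+\|\btheta_0\|_2$.
\end{itemize}
Since $\psi(\btheta_0)\ge-\log n-\|\btheta_0\|_2$, the initialization only adds an additive $\cO(1)$ term to the $\log n$ loss. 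Combining these gives the normalized-margin gap $\gamma-\psi(\btheta_L)/\|\btheta_L\|_2\le\cO\big((\log n+1)/(\tilde\alpha L)\big)$. The final step converts this margin gap into the stated bound on the distance between directions. For that I would follow the dual-based argument of~\citet{ji2021characterizing}, which relies on the strong convexity of the dual objective and yields a rate linear in the margin gap. Reproducing this step carefully is where the real work lies.
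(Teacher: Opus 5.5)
Your Steps 1--2 are the paper's entire argument: the corollary is stated as a direct combination of Theorem~\ref{thm:normalized_gd} with Theorem~4.3 of \citet{ji2021characterizing}. The paper gives no separate treatment of $\btheta_0\neq\mathbf 0$, of the data scale $B=\max_i\|\zb_i\|_2$, or of $\gamma$.

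One caution on Step~2. With $\tilde\zb_i=\zb_i/B$ and $\btheta'=B\btheta$, the iteration becomes NGD on $\{\tilde\zb_i\}$ with step $B^2\tilde\alpha$. The cited step-size condition therefore reads $\tilde\alpha\lesssim B^{-2}$, which $\tilde\alpha\le\cO(1)$ does not give unless $B=\cO(1)$.

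The genuine gap is in Step~3. For $\|\zb_i\|_2\le1$, $\psi$ is concave and $1$-smooth with $\|\nabla\psi\|_2\ge\gamma$, so one step gives $\psi(\btheta_{l+1})-\psi(\btheta_l)\ge(1-\tilde\alpha/2)\tilde\alpha\|\nabla\psi(\btheta_l)\|_2^2\ge(1-\tilde\alpha/2)\gamma\|\btheta_{l+1}-\btheta_l\|_2$. The gain is $\gamma$ times the step length, not $\tilde\alpha\gamma$. The smoothness loss is a fixed fraction $\tilde\alpha/2$ of that gain at every step, so it accumulates rather than staying bounded. Summing, the argument only gives $\psi(\btheta_L)-\psi(\btheta_0)\ge(1-\tilde\alpha/2)\gamma\|\btheta_L-\btheta_0\|_2$. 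That is $\cO(1/L)$ convergence to the suboptimal margin $(1-\tilde\alpha/2)\gamma$, with a deficit of order $\tilde\alpha\gamma$ that does not decay in $L$.

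The $1/L$ margin rate needs the dual analysis, and that is also where the initialization is handled cleanly. With $\mathbf q_l=\mathrm{softmax}(-\Zb_{\mathrm{context}}^\top\btheta_l)$ one has $\btheta_{l+1}=\btheta_l+\tilde\alpha\Zb_{\mathrm{context}}\mathbf q_l$ and $\mathbf q_{l+1}\propto\mathbf q_l\odot\exp(-\tilde\alpha\Zb_{\mathrm{context}}^\top\Zb_{\mathrm{context}}\mathbf q_l)$. This is entropic mirror descent on $\frac12\|\Zb_{\mathrm{context}}\mathbf q\|_2^2$. A nonzero $\btheta_0$ only moves the starting point $\mathbf q_0$, which replaces $\log n$ in the mirror-descent bound by $\mathrm{KL}(\mathbf q^\star\,\|\,\mathbf q_0)\le\log n+2B\|\btheta_0\|_2$, where $\mathbf q^\star$ is a dual optimum. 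So the initialization is not the real obstacle.

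The final conversion also does not deliver the stated rate. For any $\mathbf q$ in the simplex, $\|\Zb_{\mathrm{context}}\mathbf q-\gamma\btheta_{\mathrm{SVM}}\|_2^2\le\|\Zb_{\mathrm{context}}\mathbf q\|_2^2-\gamma^2$. Applied to $(\btheta_L-\btheta_0)/(\tilde\alpha L)=\Zb_{\mathrm{context}}\bar{\mathbf q}_L$, the dual gap therefore bounds the \emph{squared} distance. The resulting directional error is only $\cO\big(\gamma^{-1}\sqrt{(\log n+B\|\btheta_0\|_2)/(\tilde\alpha L)}\big)$. A primal margin gap $\varepsilon$ is no better: it controls the direction only to $\cO(\sqrt{\varepsilon/\gamma})$, as the single point $\zb_1=\mathbf e_1$ shows, where the gap at angle $\vartheta$ is $1-\cos\vartheta$.

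Worse, the linear rate cannot hold with only $\gamma$, $B$, $n$ absorbed into $\cO(\cdot)$, as your plan intends. Take $n=2$, $\zb_1=(\gamma,2\epsilon)$, $\zb_2=(\gamma,-\epsilon)$, $\btheta_0=\mathbf 0$. Then $\btheta_{\mathrm{SVM}}=\mathbf e_1$ with margin $\gamma$, since $\gamma\mathbf e_1=\frac13\zb_1+\frac23\zb_2$. The first coordinate of $\btheta_l$ is exactly $\tilde\alpha\gamma l$. The second coordinate satisfies $s_{l+1}=s_l+\tilde\alpha\epsilon\big(3(1+e^{3\epsilon s_l})^{-1}-1\big)$. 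It increases by at least $0.2\,\tilde\alpha\epsilon$ per step until $s_l\ge\log 2/(6\epsilon)$, and (for large $L$) never exceeds $\log2/(3\epsilon)$. Choosing $\epsilon=(\tilde\alpha L)^{-1/2}$ gives $s_L\gtrsim\sqrt{\tilde\alpha L}$ and a directional error $\gtrsim 1/(\gamma\sqrt{\tilde\alpha L})$, which eventually exceeds any $C(\gamma,B)\log 2/(\tilde\alpha L)$.

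So a $1/L$ directional rate needs constants depending on finer geometry of $D_n$ (here $1/\epsilon$). With the tools you name, the honest conclusion is the square-root rate, or a $1/L$ rate for the squared distance.
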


%By directly applying the conclusion of implicit bias of normalized gradient descent for logistic regression in~\citet{ji2021characterizing}, 
%the number of layers $L$. 

%\CC{the linear classification model defined in}  
% Theorem~\ref{thm:implicit_bias} indicates that when context dataset satisfies the linear separability Assumption~\ref{assumption:linear_separability}, the predicted weight $\btheta_L$ produced by $L$-layer transformers converges in direction to its maximum margin solution $\btheta_{\mathrm{SVM}}(D_n)$ at a rate inversely proportional to $L$. In fact, for any context dataset $D_n$ generated according to Definition~\ref{def:linear_classification_model}, Assumption~\ref{assumption:linear_separability} always holds as the ground-truth vector $\btheta^*$ itself guarantees that $\la\btheta^*, y_i\cdot\xb_i\ra >0$ for all $i\in [n]$. 

% by $\cL_{\mathrm{Eval}}(\btheta_L)$ in~\eqref{eq:eval_loss}

Corollary~\ref{thm:implicit_bias} indicates that the predicted weight $\btheta_L$ produced by $L$-layer transformers converges in direction to its maximum margin solution $\btheta_{\mathrm{SVM}}(D_n)$ at a rate inversely proportional to $L$. 
With this result, evaluating the quality of $\btheta_L$ as a weight predictor for $\btheta^*$ reduces to characterizing the discrepancy between $\btheta_{\mathrm{SVM}}(D_n)$ and $\btheta^*$. We elaborate this in Subsection~\ref{subsec:inference}.

\subsection{Training of single softmax-attention layer}\label{subsec:training}

In the previous section, we have shown that under appropriate parameterizations, transformers can perform in-context logistic regression via normalized gradient descent. However, this result only demonstrates the expressive power of transformers. To give a more comprehensive analysis, in this section, we investigate whether such transformers can indeed be obtained via training.

% It remains unclear whether such transformers can indeed be obtained via training. In this section, we aim to study 
% the softmax attention layer of transformers can be trained to converge towards such desired structures for conducting in-context logistic regression.  
% can be achieved through a looped transformer structure, i.e. all softmax attention layers share the same weight matrices as  $\Vb=\Vb_l$ and $\Wb =\Wb_l$ for all $l\in [L]$. Based on this observation, we consider the following training setting: we train a single-layer transformer $\mathtt{TF}(\cdot;\Vb, \Wb)$ with $\Vb$ and $\Wb$ being its trainable parameter matrices. 
% this desired parameterization enabling transformers to perform in-context logistic regression can be 
% $\mathtt{TF}(\cdot;\Vb, \Wb)$ with $\Vb$ and $\Wb$ being its trainable parameter matrices,
An interesting observation is that, the parameterizations in Theorem~\ref{thm:normalized_gd} 
naturally admit a looped implementation, where all layers share the same weights, i.e., $\Vb=\Vb_l$ and $\Wb =\Wb_l$ for all $l\in [L]$. In addition, \citet{geiping2025scaling} empirically demonstrates that recurrently applying the trained block enables transformers to achieve better performance at the inference stage. Motivated by these observations, we consider an effective 
%simplified 
training setup: we first train a single-layer transformer $\mathtt{TF}(\cdot;\Vb, \Wb)$, and then obtain a multi-layer looped transformer by recurrently applying this trained layer. Notably, we adopt the one-step gradient descent, rather than normalized gradient descent, as a ``teacher model'' to supervise the single-layer transformer. This setup is inspired by similar settings considered in~\citet{huangtransformers2025}, and allows us to test whether the model can still learn normalized gradient descent even if the teacher is a different algorithm. 
% whether the desired structure can emerge 
% without directly having normalized gradient descent as the teacher.
%makes the training objective explicitly inconsistent with the normalized gradient dynamics,
The one-step GD update on the context data can be expressed as 
\begin{align*}
    \btheta_{\mathrm{GD}} = \btheta_0 - \alpha\nabla\cL_{\mathrm{ICL}}(\btheta_0), %= \btheta_0 - \frac{\alpha}{n}\sum_{i=1}^n\ell'(\la\btheta_0, y_i\cdot \xb_i\ra)y_i\cdot \xb_i,
\end{align*}
where $\btheta_0$ represents the initialization, and $\alpha$ denotes the learning rate for one-step GD update. We consider minimizing the discrepancy between $\btheta_{\mathrm{GD}}$ and the output of the single-layer transformer, i.e. $\btheta_1 =[\mathtt{TF}(\Zb_0, \Vb, \Wb)]_{d+1;2d, n+1}$. The training objective is defined as the population mean-squared error:
\begin{align*}
    \cL_{\mathrm{train}}(\Vb, \Wb) = \EE_{D_n, \btheta_0}\big[\|\btheta_1-\btheta_{\mathrm{GD}}\|_2^2\big].
\end{align*}
The expectation is taken over the context dataset $D_n$ and the initialization $\btheta_0$, where $\btheta_0$ is assumed to follow $\cU(\SSS^{d-1})$. Moreover, we assume that $D_n$ is generated following  Definition~\ref{def:linear_classification_model}, with the feature distribution $\cD_{\xb}$ being $\cN(\mathbf{0}, \sigma^2\Ib_d)$, and the true classifier distribution $\cD_{\btheta^*}$ being $\cU(\SSS^{d-1})$.
%be uniformly distributed on the unit sphere $\SSS^{d-1}$. Moreover, we assume that $D_n$ is generated following  Definition~\ref{def:linear_classification_model}, with the feature distribution $\cD_{\xb}$ being the Gaussian distribution $\cN(0, \sigma^2\Ib_d)$, and true classifier distribution $\cD_{\btheta^*}$ being the uniform distribution on unit sphere.
%During the training stage, we suppose that each $\xb_i$ is i.i.d. sampled from $\cN(0, \sigma^2\Ib_d)$, and the initialization $\btheta_0$ and ground truth vector $\btheta^*$ follows a uniform distribution on the $d$-dimensional unit sphere $\SSS^{d-1}$. 
We consider using gradient descent with zero initialization $\Vb^{(0)}, \Wb^{(0)} =\mathbf{0}_{2d\times 2d}$ to minimize the training loss $\cL_{\mathrm{train}}$: 
% , i.e. the parameter matrices $\Vb$ and $\Wb$ of single-layer transformer are evolved as: 
% , and the initializations are set as $\Vb^{(0)}, \Wb^{(0)} =\mathbf{0}_{2d\times 2d}$.
\begin{align}
    &\Vb^{(t+1)} = \Vb^{(t)} - \eta \nabla_{\Vb}\cL_{\mathrm{train}}(\Vb^{(t)}, \Wb^{(t)});\label{eq:gd_updating_v}\\
    &\Wb^{(t+1)} = \Wb^{(t)} - \eta \nabla_{\Wb}\cL_{\mathrm{train}}(\Vb^{(t)}, \Wb^{(t)})\label{eq:gd_updating_w},
\end{align}
where $\eta$ denotes the learning rate. Our goal is then to theoretically study this training procedure defined above and verify whether the trained transformer learns to perform one-step normalized gradient descent.

% is the following Lemma~\ref{lemma:parameter_pattern}, which shows

Our first observation is that the iterates $\Vb^{(t)},\Wb^{(t)}$ of gradient descent always preserve certain structured forms, which is summarized in the following lemma.
\begin{lemma}\label{lemma:parameter_pattern}
    The iterates $\Vb^{(t)}$ and $\Wb^{(t)}$ of the training procedure~\eqref{eq:gd_updating_v},~\eqref{eq:gd_updating_w} always  follow a structured form as
    \begin{align*}%\label{eq:parameter_pattern}
        \Vb^{(t)} \!=\!
        \begin{bmatrix}
        \mathbf{0}_{d\times d} \!&\! \mathbf{0}_{d\times d} \\
        C_1^{(t)}\!\cdot\!\Ib_d \!&\! \mathbf{0}_{d\times d}
        \end{bmatrix},\Wb^{(t)} \!=\!
        \begin{bmatrix}
        \mathbf{0}_{d\times d} \!&\! -C_2^{(t)}\!\cdot\!\Ib_d \\
        \mathbf{0}_{d\times d} \!&\! \mathbf{0}_{d\times d}
        \end{bmatrix},
    \end{align*}
    where $C_1^{(t)}$ and $C_2^{(t)}$ are two scalar coefficients.
\end{lemma}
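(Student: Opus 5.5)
The plan is induction on $t$. The base case is trivial, since $\Vb^{(0)}=\Wb^{(0)}=\mathbf{0}$ has the stated form with $C_1^{(0)}=C_2^{(0)}=0$. For the inductive step, I assume $\Vb^{(t)},\Wb^{(t)}$ have the stated form. It then suffices to show that $\nabla_{\Vb}\cL_{\mathrm{train}}$ is supported on the lower-left block and is a multiple of $\Ib_d$ there, and that $\nabla_{\Wb}\cL_{\mathrm{train}}$ is supported on the upper-right block and is a multiple of $\Ib_d$ there.

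\textbf{Step 1: compute the output.} Under the inductive form, write $\Vb$ with lower-left block $\Vb_{21}$, and write the relevant blocks of $\Wb$ generically. The query column of $\Zb_0$ is $[\mathbf{0};\btheta_0]$ and the key columns are $[\zb_i;\mathbf{0}]$. The attention score for column $i$ is therefore $\zb_i^\top \Wb_{12}\btheta_0$. Only the $(1,2)$ block of $\Wb$ enters the query column's output, because the mask removes the last column. The output is
\[
\btheta_1=\btheta_0+\Vb_{21}\sum_i s_i\zb_i, \qquad s_i=\mathrm{softmax}_i\big(\zb_i^\top\Wb_{12}\btheta_0\big).
\]
Consequently the gradients with respect to all other blocks of $\Vb$ and $\Wb$ vanish identically. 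The blocks $\Vb_{11},\Vb_{12},\Vb_{22}$ act on zero entries or do not reach rows $d+1{:}2d$, and the blocks $\Wb_{11},\Wb_{21},\Wb_{22}$ multiply zero entries of the query or key columns. Those blocks therefore stay zero.

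\textbf{Step 2: block gradients.} Let $\rb=\btheta_1-\btheta_{\mathrm{GD}}$ with $\btheta_{\mathrm{GD}}=\btheta_0+\frac{\alpha}{n}\sum_i e^{-\la\btheta_0,\zb_i\ra}\zb_i$, and set $\bar\zb=\sum_i s_i\zb_i$. Then
\[
\nabla_{\Vb_{21}}=2\EE[\rb\bar\zb^\top], \qquad \nabla_{\Wb_{12}}=2\EE\Big[\sum_i s_i\,\big(\zb_i-\bar\zb\big)^\top\Vb_{21}^\top\rb\;\zb_i\btheta_0^\top\Big].
\]
These are evaluated at $\Vb_{21}=C_1\Ib_d$ and $\Wb_{12}=-C_2\Ib_d$, where $s_i\propto e^{-C_2\la\zb_i,\btheta_0\ra}$.

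\textbf{Step 3: symmetry, the main step.} I need to show that both expectations are scalar multiples of $\Ib_d$. The idea is rotational equivariance. For any orthogonal $\Qb$, the map $(\xb_i,\btheta^*,\btheta_0)\mapsto(\Qb\xb_i,\Qb\btheta^*,\Qb\btheta_0)$ preserves the joint law, because the Gaussian and uniform-sphere distributions are rotation invariant. It leaves the labels $y_i$ unchanged and sends $\zb_i\mapsto\Qb\zb_i$. At the structured parameters, $s_i$ depends only on the inner products $\la\zb_i,\btheta_0\ra$, so it is invariant. Both $\rb$ and $\bar\zb$ transform as $\rb\mapsto\Qb\rb$ and $\bar\zb\mapsto\Qb\bar\zb$. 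Hence each gradient block $\Gb$ satisfies $\Gb=\Qb\Gb\Qb^\top$ for all orthogonal $\Qb$, and by Schur's lemma over $O(d)$ it must equal $c\,\Ib_d$. Taking $\Qb$ to be reflections rules out any antisymmetric part, so this also holds for $d=2$. Together with Step 1, this yields $C_1^{(t+1)}=C_1^{(t)}-\eta c_V$ and $C_2^{(t+1)}=C_2^{(t)}+\eta c_W$, which closes the induction.

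The main care needed is in Step 3. One must check that the integrability of the exponential terms is fine, which holds by Gaussian moment generating functions. One must also verify precisely that the masked softmax and the read-out position introduce no non-equivariant terms, which is what the explicit formula in Step 1 provides.
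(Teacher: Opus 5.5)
Your proof is correct, and it reaches the identity structure by a different route from the paper.

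\textbf{What matches.} For the vanishing blocks you argue exactly as the paper does in Lemma~\ref{lemma:gradient_formula}: only $\Vb_{21}$ and $\Wb_{12}$ enter $\btheta_1$, so the other blocks have zero gradient and stay at zero.

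\textbf{What differs.} For $\Vb_{21}$ and $\Wb_{12}$, the paper works in explicit coordinates. It rotates by the data-dependent orthogonal matrix $\Ab=[\btheta_*,\,(\Ib_d-\btheta_*\btheta_*^\top)\btheta_0/\|(\Ib_d-\btheta_*\btheta_*^\top)\btheta_0\|_2,\,\bxi_3,\dots,\bxi_d]$ and expands each gradient into pieces such as $I_{1,1},\dots,I_{1,4}$. It then removes the cross terms using independence of the rotated coordinates $\tilde\zb_i$ and symmetry of the $\bxi_j$. Finally it uses $\EE[\bxi_j\bxi_j^\top]=\Ib_d/d$ to obtain a multiple of $\Ib_d$, evaluating the coefficient at the same time (Lemmas~\ref{lemma:update_C_1} and~\ref{lemma:update_C_2}).

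Your invariance argument instead shows $\mathbf{G}=\mathbf{Q}\mathbf{G}\mathbf{Q}^\top$ for every orthogonal $\mathbf{Q}$, and then uses that the commutant of $O(d)$ is the scalars; sign flips and permutation matrices already suffice, for every $d$.

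\textbf{What each buys.} Your route is shorter and sidesteps the conditional-independence bookkeeping of the paper's coordinate expansion. It also applies verbatim to any jointly rotation-invariant law of $(\xb_i,\btheta^*,\btheta_0)$, not only Gaussian features.

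What it does not deliver is the value of the scalars. The paper's computation simultaneously produces the approximate recurrences~\eqref{eq:updating_rules} for $C_1^{(t)}$ and $C_2^{(t)}$. The later invariant-set, Newton--Kantorovich and PL arguments depend on those recurrences.

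The two approaches combine naturally. Once you know $\mathbf{G}=c\,\Ib_d$, you have $c=\tfrac1d\mathrm{tr}(\mathbf{G})$, e.g.\ $c_V=\tfrac{2}{d}\EE[\la\mathbf{r},\bar\zb\ra]$. So only scalar expectations remain to be estimated.
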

Lemma~\ref{lemma:parameter_pattern} plays a key role in our training analysis: it reduces the original optimization problem concerning the evolutions of full $d\times d$ parameter matrices $\Vb, \Wb$ to a much simpler one involving only two scalars $C_1, C_2$. The coefficient vector $\Cb^{(t)} = [C_1^{(t)}, C_2^{(t)}]^\top$ equivalently follows gradient descent starting from zero initialization $\Cb^{(0)} = \mathbf{0}$ to minimize a proxy training loss: 
\begin{align*}%\label{eq:def_proxy_loss}
&\Cb^{(t+1)} = \Cb^{(t)} - \eta \nabla_{\Cb}\tilde\cL_{\mathrm{train}}(\Cb^{(t)}),\\
    &\tilde\cL_{\mathrm{train}}(\Cb) = \cL_{\mathrm{train}}\bigg(\!\begin{bmatrix}
        \mathbf{0}_{d\times d} \!&\! \mathbf{0}_{d\times d} \\
        C_1\!\cdot\!\Ib_d \!&\! \mathbf{0}_{d\times d}
        \end{bmatrix}\!,\!
        \begin{bmatrix}
        \mathbf{0}_{d\times d} \!\!&\!\! -C_2\!\cdot\!\Ib_d \\
        \mathbf{0}_{d\times d} \!\!&\!\! \mathbf{0}_{d\times d}
        \end{bmatrix}\!\bigg).
\end{align*}
%Correspondingly, 
% The iterative rules of $\Cb^{(t)}$
% %= [C_1^{(t)}, C_2^{(t)}]^\top$ 
% can be expressed as 
% \begin{align}\label{eq:gd_updating_proxy}
%     &\Cb^{(t+1)} = \Cb^{(t)} - \eta \nabla_{\Cb}\tilde\cL_{\mathrm{train}}(\Cb^{(t)}).
% \end{align}
% Then t
The following theorem characterizes the convergence of this equivalent training procedure.

\begin{theorem}\label{thm:main_result_training}
Suppose that $n = \Omega(d^2)$, $\eta \le \cO\big(\tfrac{1}{n}\big)$, and $\alpha, \sigma \le \cO(1)$. Then the following results hold.
% Under these conditions, the following results regarding the gradient descent dynamics in~\eqref{eq:gd_updating_proxy} hold.
\begin{enumerate}[leftmargin=*]
    \item \textbf{Invariant compact set $\cR$.} For all $t\geq 0$, the iterate $\Cb^{(t)}$ always remains in a compact set $\cR$ defined as %$\cR= [0, 2\alpha e^{\sigma^2/2}] \times [0, 2]$.
    \begin{align*}
        \cR= [0, 2\alpha e^{\sigma^2/2}] \times [0, 2].
    \end{align*} 
    
    \item \textbf{Unique local minimizer in $\cR$.}  
    The loss $\tilde\cL_{\mathrm{train}}(\Cb)$ has a unique local minimizer $\Cb^* = [C_1^*, C_2^*]^\top$ in $\cR$. In addition, this local minimizer satisfies that 
    \begin{align*}
        \big\|\Cb^* - [\alpha e^{\sigma^2/2},\, 1]^\top \big\|_2
        \le
        \cO\big(\tfrac{1}{d}\big).
    \end{align*}
    % There exist 
    % When $\Vb$ and $\Wb$ take the structured form in~\eqref{eq:parameter_pattern} with coefficients $(C_1, C_2)$, there exists a unique vector $\Cb^* = [C_1^*, C_2^*]^\top \in \cR$ such that
    % \begin{align*}
    %     \Vb^* =
    %     \begin{bmatrix}
    %     \mathbf{0}_{d\times d} & \mathbf{0}_{d\times d} \\
    %     C_1^*\cdot \Ib_d & \mathbf{0}_{d\times d}
    %     \end{bmatrix},
    %     \qquad
    %     \Wb^* =
    %     \begin{bmatrix}
    %     \mathbf{0}_{d\times d} & -C_2^*\cdot \Ib_d \\
    %     \mathbf{0}_{d\times d} & \mathbf{0}_{d\times d}
    %     \end{bmatrix}
    % \end{align*}
    % is a local minimizer of the training loss $\cL_{\mathrm{train}}(\Vb, \Wb)$.  
    % Furthermore, the optimal coefficients satisfy that 
    % \[
    %     \big\|\Cb^* - [\alpha e^{\sigma^2/2},\, 1]^\top \big\|_2
    %     \;\le\;
    %     \cO\big(\tfrac{1}{d}\big).
    % \]  
    \item \textbf{Linear convergence of the loss and iterates.}  
For $t\geq 0$, the training loss enjoys a linear convergence rate: %decays as 
\begin{align*}
   & \tilde\cL_{\mathrm{train}}(\Cb^{(t)}) - \cL_{\mathrm{train}}(\Cb^{*})\\
    &\quad \le
    \bigg(1 - \frac{\eta \mu_{1,\alpha,\sigma}}{d}\bigg)^t
    \big(\tilde\cL_{\mathrm{train}}(\Cb^{(0)}) - \tilde\cL_{\mathrm{train}}(\Cb^{*})\big).
\end{align*}
Moreover, the iterates $\Cb^{(t)}$ converges linearly to $\Cb^*$:
\begin{align*}
    \big\|\Cb^{(t)} - \Cb^*\big\|_2
    \;\le\;
    \mu_{2,\alpha,\sigma}
    \bigg(1 - \frac{\eta \mu_{1,\alpha,\sigma}}{d}\bigg)^{t/2}
    \|\Cb^*\|_2.
\end{align*}
Here, $\mu_{1, \alpha, \sigma}$ and $\mu_{2,\alpha,\sigma}$ are positive constants solely depending on $\alpha$ and $\sigma$.
\end{enumerate}
\end{theorem}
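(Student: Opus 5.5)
The first step is to make $\tilde\cL_{\mathrm{train}}$ explicit. With the structured matrices of Lemma~\ref{lemma:parameter_pattern}, the query column is $[\mathbf{0}_d;\btheta_0]$, so $\Wb[\mathbf{0}_d;\btheta_0]=[-C_2\btheta_0;\mathbf{0}_d]$. The masked softmax therefore gives weights $s_i(C_2)=e^{-C_2\langle\btheta_0,\zb_i\rangle}/\sum_{j}e^{-C_2\langle\btheta_0,\zb_j\rangle}$, and $\Vb$ copies the top block into the bottom block. Hence $\btheta_1=\btheta_0+C_1\sum_i s_i(C_2)\zb_i$, while $\btheta_{\mathrm{GD}}=\btheta_0+\tfrac{\alpha}{n}\sum_i e^{-\langle\btheta_0,\zb_i\rangle}\zb_i$. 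This yields
\begin{align*}
\tilde\cL_{\mathrm{train}}(\Cb)=\EE\Big\|C_1\sum_{i=1}^n s_i(C_2)\zb_i-\frac{\alpha}{n}\sum_{i=1}^n e^{-\langle\btheta_0,\zb_i\rangle}\zb_i\Big\|_2^2 .
\end{align*}
At $C_2=1$ the softmax weights equal $e^{-\langle\btheta_0,\zb_i\rangle}/(n\cL_{\mathrm{ICL}}(\btheta_0))$, so the residual vanishes exactly when $C_1=\alpha\cL_{\mathrm{ICL}}(\btheta_0)$. Since $\btheta_0,\btheta^*$ are independent and uniform, their inner product is $\cO(1/\sqrt d)$. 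The marginal of $\langle\btheta_0,\zb_i\rangle=\mathrm{sign}(\langle\xb_i,\btheta^*\rangle)\langle\xb_i,\btheta_0\rangle$ is therefore $\cN(0,\sigma^2)$ up to $\cO(1/d)$ corrections, and $\cL_{\mathrm{ICL}}(\btheta_0)\approx e^{\sigma^2/2}$. This explains the target point $[\alpha e^{\sigma^2/2},1]^\top$.

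Next I would set up an approximate training problem. I replace the random normalizer $\frac1n\sum_j e^{-C_2\langle\btheta_0,\zb_j\rangle}$ by its mean $m(C_2)\approx e^{C_2^2\sigma^2/2}$. The error is controlled by concentration, with fluctuations $\cO(n^{-1/2})$; the condition $n=\Omega(d^2)$ makes its effect on the gradient and Hessian $o(1/d)$. The surrogate loss is then a Gaussian-type integral computable in closed form: a quadratic in $C_1$ with coefficients that are explicit exponentials in $C_2$, such as $e^{2C_2^2\sigma^2}$ and $e^{(C_2+1)^2\sigma^2/2}$, plus $\cO(1/d)$ terms coming from the cross terms $i\neq j$ and from the $\btheta^*$-dependence of the labels. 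For this surrogate the stationary point is $[\alpha e^{\sigma^2/2},1]^\top$ up to $\cO(1/d)$. I then apply the Newton--Kantorovich theorem to $\nabla\tilde\cL_{\mathrm{train}}$ at this point. This requires three bounds: a gradient residual of order $\cO(1/d^2)$ relative to curvature, a Hessian with eigenvalues bounded below by order $1/d$, and a Lipschitz bound on the Hessian over $\cR$. The theorem then gives a zero $\Cb^*$ with $\|\Cb^*-[\alpha e^{\sigma^2/2},1]^\top\|_2=\cO(1/d)$, unique in a neighbourhood.

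To upgrade this to uniqueness on all of $\cR$, I would analyse the sign of $\partial_{C_1}\tilde\cL_{\mathrm{train}}$ and $\partial_{C_2}\tilde\cL_{\mathrm{train}}$ on $\cR$. Minimizing over $C_1$ gives the profile $C_1(C_2)$ explicitly. The reduced one-dimensional function of $C_2$ should be strictly decreasing on $[0,1-\cO(1/d)]$ and increasing on $[1+\cO(1/d),2]$, which excludes other critical points.

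Invariance of $\cR$ would follow from the same sign information together with $\eta\le\cO(1/n)$. The gradient is $\cO(n)$-Lipschitz because the loss involves $\|\sum_i\zb_i\|^2\sim n\sigma^2 d$ scales, so small steps never overshoot. Near the faces $C_1=0$ and $C_2=0$ the gradient points inward, and near the faces $C_1=2\alpha e^{\sigma^2/2}$ and $C_2=2$ it points back toward $\Cb^*$. Starting from $\Cb^{(0)}=\mathbf0$, the first step moves $C_1$ positive because the correlation with the GD target is positive, and then $C_2$ follows.

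For linear convergence I would prove a PL inequality on $\cR$:
\begin{align*}
\|\nabla\tilde\cL_{\mathrm{train}}(\Cb)\|_2^2\ge \frac{\mu_{1,\alpha,\sigma}}{d}\big(\tilde\cL_{\mathrm{train}}(\Cb)-\tilde\cL_{\mathrm{train}}(\Cb^*)\big).
\end{align*}
To obtain it, I would write the loss as a $C_1$-quadratic plus the reduced $C_2$ function, and lower bound $|\partial_{C_2}|$ by the reduced suboptimality. Combined with $\cO(n)$-smoothness and $\eta\le\cO(1/n)$, the descent lemma gives the stated rate. Quadratic growth, which is implied by PL, then converts the loss gap into the iterate bound with a constant $\mu_{2,\alpha,\sigma}$.

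The main obstacle is the $C_2$ direction. The softmax makes the loss nonconvex in $C_2$, and its curvature there is only of order $1/d$ in the relevant normalization, since the $\btheta^*$-induced signal competes with the $\cO(1/d)$ approximation errors. Getting a global PL constant of order $1/d$ on the whole rectangle, rather than only locally, is the delicate step. I would try first to carry out the exact Gaussian computation of the surrogate's $C_2$ derivative and show that it factors as $(C_2-1)$ times a positive function bounded below.
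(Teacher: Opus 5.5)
Your plan for parts 2 and 3 follows the paper closely: approximate closed-form gradients, Newton--Kantorovich at $[\alpha e^{\sigma^2/2},1]^\top$, the split of $\tilde\cL_{\mathrm{train}}$ into a $C_1$-quadratic in $\Delta=C_1-G_*(C_2)$ plus a reduced function $\psi(C_2)$, then PL, the descent lemma and quadratic growth. Your factorization $\psi'(C_2)\propto (C_2-1)\times(\text{positive})$ is correct at leading order. Your monotonicity argument is a sound way to get uniqueness on all of $\cR$, which Newton--Kantorovich by itself only gives locally. For the PL step you still have to absorb the $\Delta$ and $\Delta^2$ cross terms in $\partial_{C_2}\tilde\cL_{\mathrm{train}}$ into $|\partial_{C_1}\tilde\cL_{\mathrm{train}}|^2\propto\Delta^2$.

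The genuine gap is part 1. You derive invariance from the negative gradient pointing into $\cR$ on every face, but this is false on the face $C_2=2$. In the notation of \eqref{eq:G_functions},
\[
C_2^{(t+1)}-C_2^{(t)}=\frac{\eta C_1^{(t)}\sigma^4}{d}\Big[\alpha e^{\sigma^2/2}G_3(C_2^{(t)})-C_1^{(t)}C_2^{(t)}G_4(C_2^{(t)})\Big],
\]
with $G_3,G_4=1+\cO(1/d)$ on $[0,2]$. At $C_2=2$ the increment therefore has the sign of $\alpha e^{\sigma^2/2}-2C_1$, which is positive whenever $0<C_1<\alpha e^{\sigma^2/2}/2$, so gradient descent leaves $\cR$ from those points. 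Monotone decrease of the loss does not help either. Since $\tilde\cL_{\mathrm{train}}(0,C_2)$ does not depend on $C_2$, the sublevel set through $\Cb^{(0)}=\mathbf 0$ contains the whole line $C_1=0$. Your smoothness and PL bounds hold only on $\cR$, so everything downstream rests on this step.

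What is missing is a trajectory argument for the specific run from $\mathbf 0$.
\begin{itemize}
\item While $C_1$ is below a threshold $\tau\approx\frac34\alpha e^{\sigma^2/2}$, one has $\Delta C_1\gtrsim\frac{\eta\sigma^2}{d}\alpha e^{\sigma^2/2}G_1$ and $\Delta C_2\lesssim\frac{\eta\sigma^4}{d}\alpha e^{\sigma^2/2}C_1G_1$.
\item Hence $\Delta C_2/\Delta C_1\lesssim 8\sigma^2C_1$, and $C_2$ is only $\cO(\alpha^2\sigma^2)\le 1$ when $C_1$ first reaches $\tau$. This is exactly where the smallness of $\alpha$ and $\sigma$ is used; the paper takes $\alpha\le 1$ and $\sigma^2\le 1/(2\pi)$.
\item After that, you show $C_1$ never falls more than $\cO(\eta)$ below $\tau$.
\item With $C_1\gtrsim\tau$, the bracket above is negative whenever $C_2\ge 19/10$, so $C_2$ cannot cross $2$.
\end{itemize}

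Two minor points. The $1/d$ in the rate does not signal weak $C_2$-curvature competing with $\cO(1/d)$ errors. It comes from the $\sigma^2/d$ prefactor induced by the $\Ib_d$-structured parameterization, and the $C_1$- and $C_2$-curvatures are of the same order. Also, $\sum_i s_i\zb_i$ and the teacher are averages, so the loss is $\cO(1)$-smooth rather than $\cO(n)$-smooth. This overestimate is harmless for your step-size condition.
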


%Theorem~\ref{thm:main_result_training} provides a comprehensive characterization of both the loss landscape and the convergence behavior of the training procedure. 
%\CC{As Lemma~\ref{lemma:parameter_pattern} has successfully demonstrated that the parameter matrices $\Vb^{(t)}$ and $\Wb^{(t)}$ always preserve the desired structure in Theorem~\ref{thm:normalized_gd} with two varying coefficients $C_1^{(t)}$, $C_2^{(t)}$,} 

Theorem~\ref{thm:main_result_training} %further 
establishes rigorous training convergence guarantees. The first and second conclusions describe the loss landscape of $\tilde\cL_{\mathrm{train}}$ and proves the existence of a unique local minimizer $\Cb^*$. The third conclusion gives accurate convergence guarantees with linear rates. Importantly, by the second and third conclusions, as  $t\to\infty$, one has $C_1^{(t)}\approx \alpha e^{\sigma^2/2}$ and $C_2^{(t)}\approx 1$, which implies that the trained transformer layer approximately matches the form of our constructed layers in Theorem~\ref{thm:normalized_gd}. This demonstrates that:
\begin{nscenter}
\vspace{0.05in}
    \emph{\parbox{0.9\columnwidth}{The trained transformer can indeed perform {normalized gradient descent} update, even though the model is supervised by a gradient descent teacher.}}
\end{nscenter}
% \emph{The trained transformer layer can indeed perform one step \text{normalized gradient descent} for in-context weight prediction, despite that the model is supervised by a gradient descent teacher.}
% the trained transformer layer can indeed perform one step \text{normalized gradient descent} for in-context weight prediction, despite that the model is supervised by a gradient descent teacher. 
This reveals a nontrivial separation between the supervising algorithm and the learned in-context algorithm, suggesting that transformers are not merely algorithm imitators, but may also discover algorithmic mechanisms distinct from the supervising algorithm. Moreover, the fact that $C_2^{(t)}$ is not exactly one does not affect the conclusion that the trained transformer layer can exactly perform one-step normalized gradient descent. As discussed in Remark~\ref{remark:theorem_rescaling_ngd}, the coefficients $C_1$ and $C_2$ admit a clear algorithmic interpretation:  $C_2$ acts as the rescaling factor for feature vectors, and the ratio $C_1/C_2$ determines the learning rate. Therefore, the trained single-layer transformer in Theorem~\ref{thm:main_result_training} essentially performs {one step of normalized gradient descent} on the slightly rescaled dataset $\{(C_2^{(t)}\cdot\xb_i, y_i)\}_{i=1}^n$, with the learning rate $C_1^{(t)}/C_2^{(t)} \approx \alpha e^{\sigma^2/2}$. 
From a technical perspective, Theorem~\ref{thm:main_result_training} introduces new theoretical tools. While a line of recent works have studied the training of softmax transformers \citep{jelassi2022vision, wangtransformers24, li2024one, zhang2025transformer, shi2025towards}, we note that existing analyses are mostly under the setting where the learning tasks can be perfectly solved by having softmax attention perform certain ``sparse selection''. As a result, existing convergence guarantees mostly focus on showing that certain pre-softmax attention scores diverge to infinity, and that they diverge at a faster rate compared to the rest of the scores. In comparison, the learning task we consider is fundamentally different in multiple aspects. First, since the ``teacher model'' is gradient descent, the learning task is ``misspecified'' and zero training loss cannot be perfectly achieved. In addition, as is shown in Theorem~\ref{thm:main_result_training}, training converges to a finite minimizer $\Cb^*$ instead of giving diverging parameters in $\Wb$. More importantly, the model with parameters defined by $\Cb^*$ does not perform ``sparse selection'', as the softmax score from the last token to the $i$-th token is $ \frac{e^{-C_2^*\la\zb_i, \btheta_0\ra}}{\sum_{i'=1}^n e^{-C_2^*\la\zb_{i'}, \btheta_0\ra}}$, which defines a dense, weighted average over all the tokens. Finally, for our learning task, the training loss and its gradient do not admit closed-form expressions, further complicating the optimization analysis. 
To overcome these challenges, we develop several novel proof techniques, which are summarized in the brief proof sketch as follows.

\noindent\textbf{Step 1.} We derive explicit non-asymptotic approximations of the gradients (Lemma~\ref{lemma:update_C_1_C_2}), and show that $[\alpha e^{\sigma^2/2}, 1]^\top$ is a fixed point of the approximated training process.

\noindent\textbf{Step 2.} We then apply the Newton–Kantorovich theorem %(Theorem~\ref{thm:NK_thm}) 
to show the existence of a fixed point $\Cb^{*}$ of the original training process that is close to $[\alpha e^{\sigma^2/2}, 1]^\top$ (Lemma~\ref{lemma:unique_local_min_in_R}).

\noindent\textbf{Step 3.} We further prove a Polyak-Lojasiewicz (PL) inequality (Lemma~\ref{lemma:PL_condition_loss}) despite the non-convexity of the training loss, which leads to the linear convergence rate.

\subsection{Multi-layer looped transformers efficiently solve in-context weight prediction}\label{subsec:inference}

Theorems~\ref{thm:normalized_gd} and \ref{thm:main_result_training} together show that we can recurrently apply the trained transformer layer characterized in Theorem~\ref{thm:main_result_training} to obtain a multi-layer looped transformer that solves in-context logistic regression via normalized gradient descent. In this section, we establish the final theoretical guarantee on the performance of such looped transformers in solving in-context weight prediction. In contrast to the assumption that $\cD_{\xb}$ during training follows $\cN(\mathbf{0}, \sigma^2\Ib_d)$, 
%is an isotropic Gaussian distribution 
here we study out-of-distribution (O.O.D.) generalization performance on new ``test'' in-context datasets $D_n$
for which 
 $\cD_{\xb}$ is a general log-concave distribution.
 % as in the following Assumption~\ref{assumption:log_concave}.

%we consider a more general log-concave distribution $\cD_{\xb}$, which is defined in the following Assumption~\ref{assumption:log_concave}.

\begin{assumption}[Log-concave distribution]\label{assumption:log_concave}
    For the feature distribution $\cD_{\xb}$ in Definition~\ref{def:linear_classification_model}, let $f(\cdot)$ be its probability density function. Then it holds that
    \begin{enumerate}[leftmargin=*]
    \item\textbf{Log-concavity:} $\log[f(\xb) ]$ is a concave function.
    % There exists a convex function $g(\cdot):\RR^{d}\to \RR$, such that 
        % $f(\xb) = e^{-g(\xb)}$.
        \item \textbf{Moment conditions:} For any $\xb\sim\cD_{\xb}$, it holds that $\EE[\xb] = \mathbf{0}$, and $\EE[\xb\xb^\top] =\bSigma \succ 0$.
    \end{enumerate}
\end{assumption}

% \begin{assumption}[Anisotropic log-concave distribution]\label{assumption:log_concave}
%     For distribution $\cD_{\xb}$ in Definition~\ref{def:linear_classification_model}, let $f(\cdot)$ be its probability density function, and $\xb\in \RR^d$ denote any vector generated from $\cD_{\xb}$, then it holds that
%     \begin{enumerate}[leftmargin=*, nosep]
%         \item There exists a convex function $g(\cdot):\RR^{d}\to \RR$, such that 
%         $f(\xb) = e^{-g(\xb)}$.
%         \item $\xb$ satisfies that $\EE[\xb] = \mathbf{0}$, and $\EE[\xb\xb^\top] =\Sigma \succ 0$.
%     \end{enumerate}
% \end{assumption}
% In addition, Assumption~\ref{assumption:log_concave} only requires mild moment conditions.

% \CC{The zero-mean assumption is required as the ground truth model in  Definition~\ref{def:linear_classification_model} has no bias term, and the positive definite covariance matrix guarantees the distribution is non-degenerate.}

Assumption~\ref{assumption:log_concave} covers a broad class of distributions, such as centered Gaussian, uniform, and Laplace distributions. Based on it, we have the following theorem.
% Theorem~\ref{thm:inference} present 

% Now we are ready to present 

\begin{theorem}\label{thm:inference}
Let $\Vb^{(t)}$ and $\Wb^{(t)}$ be the parameter matrices trained in Theorem~\ref{thm:main_result_training} after $t=\tilde\Omega\big(\frac{d}{\eta\mu_{1, \alpha, \sigma}}\big)$ iterations. 
%, with $t=\tilde\Omega\big(\frac{d}{\eta\mu_{1, \alpha, \sigma}}\big)$. 
Denote $\mathtt{TF}(\cdot, [\Vb^{(t)}]^{\otimes L}, [\Wb^{(t)}]^{\otimes L})$ the $L$-layer looped transformer, with $\Vb^{(t)}$ and $\Wb^{(t)}$ being its shared weights across layers. Suppose that the feature distribution $\cD_{\xb}$ in Definition~\ref{def:linear_classification_model} follows Assumption~\ref{assumption:log_concave}. Then for any input matrix $\Zb_0$ of the form~\eqref{eq:def_embedding}, with probability at least $1-\delta$,
%over the randomness of $D_n$, 
the looped transformers' prediction $\btheta_L=[\mathtt{TF}(\Zb_0, [\Vb^{(t)}]^{\otimes L}, [\Wb^{(t)}]^{\otimes L})]_{d+1:2d,\, n+1}$ satisfies %that
    \begin{align}\label{eq:upbd_infer}
        %\cL_{\mathrm{Eval}}(\btheta_L) 
        \bigg\|\frac{\btheta_L}{\|\btheta_L\|_2} - \btheta^*\bigg\|_2\leq \cO\bigg(\frac{\log n}{\alpha L} + \frac{d\log \rho}{n}\bigg).
    \end{align}
    where $\rho = \max\{n, d, \lambda^{-1}_{\min}(\bSigma), \delta^{-1}\}$,
\end{theorem}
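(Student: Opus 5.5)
The plan is to split the error by the triangle inequality,
\begin{align*}
\bigg\|\frac{\btheta_L}{\|\btheta_L\|_2} - \btheta^*\bigg\|_2 \le \bigg\|\frac{\btheta_L}{\|\btheta_L\|_2} - \btheta_{\mathrm{SVM}}(\tilde D_n)\bigg\|_2 + \big\|\btheta_{\mathrm{SVM}}(\tilde D_n) - \btheta^*\big\|_2,
\end{align*}
where $\tilde D_n=\{(C_2^{(t)}\xb_i,y_i)\}_{i=1}^n$ is the rescaled dataset. The first term is an optimization (implicit-bias) error and the second a statistical error. Rescaling all features by a positive constant does not change the maximum-margin direction, so $\btheta_{\mathrm{SVM}}(\tilde D_n)=\btheta_{\mathrm{SVM}}(D_n)$.

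\textbf{Optimization term.} By Lemma~\ref{lemma:parameter_pattern}, $\Vb^{(t)},\Wb^{(t)}$ have exactly the generalized form of Remark~\ref{remark:theorem_rescaling_ngd}, with $\tilde\beta=C_2^{(t)}$, $\Ab_{1,l}=\Ab_{2,l}=\mathbf{0}$ and $\tilde\alpha_l=C_1^{(t)}$. Hence the looped transformer runs $L$ steps of normalized gradient descent on $\tilde D_n$ with constant step size $C_1^{(t)}/C_2^{(t)}$. By Theorem~\ref{thm:main_result_training}, once $t=\tilde\Omega(d/(\eta\mu_{1,\alpha,\sigma}))$, the contraction factor is small enough that $\|\Cb^{(t)}-\Cb^*\|_2=\cO(1/d)$. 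Combined with $\|\Cb^*-[\alpha e^{\sigma^2/2},1]^\top\|_2\le\cO(1/d)$, this gives $C_2^{(t)}\in[1/2,2]$ and $C_1^{(t)}/C_2^{(t)}=\Theta(\alpha)$, since $\sigma=\cO(1)$. Corollary~\ref{thm:implicit_bias}, applied to $\tilde D_n$ with this $\Theta(\alpha)=\cO(1)$ step size, bounds the first term by $\cO(\log n/(\alpha L))$. Two technical points need care. First, the constant in Ji--Telgarsky's rate depends on the margin and norms of the data; I would absorb this, noting that the rate in Corollary~\ref{thm:implicit_bias} is stated uniformly. Second, $\btheta_0$ is arbitrary, so I would assume $\|\btheta_0\|_2=\cO(1)$ as in the corollary.

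\textbf{Statistical term.} This is the main obstacle. We need, with probability $1-\delta$, that the max-margin direction on $n$ samples from a log-concave $\cD_{\xb}$ with labels $\sign\langle\xb,\btheta^*\rangle$ satisfies $\|\btheta_{\mathrm{SVM}}-\btheta^*\|_2\le\tilde\cO(d/n)$. I would argue in three steps.
\begin{enumerate}[leftmargin=*]
\item The SVM solution is consistent with all samples, so its disagreement region with $\btheta^*$ contains no sample points. By a realizable VC/ERM bound (Vapnik), any consistent halfspace through the origin has $\Pr_{\xb}[\sign\langle\xb,\btheta_{\mathrm{SVM}}\rangle\ne\sign\langle\xb,\btheta^*\rangle]\le \cO((d\log n+\log(1/\delta))/n)$.
\item For log-concave distributions, this disagreement probability lower-bounds the angle between the two vectors. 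After whitening by $\bSigma^{-1/2}$, the isotropic log-concave result (Balcan--Long / Lov\'asz--Vempala) gives disagreement $\ge c\,\angle(\bSigma^{1/2}\btheta_{\mathrm{SVM}},\bSigma^{1/2}\btheta^*)$.
\item Transferring the whitened angle back to the original angle costs a condition-number factor. This is where $\lambda_{\min}(\bSigma)$ enters $\rho$, and I would absorb it into the $\log\rho$ factor. If the polynomial condition-number factor cannot be absorbed this way, I would instead treat $\bSigma$ as normalized, e.g.\ $\mathrm{tr}$ or $\lambda_{\max}$ of order one.
\end{enumerate}
Combining the three steps gives $\|\btheta_{\mathrm{SVM}}-\btheta^*\|_2\le\cO(d\log\rho/n)$. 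Adding the optimization term yields \eqref{eq:upbd_infer}.
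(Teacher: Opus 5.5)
The gap is in your step~3. After whitening, Lemma~\ref{lemma:angle_generalizatation_bound} bounds $\angle(\bSigma^{1/2}\btheta_{\mathrm{SVM}},\bSigma^{1/2}\btheta^*)$. Mapping this back to the original sphere costs a \emph{multiplicative} factor of order $\sqrt{\kappa}$, where $\kappa=\lambda_{\max}(\bSigma)/\lambda_{\min}(\bSigma)$: the map $\ub\mapsto\bSigma^{-1/2}\ub/\|\bSigma^{-1/2}\ub\|_2$ is $2\sqrt{\kappa}$-Lipschitz on $\SSS^{d-1}$. A polynomial factor cannot be absorbed into $\log\rho$.

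Your fallback of normalizing $\mathrm{tr}(\bSigma)$ or $\lambda_{\max}(\bSigma)$ does not help either. Both $\btheta_{\mathrm{SVM}}$ and every disagreement probability are invariant under $\xb\mapsto c\,\xb$, so only $\kappa$ matters.

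The loss is real, not an artifact of the argument. Take $d=2$, $\btheta^*=\eb_1$ and $\xb_i=(\sqrt{n}\,u_{i1},u_{i2})$ with $u_{ij}$ i.i.d.\ standard normal, so $\bSigma=\mathrm{diag}(n,1)$ and $\lambda_{\min}(\bSigma)=1$. With $\btheta=(\cos\phi,\sin\phi)$, the margin is $\min_i(\sqrt{n}\cos\phi\,|u_{i1}|+\sin\phi\,b_i)$, where $b_i=y_iu_{i2}$ are i.i.d.\ standard normal and independent of the $|u_{i1}|$. This minimum is decided by the $\cO(1)$ points with $|u_{i1}|\lesssim 1/n$, and its maximizer satisfies $|\phi^*|\gtrsim n^{-1/2}$ with probability bounded below by a constant. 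Letting $L\to\infty$ then contradicts the claimed $\cO(\log\rho/n)$ statistical term, with $\rho=\max\{n,\delta^{-1}\}$.

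The paper's proof does not get around this. It applies Lemma~\ref{lemma:angle_generalizatation_bound}, which is stated for \emph{isotropic} log-concave vectors, directly to $\cD_{\xb}$ with covariance $\bSigma$. Lemma~\ref{lemma:data_norm_w.h.p} also hides $\sqrt{\lambda_{\max}(\bSigma)}$ inside the $\cO(\cdot)$. What is actually proved, by either route, is the bound under $\kappa=\cO(1)$, or with an extra factor $\sqrt{\kappa}$ on the second term. You should state that assumption explicitly rather than try to absorb the factor.

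Granting $\kappa=\cO(1)$, your proof is complete. The following parts coincide with the paper's argument:
\begin{itemize}
\item the triangle-inequality decomposition;
\item the identification $\tilde\beta=C_2^{(t)}$;
\item the step size $C_1^{(t)}/C_2^{(t)}=\Theta(\alpha)$;
\item the scale invariance of $\btheta_{\mathrm{SVM}}$;
\item the appeal to Corollary~\ref{thm:implicit_bias}.
\end{itemize}
Your caveat about the margin-dependent constant in the implicit-bias rate applies equally to the paper, which simply cites the corollary.

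The statistical step is a genuinely different route. The paper bounds $p(\btheta_{\mathrm{SVM}})$ with an $\epsilon$-net argument:
\begin{itemize}
\item it lower-bounds the margin of $\btheta^*$ by $\sqrt{\lambda_{\min}(\bSigma)}\,\delta/(6n)$ (Lemma~\ref{lemma:margin_w.h.p}) and upper-bounds $\max_i\|\xb_i\|_2$;
\item it deduces that the net point nearest $\btheta_{\mathrm{SVM}}$ still separates the data;
\item it union-bounds $|N(\SSS^{d-1},\epsilon)|e^{-n\varepsilon}$ over the net;
\item it transfers back to $\btheta_{\mathrm{SVM}}$ through the upper constant $c_+$.
\end{itemize}
The tiny discretization scale is exactly where $\lambda_{\min}(\bSigma)^{-1}$ and $\delta^{-1}$ enter $\log\rho$.

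Your realizable VC bound is distribution-free and uniform over all consistent homogeneous halfspaces, so it covers the data-dependent $\btheta_{\mathrm{SVM}}$ directly. It needs no margin or norm concentration, and it gives $\cO((d\log n+\log(1/\delta))/n)$ with no dependence on $\lambda_{\min}(\bSigma)$. The paper's route is self-contained without VC theory, at the price of those extra logarithmic terms.
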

Theorem~\ref{thm:inference} provides an O.O.D. generalization guarantee for the looped transformer to solve in-context weight prediction. We note that in recent theoretical studies of ICL \citep{zhang2024trained, freitrained, huangtransformers2025}, the generalization bounds are typically presented in expectation over the distribution of the test context dataset $D_n$. In comparison, Theorem~\ref{thm:inference} establishes a 
point-wise high-probability guarantee, which holds for any fixed input $\Zb_0$. In particular, by choosing $\delta=\cO\left(\tfrac{d}{n}\right)$, the conclusion of Theorem~\ref{thm:inference} can immediately induce an in-expectation bound   $\EE_{D_n}\big[\big\|\tfrac{\btheta_L}{\|\btheta_L\|_2} - \btheta^*\big\|_2\big]\leq \cO\big(\tfrac{\log n}{\alpha L} + \tfrac{d\log \rho}{n}\big)$, validating that our result is stronger than the classic in-expectation generalization bound.
Moreover, this O.O.D. generalization ability stems from the good property that the output $\btheta_L$ in direction converges to the maximum margin solution $\btheta_{\mathrm{SVM}}(D_n)$. %By 
%the definition of $\cL_{\mathrm{Eval}}$ in~\eqref{eq:eval_loss} and the 
It is natural to decompose $\big\|\tfrac{\btheta_L}{\|\btheta_L\|_2} - \btheta^*\big\|_2\leq \big\|\tfrac{\btheta_L}{\|\btheta_L\|_2}-\btheta_{\mathrm{SVM}}(D_n)\big\|_2 + \|\btheta_{\mathrm{SVM}}(D_n) - \btheta^*\|_2$ by triangle inequality. The first term $\big\|\tfrac{\btheta_L}{\|\btheta_L\|_2}-\btheta_{\mathrm{SVM}}(D_n)\big\|_2$ is controlled in Corollary~\ref{thm:implicit_bias}, directly yielding the term $\cO\big(\tfrac{\log n }{\alpha L}\big)$. In addition, the second term $\|\btheta_{\mathrm{SVM}}(D_n) - \btheta^*\|_2$ quantifies how well the maximum-margin solution learned from the context dataset $D_n$ approximates the true classifier $\btheta^*$. The upper bound for this statistical error is demonstrated to be $\cO\big(\tfrac{d\log \rho}{n}\big)$, corresponding to the second term of generalization bound.
%in the R.H.S. of~\eqref{eq:upbd_infer}. 
The detailed proof for Theorem~\ref{thm:inference} is deferred to Appendix~\ref{appendix:proof_inference}.

Several recent works \citet{freitrained, shentraining2025} investigate how single-layer transformers with linear attention can be trained to solve in-context classification on Gaussian-mixture data, and establish in-distribution generalization. In contrast, our Theorem~\ref{thm:inference} establishes O.O.D. generalization for multi-layer transformers with softmax attention. Compared with another recent work \citet{bai2024transformers}, our work gives better bounds thanks to the fast convergence rate of normalized gradient descent. Specifically, as is discussed above, the first term in the bound of Theorem~\ref{thm:inference} quantifies the in-direction convergence of $\btheta_L$ towards the maximum-margin solution, and the rate is given by Corollary~\ref{thm:implicit_bias}.
%$\btheta_L$ converges to the maximum-margin solution at a rate inversely proportional to $L$.
%speed for $\btheta_L$ to converge to $\btheta$. 
Consequently, as long as $L=\tilde\Omega\big(\frac{n}{\alpha d}\big)$, the generalization bound can be given as  
$\big\|\tfrac{\btheta_L}{\|\btheta_L\|_2} - \btheta^*\big\|_2 \leq \tilde\cO\big(\tfrac{d}{n}\big)$, which matches the sample complexity lower bound in classic PAC learning \citep{long1995sample}. In contrast, \citet{bai2024transformers} constructs multi-head ReLU transformers that 
approximate standard gradient descent for in-context logistic regression. If similar analyses are applied to their setting, then by the implicit bias results of gradient descent \citep{soudry2018implicit}, their constructed model's output approaches the maximum-margin solution only at the rate $\cO\big(\tfrac{\log\log L}{\log L}\big)$. As a result, unless the depth of the model $L$ is exponentially large in the problem parameters, this term always dominates the statistical error term $\tilde\cO\big(\frac{d}{n}\big)$, and therefore fundamentally limits the performance in solving the weight prediction task.

\section{Experiments}
In this section, we present the experimental results. We consider three experimental settings:
%including: 
(i) training a single-layer transformer; (ii) constructing a multi-layer looped transformer from the trained layer,  %\CC{through recurrently applying the trained transformer layer}
and evaluating its O.O.D. generalization in solving weight prediction; (iii) training a multiple-layer looped transformer from scratch, and evaluating its capacity in solving weight prediction. 
%We organize the experiment setups and results of three types of experiments into three subsections, respectively. 
\subsection{Training a single-layer transformer}
We first consider training a single-layer transformer to validate our Theorem~\ref{thm:main_result_training}. The architecture of single-layer transformer follows the definition in \eqref{def:tf_multiple} with $L=1$, and the training strategy aligns with the theoretical settings in Subsection~\ref{subsec:training}. We adopt an online gradient descent algorithm to simulate training over the population loss $\cL_{\mathrm{train}}$. Specifically, at each iteration, we generate a new batch of $K=400$ context datasets $\{D_{n, k}\}_{k=1}^K$, where each dataset $D_{n, k}$ is generated following Definition~\ref{def:linear_classification_model} with $\cD_{\xb}$ being $\cN(\mathbf{0}_d, \Ib_d)$ and $\cD_{\btheta^*}$ being $\cU(\SSS^{d-1})$. For each $D_{n, k}$, we generate a corresponding $\btheta_{0, k}$ from $\cU(\SSS^{d-1})$. Then we can obtain a batch of $K$ input matrices $\{\Zb_{0, k}\}_{k=1}^K$ embedded of the form in~\eqref{eq:def_embedding}, and the gradient descent update in ~\eqref{eq:gd_updating_v},~\eqref{eq:gd_updating_w} is conducted on this batch of inputs, with the learning rate $\eta =0.1$. In addition, we consider two gradient descent teachers $\btheta_{\mathrm{GD}}$ with $\alpha=0.5$ and $\alpha=1$, respectively. 
For each case, we conduct experiments under three different configurations: $(n,d)\in\{(60,20),(100,25),(150,30)\}$.

\begin{figure}[ht]
%\vspace{-1mm} 
    \centering
    \begin{subfigure}[t]{0.49\columnwidth}
        \centering
        \includegraphics[width=\linewidth]{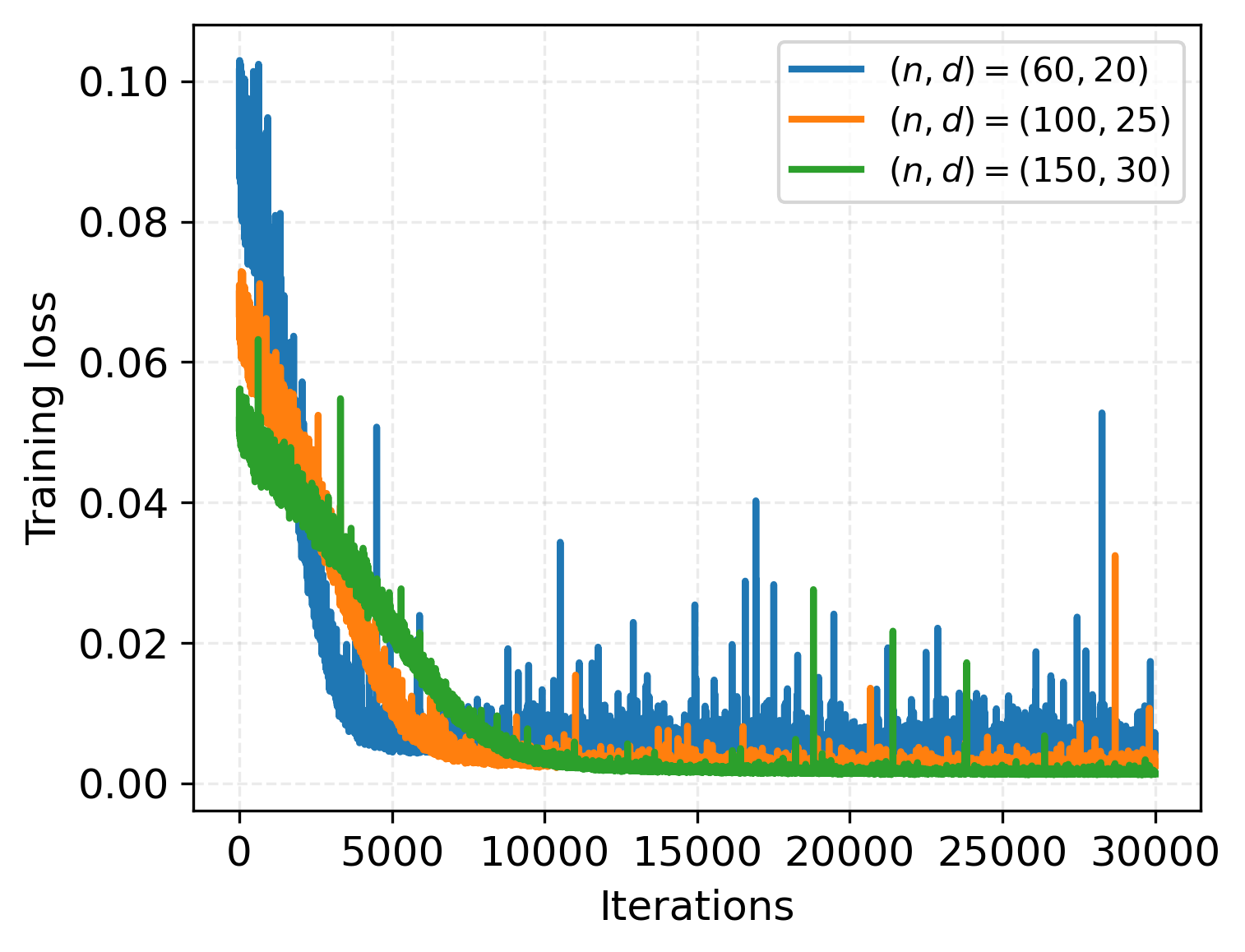}
        \caption{Training loss, $\alpha=0.5$}
        \label{subfig:training_loss_0.5}
    \end{subfigure}\hfill
    \begin{subfigure}[t]{0.49\columnwidth}
        \centering
        \includegraphics[width=\linewidth]{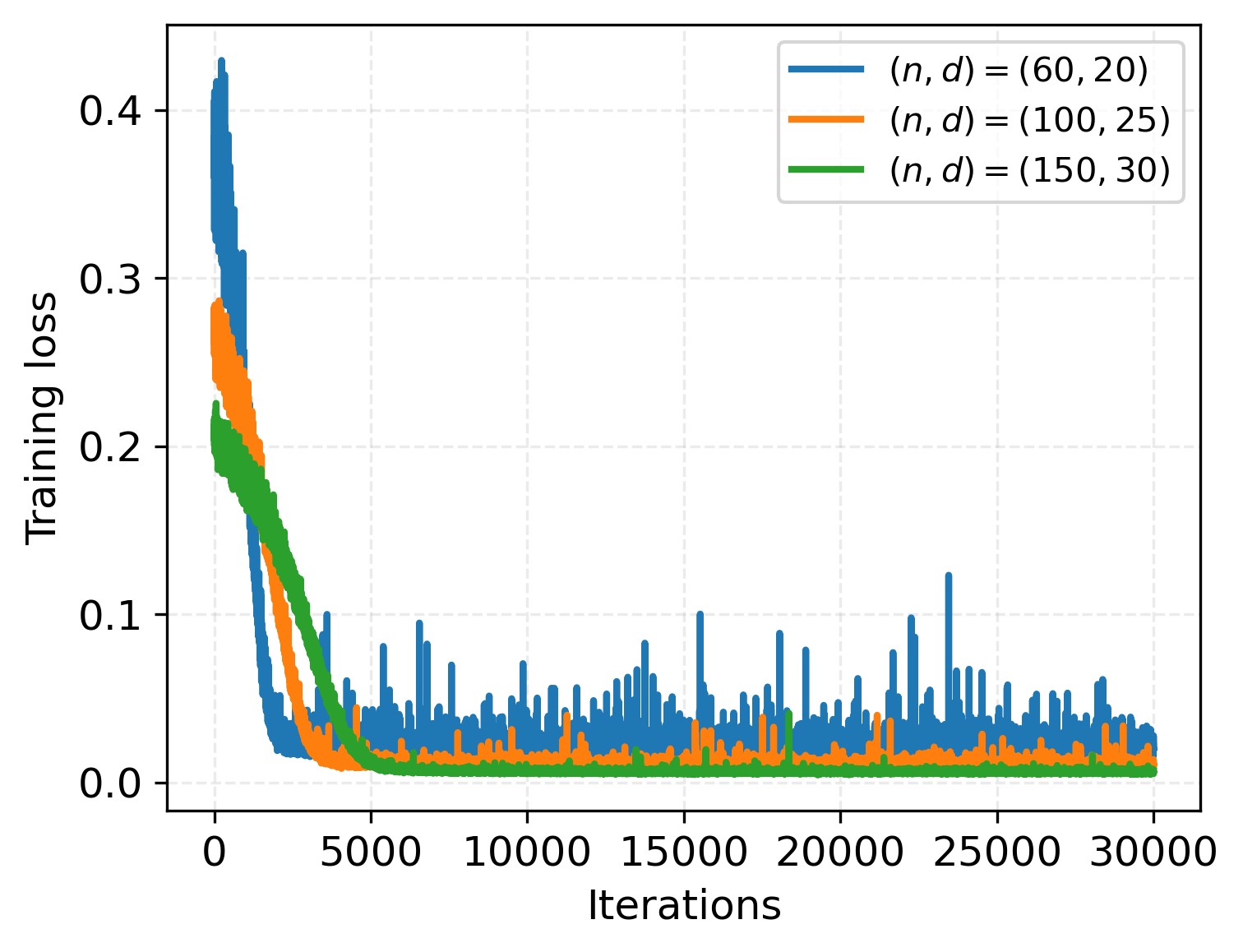}
        \caption{Training loss, $\alpha=1$}
        \label{subfig:training_loss_1}
    \end{subfigure}
    \caption{Training loss under two settings: $\alpha=0.5$, and $\alpha=1$.}
    \label{fig:training_loss}
    \vspace{-2mm} 
\end{figure}

% \begin{figure}[ht]
% \vspace{-2mm} 
%     \centering
%     \begin{subfigure}[t]{0.49\linewidth}
%         \centering
%         \includegraphics[width=\linewidth]{V_alpha=0.5_d=20.pdf}
%         \caption{Heatmap of $\Vb^{(t)}$, $\alpha=0.5$}
%         \label{subfig:v_0.5}
%     \end{subfigure}
%     \hfill
%     \begin{subfigure}[t]{0.49\linewidth}
%         \centering
%         \includegraphics[width=\linewidth]{W_alpha=0.5_d=20.pdf}
%         \caption{Heatmap of $\Wb^{(t)}$, $\alpha=0.5$}
%         \label{subfig:w_0.5}
%     \end{subfigure}\\

%     \begin{subfigure}[t]{0.49\linewidth}
%         \centering
%         \includegraphics[width=\linewidth]{V_alpha=1_d=20.pdf}
%         \caption{Heatmap of $\Vb^{(t)}$, $\alpha=1$}
%         \label{subfig:v_1}
%     \end{subfigure}
%     \hfill
%     \begin{subfigure}[t]{0.49\linewidth}
%         \centering
%         \includegraphics[width=\linewidth]{W_alpha=1_d=20.pdf}
%         \caption{Heatmap of $\Wb^{(t)}$, $\alpha=1$}
%         \label{subfig:w_1}
%     \end{subfigure}
%     \caption{Heatmaps of the parameter matrices $\Vb^{(t)}$ and $\Wb^{(t)}$ at convergence.
%     Results are shown for $\alpha=0.5$ and $\alpha=1$ respectively, with $(n,d)=(60,20)$.}
%     \label{fig:heatmaps}
%     \vspace{-4mm} 
% \end{figure}
\begin{figure*}[t]
    \centering

    % Row 1: (n,d) = (60,20)
    \begin{subfigure}[t]{0.23\textwidth}
        \centering
        \includegraphics[width=\textwidth]{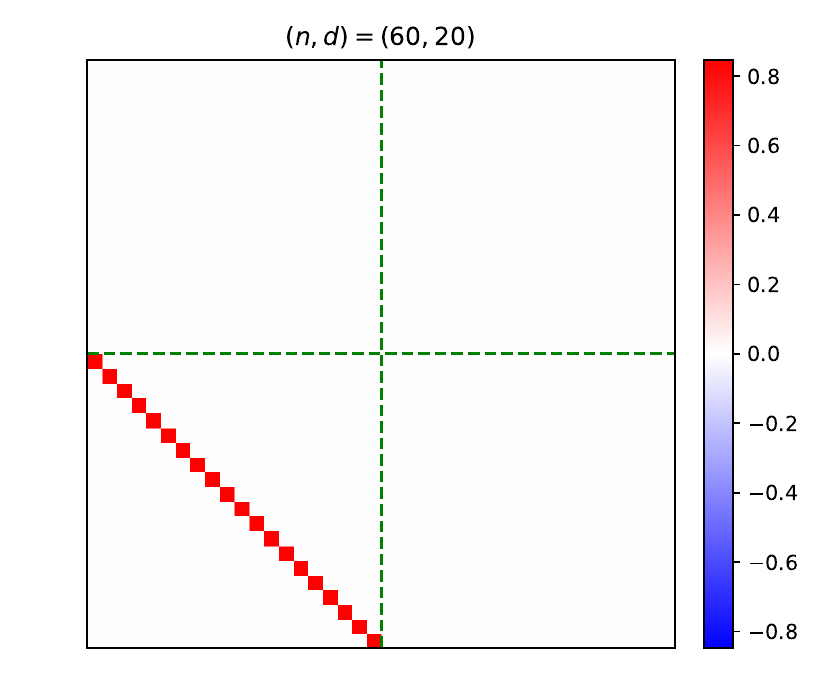}
        \caption{$\Vb^{(t)}$, $\alpha=0.5$}
        \label{subfig:v_0.5_d_20}
    \end{subfigure}
    \hfill
    \begin{subfigure}[t]{0.23\textwidth}
        \centering
        \includegraphics[width=\textwidth]{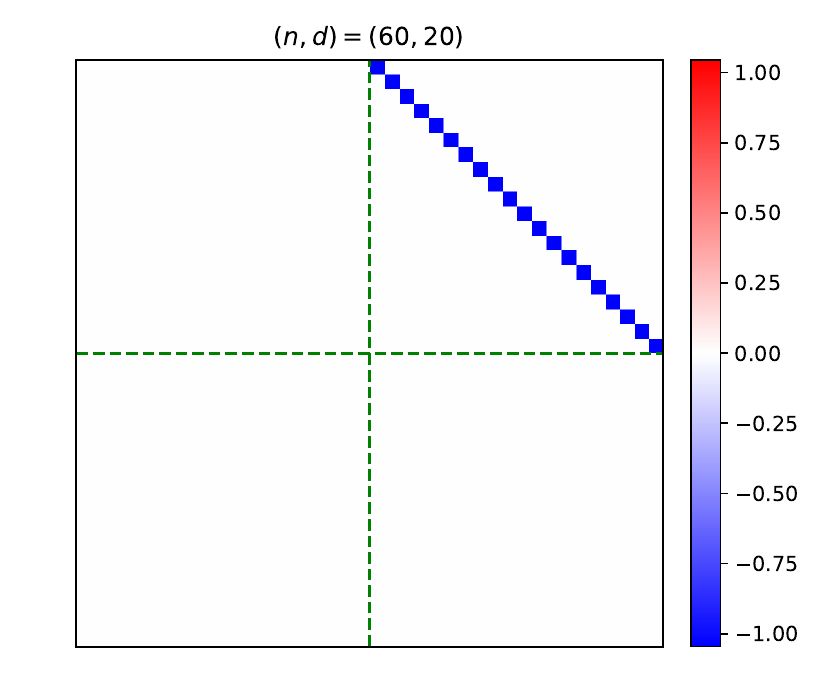}
        \caption{$\Wb^{(t)}$, $\alpha=0.5$}
        \label{subfig:w_0.5_d_20}
    \end{subfigure}
    \hfill
    \begin{subfigure}[t]{0.23\textwidth}
        \centering
        \includegraphics[width=\textwidth]{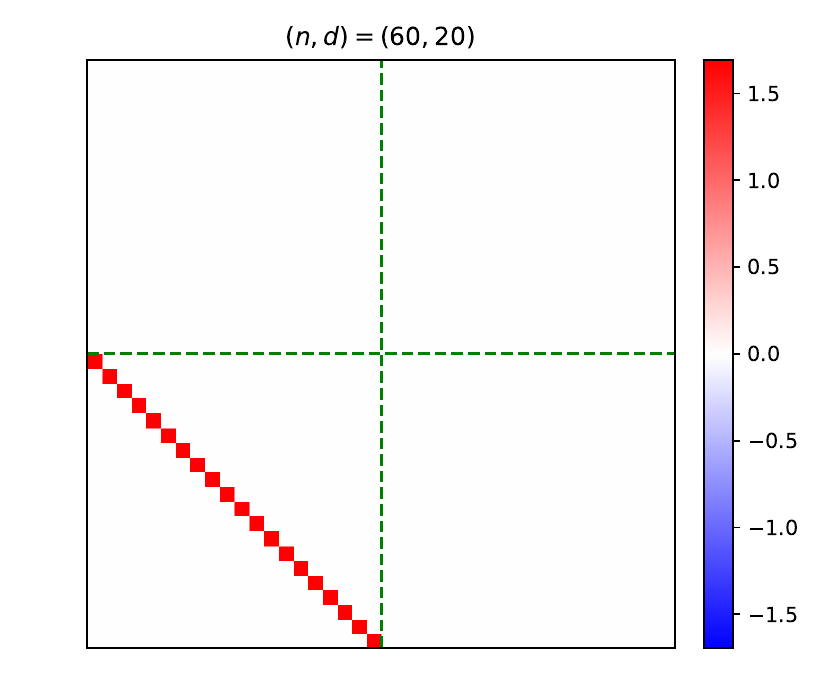}
        \caption{$\Vb^{(t)}$, $\alpha=1$}
        \label{subfig:v_1_d_20}
    \end{subfigure}
    \hfill
    \begin{subfigure}[t]{0.23\textwidth}
        \centering
        \includegraphics[width=\textwidth]{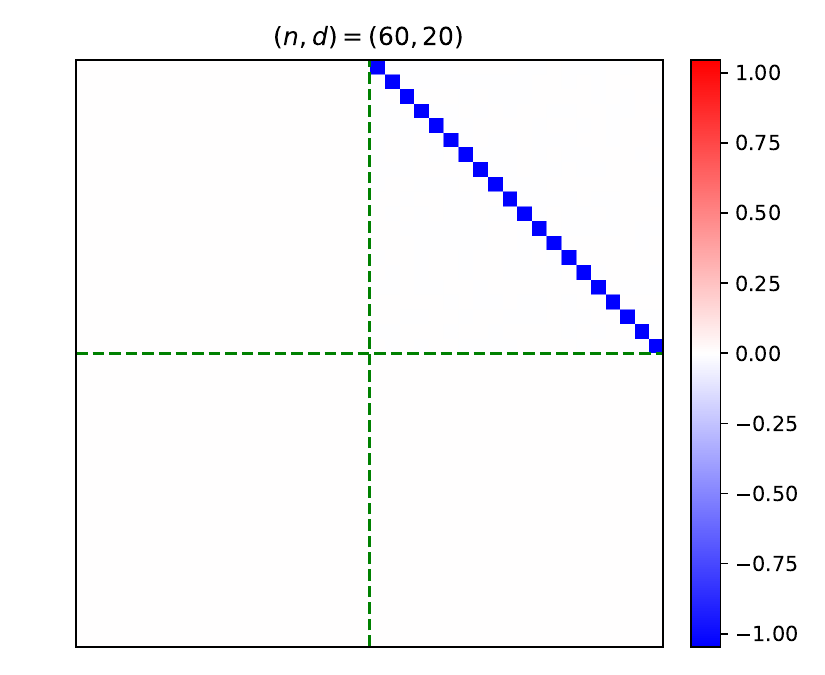}
        \caption{$\Wb^{(t)}$, $\alpha=1$}
        \label{subfig:w_1_d_20}
    \end{subfigure}

    % \vspace{0.8em}
    % \centerline{\small $(n,d)=(60,20)$}
    % \vspace{1.0em}

    % Row 2: (n,d) = (100,25)
    \begin{subfigure}[t]{0.23\textwidth}
        \centering
        \includegraphics[width=\textwidth]{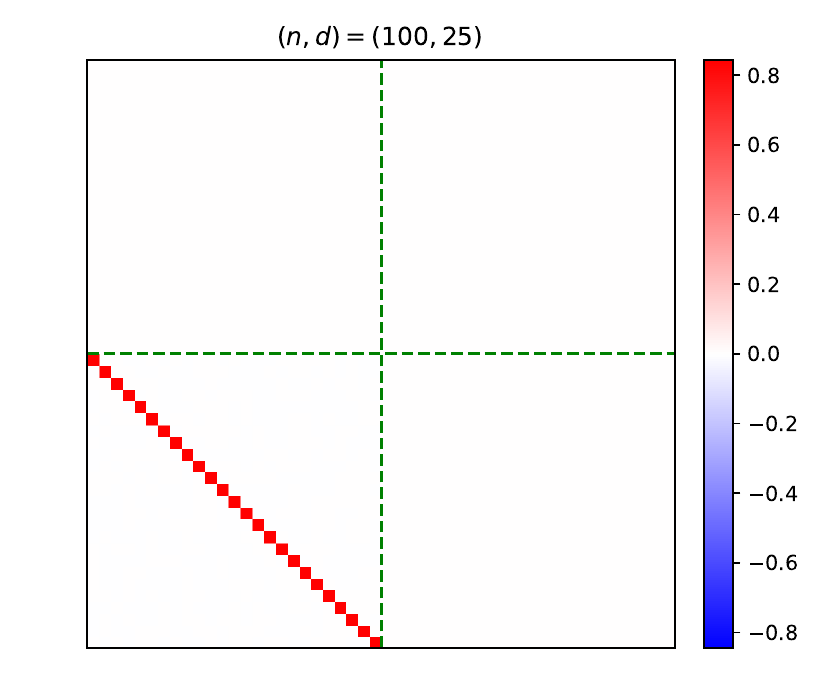}
        \caption{$\Vb^{(t)}$, $\alpha=0.5$}
        \label{subfig:v_0.5_d_25}
    \end{subfigure}
    \hfill
    \begin{subfigure}[t]{0.23\textwidth}
        \centering
        \includegraphics[width=\textwidth]{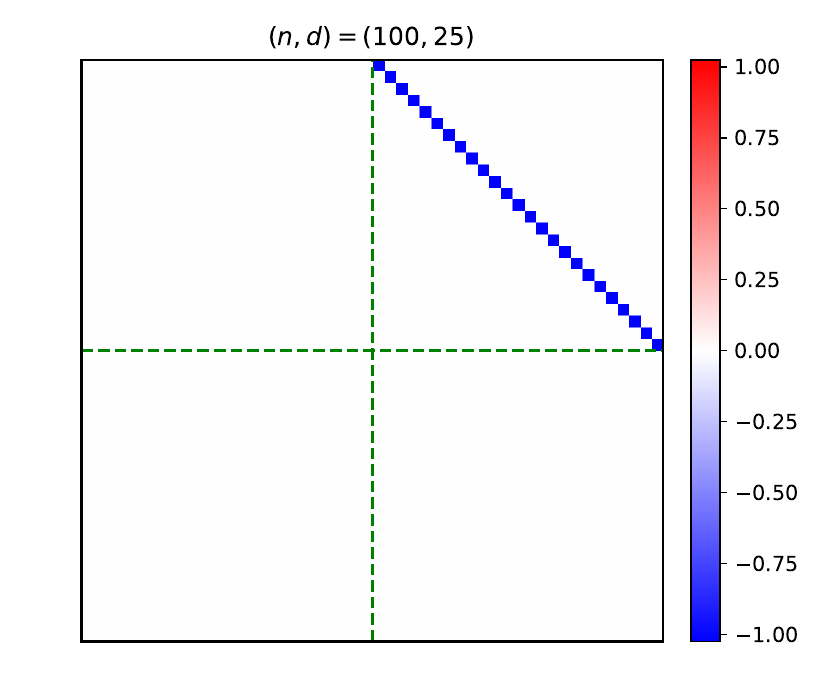}
        \caption{$\Wb^{(t)}$, $\alpha=0.5$}
        \label{subfig:w_0.5_d_25}
    \end{subfigure}
    \hfill
    \begin{subfigure}[t]{0.23\textwidth}
        \centering
        \includegraphics[width=\textwidth]{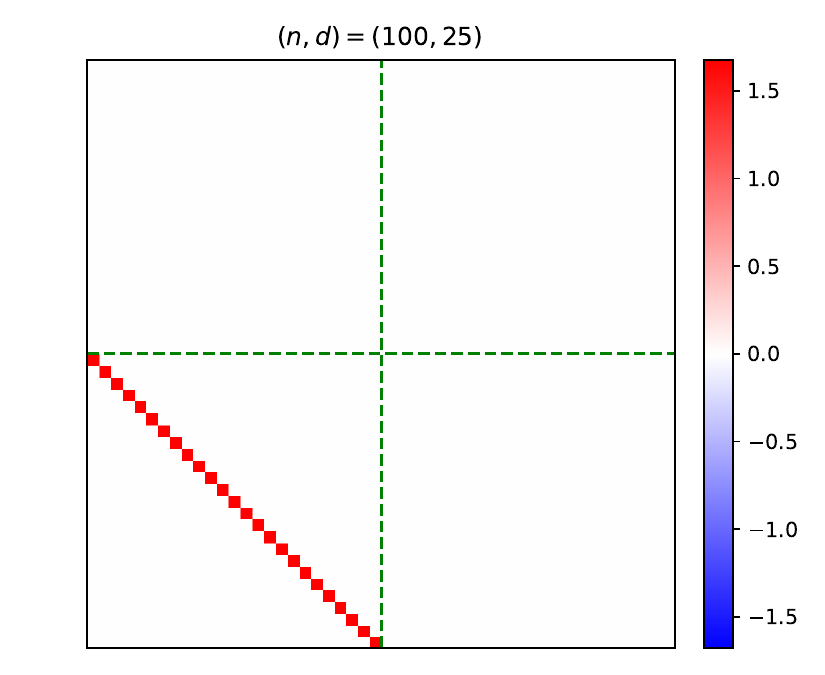}
        \caption{$\Vb^{(t)}$, $\alpha=1$}
        \label{subfig:v_1_d_25}
    \end{subfigure}
    \hfill
    \begin{subfigure}[t]{0.23\textwidth}
        \centering
        \includegraphics[width=\textwidth]{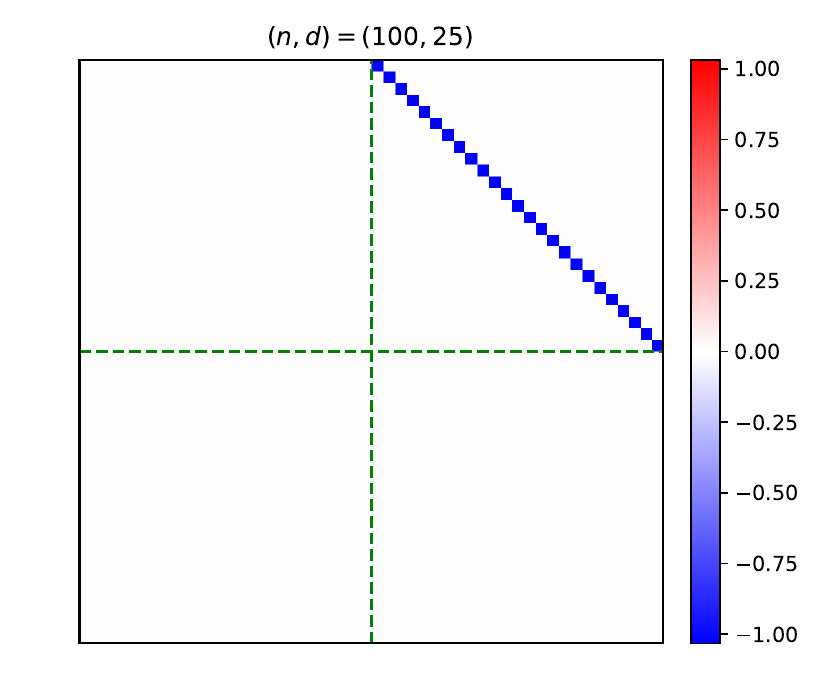}
        \caption{$\Wb^{(t)}$, $\alpha=1$}
        \label{subfig:w_1_d_25}
    \end{subfigure}

    % \vspace{0.8em}
    % \centerline{\small $(n,d)=(100,25)$}
    % \vspace{1.0em}

    % Row 3: (n,d) = (150,30)
    \begin{subfigure}[t]{0.23\textwidth}
        \centering
        \includegraphics[width=\textwidth]{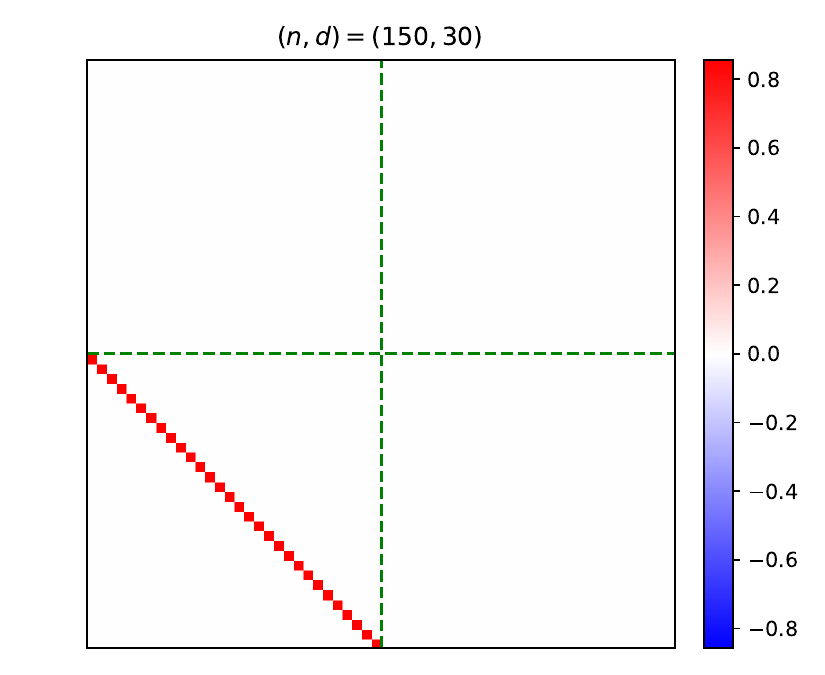}
        \caption{$\Vb^{(t)}$, $\alpha=0.5$}
        \label{subfig:v_0.5_d_30}
    \end{subfigure}
    \hfill
    \begin{subfigure}[t]{0.23\textwidth}
        \centering
        \includegraphics[width=\textwidth]{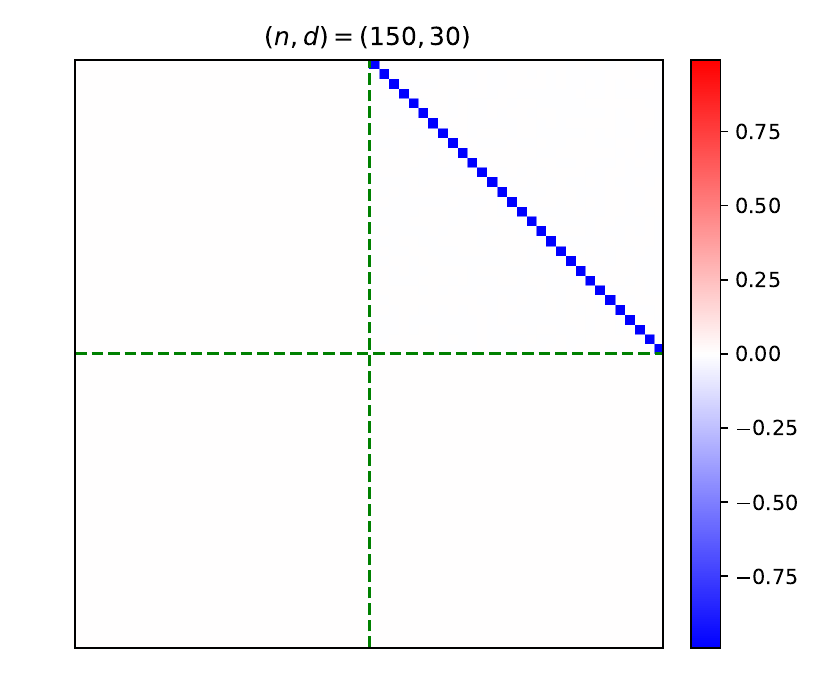}
        \caption{$\Wb^{(t)}$, $\alpha=0.5$}
        \label{subfig:w_0.5_d_30}
    \end{subfigure}
    \hfill
    \begin{subfigure}[t]{0.23\textwidth}
        \centering
        \includegraphics[width=\textwidth]{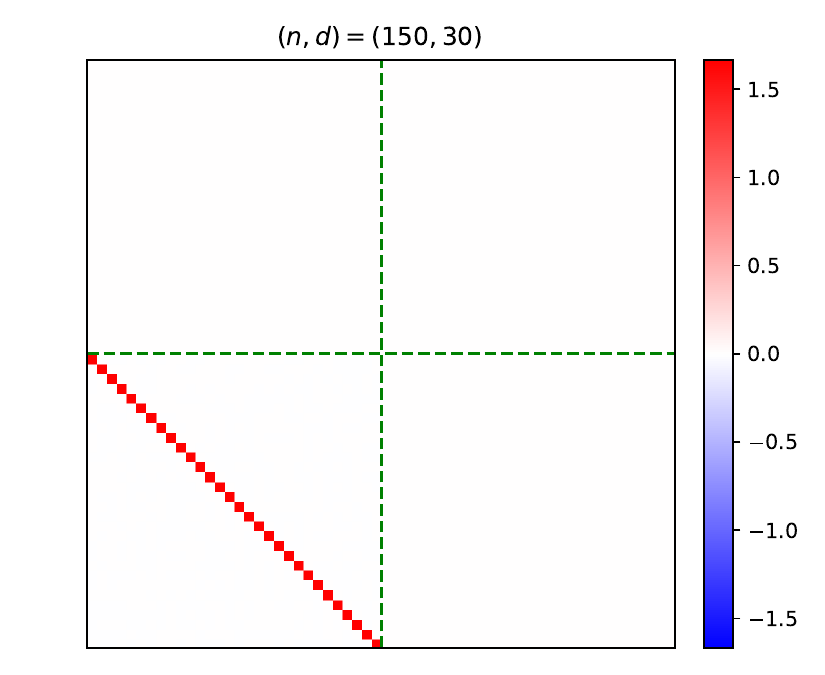}
        \caption{$\Vb^{(t)}$, $\alpha=1$}
        \label{subfig:v_1_d_30}
    \end{subfigure}
    \hfill
    \begin{subfigure}[t]{0.23\textwidth}
        \centering
        \includegraphics[width=\textwidth]{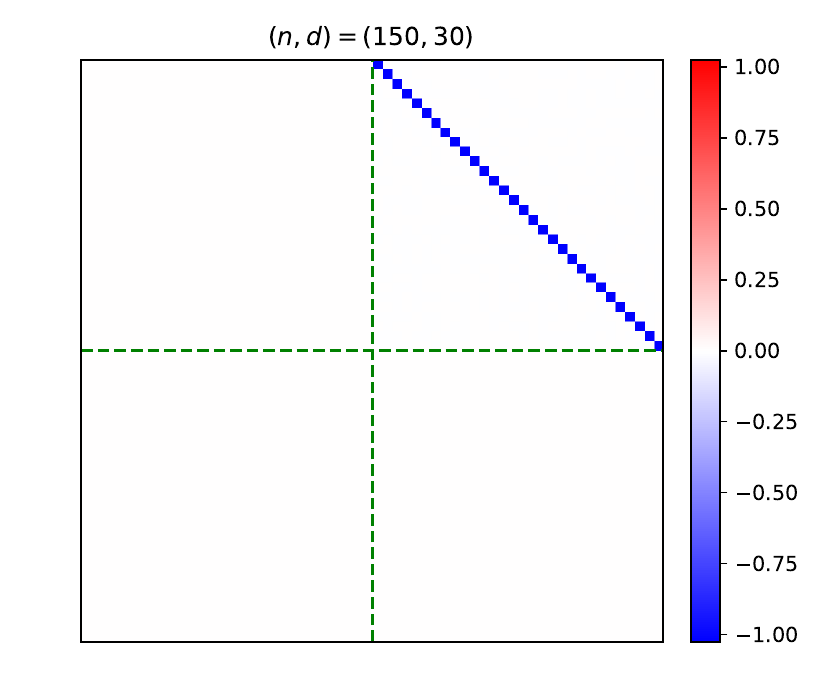}
        \caption{$\Wb^{(t)}$, $\alpha=1$}
        \label{subfig:w_1_d_30}
    \end{subfigure}

    % \vspace{0.8em}
    % \centerline{\small $(n,d)=(150,30)$}

    \caption{
    Heatmaps of the parameter matrices $\Vb^{(t)}$ and $\Wb^{(t)}$ when the training loss converges. 
    The three rows correspond to $(n,d)=(60,20)$, $(100,25)$, and $(150,30)$, respectively. 
    In each row, the four panels show $\Vb^{(t)}$ and $\Wb^{(t)}$ under $\alpha=0.5$ and $\alpha=1$.
    }
    \label{fig:heatmaps_all}
\end{figure*}

\begin{figure}[ht]
    \centering
    \begin{subfigure}[t]{0.49\columnwidth}
        \centering
        \includegraphics[width=\linewidth]{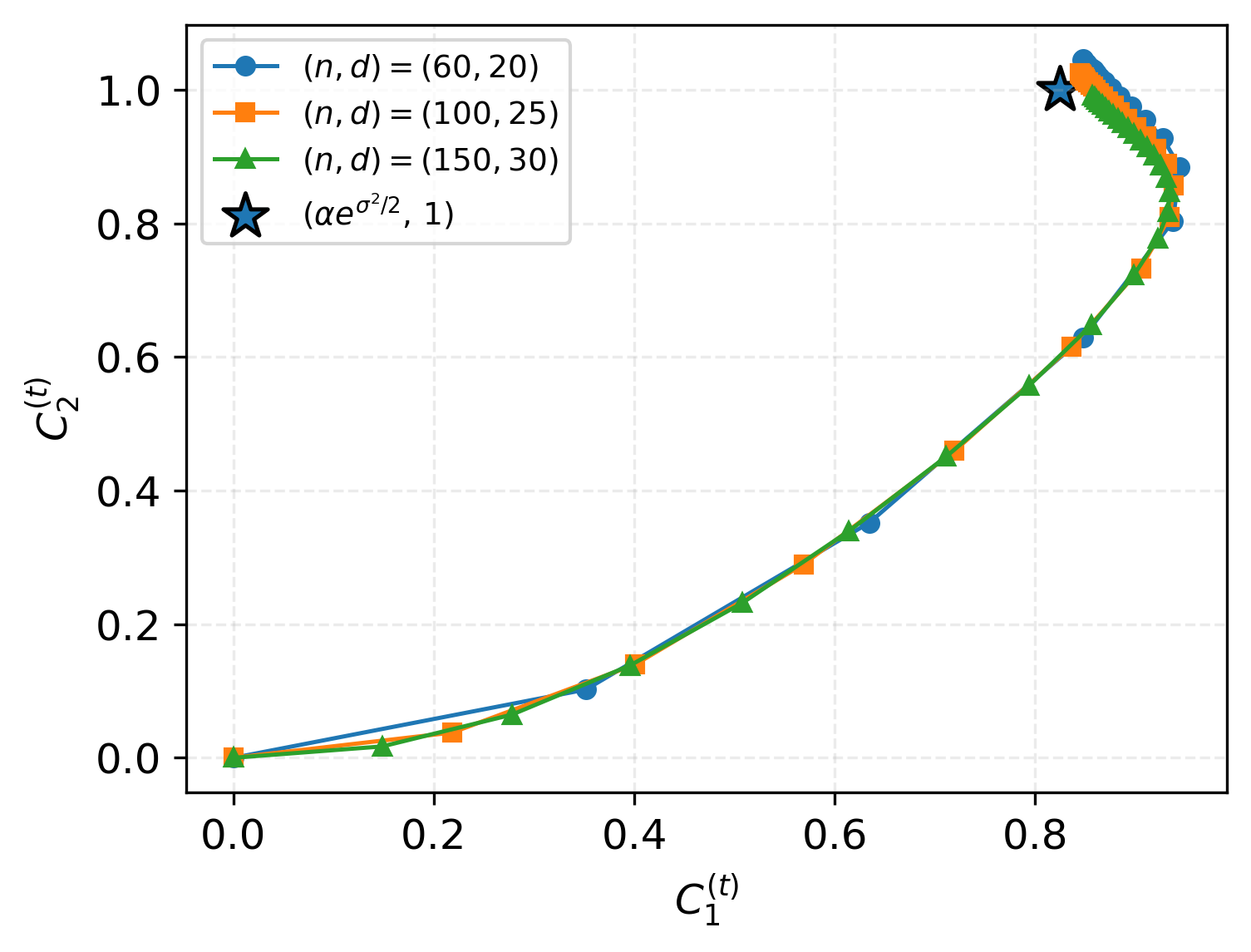}
        \caption{Trajectories, $\alpha=0.5$}
        \label{subfig:trajectories_0.5}
    \end{subfigure}\hfill
    \begin{subfigure}[t]{0.49\columnwidth}
        \centering
        \includegraphics[width=\linewidth]{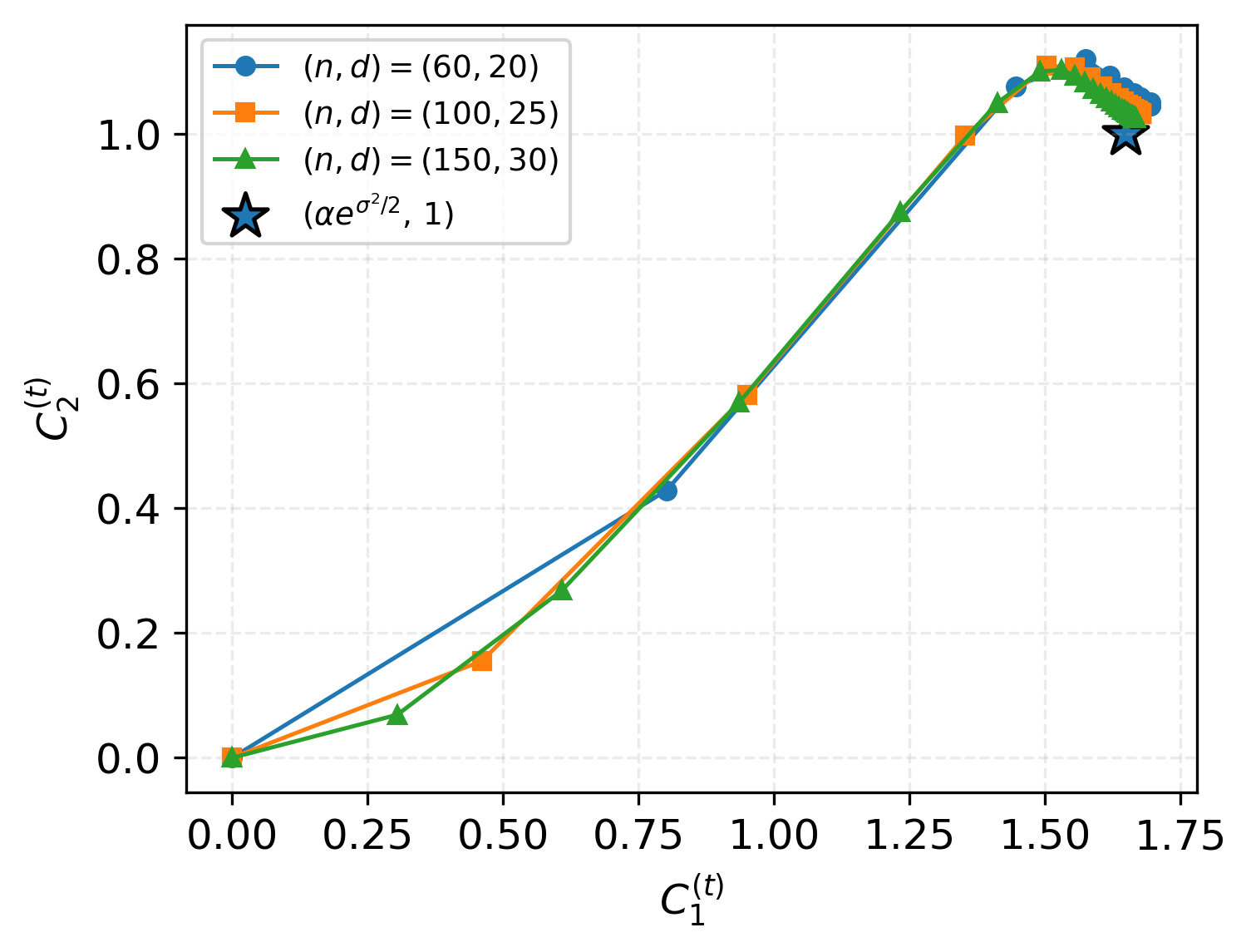}
        \caption{Trajectories, $\alpha=1$}
        \label{subfig:trajectories_1}
    \end{subfigure}
    \caption{Trajectories of the coefficient $C_1^{(t)}$ and  $C_1^{(2)}$ under two different settings that $\alpha=0.5$, and $\alpha=1$.}
    \label{fig:trajectories}
    \vspace{-2mm} 
\end{figure}

\begin{figure}[!t]
\vspace{-2mm}
    \centering
    \begin{subfigure}[t]{0.49\columnwidth}
        \centering
        \includegraphics[width=\linewidth]{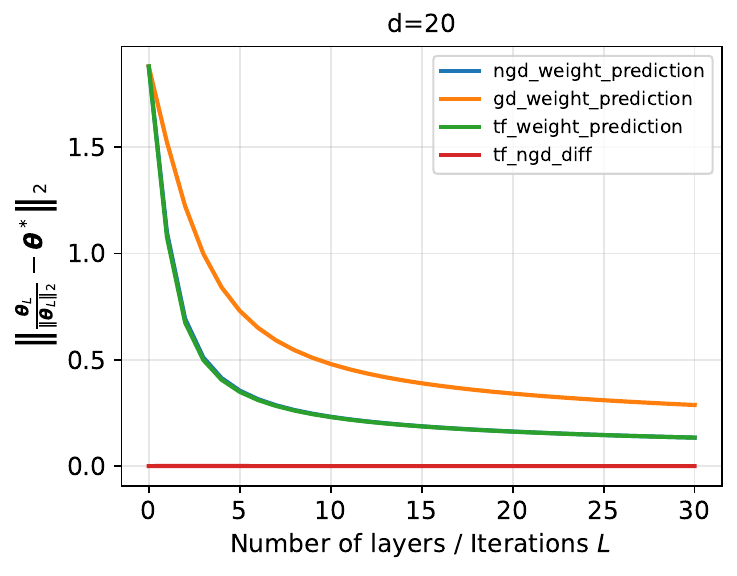}
        \caption{Gaussian data, $d=20$}
        \label{subfig:eval_gaussian_d_20}
    \end{subfigure}\hfill
    \begin{subfigure}[t]{0.49\columnwidth}
        \centering
        \includegraphics[width=\linewidth]{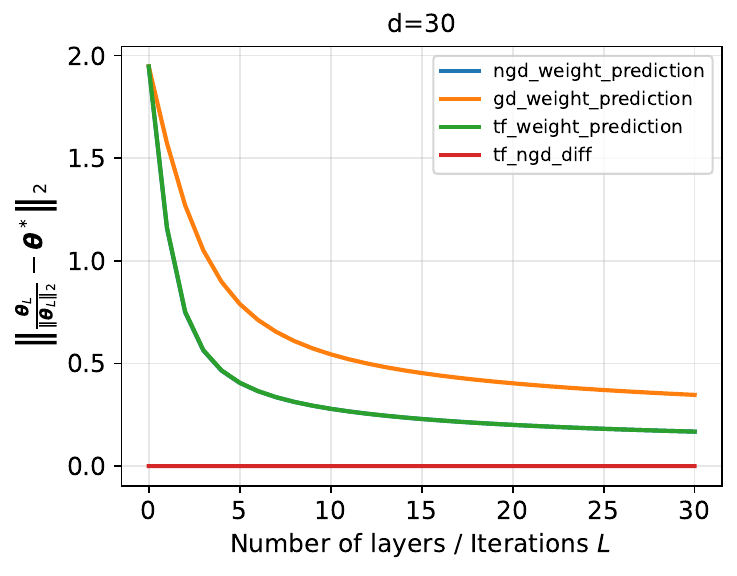}
        \caption{Gaussian data, $d=30$}
        \label{subfig:eval_gaussian_d_30}
    \end{subfigure}
    \\
    \begin{subfigure}[t]{0.49\columnwidth}
        \centering
        \includegraphics[width=\linewidth]{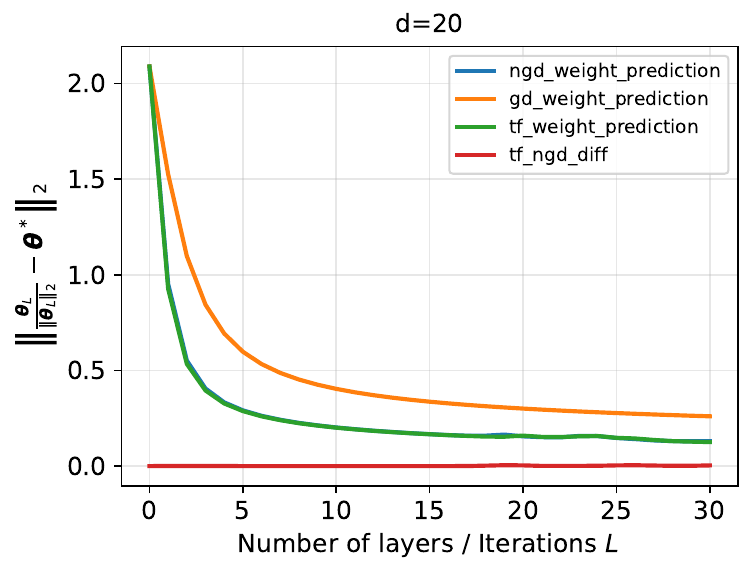}
        \caption{Laplace data, $d=20$}
        \label{subfig:eval_Laplace_d_20}
    \end{subfigure}\hfill
    \begin{subfigure}[t]{0.49\columnwidth}
        \centering
        \includegraphics[width=\linewidth]{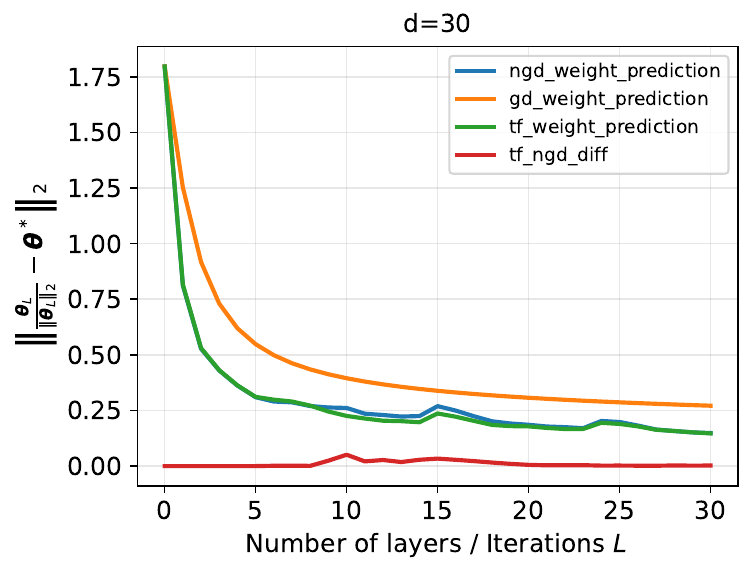}
        \caption{Laplace data, $d=30$}
        \label{subfig:eval_Laplace_d_30}
    \end{subfigure}\\
    \begin{subfigure}[t]{0.49\columnwidth}
        \centering
        \includegraphics[width=\linewidth]{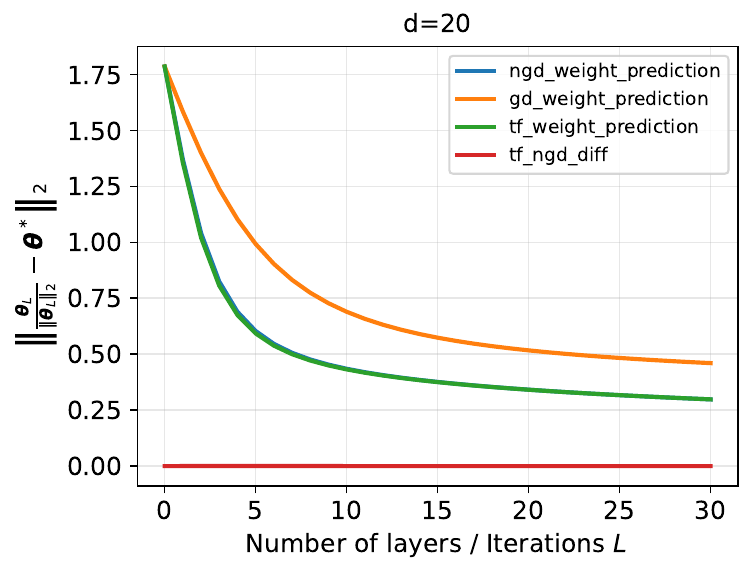}
        \caption{Uniform data, $d=20$}
        \label{subfig:eval_uniform_d_20}
    \end{subfigure}\hfill
    \begin{subfigure}[t]{0.49\columnwidth}
        \centering
        \includegraphics[width=\linewidth]{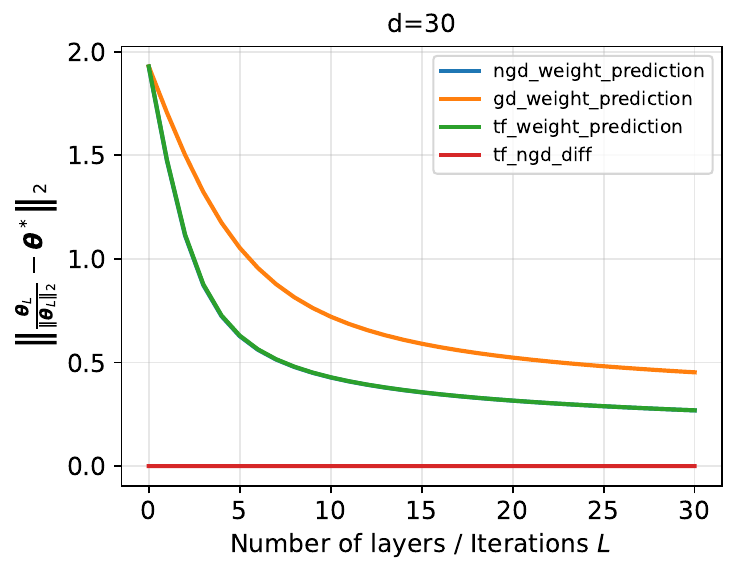}
        \caption{Uniform data, $d=30$}
        \label{subfig:eval_uniform_d_30}
    \end{subfigure}
    \caption{Evaluation of the in-context weight-prediction produced by looped transformers, NGD iterates, and standard GD iterates on different O.O.D distributed context datasets, with $d\in \{20, 30\}$.}
    \label{fig:eval_gaussian}
    \vspace{-5mm} 
\end{figure}

\begin{figure*}[ht]
    \centering
    \begin{subfigure}[t]{0.32\textwidth}
        \centering
        \includegraphics[width=\linewidth]{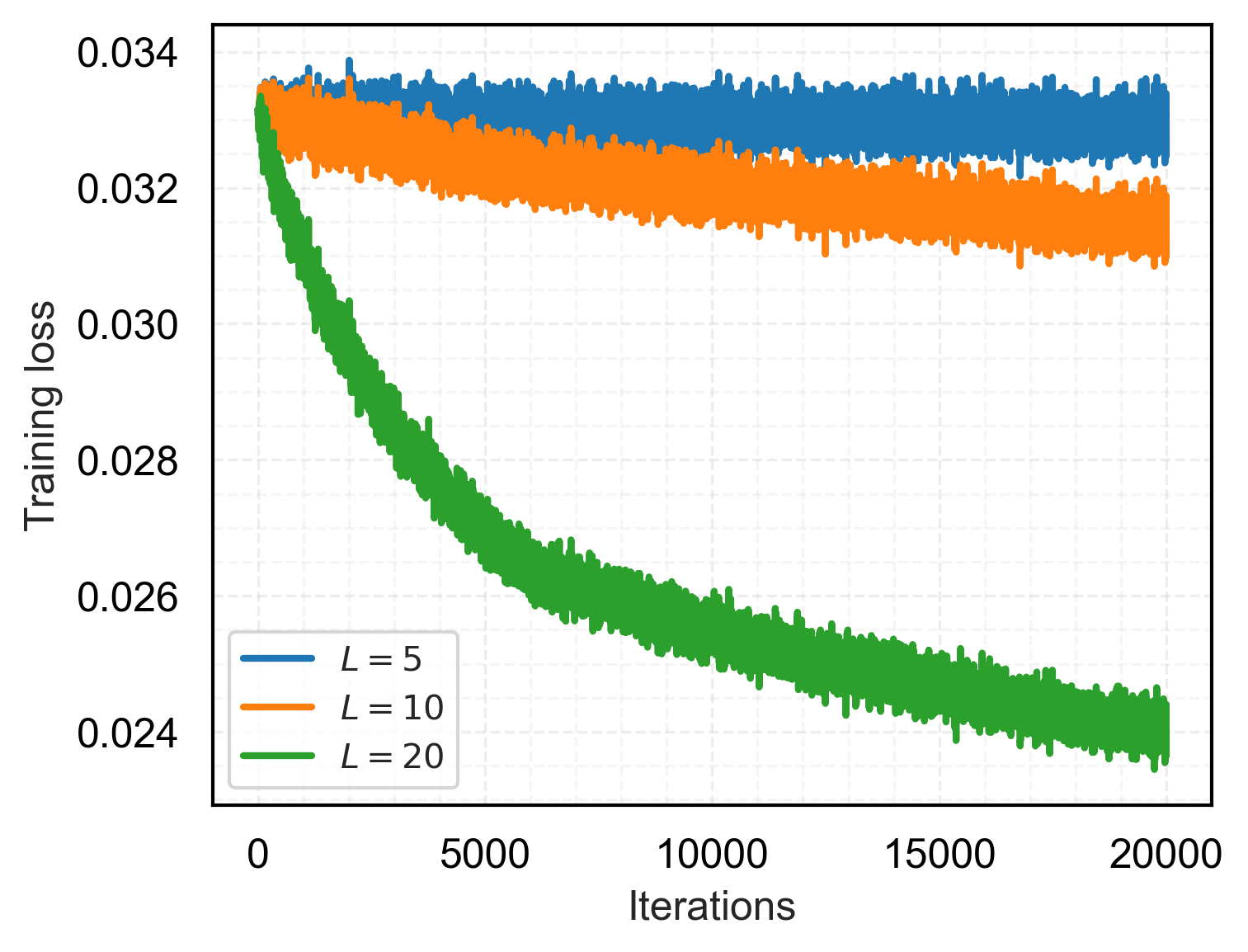}
        \caption{Training loss for $L\in \{5, 10, 20\}$}
        \label{subfig:training_loss_ete}
    \end{subfigure}\hfill
    \begin{subfigure}[t]{0.32\textwidth}
        \centering
        % trim = left bottom right top
        \includegraphics[width=\linewidth, trim=0 0 0 0, clip]{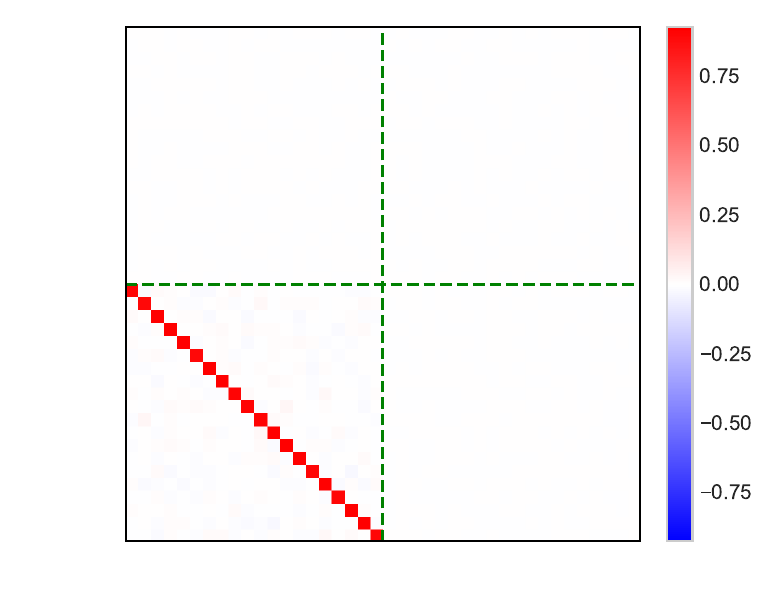}
        \caption{Heat map of $\Vb^{(t)}$, $L=20$}
        \label{subfig:heatmap_v_20}
    \end{subfigure}\hfill
    \begin{subfigure}[t]{0.32\textwidth}
        \centering
        \includegraphics[width=\linewidth, trim=0 0 0 0, clip]{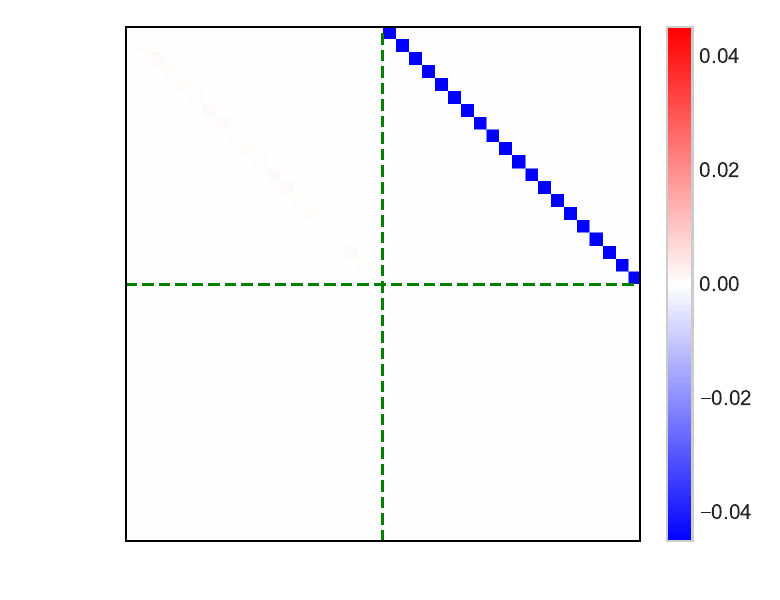}
        \caption{Heat map of $\Wb^{(t)}$, $L=20$}
        \label{subfig:heatmap_w_20}
    \end{subfigure}
    \caption{Training loss curves $L\in \{5, 10, 20\}$ and the heatmaps of parameter matrices $\Vb^{(t)}$ and $\Wb^{(t)}$ of $20$-layer looped transformers.}
    \label{fig:training_ete}
    \vspace{-5mm}
\end{figure*}

% \begin{figure*}[ht!]
% \centering
% \subfigure[Training Loss, $\alpha=0.5$]{\includegraphics[width=0.24\textwidth]{training_loss_alpha_0.5.png}}
% \subfigure[Heatmap of $\Vb^{(t)}$, $\alpha=0.5$]{\includegraphics[width=0.24\textwidth]{V_alpha=0.5_d=20.pdf}}
% \subfigure[Heatmap of $\Wb^{(t)}$, $\alpha=0.5$]{\includegraphics[width=0.24\textwidth]{W_alpha=0.5_d=20.pdf}}
% \subfigure[Trajectories of $C_1^{(t)}, C_2^{(t)}$, $\alpha=0.5$]{\includegraphics[width=0.24\textwidth]{trajectory_alpha_0.5.png}}
% \caption{Training and test losses for tasks 1 and 2 with different $d$ and $n$.}
% \label{fig:training}
%  \vspace{-5mm}
% \end{figure*}
% \CC{The small oscillations in the curves are due to the online training setup. Since the batch size is small compared to the dimensionality of input matrices, the gradients from different batches can be quite noisy, causing the loss to bounce slightly during training.} 

% The experiment results are given in Figures~\ref{fig:training_loss},~\ref{fig:heatmaps}, and~\ref{fig:trajectories}. 

Figure~\ref{fig:training_loss} reports the curves of training losses. In all settings, the training losses consistently converge near zero. Notably, configurations with a larger dimension $d$ exhibit slower convergence. This observation aligns with our Theorem~\ref{thm:main_result_training}, as the linear convergence factor $1-\eta\mu_{1, \alpha,\sigma}/d$ grows with $d$, thereby slowing down the optimization process.

Figures~\ref{fig:heatmaps_all} displays the heatmaps of the parameter matrices $\Vb^{(t)}$ and $\Wb^{(t)}$ obtained after training. %(\CC{Due to the space limitations, we only present the results for $n=60$ and $d=20$. The results for other settings of $n$ and $d$ is deferred to Appendix~\ref{appendix:add_exper}}). 
These results demonstrate that the trained $\mathbf{V}^{(t)}$ and $\mathbf{W}^{(t)}$ follow the structured pattern described in Lemma~\ref{lemma:parameter_pattern}: the bottom-left block of $\Vb$ and the top-right block of $\Wb$ are almost proportional to the identity matrix, with coefficients $C_1 > 0$ and $-C_2 < 0$, respectively, and all other blocks remain almost zero. 
%Due to the limited space, here we only present the results for $(n, d)=(60, 20)$. Other results are deferred to Appendix~\ref{appendix:add_exper}.
%The results for other settings of $n$ and $d$ are deferred to Appendix~\ref{appendix:add_exper}. 

% Given the desired pattern of parameter matrices $\Vb^{(t)}$ and $\Wb^{(t)}$, 

Figure~\ref{fig:trajectories} further presents the trajectories of  $\Cb^{(t)}= [C_1^{(t)},C_2^{(t)}]^\top$. 
The trajectories exhibit clear convergence behavior, 
as evidenced by the dense accumulations of iterates near the end of the curves. In addition, in all settings, the iterates consistently converge to points close to 
$[\alpha e^{\sigma^2/2},\, 1]^\top$, aligning with the third conclusion in Theorem~\ref{thm:main_result_training}. 

The experiments in Figures~\ref{fig:training_loss},~\ref{fig:heatmaps_all}, and~\ref{fig:trajectories} all match our theoretical conclusions regarding the training of a single-layer transformer, validating that a single-layer transformer can be trained to conduct a normalized gradient descent update.
% It shows that the trained $\Vb^{(t)}$ and $\Wb^{(t)}$ exhibits the structured pattern presented in Lemma~\ref{lemma:parameter_pattern}: the left-bottom block of $\Vb$ and right-top block of $\Vb$ are proportional to identity matrix with a positive and a negative coefficient $C_1$ and $-C_2$ respectively, while other blocks remain to be zero matrices.  

% with Figure~\ref{subfig:training_loss_0.5} reporting the setting $\alpha=0.5$, and Figure~\ref{subfig:training_loss_1} reporting the setting $\alpha=1$.

 % with a randomly generated positive definite covariance matrix

\subsection{O.O.D. generalization of looped transformers}
Following our theoretical settings, we can obtain a multi-layer looped transformer by recurrently applying the trained single-layer transformer. In this section, we conduct experiments to validate the O.O.D. generalization of the resulting looped transformers in solving in-context weight prediction. To make sure that each O.O.D. setting covers significantly different distributions compared with the training data, we consider three different O.O.D. choices for $\cD_{\xb}$: we first marginally sample each entry of the random vector $\tilde\xb$ from (i) standard Gaussian distribution $\cN(0,1)$; (ii) Laplace distribution $\mathrm{Laplace}(0,1)$; (iii) uniform distribution $\cU([0,1])$, and then randomly generate a positive definite matrix $\bSigma$ and obtain a sample $\xb$ from $\cD_{\xb}$ by calculating  $\xb = \bSigma \tilde\xb$. 
% \noindent\textbf{}
% 
% (i) Gaussian distribution; (ii) Laplace distribution; (iii) uniform distribution. 
For each choice of $\cD_{\xb}$, we generate a batch of $K$ input matrices $\{\Zb_{0, k}\}_{k=1}^K$ following exactly the same data generation procedure as in training, except that the feature distribution is replaced by the corresponding O.O.D. distribution. For each setting, we consider feature dimensions $d \in \{20, 30\}$, and fix the in-context sample size as $n=500$.

We evaluate the performance of the looped transformers in in-context weight prediction. For better comparison, we also report the results from the iterates of NGD and standard GD under the same experimental settings. The results are given in Figure~\ref{fig:eval_gaussian}. 
Across all different settings, the discrepancy between the prediction produced by looped transformers and the ground-truth classifier $\btheta^*$ consistently decays to a small value as the number of layers $L$ increases, validating the capacities of deep transformers in solving in-context weight prediction. In addition, we observe that the hidden-layer outputs of the looped transformers remain very close to the iterates of NGD throughout the entire process, achieving nearly identical performance in in-context weight prediction and consistently outperforming standard GD. These observations further support our theoretical findings that multi-layer transformers can efficiently solve in-context weight prediction in in-context logistic regression via NGD. %, and they highlight the advantages of our framework compared to~\citet{bai2024transformers}.

\subsection{Training of multi-layer looped transformers}
In this section, we consider training a multi-layer looped transformer from scratch. We consider directly using the ground truth vector $\btheta^*\in \SSS^{d-1}$ to supervise the training, i.e. the training loss is defined as $\EE\big[\big\|\tfrac{\btheta_L}{\|\btheta_L\|_2} - \btheta^*\big\|_2^2\big]$. Similar to the previous section, we consider using online gradient descent to minimize this training loss. In addition, each batch of inputs matrices $\{\Zb_{0, k}\}_{k=1}^K$ follows the same generation process with $\cD_{\xb}$ being a Gaussian distribution with a randomly generated positive definite covariance matrix $\bSigma$. We set the in-context sample size and feature dimension as $(n,d)=(60,20)$. The experiments are conducted under three sets of the model depth $L\in\{5, 10, 20\}$.

% use the same training strategy (Online GD, data generation process) to minimize this training loss, and the depth $L\in\{5, 10, 20\}$.

The results are given in Figure~\ref{fig:training_ete}. Since the models are trained in an online fashion, the training loss itself serves as a direct measure of the generalization performance for in-context weight prediction. Figure~\ref{subfig:training_loss_ete} shows that deeper models achieve lower training loss and converge faster, indicating that increasing the depth significantly improves the quality of in-context weight prediction. This behavior is consistent with our theoretical analysis, as the depth $L$ corresponds to the number of iterations of NGD. Moreover, Figures~\ref{subfig:heatmap_v_20},~\ref{subfig:heatmap_w_20} display the heatmaps of the learned parameter matrices $\Vb$ and $\Wb$ of $20$-layer looped transformer. The clear block-diagonal and structured patterns closely match the pattern predicted in Theorem~\ref{thm:normalized_gd}. These results empirically demonstrate that even when trained from scratch, deep looped transformers naturally learn the parameter structures required to implement in-context logistic regression.

\section{Conclusions and limitations}
% This work provides a comprehensive analysis of how transformers with softmax attention perform ICL on linear classification data. Specifically, we construct a class of softmax transformers capable of performing in-context logistic regression. We demonstrate that these transformers can be obtained by training a single-layer model and recurrently applying the trained layer. Furthermore, we establish an O.O.D. generalization bound for the trained model in in-context weight prediction. Experimental results back up our theoretical findings, highlighting the pivotal role of transformers' depth in ICL. 
% While we observe in experiments that direct, end-to-end training of a multi-layer looped transformer can also give a model that matches our theoretical construction well, our theoretical training analysis can not directly extend to this setting, which presents a limitation. We believe establishing training guarantees for this setting is an interesting and promising further work direction. 

This work provides a comprehensive analysis of how transformers with softmax attention perform ICL on linear classification data. Specifically, we construct a class of softmax transformers capable of performing in-context logistic regression. We demonstrate that these transformers can be obtained by training a single-layer model and recurrently applying the trained layer. Furthermore, we establish an O.O.D. generalization bound for the trained model in in-context weight prediction. Experimental results back up our theoretical findings, highlighting the pivotal role of transformers' depth in ICL. 

There are several limitations of our analysis. First, our theory focuses on in-context linear classification with exponential loss, which is a simplified setting compared with the broad range of tasks and data distributions encountered by modern transformers. Nevertheless, this setting allows us to isolate the role of softmax attention and rigorously characterize a nontrivial algorithmic mechanism, namely the implementation and learning of normalized gradient descent. Second, our expressive-power result is established for an attention-only construction. Although this does not invalidate the conclusion for richer architectures, since additional components such as MLP layers, gated attention, and positional encodings can be parameterized so as to preserve the same input-output mapping, our analysis does not fully characterize how these components interact with the NGD mechanism during training. Lastly, while we observe in experiments that direct, end-to-end training of a multi-layer looped transformer can also give a model that matches our theoretical construction well, our theoretical training analysis currently can not directly extend to this setting. Addressing these limitations, including extending the analysis to richer architectures, broader task and data settings, and end-to-end training of multi-layer transformers, is an interesting and promising direction for future work.

\section*{Acknowledgments}
We would like to thank the anonymous reviewers and area chairs for their helpful comments.
Yuan Cao is supported in part by NSFC 12301657, Hong Kong RGC ECS 27308624, and Hong Kong RGC GRF 17301825.

\newpage
\appendix
\section{Embedding concatenated inputs into $y_i\cdot\xb_i$}
\label{app:embedding_trick}

In this section, we show that the embedding vector $\zb_i=y_i\cdot\xb_i$ used in Eq.~\eqref{eq:def_embedding} can be exactly obtained from the concatenated input $[\xb_i^\top,y_i]^\top$ through a standard embedding layer. Therefore, our analysis also applies to the setting where each in-context example is provided in the form $[\xb_i^\top,y_i]^\top$.

\begin{lemma}
\label{lemma:concat_to_product_embedding}
Suppose $y\in\{-1,1\}$ and $\|\xb\|_\infty\le M$. Let $\mathtt{ReLU}(\cdot)$ denote the ReLU activation applied entrywise. Define
\[
    f(\tilde{\xb};\Wb_1,\bb_1,\Wb_2)
    =
    \Wb_2 \mathtt{ReLU}(\Wb_1\tilde{\xb}+\bb_1),
    \qquad
    \tilde{\xb}=
    \begin{bmatrix}
        \xb\\
        y
    \end{bmatrix}
    \in \RR^{d+1},
\]
where
\[
    \Wb_1
    =
    \begin{bmatrix}
        \Ib_d & M\cdot\mathbf{1}_d\\
        -\Ib_d & M\cdot\mathbf{1}_d\\
        \Ib_d & -M\cdot\mathbf{1}_d\\
        -\Ib_d & -M\cdot\mathbf{1}_d
    \end{bmatrix}
    \in \RR^{4d\times (d+1)},
    \qquad
    \bb_1=-M\cdot\mathbf{1}_{4d},
\]
and
\[
    \Wb_2=
    \begin{bmatrix}
        \Ib_d & -\Ib_d & -\Ib_d & \Ib_d
    \end{bmatrix}
    \in \RR^{d\times 4d}.
\]
Then
\[
    f(\tilde{\xb};\Wb_1,\bb_1,\Wb_2)=y\cdot\xb.
\]
\end{lemma}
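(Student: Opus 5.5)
The plan is a direct coordinatewise computation. I will write out the hidden pre-activation $\Wb_1\tilde\xb+\bb_1\in\RR^{4d}$ block by block. For the $j$-th coordinate, the four blocks give
\begin{align*}
u_j^{(1)} = x_j + M y - M,\quad u_j^{(2)} = -x_j + M y - M,\quad u_j^{(3)} = x_j - M y - M,\quad u_j^{(4)} = -x_j - M y - M .
\end{align*}
The output's $j$-th coordinate is then $\mathtt{ReLU}(u_j^{(1)})-\mathtt{ReLU}(u_j^{(2)})-\mathtt{ReLU}(u_j^{(3)})+\mathtt{ReLU}(u_j^{(4)})$. The whole argument reduces to evaluating this scalar expression, so I will split into the two cases $y=1$ and $y=-1$.

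\emph{Case $y=1$.} The four arguments become $x_j$, $-x_j$, $x_j-2M$ and $-x_j-2M$. Since $|x_j|\le M$, the last two are at most $-M\le 0$, so their ReLUs vanish. The output is therefore $\mathtt{ReLU}(x_j)-\mathtt{ReLU}(-x_j)=x_j=y x_j$.

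\emph{Case $y=-1$.} The arguments become $x_j-2M$, $-x_j-2M$, $x_j$ and $-x_j$. The first two are nonpositive and vanish. The output is $-\mathtt{ReLU}(x_j)+\mathtt{ReLU}(-x_j)=-x_j=y x_j$.

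Combining the two cases coordinatewise gives $f(\tilde\xb;\Wb_1,\bb_1,\Wb_2)=y\cdot\xb$.

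There is no real obstacle; the only point needing care is the boundedness assumption. The condition $\|\xb\|_\infty\le M$ is exactly what guarantees that the shifted arguments $\pm x_j-2M$ are nonpositive, so the unwanted branches are switched off. Everything else rests on the identity $\mathtt{ReLU}(a)-\mathtt{ReLU}(-a)=a$.
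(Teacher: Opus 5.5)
Your proposal is correct and follows essentially the same route as the paper: compute the pre-activation blockwise, split into the cases $y=1$ and $y=-1$, use $\|\xb\|_\infty\le M$ to switch off the two branches shifted by $-2M$, and finish with $\mathtt{ReLU}(a)-\mathtt{ReLU}(-a)=a$. The only difference is that you work coordinate by coordinate rather than with the stacked block vectors, which is purely cosmetic.
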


\begin{proof}[Proof of Lemma~\ref{lemma:concat_to_product_embedding}]
We prove the claim by considering the two possible values of $y$.

First, suppose $y=1$. Then
\[
    \Wb_1
    \begin{bmatrix}
        \xb\\
        1
    \end{bmatrix}
    +\bb_1
    =
    \begin{bmatrix}
        \xb\\
        -\xb\\
        \xb-2M\cdot\mathbf{1}_d\\
        -\xb-2M\cdot\mathbf{1}_d
    \end{bmatrix}.
\]
Since $\|\xb\|_\infty\le M$, we have
\[
    \xb-2M\cdot\mathbf{1}_d\le \mathbf{0}_d,
    \qquad
    -\xb-2M\cdot\mathbf{1}_d\le \mathbf{0}_d
\]
entrywise. Hence
\[
    \mathtt{ReLU}\!\left(
    \Wb_1
    \begin{bmatrix}
        \xb\\
        1
    \end{bmatrix}
    +\bb_1
    \right)
    =
    \begin{bmatrix}
        \mathtt{ReLU}(\xb)\\
        \mathtt{ReLU}(-\xb)\\
        \mathbf{0}_d\\
        \mathbf{0}_d
    \end{bmatrix}.
\]
Therefore,
\[
    f\!\left(
    \begin{bmatrix}
        \xb\\
        1
    \end{bmatrix};
    \Wb_1,\bb_1,\Wb_2
    \right)
    =
    \mathtt{ReLU}(\xb)-\mathtt{ReLU}(-\xb)
    =
    \xb.
\]
Here we used the identity $\mathtt{ReLU}(a)-\mathtt{ReLU}(-a)=a$ entrywise.

Next, suppose $y=-1$. Then
\[
    \Wb_1
    \begin{bmatrix}
        \xb\\
        -1
    \end{bmatrix}
    +\bb_1
    =
    \begin{bmatrix}
        \xb-2M\cdot\mathbf{1}_d\\
        -\xb-2M\cdot\mathbf{1}_d\\
        \xb\\
        -\xb
    \end{bmatrix}.
\]
Again, since $\|\xb\|_\infty\le M$, the first two blocks are entrywise non-positive. Thus
\[
    \mathtt{ReLU}\!\left(
    \Wb_1
    \begin{bmatrix}
        \xb\\
        -1
    \end{bmatrix}
    +\bb_1
    \right)
    =
    \begin{bmatrix}
        \mathbf{0}_d\\
        \mathbf{0}_d\\
        \mathtt{ReLU}(\xb)\\
        \mathtt{ReLU}(-\xb)
    \end{bmatrix}.
\]
Therefore,
\[
    f\!\left(
    \begin{bmatrix}
        \xb\\
        -1
    \end{bmatrix};
    \Wb_1,\bb_1,\Wb_2
    \right)
    =
    -\mathtt{ReLU}(\xb)+\mathtt{ReLU}(-\xb)
    =
    -\xb.
\]
Combining the two cases gives
\[
    f\!\left(
    \begin{bmatrix}
        \xb\\
        y
    \end{bmatrix};
    \Wb_1,\bb_1,\Wb_2
    \right)
    =
    y\cdot\xb,
\]
for every $y\in\{-1,1\}$ and every $\xb$ satisfying $\|\xb\|_\infty\le M$.
\end{proof}

Consequently, if the input examples are given as concatenated vectors $[\xb_i^\top,y_i]^\top$, an embedding layer of the above form maps them exactly to $\zb_i=y_i\cdot\xb_i$ before they are fed into the attention layers. Since embedding layers are standard components of transformer architectures, the use of $\zb_i=y_i\cdot\xb_i$ in Eq.~\eqref{eq:def_embedding} does not prevent the result from applying to the concatenated-input formulation.

\section{Proof of Theorem~\ref{thm:normalized_gd}}\label{appendix:proof_ngd}

In this section, we provide a detailed proof for Theorem~\ref{thm:normalized_gd}. As we have mentioned in the discussions of Theorem~\ref{thm:normalized_gd}, the parameterization of $\Wb_l$ can be further relaxed to the form as $\Wb_l =
\begin{bmatrix}
\mathbf{0}_{d\times d} & -\tilde\beta\cdot\Ib_d \\
\Ab_{1,l} & \Ab_{2,l}
\end{bmatrix}$, with $\tilde \beta$ being any positive scalar. Therefore, instead of directly proving Theorem~\ref{thm:normalized_gd}, we prove a generalized version which allows $\Wb_l$ to be parameterized as above. The presentation and detailed proof is provided in the following.

\begin{theorem}[Generalized version of Theorem~\ref{thm:normalized_gd}]\label{thm:normalized_gd_general}
    Consider an $L$-layer transformers $\mathtt{TF}(\cdot)$ as defined in~\eqref{def:tf_multiple}, whose parameter matrices $(\Vb_{0:L-1}, \Wb_{0:L-1})$ satisfy that for $l= 0, 1, \ldots, L-1$,
    \begin{align*}
        \Vb_l =\begin{bmatrix}
            \mathbf{0}_{d\times d}& \mathbf{0}_{d\times d}\\
            \tilde\alpha_{l}\cdot\Ib_d& \mathbf{0}_{d\times d}
        \end{bmatrix} \ , \
        \Wb_l =\begin{bmatrix}
            \mathbf{0}_{d\times d}&-\tilde\beta\cdot\Ib_d\\
            \Ab_{1, l}&\Ab_{2, l}
        \end{bmatrix}, %\ \text{for} \ l= 0, 1, \ldots, L-1,
    \end{align*}
    where $\Ab_{1, l}, \Ab_{2, l}$ are arbitrary $d\times d$-dimensional real matrices, and $\tilde \alpha_l,\tilde\beta >0$. 
    %\CC{For each $l\in [L]$, let $\btheta_l$ represent the vector in the $l$-th layer output $\Zb_l$ located at the same position as $\btheta_0$ in $\Zb_0$, i.e. $\btheta_l = [\Zb_l]_{d+1:2d, n+1}$}. 
    For each $l \in [L]$, let $\btheta_l = [\Zb_l]_{d+1:2d, n+1}$.
    Then the sequence $\{\btheta_l\}_{l=1}^L$ corresponds to iterates obtained by implementing $L$-steps normalized gradient descent on $\tilde\cL_{\mathrm{ICL}}(\btheta)$: for $l = 0,1,\ldots,L-1$, 
\begin{align}\label{eq:updates_ngd_general}
    \btheta_{l+1}
    = \btheta_{l}
      - \frac{\tilde\alpha_l}{\tilde \beta}\frac{\nabla \tilde\cL_{\mathrm{ICL}}(\btheta_l)}{\tilde\cL_{\mathrm{ICL}}(\btheta_l)}, %\qquad 
\end{align}
where $\tilde\alpha_l/\tilde\beta_l$ denotes the learning rate of $l$-th iterative step. Specifically, the $\tilde\cL_{\mathrm{ICL}}$ denotes the in-context loss defined on the rescaled context dataset $\{(\tilde \beta\cdot \xb_i, y_i)\}_{i=1}^n$ as 
\begin{align*}
\tilde\cL_{\mathrm{ICL}}(\btheta) = \frac{1}{n}\sum_{i=1}^n \ell(\la \btheta, \tilde\beta y_i\cdot\xb_{i}\ra).
\end{align*}
\end{theorem}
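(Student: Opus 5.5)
The plan is a direct one-layer computation followed by induction on $l$. The key structural fact is that each layer leaves the first $n$ columns unchanged and leaves the top block of column $n+1$ equal to zero. So every $\Zb_l$ has the form \eqref{eq:def_embedding} with $\btheta_0$ replaced by $\btheta_l$. It then suffices to show that one layer maps $\btheta_l$ to the right-hand side of \eqref{eq:updates_ngd_general}.

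Fix a layer $l$ and write $\Zb_l$ with columns $[\zb_i;\mathbf{0}_d]$ for $i\in[n]$ and $[\mathbf{0}_d;\btheta_l]$ for column $n+1$. First compute the value product. Since $\Vb_l$ has only the bottom-left block $\tilde\alpha_l\Ib_d$, we get
\begin{align*}
\Vb_l\Zb_l=\begin{bmatrix}\mathbf{0}_d&\cdots&\mathbf{0}_d&\mathbf{0}_d\\ \tilde\alpha_l\zb_1&\cdots&\tilde\alpha_l\zb_n&\mathbf{0}_d\end{bmatrix}.
\end{align*}
The top $d$ rows of $\mathtt{SA}$ therefore vanish, and the top block is preserved by the residual connection. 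The bottom block of column $j$ of the output is $\tilde\alpha_l\sum_{i=1}^n \zb_i\, s_{ij}$, where $s_{ij}$ are the softmax weights.

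To handle the first $n$ columns, I would split the residual update as $\Zb_{l+1}=\Zb_l+\Vb_l\Zb_l\,\mathrm{softmax}(\cdot)$. The bottom block of columns $1,\ldots,n$ does pick up $\tilde\alpha_l\sum_i\zb_i s_{ij}$, which is nonzero. This is the main obstacle: the invariance of the embedding form is not exact for the context columns.

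The resolution I would try first is to check the convention that $\mathrm{softmax}$ acts column-wise and that the mask $\Mb$ blocks only the query row $n+1$. Then columns $1,\ldots,n$ are indeed perturbed in their bottom block. The saving observation is that the update for column $n+1$ depends only on the top blocks $\zb_i$ of columns $1,\ldots,n$ and the bottom block of column $n+1$. This holds because the $\Wb_l$-score for the pair (key $i$, query $n+1$) is
\begin{align*}
[\zb_i;\mathbf{b}_i]^\top\Wb_l[\mathbf{0}_d;\btheta_l]=-\tilde\beta\langle\zb_i,\btheta_l\rangle+\mathbf{b}_i^\top\Ab_{2,l}\btheta_l .
\end{align*}
Here $\mathbf{b}_i$ denotes the bottom block of context column $i$. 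This score is clean only if $\mathbf{b}_i=\mathbf{0}$. So either I argue that the bottom blocks of the context columns stay zero, or I must restrict or absorb $\Ab_{2,l}$.

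I would examine this carefully. If the paper's intended mask zeroes the outputs of context columns, so that only the query column attends, then $\mathbf{b}_i\equiv\mathbf{0}$ by induction and the score reduces to $-\tilde\beta\langle\zb_i,\btheta_l\rangle$. Otherwise I would note the needed restriction.

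Granting $\mathbf{b}_i=\mathbf{0}$, the softmax weights for the query column are
\begin{align*}
s_{i,n+1}=\frac{e^{-\tilde\beta\langle\zb_i,\btheta_l\rangle}}{\sum_{i'}e^{-\tilde\beta\langle\zb_{i'},\btheta_l\rangle}}.
\end{align*}
The masked entry $n+1$ gets weight zero, and the blocks $\Ab_{1,l}$ and $\Ab_{2,l}$ drop out entirely, which explains why they are arbitrary. Now compare with the rescaled loss, for which
\begin{align*}
\nabla\tilde\cL_{\mathrm{ICL}}(\btheta)=-\frac{1}{n}\sum_i\tilde\beta\zb_i e^{-\tilde\beta\langle\zb_i,\btheta\rangle}
\quad\text{and}\quad
\tilde\cL_{\mathrm{ICL}}(\btheta)=\frac1n\sum_i e^{-\tilde\beta\langle\zb_i,\btheta\rangle}.
\end{align*}
This gives the identity $\sum_i\zb_i s_{i,n+1}=-\tilde\beta^{-1}\nabla\tilde\cL_{\mathrm{ICL}}(\btheta_l)/\tilde\cL_{\mathrm{ICL}}(\btheta_l)$. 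Hence $\btheta_{l+1}=\btheta_l+\tilde\alpha_l\sum_i\zb_i s_{i,n+1}$ is exactly \eqref{eq:updates_ngd_general}.

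Induction over $l=0,\ldots,L-1$ then completes the proof. Setting $\tilde\beta=1$ recovers Theorem~\ref{thm:normalized_gd}.
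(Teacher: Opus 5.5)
There is a genuine gap, and it is exactly the one you flagged. You assert that the score $-\tilde\beta\langle\zb_i,\btheta_l\rangle+\mathbf{b}_i^\top\Ab_{2,l}\btheta_l$ ``is clean only if $\mathbf{b}_i=\mathbf{0}$.'' That is not right. Moreover, the hypothesis $\mathbf{b}_i\equiv\mathbf{0}$ that you then grant is false for the mask actually used. $\Mb$ only sends the key row $n+1$ to $-\infty$, so the context columns really are updated. Already at layer $0$, every context-to-context score vanishes, because the top-left block of $\Wb_0$ is zero and the bottom blocks are zero. Each context column therefore receives uniform weights $1/n$, and its bottom block becomes $\tilde\alpha_0\frac1n\sum_i\zb_i$, which is nonzero in general. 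So beyond the first layer your argument goes through only when $\Ab_{2,l}=\mathbf{0}$, whereas the statement allows arbitrary $\Ab_{1,l},\Ab_{2,l}$.

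The missing idea is a stronger invariant, which the paper proves by induction: the bottom blocks of all context columns remain \emph{equal to one another},
\[
\Zb_l=\begin{bmatrix}\zb_1&\cdots&\zb_n&\mathbf{0}_d\\ \tilde\btheta_l&\cdots&\tilde\btheta_l&\btheta_l\end{bmatrix},
\]
where $\tilde\btheta_l$ is the same NGD recursion started from $\tilde\btheta_0=\mathbf{0}_d$. Given this invariant, the scores are as follows.

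For a context query column $j$, the scores over keys $i$ are $\tilde\btheta_l^\top\Ab_{1,l}\zb_j-\tilde\beta\langle\zb_i,\tilde\btheta_l\rangle+\tilde\btheta_l^\top\Ab_{2,l}\tilde\btheta_l$.

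For the query column $n+1$, they are $-\tilde\beta\langle\zb_i,\btheta_l\rangle+\tilde\btheta_l^\top\Ab_{2,l}\btheta_l$.

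In both cases, every term involving $\Ab_{1,l},\Ab_{2,l}$ is constant over the key index $i$, so it cancels in the column-wise softmax. Consequently every context column receives the same weights $e^{-\tilde\beta\langle\zb_i,\tilde\btheta_l\rangle}/\sum_{i'}e^{-\tilde\beta\langle\zb_{i'},\tilde\btheta_l\rangle}$, which propagates the invariant to $\tilde\btheta_{l+1}$. The query column receives exactly your weights $s_{i,n+1}$, and your identity relating $\sum_i\zb_i s_{i,n+1}$ to $-\tilde\beta^{-1}\nabla\tilde\cL_{\mathrm{ICL}}(\btheta_l)/\tilde\cL_{\mathrm{ICL}}(\btheta_l)$ then completes the step.

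The rest of your computation is correct: the value product, the preservation of the top blocks, the masked key, and the gradient identity.
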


It is evident that Theorem~\ref{thm:normalized_gd_general} cover Theorem~\ref{thm:normalized_gd} when $\tilde\beta=1$. We first introduce a notation that $\Zb_{\mathrm{context}} = [\zb_1, \ldots, \zb_n]\in \RR^{d\times n}$, where $\zb_i = y_i\cdot \xb_i$. Now, we are ready to prove Theorem~\ref{thm:normalized_gd_general}.
\begin{proof}[Proof of Theorem~\ref{thm:normalized_gd_general}]
     We define $\tilde \btheta_l$ as the iterates of normalized gradient descent with 0 initialization, namely,
     \begin{align*}
         \tilde \btheta_{l+1} =  \tilde\btheta_{l}
      - \tilde\alpha\,\frac{\nabla \tilde\cL_{\mathrm{ICL}}(\tilde\btheta_l)}{\tilde\cL_{\mathrm{ICL}}(\tilde\btheta_l)},
    \qquad l = 0,1,\ldots,L-1,
     \end{align*}
     where $\tilde\btheta_0 =\mathbf{0}_d$. In simple terms, the only distinction between $\tilde \btheta_l$ and $\btheta_l$ is that they have different initializations. If $\btheta_{0} = \mathbf{0}_d$, then  $\tilde \btheta_{l} = \btheta_{l}$ for all $l\in [L]$.
In the next, we prove that $\Zb_l$ can be formulated as 
    \begin{align}\label{eq:normalized_gd_I}
        \Zb_l = \begin{bmatrix}
            \Zb_{\mathrm{context}} & \mathbf{0}_{d}\\
            \tilde\btheta_l\mathbf{1}_n^\top & \btheta_l
        \end{bmatrix}.
    \end{align}
    % \begin{align}\label{eq:normalized_gd_I}
    %     \Zb_l = \begin{bmatrix}
    %         \Zb_{\mathrm{context}} & \mathbf{0}_{d}\\
    %         (\btheta_l-\btheta_0)\mathbb{1}_n & \btheta_l-\btheta_0&\ldots&\btheta_l-\btheta_0 & \btheta_l
    %     \end{bmatrix}.
    % \end{align}
    Then it is evident that Theorem~\ref{thm:normalized_gd} follows directly from \eqref{eq:normalized_gd_I}, and we proceed by induction. Notice that the initial condition $\Zb_0$ satisfies \eqref{eq:normalized_gd_I}. For the inductive hypothesis, we assume that $\Zb_l$ satisfies \eqref{eq:normalized_gd_I} and then demonstrate that this property is preserved for $\Zb_{l+1}$.
    % Since $\Zb_0$ satisfies ~\eqref{eq:normalized_gd_I}, in the next, we assume that $\Zb_l$ satisfies ~\eqref{eq:normalized_gd_I} and attempt to demonstrate it still holds for $\Zb_{l+1}$.
    For $\Zb_l$ formulated in~\eqref{eq:normalized_gd_I}, we can calculate that 
    \begin{align*}
        \Zb_l^\top \Wb \Zb_l =& \begin{bmatrix}
            \Zb_{\mathrm{context}}^\top & \mathbf{1}_n\tilde\btheta_l^\top\\
            \mathbf{0}_{d}^\top & \btheta_l^\top
        \end{bmatrix}  \begin{bmatrix}
            \mathbf{0}_{d\times d}&-\tilde\beta\cdot\Ib_d\\
            \Ab_1&\Ab_2
        \end{bmatrix} \begin{bmatrix}
            \Zb_{\mathrm{context}} & \mathbf{0}_{d}\\
            \tilde\btheta_l\mathbf{1}_n^\top & \btheta_l
        \end{bmatrix} \\
        =&\begin{bmatrix}
        \mathbf{1}_n\tilde\btheta_l^\top \Ab_1& -\tilde\beta\cdot\Zb_{\mathrm{context}}^\top + \mathbf{1}_n\tilde\btheta_l^\top \Ab_2\\
        \btheta_l^\top \Ab_1 &  \btheta_l^\top \Ab_2
        \end{bmatrix} \begin{bmatrix}
            \Zb_{\mathrm{context}} & \mathbf{0}_{d}\\
            \tilde\btheta_l\mathbf{1}_n^\top & \btheta_l
        \end{bmatrix}\\
        =&\begin{bmatrix}
        \mathbf{1}_n\tilde\btheta_l^\top (\Ab_1\Zb_{\mathrm{context}} + \Ab_2\tilde\btheta_l\mathbf{1}_n^\top)  - \tilde\beta\cdot\Zb_{\mathrm{context}}^\top\tilde\btheta_l\mathbf{1}_n^\top &  -\tilde\beta\cdot\Zb_{\mathrm{context}}^\top\btheta_l + \mathbf{1}_n\tilde\btheta_l^\top\Ab_2\btheta_l\\
        \btheta_l^\top \Ab_1 \Zb_{\mathrm{context}} + \btheta_l^\top \Ab_2\tilde\btheta_l\mathbf{1}_n^\top&  \btheta_l^\top \Ab_2\btheta_l
        \end{bmatrix}.
    \end{align*}
    Notice that the $\mathrm{softmax}$ operation in the self-attention layer~\eqref{eq:def_tf_one_layer} is applied column-wise. Consequently, all rank-one blocks proportional to $\mathbf{1}_n$ yield identical values across coordinates, and hence can be omitted as they do not affect the $\mathrm{softmax}$ output. Moreover, due to the presence of the mask matrix $\Mb$, the attention weights corresponding to the last query column remain zero. Combining all these observations, we can calculate that 
    \begin{align*}
    \mathrm{softmax}(\Zb_l^\top \Wb \Zb_l +  \Mb) = 
    \begin{bmatrix}
        \frac{e^{-\la\tilde\beta\cdot\zb_1, \tilde\btheta\ra}}{\sum_{i=1}^ne^{-\la\tilde\beta\cdot\zb_i, \tilde\btheta\ra}}\mathbf{1}_n^\top & \frac{e^{-\la\tilde\beta\cdot\zb_1, \btheta\ra}}{\sum_{i=1}^ne^{-\la\tilde\beta\cdot\zb_i, \btheta\ra}}\\
        \frac{e^{-\la\tilde\beta\cdot\zb_2, \tilde\btheta\ra}}{\sum_{i=1}^ne^{-\la\tilde\beta\cdot\zb_i, \tilde\btheta\ra}}\mathbf{1}_n^\top & \frac{e^{-\la\tilde\beta\cdot\zb_2, \btheta\ra}}{\sum_{i=1}^ne^{-\la\tilde\beta\cdot\zb_i, \btheta\ra}}\\
        \ldots &\ldots \\
        \frac{e^{-\la\tilde\beta\cdot\zb_n, \tilde\btheta\ra}}{\sum_{i=1}^ne^{-\la\tilde\beta\cdot\zb_i, \tilde\btheta\ra}}\mathbf{1}_n^\top & \frac{e^{-\la\tilde\beta\cdot\zb_n, \btheta\ra}}{\sum_{i=1}^ne^{-\la\tilde\beta\cdot\zb_i, \btheta\ra}}\\
         \mathbf{0}_n^\top & 0
    \end{bmatrix} = \begin{bmatrix}
        -\frac{\ell'(\la\tilde\beta\cdot\zb_1, \tilde\btheta\ra)}{\tilde\cL_{\mathrm{ICL}}(\tilde\btheta)}\mathbf{1}_n^\top & -\frac{\ell'(\la\tilde\beta\cdot\zb_1, \btheta\ra)}{\tilde\cL_{\mathrm{ICL}}(\btheta)}\\
        -\frac{\ell'(\la\tilde\beta\cdot\zb_2, \tilde\btheta\ra)}{\tilde\cL_{\mathrm{ICL}}(\tilde\btheta)}\mathbf{1}_n^\top & -\frac{\ell'(\la\tilde\beta\cdot\zb_2, \btheta\ra)}{\tilde\cL_{\mathrm{ICL}}(\btheta)}\\
        \ldots &\ldots \\
        -\frac{\ell'(\la\tilde\beta\cdot\zb_n, \tilde\btheta\ra)}{\tilde\cL_{\mathrm{ICL}}(\tilde\btheta)}\mathbf{1}_n^\top & -\frac{\ell'(\la\tilde\beta\cdot\zb_n, \btheta\ra)}{\tilde\cL_{\mathrm{ICL}}(\btheta)}\\
        \mathbf{0}_n^\top & 0
    \end{bmatrix}.
    \end{align*}
    Based on this result, it can be directly calculated that 
    \begin{align*}
    \Zb_{l+1} = \Zb_l +
        \mathtt{SA}(\Zb_l;\Vb, \Wb) =\begin{bmatrix}
            \Zb_{\mathrm{context}} & \mathbf{0}_{d}\\
            \tilde\btheta_l\mathbf{1}_n^\top & \btheta_l
        \end{bmatrix} + \begin{bmatrix}
            \mathbf{0}_{d\times n}& \mathbf{0}_{d}\\
            -\frac{\tilde\alpha_l}{\tilde\beta}\frac{\nabla\tilde\cL_{\mathrm{ICL}}(\tilde \btheta)}{\tilde\cL_{\mathrm{ICL}}(\tilde \btheta)}\mathbf{1}_n^\top & -\frac{\tilde\alpha_l}{\tilde \beta}\frac{\nabla\tilde\cL_{\mathrm{ICL}}(\btheta)}{\tilde\cL_{\mathrm{ICL}}(\btheta)}
        \end{bmatrix} = \begin{bmatrix}
            \Zb_{\mathrm{context}} & \mathbf{0}_{d}\\
            \tilde\btheta_{l+1}\mathbf{1}_n^\top & \btheta_{l+1}
        \end{bmatrix},
    \end{align*}
    where the last equality holds by the iterative rules for $\btheta$ and  $\tilde\btheta$. We demonstrate that~\eqref{eq:normalized_gd_I} holds for $\Zb_{l+1}$ and hence complete the proof.  
    % Notice that the $\mathrm{softmax}$ in the self-attention layer~\eqref{eq:def_tf_one_layer} is column-wisely applied. Consequently, all rank one blocks proportional to $\mathbf{1}_n$ take the identical values for all coordinates and hence can be omitted as they have no effects on the softmax results.  
\end{proof}
\section{Proof of Lemma~\ref{lemma:parameter_pattern} and  Theorem~\ref{thm:main_result_training}}\label{appendix:proof_training}
In this section, we provide a comprehensive proof for Lemma~\ref{lemma:parameter_pattern} and  Theorem~\ref{thm:main_result_training}. We first introduce a notation regarding the blocks inner parameter matrix $\Vb$ and $\Wb$. To better align with the parameter patterns defined in Lemma~\ref{lemma:parameter_pattern}, we express $\Vb$ and $\Wb$ into the form of four blocks as 
\begin{align*}
    \Vb =
        \begin{bmatrix}
        \Vb_{11} & \Vb_{12} \\
        \Vb_{21} & \Vb_{22}
        \end{bmatrix},
        \qquad
        \Wb =
        \begin{bmatrix}
        \Wb_{11} & \Wb_{12} \\
        \Wb_{21} & \Wb_{22}
        \end{bmatrix},
\end{align*}
where $\Vb_{k_1, k_2}, \Wb_{k_1, k_2}\in \RR^{d\times d}$ for all $k_1, k_2 \in [2]$.
 Since Lemma~\ref{lemma:parameter_pattern} and Theorem~\ref{thm:main_result_training} contain too many results, we separate these contents into several lemmas and theorems and prove them respectively. Specifically, the conclusion of Lemma~\ref{lemma:parameter_pattern} is separated into Lemma~\ref{lemma:gradient_formula} and~\ref{lemma:update_C_1_C_2}.  Lemma~\ref{lemma:gradient_formula} provides the gradient calculations of~\eqref{eq:gd_updating_v} and~\eqref{eq:gd_updating_v}, and demonstrates that except $\Vb_{21}$ and $\Wb_{12}$,  other blocks of $\Vb$ and $\Wb$ always remain $\mathbf{0}$. Lemma~\ref{lemma:update_C_1_C_2} demonstrates that $\Vb_{21}^{(t)}= C_1^{(t)}\Ib_d$ and $\Wb_{12}^{(t)}= -C_2^{(t)}\Ib_d$, and further provides the updating rules for $C_1^{(t)}$ and $C_2^{(t)}$. Theorem~\ref{thm:main_result_training} is separated into Lemma~\ref{lemma:bdd_updte},~\ref{lemma:unique_local_min_in_R}, and Theorem~\ref{thm:linear_convergence}, corresponding to the three conclusion respectively. Now, we start our proof.
 
\begin{lemma}[Restatement of Lemma~\ref{lemma:parameter_pattern}, part I]\label{lemma:gradient_formula}
    For the blocks $\Vb_{11}, \Vb_{12}, \Vb_{22}, \Wb_{11}, \Wb_{21}, \Wb_{22}$, the gradient of loss with respect to them remain zero, implying that $\Vb_{11}^{(t)}, \Vb_{12}^{(t)}, \Vb_{22}^{(t)}, \Wb_{11}^{(t)}, \Wb_{21}^{(t)}, \Wb_{22}^{(t)} = \mathbf{0}_{d\times d}$ for all $t\geq 0$. For $\Vb_{21}$ and $\Wb_{12}$, their gradient can be expressed as
        \begin{align*}
     &\nabla_{\Vb_{21}} \cL_{\mathrm{train}}(\Vb, \Wb) = -\EE\Bigg[\bigg(\frac{\alpha}{n}\sum_{i=1}^ne^{-\la\zb_i,\btheta_0 \ra}\zb_i -\Vb_{21}\sum_{i=1}^n\zb_i\sbb_i\bigg)\sum_{i=1}^n\zb_i^\top \sbb_i\Bigg];\\
     &\nabla_{\Wb_{12}} \cL_{\mathrm{train}}(\Vb, \Wb) = -\EE\Bigg[\Zb_{\mathrm{context}}\Big(\diag(\sbb) - \sbb\sbb^\top\Big)\Zb_{\mathrm{context}}^\top\Vb_{21}^\top\bigg(\frac{\alpha}{n}\sum_{i=1}^ne^{-\la\zb_i,\btheta_0 \ra}\zb_i -\Vb_{21}\sum_{i=1}^n\zb_i\sbb_i\bigg)\btheta_0^\top\Bigg].
\end{align*}
\end{lemma}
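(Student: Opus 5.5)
The plan is to compute the transformer's output $\btheta_1$ for a completely general pair $(\Vb,\Wb)$ and observe that it depends only on the blocks $\Vb_{21}$ and $\Wb_{12}$. Once this is shown, the gradients of $\cL_{\mathrm{train}}$ with respect to all other blocks vanish identically, not merely along the trajectory. Since~\eqref{eq:gd_updating_v} and~\eqref{eq:gd_updating_w} start from $\Vb^{(0)},\Wb^{(0)}=\mathbf{0}_{2d\times 2d}$, those blocks then stay at zero for all $t\ge 0$ by a trivial induction.

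First I handle the attention scores. For $\Zb_0$ of the form~\eqref{eq:def_embedding}, the mask $\Mb$ kills the $(n+1)$-th row, so the last column of $\mathrm{softmax}(\Zb_0^\top\Wb\Zb_0+\Mb)$ is a softmax over the scores $[\zb_i;\mathbf{0}_d]^\top\Wb[\mathbf{0}_d;\btheta_0]=\zb_i^\top\Wb_{12}\btheta_0$, for $i\in[n]$. Writing $\sbb=\mathrm{softmax}(\Zb_{\mathrm{context}}^\top\Wb_{12}\btheta_0)\in\RR^n$ with entries $\sbb_i$, this depends only on $\Wb_{12}$.

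Next I read off the output. The last column of $\Vb\Zb_0\,\mathrm{softmax}(\cdot)$ is $\sum_i \sbb_i\Vb[\zb_i;\mathbf{0}_d]=\sum_i\sbb_i[\Vb_{11}\zb_i;\Vb_{21}\zb_i]$. Taking rows $d+1$ through $2d$ and adding the residual connection gives
\begin{align*}
\btheta_1=\btheta_0+\Vb_{21}\sum_{i=1}^n\sbb_i\zb_i .
\end{align*}
Hence $\btheta_1$ is independent of $\Vb_{11},\Vb_{12},\Vb_{22},\Wb_{11},\Wb_{21},\Wb_{22}$, and the first claim follows.

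For the two remaining blocks, I use $\nabla\cL_{\mathrm{ICL}}(\btheta_0)=-\frac1n\sum_i e^{-\la\zb_i,\btheta_0\ra}\zb_i$ for the exponential loss. This gives the residual
\begin{align*}
\rb:=\btheta_{\mathrm{GD}}-\btheta_1=\frac{\alpha}{n}\sum_{i=1}^n e^{-\la\zb_i,\btheta_0\ra}\zb_i-\Vb_{21}\sum_{i=1}^n\zb_i\sbb_i,
\end{align*}
so that $\cL_{\mathrm{train}}=\EE\|\rb\|_2^2$. Differentiating in $\Vb_{21}$ gives $-2\EE[\rb(\sum_i\zb_i\sbb_i)^\top]$. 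For $\Wb_{12}$, let $\ub=\Zb_{\mathrm{context}}\sbb$. The softmax Jacobian $\diag(\sbb)-\sbb\sbb^\top$ yields $d\ub=\Zb_{\mathrm{context}}(\diag(\sbb)-\sbb\sbb^\top)\Zb_{\mathrm{context}}^\top\,(d\Wb_{12})\,\btheta_0$. Then $d\|\rb\|^2=-2\rb^\top\Vb_{21}\,d\ub$, which in trace form gives $-2\,\Zb_{\mathrm{context}}(\diag(\sbb)-\sbb\sbb^\top)\Zb_{\mathrm{context}}^\top\Vb_{21}^\top\rb\,\btheta_0^\top$. These match the stated formulas up to the constant factor $2$, which I will note is absorbed into $\eta$ (or into the normalization of the loss).

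The only genuinely delicate step is justifying the interchange of gradient and expectation. The softmax weights lie in $[0,1]$ and $\|\btheta_0\|_2=1$. The integrands and their parameter derivatives are therefore bounded, on any bounded parameter set, by polynomials in $\max_i\|\zb_i\|_2$ times $\max_i e^{\|\zb_i\|_2}$. These have finite expectation under $\cN(\mathbf{0},\sigma^2\Ib_d)$, so dominated convergence applies locally uniformly. The interchange is valid, and the computation above completes the proof.
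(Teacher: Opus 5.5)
Your proposal is correct and follows the paper's argument essentially verbatim: the mask reduces the last attention column to $\mathrm{softmax}(\Zb_{\mathrm{context}}^\top\Wb_{12}\btheta_0)$, so $\btheta_1=\btheta_0+\Vb_{21}\Zb_{\mathrm{context}}\sbb$ depends only on $\Vb_{21}$ and $\Wb_{12}$, and both gradients come from the same differential computation. Your remark about the missing factor $2$ is accurate, since the paper's differential also drops it. Your dominated-convergence justification for exchanging $\nabla$ and $\EE$ is a step the paper omits.
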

\begin{proof}[Proof of Lemma~\ref{lemma:gradient_formula}]
Notice that the mask matrix $\Mb$ prevents attending to the last query column, resulting in that 
\begin{align*}
    \mathrm{softmax}(\Zb^\top \Wb [\mathbf{0}_d, \btheta_0]^\top+\Mb) =&\big[\mathrm{softmax}\big([\Zb_{\mathrm{context}}^\top, \mathbf{0}_{n\times d}] \Wb [\mathbf{0}_d, \btheta_0]^\top\big), 0\big]\\
    =& \big[\mathrm{softmax}\big(\Zb_{\mathrm{context}}^\top \Wb_{12} \btheta_0\big), 0\big]\in\RR^{n+1},
\end{align*}
where the last equation is directly simplified as the zero blocks in the quadratic form have no effect on the final results. By denoting $\sbb = \mathrm{softmax}\big(\Zb_{\mathrm{context}}^\top \Wb_{12} \btheta_0\big) \in \RR^{d}$, we can rewrite that 
\begin{align*}
    [\mathtt{SA}(\Zb_0, \Vb, \Wb)]_{d+1:2d, n+1} = \Vb_{21}\Zb_{\mathrm{context}}\sbb + \Vb_{22}\mathbf{0}_{d\times n}\sbb = \Vb_{21}\Zb_{\mathrm{context}}\sbb.
\end{align*}
This further implies that 
\begin{align*}
    \btheta_1 = [\mathtt{TF}(\Zb_0; \Vb, \Wb\}]_{d+1:2d, n+1} = \btheta_0 + [\mathtt{SA}(\Zb_0, \Vb, \Wb)]_{d+1:2d, n+1} = \btheta_0 + \Vb_{21}\Zb_{\mathrm{context}}\sbb.
\end{align*}
Following a similar calculations in~\citet{wangtransformers24}, we can obtain that 
\begin{align*}
    \mathrm{d}\cL_{\mathrm{train}}(\Vb, \Wb) =& \EE\Bigg[-\bigg(\frac{\alpha}{n}\sum_{i=1}^ne^{-\la\zb_i,\btheta_0 \ra}\zb_i-\Vb_{21}\Zb_{\mathrm{context}}\sbb\bigg)^\top\mathrm{d}\Vb_{21}\Zb_{\mathrm{context}}\sbb\Bigg] \\
    &+
    \EE\Bigg[-\bigg(\frac{\alpha}{n}\sum_{i=1}^ne^{-\la\zb_i,\btheta_0 \ra}\zb_i-\Vb_{21}\Zb_{\mathrm{context}}\sbb\bigg)^\top\Vb_{21}\Zb_{\mathrm{context}}\mathrm{d}\mathrm{softmax}\big(\Zb_{\mathrm{context}}^\top \Wb_{12} \btheta_0\big) \Bigg]\\
    =&\EE\Bigg[-\bigg(\frac{\alpha}{n}\sum_{i=1}^ne^{-\la\zb_i,\btheta_0 \ra}\zb_i-\Vb_{21}\Zb_{\mathrm{context}}\sbb\bigg)^\top\mathrm{d}\Vb_{21}\Zb_{\mathrm{context}}\sbb\Bigg]\\
    &+
    \EE\Bigg[-\bigg(\frac{\alpha}{n}\sum_{i=1}^ne^{-\la\zb_i,\btheta_0 \ra}\zb_i-\Vb_{21}\Zb_{\mathrm{context}}\sbb\bigg)^\top\Vb_{21}\Zb_{\mathrm{context}}\Big(\diag(\sbb) - \sbb\sbb^\top\Big)\Zb_{\mathrm{context}}^\top \mathrm{d}\Wb_{12} \btheta_0\Bigg].
\end{align*}
From the differential calculation above, we directly conclude that only the gradients with respect to the blocks $\Vb_{21}$ and $\Wb_{12}$ are non-zero, while those of other blocks remain zero throughout the training process, as the loss is irrelevant with them. Therefore, we conclude that for all $t\in \NN$, $\Vb_{11}^{(t)}, \Vb_{12}^{(t)}, \Vb_{22}^{(t)}, \Wb_{11}^{(t)}, \Wb_{21}^{(t)}, \Vb_{22}^{(t)} = \mathbf{0}_{d\times d}$. For the block $\Vb_{21}$ and $\Wb_{12}$, their gradients can be expressed as 
\begin{align*}
     &\nabla_{\Vb_{21}} \cL_{\mathrm{train}}(\Vb, \Wb) = -\EE\Bigg[\bigg(\frac{\alpha}{n}\sum_{i=1}^ne^{-\la\zb_i,\btheta_0 \ra}\zb_i -\Vb_{21}\sum_{i=1}^n\zb_i\sbb_i\bigg)\sum_{i=1}^n\zb_i^\top \sbb_i\Bigg];\\
     &\nabla_{\Wb_{12}} \cL_{\mathrm{train}}(\Vb, \Wb) = -\EE\Bigg[\Zb_{\mathrm{context}}\Big(\diag(\sbb) - \sbb\sbb^\top\Big)\Zb_{\mathrm{context}}^\top\Vb_{21}^\top\bigg(\frac{\alpha}{n}\sum_{i=1}^ne^{-\la\zb_i,\btheta_0 \ra}\zb_i -\Vb_{21}\sum_{i=1}^n\zb_i\sbb_i\bigg)\btheta_0^\top\Bigg].
\end{align*}
This completes the proof.
\end{proof}

Lemma~\ref{lemma:gradient_formula} demonstrates that except $\Vb_{21}$ and $\Wb_{12}$, other blocks of parameter matrices $\Vb$ and $\Wb$ remains zero throughout the training. To finish the proof for the specific patterns in Lemma~\ref{lemma:parameter_pattern}, it suffices to show that there exist two time-dependent scalars $C_1^{(t)}$ and $C_2^{(t)}$, such that $\Vb_{21} = C_1^{(t)}\cdot\Ib_d$ and $\Wb_{12} = -C_2^{(t)}\cdot\Ib_d$ for all $t\geq 0$, which are demonstrated in the following Lemma~\ref{lemma:update_C_1_C_2}.
\begin{lemma}[Restatement of Lemma~\ref{lemma:parameter_pattern}, part II]\label{lemma:update_C_1_C_2}
    Under the same conditions of Theorem~\ref{thm:main_result_training}, there exist time dependent scalars $C_1^{(t)}$ and $C_2^{(t)}$ such that for $t\geq 0$,
    \begin{align*}
        \Vb_{21}^{(t)} = C_1^{(t)}\cdot \Ib_d; \quad \Wb_{12}^{(t)} = -C_2^{(t)}\cdot \Ib_d.
    \end{align*}
    In addition, $C_1^{(t)}$ and $C_2^{(t)}$ follow the iterative rules as
        \begin{align}\label{eq:updating_rules}
    C_1^{(t+1)} =& C_1^{(t)} + \frac{\eta\sigma^2}{d}\Bigg[\alpha e^{\frac{\sigma^2}{2}} \bigg(\frac{2}{\pi} + C_2^{(t)}\sigma^2 + \frac{d}{n}e^{C_2^{(t)}\sigma^2} + \frac{F_{1, \sigma}(C_2^{(t)})}{n} + \frac{F_{2, \sigma}(C_2^{(t)})}{d} \bigg)\notag\\
    &-C_1^{(t)}\bigg(\frac{2}{\pi} + [C_2^{(t)}]^2\sigma^2 + \frac{d}{n}e^{[C_2^{(t)}]^2 \sigma^2} + \frac{F_{3, \sigma}(C_2^{(t)})}{n} + \frac{F_{4, \sigma}(C_2^{(t)})}{d}\bigg)\Bigg];\notag\\
    C_2^{(t+1)} =& C_2^{(t)} + \frac{ \eta C_1^{(t)}\sigma^4}{d}\Bigg[\alpha e^{\frac{\sigma^2}{2}}\bigg(1+\frac{de^{C_2^{(t)}\sigma^2}}{n} + \frac{F_{5, \sigma}(C_2^{(t)})}{n} + \frac{F_{6, \sigma}(C_2^{(t)})}{d}\bigg) \notag\\
    &- C_1^{(t)}C_2^{(t)}\bigg(1+\frac{de^{[C_2^{(t)}]^2\sigma^2}}{n} + \frac{F_{7, \sigma}(C_2^{(t)})}{n} + \frac{F_{8, \sigma}(C_2^{(t)})}{d}\bigg)\Bigg].
\end{align}
Here, for each $k\in [8]$,  $F_{k, \sigma}(\cdot)$ is continuously differentiable with respect to its argument, where $\sigma$ is treated as a fixed constant.
\end{lemma}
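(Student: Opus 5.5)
We prove the two claims by induction on $t$, with the inductive step resting on rotational symmetry of the data distribution. The base case holds because $\Vb_{21}^{(0)}=\Wb_{12}^{(0)}=\mathbf{0}_{d\times d}$, which corresponds to $C_1^{(0)}=C_2^{(0)}=0$.

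For the inductive step, assume $\Vb_{21}^{(t)}=C_1\Ib_d$ and $\Wb_{12}^{(t)}=-C_2\Ib_d$. Substituting into the gradient formulas of Lemma~\ref{lemma:gradient_formula}, every quantity inside the expectation is built from the vectors $\zb_i=y_i\xb_i$ and $\btheta_0$ through inner products. In particular,
\[
\sbb_i\propto e^{-C_2\la\zb_i,\btheta_0\ra}.
\]
The joint law of $(\xb_1,\ldots,\xb_n,\btheta^*,\btheta_0)$ is invariant under any orthogonal transformation $\Qb$: the features are $\cN(\mathbf{0},\sigma^2\Ib_d)$ and both $\btheta^*$ and $\btheta_0$ are uniform on the sphere. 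Moreover, the labels $y_i=\sign\la\xb_i,\btheta^*\ra$ are preserved, so $\zb_i\mapsto\Qb\zb_i$. It follows that each gradient $\Gb$ satisfies $\Qb\Gb\Qb^\top=\Gb$ for all orthogonal $\Qb$, which forces $\Gb=c\Ib_d$. For $\nabla_{\Wb_{12}}$ the integrand is a matrix times $\btheta_0^\top$, and the same conjugation argument applies. Hence the gradient step preserves the scalar-identity form, and the update for $C_1,C_2$ is obtained from the traces:
\[
C_1^{+}=C_1-\tfrac{\eta}{d}\,\mathrm{tr}\,\nabla_{\Vb_{21}},\qquad C_2^{+}=C_2+\tfrac{\eta}{d}\,\mathrm{tr}\,\nabla_{\Wb_{12}}.
\]

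Next we compute these traces. For $C_1$, the trace is
\[
\tfrac{\alpha}{n}\EE\Big[\sum_{i,j}e^{-\la\zb_i,\btheta_0\ra}s_j\la\zb_i,\zb_j\ra\Big]-C_1\EE\Big\|\sum_j s_j\zb_j\Big\|^2 .
\]
We write $s_j=e^{-C_2\la\zb_j,\btheta_0\ra}/S$ with $S=\sum_k e^{-C_2\la\zb_k,\btheta_0\ra}$. The random denominator is handled by concentration: $S/n\to\EE e^{-C_2\la\zb,\btheta_0\ra}=e^{C_2^2\sigma^2/2}$, with fluctuations of relative order $n^{-1/2}$. We expand $1/S$ around its mean up to second order, with remainder terms of order $1/n$. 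We then split each double sum into diagonal terms $i=j$ and off-diagonal terms $i\ne j$.

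The off-diagonal terms factor, conditionally on $(\btheta_0,\btheta^*)$, into $\|\EE[e^{-a\la\zb,\btheta_0\ra}\zb]\|^2$-type quantities. Decompose $\zb$ along $\btheta_0$, along $\btheta^*$, and the orthogonal complement. The component $u=\la\zb,\btheta_0\ra$ is symmetric Gaussian, but $\zb=y\xb$ has a half-normal component of mean $\sigma\sqrt{2/\pi}$ along $\btheta^*$. This yields the two contributions:
\begin{itemize}
\item a tilt term, $\EE[e^{-au}u]=-a\sigma^2e^{a^2\sigma^2/2}$, which produces the $C_2\sigma^2$ and $C_2^2\sigma^2$ pieces;
\item a $\btheta^*$-direction term, which produces the $2/\pi$ piece after averaging over the angle between $\btheta_0$ and $\btheta^*$.
\end{itemize}
Cross terms whose angle average is $\cO(1/d)$ are absorbed into the $F_{k,\sigma}/d$ corrections. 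The diagonal terms give $\tfrac1n\EE[e^{-(1+C_2)u}\|\zb\|^2]/\EE[\cdot]\cdot$, which is about $d\sigma^2 e^{C_2\sigma^2}/n$ and matches the $\tfrac dn e^{C_2\sigma^2}$ term; likewise $e^{C_2^2\sigma^2}$ arises for the quadratic term. The prefactor $e^{\sigma^2/2}$ comes from normalising $\EE e^{-u}$. The $C_2$ update follows the same procedure applied to
\[
\btheta_0^\top\,\Zb(\diag\sbb-\sbb\sbb^\top)\Zb^\top(\cdots),
\]
which is a weighted covariance of the $\zb_i$ projected onto $\btheta_0$. Its leading part is $\sigma^2$ times the projection of the residual onto $\btheta_0$, giving the extra factor $C_1\sigma^4$ and the $(\alpha e^{\sigma^2/2}-C_1C_2)$ structure.

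The main obstacle is the rigorous control of the $1/S$ expansion and of the higher-order remainders. We need these uniformly enough, and smooth in $C_2$, to assert that the leftover terms are $F_{k,\sigma}(C_2)/n$ or $F_{k,\sigma}(C_2)/d$ with $F_{k,\sigma}$ continuously differentiable. We first try Gaussian integration (Stein's lemma) conditional on the $\btheta_0$- and $\btheta^*$-components, so that exact moment-generating-function identities can be used. For the denominator, we use a Taylor expansion with an integral remainder, bounded through exponential moments. Differentiability in $C_2$ then follows by dominated convergence, since every integrand is smooth in $C_2$ with Gaussian-integrable derivatives.
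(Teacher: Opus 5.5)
Your proposal is correct in outline but takes a different route from the paper on both halves of the argument.

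\textbf{Identity structure.} You use equivariance. Under a simultaneous rotation $\mathbf{Q}$ of $(\xb_i,\btheta^*,\btheta_0)$, the integrands of both gradients transform as $M\mapsto\mathbf{Q}M\mathbf{Q}^\top$. This is exactly where the inductive hypothesis $\Vb_{21}=C_1\Ib_d$, $\Wb_{12}=-C_2\Ib_d$ is used. So each expectation commutes with the orthogonal group and is therefore scalar. The paper instead rotates the data into the frame $\Ab=[\btheta^*,\bxi_2,\bxi_3,\ldots,\bxi_d]$, where $\bxi_2$ is the normalized part of $\btheta_0$ orthogonal to $\btheta^*$. It uses that the rotated vectors $\tilde\zb_i$ have independent coordinates (half-normal along $\btheta^*$, Gaussian elsewhere) that are independent of the frame. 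It then splits each gradient into $\btheta^*$-, $\bxi_2$-, remaining-diagonal and cross blocks, using $\EE[\bxi_j\bxi_j^\top]=\Ib_d/d$ and killing the cross blocks by symmetry. Your argument is shorter and more general: any rotation-invariant feature law would work. The paper's frame, however, does double duty, because it is needed anyway to compute the scalars.

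\textbf{Scalar coefficients.} You expand $1/S$ directly (delta method). The paper instead writes $S^{-k}=\frac{1}{(k-1)!}\int_0^\infty s^{k-1}e^{-sS}\,\mathrm{d}s$. Via Fubini and conditional independence given $K_1=\la\btheta_0,\btheta^*\ra$, it factors every multi-index sum into one-sample functionals $A_j(s/n,\cdot)$ times $M(s/n,\cdot)^{n-m}$. It replaces $A_j(s/n)$ by $A_j(0)$ through Lipschitz bounds, at cost $\cO(1/n)$. It bounds $\EE[S^{-k}]$ with Lemma~\ref{lemma:concentration_expectation_II}, which uses Jensen, a Chernoff lower tail and a Taylor expansion, i.e.\ essentially your method. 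Finally it Taylor-expands the leading terms in $K_1$ around $0$. The Laplace trick decouples the normalizer from the numerator exactly, which makes the triple sums in the $\Wb_{12}$ gradient mechanical. Your route must instead bound covariances between the numerator sums and $S-n\mu$; that is more bookkeeping but equally viable.

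\textbf{One correction.} The concentration of $S$ and the Gaussian tilt identities must be applied conditionally on $(\btheta_0,\btheta^*)$, since the $\zb_i$ are i.i.d.\ only given these. Conditionally, $u=\la\zb,\btheta_0\ra=K_1|\tilde x|+K_2x$ is not Gaussian. Accordingly, $S/n$ concentrates around $2e^{C_2^2\sigma^2/2}\Phi(-C_2\sigma K_1)$, not around $e^{C_2^2\sigma^2/2}$, which is only the unconditional mean. The discrepancy between the two is of order $d^{-1/2}$, not $n^{-1/2}$. These $K_1$-effects are exactly what generate the $F_{k,\sigma}/d$ terms. So carry them through the expansion rather than lumping them into the $1/n$ remainder: odd orders vanish by the symmetry of $K_1$, and the second order contributes $\EE K_1^2=1/d$. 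Note also that dominated convergence gives continuous differentiability of each $F_{k,\sigma}$. Bounds on $F_{k,\sigma}$ and $F_{k,\sigma}'$ that are uniform in $n,d$, which the later lemmas rely on, need the same remainder estimates applied to the $C_2$-derivatives.
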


These conclusions are proved by induction. It is straightforward to verify that they hold at $t = 0$, since the parameters are initialized as $\Vb^{(0)} = \mathbf{0}_{2d \times 2d}$ and $\Wb^{(0)} = \mathbf{0}_{2d \times 2d}$. To streamline the exposition, we reorganize the content of Lemma~\ref{lemma:update_C_1_C_2} into two separate lemmas, namely Lemmas~\ref{lemma:update_C_1} and~\ref{lemma:update_C_2}, which focus on the updates of $\Vb$ and $\Wb$, respectively. This decomposition allows us to present the relevant arguments more clearly and avoids an overly lengthy proof within a single lemma. Accordingly, we establish Lemmas~\ref{lemma:update_C_1} and~\ref{lemma:update_C_2} independently.

In the inductive step, we assume that the conclusions of both Lemma~\ref{lemma:update_C_1} and Lemma~\ref{lemma:update_C_2} hold at the current iteration, and then show that the conclusion of the lemma under consideration continues to hold at the next iteration. This procedure does not constitute circular reasoning. Indeed, all arguments could equivalently be organized into a single Lemma~\ref{lemma:update_C_1_C_2}. The inductive assumption simply reflects the fact that the parameter updates are coupled, and it suffices to verify that, starting from a valid initialization, the stated conclusions are preserved from one iteration to the next.
% As we use induction, we assume that the conclusions of both Lemma~\ref{lemma:update_C_1} and Lemma~\ref{lemma:update_C_2} hold at the current iteration. We then demonstrate that the conclusion of either Lemma~\ref{lemma:update_C_1} or Lemma~\ref{lemma:update_C_2} holds at the next iteration, depending on which lemma we are proving. It is important to clarify that this is not circular reasoning; all these contents can indeed be organized into a single Lemma~\ref{lemma:update_C_1_C_2}. It is reasonable to assume that all conclusions hold for each iteration and to verify that these conclusions remain valid for the next iteration, as long as we rigorously demonstrate their validity at the outset. 

\begin{lemma}[Restatement of Lemma~\ref{lemma:update_C_1_C_2}, part I]\label{lemma:update_C_1}
    Under the same conditions of Theorem~\ref{thm:main_result_training}, there exist a time dependent scalars $C_1^{(t)}$ such that for $t\geq 0$,
    \begin{align*}
        \Vb_{21}^{(t)} = C_1^{(t)}\cdot \Ib_d.
    \end{align*}
    In addition, $C_1^{(t)}$ follows the iterative rules as
        \begin{align*}
    C_1^{(t+1)} =& C_1^{(t)} + \frac{\eta\sigma^2}{d}\Bigg[\alpha e^{\frac{\sigma^2}{2}} \bigg(\frac{2}{\pi} + C_2^{(t)}\sigma^2 + \frac{d}{n}e^{C_2^{(t)}\sigma^2} + \frac{F_{1, \sigma}(C_2^{(t)})}{n} + \frac{F_{2, \sigma}(C_2^{(t)})}{d} \bigg)\\
    &-C_1^{(t)}\bigg(\frac{2}{\pi} + [C_2^{(t)}]^2\sigma^2 + \frac{d}{n}e^{[C_2^{(t)}]^2 \sigma^2} + \frac{F_{3, \sigma}(C_2^{(t)})}{n} + \frac{F_{4, \sigma}(C_2^{(t)})}{d}\bigg)\Bigg].
\end{align*}
Here, for each $k\in [8]$,  $F_{k, \sigma}(\cdot)$ is continuously differentiable with respect to its argument, where $\sigma$ is treated as a fixed constant.
\end{lemma}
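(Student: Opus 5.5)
We argue by induction on $t$, using the inductive hypothesis that $\Vb_{21}^{(t)}=C_1^{(t)}\Ib_d$ and $\Wb_{12}^{(t)}=-C_2^{(t)}\Ib_d$, with all other blocks zero by Lemma~\ref{lemma:gradient_formula}. Under this hypothesis, $\sbb=\mathrm{softmax}(-C_2\Zb_{\mathrm{context}}^\top\btheta_0)$, so $s_i=e^{-C_2\la\zb_i,\btheta_0\ra}/\sum_j e^{-C_2\la\zb_j,\btheta_0\ra}$. By Lemma~\ref{lemma:gradient_formula}, the gradient with respect to $\Vb_{21}$ is
\begin{align*}
-\EE\big[\ab\bb^\top\big]+C_1\EE\big[\bb\bb^\top\big],\qquad \ab=\tfrac{\alpha}{n}\textstyle\sum_i e^{-\la\zb_i,\btheta_0\ra}\zb_i,\quad \bb=\textstyle\sum_i s_i\zb_i .
\end{align*}
The first step is a symmetry argument showing that both matrices are multiples of $\Ib_d$. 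For any orthogonal $\Qb$, the joint law of $(\btheta^*,\btheta_0,\{\xb_i\})$ is invariant under the map $\btheta^*\mapsto\Qb\btheta^*$, $\btheta_0\mapsto\Qb\btheta_0$, $\xb_i\mapsto\Qb\xb_i$, since the labels $y_i$ are unchanged and $\zb_i\mapsto\Qb\zb_i$. Under this map $\ab\mapsto\Qb\ab$ and $\bb\mapsto\Qb\bb$, because the scores $\la\zb_i,\btheta_0\ra$ are invariant. Hence $\EE[\ab\bb^\top]$ and $\EE[\bb\bb^\top]$ commute with every orthogonal matrix and are therefore scalar multiples of $\Ib_d$, namely $\tfrac1d\EE[\la\ab,\bb\ra]$ and $\tfrac1d\EE\|\bb\|_2^2$. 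It follows that $\Vb_{21}^{(t+1)}=C_1^{(t+1)}\Ib_d$ with
\begin{align*}
C_1^{(t+1)}=C_1^{(t)}+\tfrac{\eta}{d}\big(\EE\la\ab,\bb\ra-C_1^{(t)}\EE\|\bb\|_2^2\big).
\end{align*}

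The remaining work is to expand the two scalar expectations to the stated accuracy. Write $u_i=\la\zb_i,\btheta_0\ra$. Then
\begin{align*}
\EE\la\ab,\bb\ra=\tfrac{\alpha}{n}\textstyle\sum_{i,j}\EE\big[e^{-u_i}s_j\la\zb_i,\zb_j\ra\big],\qquad \EE\|\bb\|_2^2=\textstyle\sum_{i,j}\EE\big[s_is_j\la\zb_i,\zb_j\ra\big].
\end{align*}
The denominator of the softmax satisfies $\sum_j e^{-C_2u_j}=n\,\EE[e^{-C_2u}](1+O_P(n^{-1/2}))$, and since $\zb$ given $\btheta_0$ has $u\sim\cN(0,\sigma^2)$ marginally ($y\xb$ is Gaussian along $\btheta_0$ up to a sign determined by $\btheta^*$), we have $\EE e^{-C_2u}=e^{C_2^2\sigma^2/2}$. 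We therefore replace $s_j$ by $e^{-C_2u_j}/(n e^{C_2^2\sigma^2/2})$ plus a correction of order $1/n$, and handle the resulting deviation of the random denominator by a Taylor expansion (delta method) with explicit remainder.

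We then split the double sum into diagonal terms $i=j$ and off-diagonal terms $i\neq j$. The diagonal terms contribute $\tfrac1n\EE[e^{-(1+C_2)u}\|\zb\|^2]/e^{C_2^2\sigma^2/2}\approx \tfrac{d\sigma^2}{n}e^{C_2\sigma^2}e^{\sigma^2/2}$, which produces the $\tfrac dn e^{C_2\sigma^2}$ term, and $\tfrac{d\sigma^2}{n}e^{C_2^2\sigma^2}$ for $\|\bb\|^2$. The off-diagonal terms reduce to $\la\EE[e^{-u}\zb],\EE[e^{-C_2u}\zb]\ra$. Decomposing $\zb$ into its component along $\btheta_0$, the component along $\btheta^*$ orthogonal to $\btheta_0$, and an isotropic remainder, we get
\begin{align*}
\EE[e^{-cu}\zb]=-c\sigma^2e^{c^2\sigma^2/2}\btheta_0+\sigma\sqrt{2/\pi}\,e^{c^2\sigma^2/2}\,\mathbf{v}_\perp+\cdots,
\end{align*}
where the $\sqrt{2/\pi}$ factor comes from $\EE|g|$ for the label-aligned direction. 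Averaging over $\btheta^*$ yields the constants $\tfrac{2}{\pi}+C_2\sigma^2$ and $\tfrac{2}{\pi}+C_2^2\sigma^2$, with corrections of order $1/d$ arising from the overlap $\la\btheta^*,\btheta_0\ra^2\approx 1/d$. Each of these corrections is a smooth function of $C_2$, collected as $F_{k,\sigma}(C_2)/n$ and $F_{k,\sigma}(C_2)/d$. Continuous differentiability of the $F_{k,\sigma}$ follows by differentiating under the expectation, which is justified by domination with Gaussian moment-generating functions on compact $C_2$-sets.

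The main obstacle is the rigorous control of the random softmax denominator together with the exact non-Gaussian law of $\zb=y\xb$, which is Gaussian orthogonally to $\btheta^*$ and half-normal along $\btheta^*$. These must be handled so that all remainders are genuinely $O(1/n)$ and $O(1/d)$ with smooth coefficients, rather than merely bounded in probability. To do this, we will write $1/\sum_j e^{-C_2u_j}$ through a leave-one-out or leave-two-out decomposition, so that numerator and denominator are nearly independent, and use exact Gaussian integrals in a rotated coordinate system spanned by $\btheta_0$ and $\btheta^*$.
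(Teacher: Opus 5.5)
Your symmetry step is correct, and it is a cleaner route than the paper's. The paper rotates into the frame $[\btheta^*,\bxi_2,\ldots,\bxi_d]$ and splits $I_1,I_2$ into the pieces $I_{1,1},\ldots,I_{1,4}$ (and $I_{2,1},\ldots,I_{2,4}$). It kills the cross terms by factorizing $\EE[\bxi_{j_1}\bxi_{j_2}^\top]$ as a product of means, which is loose because the frame vectors are dependent. Your argument instead uses invariance of the joint law of $(\btheta^*,\btheta_0,\{\xb_i\})$ under a simultaneous rotation. This gives $\EE[\ab\bb^\top]=\tfrac1d\EE\la\ab,\bb\ra\,\Ib_d$ and $\EE[\bb\bb^\top]=\tfrac1d\EE\|\bb\|_2^2\,\Ib_d$ rigorously in one line.

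The two scalar traces still have to be computed in the $(\btheta^*,\btheta_0)$-frame, and there your plan coincides with the paper's. Exchangeability splits each trace into diagonal and off-diagonal parts. The softmax denominator is handled leave-one-out; the paper does this via $1/S=\int_0^\infty e^{-sS}\,\mathrm{d}s$, which turns each expectation into an integral of the form $\int_0^\infty A(s/n)\,M(s/n)^{n-1}\,\mathrm{d}s$ (with $M^{n-2}$ for off-diagonal terms), followed by Lemma~\ref{lemma:concentration_expectation_II}. The result is then Taylor-expanded in $K_1=\la\btheta_0,\btheta^*\ra$.

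One step is wrong as written, though it is repairable and does not change your final constants. The claim $\sum_j e^{-C_2u_j}=n\,e^{C_2^2\sigma^2/2}\big(1+O_P(n^{-1/2})\big)$ fails. The $u_j$ share $\btheta^*$, so they are i.i.d.\ only conditionally on $(\btheta^*,\btheta_0)$. Conditionally, $u_j=K_1|\tilde x_j|+K_2x_j$, where $K_2=\sqrt{1-K_1^2}$ and $\tilde x_j,x_j$ are independent $\cN(0,\sigma^2)$ coordinates of $\xb_j$ in that frame. This is not Gaussian, and $\EE[e^{-C_2u}\mid K_1]=2e^{C_2^2\sigma^2/2}\Phi(-C_2K_1\sigma)$. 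So the denominator concentrates around $2n\,e^{C_2^2\sigma^2/2}\Phi(-C_2K_1\sigma)$. That differs from your centering by a relative amount of order $|K_1|\asymp d^{-1/2}$, much larger than $n^{-1/2}$ when $n=\Omega(d^2)$. The marginal $\cN(0,\sigma^2)$ law of a single $u_j$ says nothing about concentration of the sum.

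Your off-diagonal factor $\la\EE[e^{-u}\zb],\EE[e^{-C_2u}\zb]\ra$ is already conditional on $(\btheta^*,\btheta_0)$. That conditioning is where the $\sigma\sqrt{2/\pi}$ component comes from, and it averages away given $\btheta_0$ alone. The normalizer must therefore be taken under the same conditioning.

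The fix is to work given $K_1$ throughout. This produces ratios such as $\Phi(-(1+C_2)K_1\sigma)/\Phi(-C_2K_1\sigma)$ and $\Phi(-2C_2K_1\sigma)/(2\Phi^2(-C_2K_1\sigma))$, as in Lemmas~\ref{lemma:softmax_expectation_V} and~\ref{lemma:softmax_expectation_VI}. Only afterwards do you average over $K_1$ using $\EE K_1=0$ and $\EE K_1^2=1/d$. The first-order discrepancy cancels in expectation, so this error belongs in the $F_{k,\sigma}/d$ terms rather than being an $O(1/n)$ correction.

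When you make the delta-method remainder explicit, you also need a lower-tail bound on the denominator. The remainder in the expansion of $S^{-k}$ involves $\xi^{-(k+2)}$ with $\xi$ between $S$ and its mean. The paper handles this with a Chernoff bound together with $S\ge e^{-C_2u_1}$, whose negative moments are finite.
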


\begin{proof}[Proof of Lemma~\ref{lemma:update_C_1}]
    Suppose that $\Vb_{21} = C_1^{(t)}\cdot\Ib_d$ and $\Wb_{21} = -C_2^{(t)}\cdot\Ib_d$, then by Lemma~\ref{lemma:gradient_formula}, we can calculate that
\begin{align*}
    \nabla_{\Vb_{21}} \cL_{\mathrm{train}}(\Vb^{(t)}, \Wb^{(t)}) =& -\EE\Bigg[\bigg(\frac{\alpha}{n}\sum_{i=1}^ne^{-\la\zb_i,\btheta_0 \ra}\zb_i -\Vb_{21}^{(t)}\sum_{i=1}^n\zb_i\sbb_i^{(t)}\bigg)\sum_{i=1}^n\zb_i^\top \sbb_i^{(t)}\Bigg]\\
    =&-\frac{\alpha}{n}\underbrace{\EE\Bigg[\sum_{i_1=1}^n\sum_{i_2=1}^n e^{-\la\zb_{i_1},\btheta_0 \ra}\sbb_{i_2}^{(t)}\zb_{i_1}\zb_{i_2}^\top\Bigg]}_{I_1} + C_1^{(t)} \underbrace{\EE\Bigg[\sum_{i_1=1}^n\sum_{i_2=1}^n\sbb_{i_1}^{(t)}\sbb_{i_2}^{(t)}\zb_{i_1}\zb_{i_2}^\top\Bigg]}_{I_2}.
\end{align*}
In the next, we analyze the value of $I_1$ and $I_2$ respectively. For any given $\btheta_*$ and $\btheta_0$, let $\Ab$ be an orthogonal matrix defined as 
\begin{align*}
    \Ab = \bigg[\btheta_*, \frac{(\Ib_d - \btheta_*\btheta_*^\top)\btheta_0}{\|(\Ib_d - \btheta_*\btheta_*^\top)\btheta_0\|_2}, \bxi_3, \ldots, \bxi_d\bigg] \in \RR^{d},
\end{align*}
where $\bxi_3\ldots, \bxi_d$ are normalized vectors orthogonal to $\btheta_*$ and $\btheta_0$ (For notational consistency, we will use $\bxi_1$ and $\bxi_2$ to represent $\btheta_*$ and $\frac{(\Ib_d - \btheta_*\btheta_*^\top)\btheta_0}{\|(\Ib_d - \btheta_*\btheta_*^\top)\btheta_0\|_2}$, respectively, in certain summation contexts.). In addition, We define that $\tilde{\xb}_i = \Ab^\top\xb_i$, which implies that $y_i =\sign(\tilde \xb_{i, 1})$. We further denote that $\tilde \zb_{i} = y_i \tilde \xb_i$. Then by Lemma~\ref{lemma:independence_sign_normal}, we know that all coordinates of $\tilde \zb_i$ are independent with each other. The first coordinate $\tilde \zb_{i, 1}$ follows a folded normal distribution, and other coordinates still follow normal distributions. Notice that we can re-write $\la\btheta_0, \zb_i\ra = \la\btheta_0, \btheta_*\ra\tilde \zb_{i, 1}+ \|(\Ib_d - \btheta_*\btheta_*^\top)\btheta_0\|_2\tilde \zb_{i, 2}$, which implies that both $e^{-\la\btheta_0, \zb_i\ra}$ and $\sbb_i^{(t)}$ are all independent with the coordinates from $\tilde\zb_{i, 3}$ to  $\tilde\zb_{i, d}$. In addition, Lemma~\ref{lemma:independence_unif_normal} guarantees that $\tilde \zb_i$ are independent with $\btheta_*$, $\btheta_0$, and $\bxi_3, \ldots, \bxi_d$. 
Based on all these preliminaries, we calculate $I_1$ as
\begin{align}\label{eq:v_updete_I1}
    I_1 = &\EE\Bigg[\sum_{i_1=1}^n\sum_{i_2=1}^ne^{-\la\zb_{i_1},\btheta_0 \ra}\sbb_{i_2}^{(t)}\Ab\Ab^\top\zb_{i_1}\zb_{i_2}^\top\Ab\Ab^\top\Bigg] = \EE\Bigg[\sum_{i_1=1}^n\sum_{i_2=1}^ne^{-\la\zb_{i_1},\btheta_0 \ra}\sbb_{i_2}^{(t)}\Ab\tilde \zb_{i_1}\tilde \zb_{i_2}^\top\Ab^\top\Bigg]\notag\\
    =& \underbrace{\EE\Bigg[\sum_{i_1=1}^n\sum_{i_2=1}^ne^{-\la\zb_{i_1},\btheta_0 \ra}\sbb_{i_2}^{(t)}\tilde \zb_{i_1, 1}\tilde \zb_{i_2, 1}\btheta_*\btheta_*^\top\Bigg]}_{I_{1,1}} \notag \\
    &+\underbrace{\EE\Bigg[\sum_{i_1=1}^n\sum_{i_2=1}^ne^{-\la\zb_{i_1},\btheta_0 \ra}\sbb_{i_2}^{(t)}\tilde \zb_{i_1, 2}\tilde \zb_{i_2, 2}\frac{(\Ib_d - \btheta_*\btheta_*^\top)\btheta_0(\Ib_d - \btheta_*\btheta_*^\top)\btheta_0^\top}{\|(\Ib_d - \btheta_*\btheta_*^\top)\btheta_0\|_2^2}
    %\frac{(\Ib_d - \btheta_*\btheta_*^\top)\btheta_0^\top}{\|(\Ib_d - \btheta_*\btheta_*^\top)\btheta_0\|_2}
    \Bigg]}_{I_{1,2}}\notag\\
    &+ \underbrace{\EE\Bigg[\sum_{i_1=1}^n\sum_{i_2=1}^n \sum_{j=3}^d e^{-\la\zb_{i_1},\btheta_0 \ra}\sbb_{i_2}^{(t)}\tilde \zb_{i_1, j}\tilde \zb_{i_2, j}\bxi_j\bxi_j^\top\Bigg]}_{I_{1,3}} + \underbrace{\EE\Bigg[\sum_{i_1=1}^n\sum_{i_2=1}^n \sum_{j_1=1}^d\sum_{j_2\neq j_1} e^{-\la\zb_{i_1},\btheta_0 \ra}\sbb_{i_2}^{(t)}\tilde \zb_{i_1, j_1}\tilde \zb_{i_2, j_2}\bxi_{j_1}\bxi_{j_2}^\top\Bigg]}_{I_{1,4}}.
\end{align}
Through the independence stated above, and the calculations demonstrated in Section~\ref{section:expectation_calculation}, we have 
\begin{align*}
    I_{1, 1} = \EE\Bigg[\sum_{i_1=1}^n\sum_{i_2=1}^ne^{-\la\zb_{i_1},\btheta_0 \ra}\sbb_{i_2}^{(t)}\tilde \zb_{i_1, 1}\tilde \zb_{i_2, 1}\Bigg]\EE\big[\btheta_*\btheta_*^\top\big] = \frac{1}{d}\bigg(\frac{2n}{\pi} \sigma^2e^{\frac{\sigma^2}{2}}+f_1(C_2^{(t)}, \sigma) +\frac{n f_2(C_2^{(t)}, \sigma)}{d}\bigg)\cdot\Ib_d,
\end{align*}
where the expectation of the first term is derived in Lemma~\ref{lemma:softmax_expectation_III}, with $f_1$, $f_2$ being two analytic functions of $C_2^{(t)}$ and $\sigma$, and $\EE[\btheta_*\btheta_*^\top]=\Ib_d/d$ as demonstrated in Lemma~\ref{lemma:unit_rv_I}. 
Similarly, we also have
\begin{align*}
    I_{1, 2} &= \EE\Bigg[\sum_{i_1=1}^n\sum_{i_2=1}^ne^{-\la\zb_{i_1},\btheta_0 \ra}\sbb_{i_2}^{(t)}\tilde \zb_{i_1, 2}\tilde \zb_{i_2, 2}\Bigg]\EE\bigg[\frac{(\Ib_d - \btheta_*\btheta_*^\top)\btheta_0(\Ib_d - \btheta_*\btheta_*^\top)\btheta_0^\top}{\|(\Ib_d - \btheta_*\btheta_*^\top)\btheta_0\|_2^2}\bigg] \\
    &= \frac{1}{d}\bigg(nC_2^{(t)} \sigma^4e^{\frac{\sigma^2}{2}}+f_3(C_2^{(t)}, \sigma) +\frac{n f_4(C_2^{(t)}, \sigma)}{d}\bigg)\cdot\Ib_d,
\end{align*}
where the expectation of the first term is derived in Lemma~\ref{lemma:softmax_expectation_I}. For $I_{1, 3}$, we have
\begin{align*}
    I_{1,3} = &\sum_{j=3}^d \sum_{i=1}^n\EE\big[e^{-\la\zb_{i},\btheta_0 \ra}\sbb_{i}^{(t)}\big]\EE[ \tilde\zb_{i, j}^2]\EE\big[\bxi_j\bxi_j^\top\big]+\sum_{i_1=1}^n\sum_{i_2\neq i_1}\sum_{j=3}^d\EE\big[e^{-\la\zb_{i_1},\btheta_0 \ra}\sbb_{i_2}^{(t)}\big]\EE[\tilde \zb_{i_1, j}\tilde \zb_{i_2, j}\big]\EE\big[\bxi_j\bxi_j^\top]
    \\ = &\sum_{j=3}^d \sum_{i=1}^n\EE\big[e^{-\la\zb_{i},\btheta_0 \ra}\sbb_{i}^{(t)}\big]\EE[ \tilde\zb_{i, j}^2]\EE\big[\bxi_j\bxi_j^\top\big] = \frac{d-2}{d}\bigg(\sigma^2e^{C_2^{(t)}\sigma^2+\frac{1}{2}\sigma^2} + \frac{f_5(C_2^{(t)}, \sigma)}{n} +  \frac{f_6(C_2^{(t)}, \sigma)}{d}\bigg)\cdot \Ib_d,
\end{align*}
where the second equation holds as $\EE[\tilde \zb_{i_1, j}\tilde \zb_{i_2, j}] = \EE[\tilde \zb_{i_1, j}]\EE[\tilde \zb_{i_2, j}]=0$, and the expectation of the last equation is given in Lemma~\ref{lemma:softmax_expectation_V}. Lastly, for the term $I_{1, 4}$, by the independence among these random variables and the fact that each $\bxi_j$ is symmetric, we have 
\begin{align*}
    I_{1, 4} = \sum_{i_1=1}^n\sum_{i_2=1}^n \sum_{j_1=1}^d\sum_{j_2\neq j_1}\EE\big[e^{-\la\zb_{i_1},\btheta_0 \ra}\sbb_{i_2}^{(t)}\tilde \zb_{i_1, j_1}\tilde \zb_{i_2, j_2}\big]\EE[\bxi_{j_1}]\EE[\bxi_{j_2}^\top] = \mathbf{0}_{d \times d}.
\end{align*}
Combining these results into~\eqref{eq:v_updete_I1}, we obtain that 
\begin{align}\label{eq:v_updete_I1_result}
    I_1 = \frac{n\sigma^2e^{\frac{\sigma^2}{2}}}{d}\bigg(\frac{2}{\pi} + C_2^{(t)}\sigma^2 + \frac{d}{n}e^{C_2^{(t)}\sigma^2} + \frac{F_{1, \sigma}(C_2^{(t)})}{n} + \frac{F_{2, \sigma}(C_2^{(t)})}{d}\bigg)\Ib_d.
\end{align}
Similarly, we can also separate $I_2$ as
\begin{align}\label{eq:v_updete_I2}
    I_2 =& \EE\Bigg[\sum_{i_1=1}^n\sum_{i_2=1}^n\sbb_{i_1}^{(t)}\sbb_{i_2}^{(t)}\Ab\Ab^\top\zb_{i_1}\zb_{i_2}^\top\Ab\Ab^\top\Bigg] = \EE\Bigg[\sum_{i_1=1}^n\sum_{i_2=1}^n\sbb_{i_1}^{(t)}\sbb_{i_2}^{(t)}\Ab\tilde \zb_{i_1}\tilde \zb_{i_2}^\top\Ab^\top\Bigg]\notag\\
        =& \underbrace{\EE\Bigg[\sum_{i_1=1}^n\sum_{i_2=1}^n\sbb_{i_1}^{(t)}\sbb_{i_2}^{(t)}\tilde \zb_{i_1, 1}\tilde \zb_{i_2, 1}\btheta_*\btheta_*^\top\Bigg]}_{I_{2,1}} +\underbrace{\EE\Bigg[\sum_{i_1=1}^n\sum_{i_2=1}^n\sbb_{i_1}^{(t)}\sbb_{i_2}^{(t)}\tilde \zb_{i_1, 2}\tilde \zb_{i_2, 2}\frac{(\Ib_d - \btheta_*\btheta_*^\top)\btheta_0(\Ib_d - \btheta_*\btheta_*^\top)\btheta_0^\top}{\|(\Ib_d - \btheta_*\btheta_*^\top)\btheta_0\|_2^2}
    %\frac{(\Ib_d - \btheta_*\btheta_*^\top)\btheta_0^\top}{\|(\Ib_d - \btheta_*\btheta_*^\top)\btheta_0\|_2}
    \Bigg]}_{I_{2,2}}\notag\\
    &+ \underbrace{\EE\Bigg[\sum_{i_1=1}^n\sum_{i_2=1}^n \sum_{j=3}^d \sbb_{i_1}^{(t)}\sbb_{i_2}^{(t)}\tilde \zb_{i_1, j}\tilde \zb_{i_2, j}\bxi_j\bxi_j^\top\Bigg]}_{I_{2,3}} + \underbrace{\EE\Bigg[\sum_{i_1=1}^n\sum_{i_2=1}^n \sum_{j_1=1}^d\sum_{j_2\neq j_1} \sbb_{i_1}^{(t)}\sbb_{i_2}^{(t)}\tilde \zb_{i_1, j_1}\tilde \zb_{i_2, j_2}\bxi_{j_1}\bxi_{j_2}^\top\Bigg]}_{I_{2,4}}.
\end{align}
We calculate each term following a similar procedure. For $I_{2,1}$, we have
\begin{align*}
    I_{2,1} = \EE\Bigg[\sum_{i_1=1}^n\sum_{i_2=1}^n\sbb_{i_1}^{(t)}\sbb_{i_2}^{(t)}\tilde \zb_{i_1, 1}\tilde \zb_{i_2, 1}\Bigg]\EE\big[\btheta_*\btheta_*^\top\big] = \frac{\sigma^2}{d}\bigg(\frac{2}{\pi} + \frac{f_7(C_2^{(t)}, \sigma)}{n} +  \frac{f_8(C_2^{(t)}, \sigma)}{d}\bigg)\cdot\Ib_d,
\end{align*}
where the expectation is provided in Lemma~\ref{lemma:softmax_expectation_IV}. For $I_{2, 2}$, we have
\begin{align*}
    I_{2, 2} =& \EE\Bigg[\sum_{i_1=1}^n\sum_{i_2=1}^n\sbb_{i_1}^{(t)}\sbb_{i_2}^{(t)}\tilde \zb_{i_1, 2}\tilde \zb_{i_2, 2}\Bigg]\EE\bigg[\frac{(\Ib_d - \btheta_*\btheta_*^\top)\btheta_0(\Ib_d - \btheta_*\btheta_*^\top)\btheta_0^\top}{\|(\Ib_d - \btheta_*\btheta_*^\top)\btheta_0\|_2^2}\bigg] \\=& \frac{\sigma^2}{d}\bigg([C_2^{(t)}]^2\sigma^2 +f_9(C_2^{(t)}, \sigma) +\frac{n f_{10}(C_2^{(t)}, \sigma)}{d}\bigg)\cdot\Ib_d,
\end{align*}
where the expectation is provided in Lemma~\ref{lemma:softmax_expectation_II}. For $I_{2, 3}$, we have
\begin{align*}
    I_{2, 3} = &\sum_{j=3}^d \sum_{i=1}^n\EE\big[(\sbb_{i}^{(t)})^2\big]\EE[ \tilde\zb_{i, j}^2]\EE\big[\bxi_j\bxi_j^\top\big]+\sum_{i_1=1}^n\sum_{i_2\neq i_1}\sum_{j=3}^d\EE\big[\sbb_{i_1}^{(t)}\sbb_{i_2}^{(t)}\big]\EE[\tilde \zb_{i_1, j}\tilde \zb_{i_2, j}\big]\EE\big[\bxi_j\bxi_j^\top]
    \\ = &\sum_{j=3}^d \sum_{i=1}^n\EE\big[(\sbb_{i}^{(t)})^2\big]\EE[ \tilde\zb_{i, j}^2]\EE\big[\bxi_j\bxi_j^\top\big] = \frac{(d-2)\sigma^2}{d}\bigg(e^{[C_2^{(t)}]^2\sigma^2} + \frac{f_{11}(C_2^{(t)}, \sigma)}{n} +  \frac{f_{12}(C_2^{(t)}, \sigma)}{d}\bigg)\cdot \Ib_d,
\end{align*}
where the expectation is provided in Lemma~\ref{lemma:softmax_expectation_VI}. In addition, $I_{2, 4} = \mathbf{0}_{d\times d}$ by the symmetry of $\bxi_j$. Substituting these results into~\eqref{eq:v_updete_I2}, we obtain that 
\begin{align}\label{eq:v_updete_I2_result}
    I_{2} = \frac{\sigma^2}{d}\bigg(\frac{2}{\pi} + [C_2^{(t)}]^2\sigma^2 + \frac{d}{n}e^{[C_2^{(t)}]^2 \sigma^2} + \frac{F_{3, \sigma}(C_2^{(t)})}{n} + \frac{F_{4, \sigma}(C_2^{(t)})}{d}\bigg)\Ib_d.
\end{align}
Hence, we prove the induction that by assuming at $t$-th iteration, $\Vb_{21}^{(t)} = C_1^{(t)}\cdot\Ib_d$ and $\Wb_{12}^{(t)} = -C_2^{(t)}\cdot\Ib_d$, the gradient $\nabla_{\Vb_{21}} \cL_{\mathrm{train}}(\Vb^{(t)}, \Wb^{(t)})$ is also proportional to the identity matrix $\Ib_d$. In addition, ~\eqref{eq:v_updete_I1_result} and ~\eqref{eq:v_updete_I2_result} establish the iterative rule for the coefficient $C_1^{(t)}$ as 
\begin{align*}
    C_1^{(t+1)} =& C_1^{(t)} + \frac{\eta\sigma^2}{d}\Bigg[\alpha e^{\frac{\sigma^2}{2}} \bigg(\frac{2}{\pi} + C_2^{(t)}\sigma^2 + \frac{d}{n}e^{C_2^{(t)}\sigma^2} + \frac{F_{1, \sigma}(C_2^{(t)})}{n} + \frac{F_{2, \sigma}(C_2^{(t)})}{d} \bigg)\\
    &-C_1^{(t)}\bigg(\frac{2}{\pi} + [C_2^{(t)}]^2\sigma^2 + \frac{d}{n}e^{[C_2^{(t)}]^2 \sigma^2} + \frac{F_{3, \sigma}(C_2^{(t)})}{n} + \frac{F_{4, \sigma}(C_2^{(t)})}{d}\bigg)\Bigg].
\end{align*}
This completes the proof. 
\end{proof}

\begin{lemma}[Restatement of Lemma~\ref{lemma:update_C_1_C_2}, part II]\label{lemma:update_C_2}
    Under the same conditions of Theorem~\ref{thm:main_result_training}, there exist a time dependent scalar $C_2^{(t)}$ such that for $t\geq 0$,
    \begin{align*}
        \Wb_{12}^{(t)} = - C_2^{(t)}\cdot \Ib_d.
    \end{align*}
    In addition, $C_2^{(t)}$ follows the iterative rules as
        \begin{align*}
    C_2^{(t+1)} =& C_2^{(t)} + \frac{ \eta C_1^{(t)}\sigma^4}{d}\Bigg[\alpha e^{\frac{\sigma^2}{2}}\bigg(1+\frac{de^{C_2^{(t)}\sigma^2}}{n} + \frac{F_{5, \sigma}(C_2^{(t)})}{n} + \frac{F_{6, \sigma}(C_2^{(t)})}{d}\bigg) \\
    &- C_1^{(t)}C_2^{(t)}\bigg(1+\frac{de^{[C_2^{(t)}]^2\sigma^2}}{n} + \frac{F_{7, \sigma}(C_2^{(t)})}{n} + \frac{F_{8, \sigma}(C_2^{(t)})}{d}\bigg)\Bigg].
\end{align*}
Here, for each $k\in [8]$,  $F_{k, \sigma}(\cdot)$ is continuously differentiable with respect to its argument, where $\sigma$ is treated as a fixed constant.
\end{lemma}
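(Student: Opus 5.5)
The proof is an induction on $t$, coupled with Lemma~\ref{lemma:update_C_1}, and follows the same pattern as the proof of Lemma~\ref{lemma:update_C_1}. Assume $\Vb_{21}^{(t)}=C_1^{(t)}\Ib_d$ and $\Wb_{12}^{(t)}=-C_2^{(t)}\Ib_d$. Then $\sbb^{(t)}=\mathrm{softmax}(-C_2^{(t)}\Zb_{\mathrm{context}}^\top\btheta_0)$, and the gradient formula of Lemma~\ref{lemma:gradient_formula} becomes
\begin{align*}
\nabla_{\Wb_{12}}\cL_{\mathrm{train}} = -C_1^{(t)}\,\EE\Big[\Zb_{\mathrm{context}}\big(\diag(\sbb)-\sbb\sbb^\top\big)\Zb_{\mathrm{context}}^\top\Big(\tfrac{\alpha}{n}\textstyle\sum_i e^{-\la\zb_i,\btheta_0\ra}\zb_i - C_1^{(t)}\sum_i \sbb_i\zb_i\Big)\btheta_0^\top\Big].
\end{align*}
I will split this as $-C_1^{(t)}(\tfrac{\alpha}{n}J_1 - C_1^{(t)}J_2)$. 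Here $J_1$ is the expectation of the covariance-type matrix $\sum_i \sbb_i\zb_i\zb_i^\top-(\sum_i\sbb_i\zb_i)(\sum_i\sbb_i\zb_i)^\top$ times $\sum_i e^{-\la\zb_i,\btheta_0\ra}\zb_i\btheta_0^\top$, and $J_2$ is the analogue with $\sum_i \sbb_i\zb_i$ in place of the exponential-weighted sum.

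\textbf{Showing each $J_k$ is a multiple of $\Ib_d$.} I would use the rotation $\Ab=[\btheta_*,\bxi_2,\bxi_3,\ldots,\bxi_d]$ from the proof of Lemma~\ref{lemma:update_C_1}, together with Lemmas~\ref{lemma:independence_sign_normal} and~\ref{lemma:independence_unif_normal}. These give that the coordinates $\tilde\zb_{i,j}$ are independent, that they are independent of the frame, and that all weights $\sbb_i$ and $e^{-\la\zb_i,\btheta_0\ra}$ depend only on $\tilde\zb_{i,1},\tilde\zb_{i,2}$. The key decomposition is $\btheta_0=\la\btheta_0,\btheta_*\ra\btheta_*+\|(\Ib-\btheta_*\btheta_*^\top)\btheta_0\|_2\,\bxi_2$. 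Expanding $\Ab\tilde\zb$ shows that each $J_k$ is a sum of terms of the form $\EE[\text{scalar}]\cdot\EE[\bxi_{j}\bxi_{j'}^\top]$-type products, with $\btheta_0^\top$ rewritten in the frame. Cross terms with unmatched indices $j\ge3$ vanish, because $\tilde\zb_{i,j}$ is symmetric and mean zero. Terms pairing an odd number of a symmetric $\bxi_j$ vanish by sign symmetry of the frame. The surviving terms are diagonal in the frame, and after averaging over the Haar-distributed frame (using $\EE[\btheta_*\btheta_*^\top]=\Ib_d/d$ and its analogues, Lemma~\ref{lemma:unit_rv_I}) they are proportional to $\Ib_d$. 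There is one subtlety: the products $\btheta_*\btheta_0^\top$ and $\bxi_2\btheta_0^\top$ bring in the factors $\la\btheta_0,\btheta_*\ra$ and $\|(\Ib-\btheta_*\btheta_*^\top)\btheta_0\|_2$. These are correlated with the scalar expectations through $\la\btheta_0,\zb_i\ra$, so I must condition on them and then integrate over $\la\btheta_0,\btheta_*\ra$, whose law is concentrated at scale $1/\sqrt d$. This preserves the $\Ib_d$ proportionality, because the whole expression is invariant under a simultaneous rotation of $(\btheta_*,\btheta_0,\xb_i)$. In fact, rotation invariance gives $J_k=\Ob J_k\Ob^\top$ for every orthogonal $\Ob$, which forces $J_k=c\Ib_d$ directly. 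I would use this argument for the structural claim, and reserve the frame computation for the constants.

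\textbf{Evaluating the constants.} The leading contributions come from the $j\ge3$ diagonal terms: $\EE[\sum_i\sbb_i e^{-\la\zb_i,\btheta_0\ra}\tilde\zb_{i,j}^2]$ and the like, combined with the $\bxi_2$-direction of $\btheta_0$. These produce $\sigma^4\alpha e^{\sigma^2/2}(1+\tfrac dn e^{C_2\sigma^2})$ and $C_1C_2\sigma^4(1+\tfrac dn e^{C_2^2\sigma^2})$. The factor $C_2$ arises from differentiating the softmax, via Stein's lemma applied to $\tilde\zb_{i,2}$. The factor $1/d$ comes from the frame average. I would invoke the expectation lemmas of Section~\ref{section:expectation_calculation} (Lemmas~\ref{lemma:softmax_expectation_I}--\ref{lemma:softmax_expectation_VI}) for the single-index and diagonal pieces. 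All lower-order pieces would be collected into $F_{5,\sigma},\ldots,F_{8,\sigma}$, which are continuously differentiable in $C_2$ because they are Gaussian integrals of smooth integrands with exponential moments.

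\textbf{Main obstacle.} The hardest step is the third-order softmax terms, $\sbb\sbb^\top$ multiplied by $\sum_i\sbb_i\zb_i$ or by the exponential sum. There I must show that the off-diagonal $i\neq i'$ contributions produce exactly the stated leading $1$ plus $O(1/n)$ and $O(1/d)$ corrections. I would handle this by replacing the softmax denominator by its mean $n\,\EE e^{-C_2\la\zb,\btheta_0\ra}\approx n e^{C_2^2\sigma^2/2}$, and controlling the fluctuation by a second-order expansion, as is done for $I_{2,1}$ and $I_{2,2}$.
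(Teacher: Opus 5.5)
Your setup is the paper's: the induction, the gradient from Lemma~\ref{lemma:gradient_formula}, and the split into $J_1,J_2$, which are the paper's $I_3,I_4$. Your structural step is correct and cleaner than the paper's. The joint law of $(\btheta_*,\btheta_0,\xb_1,\dots,\xb_n)$ is invariant under a simultaneous rotation $\Rb$, and the matrix integrands are conjugated by $\Rb$, so $J_k=\Rb J_k\Rb^\top$ for every orthogonal $\Rb$. This forces $J_k\in\RR\,\Ib_d$ without the paper's frame-by-frame factorization of $\EE[\text{scalar}\cdot\bxi_j\bxi_j^\top]$. It also reduces each constant to the scalar $\mathrm{tr}(J_k)/d=\EE[\btheta_0^\top\mathbf{S}\mathbf{v}_k]/d$, where $\mathbf{S}=\Zb_{\mathrm{context}}(\diag(\sbb)-\sbb\sbb^\top)\Zb_{\mathrm{context}}^\top$, $\mathbf{v}_1=\sum_i e^{-\la\zb_i,\btheta_0\ra}\zb_i$ and $\mathbf{v}_2=\sum_i\sbb_i\zb_i$.

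The gap is in evaluating these traces, which is the real content of the lemma. You attribute the leading term to the $j\ge3$ diagonal pieces and plan to read it off Lemmas~\ref{lemma:softmax_expectation_I}--\ref{lemma:softmax_expectation_VI}. Both steps fail.
\begin{itemize}
\item Since $\EE[\tilde\zb_{i,j}\tilde\zb_{i',j}]=0$ for $i\neq i'$, every $j\ge3$ piece forces a same-sample pairing. It is $O(d)$ inside $\mathrm{tr}(J_1)$ and $O(d/n)$ inside $\mathrm{tr}(J_2)$, against leading orders $O(n)$ and $O(1)$. So these pieces produce only the corrections $\tfrac{d}{n}e^{C_2\sigma^2}$ and $\tfrac{d}{n}e^{C_2^2\sigma^2}$ (the paper's $I_{3,3}-I_{3,6}$ and $I_{4,3}-I_{4,6}$), never the $1$.
\item Lemmas~\ref{lemma:softmax_expectation_I}--\ref{lemma:softmax_expectation_VI} handle the second-order quantities of the $C_1$ update. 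Here every term is cubic in the sample coordinates, carries the factor $K_2=\|(\Ib_d-\btheta_*\btheta_*^\top)\btheta_0\|_2$, and has up to three softmax factors. That is what Lemmas~\ref{lemma:softmax_expectation_VII}--\ref{lemma:softmax_expectation_XVIII} supply.
\end{itemize}

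The $1$ comes from cross-sample pairs in the $\{\btheta_*,\bxi_2\}$ block, and only after two cancellations between the $\diag(\sbb)$ and $\sbb\sbb^\top$ parts.
\begin{itemize}
\item The $\btheta_*$-direction terms cancel: these are $\tfrac{2}{\pi}nC_2\sigma^4e^{\sigma^2/2}$ in $J_1$ and $\tfrac{2}{\pi}C_2\sigma^4$ in $J_2$ (the paper's $I_{3,1}$ vs.\ $I_{3,4}$ and $I_{4,1}$ vs.\ $I_{4,4}$).
\item The $C_2^2\sigma^2$ terms cancel, because the tilted second moment of $\tilde\zb_{i,2}$ minus its squared tilted mean is $(\sigma^2+C_2^2\sigma^4)-C_2^2\sigma^4=\sigma^2$ (the paper's $I_{3,2}$ vs.\ $I_{3,5}$ and $I_{4,2}$ vs.\ $I_{4,5}$).
\end{itemize}
To leading order (with $K_1\approx0$, $K_2\approx1$), what survives is $\sigma^2\cdot n\,\EE[\tilde\zb_{i,2}e^{-\tilde\zb_{i,2}}]=-n\sigma^4e^{\sigma^2/2}$ for $J_1$ and $\sigma^2\cdot(-C_2\sigma^2)=-C_2\sigma^4$ for $J_2$. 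Both are negative. With $\Wb_{12}=-C_2\Ib_d$, this sign is exactly what produces the $+$ in the stated update, so track it explicitly. The paper's displayed $I_3,I_4$ omit it, although its final rule is the one consistent with it.

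Your ``replace the denominator by its mean'' step is the right heuristic, but making it rigorous with $O(1/n)$ errors uniform in $C_2\in[0,2]$ needs two further ingredients. First, decouple numerator and denominator. The paper writes $S^{-k}=\frac{1}{(k-1)!}\int_0^\infty s^{k-1}e^{-sS}\,\mathrm{d}s$, after which, conditionally on $K_1$, the expectation factorizes over the i.i.d.\ samples. Second, control the lower tail of $S$ (Lemma~\ref{lemma:concentration_expectation_II}). The $1/d$ remainders then come from expanding in $K_1$ with $\EE[K_1^2]=1/d$.
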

% Notice that the mask matrix $\Mb$ prevents attending to the last query column, resulting in that 
% \begin{align*}
%     \mathrm{softmax}(\Zb^\top \Wb [\mathbf{0}_d, \btheta_0]^\top+\Mb) =[\mathrm{softmax}\big([\Zb_{\mathrm{context}}^\top, \mathbf{0}_{n\times d}] \Wb [\mathbf{0}_d, \btheta_0]^\top\big), 0]\in \RR^{}
% \end{align*}
% \begin{align*}\\
%     =&\EE\Bigg[-\bigg(\frac{\alpha}{n}\sum_{i=1}^ne^{-\la\zb_i,\btheta_0 \ra}\zb_i-\Vb \Zb\sbb\bigg)^\top\mathrm{d}\Vb\Zb\sbb \Bigg] \\
%     &+
%     \EE\Bigg[-\bigg(\frac{\alpha}{n}\sum_{i=1}^ne^{-\la\zb_i,\btheta_0 \ra}\zb_i-\Vb \Zb\sbb\bigg)^\top\Vb\Zb\mathrm{d}\mathrm{softmax}(\Zb^\top \Wb [\mathbf{0}_d, \btheta_0]^\top+\Mb) \Bigg]
% \end{align*}

% \begin{align*}
%      &\nabla_{\Vb_{21}} \cL(\Vb, \Wb) = -\EE\Bigg[\bigg(\frac{\alpha}{n}\sum_{i=1}^ne^{-\la\zb_i,\btheta_0 \ra}\zb_i -\Vb_{21}\sum_{i=1}^n\zb_i\sbb_i\bigg)\sum_{i=1}^n\zb_i^\top \sbb_i\Bigg];\\
%      &\nabla_{\Wb_{12}} \cL(\Vb, \Wb) = -\EE\Bigg[\Zb_{\mathrm{context}}\Big(\diag(\sbb) - \sbb\sbb^\top\Big)\Zb_{\mathrm{context}}^\top\Vb_{21}^\top\bigg(\frac{\alpha}{n}\sum_{i=1}^ne^{-\la\zb_i,\btheta_0 \ra}\zb_i -\Vb_{21}\sum_{i=1}^n\zb_i\sbb_i\bigg)\btheta_0^\top\Bigg].
% \end{align*}
\begin{proof}[Proof of Lemma~\ref{lemma:update_C_2}]
     Suppose that $\Vb_{21} = C_1^{(t)}\cdot\Ib_d$ and $\Wb_{21} = -C_2^{(t)}\cdot\Ib_d$, then by Lemma~\ref{lemma:gradient_formula}, we can calculate that
\begin{align*}
    &\nabla_{\Wb_{12}} \cL_{\mathrm{train}}(\Vb^{(t)}, \Wb^{(t)}) \\=& -\EE\Bigg[\Zb_{\mathrm{context}}\Big(\diag(\sbb^{(t)}) \!- \!\sbb^{(t)}(\sbb^{(t)})^\top\!\Big)\Zb_{\mathrm{context}}^\top(\Vb_{21}^{(t)})^\top\!\bigg(\frac{\alpha}{n}\sum_{i=1}^ne^{-\la\zb_i,\btheta_0 \ra}\zb_i \!-\!\Vb_{21}^{(t)}\sum_{i=1}^n\zb_i\sbb_i^{(t)}\!\bigg)\btheta_0\Bigg]\\
    =&-\frac{\alpha C_1^{(t)}}{n}\underbrace{\EE\Bigg[\sum_{i=1}^n\sum_{i_1=1}^n\sum_{i_2=1}^n e^{-\la\zb_{i},\btheta_0 \ra}\sbb_{i_1}^{(t)}\sbb_{i_2}^{(t)}
    %\|\zb_{i_1}\|_2^2 
    (\zb_{i_1}-\zb_{i_2})\zb_{i_1}^\top\zb_{i}\btheta_0^\top\Bigg]}_{I_3}\\
    &+ C_1^2(t) \underbrace{\EE\Bigg[\sum_{i=1}^n\sum_{i_1=1}^n\sum_{i_2=1}^n \sbb_{i}^{(t)}\sbb_{i_1}^{(t)}\sbb_{i_2}^{(t)}
    %\|\zb_{i_1}\|_2^2 
    (\zb_{i_1}-\zb_{i_2})\zb_{i_1}^\top\zb_{i}\btheta_0^\top\Bigg]}_{I_4}
\end{align*}
Utilizing the same definition of the orthogonal matrix $\Ab$, $\tilde \zb_i$ for all $i\in[n]$, and $\bxi_j$ for all $j\in[d]$ as in the proof of Lemma~\ref{lemma:update_C_1}, we have 
\begin{align}\label{eq:W_update_I3}
I_3  =& \EE\Bigg[\sum_{i=1}^n\sum_{i_1=1}^n\sum_{i_2=1}^n e^{-\la\zb_{i},\btheta_0 \ra}\sbb_{i_1}^{(t)}\sbb_{i_2}^{(t)}    \Ab\Ab^\top(\zb_{i_1}-\zb_{i_2})\zb_{i_1}^\top\Ab\Ab^\top\zb_{i}\btheta_0^\top\Bigg] \notag\\
=& \EE\Bigg[\sum_{i=1}^n\sum_{i_1=1}^n\sum_{i_2=1}^n e^{-\la\zb_{i},\btheta_0 \ra}\sbb_{i_1}^{(t)}\sbb_{i_2}^{(t)}    \Ab(\tilde \zb_{i_1}-\tilde \zb_{i_2})\tilde \zb_{i_1}^\top\tilde \zb_{i}\btheta_0^\top\Bigg] \notag\\
=& \EE\Bigg[\sum_{i=1}^n\sum_{i_1=1}^n\sum_{i_2=1}^n \sum_{j=1}^d e^{-\la\zb_{i},\btheta_0 \ra}\sbb_{i_1}^{(t)}\sbb_{i_2}^{(t)}(\tilde \zb_{i_1, j}-\tilde \zb_{i_2, j})\tilde \zb_{i_1}^\top\tilde \zb_{i} \bxi_j\btheta_0^\top\Bigg] \notag\\
=& \EE\Bigg[\|(\Ib_d - \btheta_*\btheta_*^\top)\btheta_0\|_2\sum_{i=1}^n\sum_{i_1=1}^n\sum_{i_2=1}^n e^{-\la\zb_{i},\btheta_0 \ra}\sbb_{i_1}^{(t)}\sbb_{i_2}^{(t)}(\tilde \zb_{i_1, 2}-\tilde \zb_{i_2, 2})\tilde \zb_{i_1}^\top\tilde \zb_{i} \bxi_2\bxi_2^\top\Bigg] \notag\\
&+\EE\Bigg[\sum_{i=1}^n\sum_{i_1=1}^n\sum_{i_2=1}^n \sum_{j\neq 2} e^{-\la\zb_{i},\btheta_0 \ra}\sbb_{i_1}^{(t)}\sbb_{i_2}^{(t)}(\tilde \zb_{i_1, j}-\tilde \zb_{i_2, j})\tilde \zb_{i_1}^\top\tilde \zb_{i} \bxi_j\btheta_0^\top\Bigg]\notag\\
=&\frac{1}{d}\EE\Big[\|(\Ib_d - \btheta_*\btheta_*^\top)\btheta_0\|_2\sum_{i=1}^n\sum_{i_1=1}^n\sum_{i_2=1}^n e^{-\la\zb_{i},\btheta_0 \ra}\sbb_{i_1}^{(t)}\sbb_{i_2}^{(t)}(\tilde \zb_{i_1, 2}-\tilde \zb_{i_2, 2})\tilde \zb_{i_1}^\top\tilde \zb_{i} \Big]\cdot\Ib_d.
%\EE\bigg[\frac{(\Ib_d - \btheta_*\btheta_*^\top)\btheta_0(\Ib_d - \btheta_*\btheta_*^\top)\btheta_0^\top}{\|(\Ib_d - \btheta_*\btheta_*^\top)\btheta_0\|_2^2}\bigg] 
\end{align}
Here the second term $\EE\big[\sum_{i=1}^n\sum_{i_1=1}^n\sum_{i_2=1}^n \sum_{j\neq 2} e^{-\la\zb_{i},\btheta_0 \ra}\sbb_{i_1}^{(t)}\sbb_{i_2}^{(t)}(\tilde \zb_{i_1, j}-\tilde \zb_{i_2, j})\tilde \zb_{i_1}^\top\tilde \zb_{i} \bxi_j\btheta_0^\top\big]$ because for any $j\neq 2$, we have $\EE\big[ e^{-\la\zb_{i},\btheta_0 \ra}\sbb_{i_1}^{(t)}\sbb_{i_2}^{(t)}(\tilde \zb_{i_1, j}-\tilde \zb_{i_2, j})\tilde \zb_{i_1}^\top\tilde \zb_{i} \bxi_j\btheta_0^\top\big] = \EE\big[e^{-\la\zb_{i},\btheta_0 \ra}\sbb_{i_1}^{(t)}\sbb_{i_2}^{(t)}(\tilde \zb_{i_1, j}-\tilde \zb_{i_2, j})\tilde \zb_{i_1}^\top\tilde \zb_{i} \bxi_j\big]\EE[\btheta_0^\top] = \mathbf 0$ through a similar argument in Lemma~\ref{lemma:update_C_1}. Then we get~\eqref{eq:W_update_I3} by plugging the definition that $\bxi_2 = \frac{(\Ib_d - \btheta_*\btheta_*^\top)\btheta_0}{\|(\Ib_d - \btheta_*\btheta_*^\top)\btheta_0\|_2}$, and the fact that $\EE[\bxi_j\bxi_j^\top] =\frac{1}{d}\Ib_d$. This demonstrates that $I_3$ is always proportional to $\Ib_d$. In the next, we calculate the coefficient in~\eqref{eq:W_update_I3}. We first separate the term $\EE\big[\|(\Ib_d - \btheta_*\btheta_*^\top)\btheta_0\|_2\sum_{i=1}^n\sum_{i_1=1}^n\sum_{i_2=1}^n e^{-\la\zb_{i},\btheta_0 \ra}\sbb_{i_1}^{(t)}\sbb_{i_2}^{(t)}(\tilde \zb_{i_1, 2}-\tilde \zb_{i_2, 2})\tilde \zb_{i_1}^\top\tilde \zb_{i} \big]$ as
\begin{align}\label{eq:W_update_I3_separation}
    &\EE\Big[\|(\Ib_d - \btheta_*\btheta_*^\top)\btheta_0\|_2\sum_{i=1}^n\sum_{i_1=1}^n\sum_{i_2=1}^n e^{-\la\zb_{i},\btheta_0 \ra}\sbb_{i_1}^{(t)}\sbb_{i_2}^{(t)}(\tilde \zb_{i_1, 2}-\tilde \zb_{i_2, 2})\tilde \zb_{i_1}^\top\tilde \zb_{i} \Big] \notag\\
    = &  \underbrace{\EE\Big[\|(\Ib_d - \btheta_*\btheta_*^\top)\btheta_0\|_2\sum_{i=1}^n\sum_{i_1=1}^n e^{-\la\zb_{i},\btheta_0 \ra}\sbb_{i_1}^{(t)}\tilde \zb_{i_1, 2}\tilde \zb_{i_1, 1}\tilde \zb_{i, 1} \Big]}_{I_{3, 1}} \notag\\
    &+ \underbrace{\EE\Big[\|(\Ib_d - \btheta_*\btheta_*^\top)\btheta_0\|_2\sum_{i=1}^n\sum_{i_1=1}^n e^{-\la\zb_{i},\btheta_0 \ra}\sbb_{i_1}^{(t)}\tilde \zb_{i_1, 2}\tilde \zb_{i_1, 2}\tilde \zb_{i, 2} \Big]}_{I_{3,2}}\notag\\ 
    &+ \underbrace{\EE\Big[\|(\Ib_d - \btheta_*\btheta_*^\top)\btheta_0\|_2\sum_{i=1}^n\sum_{i_1=1}^n \sum_{j=3}^d e^{-\la\zb_{i},\btheta_0 \ra}\sbb_{i_1}^{(t)}\tilde \zb_{i_1, 2}\tilde \zb_{i_1, j}\tilde \zb_{i, j} \Big]}_{I_{3,3}}\notag\\
    &- \underbrace{\EE\Big[\|(\Ib_d - \btheta_*\btheta_*^\top)\btheta_0\|_2\sum_{i=1}^n\sum_{i_1=1}^n\sum_{i_2=1}^n e^{-\la\zb_{i},\btheta_0 \ra}\sbb_{i_1}^{(t)}\sbb_{i_2}^{(t)}\tilde \zb_{i_2, 2}\tilde \zb_{i_1, 1}\tilde \zb_{i, 1} \Big]}_{I_{3,4}} \notag\\ 
    &- \underbrace{\EE\Big[\|(\Ib_d - \btheta_*\btheta_*^\top)\btheta_0\|_2\sum_{i=1}^n\sum_{i_1=1}^n \sum_{i_2=1}^n e^{-\la\zb_{i},\btheta_0 \ra}\sbb_{i_1}^{(t)}\sbb_{i_2}^{(t)}\tilde \zb_{i_2, 2}\tilde \zb_{i_1, 2}\tilde \zb_{i, 2} \Big]}_{I_{3,5}}\notag\\
    &- \underbrace{\EE\Big[\|(\Ib_d - \btheta_*\btheta_*^\top)\btheta_0\|_2\sum_{i=1}^n\sum_{i_1=1}^n \sum_{i_2=1}^n\sum_{j=3}^d e^{-\la\zb_{i},\btheta_0 \ra}\sbb_{i_1}^{(t)}\sbb_{i_2}^{(t)}\tilde \zb_{i_2, 2}\tilde \zb_{i_1, j}\tilde \zb_{i, j} \Big]}_{I_{3,6}}.
\end{align}
We carefully calculate these terms trough the lemmas provided in Appendix~\ref{section:expectation_calculation}, respectively. For the term $I_{3, 1}$, by Lemma~\ref{lemma:softmax_expectation_VIII}, we have
\begin{align*}
    I_{3, 1} = \frac{2}{\pi} \sigma^4e^{\frac{\sigma^2}{2}} nC_2^{(t)} +f_1(C_2^{(t)}, \sigma) +\frac{n f_2(C_2^{(t)}, \sigma)}{d},
\end{align*}
where $f_1$, $f_2$ are two analytic functions of $C_2^{(t)}$ and $\sigma$. 
For the term $I_{3, 2}$, by Lemma~\ref{lemma:softmax_expectation_VII}, we have
\begin{align*}
    I_{3, 2} = \sigma^4e^{\frac{\sigma^2}{2}} n\big(\sigma^2[C_2^{(t)}]^2 + 1\big) +f_3(C_2^{(t)}, \sigma) +\frac{n f_4(C_2^{(t)}, \sigma)}{d}.
\end{align*}
For the term $I_{3, 3}$, we have
\begin{align*}
    I_{3, 3} =& \sum_{i=1}^n\sum_{j=3}^d \EE\big[\|(\Ib_d - \btheta_*\btheta_*^\top)\btheta_0\|_2 e^{-\la\zb_{i},\btheta_0 \ra}\sbb_{i}^{(t)}\tilde \zb_{i, 2}\big]\EE[\tilde \zb_{i, j}^2 ] \\
    &+ \sum_{i=1}^n\sum_{i_1\neq i}\sum_{j=3}^d \EE\big[\|(\Ib_d - \btheta_*\btheta_*^\top)\btheta_0\|_2 e^{-\la\zb_{i},\btheta_0 \ra}\sbb_{i}^{(t)}\tilde \zb_{i, 2}\big]\EE[\tilde \zb_{i, j}]\EE[\tilde \zb_{i_1, j}]\\
    =&\sum_{i=1}^n\sum_{j=3}^d \EE\big[\|(\Ib_d - \btheta_*\btheta_*^\top)\btheta_0\|_2 e^{-\la\zb_{i},\btheta_0 \ra}\sbb_{i}^{(t)}\tilde \zb_{i, 2}\big]\EE[\tilde \zb_{i, j}^2 ]\\
    =&d\big(1+C_2^{(t)}\big)\sigma^4e^{(2C_2^{(t)}+1)\sigma^2/2} +f_5(C_2^{(t)}, \sigma) +\frac{n f_6(C_2^{(t)}, \sigma)}{d} .
\end{align*}
The second equation holds as by the independence between $\tilde \zb_{i_1, j}$ and $\tilde \zb_{i, j}$ when $i_1\neq i$, and the fact that $\EE[\tilde \zb_{i_1, j}]=\EE[\tilde \zb_{i, j}] = 0$. The last equation is calculated based on Lemma~\ref{lemma:softmax_expectation_IX}. For the term $I_{3, 4}$, by Lemma~\ref{lemma:softmax_expectation_XI}, we have
\begin{align*}
    I_{3, 4} = \frac{2}{\pi} \sigma^4e^{\frac{\sigma^2}{2}} nC_2^{(t)} +f_7(C_2^{(t)}, \sigma) +\frac{n f_8(C_2^{(t)}, \sigma)}{d}.
\end{align*}
For the term $I_{3, 5}$, by Lemma~\ref{lemma:softmax_expectation_X}, we have
\begin{align*}
    I_{3, 5} = \sigma^6e^{\frac{\sigma^2}{2}} n[C_2^{(t)}]^2 +f_9(C_2^{(t)}, \sigma) +\frac{n f_{10}(C_2^{(t)}, \sigma)}{d}.
\end{align*}
For the term $I_{3, 6}$, similar to the procedure of calculation for $I_{3, 3}$ and utilizing the result of Lemma~\ref{lemma:softmax_expectation_XII}, we have
\begin{align*}
    I_{3, 6} =& \sum_{i=1}^n\sum_{i_2=1}^n \sum_{j=3}^d\EE\big[\|(\Ib_d - \btheta_*\btheta_*^\top)\btheta_0\|_2 e^{-\la\zb_{i},\btheta_0 \ra}\sbb_{i}^{(t)}\sbb_{i_2}^{(t)}\tilde \zb_{i, 2}\big]\EE[\tilde \zb_{i, j}^2 ]   \\=&d\sigma^4e^{(2C_2^{(t)}+1)\sigma^2/2}C_2^{(t)} +f_{11}(C_2^{(t)}, \sigma) +\frac{n f_{12}(C_2^{(t)}, \sigma)}{d}.
\end{align*}
Plugging all these results into \eqref{eq:W_update_I3_separation}, we obtain that 
\begin{align}\label{eq:W_update_I3_result}
    I_3 = \frac{n\sigma^4e^{\frac{\sigma^2}{2}}}{d}\bigg(1+\frac{de^{C_2^{(t)}\sigma^2}}{n} + \frac{F_{5, \sigma}(C_2^{(t)})}{n} + \frac{F_{6, \sigma}(C_2^{(t)})}{d}\bigg)\Ib_d.
\end{align}
In the next, we consider calculating $I_4$ through a similar procedure as 
\begin{align*}
    I_4  
    =& \EE\Bigg[\sum_{i=1}^n\sum_{i_1=1}^n\sum_{i_2=1}^n \sbb_{i}^{(t)}\sbb_{i_1}^{(t)}\sbb_{i_2}^{(t)}
    %\|\zb_{i_1}\|_2^2 
    \Ab\Ab^\top(\zb_{i_1}-\zb_{i_2})\zb_{i_1}^\top\Ab\Ab^\top\zb_{i}\btheta_0^\top\Bigg]\notag\\
=& \EE\Bigg[\sum_{i=1}^n\sum_{i_1=1}^n\sum_{i_2=1}^n \sbb_{i}^{(t)}\sbb_{i_1}^{(t)}\sbb_{i_2}^{(t)}    \Ab(\tilde \zb_{i_1}-\tilde \zb_{i_2})\tilde \zb_{i_1}^\top\tilde \zb_{i}\btheta_0^\top\Bigg] \notag\\
=& \EE\Bigg[\sum_{i=1}^n\sum_{i_1=1}^n\sum_{i_2=1}^n \sum_{j=1}^d \sbb_{i}^{(t)}\sbb_{i_1}^{(t)}\sbb_{i_2}^{(t)}(\tilde \zb_{i_1, j}-\tilde \zb_{i_2, j})\tilde \zb_{i_1}^\top\tilde \zb_{i} \bxi_j\btheta_0^\top\Bigg] \notag\\
% =& \EE\Bigg[\|(\Ib_d - \btheta_*\btheta_*^\top)\btheta_0\|_2\sum_{i=1}^n\sum_{i_1=1}^n\sum_{i_2=1}^n \sbb_{i}^{(t)}\sbb_{i_1}^{(t)}\sbb_{i_2}^{(t)}(\tilde \zb_{i_1, 2}-\tilde \zb_{i_2, 2})\tilde \zb_{i_1}^\top\tilde \zb_{i} \bxi_2\bxi_2^\top\Bigg] \notag\\
% &+\EE\Bigg[\sum_{i=1}^n\sum_{i_1=1}^n\sum_{i_2=1}^n \sum_{j\neq 2} e^{-\la\zb_{i},\btheta_0 \ra}\sbb_{i_1}^{(t)}\sbb_{i_2}^{(t)}(\tilde \zb_{i_1, j}-\tilde \zb_{i_2, j})\tilde \zb_{i_1}^\top\tilde \zb_{i} \bxi_j\btheta_0^\top\Bigg]\notag\\
=&\frac{1}{d}\EE\Big[\|(\Ib_d - \btheta_*\btheta_*^\top)\btheta_0\|_2\sum_{i=1}^n\sum_{i_1=1}^n\sum_{i_2=1}^n \sbb_{i}^{(t)}\sbb_{i_1}^{(t)}\sbb_{i_2}^{(t)}(\tilde \zb_{i_1, 2}-\tilde \zb_{i_2, 2})\tilde \zb_{i_1}^\top\tilde \zb_{i} \Big]\cdot\Ib_d.
\end{align*}
This result demonstrates that $I_4$ is also proportional to $\Ib_{d}$. For the coefficient \\
$\EE\big[\|(\Ib_d - \btheta_*\btheta_*^\top)\btheta_0\|_2\sum_{i=1}^n\sum_{i_1=1}^n\sum_{i_2=1}^n \sbb_{i}^{(t)}\sbb_{i_1}^{(t)}\sbb_{i_2}^{(t)}(\tilde \zb_{i_1, 2}-\tilde \zb_{i_2, 2})\tilde \zb_{i_1}^\top\tilde \zb_{i} \big]$, we separate it as 
\begin{align}\label{eq:W_update_I4_separation}
    &\EE\Big[\|(\Ib_d - \btheta_*\btheta_*^\top)\btheta_0\|_2\sum_{i=1}^n\sum_{i_1=1}^n\sum_{i_2=1}^n \sbb_{i}^{(t)}\sbb_{i_1}^{(t)}\sbb_{i_2}^{(t)}(\tilde \zb_{i_1, 2}-\tilde \zb_{i_2, 2})\tilde \zb_{i_1}^\top\tilde \zb_{i} \Big] \notag\\
    = &  \underbrace{\EE\Big[\|(\Ib_d - \btheta_*\btheta_*^\top)\btheta_0\|_2\sum_{i=1}^n\sum_{i_1=1}^n \sbb_{i}^{(t)}\sbb_{i_1}^{(t)}\tilde \zb_{i_1, 2}\tilde \zb_{i_1, 1}\tilde \zb_{i, 1} \Big]}_{I_{4, 1}}+ \underbrace{\EE\Big[\|(\Ib_d - \btheta_*\btheta_*^\top)\btheta_0\|_2\sum_{i=1}^n\sum_{i_1=1}^n \sbb_{i}^{(t)}\sbb_{i_1}^{(t)}\tilde \zb_{i_1, 2}\tilde \zb_{i_1, 2}\tilde \zb_{i, 2} \Big]}_{I_{4,2}}\notag\\ 
    &+ \underbrace{\EE\Big[\|(\Ib_d - \btheta_*\btheta_*^\top)\btheta_0\|_2\sum_{i=1}^n\sum_{i_1=1}^n \sum_{j=3}^d \sbb_{i}^{(t)}\sbb_{i_1}^{(t)}\tilde \zb_{i_1, 2}\tilde \zb_{i_1, j}\tilde \zb_{i, j} \Big]}_{I_{4,3}}\notag\\
    &- \underbrace{\EE\Big[\|(\Ib_d - \btheta_*\btheta_*^\top)\btheta_0\|_2\sum_{i=1}^n\sum_{i_1=1}^n\sum_{i_2=1}^n \sbb_{i}^{(t)}\sbb_{i_1}^{(t)}\sbb_{i_2}^{(t)}\tilde \zb_{i_2, 2}\tilde \zb_{i_1, 1}\tilde \zb_{i, 1} \Big]}_{I_{4,4}} \notag\\ 
    &- \underbrace{\EE\Big[\|(\Ib_d - \btheta_*\btheta_*^\top)\btheta_0\|_2\sum_{i=1}^n\sum_{i_1=1}^n \sum_{i_2=1}^n \sbb_{i}^{(t)}\sbb_{i_1}^{(t)}\sbb_{i_2}^{(t)}\tilde \zb_{i_2, 2}\tilde \zb_{i_1, 2}\tilde \zb_{i, 2} \Big]}_{I_{4,5}}\notag\\
    &- \underbrace{\EE\Big[\|(\Ib_d - \btheta_*\btheta_*^\top)\btheta_0\|_2\sum_{i=1}^n\sum_{i_1=1}^n \sum_{i_2=1}^n\sum_{j=3}^d \sbb_{i}^{(t)}\sbb_{i_1}^{(t)}\sbb_{i_2}^{(t)}\tilde \zb_{i_2, 2}\tilde \zb_{i_1, j}\tilde \zb_{i, j} \Big]}_{I_{4,6}}.
\end{align}
We carefully calculate these terms trough the lemmas provided in Appendix~\ref{section:expectation_calculation}, respectively. For the term $I_{4, 1}$, by Lemma~\ref{lemma:softmax_expectation_XIV}, we have
\begin{align*}
    I_{4, 1} = \frac{2}{\pi} \sigma^4 C_2^{(t)} +\frac{f_{13}(C_2^{(t)}, \sigma)}{n} +\frac{f_{14}(C_2^{(t)}, \sigma)}{d}.
\end{align*}
For the term $I_{4, 2}$, by Lemma~\ref{lemma:softmax_expectation_XIII}, we have
\begin{align*}
    I_{4, 2} = C_2^{(t)}\sigma^4\big(\sigma^2[C_2^{(t)}]^2 + 1\big) +\frac{f_{15}(C_2^{(t)}, \sigma)}{n} +\frac{f_{16}(C_2^{(t)}, \sigma)}{d}.
\end{align*}
For the term $I_{4, 3}$, by Lemma~\ref{lemma:softmax_expectation_XV}, we have
\begin{align*}
    I_{4, 3} 
    =&\sum_{i=1}^n\sum_{j=3}^d \EE\big[\|(\Ib_d - \btheta_*\btheta_*^\top)\btheta_0\|_2 \big(\sbb_{i}^{(t)}\big)^2\tilde \zb_{i, 2}\big]\EE[\tilde \zb_{i, j}^2 ]\\
    =&\frac{2dC_2^{(t)}\sigma^4e^{\sigma^2[C_2^{(t)}]^2}}{n}+\frac{f_{17}(C_2^{(t)}, \sigma)}{n} +\frac{f_{18}(C_2^{(t)}, \sigma)}{d}.
\end{align*}
 For the term $I_{4, 4}$, by Lemma~\ref{lemma:softmax_expectation_XVII}, we have
\begin{align*}
    I_{4, 4} = \frac{2}{\pi} \sigma^4C_2^{(t)} +\frac{f_{19}(C_2^{(t)}, \sigma)}{n} +\frac{f_{20}(C_2^{(t)}, \sigma)}{d}.
\end{align*}
For the term $I_{4, 5}$, by Lemma~\ref{lemma:softmax_expectation_XVI}, we have
\begin{align*}
    I_{4, 5} = C_2^3(t)\sigma^6+\frac{f_{21}(C_2^{(t)}, \sigma)}{n} +\frac{f_{22}(C_2^{(t)}, \sigma)}{d}.
\end{align*}
For the term $I_{4, 6}$, similar to the procedure of calculation for $I_{4, 3}$ and utilizing the result of Lemma~\ref{lemma:softmax_expectation_XVIII}, we have
\begin{align*}
    I_{4, 6} =& \sum_{i=1}^n\sum_{i_2=1}^n \sum_{j=3}^d\EE\big[\|(\Ib_d - \btheta_*\btheta_*^\top)\btheta_0\|_2 \big(\sbb_{i}^{(t)}\big)^2\sbb_{i_2}^{(t)}\tilde \zb_{i, 2}\big]\EE[\tilde \zb_{i, j}^2 ]   \\=&\frac{dC_2^{(t)}\sigma^4e^{\sigma^2[C_2^{(t)}]^2}}{n}+\frac{f_{23}(C_2^{(t)}, \sigma)}{n} +\frac{f_{24}(C_2^{(t)}, \sigma)}{d}
\end{align*}
Plugging all these results into \eqref{eq:W_update_I4_separation}, we obtain that 
\begin{align}\label{eq:W_update_I4_result}
    I_4 = \frac{C_2^{(t)}\sigma^4}{d}\bigg(1+\frac{de^{[C_2^{(t)}]^2\sigma^2}}{n} + \frac{F_{7, \sigma}(C_2^{(t)})}{n} + \frac{F_{8, \sigma}(C_2^{(t)})}{d}\bigg)\Ib_d.
\end{align}
Therefore, we prove the induction that by assuming at $t$-th iteration, $\Vb_{21}^{(t)} = C_1^{(t)}\cdot\Ib_d$ and $\Wb_{12}^{(t)} = -C_2^{(t)}\cdot\Ib_d$, the gradient $\nabla_{\Wb_{12}} \cL_{\mathrm{train}}(\Vb^{(t)}, \Wb^{(t)})$ is also proportional to the identity matrix $\Ib_d$. In addition, ~\eqref{eq:W_update_I3_result} and ~\eqref{eq:W_update_I4_result} establish the iterative rule for the coefficient $C_1^{(t)}$ as 
\begin{align*}
    C_2^{(t+1)} =& C_2^{(t)} + \frac{ \eta C_1^{(t)}\sigma^4}{d}\Bigg[\alpha e^{\frac{\sigma^2}{2}}\bigg(1+\frac{de^{C_2^{(t)}\sigma^2}}{n} + \frac{F_{5, \sigma}(C_2^{(t)})}{n} + \frac{F_{6, \sigma}(C_2^{(t)})}{d}\bigg) \\
    &- C_1^{(t)}C_2^{(t)}\bigg(1+\frac{de^{[C_2^{(t)}]^2\sigma^2}}{n} + \frac{F_{7, \sigma}(C_2^{(t)})}{n} + \frac{F_{8, \sigma}(C_2^{(t)})}{d}\bigg)\Bigg].
\end{align*}
This completes the proof. 
\end{proof}
Therefore, we have shown that throughout training, the parameter matrices
$\Vb^{(t)}$ and $\Wb^{(t)}$ always remain in the structured parameterization
defined in Lemma~\ref{lemma:parameter_pattern}. Consequently, for our learning task,
analyzing the training loss $\cL_{\mathrm{train}}(\Vb, \Wb)$ along the
iterates $\Vb^{(t)}, \Wb^{(t)}$ is equivalent to studying a reduced
two-dimensional proxy loss
$\tilde{\cL}_{\mathrm{train}}(C_1, C_2)$ with respect to the coefficients
$C_1^{(t)}$ and $C_2^{(t)}$. Specifically, the proxy loss $\tilde{\cL}_{\mathrm{train}}(C_1, C_2)$ is defined as
\begin{align*}
    \tilde{\cL}_{\mathrm{train}}(C_1, C_2)
    :=
    \cL_{\mathrm{train}}\!\bigg(
    \begin{bmatrix}
        \mathbf{0}_{d\times d} & \mathbf{0}_{d\times d} \\
        C_1\cdot \Ib_d & \mathbf{0}_{d\times d}
    \end{bmatrix},
    \begin{bmatrix}
        \mathbf{0}_{d\times d} & -C_2\cdot \Ib_d \\
        \mathbf{0}_{d\times d} & \mathbf{0}_{d\times d}
    \end{bmatrix}
    \bigg).
\end{align*}

Under this parameterization, the update rules for $C_1^{(t)}$ and $C_2^{(t)}$
given in~\eqref{eq:updating_rules} correspond exactly to gradient descent applied
to $\tilde{\cL}_{\mathrm{train}}(C_1, C_2)$ with zero initialization.
From~\eqref{eq:updating_rules}, we further derive that
$\tilde{\cL}_{\mathrm{train}}(C_1, C_2)$ admits the following explicit form:
\begin{align*}
    \tilde{\cL}_{\mathrm{train}}(C_1, C_2)
    = \frac{\sigma^2}{d}\Bigg[
    &\frac{C_1^2}{2}
    \bigg(
        \frac{2}{\pi}
        + C_2^2 \sigma^2
        + \frac{d}{n} e^{C_2^2 \sigma^2}
        + \frac{F_{3,\sigma}(C_2)}{n}
        + \frac{F_{4,\sigma}(C_2)}{d}
    \bigg) \\
    &- \alpha e^{\sigma^2/2} C_1
    \bigg(
        \frac{2}{\pi}
        + C_2 \sigma^2
        + \frac{d}{n} e^{C_2 \sigma^2}
        + \frac{F_{1,\sigma}(C_2)}{n}
        + \frac{F_{2,\sigma}(C_2)}{d}
    \bigg)
    \Bigg]
    + c,
\end{align*}
where $c$ is a constant independent of $C_1$ and $C_2$.
And we can immediately conclude that $F_{1, \sigma}'(C_2) = F_{5, \sigma}
(C_2), F_{2, \sigma}'(C_2) = F_{6, \sigma}
(C_2), F_{3,\sigma}'(C_2) = F_{7, \sigma}
(C_2), F_{4, \sigma}'(C_2) = F_{8, \sigma}
(C_2)$. 
In addition, we define that 
\begin{align}\label{eq:G_functions}
G_{1}(C_2) =& \frac{2}{\pi} + C_2\sigma^2 + \frac{d}{n}e^{C_2\sigma^2} + \frac{F_{1, \sigma}(C_2)}{n} + \frac{F_{2, \sigma}(C_2)}{d};\notag\\
G_{2}(C_2) =& \frac{2}{\pi} + C_2^2\sigma^2 + \frac{d}{n}e^{C_2^2\sigma^2} + \frac{F_{3, \sigma}(C_2)}{n} + \frac{F_{4, \sigma}(C_2)}{d};\notag\\
G_{3}(C_2) =& 1+\frac{de^{C_2\sigma^2}}{n} + \frac{F_{5, \sigma}(C_2)}{n} + \frac{F_{6, \sigma}(C_2)}{d};\notag\\
G_{4}(C_2) =& 1+\frac{de^{C_2^2\sigma^2}}{n} + \frac{F_{7, \sigma}(C_2)}{n} + \frac{F_{8, \sigma}(C_2)}{d}.
\end{align}
With these notations, we prove the first conclusion in Theorem~\ref{thm:main_result_training}, which states that there exists an invariant compact set $\cR=[0, 2\alpha e^{\sigma^2/2}]\times [0, 2]$ for $\Cb^{(t)} = [C_1^{(t)}, C_2^{(t)}]^\top$ during the training.

\begin{lemma}[First conclusion in Theorem~\ref{thm:main_result_training}]\label{lemma:bdd_updte}
Consider the iterative rules for $\Cb^{(t)} = [C_1^{(t)}, C_2^{(t)}]^\top$ given in~\eqref{eq:updating_rules} with initializations $C_1^{(0)}, C_2^{(0)} = 0$. Suppose that $\eta \leq \cO\big(\frac{1}{n}\big)$, $\alpha \leq \cO(1)$ and $n\geq \Omega(d^2)$, then it holds that: 
\begin{align}\label{eq:bdd_updte}
    0\leq &C_1^{(t)} \leq 2\alpha e^{\sigma^2/2}; \ 0\leq C_2^{(t)} \leq 2,
\end{align}
for all $t\in \NN$. 
%Here $b$ is defined as $b= 2\vee 4\sigma^2\alpha^2e^{\sigma^2}=\CC{2}$.
\end{lemma}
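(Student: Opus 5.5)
We argue by induction on $t$, working directly with the two-dimensional recursion \eqref{eq:updating_rules} written via the functions $G_1,\dots,G_4$ of \eqref{eq:G_functions}:
\begin{align*}
C_1^{(t+1)} &= C_1^{(t)} + \tfrac{\eta\sigma^2}{d}\big[\alpha e^{\sigma^2/2}G_1(C_2^{(t)}) - C_1^{(t)}G_2(C_2^{(t)})\big],\\
C_2^{(t+1)} &= C_2^{(t)} + \tfrac{\eta C_1^{(t)}\sigma^4}{d}\big[\alpha e^{\sigma^2/2}G_3(C_2^{(t)}) - C_1^{(t)}C_2^{(t)}G_4(C_2^{(t)})\big].
\end{align*}
The base case $C_1^{(0)}=C_2^{(0)}=0$ is trivial.

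\textbf{Preliminary bounds on the $G_k$.} Since each $F_{k,\sigma}$ is continuous on the compact interval $C_2\in[0,2]$ and $\sigma=\cO(1)$, each is bounded there by a constant $K_\sigma$. With $n=\Omega(d^2)$ and $d$ large, the correction terms $\tfrac{d}{n}e^{\cdot}+\tfrac{F}{n}+\tfrac{F}{d}$ are $\cO(1/d)$. Hence on $[0,2]$:
\begin{itemize}
\item $G_1,G_2\in[\tfrac{2}{\pi}-\epsilon,\; \cO(1)]$ and $G_3,G_4\in[1-\epsilon,\,1+\epsilon]$, with $\epsilon=\cO(1/d)$ small;
\item the ratios satisfy $G_1(C_2)/G_2(C_2)\le 2$ and $G_3(C_2)/G_4(C_2)\le 1+\cO(1/d)$.
\end{itemize}

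For the ratio $G_1/G_2\le 2$, the leading parts are $\tfrac{2}{\pi}+C_2\sigma^2$ and $\tfrac{2}{\pi}+C_2^2\sigma^2$. I need $\tfrac{2}{\pi}+C_2\sigma^2\le 2(\tfrac{2}{\pi}+C_2^2\sigma^2)-\cO(1/d)$. This is $\tfrac{2}{\pi}+\sigma^2(2C_2^2-C_2)\ge\cO(1/d)$, which holds because $2C_2^2-C_2\ge -1/8$ and $\sigma^2/8<2/\pi$ for $\sigma=\cO(1)$ small enough. This is the quantitative point I expect to need care with, and it is where the smallness of $\sigma$ and $d$ large enter.

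\textbf{Inductive step for $C_1$.} Assume $(C_1^{(t)},C_2^{(t)})\in\cR$. The map $C_1\mapsto C_1(1-\tfrac{\eta\sigma^2}{d}G_2)+\tfrac{\eta\sigma^2}{d}\alpha e^{\sigma^2/2}G_1$ is affine. Its coefficient $1-\tfrac{\eta\sigma^2}{d}G_2$ lies in $[0,1]$, because $\eta\le\cO(1/n)$ makes $\eta\sigma^2G_2/d\ll1$. So $C_1^{(t+1)}$ is a convex combination of $C_1^{(t)}$ and the fixed point $\alpha e^{\sigma^2/2}G_1/G_2$. The fixed point lies in $[0,2\alpha e^{\sigma^2/2}]$ by the ratio bound and the positivity of $G_1$. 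Hence $C_1^{(t+1)}\in[0,2\alpha e^{\sigma^2/2}]$.

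\textbf{Inductive step for $C_2$.} Similarly, $C_2\mapsto C_2\big(1-\tfrac{\eta\sigma^4}{d}(C_1^{(t)})^2G_4\big)+\tfrac{\eta\sigma^4}{d}C_1^{(t)}\alpha e^{\sigma^2/2}G_3$. Here I freeze the argument of the $G$'s at $C_2^{(t)}$, so the map is genuinely affine in the leading $C_2$ factor. Since $C_1^{(t)}\le2\alpha e^{\sigma^2/2}=\cO(1)$ and $\eta=\cO(1/n)$, the coefficient lies in $[0,1]$.

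The added term is nonnegative since $C_1^{(t)}\ge0$ and $G_3>0$, which gives $C_2^{(t+1)}\ge0$. For the upper bound I split into two cases:
\begin{itemize}
\item If $C_2^{(t)}\ge \alpha e^{\sigma^2/2}G_3/(C_1^{(t)}G_4)$, the bracket is nonpositive and $C_2^{(t+1)}\le C_2^{(t)}\le2$.
\item Otherwise, $C_2^{(t+1)}\le C_2^{(t)}+\tfrac{\eta\sigma^4}{d}C_1^{(t)}\alpha e^{\sigma^2/2}G_3$, and I need this not to exceed $2$. The target is the fixed point $\alpha e^{\sigma^2/2}G_3/(C_1G_4)$, which blows up as $C_1\to0$.
\end{itemize}

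\textbf{Main obstacle.} The second case is the hard one: when $C_1$ is small, the fixed-point argument alone does not bound $C_2$. What I would try first is to control the ratio $C_2^{(t)}/C_1^{(t)}$ along the joint trajectory. Early on, $C_1$ grows at rate $\Theta(\eta/d)$, whereas $C_2$ grows only at rate $\eta C_1/d$. A comparison argument therefore shows $C_2^{(t)}\le \kappa\, C_1^{(t)}$ plus an $\cO(\eta)$ slack, as long as $C_1^{(t)}$ is below its limit. Once $C_1^{(t)}\ge \tfrac{2}{\pi}\alpha e^{\sigma^2/2}\cdot\tfrac{1}{2}$ roughly, the fixed point $\alpha e^{\sigma^2/2}G_3/(C_1G_4)\le 2$ holds, and the convex-combination argument with step size $\cO(\eta)$ keeps $C_2\le 2$. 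Combining the two regimes closes the induction. If this is too delicate, a fallback is to strengthen the induction hypothesis to include a lower bound $C_1^{(t)}\ge c\,C_2^{(t)}$ and verify that it is preserved.
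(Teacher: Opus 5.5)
Your architecture matches the paper's. You induct on $t$, and the only delicate step is $C_2^{(t)}\le 2$, handled in two phases: first compare increments $\Delta C_2\le\kappa\,\Delta C_1$ while $C_1$ is small, then use the sign of the $C_2$-bracket once $C_1$ is large. Your convex-combination treatment of $C_1^{(t+1)}\in[0,2\alpha e^{\sigma^2/2}]$ and of $C_2^{(t+1)}\ge 0$ is correct and tidier than the paper's threshold case splits. The bound $C_2\le 2$, however, is the actual content of the lemma, and your sketch of it does not close, for three concrete reasons.

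(i) The threshold is wrong. The condition $C_1^{(t)}\ge\frac1\pi\alpha e^{\sigma^2/2}$ only gives $\alpha e^{\sigma^2/2}G_3/(C_1^{(t)}G_4)\le\pi(1+\cO(1/d))$, not $\le 2$. You need a threshold $\tau$ exceeding $\frac12\alpha e^{\sigma^2/2}$ by a constant margin.

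(ii) You never show that $C_1$ stays above $\tau$ after first crossing it. This is not automatic, because the target $\alpha e^{\sigma^2/2}G_1/G_2$ of $C_1$ decreases as $C_2$ grows. It needs a \emph{lower} bound on $G_1/G_2$ over $C_2\in[0,2]$; your bound $G_1/G_2\le 2$ is the wrong direction. The leading ratio $\frac{2/\pi+c\sigma^2}{2/\pi+c^2\sigma^2}$ is minimized at $c=2$, which gives $G_1/G_2\ge\frac{2/\pi+2\sigma^2}{2/\pi+4\sigma^2}-\cO(1/d)\ge\frac34-\cO(1/d)$ for $\sigma^2\le\frac1{2\pi}$. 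This is exactly the paper's threshold $\frac{\alpha e^{\sigma^2/2}(3d-\pi K)}{4d+\pi K}$, and the basis of its claim $C_1^{(t)}\ge\tau-\eta$ for all $t\ge T_1$. With $\tau/(\alpha e^{\sigma^2/2})$ strictly inside $(\frac12,\frac34)$, your own identities finish Phase 2 at once: $C_1^{(t+1)}\ge\min\{C_1^{(t)},\alpha e^{\sigma^2/2}G_1/G_2\}\ge\tau$ and $C_2^{(t+1)}\le\max\{C_2^{(t)},\alpha e^{\sigma^2/2}G_3/(C_1^{(t)}G_4)\}\le 2$.

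(iii) Phase 1 is not quantitatively closed. The comparison needs $\Delta C_1$ bounded below uniformly, so $\tau$ must sit a constant margin below the target of $C_1$. The paper instead bootstraps $C_2\le 1$ during this phase, so that $G_2\le G_1(1+\cO(1/d))$ and $\Delta C_1\ge\frac{\eta\sigma^2}{4d}\alpha e^{\sigma^2/2}G_1$. More importantly, the resulting bound $C_2\le\kappa\tau$ is of order $\alpha^2\sigma^2e^{\sigma^2}$. It lies below $2$ only because $\alpha$ and $\sigma$ are small; the paper invokes $\alpha\le 1$ and $\sigma^2\le\frac1{2\pi}$ to get $C_2\le 1$. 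This smallness cannot be dropped: when $\alpha^2\sigma^2e^{\sigma^2}$ is large, $C_2$ chases its target $\approx\alpha e^{\sigma^2/2}/C_1$ while $C_1$ is still small, and overshoots $2$. Your sketch never uses this smallness, so ``combining the two regimes'' is asserted rather than proved.
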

\begin{proof}[Proof of Lemma~\ref{lemma:bdd_updte}]
 We prove this lemma by induction. It's evident that~\eqref{eq:bdd_updte} holds at $t=0$. In the next, we assume it holds at $t$-th iteration and attempt to prove it still holds at $t+1$-th iteration. Since~\eqref{eq:bdd_updte} holds, there exists $K\leq O(1)$ such that $|F_{i, \sigma}(C_2^{(t)})|, e^{4\sigma^2}\leq \frac{K}{8}$ for all $i \in [8]$. Consequently, combined with the condition  that $n\geq \Omega(d^2)$, we can directly obtain that
 \begin{align*}
     &\frac{2}{\pi} - \frac{K}{d} \leq G_{1}(C_2^{(t)}) \leq \frac{3}{\pi} +  \frac{K}{d}; %+ 2\sigma^2; 
     \quad \frac{2}{\pi}- \frac{K}{d}\leq G_{2}(C_2^{(t)}) \leq \frac{4}{\pi} +  \frac{K}{d};\\%\frac{2}{\pi} +  \frac{K}{d}+4\sigma^2 ;\\
     &1- \frac{K}{d} \leq G_{3}(C_2^{(t)}) \leq 1 +  \frac{K}{d}; \quad 1- \frac{K}{d} \leq G_{4}(C_2^{(t)})\leq 1+ \frac{K}{d};\\
     %&\frac{1}{b} \leq \frac{\frac{2}{\pi} + b\sigma^2- \frac{K}{d}}{\frac{2}{\pi} + b^2\sigma^2 + \frac{K}{d}} \leq \frac{G_{1,n,d,\sigma}(t)}{G_{2,n,d,\sigma}(t)}\leq\frac{\frac{2}{\pi} + \frac{\sigma^2}{2} +\frac{K}{d}}{\frac{2}{\pi} + \frac{\sigma^2}{4}- \frac{K}{d}}\leq 2
     &\frac{3}{5} \leq \frac{\frac{3}{\pi}- \frac{K}{d}}{\frac{4}{\pi} + \frac{K}{d}} \leq \frac{G_{1}(C_2^{(t)})}{G_{2}(C_2^{(t)})}\leq\frac{\frac{9}{4\pi} +\frac{K}{d}}{\frac{17}{8\pi} - \frac{K}{d}}\leq \frac{3}{2};\quad 1 - \frac{K}{d}\leq \frac{G_{3}(C_2^{(t)}) }{G_{4}(C_2^{(t)}) }\leq 1+ \frac{K}{d}; \quad \frac{G_{3}(C_2^{(t)})}{G_{1}(C_2^{(t)})} \leq 2.
 \end{align*}
 We first prove that $C_1^{(t+1)}\geq 0$ from two cases: (1).$C_1^{(t)}\leq \frac{3\alpha e^{\sigma^2/2}}{5}$; and (2).$C_1^{(t)}> \frac{3\alpha e^{\sigma^2/2}}{5}$. If $C_1^{(t)}\leq \frac{3\alpha e^{\sigma^2/2}}{5}$, we have 
\begin{align*}
     \alpha e^{\sigma^2/2} G_{1}(C_2^{(t)})  - C_1^{(t)} G_{2}(C_2^{(t)}) \geq \alpha e^{\sigma^2/2} G_{1}(C_2^{(t)}) - \frac{3\alpha e^{\sigma^2/2} }{5}\frac{5}{3}G_{1}(C_2^{(t)})\geq 0,
 \end{align*}
 % \begin{align*}
 %     \alpha e^{1/2} F_{1, n, d}(t)  - C_1^{(t)} F_{2, n, d}(t) \geq \alpha e^{1/2} \bigg(\frac{1}{\pi} + \frac{d}{n}\bigg) - \frac{\alpha e^{1/2} }{e^4}\bigg(\frac{18}{\pi} + \frac{d}{n}e^4\bigg)\geq 0,
 % \end{align*}
 which implies that $C_1^{(t+1)}\geq C_1^{(t)}\geq 0$. If $C_1^{(t)}> \frac{3\alpha e^{\sigma^2/2}}{5}$, then we have 
 \begin{align*}
     \alpha e^{\sigma^2/2} G_{1}(C_2^{(t)})  - C_1^{(t)} G_{2}(C_2^{(t)}) &\geq - C_1^{(t)} G_{2}(C_2^{(t)})\geq -2\alpha e^{\sigma^2/2}\bigg(\frac{4}{\pi} + \frac{K}{d}\bigg),
 \end{align*}
 where we replace $C_1^{(t)}$ and $G_2(C_2^{(t)})$ with their upper bounds respectively. Since $\eta \leq \cO\big(\frac{1}{n}\big)$, we can obtain that 
 \begin{align*}
     C_1^{(t+1)}\geq& C_1^{(t)} +  \frac{\eta\sigma^2}{d}\Big(\alpha e^{\sigma^2/2} G_{1}(C_2^{(t)})  - C_1^{(t)} G_{2}(C_2^{(t)}) \Big)\\
     \geq& \frac{3\alpha e^{\sigma^2/2}}{5} -\frac{2\eta\sigma^2\alpha e^{\sigma^2/2}}{d}\bigg(\frac{4}{\pi} + \frac{K}{d}\bigg)\geq 0.
 \end{align*}
 The last inequality holds as $\eta \leq \cO\big(\frac{1}{n}\big)$ is sufficiently small. 
 This proves that $C_1^{(t+1)}\geq 0$. Similarly, to prove $C_1^{(t+1)}\leq 2\alpha e^{\sigma^2/2}$, we also consider two cases: (1). $C_1^{(t)} \geq \frac{3\alpha e^{\sigma^2/2}}{2}$; and (2). $C_1^{(t)} < \frac{3\alpha e^{\sigma^2/2}}{2}$. For the first case where $C_1^{(t)} \geq \frac{3\alpha e^{\sigma^2/2}}{2}$, we have 
 \begin{align*}
     \alpha e^{\sigma^2/2} G_1(C_2^{(t)})  - C_1^{(t)} G_2(C_2^{(t)}) \leq  \frac{3\alpha e^{\sigma^2/2}}{2}G_2(C_2^{(t)}) - C_1^{(t)} G_2(C_2^{(t)})\leq 0,
 \end{align*}
 which implies that $C_1^{(t+1)}\leq C_1^{(t)}\leq 2\alpha e^{\sigma^2/2}$. On the other hand where $C_1^{(t)} < \frac{3\alpha e^{\sigma^2/2}}{2}$, we can calculate that 
 \begin{align*}
     \alpha e^{\sigma^2/2} G_1(C_2^{(t)})  - C_1^{(t)} G_2(C_2^{(t)})  \leq  \alpha e^{\sigma^2/2} G_1(C_2^{(t)}) \leq  \alpha e^{\sigma^2/2} \bigg(\frac{3}{\pi} + \frac{K}{d}\bigg).
 \end{align*}
 This can further implies that  
\begin{align*}
    C_1^{(t+1)}=& C_1^{(t)} +  \eta\Big(\alpha e^{1/2} G_1(C_2^{(t)})  - C_1^{(t)} G_2(C_2^{(t)}) \Big)\leq \frac{3\alpha e^{\sigma^2/2}}{2} +\frac{\eta \sigma^2\alpha e^{\sigma^2/2} }{d}\bigg(\frac{3}{\pi} + \frac{K}{d}\bigg)\leq 2\alpha e^{\sigma^2/2}.
\end{align*}
This completes the proof that $C_1^{(t+1)}\leq 2\alpha e^{\sigma^2/2}$. In the following, we proceed with $C_2^{(t)}$ with the similar techniques. We first prove that $C_2^{(t+1)}\geq 0$ under (1). $C_2^{(t)} \leq \frac{1}{4}$; and (2). $C_2^{(t)} > \frac{1}{4}$. For the first case where $C_2^{(t)} \leq\frac{1}{4}$, we have
\begin{align*}
    &\alpha e^{\sigma^2/2} C_1^{(t)} G_3(C_2^{(t)})   - C_1^2(t) C_2^{(t)} G_4(C_2^{(t)})\geq C_1^{(t)}\Big(\alpha e^{\sigma^2/2} G_3(C_2^{(t)}) - 4\alpha e^{\sigma^2/2}\frac{1}{4}G_3(C_2^{(t)})\Big)\geq 0,
\end{align*}
which implies that $C_2^{(t+1)} \geq C_2^{(t)}\geq 0$. On the other hand when $C_2^{(t)} > \frac{1}{4}$, we can obtain that 
\begin{align*}
    \alpha e^{\sigma^2/2} C_1^{(t)} G_3(C_2^{(t)})   - C_1^2(t) C_2^{(t)} G_4(C_2^{(t)})
    \geq & - C_1^2(t) C_2^{(t)} G_4(C_2^{(t)})
    \geq-8\alpha^2e^{\sigma^2},
\end{align*}
where the last inequality holds as we substitute the upper bounds that $C_1^{(t)}\leq 2\alpha e^{\sigma^2/2}$ and $C_2^{(t)}\leq 2$. This result helps us further derive that
\begin{align*}
    C_2^{(t+1)}= &C_2^{(t)}+  \frac{\sigma^4\eta}{d}\Big(\alpha e^{\sigma^2/2} C_1^{(t)} G_3(C_2^{(t)})   - C_1^2(t) C_2^{(t)} G_4(C_2^{(t)}) \Big)
    \geq \frac{1}{4}- \frac{8\eta \sigma^4\alpha^2e^{\sigma^2}}{d}\geq 0.
\end{align*}
This completes the proof that $C_2^{(t+1)}\geq 0$. In the next, we prove that $C_2^{(t)}$ is always smaller than $2$. This would be a little tricky, and we consider two different phases. We first prove that there exists an iteration $T_1$, which serves the first time such that $C_1$ reaches $\frac{\alpha e^{\sigma^2/2}(3d - \pi K)}{4d+\pi K}$. We prove that for all $t\leq T_1$, it holds that $C_2^{(t)} \leq 1$ by induction. Notice that for any $t\leq T_1$, we have 
\begin{align*}
    \Delta C_1^{(t)} =& C_1^{(t+1)} - C_1^{(t)} = \frac{\eta\sigma^2}{d}\bigg(\alpha e^{\frac{\sigma^2}{2}} G_1(C_2^{(t)})
    -C_1^{(t)} G_2(C_2^{(t)})\bigg) \\
    \geq& \frac{\eta\sigma^2}{d}\bigg(\alpha e^{\frac{\sigma^2}{2}} G_1(C_2^{(t)})
    -\frac{\alpha e^{\sigma^2/2}(3d-\pi K)}{4d+\pi K} \frac{3d+3\pi K/4}{3d-\pi K}G_1(C_2^{(t)})\bigg)\geq \frac{\eta\sigma^2}{4d}\alpha e^{\frac{\sigma^2}{2}} G_1(C_2^{(t)}).
\end{align*}
On the other hand, we can upper bound the increments of $C_2^{(t)}$ as 
\begin{align*}
    \Delta C_2^{(t)} =& C_2^{(t+1)} - C_2^{(t)} = \frac{\eta\sigma^4}{d}\bigg(\alpha e^{\sigma^2/2} C_1^{(t)} G_3(C_2^{(t)})   - C_1^2(t) C_2^{(t)} G_4(C_2^{(t)}) \bigg) \\
    \leq& \frac{\eta\sigma^4}{d} \alpha e^{\sigma^2/2} C_1^{(t)} G_3(C_2^{(t)})\leq \frac{2\eta\sigma^4}{d} \alpha e^{\sigma^2/2} G_1(C_2^{(t)})\frac{\alpha e^{\sigma^2/2}(3d-\pi K)}{4d+\pi K}.
\end{align*}
This implies that for all $t\leq T_1$, $\frac{\Delta C_2^{(t)}}{\Delta C_1^{(t)}} \leq \frac{8\alpha \sigma^2 e^{\sigma^2/2}(3d-\pi K)}{4d+\pi K}$. Consequently, we obtain that for all $t\leq T_1$, $C_2^{(t)} \leq \frac{8\alpha^2\sigma^2e^{\sigma^2}
(3d-\pi K)^2}{(4d+\pi K)^2} \leq 1$, which holds as $\sigma^2\leq \frac{1}{2\pi}$ and $\alpha \leq 1$. Next, we prove for the case when $t\geq T_1$. We first prove that  $C_1^{(t)}\geq \frac{\alpha e^{\sigma^2/2}(3d-\pi K)}{4d+\pi K} -\eta$ for any $t\geq T_1$ by induction. When $\frac{\alpha e^{\sigma^2/2}(3d-\pi K)}{4d+\pi K} -\eta\leq C_1^{(t)}\leq \frac{\alpha e^{\sigma^2/2}(3d-\pi K)}{4d+\pi K}$, we have
\begin{align*}
    &\alpha e^{\sigma^2/2}G_1(C_2^{(t)}) - C_1^{(t)}G_2(C_2^{(t)})\\
    \geq& \alpha e^{\sigma^2/2}G_1(C_2^{(t)}) - \frac{\alpha e^{\sigma^2/2}(3d-\pi K)}{4d+\pi K}\frac{4d+\pi K}{3d-\pi K}G_1(C_2^{(t)})\geq 0.
\end{align*}
This implies that $C_1^{(t+1)}\geq C_1^{(t)}\geq \frac{\alpha e^{\sigma^2/2}(3d-\pi K)}{4d+\pi K} -\eta$. When $C_1^{(t)} > \frac{\alpha e^{\sigma^2/2}(3d-\pi K)}{4d+\pi K}$, we have 
\begin{align*}
\alpha e^{\sigma^2/2}G_1(C_2^{(t)}) - C_1^{(t)}G_2(C_2^{(t)})\geq - C_1^{(t)}G_2(C_2^{(t)})\geq -2\alpha e^{\sigma^2/2}\bigg(\frac{4}{\pi} + \frac{K}{d}\bigg).
\end{align*}
Consequently, we have 
\begin{align*}
    C_1^{(t+1)}\geq& C_1^{(t)} +  \frac{\eta\sigma^2}{d}\Big(\alpha e^{\sigma^2/2} G_1(C_2^{(t)})  - C_1^{(t)} G_2(C_2^{(t)}) \Big)\\
     \geq& \frac{\alpha e^{\sigma^2/2}(3d-\pi K)}{4d+\pi K} -\frac{2\eta\sigma^2\alpha e^{\sigma^2/2}}{d}\bigg(\frac{4}{\pi} + \frac{K}{d}\bigg)\geq \frac{\alpha e^{\sigma^2/2}(3d-\pi K)}{4d+\pi K} - \eta.
\end{align*}
Now, by establishing the fact $C_1^{(t)}\geq \frac{\alpha e^{\sigma^2/2}(3d-\pi K)}{4d+\pi K} - \eta$, we can follow the previous proof techniques by induction. To prove that $C_2^{(t+1)} \leq 2$, we also consider two cases: (1). $C_2^{(t)}\geq\frac{19}{10}$; and (2). $C_2^{(t)} < \frac{19}{10}$. For the first case where $C_2^{(t)}\geq\frac{19}{10}$, we can obtain that
\begin{align*}
    &\alpha e^{\sigma^2/2} G_3(C_2^{(t)})   - C_1^{(t)} C_2^{(t)} G_4(C_2^{(t)}) \\
    \leq& \alpha e^{\sigma^2/2} G_3(C_2^{(t)})   - \bigg(\frac{\alpha e^{\sigma^2/2}(3d-\pi K)}{4d+\pi K} -\eta\bigg)\frac{19}{10}\frac{G_3(C_2^{(t)})}{1+\frac{K}{d}} \\
    \leq &\alpha e^{\sigma^2/2} G_3(C_2^{(t)}) -\alpha e^{\sigma^2/2} G_3(C_2^{(t)}) + 2\eta G_3(C_2^{(t)}) -\alpha e^{\sigma^2/2} G_3(C_2^{(t)})\frac{\frac{17d}{10}-(4+2\pi )K}{(4d+\pi K)(1+\frac{K}{d})}\\
    %&-\alpha e^{\sigma^2/2} G_3(C_2^{(t)})\frac{(2(b-a-1)-a\pi b\sigma^2)d-(2+\pi b^2\sigma^2+\pi b)K}{(4d+\pi K)(1+\frac{K}{d})}\\
    \leq &2\eta G_3(C_2^{(t)}) - \frac{3\alpha e^{\sigma^2/2} }{8}G_3(C_2^{(t)}) \leq 0,
\end{align*}
where the last inequality holds as $\eta \leq \cO\big(\frac{1}{n}\big)$ is sufficiently small. 
This implies that $C_2^{(t+1)} \leq C_2^{(t)}\leq 2$. In addition, when $C_2^{(t)}\leq \frac{19}{10}$, we have 
\begin{align*}
    C_1^{(t)}\Big(\alpha e^{\sigma^2/2} G_3(C_2^{(t)})   - C_1^{(t)} C_2^{(t)} G_4(C_2^{(t)}) \Big)\leq C_1^{(t)} \alpha e^{\sigma^2/2} G_3(C_2^{(t)}) \leq 4\alpha^2e^{\sigma^2}.
\end{align*}
Consequently, we can obtain that 
\begin{align*}
    C_2^{(t+1)} = C_2^{(t)} + \frac{\eta \sigma^4}{d}C_1^{(t)}\Big(\alpha e^{\sigma^2/2} G_3(C_2^{(t)})   - C_1^{(t)} C_2^{(t)} G_4(C_2^{(t)}) \Big)\leq \frac{19}{10} +\frac{4\eta \sigma^4\alpha^2e^{\sigma^2}}{d} \leq 2,
\end{align*}
where the last inequality holds as $\eta \leq \cO\big(\frac{1}{n}\big)$ is sufficiently small. This completes the proof that $C_2^{(t)}\leq 2$ for all $t\in \NN$.
\end{proof}
% \begin{align}\label{eq:updating_rules}
%     %\frac{d C_1^{(t)}}{dt} 
%     C_1^{(t+1)}=& C_1^{(t)} + \frac{\eta\sigma^2}{d}\Bigg[\alpha e^{\frac{\sigma^2}{2}} \bigg(\frac{2}{\pi} + C_2^{(t)}\sigma^2 + \frac{d}{n}e^{C_2^{(t)}\sigma^2} + \frac{F_1(C_2^{(t)}, \sigma)}{n} + \frac{F_2(C_2^{(t)}, \sigma)}{d} \bigg)\notag\\
%     &-C_1^{(t)}\bigg(\frac{2}{\pi} + [C_2^{(t)}]^2\sigma^2 + \frac{d}{n}e^{[C_2^{(t)}]^2 \sigma^2} + \frac{F_3(C_2^{(t)}, \sigma)}{n} + \frac{F_4(C_2^{(t)}, \sigma)}{d}\bigg)\Bigg].\notag\\
%     %\frac{d C_2^{(t)}}{dt} 
%     C_2^{(t+1)}=& C_2^{(t)} +\frac{ \eta C_1^{(t)}\sigma^4}{d}\Bigg[\alpha e^{\frac{\sigma^2}{2}}\bigg(1+\frac{de^{C_2^{(t)}\sigma^2}}{n} + \frac{F_5(C_2^{(t)}, \sigma)}{n} + \frac{F_6(C_2^{(t)}, \sigma)}{d}\bigg) \notag\\
%     &- C_1^{(t)}C_2^{(t)}\bigg(1+\frac{de^{[C_2^{(t)}]^2\sigma^2}}{n} + \frac{F_7(C_2^{(t)}, \sigma)}{n} + \frac{F_8(C_2^{(t)}, \sigma)}{d}\bigg)\Bigg].
% \end{align}
We have demonstrated in Lemma~\ref{lemma:bdd_updte} that the iterates of $C_1^{(t)}$ and $C_2^{(t)}$ always stay inner the region $\cR = [0, 2\alpha e^{\sigma^2/2}] \times [0, 2]$. We next prove that within this region, there exists a unique local minimum $(C_1^*, C_2^*)$ of the loss function $\tilde\cL_{\mathrm{train}}(C_1, C_2)$. We first introduce Newton–Kantorovich Theorem, which helps demonstrate our results.

\begin{theorem}[Newton–Kantorovich Theorem, cf. Theorem~5.5.1 in~\citet{kelley1995iterative}]\label{thm:NK_thm}
    Let $F:\RR^d \to \RR^d$ be a continuously differentiable function.
Suppose there exists a point $\xb_0 \in \RR^d$ and constants $\beta, \gamma, L$ such that:
\begin{enumerate}[leftmargin=*]
    \item $F(\cdot)$ is differentiable at $\xb_0$, and $F'(\xb_0)$ is invertible, satisfying that
    \begin{align*}
        \|F'(\xb_0)^{-1}\|\leq \beta;\quad \|F'(\xb_0)^{-1}F(\xb_0)\|\leq \gamma.
    \end{align*}
    \item $F'(\cdot)$ is Lipschitz continuous with constant $\iota$ in a neighborhood of $\xb_0$ radius $\bar r$ satisfying that  
    \begin{align*}
        \bar r\geq r_{-} = \frac{1-\sqrt{1-2\beta\gamma \iota}}{\beta \iota}
    \end{align*}
    \item The constants $\beta, \gamma, \iota$ satisfy that $\beta\gamma \iota\leq \frac{1}{2}$.
\end{enumerate}
Then there exists a unique fixed point $\xb^*$ of $F(\cdot)$ in the neighborhood of $\xb_0$ with radius equal to $\max\{\bar r, \frac{1+\sqrt{1-2\beta\gamma \iota}}{\beta \iota}\}$. In addition, $\xb^*$ satisfies that $\|\xb^* - \xb_0\| \leq r_{-}$.
\end{theorem}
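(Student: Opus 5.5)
The statement is the classical Newton--Kantorovich theorem, with ``fixed point'' meaning a zero of $F$. I would prove it by the \emph{majorant} method: run Newton's iteration from $\xb_0$ and dominate it by the scalar Newton iteration on a quadratic.

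\textbf{Setup and the scalar majorant.} Put $h=\beta\gamma\iota\le\frac12$ and
\begin{align*}
p(t)=\tfrac{\beta\iota}{2}t^2-t+\gamma,
\end{align*}
whose roots are $r_\mp=(1\mp\sqrt{1-2h})/(\beta\iota)$. Define the scalar Newton sequence $t_0=0$, $t_{k+1}=t_k-p(t_k)/p'(t_k)$. Since $p$ is convex with $p(0)=\gamma\ge0$ and $p'(0)=-1<0$, an elementary induction gives $0\le t_k\le t_{k+1}<r_-$ and $t_k\uparrow r_-$. In the boundary case $h=\frac12$ the convergence is only linear, but it is still monotone.

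\textbf{Induction for the vector iterates.} Define $\xb_{k+1}=\xb_k-F'(\xb_k)^{-1}F(\xb_k)$. I would prove jointly, by induction on $k$, the following three claims:
\begin{enumerate}
\item[(i)] $\xb_k\in B(\xb_0,t_k)\subseteq B(\xb_0,\bar r)$, which uses $\bar r\ge r_-$;
\item[(ii)] $F'(\xb_k)$ is invertible with $\|F'(\xb_k)^{-1}\|\le -\beta/p'(t_k)=\beta/(1-\beta\iota t_k)$;
\item[(iii)] $\|\xb_{k+1}-\xb_k\|\le t_{k+1}-t_k$.
\end{enumerate}
For (ii), Lipschitz continuity gives $\|F'(\xb_0)^{-1}(F'(\xb_k)-F'(\xb_0))\|\le\beta\iota t_k<1$, so the Banach perturbation lemma applies. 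For (iii), I would use the Newton identity
\begin{align*}
F(\xb_k)=F(\xb_k)-F(\xb_{k-1})-F'(\xb_{k-1})(\xb_k-\xb_{k-1}).
\end{align*}
Together with the integral form of the remainder, this gives $\|F(\xb_k)\|\le\frac{\iota}{2}\|\xb_k-\xb_{k-1}\|^2\le\frac{\iota}{2}(t_k-t_{k-1})^2$. On the scalar side, the same computation gives $p(t_k)=\frac{\beta\iota}{2}(t_k-t_{k-1})^2$. Combining these with (ii) yields $\|\xb_{k+1}-\xb_k\|\le -p(t_k)/p'(t_k)=t_{k+1}-t_k$. Summing (iii) shows $\{\xb_k\}$ is Cauchy, so $\xb_k\to\xb^*$ with $\|\xb^*-\xb_0\|\le r_-$. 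Since $\|F(\xb_k)\|\to0$ and $F$ is continuous, $F(\xb^*)=0$.

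\textbf{Uniqueness (the main obstacle).} Suppose $\yb$ is another zero in the ball of radius $r_+$, intersected with the region where the Lipschitz bound holds; I read the stated radius in that sense. First I would write
\begin{align*}
\yb-\xb_{k+1}=F'(\xb_k)^{-1}\big[F(\yb)-F(\xb_k)-F'(\xb_k)(\yb-\xb_k)\big],
\end{align*}
which gives $\|\yb-\xb_{k+1}\|\le\frac{\iota}{2}\|F'(\xb_k)^{-1}\|\,\|\yb-\xb_k\|^2$. Next I would compare this with the scalar quantity $s_k=r_+-t_k$. By induction, $\|\yb-\xb_k\|\le s_k$, where the initial case is $\|\yb-\xb_0\|\le r_+$ and the step uses the bound on $\|F'(\xb_k)^{-1}\|$ from (ii).

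When $h<\frac12$, the identity $p(t)=\frac{\beta\iota}{2}(t-r_-)(t-r_+)$ shows that this comparison sequence forces $\|\yb-\xb_k\|$ to shrink to $\lim_k s_k=r_+-r_-$. To conclude, I would apply the same argument to the comparison sequence $r_--t_k\to0$. This is valid once $\yb$ lies in the ball of radius $r_-$, and a first simplified-Newton contraction step places it there whenever $\|\yb-\xb_0\|<r_+$.

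The delicate case is $h=\frac12$, where $r_-=r_+$. Here I would argue directly with the simplified map $G(\xb)=\xb-F'(\xb_0)^{-1}F(\xb)$, which is a non-strict contraction on this ball. Combined with the monotone convergence, this should still give $\yb=\xb^*$. The remaining work is bookkeeping to ensure every point used stays inside the region where $F'$ is $\iota$-Lipschitz.
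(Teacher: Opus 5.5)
The paper gives no proof of this statement; it only cites Kelley's Theorem~5.5.1. Your majorant argument is the classical proof, and the existence part is correct as written. The identities $p(t_k)=\frac{\beta\iota}{2}(t_k-t_{k-1})^2$ and $r_\pm-t_{k+1}=\frac{\beta\iota}{2(1-\beta\iota t_k)}(r_\pm-t_k)^2$ are exactly what close (ii)--(iii) and your $s_k$ comparison. They hold for every $\beta\gamma\iota\le\frac12$, and the vector-side estimates hold in any norm. So the Newton iterates converge to a zero $\xb^*$ with $\|\xb^*-\xb_0\|\le r_-$.

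The uniqueness part needs three corrections.

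\textbf{The radius.} Replacing $\max\{\bar r,r_+\}$ by ``radius $r_+$ intersected with the Lipschitz region'' is not a reading of the statement but a repair of it, because as written the claim is false. Take $d=1$, $F(t)=\frac{t^2}{2}-t+\frac38$ and $x_0=0$. The hypotheses hold with $\beta=\iota=1$, $\gamma=\frac38$ and any $\bar r$. Yet $F$ vanishes at both $r_-=\frac12$ and $r_+=\frac32$, and both lie in the ball of radius $\max\{\bar r,r_+\}$ as soon as $\bar r>\frac32$. What can be proved, and what your argument proves, is uniqueness in the ball of radius $\min\{\bar r,r_+\}$. That ball must be open when $\beta\gamma\iota<\frac12$ (the same example shows the sphere of radius $r_+$ has to be excluded), and can be closed when $\beta\gamma\iota=\frac12$. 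Say this explicitly.

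\textbf{The reduction step.} Make the ``first simplified-Newton step'' precise. Since $F(\mathbf{y})=0$, you have $\mathbf{y}-\xb_0=-F'(\xb_0)^{-1}F(\xb_0)-F'(\xb_0)^{-1}[F(\mathbf{y})-F(\xb_0)-F'(\xb_0)(\mathbf{y}-\xb_0)]$. Hence $\rho=\|\mathbf{y}-\xb_0\|$ satisfies $\rho\le\gamma+\frac{\beta\iota}{2}\rho^2$, i.e.\ $p(\rho)\ge0$. Together with $\rho<r_+$ this forces $\rho\le r_-$. Your comparison $\|\mathbf{y}-\xb_k\|\le r_--t_k\to0$ then gives $\mathbf{y}=\xb^*$.

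\textbf{The case $\beta\gamma\iota=\frac12$.} This case is not an obstacle. There $r_+=r_-$, so your first comparison already yields $\|\mathbf{y}-\xb_k\|\le r_+-t_k\to r_+-r_-=0$. The route you propose for it instead does not work as stated, because a map with Lipschitz constant $1$ can have many fixed points. Nor can strictness along segments be recovered in a norm that is not strictly convex, such as the $\ell_\infty$ norm the paper uses when it applies this theorem. Drop that detour.
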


Then the following Lemma~\ref{lemma:unique_local_min_in_R} demonstrates that there exists a unique local minimum of the proxy loss $\tilde\cL_{\mathrm{train}}(C_1, C_2)$ by utilizing the Newton-Kantorovich Theorem~\ref{thm:NK_thm}. 

\begin{lemma}[Second conclusion in Theorem~\ref{thm:main_result_training}]\label{lemma:unique_local_min_in_R}
Let $\cR := [0, 2\alpha e^{\sigma^2/2}] \times [0, 2]$. Then there exists a unique local minimum $\Cb^* = [C_1^*, C_2^*]^\top$ of the loss function $\tilde\cL_{\mathrm{train}}(C_1, C_2)$ inner $\cR$, and this local minimum satisfies that 
\begin{align*}
    \big|C_1^*-\alpha e^{\sigma^2/2}\big|,  \big|C_2^*-1\big| \leq \cO\bigg(\frac{1}{d}\bigg).
\end{align*}
\end{lemma}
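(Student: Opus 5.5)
Using the notation of \eqref{eq:G_functions}, I would work with the gradient map
\begin{align*}
F(C_1,C_2) = \begin{bmatrix} \alpha e^{\sigma^2/2} G_1(C_2) - C_1 G_2(C_2)\\ C_1\big(\alpha e^{\sigma^2/2} G_3(C_2) - C_1 C_2 G_4(C_2)\big)\end{bmatrix},
\end{align*}
which by Lemma~\ref{lemma:update_C_1_C_2} is, up to the positive diagonal scalings $\sigma^2/d$ and $\sigma^4/d$, equal to $-\nabla\tilde\cL_{\mathrm{train}}$. Interior critical points of $\tilde\cL_{\mathrm{train}}$ in $\cR$ are therefore exactly the zeros of $F$. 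The plan has three parts: locate a zero with Theorem~\ref{thm:NK_thm}, show it is a local minimum, and rule out any other local minimum in $\cR$.

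\textbf{Existence via Newton--Kantorovich.} Split $F = F^{(0)} + E$. The leading part is
\begin{align*}
F^{(0)} = \big(\alpha e^{\sigma^2/2}(2/\pi + C_2\sigma^2) - C_1(2/\pi + C_2^2\sigma^2),\; C_1(\alpha e^{\sigma^2/2} - C_1C_2)\big).
\end{align*}
The remainder $E$ collects the $\frac dn e^{(\cdot)}$ terms and the $F_{k,\sigma}/n$, $F_{k,\sigma}/d$ terms. Because $n=\Omega(d^2)$ and each $F_{k,\sigma}$ is $C^1$ on the compact set $\cR$, both $E$ and $E'$ are $\cO(1/d)$ uniformly on $\cR$.

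Take $\xb_0=[\alpha e^{\sigma^2/2},1]^\top$. A direct check gives $F^{(0)}(\xb_0)=0$, so $\|F(\xb_0)\|=\cO(1/d)$. The Jacobian of $F^{(0)}$ at $\xb_0$ is
\begin{align*}
\begin{bmatrix} -(2/\pi+\sigma^2) & -\alpha e^{\sigma^2/2}\sigma^2\\ -\alpha e^{\sigma^2/2} & -\alpha^2 e^{\sigma^2}\end{bmatrix}.
\end{align*}
Its determinant is $\frac{2}{\pi}\alpha^2e^{\sigma^2}>0$, so it is invertible with $\|F'(\xb_0)^{-1}\|\le\beta=\cO(1)$, where the constant depends on $\alpha,\sigma$ (treated as $\Theta(1)$). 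Since the $F_{k,\sigma}$ are only known to be $C^1$, a Lipschitz constant $\iota$ for $F'$ on $\cR$ is not immediate. I would get it by appealing to the analyticity of the $f_k$ in the expectation lemmas, and otherwise replace the Lipschitz step by a contraction (inverse-function) argument for $\xb\mapsto \xb - F'(\xb_0)^{-1}F(\xb)$ on a small ball, where $C^1$ closeness suffices.

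With $\gamma=\cO(1/d)$ we get $\beta\gamma\iota\le 1/2$ for large $d$. Theorem~\ref{thm:NK_thm} then produces a zero $\Cb^*$ with $\|\Cb^*-\xb_0\|\le r_-\approx \gamma\cdot\cO(1)=\cO(1/d)$, which gives the stated bounds on $|C_1^*-\alpha e^{\sigma^2/2}|$ and $|C_2^*-1|$.

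\textbf{Local minimality.} The Hessian of $\tilde\cL_{\mathrm{train}}$ at $\Cb^*$ equals $-\mathrm{diag}(\sigma^2/d,\sigma^4/d)F'(\Cb^*)$. It is symmetric, its diagonal entries are positive, and its determinant is positive, since $F'(\Cb^*)$ is an $\cO(1/d)$ perturbation of the matrix above. So $\Cb^*$ is a strict local minimum.

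\textbf{Uniqueness on all of $\cR$ (main obstacle).} Newton--Kantorovich only gives uniqueness in a neighborhood of $\xb_0$. I would classify all zeros of $F$ in $\cR$ directly.
\begin{itemize}
\item On the boundary face $C_1=0$, the second component vanishes, while the first equals $\alpha e^{\sigma^2/2}G_1(C_2)>0$. So there is no zero, and in fact $-\nabla\tilde\cL_{\mathrm{train}}$ points into $\cR$. The remaining boundary faces are handled the same way using the sign computations of Lemma~\ref{lemma:bdd_updte}, so no boundary point is a local minimum.
\item For $C_1>0$, the second equation gives $C_1=\alpha e^{\sigma^2/2}G_3(C_2)/(C_2G_4(C_2))$. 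Substituting into the first reduces the problem to a scalar equation $h(C_2)=0$ with
\begin{align*}
h(C_2)=G_1G_4C_2-G_2G_3 = \big[(2/\pi+C_2\sigma^2)C_2-(2/\pi+C_2^2\sigma^2)\big]+\cO(1/d) = \tfrac{2}{\pi}(C_2-1)+\cO(1/d).
\end{align*}
The main term has a unique simple root at $C_2=1$ on $[0,2]$ with derivative $2/\pi$. Because $h'$ is $\cO(1/d)$-close to $2/\pi$ uniformly on $[0,2]$, $h$ is strictly monotone and has exactly one root.
\end{itemize}
The technical care in this last step goes into keeping the $E$ terms uniformly $C^1$-small on all of $\cR$, rather than only near $\xb_0$.
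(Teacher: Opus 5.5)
Your argument is correct. It shares the paper's skeleton: Newton--Kantorovich around $(\alpha e^{\sigma^2/2},1)$, then positive definiteness of the Hessian at $\Cb^*$. It handles the two delicate points differently, and more soundly.

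\textbf{Lipschitz constant.} The paper applies Theorem~\ref{thm:NK_thm} to $\nabla\tilde\cL_{\mathrm{train}}$ itself in $\|\cdot\|_\infty$, with $\beta=\cO(d)$ and $\gamma=\cO(1/d)$. It takes $\iota$ from a uniform bound on $\nabla^2\tilde\cL_{\mathrm{train}}$ over $\cR$. Such a bound makes the gradient Lipschitz, not the Hessian, so the hypothesis the theorem needs is not actually verified there. You spot this. Your contraction argument for $\xb\mapsto\xb-F'(\xb_0)^{-1}F(\xb)$ on a ball of fixed radius needs only that $E$ and $E'$ are uniformly $\cO(1/d)$, i.e.\ that $F_{k,\sigma}$ and $F'_{k,\sigma}$ are bounded, which the paper already assumes. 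Drop the alternative based on analyticity of the $f_k$. Those functions only bound the remainders $F_{k,\sigma}$ and give no control of their derivatives.

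\textbf{Uniqueness on all of $\cR$.} Newton--Kantorovich certifies uniqueness only in a ball around $(\alpha e^{\sigma^2/2},1)$, which in general does not contain $\cR$. The paper passes to uniqueness on $\cR$ without further argument. Your reduction supplies that global step: for $C_1,C_2>0$, $F=0$ becomes $h(C_2)=G_1G_4C_2-G_2G_3=\tfrac{2}{\pi}(C_2-1)+\cO(1/d)$ with $h'=\tfrac{2}{\pi}+\cO(1/d)$ on $[0,2]$. The remaining cases are trivial: $F_1>0$ when $C_1=0$, and $F_2>0$ when $C_2=0<C_1$.

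The reduction even makes Newton--Kantorovich dispensable. Since $h(0)\approx-\tfrac{2}{\pi}<0<\tfrac{2}{\pi}\approx h(2)$, there is a unique root $C_2^*=1+\cO(1/d)$. Then $C_1^*=\alpha e^{\sigma^2/2}G_3(C_2^*)/(C_2^*G_4(C_2^*))=\alpha e^{\sigma^2/2}+\cO(1/d)$, leaving only your Hessian check.

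\textbf{One inaccuracy.} The face $C_2=2$ is not handled like the other faces. There $-\nabla\tilde\cL_{\mathrm{train}}$ points out of $\cR$ whenever $C_1<\alpha e^{\sigma^2/2}G_3/(2G_4)$. This does not matter for critical points, which $h$ already excludes. To rule out local minima of the restriction to $\cR$ on that face, note that where $F_1=0$ there, $C_1\approx\alpha e^{\sigma^2/2}\frac{2/\pi+2\sigma^2}{2/\pi+4\sigma^2}>\tfrac12\alpha e^{\sigma^2/2}$. Hence $F_2<0$ and the descent direction points into $\cR$.
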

\begin{proof}[Proof of Lemma~\ref{lemma:unique_local_min_in_R}]
We prove this lemma by demonstrating that $\nabla \tilde\cL_{\mathrm{train}}(C_1, C_2)$ only has one unique fixed point by Newton–Kantorovich Theorem~\ref{thm:NK_thm}. Since Newton–Kantorovich Theorem~\ref{thm:NK_thm} does not require a particular norm, we specify the $\ell_{\infty}$ norm $\|\cdot\|_\infty$ in our proof. 
    We first calculate the Hessian matrix of $\tilde\cL_{\mathrm{train}}(C_1, C_2)$ as
    \begin{align*}
        &\frac{\partial^2\tilde\cL_{\mathrm{train}}(C_1, C_2)}{\partial C_1^2} = \frac{\sigma^2}{d}\bigg(\frac{2}{\pi} + C_2^2\sigma^2 + \frac{d}{n} e^{C_2^2\sigma^2} + \frac{F_{3, \sigma}(C_2)}{n} + \frac{F_{4, \sigma}(C_2)}{d}\bigg)\\
        &\frac{\partial^2\tilde\cL_{\mathrm{train}}(C_1, C_2)}{\partial C_1\partial C_2} = -\frac{\sigma^4}{d}\Bigg[\alpha e^{\frac{\sigma^2}{2}} \bigg(1 + \frac{d}{n}e^{C_2\sigma^2} + \frac{F_{1, \sigma}'(C_2)}{n} + \frac{F_{2, \sigma}'(C_2)}{d} \bigg)\notag\\
    &-2C_1C_2\bigg(1 + \frac{d}{n}e^{C_2^2 \sigma^2} + \frac{F_{3, \sigma}'(C_2)}{n} + \frac{F_{4, \sigma}'(C_2)}{d}\bigg)\Bigg]\\
    &\frac{\partial^2\tilde\cL_{\mathrm{train}}(C_1, C_2)}{\partial C_2\partial C_1} = -\frac{\sigma^4}{d}\Bigg[\alpha e^{\frac{\sigma^2}{2}} \bigg(1 + \frac{d}{n}e^{C_2\sigma^2} + \frac{F_{5, \sigma}(C_2)}{n} + \frac{F_{6, \sigma}(C_2)}{d} \bigg)\notag\\
    &-2C_1C_2\bigg(1 + \frac{d}{n}e^{C_2^2 \sigma^2} + \frac{F_{7, \sigma}(C_2)}{n} + \frac{F_{8, \sigma}(C_2)}{d}\bigg)\Bigg]\\
    &\frac{\partial^2\tilde\cL_{\mathrm{train}}(C_1, C_2)}{\partial C_2^2} = -\frac{C_1\sigma^4}{d}\Bigg[\alpha e^{\frac{\sigma^2}{2}}\bigg(\frac{\sigma^2 de^{C_2\sigma^2}}{n} + \frac{F_{5, \sigma}'(C_2)}{n} + \frac{F_{6, \sigma}'(C_2)}{d}\bigg) \notag\\
    &- C_1\bigg(1+\frac{(1+2C_2^2)de^{C_2^2\sigma^2}}{n} + \frac{F_{7, \sigma}'(C_2)}{n} + \frac{F_{8, \sigma}'(C_2)}{d}\bigg)\Bigg].
    \end{align*}
    In addition, since we have $(C_1, C_2) \in \cR = [0, 2\alpha e^{\sigma^2/2}] \times [0, 2]$, which is a compact set. Similar to the proof of Lemma~\ref{lemma:bdd_updte}, we can find $K \leq \cO(1)$, such that $\max\{|F_{k, \sigma}(C_2)|, |F_{k, \sigma}'(C_2)|, 9e^{4\sigma^2}\}   \leq \frac{K}{8}$
    %$\max\{|F_{i, \sigma}(C_2)|, |F_{i, \sigma}'(C_2)|, e^{C_2\sigma^2},e^{C_2^2\sigma^2}, \sigma^2e^{C_2\sigma^2}, (1+2C_2^2)e^{C_2^2\sigma^2}\}  \leq \frac{K}{8}$ 
    for all $k\in [8]$.
    To check the conditions of Newton-Kantorovich Theorem, we let $(\tilde C_1, \tilde C_2) = (\alpha e^{\sigma^2/2}, 1)$. Then we can calculate that 
    \begin{align*}
        &\Bigg|\frac{\partial^2\tilde\cL_{\mathrm{train}}(C_1, C_2)}{\partial C_1^2}\bigg|_{C_1=\tilde C_1, C_2 = \tilde C_2}- \frac{\sigma^2}{d}\bigg(\frac{2}{\pi} + \sigma^2\bigg)\Bigg| \leq \frac{K}{d^2}\\
        &\Bigg|\frac{\partial^2\tilde\cL_{\mathrm{train}}(C_1, C_2)}{\partial C_1\partial C_2}\bigg|_{C_1=\tilde C_1, C_2 = \tilde C_2}- \frac{\alpha e^{\sigma^2/2}\sigma^4}{d}\Bigg| \leq \frac{K}{d^2}\\
        &\Bigg|\frac{\partial^2\tilde\cL_{\mathrm{train}}(C_1, C_2)}{\partial C_2\partial C_1}\bigg|_{C_1=\tilde C_1, C_2 = \tilde C_2}- \frac{\alpha e^{\sigma^2/2}\sigma^4}{d}\Bigg| \leq \frac{K}{d^2}\\
        &\Bigg|\frac{\partial^2\tilde\cL_{\mathrm{train}}(C_1, C_2)}{\partial  C_2^2}\bigg|_{C_1=\tilde C_1, C_2 = \tilde C_2}- \frac{\alpha^2 e^{\sigma^2}\sigma^4}{d}\Bigg| \leq \frac{K}{d^2}.
    \end{align*}
    From which we can calculate that $\det \big[\nabla^2\tilde\cL_{\mathrm{train}}(\tilde C_1, \tilde C_2)\big]\geq \frac{2\alpha^2e^{\sigma^2}\sigma^6}{\pi d^2}-\frac{5K}{d^3}$. Consequently, we can further calculate that 
    \begin{align*}
        \big\|[\nabla^2\tilde\cL_{\mathrm{train}}(\tilde C_1, \tilde C_2)]^{-1}\big\|_\infty &\leq \frac{\max\big\{\big|\frac{\partial^2\tilde\cL_{\mathrm{train}}(\tilde C_1, \tilde C_2)}{\partial C_2^2}\big| , \big|\frac{\partial^2\tilde\cL_{\mathrm{train}}(\tilde C_1, \tilde C_2)}{\partial C_1^2}\big|\big\}+ \big|\frac{\partial^2\tilde\cL_{\mathrm{train}}(\tilde C_1, \tilde C_2)}{\partial C_1\partial C_2}\big|}{\det \big[\nabla^2\tilde\cL_{\mathrm{train}}(\tilde C_1, \tilde C_2)\big]}\\
        &\leq \frac{(\pi \alpha^2 e^{\sigma^2}\sigma^2 + \pi \alpha e^{\sigma^2/2}\sigma^2 + 2 + \pi\sigma^2)d}{\alpha^2e^{\sigma^2}\sigma^4} = \beta,
    \end{align*}
    which completes the first condition of Newton-Kantorovich Theorem~\ref{thm:NK_thm}. 
    In the next, we can further calculate that 
    \begin{align*}
        \Bigg|\frac{\partial \tilde\cL_{\mathrm{train}}(C_1, C_2)}{\partial C_1}\bigg|_{C_1=\tilde C_1, C_2 = \tilde C_2}\Bigg|\leq \frac{K}{d^2};\quad \Bigg|\frac{\partial \tilde\cL_{\mathrm{train}}(C_1, C_2)}{\partial C_2}\bigg|_{C_1=\tilde C_1, C_2 = \tilde C_2}\Bigg|\leq \frac{K}{d^2}.
    \end{align*}
    Consequently, we have that 
    \begin{align*}
        \big\|[\nabla^2\tilde\cL_{\mathrm{train}}(\tilde C_1, \tilde C_2)]^{-1}\nabla\tilde\cL_{\mathrm{train}}(\tilde C_1, \tilde C_2)\big\|_\infty &\leq \big\|[\nabla^2\tilde\cL_{\mathrm{train}}(\tilde C_1, \tilde C_2)]^{-1}\big\|_\infty\big\|\nabla\tilde\cL_{\mathrm{train}}(\tilde C_1, \tilde C_2)\big\|_\infty \\
        &\leq \frac{(\pi \alpha^2 e^{\sigma^2}\sigma^2 + \pi \alpha e^{\sigma^2/2}\sigma^2 + 2 + \pi\sigma^2)K}{\alpha^2e^{\sigma^2}\sigma^4d} = \gamma.
    \end{align*}
    In addition, for all $(C_1, C_2) \in \cR$, we can calculate that 
    \begin{align*}
        \big\|\nabla^2\tilde\cL_{\mathrm{train}}( C_1,  C_2)\big\|_\infty \leq \frac{16\alpha^2\sigma^4e^{\sigma^2} + 9\alpha e^{\sigma^2/2}\sigma^4 + 4\sigma^4}{d} = \iota,
    \end{align*}
    implying that $\nabla^2\tilde\cL_{\mathrm{train}}(C_1, C_2)$ is Lipschitz continuous with the constant $\iota$. And we can check that $\beta \gamma \iota =\cO(\frac{1}{d}) \leq \frac{1}{2}$. For now, we have verified that all conditions of Theorem~\ref{thm:NK_thm} hold. Therefore, there exists a unique fixed point $(C_1^*, C_2^*)$ of $\nabla \tilde\cL_{\mathrm{train}}(C_1, C_2)$ inner $\cR$, and satisfying that 
    \begin{align*}
        \big|C_1^*-\alpha e^{\sigma^2/2}\big|, \big|C_2^*-1\big|\leq r_{-}\leq \gamma\leq\cO\bigg(\frac{1}{d}\bigg).
    \end{align*}
  In the next, we prove that this unique fixed point $(C_1^*, C_2^*)$ is a local minimum. Since $|C_1^*-\alpha e^{\sigma^2/2}|, |C_2^*-1|\leq \cO\big(\frac{1}{d}\big)$, we can easily obtain that $\big\|\nabla^2\tilde\cL_{\mathrm{train}}(C_1^*, C_2^*) - \nabla^2\tilde\cL_{\mathrm{train}}(\tilde C_1, \tilde C_2)\big\|_{\infty}\leq \cO\big(\frac{1}{d^2}\big)$. Consequently, we can obtain that 
  \begin{align*}
      \det\big[\nabla^2\tilde\cL_{\mathrm{train}}(C_1^*, C_2^*)\big]\geq& \det\big[\nabla^2\tilde\cL_{\mathrm{train}}(\tilde C_1, \tilde C_2)\big] -2\|\nabla^2\tilde\cL_{\mathrm{train}}(C_1^*, C_2^*) - \nabla^2\tilde\cL_{\mathrm{train}}(\tilde C_1, \tilde C_2)\|_{\infty}^2\\
      &- 2\|\nabla^2\tilde\cL_{\mathrm{train}}(C_1^*, C_2^*) - \nabla^2\tilde\cL_{\mathrm{train}}(\tilde C_1, \tilde C_2)\|_{\infty}\|\nabla^2\tilde\cL_{\mathrm{train}}(\tilde C_1, \tilde C_2)\|_\infty\\
      \geq& \frac{2\alpha^2e^{\sigma^2}\sigma^6}{\pi d^2}-\cO\bigg(\frac{1}{d^3}\bigg)>0.
      %\geq \frac{\alpha^2e^{\sigma^2}\sigma^6}{\pi d^2}.
  \end{align*}
  In addition, we can also check that 
  \begin{align*}
      \mathrm{tr}\big[\nabla^2\tilde\cL_{\mathrm{train}}(C_1^*, C_2^*)\big]\geq& \mathrm{tr}\big[\nabla^2\tilde\cL_{\mathrm{train}}(\tilde C_1, \tilde C_2)\big] -2 \|\nabla^2\tilde\cL_{\mathrm{train}}(C_1^*, C_2^*) - \nabla^2\tilde\cL_{\mathrm{train}}(\tilde C_1, \tilde C_2)\|_{\infty}\\
      \geq& \frac{\alpha^2e^{\sigma^2}\sigma^4 + 2\sigma^2/\pi +\sigma^4}{d} - \cO\bigg(\frac{1}{d^2}\bigg)>0.
  \end{align*}
  These two results demonstrate that $\nabla^2\tilde\cL_{\mathrm{train}}(C_1^*, C_2^*)$ is strictly positive definite. Therefore, the fixed point $\Cb^* = [C_1^*, C_2^*]^\top$ of the gradient $\nabla \tilde\cL_{\mathrm{train}}(C_1, C_2)$ is local minimum of the loss function $\tilde\cL_{\mathrm{train}}(C_1, C_2)$, which completes the proof. 
\end{proof}

For now, we have completed the first and second conclusions in Theorem~\ref{thm:main_result_training}. In the next lemma, we demonstrate that inside the region $\cR$, the loss function $\tilde\cL_{\mathrm{train}}(C_1, C_2)$ follows a Polyak-Lojasiewicz (PL) condition, which serves as a critical step for final linear convergence rate.

\begin{lemma}[PL condition inside $\cR$]\label{lemma:PL_condition_loss}
    There exists a strictly positive constant $\mu_{1, \alpha, \sigma}$ solely depending on $\alpha$ and $\sigma$, such that for all $[C_1, C_2]^\top$ inner $\cR$, the loss function $\tilde\cL_{\mathrm{train}}(C_1, C_2)$ satisfies the following Polyak-Lojasiewicz (PL) condition
    \begin{align*}
       \big\|\nabla \tilde\cL_{\mathrm{train}}(C_1, C_2)\big\|_2^2 \geq \frac{2\mu_{\alpha, \sigma}}{d} \big(\tilde\cL_{\mathrm{train}}(C_1, C_2) - \tilde\cL_{\mathrm{train}}(C_1^*, C_2^*) \big).
    \end{align*}
    \end{lemma}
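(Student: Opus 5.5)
We want a PL inequality for $\tilde\cL_{\mathrm{train}}$ on $\cR$ with constant of order $1/d$. The plan is to use the explicit structure of the proxy loss. It is a quadratic in $C_1$ for each fixed $C_2$:
\begin{align*}
\tilde\cL_{\mathrm{train}}(C_1,C_2)=\frac{\sigma^2}{d}\Big[\frac{C_1^2}{2}G_2(C_2)-\alpha e^{\sigma^2/2}C_1G_1(C_2)\Big]+c.
\end{align*}
Write $g(C_2)=\min_{C_1}\tilde\cL_{\mathrm{train}}(C_1,C_2)$. Then
\begin{align*}
g(C_2)=c-\frac{\sigma^2\alpha^2e^{\sigma^2}}{2d}\cdot\frac{G_1(C_2)^2}{G_2(C_2)},
\end{align*}
attained at $\hat C_1(C_2)=\alpha e^{\sigma^2/2}G_1(C_2)/G_2(C_2)$. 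This suggests splitting the suboptimality gap into two parts:
\begin{align*}
\tilde\cL(C_1,C_2)-\tilde\cL(\Cb^*)=\underbrace{\tfrac{\sigma^2G_2(C_2)}{2d}(C_1-\hat C_1(C_2))^2}_{\text{(a)}}+\underbrace{g(C_2)-g(C_2^*)}_{\text{(b)}}.
\end{align*}
The identification $g(C_2^*)=\tilde\cL(\Cb^*)$ requires $\Cb^*$ to be the global minimizer on $\cR$. That will follow once we show $g$ has a unique minimizer on $[0,2]$, consistent with Lemma~\ref{lemma:unique_local_min_in_R}.

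For term (a), we use $|\partial_{C_1}\tilde\cL|=\frac{\sigma^2}{d}G_2|C_1-\hat C_1|$. The bounds in the proof of Lemma~\ref{lemma:bdd_updte} give $G_2\in[\tfrac{2}{\pi}-\tfrac Kd,\tfrac4\pi+\tfrac Kd]$. Hence
\begin{align*}
(\partial_{C_1}\tilde\cL)^2\ge \frac{2\sigma^2G_2}{d}\cdot\text{(a)}\gtrsim\frac{1}{d}\,\text{(a)}.
\end{align*}

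For term (b), the leading-order part of $G_1^2/G_2$, dropping $O(1/d)$ corrections (note $d/n\le 1/d$ since $n=\Omega(d^2)$), is
\begin{align*}
h(C_2)=\frac{(2/\pi+C_2\sigma^2)^2}{2/\pi+C_2^2\sigma^2}.
\end{align*}
This function is maximized uniquely at $C_2=1$, by Cauchy--Schwarz on the vectors $(\sqrt{2/\pi},\sigma)$ and $(\sqrt{2/\pi},C_2\sigma)$. Its derivative is
\begin{align*}
h'(C_2)=\frac{2\sigma^2(2/\pi+C_2\sigma^2)\,(2/\pi)(1-C_2)}{(2/\pi+C_2^2\sigma^2)^2},
\end{align*}
so $|h'(C_2)|\ge c_{\alpha,\sigma}|1-C_2|$ on $[0,2]$, while $h(1)-h(C_2)\le C'(1-C_2)^2$. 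Thus the reduced one-dimensional function $g$ satisfies a PL inequality: $g'(C_2)^2\gtrsim \frac1d\,(g(C_2)-g(C_2^*))$. The $O(1/d)$ perturbations are handled using the $C^1$ bounds on $F_{k,\sigma}$ (the constant $K$ from Lemma~\ref{lemma:unique_local_min_in_R}), with $C_2^*$ replacing $1$. The perturbation shifts the zero of $g'$ only by $O(1/d)$, and uniqueness of that zero follows from $|h'|$ dominating the perturbation away from an $O(1/d)$ neighbourhood, combined with positive curvature near it.

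Next, relate $g'$ to the full gradient. By the envelope theorem, $g'(C_2)=\partial_{C_2}\tilde\cL(\hat C_1(C_2),C_2)$. Also, $\partial_{C_2}\tilde\cL(C_1,C_2)-\partial_{C_2}\tilde\cL(\hat C_1,C_2)$ is bounded by $\frac{M}{d}|C_1-\hat C_1|$, since $\partial_{C_2}\tilde\cL$ is a polynomial of degree at most $2$ in $C_1$ with bounded coefficients on $\cR$. Therefore
\begin{align*}
\|\nabla\tilde\cL\|_2^2\ \ge\ (\partial_{C_1}\tilde\cL)^2+\tfrac12 g'(C_2)^2-\tfrac{M^2}{d^2}(C_1-\hat C_1)^2.
\end{align*}
The subtracted term is of order $\frac1d\cdot$(a) times a constant. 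It must be absorbed into the $(\partial_{C_1}\tilde\cL)^2$ term, which is of order $\frac{\sigma^4}{d^2}(C_1-\hat C_1)^2$ times a constant. If the two constants do not compare favourably, I will instead use a weighted combination $\lambda g'^2$ with small $\lambda=\lambda(\alpha,\sigma)$ before absorbing. With the $\frac1d$ PL bounds on (a) and (b), this yields $\|\nabla\tilde\cL\|_2^2\ge\frac{2\mu_{1,\alpha,\sigma}}{d}\big(\tilde\cL-\tilde\cL(\Cb^*)\big)$.

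The main obstacle is the one-dimensional step: obtaining a quantitative lower bound on $|g'(C_2)|$ uniformly over $[0,2]$, including near $C_2=0$ and near $C_2^*$. The first thing I would try is the explicit Cauchy--Schwarz derivative formula for $h$ above, then control the $O(1/d)$ remainders via $K$.
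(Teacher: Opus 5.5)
Your proposal is essentially the paper's proof. Your $C_1-\hat C_1(C_2)$ and $g(C_2)$ are exactly the paper's $\Delta(C_1,C_2)=C_1-G_*(C_2)$ and (up to an additive constant) $\psi(C_2)$, which gives the same split $\tilde\cL_{\mathrm{train}}-\tilde\cL_{\mathrm{train}}(\Cb^*)=\tfrac{\sigma^2}{2d}G_2\Delta^2+\psi(C_2)-\psi(C_2^*)$; your envelope-plus-Lipschitz bound on $\partial_{C_2}\tilde\cL_{\mathrm{train}}$ with a small weight $\lambda$ is the paper's expansion of $\partial_{C_2}\tilde\cL_{\mathrm{train}}$ absorbed with the weight $\kappa$. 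The only difference is in the one-dimensional step. The paper lower-bounds $\psi''$ uniformly on $[0,2]$ (using $\sigma^2\le 1/(2\pi)$). You instead use $|h'(C_2)|\gtrsim|1-C_2|$ together with the exact gap $h(1)-h(C_2)=\tfrac{2}{\pi}\sigma^2(1-C_2)^2/(\tfrac{2}{\pi}+C_2^2\sigma^2)$, so you need curvature only near $C_2^*$. That curvature is available because $F_{5,\sigma},\dots,F_{8,\sigma}$ are the derivatives of $F_{1,\sigma},\dots,F_{4,\sigma}$, so the constant $K$ also controls the second derivatives.
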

\begin{proof}[Proof of Lemma~\ref{lemma:PL_condition_loss}]
    To prove this lemma, we first introduce the nullcline of $C_1$ in the gradient regarding $C_1$. By setting $[\nabla \tilde\cL_{\mathrm{train}}(C_1, C_2)]_1 = 0$, we can derive the nullcline of $C_1$ as $G_*(C_2) = \frac{\alpha e^{\frac{\sigma^2}{2}}G_{1}(C_2)}{G_{2}(C_2)}$, and define that $\Delta(C_1, C_2) = C_1 - G_*(C_2)$. Then we can decompose the loss $\tilde\cL_{\mathrm{train}}(C_1, C_2)$ as 
    \begin{align*}
        \tilde\cL_{\mathrm{train}}(C_1, C_2) = \frac{\sigma^2}{d}\Bigg[\frac{\Delta^2(C_1, C_2)G_{2}(C_2)}{2} - \frac{\alpha^2e^{\sigma^2}G_{1}^2(C_2)}{2G_{2}(C_2)}\Bigg] + c,
    \end{align*}
    where $c$ is a constants irrelevant with $C_1$ and $C_2$.
    Since $(C_1^*, C_2^*)$ is the fixed point of $\nabla \tilde\cL_{\mathrm{train}}(C_1, C_2)$, we can obtain that $\Delta(C_1^*, C_2^*) = 0$, which helps us to further derive that 
    \begin{align}\label{eq:cL_diff}
        \tilde\cL_{\mathrm{train}}(C_1, C_2) -\tilde\cL_{\mathrm{train}}(C_1^*, C_2^*) = \frac{\sigma^2\Delta^2(C_1, C_2)G_{2}(C_2)}{2d}+\psi(C_2) -  \psi(C_2^*),
    \end{align}
    where $\psi(C_2) = -\frac{\alpha^2 e^{\sigma^2}G_{1}^2(C_2)}{2dG_{2}(C_2)}$. In addition, since 
    \begin{align*}
        \frac{\partial \tilde\cL_{\mathrm{train}}(C_1^*, C_2^*)}{\partial C_2} = \frac{\sigma^2\Delta(C_1^*, C_2^*)}{2d}\bigg(2\frac{\partial \Delta(C_1^*, C_2^*)}{\partial C_2}G_{2}(C_2^*) +\Delta(C_1^*, C_2^*) G'_{2}(C_2^*)\bigg) + \psi'(C_2^*) =0, 
    \end{align*}
    we can immediately conclude that $\psi'(C_2^*) = 0$. In addition, we can calculate that 
    \begin{align}\label{eq:psi_''_formula}
        \psi''(C_2) =& \frac{\alpha^2e^{\sigma^2}}{2dG^3_2(C_2)}\Bigg[4G_1(C_2)G_{1}'(C_2)G_{2}(C_2)G_{2}'(C_2)+ G^2_{1}(C_2)G_{2}(C_2)G_{2}''(C_2)\notag\\
        &-2G_2^2(C_2)\Big(G_1(C_2)G_1''(C_2) + \big[G_1'(C_2)\big]^2\Big)-2G_1^2(C_2)\big[G_2'(C_2)\big]^2\Bigg]\notag\\
        =&\frac{\alpha^2e^{\sigma^2}}{2dG^3_2(C_2)}\tilde G(C_2).
    \end{align}
    Similar to the previous proof of Lemma~\ref{lemma:bdd_updte} and Lemma~\ref{lemma:unique_local_min_in_R}, we can find $K\leq \cO(1)$ such that $\max\{|F_{k, \sigma}(C_2)|, |F_{k, \sigma}'(C_2)|, 9e^{4\sigma^2}\}   \leq \frac{K}{8}$.
    For the terms $G_1(C_2)$ and $G_2(C_2)$ and their derivatives, we can obtain their uniform upper and lower bounds for $C_2$ in $[0, 2]$ as
    \begin{align}\label{eq:bounds_C_1_C_2}
        & \frac{2}{\pi} - \frac{K}{d}\leq G_1(C_2) \leq \frac{3}{\pi} + \frac{K}{d}; \quad \frac{2}{\pi} - \frac{K}{d} \leq G_2(C_2) \leq \frac{4}{\pi} +\frac{K}{d}\notag\\
        &|G_1'(C_2) - \sigma^2|\leq \frac{K}{d};\quad |G_1''(C_2)|\leq \frac{K}{d}; \quad |G_2'(C_2) - 2\sigma^2C_2|\leq \frac{K}{d};\quad |G_2''(C_2)- 2\sigma^2|\leq \frac{K}{d}.
    \end{align}
    This further implies that $\frac{1}{2}\alpha e^{\sigma^2/2} \leq \Delta(C_1, C_2)\leq 2\alpha e^{\sigma^2/2}$.
    Substituting these terms into the $\tilde G(C_2)$ defined in ~\eqref{eq:psi_''_formula}, we can obtain that 
    \begin{align}\label{eq:tilde_G_C_2}
        \Bigg|\tilde G(C_2) - \bigg(\frac{16\sigma^2}{\pi^3}-\frac{8\sigma^4}{\pi^2} + \frac{48\sigma^4}{\pi^2}C_2 - \frac{24\sigma^4}{\pi^2}C_2^2 + \frac{12\sigma^6}{\pi}C_2^2 - \frac{8\sigma^6}{\pi}C_2^3\bigg)\Bigg|\leq \frac{90K}{\pi^3 d}.
    \end{align}
    By substituting the facts that $0\leq C_2\leq 2$ inner $\cR$, and $\sigma^2\leq \frac{1}{2\pi}$ into~\eqref{eq:tilde_G_C_2}, we can derive the lower and upper bounds for $\tilde G(C_2)$ as
    \begin{align*}
       \frac{8\sigma^2}{\pi^3} \leq \tilde G(C_2) \leq \frac{29\sigma^2}{\pi^3}.
    \end{align*}
    For now, we substitute the previously derived upper and lower bounds on $\tilde G(C_2)$, together with the corresponding bounds on $G_2(C_2)$ that $\frac{2}{\pi}-\frac{K}{d}\leq \frac{4}{\pi}+\frac{K}{d}$ as established in the proof of Lemma~\ref{lemma:bdd_updte}, into~\eqref{eq:psi_''_formula}. This yields the following bounds of $\psi''(C_2)$ uniformly for all $C_2$ in $\cR$ as
    \begin{align}\label{eq:bounds_psi_''}
        \frac{4\alpha^2\sigma^2e^{\sigma^2}}{65d}\leq \psi''(C_2) \leq \frac{3\alpha^2\sigma^2e^{\sigma^2}}{d}.
    \end{align}
    Consequently, by utilizing Taylor's expansion, the fact $\psi'(C_2^*) = 0$, and the uniform upper and lower bounds for $\psi''(C_2)$, we can derive that 
    \begin{align}\label{eq:upper_psi_diff}
        \psi(C_2) - \psi(C_2^*) = \psi'(C_2^*)\big(C_2 - C_2^*\big) + \frac{\psi''(\bar C_2)}{2} \big(C_2 - C_2^*\big)^2\leq \frac{3\alpha^2\sigma^2e^{\sigma^2}}{2d}\big(C_2 - C_2^*\big)^2,
    \end{align}
    where $\bar C_2$ is an intermediate value between $C_2$ and $C_2^*$. On the other hand, we can also have 
    \begin{align}\label{eq:lower_psi'_square}
        \big|\psi'(C_2)\big| = \Big|\psi'(C_2^*) +\psi''(\bar C_2) \big(C_2 - C_2^*\big) \Big|\geq \frac{4\alpha^2\sigma^2e^{\sigma^2}}{65d}\big|C_2 - C_2^*\big|.
    \end{align}
    By square on both sides of~\eqref{eq:lower_psi'_square} and compare it with~\eqref{eq:upper_psi_diff}, we obtain that 
    \begin{align}\label{eq:pl_condition_psi}
        \big[\psi'(C_2)\big]^2\geq \frac{\alpha^2\sigma^2e^{\sigma^2}}{600d}\big(\psi(C_2) - \psi(C_2^*)\big),
    \end{align}
    which establishes a PL condition for $\psi(C_2)$. In the next, we prove that this can imply a PL condition for $\tilde\cL_{\mathrm{train}}(C_1, C_2)$. We first consider two components $\frac{\partial \tilde\cL_{\mathrm{train}}(C_1, C_2)}{\partial C_1}$ and $\frac{\partial \tilde\cL_{\mathrm{train}}(C_1, C_2)}{\partial C_2}$ of $\|\nabla \tilde\cL_{\mathrm{train}}(C_1, C_2)\|_2$ as following:
    \begin{align*}
        &\frac{\partial \tilde\cL_{\mathrm{train}}(C_1, C_2)}{\partial C_1} = \frac{\sigma^2}{d}G_2(C_2)\Delta(C_1, C_2);\\
        &\frac{\partial \tilde\cL_{\mathrm{train}}(C_1, C_2)}{\partial C_2} = \psi'(C_2) + \frac{\sigma^2}{d}\frac{\alpha e^{\sigma^2/2}\big[G_1'(C_2)G_2(C_2) - G_1(C_2)G_2'(C_2)\big]}{G_2(C_1, C_2)}\Delta(C_1, C_2)+\frac{\sigma^2}{2d}G_2'(C_2)\Delta^2(C_1, C_2).
    \end{align*}
    By utilizing the results in~\eqref{eq:bounds_C_1_C_2}, we can obtain that 
    \begin{align}\label{eq:lower_partical_C_1}
        \bigg|\frac{\partial \tilde\cL_{\mathrm{train}}(C_1, C_2)}{\partial C_1}\bigg|^2\geq& \frac{17\sigma^4}{\pi^2d^2}\Delta^2(C_1, C_2),
    \end{align}
    and 
    \begin{align}\label{eq:lower_partical_C_2}
        &\bigg|\frac{\partial \tilde\cL_{\mathrm{train}}(C_1, C_2)}{\partial C_2}\bigg|^2\notag\\
        =&\bigg(\psi'(C_2) + \frac{\sigma^2}{d}\frac{\alpha e^{\sigma^2/2}\big[G_1'(C_2)G_2(C_2) - G_1(C_2)G_2'(C_2)\big]}{G_2(C_2)}\Delta(C_1, C_2)+\frac{\sigma^2}{2d}G_2'(C_2)\Delta^2(C_1, C_2)\bigg)^2\notag\\
        \geq &\frac{1}{2}\big[\psi'(C_2)\big]^2 -\bigg(\frac{\sigma^2}{d}\frac{\alpha e^{\sigma^2/2}\big[G_1'(C_2)G_2(C_2) - G_1(C_2)G_2'(C_2)\big]}{G_2(C_2)}\Delta(C_1, C_2)+\frac{\sigma^2}{2d}G_2'(C_2)\Delta^2(C_1, C_2)\bigg)^2\notag\\
        \geq& \frac{1}{2}\big[\psi'(C_2)\big]^2 - \frac{5\alpha \sigma^6e^{\frac{\sigma^2}{2}}}{d^2}\Delta^2(C_1, C_2),
    \end{align}
    where the first inequality holds as $a^2+b^2\geq \frac{a^2}{2}-b^2$ for all $a, b \in \RR$, and the second inequality is derived by utilizing the upper and lower bounds in~\eqref{eq:bounds_C_1_C_2}. We let $\kappa =\min\big\{1, \frac{17}{10\pi^2\alpha\sigma^2e^{\sigma^2/2}}\big\}$, and then can calculate that 
    \begin{align*}
        \big\|\nabla \tilde\cL_{\mathrm{train}}(C_1, C_2)\big\|_2^2  = &\bigg|\frac{\partial \tilde\cL_{\mathrm{train}}(C_1, C_2)}{\partial C_1}\bigg|^2 +  \bigg|\frac{\partial \tilde\cL_{\mathrm{train}}(C_1, C_2)}{\partial C_2}\bigg|^2 \\
        \geq& \bigg|\frac{\partial \tilde\cL_{\mathrm{train}}(C_1, C_2)}{\partial C_1}\bigg|^2 +  \kappa\bigg|\frac{\partial \tilde\cL_{\mathrm{train}}(C_1, C_2)}{\partial C_2}\bigg|^2\\
        \geq &\frac{17\sigma^4}{\pi^2d^2}\Delta^2(C_1, C_2)   + \frac{\kappa}{2}\big[\psi'(C_2)\big]^2 - \kappa\frac{5\alpha \sigma^6e^{\frac{\sigma^2}{2}}}{d^2}\Delta^2(C_1, C_2) \\
        \geq &\frac{17\sigma^4}{2\pi^2d^2}\Delta^2(C_1, C_2)   + \frac{\kappa}{2}\big[\psi'(C_2)\big]^2 \\
        \geq &\frac{2\sigma^2}{5\pi d}\frac{\sigma^2}{2d}G_2(C_2)\Delta^2(C_1, C_2) + \frac{\alpha^2\sigma^2e^{\sigma^2}\kappa}{1200}\big(\psi(C_2)-\psi(C_2^*)\big)\\
        \geq& \frac{2}{d}\min\bigg\{\frac{\sigma^2}{5\pi d}, \frac{\alpha^2\sigma^2e^{\sigma^2}}{600d},\frac{\alpha e^{\sigma^2/2}}{375\pi^2}\bigg\}\bigg(\frac{\sigma^2}{2d}G_2(C_2)\Delta^2(C_1, C_2) + \psi(C_2)-\psi(C_2^*)\bigg) \\
        \geq &\frac{2\mu_{1, \alpha, \sigma}}{d}\Big(\tilde\cL_{\mathrm{train}}(C_1, C_2) -\tilde\cL_{\mathrm{train}}(C_1^*, C_2^*)\Big),
    \end{align*}
    where $\mu_{1, \alpha, \sigma} = \min\big\{\frac{\sigma^2}{5\pi d}, \frac{\alpha^2\sigma^2e^{\sigma^2}}{600d},\frac{\alpha e^{\sigma^2/2}}{375\pi^2}\big\}$ is a constant solely depending on $\alpha$ and $\sigma$. Here, the first inequality holds as $\kappa\leq 1$. The second inequality is derived by applying the results of~\eqref{eq:lower_partical_C_1} and~\eqref{eq:lower_partical_C_2}. The third inequality holds as $\kappa\frac{5\alpha \sigma^6e^{\frac{\sigma^2}{2}}}{d^2}\leq \frac{17\sigma^4}{2\pi^2d^2}$ by definition of $\kappa$. The forth inequality is established by substituting the upper bound for $G_2(C_2)$ and applying the result of~\eqref{eq:pl_condition_psi}. Finally, we obtain the last inequality by applying the result of~\eqref{eq:cL_diff}. This completes the proof.
\end{proof}

Now, we are ready to prove the last conclusion of Theorem~\ref{thm:main_result_training}, i.e. the linear convergence rate for both training loss and parameters, which are presented in the following Theorem~\ref{thm:linear_convergence}.
\begin{theorem}[Third conclusion in Theorem~\ref{thm:main_result_training}]\label{thm:linear_convergence}
    Under the same conditions as Theorem~\ref{thm:main_result_training}, the iterates $C_1^{(t)}, C_2^{(t)}$ of gradient descent updates with respect to the training loss $\tilde\cL_{\mathrm{train}}(C_1, C_2)$ given in Lemma~\ref{lemma:update_C_1_C_2}, converge linearly to the unique local minimizer $\Cb^* = [C_1^*, C_2^*]^\top$ of the training loss $\tilde\cL_{\mathrm{train}}(C_1, C_2)$. In particular, for all $t\geq 0$, the training loss decays as 
\begin{align*}
    \tilde\cL_{\mathrm{train}}(C_1^{(t)}, C_2^{(t)}) - \tilde\cL_{\mathrm{train}}(C_1^*, C_2^*)
    \;\le\;
    \bigg(1 - \frac{\eta \mu_{1,\alpha,\sigma}}{d}\bigg)^t
    \Big(\tilde\cL_{\mathrm{train}}(C_1^{(0)}, C_2^{(0)}) - \tilde\cL_{\mathrm{train}}(C_1^*, C_2^*)\Big).
\end{align*}
Moreover, the coefficient vector $\Cb^{(t)} = \big[C_1^{(t)}, C_2^{(t)}\big]^\top$ converges linearly to $\Cb^*$ as
\begin{align*}
    \big\|\Cb^{(t)} - \Cb^*\big\|_2
    \;\le\;
    \mu_{2,\alpha,\sigma}
    \bigg(1 - \frac{\eta \mu_{1,\alpha,\sigma}}{d}\bigg)^{t/2}
    \|\Cb^*\|_2.
\end{align*}
Here, $\mu_{1, \alpha, \sigma}$ and $\mu_{2,\alpha,\sigma}$ are both positive constants solely depending on $\alpha$ and $\sigma$.
\end{theorem}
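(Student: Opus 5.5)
The plan is the standard PL-plus-smoothness argument, carried out inside the invariant set $\cR$. By Lemma~\ref{lemma:bdd_updte}, every iterate $\Cb^{(t)}$ stays in $\cR=[0,2\alpha e^{\sigma^2/2}]\times[0,2]$. Lemma~\ref{lemma:PL_condition_loss} supplies the PL inequality with constant $2\mu_{1,\alpha,\sigma}/d$ on $\cR$.

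The first step is a smoothness bound for $\tilde\cL_{\mathrm{train}}$ on $\cR$. The Hessian entries computed in the proof of Lemma~\ref{lemma:unique_local_min_in_R} show $\|\nabla^2\tilde\cL_{\mathrm{train}}\|\le \iota=\cO(1/d)$ uniformly on $\cR$. The entries of order $d/n$ are absorbed because $n=\Omega(d^2)$, so $\tilde\cL_{\mathrm{train}}$ is $\iota$-smooth there.

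The second step needs the segment between $\Cb^{(t)}$ and $\Cb^{(t+1)}$ to lie in $\cR$. This holds because $\cR$ is convex and both endpoints are in $\cR$. The descent lemma then gives
\begin{align*}
\tilde\cL_{\mathrm{train}}(\Cb^{(t+1)})\le \tilde\cL_{\mathrm{train}}(\Cb^{(t)})-\eta\Big(1-\tfrac{\eta\iota}{2}\Big)\|\nabla\tilde\cL_{\mathrm{train}}(\Cb^{(t)})\|_2^2 .
\end{align*}
Since $\eta\le\cO(1/n)$, we have $\eta\iota\le 1$, so the coefficient is at least $\eta/2$.

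The third step combines this with the PL inequality:
\begin{align*}
\tilde\cL_{\mathrm{train}}(\Cb^{(t+1)})-\tilde\cL_{\mathrm{train}}(\Cb^*)\le\Big(1-\tfrac{\eta\mu_{1,\alpha,\sigma}}{d}\Big)\big(\tilde\cL_{\mathrm{train}}(\Cb^{(t)})-\tilde\cL_{\mathrm{train}}(\Cb^*)\big).
\end{align*}
Iterating this yields the loss bound.

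There is a subtlety at this point. Lemma~\ref{lemma:PL_condition_loss} is stated relative to $\Cb^*$, and its decomposition \eqref{eq:cL_diff} shows that $\tilde\cL_{\mathrm{train}}-\tilde\cL_{\mathrm{train}}(\Cb^*)\ge0$ on $\cR$. That lower bound comes from $\Delta^2G_2\ge0$ together with the convexity of $\psi$, i.e.\ $\psi''>0$ by \eqref{eq:bounds_psi_''}. So $\Cb^*$ is in fact the global minimizer on $\cR$, and the contraction is meaningful.

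For the iterates, I would turn the loss gap into a distance bound through a quadratic growth inequality on $\cR$. The same decomposition gives
\begin{align*}
\tilde\cL_{\mathrm{train}}-\tilde\cL_{\mathrm{train}}(\Cb^*)\ge \tfrac{\sigma^2 G_2}{2d}\Delta^2+\tfrac{c\,\alpha^2\sigma^2e^{\sigma^2}}{d}(C_2-C_2^*)^2,
\end{align*}
again using the lower bound on $\psi''$. Next, $|C_1-C_1^*|\le|\Delta(C_1,C_2)|+|G_*(C_2)-G_*(C_2^*)|$, and $G_*$ is Lipschitz with an $\cO(1)$ constant by \eqref{eq:bounds_C_1_C_2}. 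Hence $\|\Cb-\Cb^*\|_2^2\le c'd\,(\tilde\cL_{\mathrm{train}}-\tilde\cL_{\mathrm{train}}(\Cb^*))$.

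At initialization, $\Cb^{(0)}=\mathbf 0$, and smoothness gives $\tilde\cL_{\mathrm{train}}(\mathbf0)-\tilde\cL_{\mathrm{train}}(\Cb^*)\le \frac{\iota}{2}\|\Cb^*\|_2^2$, using $\nabla\tilde\cL_{\mathrm{train}}(\Cb^*)=0$ and the fact that the segment $[\mathbf 0,\Cb^*]$ lies in $\cR$. Chaining these bounds gives $\|\Cb^{(t)}-\Cb^*\|_2^2\le c'd\cdot\frac{\iota}{2}(1-\eta\mu_1/d)^t\|\Cb^*\|_2^2$. Since $d\iota=\cO(1)$, taking square roots gives the claim with $\mu_{2,\alpha,\sigma}$ depending only on $\alpha,\sigma$.

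The main obstacle I expect is checking that the constants hidden in $\iota$, and in the quadratic growth bound, are genuinely $\Theta(1/d)$ functions of $(\alpha,\sigma)$ alone. This requires the $d/n$ and $1/d$ correction terms $F_{k,\sigma}$ to be uniformly bounded on $\cR$, which holds by continuity on the compact set. The smoothness-times-stepsize condition $\eta\iota\le1$ must also follow from $\eta\le\cO(1/n)$.
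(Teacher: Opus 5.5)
Your proposal is correct and follows the paper's proof essentially step for step: the descent lemma with the uniform Hessian bound $\iota$ on the convex invariant set $\cR$, the contraction via Lemma~\ref{lemma:PL_condition_loss}, and quadratic growth from the $\Delta$/$\psi$ decomposition chained with the loss decay. The one small deviation is the initial gap: you bound $\tilde\cL_{\mathrm{train}}(\mathbf 0)-\tilde\cL_{\mathrm{train}}(\Cb^*)$ by $\iota$-smoothness along the segment $[\mathbf 0,\Cb^*]\subset\cR$ at the critical point $\Cb^*$, whereas the paper uses the decomposition-based upper bound together with $|C_1^{(0)}-C_1^*|\ge|\Delta(C_1^{(0)},C_2^{(0)})|/\sqrt{2}$; your version is valid and slightly more direct.
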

\begin{proof}[Proof of Theorem~\ref{thm:linear_convergence}]
    Notice that we have demonstrated in the proof of Lemma~\ref{lemma:unique_local_min_in_R} that \\$\|\nabla^2\tilde\cL_{\mathrm{train}}(C_1, C_2)\|_\infty \leq \frac{16\alpha^2\sigma^4e^{\sigma^2} + 9\alpha e^{\sigma^2/2}\sigma^4 + 4\sigma^4}{d} = \iota$ for all $[C_1, C_2]^\top \in \cR$, which can implies that $\|\nabla^2\tilde\cL_{\mathrm{train}}(C_1, C_2)\|_2\leq \sqrt{2} \iota$. Then we can obtain that 
    \begin{align*}
        \tilde\cL_{\mathrm{train}}(C_1^{(t+1)}, C_2^{(t+1)})
        \leq& \tilde\cL_{\mathrm{train}}(C_1^{(t)}, C_2^{(t)}) - \eta \big\|\nabla \tilde\cL_{\mathrm{train}}(C_1^{(t)}, C_2^{(t)})\big\|_2^2 + \frac{\sqrt{2}\eta^2\iota}{2}\big\|\nabla \tilde\cL_{\mathrm{train}}(C_1^{(t)}, C_2^{(t)})\big\|_2^2\\
        \leq &\tilde\cL_{\mathrm{train}}(C_1^{(t)}, C_2^{(t)}) - \eta\bigg(1-\frac{\sqrt{2}\eta \iota}{2}\bigg) \big\|\nabla \tilde\cL_{\mathrm{train}}(C_1^{(t)}, C_2^{(t)})\big\|_2^2\\
        \leq &\tilde\cL_{\mathrm{train}}(C_1^{(t)}, C_2^{(t)}) - \frac{\eta\mu_{1, \alpha, \sigma}}{d}\Big(\tilde\cL_{\mathrm{train}}(C_1^{(t)}, C_2^{(t)}) - \tilde\cL_{\mathrm{train}}(C_1^*, C_2^*)\Big),
    \end{align*}
    where the first inequality is established by second-order Taylor's expansion and the fact that $[C_1^{(t+1)}, C_2^{(t+1)}]^\top  - [C_1^{(t)}, C_2^{(t)}]^\top = \eta\nabla \tilde\cL_{\mathrm{train}}(C_1^{(t)}, C_2^{(t)})$. The last inequality is obtained by PL condition established in Lemma~\ref{lemma:PL_condition_loss}, and $\sqrt{2}\eta \iota\leq 1$. Then by minus $\tilde\cL_{\mathrm{train}}(C_1^*, C_2^*)$ on both sides of the inequality above, we can obtain that 
    \begin{align}\label{eq:loss_decay}
        \tilde\cL_{\mathrm{train}}(C_1^{(t+1)}, C_2^{(t+1)}) - \tilde\cL_{\mathrm{train}}(C_1^*, C_2^*)\leq& \bigg(1-\frac{\eta\mu_{1, \alpha, \sigma}}{d}\bigg)\Big(\tilde\cL_{\mathrm{train}}(C_1^{(t)}, C_2^{(t)}) - \tilde\cL_{\mathrm{train}}(C_1^*, C_2^*)\Big)\notag\\
        \leq & \ldots \notag\\
        \leq & \bigg(1-\frac{\eta\mu_{1, \alpha, \sigma}}{d}\bigg)^{t+1}\Big(\tilde\cL_{\mathrm{train}}(C_1^{(0)}, C_2^{(0)}) - \tilde\cL_{\mathrm{train}}(C_1^*, C_2^*)\Big).
    \end{align}
    This completes the proof for the linear convergence rate of loss decay. In the next, we prove the parameter convergence. Notice that for any $[C_1, C_2]^\top \in \cR$, utilizing the definitions of $G_*(C_2)$ and $\Delta(C_1, C_2)$ in Lemma~\ref{lemma:PL_condition_loss}, we have 
    \begin{align*}
        \big|C_1 - C_1^*\big|= \big|\Delta(C_1, C_2) + G_*(C_2) - G_*(C_2^*)\big| \leq \big|\Delta(C_1, C_2)\big| + 2\alpha e^{\sigma^2/2}\big|C_2-C_2^*\big|
    \end{align*}
    where the equality holds by the definition of $\Delta(C_1, C_2)$, and the fact that $G_1^* = G_*(C_2^*)$. The second inequality is derived by triangle inequality and $|G_*'(C_2)|\leq 2\alpha e^{\sigma^2/2}$. Therefore, by $(a+b)^2\leq 2a^2+2b^2$, and the inequality established above, we can further obtain that 
    \begin{align}\label{eq:upper_C_1_C_2_diff}
        \big|C_1 - C_1^*\big|^2 + \big|C_2 - C_2^*\big|^2\leq 2 \Delta^2(C_1, C_2) + \big(1+4\alpha e^{\sigma^2}\big)\big|C_2 - C_2^*\big|^2.
    \end{align}
    On the other hand, we can calculate that 
    \begin{align}\label{eq:lower_loss_diff}
          \tilde\cL_{\mathrm{train}}(C_1, C_2) - \cL_{\mathrm{train}}(C_1^*, C_2^*) = &\frac{\sigma^2\Delta^2(C_1, C_2)G_{2}(C_2)}{2d}+\psi(C_2) -  \psi(C_2^*)\notag\\
          \geq & \frac{\sigma^2}{2\pi d}\Delta^2(C_1, C_2) + \frac{2\alpha^2 \sigma^2e^{\sigma^2}}{65d}\big|C_2 - C_2^*\big|^2,
    \end{align}
    where the second inequality holds as $G_2(C_2) \geq\frac{1}{\pi}$ as demonstrated in~\eqref{eq:bounds_C_1_C_2}, and $\psi''(C_2)\geq \frac{4\alpha^2\sigma^2e^{\sigma^2}}{65d}$ as demonstrated in~\eqref{eq:bounds_psi_''}. Compare~\eqref{eq:upper_C_1_C_2_diff} and~\eqref{eq:lower_loss_diff}, we can obtain that for all $[C_1, C_2]^\top \in \cR$, it holds
    \begin{align}\label{eq:bound_C_norm}
       \big|C_1 - C_1^*\big|^2 + \big|C_2 - C_2^*\big|^2\leq& \frac{130\pi d\max\{2, 1+4\alpha^2\sigma^2e^{\sigma^2}\}}{\min\{65\sigma^2, 4\pi\alpha^2\sigma^2e^{\sigma^2}\}} \Big(\tilde\cL_{\mathrm{train}}(C_1, C_2) - \tilde\cL_{\mathrm{train}}(C_1^*, C_2^*)\Big).
       %\\=&\frac{\mu_{2, \alpha, \sigma}}{d}\big(\tilde\cL_{\mathrm{train}}(C_1, C_2) - \cL(C_1^*, C_2^*)\big),
    \end{align}
    %where $\mu_{2, \sigma, \alpha} = \frac{130\pi\max\{2, 1+4\alpha^2\sigma^2e^{\sigma^2}\}}{\min\{65\sigma^2, 4\pi\alpha^2\sigma^2e^{\sigma^2}\}}$ is a constant solely depending on $\alpha$ and $\sigma$.
    %Combining this result with~\eqref{eq:loss_decay}, we immediate complete the proof. 
    On the other hand, for the initialization $C_1^{(0)} = C_2^{(0)} = 0$, we have
$\big|G_*(C_2^{(0)}) - G_*(C_2^*)\big| \le \cO\big(\frac{1}{d}\big)$ and
$\big|\Delta(C_1^{(0)}, C_2^{(0)}) + \alpha e^{\sigma^2/2}\big|
\le \cO\big(\frac{1}{d}\big)$.
This implies that
\begin{align*}
    \big|C_1^{(0)} - C_1^*\big|
    &= \big|\Delta(C_1^{(0)}, C_2^{(0)}) + G_*(C_2^{(0)}) - G_*(C_2^*)\big| \ge \frac{1}{\sqrt{2}} \big|\Delta(C_1^{(0)}, C_2^{(0)})\big|.
\end{align*}
Consequently,
\begin{align}\label{eq:lower_C_1_C_2_diff}
    \big\|\Cb^*\big\|_2^2
    = \big\|\Cb^{(0)} - \Cb^*\big\|_2^2
    = \big|C_1^{(0)} - C_1^*\big|^2
       + \big|C_2^{(0)} - C_2^*\big|^2 
    \ge \frac{1}{2}\Delta^2(C_1^{(0)}, C_2^{(0)})
       + \big|C_2^{(0)} - C_2^*\big|^2 .
\end{align}
Similar to the calculations in~\eqref{eq:lower_loss_diff}, we can obtain an upper bound for $\tilde\cL_{\mathrm{train}}(C_1, C_2) - \tilde\cL_{\mathrm{train}}(C_1^*, C_2^*)$ as
\begin{align}\label{eq:upper_loss_diff}
    \tilde\cL_{\mathrm{train}}(C_1, C_2) - \tilde\cL_{\mathrm{train}}(C_1^*, C_2^*) = &\frac{\sigma^2\Delta^2(C_1, C_2)G_{2}(C_2)}{2d}+\psi(C_2) -  \psi(C_2^*)\notag\\
          \leq & \frac{5\sigma^2}{2\pi d}\Delta^2(C_1, C_2) + \frac{3\alpha^2 \sigma^2e^{\sigma^2}}{2d}\big|C_2 - C_2^*\big|^2,
\end{align}
where the second inequality holds as $G_2(C_2) \leq\frac{5}{\pi}$ as demonstrated in~\eqref{eq:bounds_C_1_C_2}, and $\psi''(C_2)\geq 3\alpha^2\sigma^2e^{\sigma^2}$ as demonstrated in~\eqref{eq:bounds_psi_''}. Compare~\eqref{eq:lower_C_1_C_2_diff} and~\eqref{eq:upper_loss_diff}, we can conclude that 
\begin{align}\label{eq:bound_C_norm_inv}
   \tilde\cL_{\mathrm{train}}(C_1^{(0)}, C_2^{(0)}) - \tilde\cL_{\mathrm{train}}(C_1^*, C_2^*) \leq \frac{\max\{2\sigma^2, 3\alpha^2\sigma^2 e^{\sigma^2}\}}{d}\big\|\Cb^*\big\|_2^2.
\end{align}
For now, we have finished all the required inequalities, and we can finally derive that 
\begin{align*}
    \big\|\Cb^{(t)} - \Cb^*\big\|_2^2 =& \big|C_1^{(t)} - C_1^*\big|^2 + \big|C_2^{(t)} - C_2^*\big|^2\\
    \leq& \frac{130\pi d\max\{2, 1+4\alpha^2\sigma^2e^{\sigma^2}\}}{\min\{65\sigma^2, 4\pi\alpha^2\sigma^2e^{\sigma^2}\}} \Big(\tilde\cL_{\mathrm{train}}(C_1^{(t)}, C_2^{(t)}) - \cL_{\mathrm{train}}(C_1^*, C_2^*)\Big) \\
    \leq& \frac{130\pi d\max\{2, 1+4\alpha^2\sigma^2e^{\sigma^2}\}}{\min\{65\sigma^2, 4\pi\alpha^2\sigma^2e^{\sigma^2}\}}\bigg(1-\frac{\eta \mu_{1, \alpha, \sigma}}{d}\bigg)^t \Big(\tilde\cL_{\mathrm{train}}(C_1^{(0)}, C_2^{(0)}) - \cL_{\mathrm{train}}(C_1^*, C_2^*)\Big)\\
    \leq &\frac{130\pi \max\{2, 1+4\alpha^2\sigma^2e^{\sigma^2}\}\cdot\max\{2\sigma^2, 3\alpha^2\sigma^2 e^{\sigma^2}\}}{\min\{65\sigma^2, 4\pi\alpha^2\sigma^2e^{\sigma^2}\}}\bigg(1-\frac{\eta \mu_{1, \alpha, \sigma}}{d}\bigg)^t\big\|\Cb^*\big\|_2^2\\
    =&\mu_{2, \alpha, \sigma}^2\bigg(1-\frac{\eta \mu_{1, \alpha, \sigma}}{d}\bigg)^t\big\|\Cb^*\big\|_2^2.
\end{align*}
Here, the inequality is established by~\eqref{eq:bound_C_norm}. The second inequality is derived by~\eqref{eq:loss_decay}. And the last inequality is obtained by implying the results of~\eqref{eq:bound_C_norm_inv}. 
This completes the proof. 
    
    % On the other hand, for $C_1^{(0)} = C_2^{(0)} = 0$, we have $|G_*(C_2^{(0)}) - G_*(C_2^*)|\leq \cO\big(\frac{1}{d}\big)$ and $\big|\Delta(C_1^{(0)}, C_2^{(0)}) + \alpha e^{\sigma^2/2}\big|\leq \cO\big(\frac{1}{d}\big)$. This implies that 
    % \begin{align*}
    %     \Big|C_1^{(0)} - C_1^*\Big| = \Big|\Delta(C_1^{(0)}, C_2^{(0)}) + G_*(C_2^{(0)}) - G_*(C_2^*)\Big| \geq \frac{1}{\sqrt 2}\Big|\Delta(C_1^{(0)}, C_2^{(0)})\Big|,
    % \end{align*}
    % which further implies that 
    % \begin{align*}
    %     \big\|\Cb^*\big\|_2^2 = \Big\|\Cb^{(0)} - \Cb^*\Big\|_2^2 = \Big|C_1^{(0)} - C_1^*\Big|^2 + \Big|C_2^{(0)} - C_2^*\Big|^2 \geq \frac{1}{2} \Delta^2(C_1^{(0)}, C_2^{(0)}) + \Big|C_2^{(0)} - C_2^*\Big|^2.
    % \end{align*}
\end{proof}
\section{Proof of Theorem~\ref{thm:inference}}\label{appendix:proof_inference}

In this section, we present the proof of Theorem~\ref{thm:inference}. We begin by outlining the key insights underlying the argument. Corollary~\ref{thm:implicit_bias} establishes that the weight prediction produced by an $L$-layer transformer, when parameterized as in Theorem~\ref{thm:normalized_gd}, converges to the maximum-margin solution on the context data. Moreover, Theorem~\ref{thm:main_result_training} shows that, under supervision from a one-step GD, the parameter matrices
$\Vb^{(t)}$ and $\Wb^{(t)}$ of the self-attention layer converge to the specific parameterization described in Theorem~\ref{thm:normalized_gd}. Taken together, these results imply that it suffices to derive an upper bound on the Euclidean distance between the maximum-margin solution
$\btheta_{\mathrm{SVM}}(\Zb_0)$ and the ground truth $\btheta^*$. We first introduce several lemmas which will be utilized in proof.

\begin{lemma}\label{lemma:margin_w.h.p}
Suppose that $\xb_i$ are i.i.d. samples from a $d$-dimensional log-concave distribution with a positive definite covariance matrix $\bSigma$ for all $i\in [n]$. Then for any $\delta >0$ and any $\btheta\in \SSS^{d-1}$, the following conclusion holds with probability at least $1-\delta$:
\begin{align*}
    |\la\btheta, \xb_i\ra|\geq \frac{\sqrt{\lambda_{\min}(\bSigma)}\delta}{2n}, \quad \text{for all } i\in [n].
\end{align*}
\end{lemma}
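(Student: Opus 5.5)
The plan is to reduce the claim to a one-dimensional anti-concentration bound for log-concave densities and then take a union bound over the $n$ samples. Fix $\btheta\in\SSS^{d-1}$ and write $u_i=\la\btheta,\xb_i\ra$. Since $\xb_i$ is log-concave, its linear image $u_i$ is a one-dimensional log-concave random variable (Prékopa–Leindler: marginals of log-concave measures are log-concave). It has mean zero and variance $s^2=\btheta^\top\bSigma\btheta\ge\lambda_{\min}(\bSigma)>0$. The target event is $\{|u_i|\ge t\ \forall i\}$ with $t=\sqrt{\lambda_{\min}(\bSigma)}\,\delta/(2n)$.

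\textbf{Step 1 (density bound).} I will invoke the standard fact that a one-dimensional log-concave density $g$ with variance $s^2$ satisfies $\sup_x g(x)\le 1/s$; this is from Lovász–Vempala, where in fact $g(x)\le 1/s$ holds everywhere. If a self-contained proof is wanted, I would first rescale $u/s$ to isotropic position. I would then use that an isotropic log-concave density on $\RR$ is bounded by $1$: the maximum $M$ of $g$ controls the variance from below, since log-concavity forces $g\ge M/e$ on an interval of length about $1/M$ around the mode, which yields $\mathrm{Var}\gtrsim 1/M^2$. Pinning the constant at $1$ is the only delicate point. I will cite the known result rather than re-derive it, and any absolute constant suffices up to adjusting the factor $2$.

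\textbf{Step 2 (small-ball probability).} From Step 1,
\[
\PP(|u_i|<t)=\int_{-t}^{t}g(x)\,dx\le \frac{2t}{s}\le \frac{2t}{\sqrt{\lambda_{\min}(\bSigma)}}=\frac{\delta}{n}.
\]

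\textbf{Step 3 (union bound).} Taking a union bound over $i\in[n]$ gives
\[
\PP\bigl(\exists i:|u_i|<t\bigr)\le n\cdot\frac{\delta}{n}=\delta,
\]
which is the claim. Independence of the samples is not even needed here.

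The main obstacle is Step 1, namely having the correct constant in the sup-density bound for log-concave laws. The sharp bound $g\le 1/s$ matches the stated factor exactly, because $2t/s=\delta/n$ when $t=s\delta/(2n)$ with $s\ge\sqrt{\lambda_{\min}}$. If only a weaker constant $c/s$ were available, the lemma would hold with $\delta$ replaced by $c\delta$, so I would make sure to cite the sharp form.
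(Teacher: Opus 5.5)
Your proposal is correct and is essentially the paper's argument. You project onto $\btheta$ to get a one-dimensional log-concave variable with variance $\btheta^\top\bSigma\btheta\ge\lambda_{\min}(\bSigma)$, bound its density by $1/\sqrt{\btheta^\top\bSigma\btheta}$ via the isotropic bound (the paper cites Lemma 2 of Balcan--Long), get $\PP(|\la\btheta,\xb_i\ra|<t)\le 2t/\sqrt{\lambda_{\min}(\bSigma)}$, and finish with a union bound. The only nit is that the lemma does not assume zero mean, but your estimate $\int_{-t}^{t}g\le 2t\sup g$ never uses it (the paper handles a nonzero mean by translating the interval), so nothing changes.
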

\begin{proof}[Proof of Lemma~\ref{lemma:margin_w.h.p}]
    For all $i\in [n]$, we know that $\la\btheta, \xb_i\ra$ follows a one-dimensional log-concave distribution with mean $\mu_{l} = \la\btheta, \EE[\xb_i]\ra$ and variance $\sigma_{l}^2 = \btheta^\top \bSigma\btheta$. This implies that $\frac{\la\btheta, \xb_i\ra - \mu_l}{\sigma_l}$ follows one-dimensional isotropic log-concave distribution. Consequently, for any $s>0$, we have 
    \begin{align*}
        \PP\big(|\la\btheta, \xb_i\ra|<s\big) =& \PP\bigg(\bigg|\frac{\la\btheta, \xb_i\ra-\mu_l}{\sigma_l} + \frac{\mu_l}{\sigma_l}\bigg|<\frac{s}{\sigma_l}\bigg) \\
        =&\PP\bigg(-\frac{s}{\sigma_l}-\frac{\mu_l}{\sigma_l}\leq \frac{\la\btheta, \xb_i\ra-\mu_l}{\sigma_l}<\frac{s}{\sigma_l}-\frac{\mu_l}{\sigma_l}\bigg) \leq \frac{2s}{\sigma_l},
    \end{align*}
    where the last inequality holds as the probability density function of isotropic log concave distribution is always smaller than 
    $1$ as demonstrated in Lemma~\ref{lemma:bdd_density_log_concave}.
    Taking a union bound for all $i\in [n]$, we have 
    \begin{align*}
     \PP\bigg(\min_{i\in[n]}|\la\btheta, \xb_i\ra|<s\bigg)\leq \sum_{i=1}^n\PP\big(|\la\btheta, \xb_i\ra|<s\big)\leq \frac{2ns}{\sigma_l}.
    \end{align*}
    Let the right hand side of the above inequality smaller than $\delta$, we derive that $s\geq \frac{\sigma_l\delta}{2n}\geq\frac{\sqrt{\lambda_{\min}(\bSigma)}\delta}{2n}$, which completes the proof. 
\end{proof}

\begin{lemma}\label{lemma:data_norm_w.h.p}
Suppose that $\xb_i$ are i.i.d. samples from a $d$-dimensional log-concave distribution with mean vector $\bmu$ and a positive definite covariance matrix $\bSigma$ for all $i\in [n]$. Then for any $\delta >0$, the following conclusion holds with probability at least $1-\delta$:
\begin{align*}
    \|\xb_i\|_2\leq \cO\bigg(\sqrt{d} + \log\Big(\frac{n}{\delta}\Big)\bigg), \quad \text{for all } i\in [n].
\end{align*}
\end{lemma}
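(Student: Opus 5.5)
Our plan is to reduce the bound on $\|\xb_i\|_2$ to a tail bound for the norm of a single log-concave vector, and then take a union bound over $i\in[n]$. First we center the vector. Write $\xb_i = \bmu + \bSigma^{1/2}\ub_i$, where $\ub_i = \bSigma^{-1/2}(\xb_i-\bmu)$. An affine image of a log-concave vector is log-concave, so $\ub_i$ is isotropic log-concave: $\EE[\ub_i]=\mathbf{0}$ and $\EE[\ub_i\ub_i^\top]=\Ib_d$. Then $\|\xb_i\|_2\le \|\bmu\|_2 + \|\bSigma\|_2^{1/2}\|\ub_i\|_2$. In the setting of Theorem~\ref{thm:inference} we have $\bmu=\mathbf{0}$ by Assumption~\ref{assumption:log_concave}. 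We treat $\|\bmu\|_2$ and $\|\bSigma\|_2$ as $\cO(1)$ problem constants absorbed into the $\cO(\cdot)$, which the statement implicitly does.

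The core step is a concentration inequality for $\|\ub\|_2$ when $\ub$ is isotropic log-concave. We will invoke the standard result (Paouris' inequality, or the weaker but sufficient Lov\'asz--Vempala bound) that there are absolute constants $c,C$ with
\[
\PP\big(\|\ub\|_2 \ge C\sqrt{d} + s\big)\le e^{-cs} \qquad \text{for all } s\ge 0 .
\]
If we want a self-contained derivation, one route is as follows. Every one-dimensional marginal $\la\vb,\ub\ra$ with $\|\vb\|_2=1$ is isotropic log-concave, so it has a sub-exponential tail with absolute constants. This holds because a log-concave density decays exponentially beyond a constant multiple of its standard deviation; compare Lemma~\ref{lemma:bdd_density_log_concave}. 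Next, $\EE\|\ub\|_2\le (\EE\|\ub\|_2^2)^{1/2}=\sqrt{d}$. Finally, Borell's lemma, which says log-concave measures have exponential tails for seminorms, applied to the norm $\|\cdot\|_2$ gives $\PP(\|\ub\|_2\ge t\,\EE\|\ub\|_2)\le e^{-ct}$. Together these yield $\PP(\|\ub\|_2\ge C\sqrt d\, t)\le e^{-ct}$. This Borell route gives only $\sqrt d\log(n/\delta)$ rather than $\sqrt d+\log(n/\delta)$, so we will cite the sharper Paouris form to obtain the additive rate.

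Given the tail bound, we set $s = c^{-1}\log(n/\delta)$ for each $i$ and take a union bound over the $n$ samples. The total failure probability is then at most $n\cdot \delta/n=\delta$. On the complement, every $i$ satisfies $\|\xb_i\|_2\le \|\bmu\|_2+\|\bSigma\|_2^{1/2}\big(C\sqrt d + c^{-1}\log(n/\delta)\big)=\cO\big(\sqrt d+\log(n/\delta)\big)$.

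The main obstacle is the concentration step. The elementary argument (a sub-exponential marginal combined with a net over $\SSS^{d-1}$) loses a factor of order $\sqrt d$, giving about $d+\log(n/\delta)$. The Borell-lemma argument gives a multiplicative rather than additive dependence. Achieving the stated additive form $\sqrt d+\log(n/\delta)$ therefore requires quoting the thin-shell/Paouris-type deviation inequality for isotropic log-concave vectors as a known result, and we plan to cite it rather than reprove it.
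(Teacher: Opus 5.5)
Your proposal is correct and follows the paper's proof: you whiten to an isotropic log-concave vector, bound $\|\xb_i\|_2\le \|\bmu\|_2+\sqrt{\lambda_{\max}(\bSigma)}\,\|\ub_i\|_2$, apply Paouris' inequality, and take a union bound over $i\in[n]$, absorbing $\|\bmu\|_2$ and $\lambda_{\max}(\bSigma)$ into the $\cO(\cdot)$ exactly as the paper does. Your additive tail $\PP(\|\ub\|_2\ge C\sqrt d+s)\le e^{-cs}$ is equivalent, up to constants, to the paper's multiplicative form $\PP(\|\ub\|_2\ge s\sqrt d)\le e^{-s\sqrt d/c}$ for $s\ge 1$; your choice $s=c^{-1}\log(n/\delta)$ corresponds to the paper's $s=1+\frac{c}{\sqrt d}\log(n/\delta)$.
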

\begin{proof}
    Notice that $\tilde \xb_i  = \bSigma^{-\frac{1}{2}}(\xb_i - \bmu)$ follows a $d$-dimensional isotropic log-concave distribution. Then by Lemma~\ref{lemma:Paouris_inequality}, for any $s>1$ and $i\in[n]$, it holds 
    \begin{align*}
        \PP\big(\|\tilde \xb_i\|_2\geq s\sqrt{d}\big)\leq e^{-\frac{s\sqrt{d}}{c}},
    \end{align*}
    where $c$ is an absolute positive constant. In addition, we have 
    \begin{align*}
        \|\xb_i\|_2 \leq \Big\|\bSigma^{\frac{1}{2}}\tilde \xb_i\Big\|_2 + \|\bmu\|_2 \leq \sqrt{\lambda_{\max}(\bSigma)}\|\tilde\xb_i\|_2+ \|\bmu\|_2.
    \end{align*}
    Combine these two results, we can obtain that for any $s>1$,  
    \begin{align*}
        \PP\big(\|\xb_i\|_2\geq \sqrt{\lambda_{\max}(\bSigma)}s\sqrt{d}+ \|\bmu\|_2\big) \leq \PP\big(\|\tilde \xb_i\|_2\geq s\sqrt{d}\big) \leq e^{-\frac{s\sqrt{d}}{c}}.
    \end{align*}
     Taking a union bound for all $i\in [n]$, we have 
    \begin{align*}
     \PP\bigg(\max_{i\in[n]}\|\xb_i\|_2\geq \sqrt{\lambda_{\max}(\bSigma)}s\sqrt{d}+ \|\bmu\|_2\bigg)\leq \sum_{i=1}^n\PP\big(\|\xb_i\|_2\geq \sqrt{\lambda_{\max}(\bSigma)}s\sqrt{d}+ \|\bmu\|_2\big)\leq ne^{-\frac{s\sqrt{d}}{c}}.
    \end{align*}
    Let the right hand side of the above inequality smaller than $\delta$, we derive that $s \geq \frac{c}{\sqrt d}\log\big(\frac{n}{\delta}\big)$. Therefore, by setting $s = 1+ \frac{c}{\sqrt d}\log\big(\frac{n}{\delta}\big)$, we complete the proof. 
\end{proof}

% \begin{lemma}
%     Suppose that there exist $n$ pairs $\{(\xb_i, y_i)\}_{i=1}^n\subset\RR^d\times \{\pm 1\}$ are generated following the linear model as $y_i =\sign(\la\btheta^*, \xb_i\ra)$ for 
% \end{lemma}

Now, we are ready to prove Theorem~\ref{thm:inference}
\begin{proof}[Proof of Theorem~\ref{thm:inference}]
Combining the results of Theorem~\ref{thm:normalized_gd_general}, and Theorem~\ref{thm:main_result_training}, we know that the weight prediction $\btheta_L=[\mathtt{TF}(\Zb_0;[\Vb^{(t)}]^{\otimes L}, [\Wb^{(t)}]^{\otimes L})]_{d+1:2d, n+1}$ of the looped transformers equals to that obtained by applying normalized gradient descent on rescaled dataset $\{(C_2^{(t)}\xb_i, y_i)\}_{i=1}^n$ with a learning rate $C_1^{(t)}/C_2^{(t)}$. In addition, the rescaled dataset $\{(C_2^{(t)}\xb_i, y_i)\}_{i=1}^n$ shares the same maximum margin solution %and the ground truth classifier 
with the original context data, i.e. $\btheta_{\mathrm{SVM}}(\Zb_0)$. When $t \geq \Omega\big(\frac{d[\log(3\|\Cb^*\|_2) -\log(1\wedge \alpha e^{\sigma^2/2})]}{\eta\mu_{1, \alpha, \sigma}}\big)$, the convergence results in Theorem~\ref{thm:main_result_training} implies that 
\begin{align*}
    \frac{\alpha e^{\sigma^2/2}}{2}\leq C_1^{(t)}\leq 2 \alpha e^{\sigma^2/2};\quad \frac{1}{2} \leq C_2^{(t)} \leq 2.
\end{align*}
Consequently, we have $\frac{C_1^{(t)}}{C_2^{(t)}} = \Theta(\alpha) \leq \cO(1)$, and Theorem~\ref{thm:implicit_bias} guarantees that 
\begin{align}\label{eq:bdd_implicit_bias}
    \bigg\|\frac{\btheta_L}{\|\btheta_L\|_2}-\btheta_{\mathrm{SVM}}(\Zb_0)\bigg\|_2\leq \cO\bigg(\frac{\log n}{\alpha L}\bigg). 
\end{align}
This result demonstrates that it suffices to provide an upper bound for $\|\btheta_{\mathrm{SVM}}(\Zb_0)-\btheta^*\|_2$, which can be converted to the test error of $\btheta_{\mathrm{SVM}}(\Zb_0)$ as
\begin{align}\label{eq:bdd_between_distance_generalization}
    \big\|\btheta_{\mathrm{SVM}}(\Zb_0)-\btheta^*\big\|_2  =& 2\sin\bigg(\frac{\angle(\btheta_{\mathrm{SVM}}(\Zb_0), \btheta^*)}{2}\bigg) \leq \angle(\btheta_{\mathrm{SVM}}(\Zb_0), \btheta^*) \\
    \leq& c_{-}^{-1}\PP_{\xb\sim \cD_{\xb}}\big(\sign(\la\btheta_{\mathrm{SVM}}(\Zb_0), \xb\ra)\neq \sign(\la\btheta^*, \xb\ra)\big) = c_{-}^{-1} p\big(\btheta_{\mathrm{SVM}}(\Zb_0)\big),
\end{align}
where the last inequality holds by Lemma~\ref{lemma:angle_generalizatation_bound}, and $c_{-}$ is an absolute positive constant. In addition, $p(\btheta)$ is defined as: 
\begin{align*}
    p(\btheta)=\PP_{\xb\sim \cD_{\xb}}\big(\sign(\la\btheta, \xb\ra)\neq \sign(\la\btheta^*, \xb\ra)\big) 
\end{align*}
for all $\btheta\in \SSS^{d-1}$, exactly representing the test error of $\btheta$. In the following, we focus on providing the upper bound for this term. By Lemma~\ref{lemma:margin_w.h.p} and Lemma~\ref{lemma:data_norm_w.h.p}, with probability at least $1-\frac{2}{3}\delta$, it holds that 
\begin{align}\label{eq:concentration_bd}
    \min_{i\in[n]} y_i \la\btheta^*, \xb_i\ra \geq \frac{\sqrt{\lambda_{\min}(\bSigma)}\delta}{6n};\ \max_{i\in[n]}\|\xb_i\|_2 \leq c_1\bigg(\sqrt{d} + \log\Big(\frac{n}{\delta}\Big)\bigg),
\end{align}
where $c_1$ is an absolute positive constant. We set $\epsilon = \frac{\sqrt{\lambda_{\min}(\bSigma)}\delta}{12c_1n\big(\sqrt{d} + \log(n\delta^{-1})\big)}$, and denote $N(\SSS^{d-1}, \epsilon)$ as the $\epsilon$-net on the $d$-dimensional unit sphere $\SSS^{d-1}$. 
Then by the definition of $\epsilon$-net, there exist $\tilde \btheta\in N(\SSS^{d-1}, \epsilon)$ such that $\|\btheta_{\mathrm{SVM}}(\Zb_0) - \tilde\btheta\|_2\leq \epsilon$. Then for $\tilde \btheta$, it can still achieve zero classification error on context dataset $\{(\xb_i, y_i)\}_{i=1}^n$, as for any $i\in [n]$, 
\begin{align*}
    y_i\la\xb_i, \tilde \btheta \ra \geq&  \min_{i\in [n]} y_i\la\xb_i, \btheta_{\mathrm{SVM}}(\Zb_0) \ra +y_i\la\xb_i,  \tilde \btheta - \btheta_{\mathrm{SVM}}(\Zb_0) \ra\\
    \geq& \min_{i\in [n]} y_i\la\xb_i, \btheta^* \ra - \|\xb_i\|_2\|\btheta - \btheta_{\mathrm{SVM}}(\Zb_0)\|_2\\
    \geq& \frac{\sqrt{\lambda_{\min}(\bSigma)}\delta}{6n} - c_1\bigg(\sqrt{d} + \log\Big(\frac{n}{\delta}\Big)\bigg)\epsilon \geq \frac{\sqrt{\lambda_{\min}(\bSigma)}\delta}{12n}.
\end{align*}
Here, the second inequality holds as $\min_{i\in [n]} y_i\la\xb_i, \btheta_{\mathrm{SVM}}(\Zb_0) \ra\geq \min_{i\in [n]} y_i\la\xb_i, \btheta^* \ra$ by the definition of SVM solution, and $y_i\la\xb_i,  \tilde \btheta - \btheta_{\mathrm{SVM}}(\Zb_0) \ra\geq  - \|\xb_i\|_2\|\btheta - \btheta_{\mathrm{SVM}}(\Zb_0)\|_2$ by Cauchy-Schwarz inequality. The third inequality is derived by~\eqref{eq:concentration_bd}, and the last inequality holds by our definition of $\epsilon$. Then for any $\varepsilon>0$, we can derive that 
%, let $\cA_{\varepsilon} = \{p(\tilde \btheta) > \varepsilon\}$ denote the event that test error of $\tilde \btheta$ is larger than $\varepsilon$. Then we have 
\begin{align*}
    \PP\big(p(\tilde \btheta)\geq \varepsilon\big) \leq &\PP\big(\big\{\exists \ \btheta \in N(\SSS^{d-1}, \epsilon): p(\btheta)\geq \varepsilon, \ \text{and} \ \min_{i\in [n]} y_i\la\btheta, \xb_i\ra >0\big\}\big) \\
    \leq &\big|N(\SSS^{d-1}, \epsilon)\big| e^{-n\varepsilon} \leq \bigg(\frac{3}{\epsilon}\bigg)^{d}e^{-n\varepsilon}.
\end{align*}
Here, the second inequality holds as for any $\btheta \in \SSS^{d-1}$, $\PP( \min_{i\in [n]} y_i\la\btheta, \xb_i\ra >0)\leq (1-p(\btheta))^n\leq e^{-np(\btheta)}$, and we take an union bound for all $\btheta \in N(\SSS^{d-1}, \epsilon)$. The last inequality holds as Lemma~\ref{lemma:covering_number} guarantees that $\big|N(\SSS^{d-1}, \epsilon)\big| \leq \big(\frac{3}{\epsilon}\big)^{d}$. By setting $\big(\frac{3}{\epsilon}\big)^{d}e^{-n\varepsilon} \leq \frac{\delta}{3}$ and replacing the definition of $\epsilon$, we can derive that $\varepsilon= \Theta\Big(\frac{ d\log\big(\frac{n\sqrt{d}}{\sqrt{\lambda_{\min}(\bSigma)}\delta^2}\big)}{n}\Big)$. Therefore, combined with fact that~\eqref{eq:concentration_bd} holds with probability at least $1-\frac{2\delta}{3}$, we can conclude that 
\begin{align*}
    p(\tilde \btheta) \leq \cO\bigg(\frac{ d\big[\log(n\sqrt{d}) - \log(\sqrt{\lambda_{\min}(\bSigma)}\delta^2)\big]}{n}\bigg)
\end{align*}
holds with probability at least $1-\delta$. 
On the other hand, we can further calculate that 
\begin{align*}
     p\big(\tilde \btheta_{\mathrm{SVM}}(\Zb_0)\big) \leq& p(\tilde \btheta) + \PP_{\xb\sim\cD_{\xb}}\big(\sign(\la\btheta_{\mathrm{SVM}}(\Zb_0), \xb\ra)\neq \sign(\la\tilde \btheta, \xb\ra)\big)\\
     \leq&p(\tilde \btheta) + c_+\pi \big\|\btheta_{\mathrm{SVM}}(\Zb_0) - \tilde \btheta\big\|_2 \leq p(\tilde \btheta) + c_+\pi\epsilon \\
     \leq&  \cO\bigg(\frac{ d\big[\log(n\sqrt{d}) - \log(\sqrt{\lambda_{\min}(\bSigma)}\delta^2)\big]}{n}\bigg).
\end{align*}
Here, the second inequality holds by Lemma~\ref{lemma:angle_generalizatation_bound}, where $c_+$ is a positive absolute constant. The third inequality is derived as $\big\|\btheta_{\mathrm{SVM}}(\Zb_0) - \tilde \btheta\big\|_2\leq \epsilon$ by the choice of $\tilde\btheta$ in $N(\SSS^{d-1}, \epsilon)$. And the last inequality holds as $\epsilon = \frac{\sqrt{\lambda_{\min}(\bSigma)}\delta}{12c_1n\big(\sqrt{d} + \log(n\delta^{-1})\big)} \leq \cO\big(\frac{d}{n}\big)$. Combined this result with~\eqref{eq:bdd_implicit_bias} and~\eqref{eq:bdd_between_distance_generalization}, we finally conclude that 
\begin{align*}
    \bigg\|\frac{\btheta_L}{\|\btheta_L\|_2}-\btheta^*\bigg\|_2\leq &\bigg\|\frac{\btheta_L}{\|\btheta_L\|_2}-\btheta_{\mathrm{SVM}}(\Zb_0)\bigg\|_2 + \big\|\btheta_{\mathrm{SVM}}(\Zb_0) - \btheta^*\big\|_2\\
    \leq& \cO\bigg(\frac{\log n}{\alpha L} + \frac{d\log\big(\max\{n, d, \lambda_{\min}(\bSigma)^{-1}, \delta^{-1}\}\big)}{n}\bigg).
\end{align*}
This completes the proof. 
\end{proof} 

\section{Additional experimental results}\label{appendix:add_exper}

In this section, we further validate the equivalence between softmax transformers and normalized gradient descent in Theorem~\ref{thm:normalized_gd} on real-world datasets. We consider two datasets from different modalities: MNIST for image classification and SST-2 for sentiment classification. For MNIST, we use the binary classification task between digits $1$ and $2$, and convert each image into a $d=20$ dimensional vector representation using a trained CNN encoder. For SST-2, we use a pretrained BERT encoder followed by a projection layer to obtain $d=20$ dimensional sentence representations. In both cases, given the encoded feature vector $\xb_i$ and binary label $y_i\in\{-1,1\}$, we form the signed feature vector $\zb_i=y_i\cdot\xb_i$ and construct the input matrix $\Zb_0$ in the same form as~\eqref{eq:def_embedding}. We then compare the hidden-layer outputs of the manually constructed looped softmax transformer in Theorem~\ref{thm:normalized_gd} with the corresponding normalized-gradient-descent iterates over $L=30$ layers/iterations.

\begin{figure}[t]
    \centering
    \includegraphics[width=0.55\textwidth]{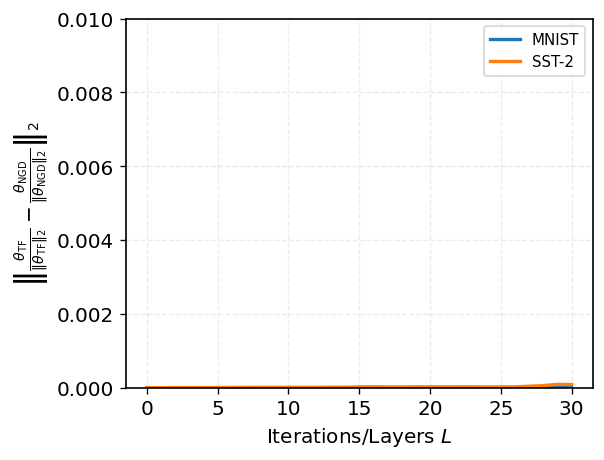}
    \caption{
    Difference between the normalized transformer output and the normalized NGD iterate on MNIST and SST-2.
    }
    \label{fig:real_world_tf_ngd}
\end{figure}

Figure~\ref{fig:real_world_tf_ngd} reports the discrepancy
$\big\|
\frac{\btheta_{\mathrm{TF}}}{\|\btheta_{\mathrm{TF}}\|_2}
-
\frac{\btheta_{\mathrm{NGD}}}{\|\btheta_{\mathrm{NGD}}\|_2}
\big\|_2$ between the transformer output and the NGD iterate across layers. The discrepancy remains negligible on both MNIST and SST-2 throughout the entire trajectory. This shows that, after raw inputs are converted into vector representations by standard encoders, the constructed softmax transformer continues to closely match the NGD dynamics. These results provide empirical evidence that the equivalence in Theorem~\ref{thm:normalized_gd} is robust across different input modalities, including image and text data.

\section{Technical lemmas}
\subsection{Expectation calculations}\label{section:expectation_calculation}
% \begin{lemma}\label{lemma:integral_upper_lowerI}
% Let $x$ be a standard Gaussian random variable, then for any positive scalar $a\geq 0$ and integers $n$, it holds that
% \begin{align*}
%     \int_{0}^\infty
% \end{align*}
    
% \end{lemma}

\begin{lemma}\label{lemma:softmax_expectation_I}
Let $x_1, x_2, \ldots, x_n\sim \cN(0,\sigma^2)$ be $n$ i.i.d Gaussian random variables, and $\tilde x_1, \tilde x_2, \ldots, \tilde x_n\sim \cN(0,\sigma^2)$ be another $n$ i.i.d Gaussian random variables. In addition, $a$ is a positive scalar, and $\btheta_*, \btheta_0\in\RR^d$ are two independent random vectors following uniform $d-$dimensional sphere distribution, then we have 
\begin{align*}
 \Bigg|\EE\bigg[\frac{\sum_{i_1=1}^n\sum_{i_2=1}^ne^{-\la\btheta_0, \btheta_*\ra(|\tilde x_{i_1}|+a|\tilde x_{i_2}|)- \|(\Ib_d-\btheta_*\btheta_*^\top)\btheta_0\|_2(x_{i_1} + ax_{i_2})}x_{i_1}x_{i_2}}{\sum_{i=1}^ne^{-a\la\btheta_0, \btheta_*\ra|\tilde x_i|-a \|(\Ib_d-\btheta_*\btheta_*^\top)\btheta_0\|_2x_{i}}}\bigg] -2na\sigma^4e^{\frac{\sigma^2}{2}}B\Bigg|\leq  f_1(a, \sigma).
\end{align*}
Here, $B = \EE[\|(\Ib_d-\btheta_*\btheta_*^\top)\btheta_0\|_2^2\Phi(-\la\btheta_0, \btheta_*\ra\sigma)]$ is an absolute constant independent of $a, n$ satisfying that $\big|B -\frac{1}{2}\big|\leq \frac{c_1(a, \sigma)}{d}$, and $f_1(a, \sigma)$ is an analytic function of $a$ and $\sigma$, and irrelevant with $n, d$.
\end{lemma}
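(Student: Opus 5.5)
We condition on $(\btheta_*,\btheta_0)$ and write $u=\la\btheta_0,\btheta_*\ra$ and $v=\|(\Ib_d-\btheta_*\btheta_*^\top)\btheta_0\|_2$, so that $u^2+v^2=1$. We also set $w_i=e^{-au|\tilde x_i|-avx_i}$, $S=\sum_i w_i$, and $g_i=e^{-u|\tilde x_i|-vx_i}$. The numerator factorizes, so the quantity inside the expectation equals
\[
\frac{\big(\sum_{i_1} g_{i_1}x_{i_1}\big)\big(\sum_{i_2} w_{i_2}x_{i_2}\big)}{S}.
\]
We split the double sum into the diagonal part $i_1=i_2$ and the off-diagonal part $i_1\ne i_2$. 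The plan is to show three things: the off-diagonal part contributes $n\cdot 2a\sigma^4 e^{\sigma^2/2}v^2\Phi(-u\sigma)+O(1)$; the diagonal part is $O(1)$ uniformly in $n$; and all error terms are bounded by an analytic function of $(a,\sigma)$ alone, because $u,v\in[-1,1]$.

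\textbf{Off-diagonal term.} Write $S=S_{-i_1}+w_{i_1}$, where $S_{-i_1}=\sum_{i\ne i_1}w_i$. Since $g_{i_1}x_{i_1}$ depends only on sample $i_1$, we have $\EE[g_{i_1}x_{i_1}\mid S_{-i_1}]$-type factorizations once $w_{i_1}$ is removed from the denominator. The key step is a law-of-large-numbers expansion of the self-normalized sum $\sum_{i_2\ne i_1}w_{i_2}x_{i_2}/S$ around the ratio $\EE[w x]/\EE[w]$, which is $O(1)$. We will use the Gaussian identities
\[
\EE[e^{-vx}x]=-v\sigma^2 e^{v^2\sigma^2/2},\qquad \EE[e^{-u|\tilde x|}]=2e^{u^2\sigma^2/2}\Phi(-u\sigma).
\]
With these, $\EE[g x]=-v\sigma^2e^{\sigma^2/2}\cdot 2\Phi(-u\sigma)$ (using $u^2+v^2=1$), and $\EE[wx]/\EE[w]=-av\sigma^2$. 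Multiplying, summing over the $n$ choices of $i_1$, and absorbing the ratio $(n-1)/n$ gives the leading term $2na\sigma^4e^{\sigma^2/2}v^2\Phi(-u\sigma)$. Taking expectation over $(\btheta_*,\btheta_0)$ produces exactly the constant $B$.

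To control the error, we will bound
\[
\frac{\sum w_{i_2}x_{i_2}}{S}-\frac{\EE[wx]}{\EE[w]}
\]
using the identity $\frac{A}{S}-\frac{\bar A}{\bar S}=\frac{A-\bar A}{\bar S}-\frac{A(S-\bar S)}{S\bar S}$ with $\bar A=n\EE[wx]$ and $\bar S=n\EE[w]$. Pairing this with $g_{i_1}x_{i_1}$ and using Cauchy--Schwarz, the correlation terms arising from the shared index $i_1$ are $O(1)$, and the fluctuation terms are $O(n\cdot n^{-1/2}\cdot n^{-1/2})=O(1)$.

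\textbf{Diagonal term and the main obstacle.} The diagonal part $\sum_i g_iw_ix_i^2/S$ is easy: $S\ge w_i$ trivially, and $S\gtrsim n\EE[w]$ with high probability, so this part is $O(1)$. The main obstacle is rigorously controlling negative moments of $S$, namely $\EE[S^{-2}]\lesssim n^{-2}$ and $\EE[S^{-4}]\lesssim n^{-4}$, uniformly in $u,v$. To get these, we will lower bound $S$ by $n\,e^{-a\sigma(c)}$ times the count of indices with $|x_i|,|\tilde x_i|\le c$, and apply a Chernoff bound to that count. On the complementary event, which has exponentially small probability, we will use the crude bound $\frac{\sum w x}{S}\le\max_i|x_i|$. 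The Gaussian moment generating functions that appear in all these bounds are of the form $e^{O(a^2\sigma^2)}$, so they are analytic in $(a,\sigma)$ and independent of $n$ and $d$.

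\textbf{Bound on $B$.} Finally, $|B-\tfrac12|\le c_1/d$ follows from the facts that $u$ concentrates at scale $d^{-1/2}$ with $\EE u=0$ and $\EE u^2=1/d$. Taylor-expanding $\Phi(-u\sigma)=\tfrac12-\phi(0)\sigma u+O(\sigma^3|u|^3)$ and using $v^2=1-u^2$, the linear term vanishes in expectation and the remaining terms give $O(1/d)$.
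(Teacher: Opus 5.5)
Your proposal is correct in outline, but it takes a genuinely different route from the paper's. The paper never expands the self-normalized ratio around its mean. It writes $1/S=\int_0^\infty e^{-sS}\,\mathrm{d}s$, so that by independence and Fubini (after $s\mapsto s/n$) the expectation becomes \emph{exactly} a one-dimensional integral of products of single-sample expectations, namely its $A_1,A_2,M$ evaluated at $\lambda=s/n$. It then linearizes $A_1,A_2$ in $\lambda$, via $1-z\le e^{-z}\le 1$ and a Lipschitz bound. This turns every correction into a negative moment $\EE[(\sum_{i\ge 3}w_i)^{-k}]$, which Lemma~\ref{lemma:concentration_expectation_II} controls to relative accuracy $O(1/n)$: Jensen gives the lower bound, and Chernoff plus a second-order Taylor expansion gives the upper bound. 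This device handles the numerator--denominator dependence exactly. It also gives a mechanical template that the paper reuses for all the $1/S^k$ and triple-sum lemmas.

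Your leave-one-out/delta-method argument is more elementary. It makes the sources of the $O(1)$ error transparent: the diagonal, the shared-index covariance, and the $O(1/n)$ bias of the ratio $A/S$. However, it needs second-order bookkeeping that your sketch only implies. Because $\sum_{i_1}g_{i_1}x_{i_1}\approx n\EE[gx]\neq 0$, applying Cauchy--Schwarz directly to $A(S-\bar S)/(S\bar S)$ gives only $O(\sqrt n)$. You must first split off its linear part $\frac{\bar A}{\bar S}\cdot\frac{S-\bar S}{\bar S}$. Its mean vanishes apart from the $O(1/n)$-per-index covariance through $w_{i_1}$. Only then apply Cauchy--Schwarz (or H\"older) to the quadratic remainder $\bigl(\frac{A}{S}-\frac{\bar A}{\bar S}\bigr)\frac{S-\bar S}{\bar S}$.

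That step in turn requires $O(1)$ bounds on a few higher moments of $A/S$, beyond the $\EE[S^{-2}]$ and $\EE[S^{-4}]$ you list. Your good/bad-event split supplies them, provided the crude bound $|A/S|\le\max_i|x_i|$ is used only on the exponentially unlikely event; on its own it would cost a $\log n$ factor. There is also a small slip: on the good event the bound should read $S\ge e^{-\sqrt2\,ac}N_c$, where $N_c$ counts the indices with $|x_i|,|\tilde x_i|\le c$. There is no extra factor $n$.

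Finally, your treatment of $B$ supplies a step the paper's proof leaves implicit. In fact $\Phi(-u\sigma)-\tfrac12$ is odd in $u$, $v^2=1-u^2$ is even, and $u$ is symmetric. Hence one has exactly $B=\tfrac12\bigl(1-\tfrac1d\bigr)$.
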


\begin{proof}[Proof of Lemma~\ref{lemma:softmax_expectation_I}]
We denote $K_1 = \la\btheta_0, \btheta_*\ra$, and $K_2 = \|(\Ib_d-\btheta_*\btheta_*^\top)\btheta_0\|_2$, then we have $K_1^2 + K_2^2 = 1$, and the p.d.f. of $K_1$ is given in Lemma~\ref{lemma:unit_rv_I}. 
In the next, by utilizing the Laplace transform identity $\frac{1}{S} = \int_0^\infty e^{-sS} ds$, we have 
\begin{align*}
    \frac{1}{\sum_{i=1}^ne^{-aK_1|\tilde x_i|-a K_2x_{i}}}
= \int_0^\infty \exp\!\bigg(-s \sum_{i=1}^n e^{-aK_1|\tilde x_i|-a K_2x_{i}}\bigg) \mathrm{d}s.
\end{align*}
Substituting this into the expectation and utilizing Fubini's Theorem to exchange the order of integral calculations, we obtain
\begin{align}\label{eq:softmax_expectation_I}
&\EE\bigg[\frac{\sum_{i_1=1}^n\sum_{i_2=1}^ne^{-K_1(|\tilde x_{i_1}|+a|\tilde x_{i_2}|)- K_2(x_{i_1} + ax_{i_2})}x_{i_1}x_{i_2}}{\sum_{i=1}^ne^{-aK_1|\tilde x_i|-a K_2x_{i}}}\bigg] \notag\\
=& \int_0^\infty \EE\bigg[
\sum_{i_1,i_2=1}^ne^{-K_1(|\tilde x_{i_1}|+a|\tilde x_{i_2}|)- K_2(x_{i_1} + ax_{i_2})}x_{i_1}x_{i_2}
\exp\!\Big(\!-\!s' \sum_{i=1}^n e^{-aK_1|\tilde x_i|-a K_2x_{i}}\Big)
\bigg] \mathrm{d}s'\notag\\
=&n\EE\bigg[\int_0^\infty \EE\big[e^{-K_1(1+a) |\tilde x_1| - K_2 (1+a)x_1} x_1^2 e^{-s' e^{-a (K_1|\tilde x_1| +  K_2 x_1)}}|K_1\big]\Big(\EE\big[e^{-s' e^{-a (K_1|\tilde x_1| +  K_2 x_1)}}|K_1\big]\Big)^{n-1}\mathrm{d}s'\bigg] \notag\\
&+ n(n-1)\EE\bigg[\int_0^\infty \EE\big[e^{-K_1(|\tilde x_1|+ a|\tilde x_2|) - K_2(x_1+ax_2)} x_1x_2e^{-s'e^{-a (K_1|\tilde x_1| +  K_2 x_1)} -s' e^{-a (K_1|\tilde x_2| +  K_2 x_2)}}|K_1\big]\notag\\
&\qquad\qquad\qquad \cdot\Big(\EE\big[e^{-s' e^{-a (K_1|\tilde x_1| +  K_2 x_1)}}|K_1\big]\Big)^{n-2}\mathrm{d}s'\bigg]\notag\\
=&\EE\bigg[\int_0^\infty \EE\big[e^{-K_1(1+a) |\tilde x_1| - K_2 (1+a)x_1} x_1^2 e^{-\frac{s}{n} e^{-a (K_1|\tilde x_1| +  K_2 x_1)}}|K_1\big]\Big(\EE\big[e^{-\frac{s}{n} e^{-a (K_1|\tilde x_1| +  K_2 x_1)}}|K_1\big]\Big)^{n-1}\mathrm{d}s\bigg] \notag\\
&+ (n-1)\EE\bigg[\int_0^\infty \EE\big[e^{-K_1(|\tilde x_1|+ a|\tilde x_2|) - K_2(x_1+ax_2)} x_1x_2e^{-\frac{s}{n}e^{-a (K_1|\tilde x_1| +  K_2 x_1)} -\frac{s}{n}e^{-a (K_1|\tilde x_2| +  K_2 x_2)}}|K_1\big]\notag\\
&\qquad\qquad\qquad \cdot\Big(\EE\big[e^{-\frac{s}{n} e^{-a (K_1|\tilde x_1| +  K_2 x_1)}}|K_1\big]\Big)^{n-2}\mathrm{d}s\bigg]\notag\\
=&\EE\bigg[\underbrace{\int_0^\infty A_1(s/n, 1+a, a, K_1)[M(s/n, a, K_1)]^{n-1}\mathrm{d}s}_{(I)}\bigg]  \notag\\
&+ (n-1)\EE\bigg[\underbrace{\int_0^\infty A_2(s/n, 1, a, K_1)A_2(s/n, a, a, K_1)[M(s/n, a, K_1)]^{n-2}\mathrm{d}s}_{(II)}\bigg],
\end{align}
where the second equation holds by the symmetries among $x_1, \ldots, x_n$ and $\tilde x_1, \ldots, \tilde x_n$, and the third equation holds by replacing $s'$ with $s/n$ in the integral. 
In addition, the terms $A_1(\lambda, \alpha, a, K_1)$, $A_2(\lambda, \alpha, a, K_1)$ and $M(\lambda, a, K_1)$ are defined as:
\begin{align*}
    &A_1(\lambda, \alpha, a, K_1) = \EE\big[e^{-\alpha K_1 |\tilde x_1| - \alpha K_2 x_1} x_1^2 e^{-\lambda e^{-a (K_1|\tilde x_1| +  K_2 x_1)}}|K_1\big];\\
    &A_2(\lambda,\alpha, a,  K_1) = \EE\big[e^{-\alpha K_1|\tilde x_1| - \alpha K_2 x_1} x_1 e^{-\lambda e^{-a (K_1|\tilde x_1| +  K_2 x_1)}}|K_1\big];\\
    &M(\lambda, a, K_1) = \EE\big[e^{-\lambda e^{-a (K_1|\tilde x_1| +  K_2 x_1)}}|K_1\big].
\end{align*}
For the term $A_1(\lambda, \alpha, a, K_1)$, by the fact that $1 - z\leq e^{-z} \leq 1$ for all $z>0$, we have that 
\begin{align}\label{eq:softmax_expectation_I_A1}
&A_1(\lambda, \alpha, a, K_1)\geq  \EE\big[e^{-\alpha K_1 |\tilde x_1| -\alpha K_2 x_1} x_1^2 |K_1\big] - \lambda \EE\big[e^{-(a +\alpha) K_1|\tilde x_1| - (a +\alpha) K_2 x_1} x_1^2 |K_1\big]; \notag\\
    &A_1(\lambda, \alpha, a, K_1)\leq  \EE\big[e^{-\alpha K_1 |\tilde x_1| - 
    \alpha K_2 x_1} x_1^2 |K_1\big].
\end{align}
We can establish the upper and lower bounds for the term $(I)$ based on the inequalities~\eqref{eq:softmax_expectation_I_A1} above. We first derive the lower bound as
\begin{align}\label{eq:softmax_expectation_I_lowerI}
    (I)\geq& \int_0^\infty \EE\big[e^{-K_1(1+a) |\tilde x_1| - K_2 (1+a)x_1} x_1^2 |K_1\big][M(s/n, a, K_1)]^{n-1}\mathrm{d}s \notag\\
    &- \frac{1}{n}\int_0^\infty s \EE\big[e^{-K_1(1+2a) |\tilde x_1| - K_2 (1+2a)x_1} x_1^2 |K_1\big][M(s/n, a, K_1)]^{n-1}\mathrm{d}s\notag\\
     =& 2\sigma^2\big(K_2^2(1+a)^2\sigma^2+1\big)e^{\frac{(a+1)^2\sigma^2}{2}}\Phi(\!-\!K_1(a+1)\sigma)\int_{0}^\infty \Big(\EE\big[e^{-\frac{s}{n} e^{-a (K_1|\tilde x_1| +  K_2 x_1)}}|K_1\big]\Big)^{n-1}\mathrm{d}s\notag\\
     & -\frac{K_2^2(1+2a)^2\sigma^2+1}{n}e^{\frac{(2a+1)^2\sigma^2}{2}}2\Phi(\!-\!K_1(2a+1)\sigma)\int_{0}^\infty s\Big(\EE\big[e^{-\frac{s}{n} e^{-a (K_1|\tilde x_1| +  K_2 x_1)}}|K_1\big]\Big)^{n-1}\mathrm{d}s\notag\\
    =& 2\sigma^2\big(K_2^2(1+a)^2\sigma^2+1\big)e^{\frac{(a+1)^2\sigma^2}{2}}\Phi(\!-\!K_1(a+1)\sigma)\EE\bigg[\int_{0}^\infty e^{-\frac{s}{n} \sum_{i=2}^n e^{-a (K_1|\tilde x_i| +  K_2 x_i)}}\mathrm{d}s\Big|K_1\bigg]\notag\\
     &- \frac{K_2^2(1+2a)^2\sigma^2+1}{n}e^{\frac{(2a+1)^2\sigma^2}{2}}2\Phi(\!-\!K_1(2a+1)\sigma)\EE\bigg[\int_{0}^\infty s e^{-\frac{s}{n} \sum_{i=2}^n e^{-a (K_1|\tilde x_i| +  K_2 x_i)}}\mathrm{d}s\Big|K_1\bigg]\notag\\
     =& 2\sigma^2\big(K_2^2(1+a)^2\sigma^2+1\big)e^{\frac{(a+1)^2\sigma^2}{2}}\Phi(\!-\!K_1(a+1)\sigma)\EE\bigg[\frac{n}{\sum_{i=2}^n e^{-a (K_1|\tilde x_i| +  K_2 x_i)}}\Big| K_1\bigg] \notag\\
     &- \frac{K_2^2(1+2a)^2\sigma^2+1}{n}e^{\frac{(2a+1)^2\sigma^2}{2}}2\Phi(\!-\!K_1(2a+1)\sigma)\EE\bigg[\frac{n^2}{\big(\sum_{i=2}^n e^{-a (K_1|\tilde x_i| +  K_2 x_i)}\big)^2}\Big| K_1\bigg]\notag\\
     \geq \sigma^2&\big(K_2^2(1+a)^2\sigma^2+1\big)e^{\frac{(2a+1)\sigma^2}{2}}\frac{\Phi(\!-\!K_1(a+1)\sigma)}{\Phi(\!-\!K_1a\sigma)}
     - \frac{c_1(a\sigma)}{n}\geq \sigma^2e^{\frac{(2a+1)\sigma^2}{2}}\Phi(\!-(a+1)\sigma)
     - 1% \frac{c_1(a)}{n}.
\end{align}
The first equality is true because for a normal variable $z\sim\mathcal{N}(0, \sigma^2)$ and any scalar $c$, we have $\mathbb{E}[z^2e^{-cz}] = \sigma^2(c^2\sigma^2+1)e^{c^2\sigma^2/2}$ and $\mathbb{E}[e^{-c|z|}]=2e^{c^2\sigma^2/2}\Phi(-c\sigma)$. The second equality is obtained by applying Fubini's theorem to exchange the order of integration, and the third follows from direct calculation. The penultimate inequality is derived by applying Lemma~\ref{lemma:concentration_expectation_II}, where $c_1(a)$ is a constant solely depending on $a$. Following a similar (but simpler) calculation, we can also get the upper bound for $(I)$ as 
\begin{align}\label{eq:softmax_expectation_I_upperI}
    (I) \leq& \int_0^\infty \EE\big[e^{-K_1(1+a) |\tilde x_1| - K_2 (1+a)x_1} x_1^2 |K_1\big][M(s/n, a, K_1)]^{n-1}\mathrm{d}s \notag\\
    =& 2\sigma^2\big(K_2^2(1+a)^2\sigma^2+1\big)e^{\frac{(a+1)^2\sigma^2}{2}}\Phi(\!-\!K_1(a+1)\sigma)\EE\bigg[\frac{n}{\sum_{i=2}^n e^{-a (K_1|\tilde x_i| +  K_2 x_i)}}\Big| K_1\bigg] \notag \\
    \leq& \sigma^2\big(K_2^2(1+a)^2\sigma^2+1\big)e^{\frac{(2a+1)\sigma^2}{2}}\frac{\Phi(\!-\!K_1(a+1)\sigma)}{\Phi(\!-\!K_1a\sigma)}
     - \frac{c_1(a\sigma)}{n} \leq \frac{\sigma^2((1+a)^2\sigma^2+1)}{\Phi(-a\sigma)}e^{\frac{(2a+1)\sigma^2}{2}} + 1.
\end{align}
%where $c_2(a)$ is also a constant solely depending on $a$. 
To calculate the upper and lower bounds for the term $(II)$ is a little tricky, and we first calculate the derivatives of $A_2(\lambda, \alpha, a, K_1)$ w.r.t. $\lambda$ as
\begin{align*}
    %|A_2'(\lambda, a, K_1)| = 
    \bigg|\frac{\mathrm{d}A_2(\lambda, \alpha, a, K_1)}{\mathrm{d}\lambda}\bigg| &= \Big|-\EE\big[e^{-(\alpha+a)(K_1|\tilde x_1| + K_2 x_1)} x_1 e^{-\lambda e^{-a (K_1|\tilde x_1| +  K_2 x_1)}}|K_1\big]\Big|\\
    &\leq \mathbb{E}\big[|x_1| e^{-(\alpha+a)(K_1|\tilde x_1| + K_2 x_1)}\big]\leq 2\sigma e^{(a+\alpha)^2\sigma^2},
\end{align*}
% and 
% \begin{align*}
%  \bigg|\frac{\mathrm{d}A_3(\lambda, a, K_1)}{\mathrm{d}\lambda}\bigg| &= \Big|-\EE\big[e^{-2a(K_1|\tilde x_1| + K_2 x_1)} x_1 e^{-\lambda e^{-a (K_1|\tilde x_1| +  K_2 x_1)}}|K_1\big]\Big|\\
%     &\leq \mathbb{E}\big[|x_1| e^{-2a(K_1|\tilde x_1| + K_2 x_1)}\big]\leq 2e^{4a^2};
% \end{align*}
which implies that both $A_2(\lambda, \alpha, a, K_1)$ is Lipschitz  continuous w.r.t. $\lambda$ . Therefore, we can further derive that 
\begin{align*}
&|A_2(\lambda, 1, a, K_1) A_2(\lambda, a, a, K_1) - A_2(0, 1, a, K_1) A_2(0, a, a, K_1)| \\
=& |(A_2(\lambda, 1, a, K_1) - A_2(0, 1, a, K_1))A_2(\lambda, a, a, K_1) + A_2(0, 1, a, K_1)(A_2(\lambda, a, a, K_1) - A_2(0, a, a, K_1))| \\
\le& |A_2(\lambda, 1, a, K_1) - A_2(0, 1, a, K_1)||A_2(\lambda, a, a, K_1)| + |A_2(0, 1, a, K_1)||A_2(\lambda, a, a, K_1) - A_2(0, a, a, K_1)| \\
\le& |A_2(\lambda, 1, a, K_1) - A_2(0, 1, a, K_1)|(|A_2(\lambda, a, a, K_1)-A_2(0, a, a, K_1) | + |A_2(0, a, a, K_1)|) \\
& + |A_2(0, 1, a, K_1)||A_2(\lambda, a, a, K_1) - A_2(0, a, a, K_1)| \leq 8\lambda \sigma^3ae^{5(a\vee 1)^2\sigma^2} + 4\lambda^2 \sigma^2e^{8(a\vee 1)^2\sigma^2}
%\leq& 4\lambda ae^{(a+1)^2+a^2/2} + 4\lambda e^{4a^2 +1/2} + 4\lambda^2 e^{4a^2 + (a+1)^2},
\end{align*}
where the last inequality holds by using the Lipschitz continuous properties of $A_2(\lambda, \alpha, a, K_1)$, and the facts that $A_2(0, 1, a, K_1) = -2\sigma^2 K_2e^{\sigma^2/2}\Phi(\!-\!K_1\sigma), A_2(0,a, a, K_1)=-2a\sigma^2K_2e^{a^2\sigma^2/2}\Phi(\!-\!aK_1\sigma)$ and $|K_1|, |K_2|\leq 1$. With the inequality established above and the triangle inequality, we have
\begin{align}\label{eq:softmax_expectation_I_A2A3}
    &A_2(\lambda, 1, a, K_1) A_2(\lambda, a, a, K_1) \leq 4a\sigma^4e^{\frac{(a^2+1)\sigma^2}{2}}K_2^2\Phi(\!-\!K_1\sigma)\Phi(\!-a\!K_1\sigma)  + 8\lambda \sigma^3ae^{5(a\vee 1)^2\sigma^2} + 4\lambda^2 \sigma^4e^{8(a\vee 1)^2\sigma^2};\notag\\
    &A_2(\lambda, 1, a, K_1) A_2(\lambda, a, a, K_1) \geq 4a\sigma^4e^{\frac{(a^2+1)\sigma^2}{2}}K_2^2\Phi(\!-\!K_1\sigma)\Phi(\!-a\!K_1\sigma)  - 8\lambda \sigma^3ae^{5(a\vee 1)^2\sigma^2} - 4\lambda^2 \sigma^2e^{8(a\vee 1)^2\sigma^2}.
\end{align}
Now, we are ready to derive the lower and upper bounds for the term $(II)$ based on \eqref{eq:softmax_expectation_I_A2A3}. We first derive the lower bound as
\begin{align}\label{eq:softmax_expectation_I_lowerII}
    (II)= &\int_0^\infty A_2(s/n, 1, a, K_1) A_2(s/n, a, a, K_1)[M(s/n, a, K_1)]^{n-2}\mathrm{d}s \notag\\
    \geq& 4a\sigma^4e^{\frac{(a^2+1)\sigma^2}{2}}K_2^2\Phi(\!-\!K_1\sigma)\Phi(\!-a\!K_1\sigma) \int_0^\infty [M(s/n, a, K_1)]^{n-2}\mathrm{d}s \notag\\
    &- \frac{8 \sigma^3ae^{5(a\vee 1)^2\sigma^2}}{n} \int_0^\infty s[M(s/n, a, K_1)]^{n-2}\mathrm{d}s \notag - \frac{4\sigma^2e^{8(a\vee 1)^2\sigma^2}}{n^2} \int_0^\infty s^2[M(s/n, a, K_1)]^{n-2}\mathrm{d}s\notag\\
    = &  4a\sigma^4e^{\frac{(a^2+1)\sigma^2}{2}}K_2^2\Phi(\!-\!K_1\sigma)\Phi(\!-a\!K_1\sigma) \EE\bigg[\frac{n}{\sum_{i=3}^n e^{-a (K_1|\tilde x_i| +  K_2 x_i)}}\Big| K_1\bigg]  \notag\\
    &- \frac{8 \sigma^3ae^{5(a\vee 1)^2\sigma^2}}{n} \EE\bigg[\frac{n^2}{\big(\sum_{i=3}^n e^{-a (K_1|\tilde x_i| +  K_2 x_i)}\big)^2}\bigg] - \frac{8\sigma^2e^{8(a\vee 1)^2\sigma^2}}{n^2} \EE\bigg[\frac{n^3}{\big(\sum_{i=3}^n e^{-a (K_1|\tilde x_i| +  K_2 x_i)}\big)^3}\bigg]\notag\\
    \geq & 2a\sigma^4e^{\frac{\sigma^2}{2}}K_2^2\Phi(\!-\!K_1\sigma) - \frac{c_3(a, \sigma)}{n},
\end{align}
where $c_3(a,\sigma) = \frac{2a\sigma^3e^{4(a\vee 1)^24\sigma^2}}{\Phi^2(-a\sigma)}$, a continuous function of $a$ and $\sigma$. Here the last equation is derived by Fubini's theorem to exchange the order of the integral, and the last inequality is established by Lemma~\ref{lemma:concentration_expectation_II}. Similarly, we can also obtain that 
\begin{align}\label{eq:softmax_expectation_I_upperII}
    (II)\leq & 4a\sigma^4e^{\frac{(a^2+1)\sigma^2}{2}}K_2^2\Phi(\!-\!K_1\sigma)\Phi(\!-a\!K_1\sigma) \int_0^\infty [M(s/n, a, K_1)]^{n-2}\mathrm{d}s\notag\\
    &+ \frac{8a\sigma^3e^{5(a\vee 1)^2\sigma^2}}{n} \int_0^\infty s[M(s/n, a, K_1)]^{n-2}\mathrm{d}s + \frac{4\sigma^2e^{8(a\vee 1)^2\sigma^2}}{n^2} \int_0^\infty s^2[M(s/n, a, K_1)]^{n-2}\mathrm{d}s\notag\\
    \leq & 2a\sigma^4e^{\frac{\sigma^2}{2}}K_2^2\Phi(\!-\!K_1\sigma) + \frac{c_3(a, \sigma)}{n}.
\end{align}
Substituting the results of~\eqref{eq:softmax_expectation_I_lowerI}, ~\eqref{eq:softmax_expectation_I_upperI}, ~\eqref{eq:softmax_expectation_I_lowerII}, and ~\eqref{eq:softmax_expectation_I_upperII} into~\eqref{eq:softmax_expectation_I}, we complete the proof. 
\end{proof}

\begin{lemma}\label{lemma:softmax_expectation_II}
Let $x_1, x_2, \ldots, x_n\sim \cN(0,\sigma^2)$ be $n$ i.i.d Gaussian random variables, and $\tilde x_1, \tilde x_2, \ldots, \tilde x_n\sim \cN(0,\sigma^2)$ be another $n$ i.i.d standard Gaussian random variables. In addition, $a$ is a positive scalar, and $\btheta_*, \btheta_0\in\RR^d$ are two independent random vectors following uniform $d-$dimensional sphere distribution, then we have 
\begin{align*}
 \Bigg|\EE\bigg[\frac{\sum_{i_1=1}^n\sum_{i_2=1}^ne^{-a\la\btheta_0, \btheta_*\ra(|\tilde x_{i_1}|+|\tilde x_{i_2}|)- a\|(\Ib_d-\btheta_*\btheta_*^\top)\btheta_0\|_2(x_{i_1} + x_{i_2})}x_{i_1}x_{i_2}}{\big(\sum_{i=1}^ne^{-a\la\btheta_0, \btheta_*\ra|\tilde x_i|-a \|(\Ib_d-\btheta_*\btheta_*^\top)\btheta_0\|_2x_{i}}\big)^2}\bigg] -a^2\sigma^4\bigg(1-\frac{1}{d}\bigg)\Bigg|\leq  \frac{f_2(a, \sigma)}{n},
\end{align*}
where $f_2(a, \sigma)$ is an analytic function of $a$ and $\sigma$, and irrelevant with $n, d$. 
\end{lemma}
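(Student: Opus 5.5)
Writing $K_1=\la\btheta_0,\btheta_*\ra$, $K_2=\|(\Ib_d-\btheta_*\btheta_*^\top)\btheta_0\|_2$ and $u_i = e^{-a(K_1|\tilde x_i| + K_2 x_i)}$, the quantity inside the expectation is $\big(\sum_i u_i x_i\big)^2/\big(\sum_i u_i\big)^2$. I will condition on $K_1$ (hence $K_2$). The plan mirrors the proof of Lemma~\ref{lemma:softmax_expectation_I}, but here the ratio can be handled directly by a law-of-large-numbers argument, without the single Laplace transform.

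First, the conditional means. Using $\EE[x e^{-cx}]=-c\sigma^2e^{c^2\sigma^2/2}$ and $\EE[e^{-c|\tilde x|}]=2e^{c^2\sigma^2/2}\Phi(-c\sigma)$, we get
\[
m:=\EE[u_1|K_1]=2e^{a^2\sigma^2/2}\Phi(-aK_1\sigma),\qquad \mu:=\EE[u_1x_1|K_1]=-aK_2\sigma^2\, m .
\]
Set $\bar U=\frac1n\sum_i u_i$ and $\bar S=\frac1n\sum_i u_ix_i$; the integrand is $(\bar S/\bar U)^2$, whose natural limit is $(\mu/m)^2=a^2\sigma^4K_2^2$. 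To quantify the error I will write
\[
\frac{\bar S^2}{\bar U^2}-\frac{\mu^2}{m^2}=\frac{(\bar S-\mu\bar U/m)(\bar S+\mu\bar U/m)}{\bar U^2}.
\]
The variable $\bar S-(\mu/m)\bar U$ is an average of i.i.d. mean-zero terms $u_i(x_i-\mu/m)$, all of whose moments are bounded by constants depending only on $a,\sigma$ (because $|K_1|,|K_2|\le 1$). Its second moment is therefore $O(1/n)$. For the conditional expectation I will expand to second order, or alternatively use the Laplace-transform identity $\bar U^{-2}=\int_0^\infty s e^{-s\bar U}ds$ together with independence, exactly as in \eqref{eq:softmax_expectation_I}. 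Since the first-order term has zero mean, both routes give $\EE[(\bar S/\bar U)^2|K_1]=a^2\sigma^4K_2^2+O(1/n)$.

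Second, the averaging over $K_1$. Because $\EE[K_2^2]=1-\EE[K_1^2]=1-1/d$ by Lemma~\ref{lemma:unit_rv_I}, the leading term is exactly $a^2\sigma^4(1-1/d)$, matching the statement. Everything else then rests on making the $O(1/n)$ term uniform in $K_1\in[-1,1]$.

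The main obstacle is controlling the negative moments $\EE[\bar U^{-k}|K_1]$ for $k$ up to about $4$, and I will handle it by invoking Lemma~\ref{lemma:concentration_expectation_II} as in the preceding proof. That lemma gives $\EE[(n/\sum_{i}u_i)^k]\le C(a,\sigma)/\Phi(-a\sigma)^k$, uniformly because $m\ge 2\Phi(-a\sigma)$. With this in hand, Cauchy--Schwarz bounds the remainder by
\[
\big(\EE[(\bar S-\mu\bar U/m)^4]\big)^{1/2}\big(\EE[\bar U^{-8}(\ldots)^4]\big)^{1/2},
\]
up to constants, where $(\ldots)$ denotes the second factor $\bar S+\mu\bar U/m$ in the displayed difference. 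The first factor is $O(1/n)$, and the second is bounded by a continuous function of $a,\sigma$.

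The terms where the second-order expansion would otherwise produce a stray $1/\sqrt n$ vanish because of the mean-zero structure. Collecting the constants gives an analytic $f_2(a,\sigma)/n$, independent of $d$.
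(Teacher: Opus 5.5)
Your route is a legitimate delta-method alternative to the paper's, and your leading-term computation is right. You condition on $K_1$, write the integrand as $(\bar S/\bar U)^2$, get $\mu/m=-aK_2\sigma^2$ so the limit is $a^2\sigma^4K_2^2$, and then use $\EE[K_2^2]=1-1/d$.

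\textbf{The gap.} It lies in the one step that carries the content of the lemma: showing the error is $\cO(1/n)$ rather than $\cO(n^{-1/2})$. With $D=\bar S-\mu\bar U/m$, the quantity you must control is $\EE[D(\bar S+\mu\bar U/m)\bar U^{-2}\,|\,K_1]$, which is linear in $D$. Your displayed bound $(\EE[D^4])^{1/2}(\EE[\bar U^{-8}(\bar S+\mu\bar U/m)^4])^{1/2}$ is quadratic in $D$. It is the Cauchy--Schwarz bound on the \emph{second moment} of the difference. As an inequality of the form $|\EE[DX]|\le(\EE[D^4])^{1/2}(\EE[X^4])^{1/2}$ it fails already for $D\equiv\epsilon$, $X\equiv 1$. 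A correct Cauchy--Schwarz on the product only gives $(\EE[D^2])^{1/2}\cdot\cO(1)=\cO(n^{-1/2})$. Your closing appeal to the ``mean-zero structure'' is exactly the missing argument, and it is not automatic. $\EE[D]=0$ alone does not give $\EE[D/\bar U]=\cO(1/n)$, because $D$ and $\bar U$ are built from the same $u_i$ and are correlated.

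\textbf{The repair.} Put $r=\mu/m$ and write $\bar S+r\bar U=D+2r\bar U$, so the difference equals $D^2\bar U^{-2}+2rD\bar U^{-1}$.
\begin{itemize}
\item The first term is at most $(\EE[D^4])^{1/2}(\EE[\bar U^{-4}])^{1/2}=\cO(1/n)$. This is where your fourth-moment bound belongs.
\item For the second term, expand $\bar U^{-1}=m^{-1}-(\bar U-m)m^{-2}+(\bar U-m)^2m^{-2}\bar U^{-1}$.
\begin{itemize}
\item The first piece has mean zero.
\item The second contributes $-2r\,\EE[u_1^2(x_1-r)\,|\,K_1]/(nm^2)=\cO(1/n)$.
\item The third is $\cO(n^{-3/2})$ by H\"older with exponents $(2,4,4)$. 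This uses $\EE[(\bar U-m)^8]=\cO(n^{-4})$ and Lemma~\ref{lemma:concentration_expectation_II} for $\EE[\bar U^{-4}]$.
\end{itemize}
\end{itemize}
All constants are uniform in $K_1\in[-1,1]$, since $m\ge 2e^{a^2\sigma^2/2}\Phi(-a\sigma)$ and $|r|\le a\sigma^2$. With this, your proof closes.

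\textbf{Comparison with the paper.} The paper never faces this cancellation issue. Via $S^{-2}=\int_0^\infty s e^{-sS}\,\mathrm{d}s$ and independence, the off-diagonal part factorizes into $\int_0^\infty s\,A_2(s/n,a,a,K_1)^2[M(s/n,a,K_1)]^{n-2}\,\mathrm{d}s$. There the numerator--denominator coupling enters only through $A_2(s/n)-A_2(0)=\cO(s/n)$, and the diagonal part is $1/n$ times a bounded term. So the $1/n$ rate comes for free, at the cost of heavier bookkeeping. Your route makes it transparent why the limit is exactly $\mu^2/m^2$, but it needs the explicit second-order cancellation above.
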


\begin{proof}[Proof of Lemma~\ref{lemma:softmax_expectation_II}]
The proof of this lemma is quite similar to that of Lemma~\ref{lemma:softmax_expectation_I}. We repeatedly use the previous notations that $K_1 = \la\btheta_0, \btheta_*\ra$, and $K_2 = \|(\Ib_d-\btheta_*\btheta_*^\top)\btheta_0\|_2$. For this lemma, we leverage another identity $\frac{1}{S^2} = \int_0^\infty se^{-sS} ds$ to obtain that
\begin{align*}
    \frac{1}{\big(\sum_{i=1}^ne^{-aK_1|\tilde x_i|-a K_2x_{i}}\big)^2}
= \int_0^\infty s\exp\!\bigg(\!-\!s\!\sum_{i=1}^n e^{-aK_1|\tilde x_i|-a K_2x_{i}}\bigg) \mathrm{d}s.
\end{align*}
Following a similar procedure in the proof of Lemma~\ref{lemma:softmax_expectation_I}, we substitute the identity above into the expectation and utilize Fubini's Theorem to exchange the order of integral calculations to obtain,
\begin{align}\label{eq:softmax_expectationII_I}
&\EE\bigg[\frac{\sum_{i_1=1}^n\sum_{i_2=1}^ne^{-aK_1(|\tilde x_{i_1}|+|\tilde x_{i_2}|)-a K_2(x_{i_1} + x_{i_2})}x_{i_1}x_{i_2}}{\big(\sum_{i=1}^ne^{-aK_1|\tilde x_i|-a K_2x_{i}}\big)^2}\bigg] \notag\\
=& \int_0^\infty s'\EE\bigg[
\sum_{i_1,i_2=1}^ne^{-aK_1(|\tilde x_{i_1}|+|\tilde x_{i_2}|)-a K_2(x_{i_1} + x_{i_2})}x_{i_1}x_{i_2}
\exp\!\Big(\!-\!s' \sum_{i=1}^n e^{-aK_1|\tilde x_i|-a K_2x_{i}}\Big)
\bigg] \mathrm{d}s'\notag\\
=&n\EE\bigg[\int_0^\infty s'\EE\big[e^{-2aK_1 |\tilde x_1| - 2aK_2 x_1} x_1^2 e^{-s' e^{-a (K_1|\tilde x_1| +  K_2 x_1)}}|K_1\big]\Big(\EE\big[e^{-s' e^{-a (K_1|\tilde x_1| +  K_2 x_1)}}|K_1\big]\Big)^{n-1}\mathrm{d}s'\bigg] \notag\\
&+ n(n-1)\EE\bigg[\int_0^\infty s'\EE\big[e^{-aK_1(|\tilde x_1|+ |\tilde x_2|) -a K_2(x_1+x_2)} x_1x_2e^{-s'e^{-a (K_1|\tilde x_1| +  K_2 x_1)} -s' e^{-a (K_1|\tilde x_2| +  K_2 x_2)}}|K_1\big]\notag\\
&\qquad\qquad\qquad \cdot\Big(\EE\big[e^{-s' e^{-a (K_1|\tilde x_1| +  K_2 x_1)}}|K_1\big]\Big)^{n-2}\mathrm{d}s'\bigg]\notag\\
=&\frac{1}{n}\EE\bigg[\int_0^\infty s\EE\big[e^{-2aK_1 |\tilde x_1| -2a K_2 x_1} x_1^2 e^{-\frac{s}{n} e^{-a (K_1|\tilde x_1| +  K_2 x_1)}}|K_1\big]\Big(\EE\big[e^{-\frac{s}{n} e^{-a (K_1|\tilde x_1| +  K_2 x_1)}}|K_1\big]\Big)^{n-1}\mathrm{d}s\bigg] \notag\\
&+ \frac{n-1}{n}\EE\bigg[\int_0^\infty s\EE\big[e^{-aK_1(|\tilde x_1|+ |\tilde x_2|) - aK_2(x_1+x_2)} x_1x_2e^{-\frac{s}{n}e^{-a (K_1|\tilde x_1| +  K_2 x_1)} -\frac{s}{n}e^{-a (K_1|\tilde x_2| +  K_2 x_2)}}|K_1\big]\notag\\
&\qquad\qquad\qquad \cdot\Big(\EE\big[e^{-\frac{s}{n} e^{-a (K_1|\tilde x_1| +  K_2 x_1)}}|K_1\big]\Big)^{n-2}\mathrm{d}s\bigg]\notag\\
=&\frac{1}{n}\EE\bigg[\underbrace{\int_0^\infty s A_1(s/n, 2a, a, K_1)[M(s/n, a, K_1)]^{n-1}\mathrm{d}s}_{(I)}\bigg]  \notag\\
&+  \frac{n-1}{n}\EE\bigg[\underbrace{\int_0^\infty s[A_2(s/n, a, a, K_1)]^2[M(s/n, a, K_1)]^{n-2}\mathrm{d}s}_{(II)}\bigg].
\end{align}
Here, $A_1(\lambda, \alpha, a, K_1)$, $A_2(\lambda, \alpha, a, K_1)$ and $M(\lambda, a, K_1)$ share the same definitions as in the proof in Lemma~\ref{lemma:softmax_expectation_I}. We can use similar procedures in~\eqref{eq:softmax_expectation_I_lowerI} and~\eqref{eq:softmax_expectation_I_upperI} from the proof of Lemma~\ref{lemma:softmax_expectation_I} to calculate the upper and lower bounds for the term $(I)$ as
\begin{align}\label{eq:softmax_expectation_II_lowerI}
    (I)\geq& \int_0^\infty s\EE\big[e^{-2aK_1 |\tilde x_1| - 2aK_2 x_1} x_1^2 |K_1\big][M(s/n, a, K_1)]^{n-1}\mathrm{d}s \notag\\
    &- \frac{1}{n}\int_0^\infty s^2 \EE\big[e^{-3aK_1 |\tilde x_1| - 3aK_2 x_1} x_1^2 |K_1\big][M(s/n, a, K_1)]^{n-1}\mathrm{d}s\notag\\
     % =& 2\big(K_2^2(1+a)^2+1\big)e^{\frac{(a+1)^2}{2}}\Phi(\!-\!K_1(a+1))\int_{0}^\infty \Big(\EE\big[e^{-\frac{s}{n} e^{-a (K_1|\tilde x_1| +  K_2 x_1)}}|K_1\big]\Big)^{n-1}\mathrm{d}s\notag\\
     % & -\frac{K_2^2(1+2a)^2+1}{n}e^{\frac{(2a+1)^2}{2}}2\Phi(\!-\!K_1(2a+1))\int_{0}^\infty s\Big(\EE\big[e^{-\frac{s}{n} e^{-a (K_1|\tilde x_1| +  K_2 x_1)}}|K_1\big]\Big)^{n-1}\mathrm{d}s\notag\\
    =& 2\sigma^2\big(4a^2K_2^2\sigma^2+1\big)e^{2a^2\sigma^2}\Phi(\!-\!2a\sigma K_1)\EE\bigg[\int_{0}^\infty se^{-\frac{s}{n} \sum_{i=2}^n e^{-a (K_1|\tilde x_i| +  K_2 x_i)}}\mathrm{d}s\Big|K_1\bigg]\notag\\
     &- \frac{9a^2\sigma^2K_2^2+1}{n}e^{\frac{9a^2\sigma^2}{2}}2\Phi(\!-\!3a\sigma K_1)\EE\bigg[\int_{0}^\infty s^2 e^{-\frac{s}{n} \sum_{i=2}^n e^{-a (K_1|\tilde x_i| +  K_2 x_i)}}\mathrm{d}s\Big|K_1\bigg]\notag\\
     =& 2\sigma^2\big(4a^2\sigma^2K_2^2+1\big)e^{2a^2\sigma^2}\Phi(\!-\!2a\sigma K_1)\EE\bigg[\frac{n^2}{\big(\sum_{i=2}^n e^{-a (K_1|\tilde x_i| +  K_2 x_i)}\big)^2}\Big| K_1\bigg] \notag\\
     &- \frac{9a^2\sigma^2K_2^2+1}{n}e^{\frac{9a^2\sigma^2}{2}}4\Phi(\!-\!3a\sigma K_1)\EE\bigg[\frac{n^3}{\big(\sum_{i=2}^n e^{-a (K_1|\tilde x_i| +  K_2 x_i)}\big)^3}\Big|K_1\bigg]\notag\\
     \geq &\sigma^2\big(4a^2\sigma^2K_2^2+1\big)e^{\frac{3a^2\sigma^2}{2}}\frac{\Phi(\!-\!2aK_1\sigma)}{2[\Phi(\!-\!K_1a\sigma)]^2}
     - \frac{c_1(a, \sigma)}{n}\geq \frac{\sigma^2e^{3a^2\sigma^2/2}\Phi(\!-\!2a\sigma)}{2}
     - 1 %\frac{c_1(a)}{n}.
     %\frac{40(a\!\vee \!1)^2}{n}e^{4(a\vee 1)^2}.
     %\frac{4(1+2a)^2+4}{n}e^{\frac{3a^2+4a + 1}{2}}.
\end{align}
The first equality is true because for a normal random variable $z\sim\mathcal{N}(0, \sigma^2)$ and any scalar $c$, we have $\mathbb{E}[z^2e^{-cz}] = \sigma^2(c^2\sigma^2+1)e^{c^2\sigma^2/2}$ and $\mathbb{E}[e^{-c|z|}]=2e^{c^2\sigma^2/2}\Phi(-c\sigma)$. The second equality is obtained by applying Fubini's theorem to exchange the order of integration, and the third follows from direct calculation. The penultimate inequality is derived by applying Lemma~\ref{lemma:concentration_expectation_II}, where $c_1(a)$ is a constant solely depending on $a$. Following a similar (but simpler) calculation, we can also get the upper bound for $(I)$ as 
\begin{align}\label{eq:softmax_expectation_II_upperI}
    (I) \leq& \int_0^\infty s\EE\big[e^{-2aK_1 |\tilde x_1| - 2aK_2 x_1} x_1^2 |K_1\big][M(s/n, a, K_1)]^{n-1}\mathrm{d}s\notag\\
    =& 2\sigma^2\big(4a^2\sigma^2K_2^2+1\big)e^{2a^2\sigma^2}\Phi(\!-\!2a\sigma K_1)\EE\bigg[\frac{n^2}{\big(\sum_{i=2}^n e^{-a (K_1|\tilde x_i| +  K_2 x_i)}\big)^2}\Big| K_1\bigg] \notag \\
    \leq& \sigma^2\big(4a^2\sigma^2K_2^2+1\big)e^{\frac{3a^2\sigma^2}{2}}\frac{\Phi(\!-\!2a\sigma K_1)}{2[\Phi(\!-\!K_1a\sigma)]^2}+ \frac{c_1(a, \sigma)}{n} \leq \frac{\sigma^2(4a^2\sigma^2+1)}{2\Phi^2(a\sigma)}e^{\frac{3a^2\sigma^2}{2}} + 1 , 
\end{align}
With the Lipschitz continuity of $A_2(\lambda, \alpha, a, K_1)$ derived in the proof of Lemma~\ref{lemma:softmax_expectation_I}, we can also demonstrate that
\begin{align*}
&|[A_2(\lambda, a, a, K_1)]^2 - [A_2(0, a, a, K_1)]^2| \\
=& |A_2(\lambda, a, a, K_1) - A_2(0, a, a, K_1)||A_2(\lambda, a, a, K_1) +  A_2(0, a, a, K_1)|\\
\le& 2|A_2(\lambda, a, a, K_1) - A_2(0, a, a, K_1)||A_2(0, a, a, K_1)| + |A_2(\lambda, a, a, K_1) - A_2(0,a, a, K_1)|^2\\
\leq& 8\lambda a\sigma^3e^{9a^2\sigma^2/2} + 4\lambda^2\sigma^2 e^{8a^2\sigma^2}.
\end{align*}
Now, by using the triangle inequality to establish upper and lower bounds for $[A_2(\lambda, a, a, K_1)]^2$ from the inequalities above, we are ready to derive the lower and upper bounds for the term $(II)$. We first derive the lower bound as
\begin{align}\label{eq:softmax_expectation_II_lowerII}
    (II)= &\int_0^\infty s[A_2(s/n, a,a, K_1)]^2[M(s/n, a, K_1)]^{n-2}\mathrm{d}s \notag\\
    \geq& 4a^2\sigma^4e^{a^2\sigma^2}K_2^2[\Phi(\!-a \sigma K_1)]^2 \int_0^\infty s[M(s/n, a, K_1)]^{n-2}\mathrm{d}s - \frac{8a\sigma^3e^{9a^2/2}}{n} \int_0^\infty s^2[M(s/n, a, K_1)]^{n-2}\mathrm{d}s \notag\\
    & - \frac{4\sigma^2e^{8a^2\sigma^2}}{n^2} \int_0^\infty s^3[M(s/n, a, K_1)]^{n-2}\mathrm{d}s\notag\\
    = & 4a^2\sigma^4e^{a^2\sigma^2}K_2^2[\Phi(\!-a \sigma K_1)]^2\EE\bigg[\frac{n^2}{\big(\sum_{i=3}^n e^{-a (K_1|\tilde x_i| +  K_2 x_i)}\big)^2}\Big| K_1\bigg]  \notag\\
    &- \frac{16a\sigma^3e^{9a^2\sigma^2/2}}{n} \EE\bigg[\frac{n^3}{\big(\sum_{i=3}^n e^{-a (K_1|\tilde x_i| +  K_2 x_i)}\big)^3}\bigg] - \frac{24 e^{8a^2\sigma^2}}{n^2} \EE\bigg[\frac{n^4}{\big(\sum_{i=3}^n e^{-a (K_1|\tilde x_i| +  K_2 x_i)}\big)^4}\bigg]\notag\\
    \geq & a^2\sigma^4 K_2^2- \frac{c_3(a, \sigma)}{n},
\end{align}
where $c_3(a, \sigma) = \frac{3e^{3a^2\sigma^3}}{\Phi^3(-a\sigma)}$ is a positive continuous function of $a$ and $\sigma$. Here the last equation is derived by Fubini's theorem to exchange the order of the integral, and the last inequality is established by Lemma~\ref{lemma:concentration_expectation_II}. Similarly, we can also obtain that 
\begin{align}\label{eq:softmax_expectation_II_upperII}
    (II)\leq & 4a^2\sigma^4e^{a^2\sigma^2}K_2^2[\Phi(\!-a \sigma K_1)]^2 \int_0^\infty s[M(s/n, a, K_1)]^{n-2}\mathrm{d}s + \frac{8a\sigma^3e^{9a^2, \sigma^2/2}}{n} \int_0^\infty s^2[M(s/n, a, K_1)]^{n-2}\mathrm{d}s \notag\\
    & + \frac{4\sigma^2e^{8a^2}}{n^2} \int_0^\infty s^3[M(s/n, a, K_1)]^{n-2}\mathrm{d}s\notag\\
    \leq & a^2\sigma^4 K_2^2 + \frac{c_3(a, \sigma)}{n}.
\end{align}
Substituting the results of~\eqref{eq:softmax_expectation_II_lowerI}, ~\eqref{eq:softmax_expectation_II_upperI}, ~\eqref{eq:softmax_expectation_II_lowerII}, and ~\eqref{eq:softmax_expectation_II_upperII} into~\eqref{eq:softmax_expectationII_I}, and utilizing the fact that $\EE[K_2^2] = 1-\frac{1}{d}$, we complete the proof. 
\end{proof}

\begin{lemma}\label{lemma:softmax_expectation_III}
Let $x_1, x_2, \ldots, x_n\sim \cN(0,\sigma^2)$ be $n$ i.i.d Gaussian random variables, and $\tilde x_1, \tilde x_2, \ldots, \tilde x_n\sim \cN(0,\sigma^2)$ be another $n$ i.i.d Gaussian random variables. In addition, $a$ is a positive scalar, and $\btheta_*, \btheta_0\in\RR^d$ are two independent random vectors following uniform $d-$dimensional sphere distribution, then we have 
\begin{align*}
  \Bigg|\EE\bigg[\frac{\sum_{i_1=1}^n\sum_{i_2=1}^ne^{-\la\btheta_0, \btheta_*\ra(|\tilde x_{i_1}|+a|\tilde x_{i_2}|)- \|(\Ib_d-\btheta_*\btheta_*^\top)\btheta_0\|_2(x_{i_1} + ax_{i_2})}|\tilde x_{i_1}||\tilde x_{i_2}|}{\sum_{i=1}^ne^{-a\la\btheta_0, \btheta_*\ra|\tilde x_i|-a \|(\Ib_d-\btheta_*\btheta_*^\top)\btheta_0\|_2x_{i}}}\bigg] -n B_{a, \sigma, 1}\Bigg|\leq  f_3(a, \sigma) + \frac{n f_4(a, \sigma)}{d},
\end{align*}
where $B_{a, \sigma, 1} = \frac{2\sigma^2e^{\frac{\sigma^2}{2}}}{\pi}$, and $f_3(a,\sigma), f_4(a, \sigma)$ are both smooth functions of $a$ and $\sigma$, and irrelevant with $n, d$.  
\end{lemma}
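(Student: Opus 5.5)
We follow the template of the proof of Lemma~\ref{lemma:softmax_expectation_I}, replacing $x_{i_1}x_{i_2}$ by $|\tilde x_{i_1}||\tilde x_{i_2}|$. Write $K_1=\la\btheta_0,\btheta_*\ra$ and $K_2=\|(\Ib_d-\btheta_*\btheta_*^\top)\btheta_0\|_2$. Apply the Laplace identity $1/S=\int_0^\infty e^{-sS}\mathrm{d}s$, use Fubini, and rescale $s'=s/n$. Exactly as in \eqref{eq:softmax_expectation_I}, the expectation then splits into a diagonal part and an off-diagonal part:
\begin{align*}
\EE\Big[\int_0^\infty \bar A_1(s/n,1+a,a,K_1)M^{n-1}\mathrm{d}s\Big]+(n-1)\,\EE\Big[\int_0^\infty \bar A_2(s/n,1,a,K_1)\bar A_2(s/n,a,a,K_1)M^{n-2}\mathrm{d}s\Big],
\end{align*}
where $\bar A_1,\bar A_2$ are defined like $A_1,A_2$ but with $x_1^2$ and $x_1$ replaced by $\tilde x_1^2$ and $|\tilde x_1|$.

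The diagonal term is handled as in \eqref{eq:softmax_expectation_I_lowerI} and \eqref{eq:softmax_expectation_I_upperI}. Use $1-z\le e^{-z}\le 1$ and the closed forms $\EE[e^{-c|z|}z^2]$ and $\EE[e^{-cz}]$ for $z\sim\cN(0,\sigma^2)$. Then bound the inverse moments $\EE[(n/\sum_{i\ge2}e^{-a(K_1|\tilde x_i|+K_2x_i)})^k\mid K_1]$ by Lemma~\ref{lemma:concentration_expectation_II}. This shows the whole diagonal part is $O(1)$, with a constant depending only on $(a,\sigma)$, so it goes into $f_3$.

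For the off-diagonal term, first establish Lipschitz continuity of $\bar A_2$ in $\lambda$. The derivative is bounded by $\EE[|\tilde x_1|e^{-(\alpha+a)(K_1|\tilde x_1|+K_2x_1)}]\le C(a,\sigma)$. From this, the product $\bar A_2(\lambda,1,\cdot)\bar A_2(\lambda,a,\cdot)$ equals $\bar A_2(0,1,\cdot)\bar A_2(0,a,\cdot)$ up to an error $O(\lambda+\lambda^2)$. The leading product is explicit, since $x_1$ and $\tilde x_1$ are independent:
\begin{align*}
\bar A_2(0,\alpha,a,K_1)=e^{\alpha^2K_2^2\sigma^2/2}\,\EE[|\tilde x_1|e^{-\alpha K_1|\tilde x_1|}]
=e^{\alpha^2K_2^2\sigma^2/2}\Big(2\sigma\phi(0)-2\alpha K_1\sigma^2e^{\alpha^2K_1^2\sigma^2/2}\Phi(-\alpha K_1\sigma)\Big).
\end{align*}
Integrate the leading part against $M^{n-2}$. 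This produces $\bar A_2(0,1)\bar A_2(0,a)\cdot\EE[n/\sum_{i\ge3}e^{-a(\cdot)}]$. The error terms are $O(1/n)$ after multiplying by the inverse moments from Lemma~\ref{lemma:concentration_expectation_II}, and these give $(n-1)O(1/n)=O(1)$.

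The main task is to identify the leading constant. We need
\begin{align*}
\EE\Big[n/\textstyle\sum_{i\ge3}e^{-a(K_1|\tilde x_i|+K_2x_i)}\,\Big|\,K_1\Big]=\big(M_1(a,K_1)\big)^{-1}+O(1/n),
\end{align*}
with $M_1(a,K_1)=\bar A_2$-type mean $2e^{a^2\sigma^2/2}\Phi(-aK_1\sigma)$. Now use that $K_1$ concentrates at $0$: $\EE K_1^2=1/d$ and $K_2^2=1-K_1^2$. At $K_1=0$, $K_2=1$, the product is $\frac{2\sigma^2}{\pi}e^{(1+a^2)\sigma^2/2}$, and dividing by $M_1=e^{a^2\sigma^2/2}$ gives exactly $B_{a,\sigma,1}=\frac{2\sigma^2e^{\sigma^2/2}}{\pi}$. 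So the off-diagonal term is $n$ times a smooth function $g(K_1)$ with $g(0)=B_{a,\sigma,1}$. The odd-order part in $K_1$ vanishes by the symmetry of $K_1$. A second-order Taylor bound with $\EE K_1^2=1/d$ (Lemma~\ref{lemma:unit_rv_I}) then gives $|\EE g(K_1)-B_{a,\sigma,1}|\le f_4(a,\sigma)/d$, which produces the $nf_4/d$ term.

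The main obstacle is this last step. One must check that $g$ is uniformly smooth on $K_1\in[-1,1]$, which holds because $\Phi(-aK_1\sigma)\ge\Phi(-a\sigma)>0$. One must also check that the $O(1/n)$ inverse-moment error from Lemma~\ref{lemma:concentration_expectation_II} is uniform in $K_1$, so that after multiplying by $n-1$ it stays $O(1)$ rather than leaking an extra factor of $n$.
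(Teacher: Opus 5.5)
Your proposal is correct and follows the paper's proof essentially step for step. The shared steps are:
\begin{itemize}
\item the Laplace-transform/Fubini decomposition into a diagonal $A_3$-term, shown to be $O(1)$, and an off-diagonal $A_4(\cdot,1,a,K_1)A_4(\cdot,a,a,K_1)$-term;
\item the Lipschitz-in-$\lambda$ approximation of that product, combined with the inverse-moment bounds of Lemma~\ref{lemma:concentration_expectation_II};
\item the symmetric second-order Taylor expansion in $K_1$ using $\EE[K_1^2]=1/d$.
\end{itemize}
Your leading factor $\bar A_2(0,1,a,K_1)\bar A_2(0,a,a,K_1)/\big(2e^{a^2\sigma^2/2}\Phi(-aK_1\sigma)\big)$ coincides with the paper's $F(a,K_1)$ and equals $B_{a,\sigma,1}$ at $K_1=0$. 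The uniformity points you flag, $\Phi(-aK_1\sigma)\ge\Phi(-a\sigma)$ and the $K_1$-independence of the constant in Lemma~\ref{lemma:concentration_expectation_II}, are exactly what the paper relies on as well.
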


\begin{proof}[Proof of Lemma~\ref{lemma:softmax_expectation_III}]
We repeatedly use the previous notations that $K_1 = \la\btheta_0, \btheta_*\ra$, $K_2 = \|(\Ib_d-\btheta_*\btheta_*^\top)\btheta_0\|_2$, and leverage $\frac{1}{S} = \int_0^\infty e^{-sS} ds$ to obtain that
\begin{align*}
    \frac{1}{\sum_{i=1}^ne^{-aK_1|\tilde x_i|-a K_2x_{i}}}
= \int_0^\infty \exp\!\bigg(\!-\!s\!\sum_{i=1}^n e^{-aK_1|\tilde x_i|-a K_2x_{i}}\bigg) \mathrm{d}s.
\end{align*}
Following a similar procedure in the proof of Lemma~\ref{lemma:softmax_expectation_I}, we substitute the identity above into the expectation and utilize Fubini's Theorem to exchange the order of integral calculations to obtain,
\begin{align}\label{eq:softmax_expectationIII_I}
&\EE\bigg[\frac{\sum_{i_1=1}^n\sum_{i_2=1}^ne^{-K_1(|\tilde x_{i_1}|+a|\tilde x_{i_2}|)- K_2(x_{i_1} + ax_{i_2})}|\tilde x_{i_1}||\tilde x_{i_2}|}{\sum_{i=1}^ne^{-aK_1|\tilde x_i|-a K_2x_{i}}}\bigg] \notag\\
=& \int_0^\infty \EE\bigg[
\sum_{i_1,i_2=1}^ne^{-K_1(|\tilde x_{i_1}|+a|\tilde x_{i_2}|)- K_2(x_{i_1} + ax_{i_2})}|\tilde x_{i_1}||\tilde x_{i_2}|
\exp\!\Big(\!-\!s' \sum_{i=1}^n e^{-aK_1|\tilde x_i|-a K_2x_{i}}\Big)
\bigg] \mathrm{d}s'\notag\\
=&n\EE\bigg[\int_0^\infty \EE\big[e^{-K_1(1+a) |\tilde x_1| - K_2 (1+a)x_1} |\tilde{x}_1|^2 e^{-s' e^{-a (K_1|\tilde x_1| +  K_2 x_1)}}|K_1\big]\Big(\EE\big[e^{-s' e^{-a (K_1|\tilde x_1| +  K_2 x_1)}}|K_1\big]\Big)^{n-1}\mathrm{d}s'\bigg] \notag\\
&+ n(n-1)\EE\bigg[\int_0^\infty \EE\big[e^{-K_1(|\tilde x_1|+ a|\tilde x_2|) - K_2(x_1+ax_2)} |\tilde x_1||\tilde x_2|e^{-s'e^{-a (K_1|\tilde x_1| +  K_2 x_1)} -s' e^{-a (K_1|\tilde x_2| +  K_2 x_2)}}|K_1\big]\notag\\
&\qquad\qquad\qquad \cdot\Big(\EE\big[e^{-s' e^{-a (K_1|\tilde x_1| +  K_2 x_1)}}|K_1\big]\Big)^{n-2}\mathrm{d}s'\bigg]\notag\\
=&\EE\bigg[\int_0^\infty \EE\big[e^{-K_1(1+a) |\tilde x_1| - K_2 (1+a)x_1} |\tilde{x}_1|^2 e^{-\frac{s}{n} e^{-a (K_1|\tilde x_1| +  K_2 x_1)}}|K_1\big]\Big(\EE\big[e^{-\frac{s}{n} e^{-a (K_1|\tilde x_1| +  K_2 x_1)}}|K_1\big]\Big)^{n-1}\mathrm{d}s\bigg] \notag\\
&+ (n-1)\EE\bigg[\int_0^\infty \EE\big[e^{-K_1(|\tilde x_1|+ a|\tilde x_2|) - K_2(x_1+ax_2)} |\tilde x_1||\tilde x_2|e^{-\frac{s}{n}e^{-a (K_1|\tilde x_1| +  K_2 x_1)} -\frac{s}{n}e^{-a (K_1|\tilde x_2| +  K_2 x_2)}}|K_1\big]\notag\\
&\qquad\qquad\qquad \cdot\Big(\EE\big[e^{-\frac{s}{n} e^{-a (K_1|\tilde x_1| +  K_2 x_1)}}|K_1\big]\Big)^{n-2}\mathrm{d}s\bigg]\notag\\
=&\EE\bigg[\underbrace{\int_0^\infty A_3(s/n, 1+a, a, K_1)[M(s/n, a, K_1)]^{n-1}\mathrm{d}s}_{(I)}\bigg]  \notag\\
&+ (n-1)\EE\bigg[\underbrace{\int_0^\infty A_4(s/n, 1, a, K_1) A_4(s/n, a, a, K_1)[M(s/n, a, K_1)]^{n-2}\mathrm{d}s}_{(II)}\bigg],
\end{align}
Here, $M(\lambda, a, K_1)$ share the same definitions as in the proof in Lemma~\ref{lemma:softmax_expectation_I}, and the terms $A_3(\lambda, \alpha, a, K_1)$, $A_4(\lambda, \alpha, a, K_1)$ are defined as:
\begin{align*}
    &A_3(\lambda, \alpha, a, K_1) = \EE\big[e^{-\alpha K_1 |\tilde x_1| - \alpha K_2 x_1} |\tilde{x}_1|^2 e^{-\lambda e^{-a (K_1|\tilde x_1| +  K_2 x_1)}}|K_1\big];\\
    &A_4(\lambda, \alpha, a, K_1) = \EE\big[e^{-\alpha K_1 |\tilde x_1| - \alpha K_2 x_1} |\tilde x_1| e^{-\lambda e^{-a (K_1|\tilde x_1| +  K_2 x_1)}}|K_1\big].
\end{align*}
We use the fact $1-z\leq e^{-z}\leq 1$ to derive the upper and lower bounds for $A_3(\lambda, \alpha, a, K_1)$ as
\begin{align*}
&A_3(\lambda, \alpha, a, K_1)\geq  \EE\big[e^{-\alpha K_1 |\tilde x_1| - \alpha K_2 x_1}|\tilde x_1|^2 |K_1\big] - \lambda \EE\big[e^{-K_1(\alpha + a) |\tilde x_1| - K_2 (\alpha + a)x_1} |\tilde x_1|^2 |K_1\big]; \notag\\
    &A_3(\lambda, \alpha, a, K_1)\leq  \EE\big[e^{-\alpha K_1 |\tilde x_1| - \alpha K_2 x_1}|\tilde x_1|^2 |K_1\big].
\end{align*}
We can establish the upper and lower bounds for the term $(I)$ based on the inequalities above as,
\begin{align}\label{eq:softmax_expectation_III_lowerI}
    (I)\geq& \int_0^\infty \EE\big[e^{-K_1(1+a) |\tilde x_1| - K_2 (1+a)x_1} |\tilde x_1|^2 |K_1\big][M(s/n, a, K_1)]^{n-1}\mathrm{d}s \notag\\
    &- \frac{1}{n}\int_0^\infty s \EE\big[e^{-K_1(1+2a) |\tilde x_1| - K_2 (1+2a)x_1} |\tilde x_1|^2 |K_1\big][M(s/n, a, K_1)]^{n-1}\mathrm{d}s\notag\\
     =& \sigma^2\bigg(2\big(1 + K_1^2(1+a)^2\sigma^2\big)e^{\frac{K_1^2(a+1)^2\sigma^2}{2}}\Phi(\!-\!K_1(a+1)\sigma)-\sqrt{\frac{2}{\pi}}K_1(1+a)\sigma\bigg)\notag\\
     &\cdot e^{\frac{K_2^2(a+1)^2\sigma^2}{2}}\int_{0}^\infty [M(s/n, a, K_1)]^{n-1}\mathrm{d}s\notag\\
     & -\frac{\sigma^2\big(2\big(1 + K_1^2(1+2a)^2\sigma^2\big)e^{\frac{K_1^2(2a+1)^2\sigma^2}{2}}\Phi(\!-\!K_1(2a+1)\sigma)-\sqrt{\frac{2}{\pi}}K_1(1+2a)\sigma\big)}{n}\notag\\
     &\cdot e^{\frac{K_2^2(2a+1)^2\sigma^2}{2}}\int_{0}^\infty s[M(s/n, a, K_1)]^{n-1}\mathrm{d}s\notag\\
    \geq & -\sigma(a+1)e^{\frac{(a+1)^2\sigma^2}{2}} \EE\bigg[\frac{n}{\sum_{i=2}^n e^{-a (K_1|\tilde x_i| +  K_2 x_i)}}\Big| K_1\bigg] \notag\\
    & - \frac{\sigma^2(8a^2+10a+5)}{n}e^{\frac{(2a+1)^2\sigma^2}{2}}\EE\bigg[\frac{n^2}{\big(\sum_{i=2}^n e^{-a (K_1|\tilde x_i| +  K_2 x_i)}\big)^2}\Big| K_1\bigg]\notag\\
     \geq & -\frac{\sigma(a+1)}{\Phi(-a\sigma)}e^{\frac{(2a+1)\sigma^2}{2}}
     - 1 %\frac{c_1(a)}{n}.
\end{align}
The first equality is true because for a standard normal variable $z\sim\mathcal{N}(0, \sigma^2)$ and any scalar $c$, we have $\mathbb{E}[|z|^2e^{-c|z|}] = \sigma^2\big(2(1+c^2\sigma^2)e^{c^2\sigma^2/2}\Phi(-c\sigma) - c\sigma\sqrt{\frac{2}{\pi}}\big)$. The second equality is obtained by applying Fubini's theorem to exchange the order of integration, and the fact that $|K_1|, |K_2|\leq 1$. The last inequality is derived by applying Lemma~\ref{lemma:concentration_expectation_II}, where $c_1(a)$ is a constant solely depending on $a$. Following a similar (but simpler) calculation, we can also get the upper bound for $(I)$ as 
\begin{align}\label{eq:softmax_expectation_III_upperI}
    (I) \leq& \int_0^\infty \EE\big[e^{-K_1(1+a) |\tilde x_1| - K_2 (1+a)x_1} |\tilde x_1|^2 |K_1\big][M(s/n, a, K_1)]^{n-1}\mathrm{d}s  \notag\\
    \leq & \sigma^2(2a^2+5a+5)e^{\frac{(a+1)^2\sigma^2}{2}}\EE\bigg[\frac{n}{\sum_{i=2}^n e^{-a (K_1|\tilde x_i| +  K_2 x_i)}}\Big| K_1\bigg] \notag \\
    \leq& \frac{\sigma^2(2a^2+5a+5)}{\Phi(-a\sigma)}e^{\frac{(2a+1)\sigma^2}{2}} + 1%\frac{c_2(a)}{n}, 
\end{align}
%where $c_2(a)$ is also a constant solely depending on $a$.
Following a similar procedure in the proof of Lemma~\ref{lemma:softmax_expectation_I}, we first calculate the derivative of $A_4(\lambda, a, K_1)$ w.r.t. $\lambda$ as
\begin{align*}
    %|A_2'(\lambda, a, K_1)| = 
    \bigg|\frac{\mathrm{d}A_4(\lambda, \alpha, a, K_1)}{\mathrm{d}\lambda}\bigg| &= \Big|-\EE\big[e^{-(a + \alpha) (K_1|\tilde x_1| + K_2 x_1)} |\tilde x_1| e^{-\lambda e^{-a (K_1|\tilde x_1| +  K_2 x_1)}}|K_1\big]\Big|\\
    &\leq \mathbb{E}\big[|\tilde x_1| e^{-(a + \alpha)(K_1|\tilde x_1| + K_2 x_1)}|K_1\big]\leq \sigma(2|a + \alpha|\sigma+1)e^{(a + \alpha)^2\sigma^2},
\end{align*}
which implies that both $A_4(\lambda, \alpha, a, K_1)$ and is Lipschitz  continuous w.r.t. $\lambda$ . Therefore, we can further derive that 
\begin{align*}
&|A_4(\lambda, 1, a, K_1) A_4(\lambda, a, a, K_1) - A_4(0, 1, a, K_1) A_4(0, a, a, K_1)| \\
=& |(A_4(\lambda, 1, a, K_1) - A_4(0, 1, a, K_1))A_4(\lambda, a, a, K_1) + A_4(0, 1, a, K_1)(A_4(\lambda,a, a, K_1) - A_4(0, a, a, K_1))| \\
\le& |A_4(\lambda, 1, a, K_1) - A_4(0, 1, a, K_1)||A_4(\lambda, a, a, K_1)| + |A_4(0, 1, a, K_1)||A_4(\lambda,a, a, K_1) - A_4(0, a, a, K_1)| \\
\le& |A_4(\lambda, 1, a, K_1) - A_4(0, 1, a, K_1)|(|A_4(\lambda, a, a, K_1)-A_4(0, a, a, K_1)) | + |A_4(0, a, a, K_1)|) \\
& + |A_4(0, 1, a, K_1)||A_4(\lambda,a, a, K_1) - A_4(0, a, a, K_1)|\\
\leq& 30\lambda\max\{{a, 1, \sigma}\}^3e^{9(|a|\vee 1)^2\sigma^2/2} +  25\lambda^2\max\{{a, 1, \sigma}\}^4e^{8(|a|\vee 1)^2\sigma^2},
\end{align*}
where the last inequality holds by using the Lipschitz continuous properties of $A_4(\lambda, \alpha, a, K_1)$, the facts that $A_4(0, 1, a, K_1) = \sqrt{\frac{2}{\pi}}\sigma e^{K_2^2\sigma^2/2}-2\sigma^2K_1\Phi(-K_1\sigma)e^{\sigma^2/2}$, $A_4(0, a, a, K_1)=\sqrt{\frac{2}{\pi}}\sigma e^{a^2\sigma^2K_2^2/2}-2aK_1\sigma^2\Phi(-aK_1\sigma)e^{a^2\sigma^2/2}$ and $|K_1|, |K_2|\leq 1$. With the inequality established above and the triangle inequality, 
we calculate the lower and upper bounds for the term $(II)$ as
\begin{align}\label{eq:softmax_expectation_III_lowerII}
    (II)= &\int_0^\infty A_4(s/n, 1, a, K_1) A_4(s/n, a, a, K_1)[M(s/n, a, K_1)]^{n-2}\mathrm{d}s \notag\\
    \geq& A_4(0, 1, a, K_1)A_4(0, a, a, K_1) \int_0^\infty [M(s/n, a, K_1)]^{n-2}\mathrm{d}s - \notag\\
    &-\frac{30\max\{{a, 1, \sigma}\}^3e^{5(|a|\vee 1)^2\sigma^2}}{n}\! \int_0^\infty s[M(s/n, a, K_1)]^{n-2}\mathrm{d}s \notag\\
    &- \frac{25\max\{{a, 1, \sigma}\}^4e^{8(|a|\vee 1)^2}}{n^2} \int_0^\infty s^2[M(s/n, a, K_1)]^{n-2}\mathrm{d}s\notag\\
    = & A_4(0, 1, a, K_1)A_4(0, a, a, K_1) \EE\bigg[\frac{n}{\sum_{i=3}^n e^{-a (K_1|\tilde x_i| +  K_2 x_i)}}\Big| K_1\bigg]  \notag\\
    &- \frac{30\max\{{a, 1, \sigma}\}^3e^{5(|a|\vee 1)^2\sigma^2}}{n}\EE\bigg[\frac{n^2}{\big(\!\sum_{i=3}^n e^{-a (K_1|\tilde x_i| +  K_2 x_i)}\big)^2}\bigg] \notag\\
    &- \frac{50\max\{{a, 1, \sigma}\}^4e^{8(|a|\vee 1)^2}}{n^2}  \EE\bigg[\frac{n^3}{\big(\sum_{i=3}^n e^{-a (K_1|\tilde x_i| +  K_2 x_i)}\big)^3}\bigg]\notag\\
    \geq & \frac{A_4(0, 1, a, K_1)A_4(0, a, a, K_1)}{2e^{a^2\sigma^2/2}\Phi(-aK_1\sigma)} - \frac{c_3(a, \sigma)}{n},
\end{align}
where $c_3(a, \sigma) = \frac{8\max\{{a, 1, \sigma}\}^3 e^{7(|a|\vee 1)^2\sigma^2}}{\Phi^2(-a\sigma)}$, a positive continuous function of $a$ and $\sigma$. Here the last equation is derived by Fubini's theorem to exchange the order of the integral, and the last inequality is established by Lemma~\ref{lemma:concentration_expectation_II}. Similarly, we can also obtain that 
\begin{align}\label{eq:softmax_expectation_III_upperII}
    (II)\leq & A_4(0, 1, a, K_1)A_4(0, a, a, K_1) \int_0^\infty [M(s/n, a, K_1)]^{n-2}\mathrm{d}s\notag\\
    &+ \frac{30\max\{{a, 1, \sigma}\}^3e^{5(|a|\vee 1)^2\sigma^2}}{n} \int_0^\infty s[M(s/n, a, K_1)]^{n-2}\mathrm{d}s \notag\\
    &+ \frac{25\max\{{a, 1, \sigma}\}^4e^{8(|a|\vee 1)^2}}{n^2} \int_0^\infty s^2[M(s/n, a, K_1)]^{n-2}\mathrm{d}s\notag\\
    \leq & \frac{A_4(0, 1, a, K_1)A_4(0, a, a, K_1)}{2e^{a^2\sigma^2/2}\Phi(-aK_1\sigma)}+ \frac{c_3(a, \sigma)}{n}.
\end{align}
To further derive a concrete bounds for the leading term $F(a, K_1) = \frac{A_4(0, 1, a, K_1)A_4(0, a, a, K_1)}{2e^{a^2\sigma^2/2}\Phi(-aK_1)}$, we consider the second order Taylor's expansion of $F(a, K_1)$ at $K_1 = 0$. By utilizing the conclusion that $K_1$ has a symmetric distribution, we have 
\begin{align*}
    \EE[F(a, K_1)] = F(a, 0) + \EE\bigg[\frac{F''(a, \xi)K_1^2}{2}\bigg],
\end{align*}
where $\xi$ is a random variable between $0$ and $K_1$. Since the function $F(a, k)$ is analytic w.r.t. $k$ on $[-1, 1]$, and $|K_1|\leq 1$ (hence $|\xi|\leq 1$), its second-order derivative $F''(a, k)$ is continuous and bounded on this compact interval. Therefore, there exists a constant $c_4(a, \sigma)$ such that $|F''(a, \xi)|\leq c_4(a, \sigma)$. By $\EE[K_1^2]=1/d$ and $F(a, 0) =\frac{2\sigma^2e^{\sigma^2/2}}{\pi}$, we can further derive that
\begin{align}\label{eq:softmax_expectation_III_upperlower}
     \bigg|\EE[F(a, K_1)] - \frac{2\sigma^2e^{\frac{\sigma^2}{2}}}{\pi}\bigg|\leq \EE\bigg[\frac{c_4(a, \sigma)}{d}\bigg].
\end{align}
% \begin{align*}
%     \bigg|\EE[F(a, K_1)] - F(a, 0)\bigg|\leq \EE\bigg[\frac{c_4(a, \sigma)}{d^2}\bigg]. 
% \end{align*}
% Combining the fact that 
% by the fact that $F(a, 0) =\frac{2e^{1/2}}{\pi}$, $F''(a, 0 ) =\frac{2e^{1/2}}{\pi^{2}}\big((4-\pi)a^{2}+(\pi^{2}-2\pi)a+\pi\big)$, $\EE[K_1^2]=1/d$ and $\EE[K_1^4] = \frac{3}{d(d+2)}$, we finally derive that
% \begin{align}\label{eq:softmax_expectation_III_upperlower}
%      \bigg|\EE[F(a, K_1)] - \Big(\frac{2e^{1/2}}{\pi} - \frac{e^{1/2}((4-\pi)a^{2}+(\pi^{2}-2\pi)a+\pi)}{\pi^2 d}\Big)\bigg|\leq \frac{c_5(a)}{d^2}.
% \end{align}
Substituting these results of~\eqref{eq:softmax_expectation_III_lowerI}, ~\eqref{eq:softmax_expectation_III_upperI}, ~\eqref{eq:softmax_expectation_III_lowerII},  ~\eqref{eq:softmax_expectation_III_upperII}, and~\eqref{eq:softmax_expectation_III_upperlower} into~\eqref{eq:softmax_expectationIII_I}, we complete the proof. 
\end{proof}

\begin{lemma}\label{lemma:softmax_expectation_IV}
Let $x_1, x_2, \ldots, x_n\sim \cN(0,\sigma^2)$ be $n$ i.i.d Gaussian random variables, and $\tilde x_1, \tilde x_2, \ldots, \tilde x_n\sim \cN(0,\sigma^2)$ be another $n$ i.i.d standard Gaussian random variables. In addition, $a$ is a positive scalar, and $\btheta_*, \btheta_0\in\RR^d$ are two independent random vectors following uniform $d-$dimensional sphere distribution, then we have 
\begin{align*}
 \Bigg|\EE\bigg[\frac{\sum_{i_1=1}^n\sum_{i_2=1}^ne^{-a\la\btheta_0, \btheta_*\ra(|\tilde x_{i_1}|+|\tilde x_{i_2}|)- a\|(\Ib_d-\btheta_*\btheta_*^\top)\btheta_0\|_2(x_{i_1} + x_{i_2})}|\tilde x_{i_1}||\tilde x_{i_2}|}{\big(\sum_{i=1}^ne^{-a\la\btheta_0, \btheta_*\ra|\tilde x_i|-a \|(\Ib_d-\btheta_*\btheta_*^\top)\btheta_0\|_2x_{i}}\big)^2}\bigg] -\frac{2\sigma^2}{\pi}\Bigg|\leq  \frac{f_5(a, \sigma)}{n} + \frac{f_6(a, \sigma)}{d},
\end{align*}
where 
%$B_{a, \sigma, 2} = \frac{2}{\pi} - \sqrt{\frac{2}{\pi}}\frac{a(\pi-2)}{\pi d}$, and 
$f_5(a, \sigma)$ and $f_6(a, \sigma)$   are both analytic functions of $a$ and $\sigma$, and irrelevant with $n, d$.  
\end{lemma}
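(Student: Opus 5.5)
We follow the template of Lemma~\ref{lemma:softmax_expectation_II} almost line by line, replacing $x_{i_1}x_{i_2}$ by $|\tilde x_{i_1}||\tilde x_{i_2}|$. Write $K_1=\la\btheta_0,\btheta_*\ra$ and $K_2=\|(\Ib_d-\btheta_*\btheta_*^\top)\btheta_0\|_2$. We use the identity $S^{-2}=\int_0^\infty s e^{-sS}\,\mathrm{d}s$ with $S=\sum_i e^{-a(K_1|\tilde x_i|+K_2x_i)}$, then Fubini, exchangeability of the pairs $(x_i,\tilde x_i)$, and the substitution $s'=s/n$. This splits the expectation as
\[
\frac{1}{n}\EE\Big[\int_0^\infty s\,A_3(s/n,2a,a,K_1)[M(s/n,a,K_1)]^{n-1}\mathrm{d}s\Big]+\frac{n-1}{n}\EE\Big[\int_0^\infty s\,[A_4(s/n,a,a,K_1)]^2[M(s/n,a,K_1)]^{n-2}\mathrm{d}s\Big],
\]
where $A_3$, $A_4$ are as in the proof of Lemma~\ref{lemma:softmax_expectation_III} and $M$ is as in Lemma~\ref{lemma:softmax_expectation_I}.

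\textbf{Diagonal term.} Bound $A_3$ above and below via $1-z\le e^{-z}\le 1$, using the closed form of $\EE[|z|^2e^{-c|z|}]$ already recorded. Then convert $\int_0^\infty s^k M^{n-1}\,\mathrm{d}s$ into $k!\,\EE[n^{k+1}/(\sum_{i\ge2}e^{-a(\cdot)})^{k+1}\mid K_1]$. Lemma~\ref{lemma:concentration_expectation_II} bounds these inverse moments by constants depending only on $(a,\sigma)$. So the bracket is $O_{a,\sigma}(1)$, and after the $1/n$ prefactor it contributes at most $f(a,\sigma)/n$.

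\textbf{Off-diagonal term.} From the proof of Lemma~\ref{lemma:softmax_expectation_III}, $A_4$ is Lipschitz in $\lambda$. Hence
\[
\bigl|[A_4(\lambda,a,a,K_1)]^2-[A_4(0,a,a,K_1)]^2\bigr|\le c\lambda+c\lambda^2 .
\]
The corrections at $\lambda=s/n$ therefore produce integrals $n^{-1}\int s^2M^{n-2}$ and $n^{-2}\int s^3M^{n-2}$, each $O(1/n)$ by the same inverse-moment bound. The leading piece is $[A_4(0,a,a,K_1)]^2\,\EE[n^2/(\sum_{i\ge3}e^{-a(\cdot)})^2\mid K_1]$. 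By the law of large numbers / Lemma~\ref{lemma:concentration_expectation_II}, the inverse moment equals $(\EE[e^{-a(K_1|\tilde x|+K_2x)}\mid K_1])^{-2}+O(1/n)$, where $\EE[e^{-a(K_1|\tilde x|+K_2x)}\mid K_1]=2e^{a^2\sigma^2/2}\Phi(-aK_1\sigma)$. This yields
\[
G(a,K_1)=\frac{[A_4(0,a,a,K_1)]^2}{4e^{a^2\sigma^2}\Phi(-aK_1\sigma)^2}.
\]

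\textbf{Average over $K_1$.} Taylor-expand $G$ to second order at $K_1=0$. The density of $K_1$ is symmetric (Lemma~\ref{lemma:unit_rv_I}), so the first-order term vanishes, and $\EE[K_1^2]=1/d$ controls the remainder via a uniform bound on $G''$ over $[-1,1]$. At $K_1=0$ we have $K_2=1$, $A_4(0,a,a,0)=\sqrt{2/\pi}\,\sigma e^{a^2\sigma^2/2}$, and $\Phi(0)=1/2$, so $G(a,0)=\tfrac{2}{\pi}\sigma^2 e^{a^2\sigma^2}/e^{a^2\sigma^2}=2\sigma^2/\pi$, the claimed constant. The $(n-1)/n$ prefactor adds only $O(1/n)$.

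\textbf{Main obstacle.} The delicate step is showing that the inverse moment $\EE[n^2/(\sum_{i\ge3}\cdot)^2\mid K_1]$ matches $(\text{mean})^{-2}$ up to $O(1/n)$ uniformly in $K_1\in[-1,1]$, rather than merely being bounded. If Lemma~\ref{lemma:concentration_expectation_II} gives only upper bounds, I would prove the missing estimate directly. Write $\bar S=\frac1{n-2}\sum_{i\ge3}e^{-a(\cdot)}$, whose mean $\mu$ is bounded below by $2e^{a^2\sigma^2/2}\Phi(-a\sigma)$. Expand $\bar S^{-2}$ around $\mu^{-2}$: the second-order Taylor remainder is $O(\EE(\bar S-\mu)^2)=O(1/n)$ once negative moments of $\bar S$ are controlled, and the latter follow from the lower tail of a sum of positive i.i.d.\ variables (e.g.\ $\PP(\bar S<\mu/2)\le e^{-cn}$ together with the crude bound $\bar S\ge \min_i$).
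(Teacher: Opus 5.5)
Your proposal is correct and follows the paper's proof essentially line by line: the same $S^{-2}=\int_0^\infty s\,e^{-sS}\,\mathrm{d}s$ split into the $A_3$ diagonal term and the $[A_4]^2$ off-diagonal term, the Lipschitz-in-$\lambda$ control of $[A_4]^2$, the inverse-moment bounds, and the second-order Taylor expansion of $[A_4(0,a,a,K_1)]^2/(4e^{a^2\sigma^2}\Phi^2(-aK_1\sigma))$ at $K_1=0$, which gives $2\sigma^2/\pi$. The obstacle you flag is already handled by Lemma~\ref{lemma:concentration_expectation_II}. That lemma supplies a matching Jensen lower bound and an upper bound with an $O(n^{-k-1})$ correction that is uniform in $K_1\in[-1,1]$, and it is proved by the same Taylor-plus-lower-tail argument you sketch.
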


\begin{proof}[Proof of Lemma~\ref{lemma:softmax_expectation_IV}]
We repeatedly use the previous notations that $K_1 = \la\btheta_0, \btheta_*\ra$, and $K_2 = \|(\Ib_d-\btheta_*\btheta_*^\top)\btheta_0\|_2$. 
% Similar to the proof of Lemma~\ref{lemma:softmax_expectation_II}, we leverage the identity $\frac{1}{S^2} = \int_0^\infty se^{-sS} ds$ to obtain that
% \begin{align*}
%     \frac{1}{\big(\sum_{i=1}^ne^{-aK_1|\tilde x_i|-a K_2x_{i}}\big)^2}
% = \int_0^\infty s\exp\!\bigg(\!-\!s\!\sum_{i=1}^n e^{-aK_1|\tilde x_i|-a K_2x_{i}}\bigg) \mathrm{d}s.
% \end{align*}
Following a similar procedure in the proof of Lemma~\ref{lemma:softmax_expectation_II}, we can obtain that
\begin{align}\label{eq:softmax_expectationIV_I}
&\EE\bigg[\frac{\sum_{i_1=1}^n\sum_{i_2=1}^ne^{-aK_1(|\tilde x_{i_1}|+|\tilde x_{i_2}|)-a K_2(x_{i_1} + x_{i_2})}|\tilde x_{i_1}||\tilde x_{i_2}|}{\big(\sum_{i=1}^ne^{-aK_1|\tilde x_i|-a K_2x_{i}}\big)^2}\bigg] \notag\\
=& \int_0^\infty s'\EE\bigg[
\sum_{i_1,i_2=1}^ne^{-aK_1(|\tilde x_{i_1}|+|\tilde x_{i_2}|)-a K_2(x_{i_1} + x_{i_2})}|\tilde x_{i_1}||\tilde x_{i_2}|
\exp\!\Big(\!-\!s' \sum_{i=1}^n e^{-aK_1|\tilde x_i|-a K_2x_{i}}\Big)
\bigg] \mathrm{d}s'\notag\\
=&n\EE\bigg[\int_0^\infty s'\EE\big[e^{-2aK_1 |\tilde x_1| - 2aK_2 x_1} |\tilde x_1|^2 e^{-s' e^{-a (K_1|\tilde x_1| +  K_2 x_1)}}|K_1\big]\Big(\EE\big[e^{-s' e^{-a (K_1|\tilde x_1| +  K_2 x_1)}}|K_1\big]\Big)^{n-1}\mathrm{d}s'\bigg] \notag\\
&+ n(n-1)\EE\bigg[\int_0^\infty s'\EE\big[e^{-aK_1(|\tilde x_1|+ |\tilde x_2|) -a K_2(x_1+x_2)} |\tilde x_{1}||\tilde x_{2}|e^{-s'e^{-a (K_1|\tilde x_1| +  K_2 x_1)} -s' e^{-a (K_1|\tilde x_2| +  K_2 x_2)}}|K_1\big]\notag\\
&\qquad\qquad\qquad \cdot\Big(\EE\big[e^{-s' e^{-a (K_1|\tilde x_1| +  K_2 x_1)}}|K_1\big]\Big)^{n-2}\mathrm{d}s'\bigg]\notag\\
=&\frac{1}{n}\EE\bigg[\int_0^\infty s\EE\big[e^{-2aK_1 |\tilde x_1| -2a K_2 x_1} |\tilde x_{1}|^2 e^{-\frac{s}{n} e^{-a (K_1|\tilde x_1| +  K_2 x_1)}}|K_1\big]\Big(\EE\big[e^{-\frac{s}{n} e^{-a (K_1|\tilde x_1| +  K_2 x_1)}}|K_1\big]\Big)^{n-1}\mathrm{d}s\bigg] \notag\\
&+ \frac{n-1}{n}\EE\bigg[\int_0^\infty s\EE\big[e^{-aK_1(|\tilde x_1|+ |\tilde x_2|) - aK_2(x_1+x_2)} |\tilde x_{1}||\tilde x_{2}|e^{-\frac{s}{n}e^{-a (K_1|\tilde x_1| +  K_2 x_1)} -\frac{s}{n}e^{-a (K_1|\tilde x_2| +  K_2 x_2)}}|K_1\big]\notag\\
&\qquad\qquad\qquad \cdot\Big(\EE\big[e^{-\frac{s}{n} e^{-a (K_1|\tilde x_1| +  K_2 x_1)}}|K_1\big]\Big)^{n-2}\mathrm{d}s\bigg]\notag\\
=&\frac{1}{n}\EE\bigg[\underbrace{\int_0^\infty s A_3(s/n, 2a, a, K_1)[M(s/n, a, K_1)]^{n-1}\mathrm{d}s}_{(I)}\bigg]  \notag\\
&+  \frac{n-1}{n}\EE\bigg[\underbrace{\int_0^\infty s[A_4(s/n, a, a, K_1)]^2[M(s/n, a, K_1)]^{n-2}\mathrm{d}s}_{(II)}\bigg].
\end{align}
Here, $A_3(\lambda, \alpha,a,  K_1)$, $A_4(\lambda, \alpha,a,  K_1)$ and $M(\lambda, a, K_1)$ share the same definitions as in the proof in Lemma~\ref{lemma:softmax_expectation_III}. Through similar calculation procedures in~\eqref{eq:softmax_expectation_III_lowerI} and~\eqref{eq:softmax_expectation_III_upperI} from the proof of Lemma~\ref{lemma:softmax_expectation_III}, we can calculate the upper and lower bounds for the term $(I)$ as
\begin{align}\label{eq:softmax_expectation_IV_lowerI}
    (I)\geq& \int_0^\infty s\EE\big[e^{-2aK_1 |\tilde x_1| - 2aK_2 x_1} |\tilde x_1|^2 |K_1\big][M(s/n, a, K_1)]^{n-1}\mathrm{d}s \notag\\
    &- \frac{1}{n}\int_0^\infty s^2 \EE\big[e^{-3aK_1 |\tilde x_1| - 3aK_2 x_1} |\tilde x_1|^2 |K_1\big][M(s/n, a, K_1)]^{n-1}\mathrm{d}s\notag\\
     % =& 2\big(K_2^2(1+a)^2+1\big)e^{\frac{(a+1)^2}{2}}\Phi(\!-\!K_1(a+1))\int_{0}^\infty \Big(\EE\big[e^{-\frac{s}{n} e^{-a (K_1|\tilde x_1| +  K_2 x_1)}}|K_1\big]\Big)^{n-1}\mathrm{d}s\notag\\
     % & -\frac{K_2^2(1+2a)^2+1}{n}e^{\frac{(2a+1)^2}{2}}2\Phi(\!-\!K_1(2a+1))\int_{0}^\infty s\Big(\EE\big[e^{-\frac{s}{n} e^{-a (K_1|\tilde x_1| +  K_2 x_1)}}|K_1\big]\Big)^{n-1}\mathrm{d}s\notag\\
     =&\sigma^2\bigg(2\big(1 + 4K_1^2a^2\sigma^2\big)e^{2K_1^2a^2\sigma^2}\Phi(\!-\!2K_1a\sigma)-2\sqrt{\frac{2}{\pi}}K_1a\sigma\bigg)e^{2K_2^2a^2\sigma^2}\int_{0}^\infty s[M(s/n, a, K_1)]^{n-1}\mathrm{d}s\notag\\
     % =& \bigg(2\big(1 + 4K_1^2a^2\big)e^{2K_1^2a^2}\Phi(\!-\!2K_1a)-\sqrt{\frac{2}{\pi}}2K_1a\bigg)e^{2K_2^2a^2}\int_{0}^\infty s [M(s/n, a, K_1)]^{n-1}\mathrm{d}s\notag\\
     & -\frac{2\big(1 + 9K_1^2a^2\sigma^2\big)e^{\frac{9K_1^2a^2\sigma^2}{2}}\Phi(\!-\!3K_1a\sigma)-\sqrt{\frac{2}{\pi}}3K_1a\sigma}{n}\sigma^2e^{\frac{9K_2^2a^2\sigma^2}{2}}\int_{0}^\infty s^2[M(s/n, a, K_1)]^{n-1}\mathrm{d}s\notag\\
    \geq & -2a\sigma e^{2a^2\sigma^2} \EE\bigg[\frac{n^2}{\big(\sum_{i=2}^n e^{-a (K_1|\tilde x_i| +  K_2 x_i)}\big)^2}\Big| K_1\bigg] \notag\\
    &- \frac{2\sigma^2(18a^2+3a+2)}{n}e^{\frac{9a^2\sigma^2}{2}}\EE\bigg[\frac{n^3}{\big(\sum_{i=2}^n e^{-a (K_1|\tilde x_i| +  K_2 x_i)}\big)^3}\Big| K_1\bigg]\notag\\
     \geq & -\frac{a\sigma}{2\Phi^2(-a\sigma)}e^{a^2\sigma^2}
     - 1. %\frac{c_1(a)}{n}.
\end{align}
The first equality is true because for a normal random variable $z\sim\mathcal{N}(0, \sigma^2)$ and any scalar $c$, we have $\mathbb{E}[|z|^2e^{-c|z|}] = \sigma^2\big(2(1+c^2\sigma^2)e^{c^2\sigma^2/2}\Phi(-c\sigma) - c\sigma\sqrt{\frac{2}{\pi}}\big)$ and $\mathbb{E}[e^{-c|z|}]=2e^{c^2\sigma^2/2}\Phi(-c\sigma)$. The second equality is obtained by applying Fubini's theorem to exchange the order of integration, and the third follows from direct calculation. The penultimate inequality is derived by applying Lemma~\ref{lemma:concentration_expectation_II}
%, where $c_1(a)$ is a constant solely depending on $a$. 
Following a similar (but simpler) calculation, we can also get the upper bound for $(I)$ as 
\begin{align}\label{eq:softmax_expectation_IV_upperI}
    (I) \leq& \int_0^\infty s\EE\big[e^{-2aK_1 |\tilde x_1| - 2aK_2 x_1} |\tilde x_1|^2 |K_1\big][M(s/n, a, K_1)]^{n-1}\mathrm{d}s\notag\\
    \leq& 2\sigma^2\big(4a^2\sigma^2+1\big)e^{9a^2\sigma^2/2}\EE\bigg[\frac{n^2}{\big(\sum_{i=2}^n e^{-a (K_1|\tilde x_i| +  K_2 x_i)}\big)^2}\Big| K_1\bigg] \notag \\
    \leq& \frac{(4a^2\sigma^2+1)e^{9a^2\sigma^2/2}}{2\Phi^2(-a\sigma)}+ 1.
\end{align}
%where $c_2(a)$ is also a constant solely depending on $a$. 
With the Lipschitz continuity of $A_4(\lambda, \alpha, a, K_1)$ derived in the proof of Lemma~\ref{lemma:softmax_expectation_III}, we can also demonstrate that
\begin{align*}
&|[A_4(\lambda, a, a, K_1)]^2 - [A_4(0, a, a, K_1)]^2| \\
=& |A_4(\lambda, a, a, K_1) -A_4(0, a, a, K_1)||A_4(\lambda, a, a, K_1) +  A_4(0, a, a, K_1)|\\
\le& 2|A_4(\lambda,a, a, K_1) -A_4(0, a, a, K_1)||A_4(0, a, a, K_1)| + |A_4(\lambda, a, a, K_1) -A_4(0, a, a, K_1)|^2\\
\leq& 10\lambda \max\{{a, 1, \sigma}\}^3e^{9a^2\sigma^2/2} +  25\lambda^2\max\{{a, 1, \sigma}\}^4e^{8a^2\sigma^2},
% \leq& 15\lambda (|a|\!\vee\! 1)^2 e^{9a^2/2} + 25\lambda^2 (|a|\!\vee\! 1)^2e^{8a^2}.
\end{align*}
Now, by using the triangle inequality to establish upper and lower bounds for $[A_4(\lambda, a, a, K_1)]^2$ from the inequalities above, we are ready to derive the lower and upper bounds for the term $(II)$. We first derive the lower bound as
\begin{align}\label{eq:softmax_expectation_IV_lowerII}
    (II)= &\int_0^\infty s[A_4(s/n, a,a, K_1)]^2[M(s/n, a, K_1)]^{n-2}\mathrm{d}s \notag\\
    \geq& [A_4(0, a, a, K_1)]^2\int_0^\infty s[M(s/n, a, K_1)]^{n-2}\mathrm{d}s - \frac{10\max\{{a, 1, \sigma}\}^3 e^{9a^2\sigma^2/2}}{n} \int_0^\infty s^2[M(s/n, a, K_1)]^{n-2}\mathrm{d}s \notag\\
    & - \frac{25 \max\{{a, 1, \sigma}\}^4e^{8a^2\sigma^2}}{n^2} \int_0^\infty s^3[M(s/n, a, K_1)]^{n-2}\mathrm{d}s\notag\\
    = & [A_4(0, a, a, K_1)]^2\EE\bigg[\frac{n^2}{\big(\sum_{i=3}^n e^{-a (K_1|\tilde x_i| +  K_2 x_i)}\big)^2}\Big| K_1\bigg]  \notag\\
    &- \frac{20\max\{{a, 1, \sigma}\}^3e^{9a^2\sigma^2/2}}{n} \EE\bigg[\frac{n^3}{\big(\sum_{i=3}^n e^{-a (K_1|\tilde x_i| +  K_2 x_i)}\big)^3}\bigg]\notag\\
    &- \frac{150 \max\{{a, 1, \sigma}\}^4e^{8a^2\sigma^2}}{n^2} \EE\bigg[\frac{n^4}{\big(\sum_{i=3}^n e^{-a (K_1|\tilde x_i| +  K_2 x_i)}\big)^4}\bigg]\notag\\
    \geq &\frac{[A_4(0, a, a, K_1)]^2}{4e^{a^2\sigma^2}\Phi^2(-aK_1\sigma)}- \frac{c_3(a, \sigma)}{n},
\end{align}
where $c_3(a, \sigma) = \frac{2\max\{{a, 1, \sigma}\}^4 e^{3a^2}\sigma^2}{\Phi^3(-a\sigma)}$, a positive continuous function of $a$ and $\sigma$. Here the last equation is derived by Fubini's theorem to exchange the order of the integral, and the last inequality is established by Lemma~\ref{lemma:concentration_expectation_II}. Similarly, we can also obtain that 
\begin{align}\label{eq:softmax_expectation_IV_upperII}
    (II)\leq & [A_4(0, a, a, K_1)]^2\int_0^\infty s[M(s/n, a, K_1)]^{n-2}\mathrm{d}s + \frac{10\max\{{a, 1, \sigma}\}^3e^{9a^2\sigma^2/2}}{n} \int_0^\infty s^2[M(s/n, a, K_1)]^{n-2}\mathrm{d}s \notag\\
    &+ \frac{25 \max\{{a, 1, \sigma}\}^4e^{8a^2\sigma^2}}{n^2} \int_0^\infty s^3[M(s/n, a, K_1)]^{n-2}\mathrm{d}s\notag\\
    \leq & \frac{[A_4(0, a, a, K_1)]^2}{4e^{a^2\sigma^2}\Phi^2(-aK_1\sigma)}+ \frac{c_3(a, \sigma)}{n}.
\end{align}
To further derive a concrete bounds for the leading term $F(a, K_1) =  \frac{[A_4(0, a, a, K_1)]^2}{4e^{a^2\sigma^2}\Phi^2(-aK_1\sigma)}$, we consider the forth order Taylor's expansion of $F(a, K_1)$ at $K_1 = 0$. By utilizing the conclusion that $K_1$ has a symmetric distribution, we have 
\begin{align*}
    \EE[F(a, K_1)] = F(a, 0) + \EE\bigg[\frac{F''(a, \xi)K_1^2}{2}\bigg],
\end{align*}
where $\xi$ is a random variable between $0$ and $K_1$. Since the function $F(a, k)$ is analytic w.r.t. $k$ on $[-1, 1]$, and $|K_1|\leq 1$ (hence $|\xi|\leq 1$), its fourth derivative $F^{(2)}(a, k)$ is continuous and bounded on this compact interval. Therefore, there exists a constant $c_4(a, \sigma)$ such that $|F^{(4)}(a, \xi)|\leq c_4(a,\sigma)$. By the fact that $F(a, 0) = \frac{2}{\pi}\sigma^2$ and $\EE[K_1^2]=1/d$, 
% we can further derive that
% \begin{align*}
%     \bigg|\EE[F(a, K_1)] - F(a, 0)\bigg|\leq \EE\bigg[\frac{c_4(a, \sigma)K_1^4}{24}\bigg].
% \end{align*}
% By the fact that $F(a, 0) = \frac{2}{\pi}\sigma^2$, $F''(a, 0 ) =2a\sqrt{\frac{2}{\pi}}\Big(\frac{2}{\pi}-1\Big)$, $\EE[K_1^2]=1/d$ and $\EE[K_1^4] = \frac{3}{d(d+2)}$, 
we finally derive that
\begin{align}\label{eq:softmax_expectation_IV_upperlower}
     \bigg|\EE[F(a, K_1)] - \frac{2\sigma^2}{\pi}\bigg|\leq \frac{c_4(a,\sigma)}{d}.
\end{align}
Substituting these results of~\eqref{eq:softmax_expectation_IV_lowerI}, ~\eqref{eq:softmax_expectation_IV_upperI}, ~\eqref{eq:softmax_expectation_IV_lowerII},  ~\eqref{eq:softmax_expectation_IV_upperII}, and~\eqref{eq:softmax_expectation_IV_upperlower} into~\eqref{eq:softmax_expectationIV_I}, we complete the proof. 
\end{proof}

\begin{lemma}\label{lemma:softmax_expectation_V}
Let $x_1, x_2, \ldots, x_n\sim \cN(0,\sigma^2)$ be $n$ i.i.d Gaussian random variables, and $\tilde x_1, \tilde x_2, \ldots, \tilde x_n\sim \cN(0,\sigma^2)$ be another $n$ i.i.d Gaussian random variables. In addition, $a$ is a positive scalar, and $\btheta_*, \btheta_0\in\RR^d$ are two independent random vectors following uniform $d-$dimensional sphere distribution, and we denote $K_1 = \la\btheta_0, \btheta_*\ra$, $K_2 = \|(\Ib_d-\btheta_*\btheta_*^\top)\btheta_0\|_2$. Then we have 
\begin{align*}
  \Bigg|\EE\bigg[\frac{\sum_{i=1}^n e^{-\la\btheta_0, \btheta_*\ra(1+a)|\tilde x_{i}|- \|(\Ib_d-\btheta_*\btheta_*^\top)\btheta_0\|_2(x_{i} + ax_{i})}}{\sum_{i=1}^ne^{-a\la\btheta_0, \btheta_*\ra|\tilde x_i|-a \|(\Ib_d-\btheta_*\btheta_*^\top)\btheta_0\|_2x_{i}}}\bigg] -B_{a, 2}e^{(a+\frac{1}{2})\sigma^2}\Bigg|\leq  \frac{f_7(a, \sigma)}{n},
\end{align*}
where $B_{a, 2} = \EE\big[\frac{\Phi(\!-\!K_1(a+1)\sigma)}{\Phi(\!-\!K_1 a\sigma)}\big]$. It satisfies that $\big|\EE\big[\frac{\Phi(\!-\!K_1(a+1)\sigma)}{\Phi(\!-\!K_1 a\sigma)}\big] - 1\big| \leq \frac{f_8(a, \sigma)}{d}$, where $f_7(a, \sigma), f_8(a, \sigma)$ are both continuous functions of $a$ and $\sigma$, and irrelevant with $n, d$.  
\end{lemma}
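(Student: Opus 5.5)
I will follow the template of the proofs of Lemmas~\ref{lemma:softmax_expectation_I} and \ref{lemma:softmax_expectation_III}, writing $K_1=\la\btheta_0,\btheta_*\ra$, $K_2=\|(\Ib_d-\btheta_*\btheta_*^\top)\btheta_0\|_2$, and $u_i = K_1|\tilde x_i| + K_2 x_i$. The plan is to condition on $K_1$ and apply the Laplace identity $1/S=\int_0^\infty e^{-sS}\mathrm{d}s$. Fubini and exchangeability of the pairs $(x_i,\tilde x_i)$ then give the following, where I substitute $s'=s/n$:
\begin{align*}
\EE\Big[\frac{\sum_i e^{-(1+a)u_i}}{\sum_i e^{-a u_i}}\Big|K_1\Big]
= \int_0^\infty A_5(s/n,1+a,a,K_1)\,[M(s/n,a,K_1)]^{n-1}\,\mathrm{d}s .
\end{align*}
Here $A_5(\lambda,\alpha,a,K_1)=\EE[e^{-\alpha u_1}e^{-\lambda e^{-a u_1}}|K_1]$ and $M$ is as before. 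Only the diagonal term $i_1=i_2$ appears, so there is no $(n-1)$ cross term. This makes the lemma simpler than the previous ones.

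\textbf{Step 1: sandwich $A_5$.} Using $1-z\le e^{-z}\le 1$, I get $A_5(0)-\lambda A_5^{(2a+1)}(0)\le A_5(\lambda)\le A_5(0)$. For $z\sim\cN(0,\sigma^2)$ I use $\EE[e^{-c|z|}]=2e^{c^2\sigma^2/2}\Phi(-c\sigma)$ together with $K_1^2+K_2^2=1$ and independence of $x_1,\tilde x_1$. This gives
\begin{align*}
A_5(0,\alpha,a,K_1)=2e^{\alpha^2\sigma^2/2}\Phi(-\alpha K_1\sigma).
\end{align*}
In particular, $A_5(0,1+a,a,K_1)=2e^{(1+a)^2\sigma^2/2}\Phi(-(1+a)K_1\sigma)$.

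\textbf{Step 2: integrate out $s$.} Fubini again gives $\int_0^\infty [M(s/n)]^{n-1}\mathrm{d}s=\EE[n/\sum_{i=2}^n e^{-a u_i}|K_1]$, and the $\lambda$-correction term becomes $\frac1n\EE[n^2/(\sum_{i\ge2}e^{-au_i})^2|K_1]$. Then Lemma~\ref{lemma:concentration_expectation_II} (the concentration of inverse moments of $\frac1n\sum e^{-au_i}$ around $\EE[e^{-au_1}|K_1]^{-k}=(2e^{a^2\sigma^2/2}\Phi(-aK_1\sigma))^{-k}$ up to $O(1/n)$) yields
\begin{align*}
\frac{2e^{(1+a)^2\sigma^2/2}\Phi(-(1+a)K_1\sigma)}{2e^{a^2\sigma^2/2}\Phi(-aK_1\sigma)}\pm \frac{c(a,\sigma)}{n}
= e^{(a+\frac12)\sigma^2}\frac{\Phi(-(a+1)K_1\sigma)}{\Phi(-aK_1\sigma)}\pm\frac{c(a,\sigma)}{n}.
\end{align*}
The error constant is uniform in $K_1\in[-1,1]$, since $\Phi(-aK_1\sigma)\ge\Phi(-a\sigma)$. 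Taking expectation over $K_1$ gives the first claim with $B_{a,2}$.

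\textbf{Step 3: the $1/d$ bound on $B_{a,2}$.} Let $F(k)=\Phi(-(a+1)k\sigma)/\Phi(-ak\sigma)$, which is analytic on $[-1,1]$ with $F(0)=1$. Because $K_1$ is symmetric, Taylor expansion to second order gives $|\EE F(K_1)-1|\le \tfrac12\sup_{[-1,1]}|F''|\,\EE[K_1^2]=\sup|F''|/(2d)$, using $\EE K_1^2=1/d$ from Lemma~\ref{lemma:unit_rv_I}. The main obstacle is Step 2: making sure the inverse-moment bounds from Lemma~\ref{lemma:concentration_expectation_II} hold uniformly in $K_1$ with constants depending only on $(a,\sigma)$, including when the sum runs over $n-1$ rather than $n$ terms. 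Everything else is routine.
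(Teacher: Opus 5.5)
Your proposal is correct and follows essentially the same route as the paper's proof. Both use the Laplace identity with Fubini to reduce to the single term $\int_0^\infty A_5(s/n,1+a,a,K_1)[M(s/n,a,K_1)]^{n-1}\,\mathrm{d}s$, sandwich $A_5$ via $1-z\le e^{-z}\le 1$, invoke Lemma~\ref{lemma:concentration_expectation_II} for the inverse moments (whose constants depend only on $a,\sigma,k$, which settles the uniformity in $K_1$ you flagged), and finish with a second-order Taylor expansion at $K_1=0$ using symmetry and $\EE[K_1^2]=1/d$.
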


\begin{proof}[Proof of Lemma~\ref{lemma:softmax_expectation_V}]
%We repeatedly use the previous notations that $K_1 = \la\btheta_0, \btheta_*\ra$, $K_2 = \|(\Ib_d-\btheta_*\btheta_*^\top)\btheta_0\|_2$, and 
Following a similar procedure in the proof of Lemma~\ref{lemma:softmax_expectation_I} to obtain that,
\begin{align}\label{eq:softmax_expectationV_I}
&\EE\bigg[\frac{\sum_{i=1}^n e^{-K_1(1+a)|\tilde x_{i}|- K_2(1+a)x_{i} }}{\sum_{i=1}^ne^{-a K_1|\tilde x_i|-a K_2x_{i}}}\bigg] = \int_0^\infty \EE\bigg[
\sum_{i=1}^ne^{-K_1(1+a)|\tilde x_{i}|- K_2(1+a)x_{i}}
\exp\!\Big(\!-\!s' \sum_{i=1}^n e^{-aK_1|\tilde x_i|-a K_2x_{i}}\Big)
\bigg] \mathrm{d}s'\notag\\
=&n\EE\bigg[\int_0^\infty \EE\big[e^{-K_1(1+a) |\tilde x_1| - K_2 (1+a)x_1} e^{-s' e^{-a (K_1|\tilde x_1| +  K_2 x_1)}}|K_1\big]\Big(\EE\big[e^{-s' e^{-a (K_1|\tilde x_1| +  K_2 x_1)}}|K_1\big]\Big)^{n-1}\mathrm{d}s'\bigg] \notag\\
% &+ n(n-1)\EE\bigg[\int_0^\infty \EE\big[e^{-K_1(|\tilde x_1|+ a|\tilde x_2|) - K_2(x_1+ax_2)} e^{-s'e^{-a (K_1|\tilde x_1| +  K_2 x_1)} -s' e^{-a (K_1|\tilde x_2| +  K_2 x_2)}}|K_1\big]\notag\\
% &\qquad\qquad\qquad \cdot\Big(\EE\big[e^{-s' e^{-a (K_1|\tilde x_1| +  K_2 x_1)}}|K_1\big]\Big)^{n-2}\mathrm{d}s'\bigg]\notag\\
=&\EE\bigg[\int_0^\infty \EE\big[e^{-K_1(1+a) |\tilde x_1| - K_2 (1+a)x_1} e^{-\frac{s}{n} e^{-a (K_1|\tilde x_1| +  K_2 x_1)}}|K_1\big]\Big(\EE\big[e^{-\frac{s}{n} e^{-a (K_1|\tilde x_1| +  K_2 x_1)}}|K_1\big]\Big)^{n-1}\mathrm{d}s\bigg] \notag\\
% &+ (n-1)\EE\bigg[\int_0^\infty \EE\big[e^{-K_1(|\tilde x_1|+ a|\tilde x_2|) - K_2(x_1+ax_2)} e^{-\frac{s}{n}e^{-a (K_1|\tilde x_1| +  K_2 x_1)} -\frac{s}{n}e^{-a (K_1|\tilde x_2| +  K_2 x_2)}}|K_1\big]\notag\\
% &\qquad\qquad\qquad \cdot\Big(\EE\big[e^{-\frac{s}{n} e^{-a (K_1|\tilde x_1| +  K_2 x_1)}}|K_1\big]\Big)^{n-2}\mathrm{d}s\bigg]\notag\\
=&\EE\bigg[\underbrace{\int_0^\infty A_5(s/n, 1+a, a, K_1)[M(s/n, a, K_1)]^{n-1}\mathrm{d}s}_{(I)}\bigg],
% \notag\\
% &+ (n-1)\EE\bigg[\underbrace{\int_0^\infty A_5(s/n, 1, a, K_1) A_5(s/n, a, a, K_1)[M(s/n, a, K_1)]^{n-2}\mathrm{d}s}_{(II)}\bigg],
\end{align}
Here, $M(\lambda, a, K_1)$ share the same definitions as in the proof in Lemma~\ref{lemma:softmax_expectation_I}, and the terms $A_5(\lambda, \alpha, a, K_1)$ is defined as:
\begin{align*}
    &A_5(\lambda, \alpha, a, K_1) = \EE\big[e^{-\alpha K_1 |\tilde x_1| - \alpha K_2 x_1}  e^{-\lambda e^{-a (K_1|\tilde x_1| +  K_2 x_1)}}|K_1\big].
    % &A_4(\lambda, \alpha, a, K_1) = \EE\big[e^{-\alpha K_1 |\tilde x_1| - \alpha K_2 x_1} |\tilde x_1| e^{-\lambda e^{-a (K_1|\tilde x_1| +  K_2 x_1)}}|K_1\big].
\end{align*}
We use the fact $1-z\leq e^{-z}\leq 1$ to derive the upper and lower bounds for $A_5(\lambda, \alpha, a, K_1)$ as
\begin{align*}
&A_5(\lambda, \alpha, a, K_1)\geq  \EE\big[e^{-\alpha K_1 |\tilde x_1| - \alpha K_2 x_1}|K_1\big] - \lambda \EE\big[e^{-K_1(\alpha + a) |\tilde x_1| - K_2 (\alpha + a)x_1} |K_1\big]; \notag\\
    &A_5(\lambda, \alpha, a, K_1)\leq  \EE\big[e^{-\alpha K_1 |\tilde x_1| - \alpha K_2 x_1}|K_1\big].
\end{align*}
We can establish the upper and lower bounds for the term $(I)$ based on the inequalities above as,
\begin{align}\label{eq:softmax_expectation_V_lowerI}
    (I)\geq& \int_0^\infty \EE\big[e^{-K_1(1+a) |\tilde x_1| - K_2 (1+a)x_1} |K_1\big][M(s/n, a, K_1)]^{n-1}\mathrm{d}s \notag\\
    &- \frac{1}{n}\int_0^\infty s \EE\big[e^{-K_1(1+2a) |\tilde x_1| - K_2 (1+2a)x_1}  |K_1\big][M(s/n, a, K_1)]^{n-1}\mathrm{d}s\notag\\
     =& 2\Phi(\!-\!K_1(a+1)\sigma)e^{\frac{(a+1)^2\sigma^2}{2}}\int_{0}^\infty [M(s/n, a, K_1)]^{n-1}\mathrm{d}s\notag\\
     & -\frac{2\Phi(\!-\!K_1(2a+1)\sigma)}{n}e^{\frac{(2a+1)^2\sigma^2}{2}}\int_{0}^\infty s[M(s/n, a, K_1)]^{n-1}\mathrm{d}s\notag\\
    \geq & 2\Phi(\!-\!(a+1)\sigma)e^{\frac{(a+1)^2\sigma^2}{2}}\EE\bigg[\frac{n}{\sum_{i=2}^n e^{-a (K_1|\tilde x_i| +  K_2 x_i)}}\Big| K_1\bigg] -\frac{2e^{\frac{(2a+1)^2\sigma^2}{2}}}{n}\EE\bigg[\frac{n^2}{\big(\sum_{i=2}^n e^{-a (K_1|\tilde x_i| +  K_2 x_i)}\big)^2}\Big| K_1\bigg]\notag\\
     \geq & \frac{\Phi(\!-\!K_1(a+1)\sigma)}{\Phi(\!-\!K_1 a\sigma)}e^{\frac{2a+1}{2}\sigma^2}
     - \frac{c_1(a, \sigma)}{n}.
\end{align}
The first equality is true because for a normal random variable $z\sim\mathcal{N}(0, \sigma^2)$ and any scalar $c$, we have $\mathbb{E}[e^{-c|z|}] = 2e^{c^2\sigma^2/2}\Phi(-c\sigma)$. The second equality is obtained by applying Fubini's theorem to exchange the order of integration, and the fact that $|K_1|, |K_2|\leq 1$. The last inequality is derived by applying Lemma~\ref{lemma:concentration_expectation_II}. Following a similar (but simpler) calculation, we can also get the upper bound for $(I)$ as 
\begin{align}\label{eq:softmax_expectation_V_upperI}
    (I) \leq& \int_0^\infty \EE\big[e^{-K_1(1+a) |\tilde x_1| - K_2 (1+a)x_1} |K_1\big][M(s/n, a, K_1)]^{n-1}\mathrm{d}s  \notag\\
    = & 2\Phi(\!-\!K_1(a+1)\sigma)e^{\frac{(a+1)^2\sigma^2}{2}}\EE\bigg[\frac{n}{\sum_{i=2}^n e^{-a (K_1|\tilde x_i| +  K_2 x_i)}}\Big| K_1\bigg] 
    \leq \frac{\Phi(\!-\!K_1(a+1)\sigma)}{\Phi(\!-\!K_1 a\sigma)}e^{\frac{2a+1}{2}\sigma^2}
     + \frac{c_1(a, \sigma)}{n}.
\end{align}
In addition, by utilizing the Taylor's expansion regarding the function $\EE\big[\frac{\Phi(\!-\!K_1(a+1)\sigma)}{\Phi(\!-\!K_1 a\sigma)}\big]$ w.r.t. $K_1$ at $K_1=0$, we can derive that $\big|\EE\big[\frac{\Phi(\!-\!K_1(a+1)\sigma)}{\Phi(\!-\!K_1 a\sigma)}\big] - 1\big| \leq \frac{c_2(a, \sigma)}{d}$.
\end{proof}

\begin{lemma}\label{lemma:softmax_expectation_VI}
Let $x_1, x_2, \ldots, x_n\sim \cN(0,\sigma^2)$ be $n$ i.i.d Gaussian random variables, and $\tilde x_1, \tilde x_2, \ldots, \tilde x_n\sim \cN(0,\sigma^2)$ be another $n$ i.i.d Gaussian random variables. In addition, $a$ is a positive scalar, and $\btheta_*, \btheta_0\in\RR^d$ are two independent random vectors following uniform $d-$dimensional sphere distribution, and we denote $K_1 = \la\btheta_0, \btheta_*\ra$, $K_2 = \|(\Ib_d-\btheta_*\btheta_*^\top)\btheta_0\|_2$. Then we have 
\begin{align*}
 \Bigg|\EE\bigg[\frac{\sum_{i=1}^ne^{-2a\la\btheta_0, \btheta_*\ra|\tilde x_{i}|- 2a\|(\Ib_d-\btheta_*\btheta_*^\top)\btheta_0\|_2 x_i}}{\big(\sum_{i=1}^ne^{-a\la\btheta_0, \btheta_*\ra|\tilde x_i|-a \|(\Ib_d-\btheta_*\btheta_*^\top)\btheta_0\|_2x_{i}}\big)^2}\bigg] -\frac{B_{a, 3}e^{a^2\sigma^2}}{n}\Bigg|\leq  \frac{f_9(a, \sigma)}{n^2}.%+ \frac{f_{10}(a, \sigma)}{d},
\end{align*}
where $B_{a, 3} = \EE\big[\frac{\Phi(\!-\!2aK_1\sigma)}{2\Phi^2(\!-\!aK_1 \sigma)}\big]$.
It satisfies that $\big| \EE\big[\frac{\Phi(\!-\!2aK_1\sigma)}{2\Phi^2(\!-\!aK_1 \sigma)}\big] - 1\big| \leq \frac{f_{10}(a, \sigma)}{d}$, where $f_9(a, \sigma), f_{10}(a, \sigma)$ are both continuous functions of $a$ and $\sigma$, and irrelevant with $n, d$.  
\end{lemma}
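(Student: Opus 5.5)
The argument follows the same template as Lemma~\ref{lemma:softmax_expectation_V}, except that the squared denominator is handled with the identity $\frac{1}{S^2}=\int_0^\infty s e^{-sS}\,\mathrm{d}s$, as in Lemma~\ref{lemma:softmax_expectation_II}. Write $u_i = e^{-a(K_1|\tilde x_i| + K_2 x_i)}$, so the quantity inside the expectation is $\sum_i u_i^2/(\sum_i u_i)^2$. Insert the identity, condition on $K_1$, and apply Fubini together with exchangeability over $i$. Then substitute $s'=s/n$. The expectation becomes
\begin{align*}
\frac{1}{n}\,\EE\bigg[\int_0^\infty s\,A_5(s/n,2a,a,K_1)\,[M(s/n,a,K_1)]^{n-1}\,\mathrm{d}s\bigg],
\end{align*}
where $A_5$ and $M$ are defined as in the proofs of Lemmas~\ref{lemma:softmax_expectation_I} and~\ref{lemma:softmax_expectation_V}. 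The explicit factor $1/n$ already gives the claimed order.

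Next, use $1-z\le e^{-z}\le 1$ to sandwich $A_5(\lambda,2a,a,K_1)$. It lies between $\EE[u_1^2\mid K_1]$ and $\EE[u_1^2\mid K_1]-\lambda\,\EE[u_1^3\mid K_1]$. Since $\tilde x_1$ and $x_1$ are independent given $K_1$, these moments factor into the explicit Gaussian formulas
\begin{align*}
\EE[u_1^2\mid K_1]=2\Phi(-2aK_1\sigma)\,e^{2a^2\sigma^2},\qquad \EE[u_1^3\mid K_1]=2\Phi(-3aK_1\sigma)\,e^{9a^2\sigma^2/2}.
\end{align*}
This uses $\EE[e^{-c|z|}]=2e^{c^2\sigma^2/2}\Phi(-c\sigma)$, $\EE[e^{-cz}]=e^{c^2\sigma^2/2}$, and $K_1^2+K_2^2=1$. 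Integrating back over $s$ turns $\int_0^\infty s^k M(s/n)^{n-1}\,\mathrm{d}s$ into $k!\,\EE\big[n^{k+1}/(\sum_{i=2}^n u_i)^{k+1}\mid K_1\big]$.

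The main step is Lemma~\ref{lemma:concentration_expectation_II}, the concentration of inverse powers of the empirical mean. It gives
\begin{align*}
\EE\big[n^2/(\textstyle\sum_{i\ge2}u_i)^2 \mid K_1\big]=(\EE[u_1\mid K_1])^{-2}+\cO(1/n),\qquad \EE[u_1\mid K_1]=2\Phi(-aK_1\sigma)\,e^{a^2\sigma^2/2}.
\end{align*}
The third inverse moment is $\cO(1)$ and is multiplied by $1/n$. Combining these pieces, the leading term is $\frac{1}{n}\,\frac{\Phi(-2aK_1\sigma)}{2\Phi^2(-aK_1\sigma)}\,e^{a^2\sigma^2}$, and all corrections are $\cO(1/n^2)$ uniformly in $K_1$. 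The uniformity holds because $\Phi(-aK_1\sigma)\ge\Phi(-a\sigma)>0$ for $|K_1|\le1$, so every constant can be taken to be a continuous function $f_9(a,\sigma)$. Taking the expectation over $K_1$ yields the first claim with $B_{a,3}$.

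For the second claim, define $g(k)=\Phi(-2ak\sigma)/(2\Phi^2(-ak\sigma))$, which is smooth on $[-1,1]$ with $g(0)=\frac{1/2}{2\cdot 1/4}=1$. Expand $g$ to second order at $0$ with a Lagrange remainder. The linear term vanishes because $K_1$ has a symmetric distribution (Lemma~\ref{lemma:unit_rv_I}), and $\EE[K_1^2]=1/d$. Hence $|B_{a,3}-1|\le \sup_{|k|\le1}|g''(k)|/(2d)=:f_{10}(a,\sigma)/d$.

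The main obstacle is applying Lemma~\ref{lemma:concentration_expectation_II} with the sum over $n-1$ rather than $n$ terms. That mismatch costs only a relative factor $(n/(n-1))^2=1+\cO(1/n)$, which is absorbed into $f_9$.
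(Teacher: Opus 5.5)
Your proposal is correct and follows essentially the same route as the paper's proof. Both use the identity $1/S^2=\int_0^\infty s e^{-sS}\,\mathrm{d}s$ with Fubini and exchangeability, sandwich $A_5(\lambda,2a,a,K_1)$ via $1-z\le e^{-z}\le 1$, apply Lemma~\ref{lemma:concentration_expectation_II} to the inverse moments, and Taylor-expand $g$ to second order at $K_1=0$ using $\EE[K_1]=0$ and $\EE[K_1^2]=1/d$.
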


\begin{proof}[Proof of Lemma~\ref{lemma:softmax_expectation_VI}]
%We repeatedly use the previous notations that $K_1 = \la\btheta_0, \btheta_*\ra$, and $K_2 = \|(\Ib_d-\btheta_*\btheta_*^\top)\btheta_0\|_2$. 
% Similar to the proof of Lemma~\ref{lemma:softmax_expectation_II}, we leverage the identity $\frac{1}{S^2} = \int_0^\infty se^{-sS} ds$ to obtain that
% \begin{align*}
%     \frac{1}{\big(\sum_{i=1}^ne^{-aK_1|\tilde x_i|-a K_2x_{i}}\big)^2}
% = \int_0^\infty s\exp\!\bigg(\!-\!s\!\sum_{i=1}^n e^{-aK_1|\tilde x_i|-a K_2x_{i}}\bigg) \mathrm{d}s.
% \end{align*}
Following a similar procedure in the proof of Lemma~\ref{lemma:softmax_expectation_II}, we can obtain that
\begin{align}\label{eq:softmax_expectationVI_I}
&\EE\bigg[\frac{\sum_{i=1}^ne^{-2aK_1|\tilde x_{i}|- 2aK_2 x_i}}{\big(\sum_{i=1}^ne^{-aK_1|\tilde x_i|-a K_2x_{i}}\big)^2}\bigg] = \int_0^\infty s'\EE\bigg[
\sum_{i=1}^ne^{-2aK_1|\tilde x_{i}|-2a K_2x_{i} }
\exp\!\Big(\!-\!s' \sum_{i=1}^n e^{-aK_1|\tilde x_i|-a K_2x_{i}}\Big)
\bigg] \mathrm{d}s'\notag\\
=&n\EE\bigg[\int_0^\infty s'\EE\big[e^{-2aK_1 |\tilde x_1| - 2aK_2 x_1} e^{-s' e^{-a (K_1|\tilde x_1| +  K_2 x_1)}}|K_1\big]\Big(\EE\big[e^{-s' e^{-a (K_1|\tilde x_1| +  K_2 x_1)}}|K_1\big]\Big)^{n-1}\mathrm{d}s'\bigg] \notag\\
=&\frac{1}{n}\EE\bigg[\int_0^\infty s\EE\big[e^{-2aK_1 |\tilde x_1| -2a K_2 x_1} e^{-\frac{s}{n} e^{-a (K_1|\tilde x_1| +  K_2 x_1)}}|K_1\big]\Big(\EE\big[e^{-\frac{s}{n} e^{-a (K_1|\tilde x_1| +  K_2 x_1)}}|K_1\big]\Big)^{n-1}\mathrm{d}s\bigg] \notag\\
=&\frac{1}{n}\EE\bigg[\underbrace{\int_0^\infty s A_5(s/n, 2a, a, K_1)[M(s/n, a, K_1)]^{n-1}\mathrm{d}s}_{(I)}\bigg].
\end{align}
Here, $A_5(\lambda, \alpha,a,  K_1)$ share the same definitions as in the proof in Lemma~\ref{lemma:softmax_expectation_V}. Through similar calculation procedures in
 the proof of Lemma~\ref{lemma:softmax_expectation_V}, we can calculate the upper and lower bounds for the term $(I)$ as
 \begin{align}\label{eq:softmax_expectation_VI_lowerI}
    (I)\geq& \int_0^\infty s\EE\big[e^{-2aK_1 |\tilde x_1| - 2aK_2 x_1} |K_1\big][M(s/n, a, K_1)]^{n-1}\mathrm{d}s \notag\\
    &- \frac{1}{n}\int_0^\infty s^2 \EE\big[e^{-3aK_1 |\tilde x_1| - 3aK_2 x_1}  |K_1\big][M(s/n, a, K_1)]^{n-1}\mathrm{d}s\notag\\
     =& 2\Phi(\!-\!2aK_1\sigma)e^{2a^2\sigma^2}\int_{0}^\infty s[M(s/n, a, K_1)]^{n-1}\mathrm{d}s -\frac{2\Phi(\!-\!3aK_1\sigma)}{n}e^{2a^2\sigma^2}\int_{0}^\infty s^2[M(s/n, a, K_1)]^{n-1}\mathrm{d}s\notag\\
    \geq & 2\Phi(\!-\!2aK_1\sigma)e^{2a^2\sigma^2}\EE\bigg[\frac{n^2}{\sum_{i=2}^n \big(e^{-a (K_1|\tilde x_i| +  K_2 x_i)}\big)^2}\Big| K_1\bigg] -\frac{4e^{2a^2\sigma^2}}{n}\EE\bigg[\frac{n^3}{\big(\sum_{i=2}^n e^{-a (K_1|\tilde x_i| +  K_2 x_i)}\big)^3}\Big| K_1\bigg]\notag\\
     \geq & \frac{\Phi(\!-\!2aK_1\sigma)}{2\Phi^2(\!-\!aK_1 \sigma)}e^{a^2\sigma^2}
     - \frac{c_1(a, \sigma)}{n}.
\end{align}
The first equality is true because for a normal random variable $z\sim\mathcal{N}(0, \sigma^2)$ and any scalar $c$, we have $\mathbb{E}[e^{-c|z|}] = 2e^{c^2\sigma^2/2}\Phi(-c\sigma)$. The second equality is obtained by applying Fubini's theorem to exchange the order of integration, and the fact that $|K_1|, |K_2|\leq 1$. The last inequality is derived by applying Lemma~\ref{lemma:concentration_expectation_II}. Following a similar (but simpler) calculation, we can also get the upper bound for $(I)$ as 
\begin{align}\label{eq:softmax_expectation_VI_upperI}
    (I) \leq& \int_0^\infty s\EE\big[e^{-2aK_1 |\tilde x_1| - 2aK_2 x_1} |K_1\big][M(s/n, a, K_1)]^{n-1}\mathrm{d}s  \notag\\
    = & 2\Phi(\!-\!2aK_1\sigma)e^{2a^2\sigma^2}\EE\bigg[\frac{n^2}{\sum_{i=2}^n \big(e^{-a (K_1|\tilde x_i| +  K_2 x_i)}\big)^2}\Big| K_1\bigg] 
    \leq \frac{\Phi(\!-\!2aK_1\sigma)}{2\Phi^2(\!-\!aK_1 \sigma)}e^{a^2\sigma^2}
     + \frac{c_1(a, \sigma)}{n}.
\end{align}
In addition, by utilizing the Taylor's expansion regarding the function $\EE\big[\frac{\Phi(\!-\!2aK_1\sigma)}{2\Phi^2(\!-\!aK_1 \sigma)}\big]$ w.r.t. $K_1$ at $K_1=0$, we can derive that $\big|\EE\big[\frac{\Phi(\!-\!2aK_1\sigma)}{2\Phi^2(\!-\!aK_1 \sigma)}\big] - 1\big| \leq \frac{c_2(a, \sigma)}{d}$.
\end{proof}

\begin{lemma}\label{lemma:softmax_expectation_VII}
Let 
%$x_{1, 1}, x_{2, 1}, \ldots, x_{n, 1}, x_{1, 2}, x_{2, 2}, \ldots, x_{n, 2}, x_{1, 3}, x_{2, 1}, \ldots, x_{n, 1}, x_{1, 1}, x_{2, 1}, \ldots, x_{n, 1} \sim \cN(0,1)$ 
$x_1, x_2, \ldots, x_n\sim \cN(0, \sigma^2)$ be $n$ i.i.d Gaussian random variables, and $\tilde x_1, \tilde x_2, \ldots, \tilde x_n\sim \cN(0,\sigma^2)$ be another $n$ i.i.d Gaussian random variables. In addition, $a$ is a positive scalar, and $\btheta_*, \btheta_0\in\RR^d$ are two independent random vectors following uniform $d-$dimensional sphere distribution, and we denote $K_1 = \la\btheta_0, \btheta_*\ra$, $K_2 = \|(\Ib_d-\btheta_*\btheta_*^\top)\btheta_0\|_2$. Then we have 
\begin{align*}
 \Bigg|\EE\bigg[\|(\Ib_d-\btheta_*\btheta_*^\top)\btheta_0\|_2\frac{\sum_{i_1=1}^n\sum_{i_2=1}^ne^{-\la\btheta_0, \btheta_*\ra(|\tilde x_{i_1}|+a|\tilde x_{i_2}|)- \|(\Ib_d-\btheta_*\btheta_*^\top)\btheta_0\|_2(x_{i_1} + ax_{i_2})}x_{i_1}x_{i_2}^2}{\sum_{i=1}^ne^{-a\la\btheta_0, \btheta_*\ra|\tilde x_i|-a \|(\Ib_d-\btheta_*\btheta_*^\top)\btheta_0\|_2x_{i}}}\bigg] -nB_{a, 4}\Bigg|\leq  f_{11}(a, \sigma).
\end{align*}
Here, $B_{a, 4} = \EE[-2\sigma^4e^{\frac{\sigma^2}{2}}K_2^2(a^2\sigma^2K_2^2+1)\Phi(\!-\!K_1\sigma) ]$, satisfying that $|B_{a, 4} + \sigma^4e^{\frac{\sigma^2}{2}}(a^2\sigma^2+1)| \leq \frac{f_{12}(a, \sigma)}{d}$, and $f_{11}(a, \sigma)$, $f_{12}(a, \sigma)$ are both analytic functions of $a$ and $\sigma$ while irrelevant with $n$ and $d$.
\end{lemma}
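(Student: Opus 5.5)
We follow the template of Lemmas~\ref{lemma:softmax_expectation_I} and~\ref{lemma:softmax_expectation_III}. The factor $K_2$ in front is measurable with respect to $K_1$, since $K_2=\sqrt{1-K_1^2}$, so we condition on $K_1$ throughout. We apply the Laplace identity $\frac{1}{S}=\int_0^\infty e^{-sS}\,\mathrm{d}s$ and Fubini. Exchangeability of the pairs $(x_i,\tilde x_i)$ then splits the double sum into a diagonal part ($i_1=i_2$) and an off-diagonal part ($n(n-1)$ terms). After the substitution $s'=s/n$ this gives
\begin{align*}
\EE\Big[K_2\int_0^\infty A_6(s/n,1+a,a,K_1)M^{n-1}\,\mathrm{d}s\Big]+(n-1)\EE\Big[K_2\int_0^\infty A_2(s/n,1,a,K_1)\,A_7(s/n,a,a,K_1)\,M^{n-2}\,\mathrm{d}s\Big].
\end{align*}
Here $A_6$ is defined like $A_1$ but with $x_1^3$ in place of $x_1^2$, and $A_7$ is defined like $A_2$ but with $x_1^2$ in place of $x_1$.

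For the diagonal term we use $1-z\le e^{-z}\le 1$ to sandwich $A_6$. We then compute $\EE[|z|^3 e^{-cz}]$-type Gaussian moments explicitly; they are bounded by continuous functions of $a,\sigma$ since $|K_1|,|K_2|\le1$. Lemma~\ref{lemma:concentration_expectation_II} bounds $\EE[n/\sum_{i\ge2}e^{-a(\cdot)}]$ and its higher powers. Together these show the diagonal term is $\cO(1)$, so it is absorbed into $f_{11}(a,\sigma)$.

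For the off-diagonal term, we bound the $\lambda$-derivatives of $A_2$ and $A_7$ as in Lemma~\ref{lemma:softmax_expectation_I}, which makes them Lipschitz. This gives $|A_2A_7(\lambda)-A_2A_7(0)|\le c\lambda+c\lambda^2$. The leading values are $A_2(0,1,a,K_1)=-2\sigma^2K_2e^{\sigma^2/2}\Phi(-K_1\sigma)$ and
\[
A_7(0,a,a,K_1)=\EE[x^2e^{-aK_2x}]\,\EE[e^{-aK_1|\tilde x|}]=\sigma^2(a^2\sigma^2K_2^2+1)e^{a^2\sigma^2K_2^2/2}\cdot 2e^{a^2\sigma^2K_1^2/2}\Phi(-aK_1\sigma),
\]
whose exponential factors combine to $e^{a^2\sigma^2/2}$. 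Lemma~\ref{lemma:concentration_expectation_II} gives $\int_0^\infty M^{n-2}\,\mathrm{d}s = \EE[n/\sum_{i\ge3}\cdot]=\frac{1}{2e^{a^2\sigma^2/2}\Phi(-aK_1\sigma)}+\cO(1/n)$, with the $s$ and $s^2$ correction integrals contributing $\cO(1/n)$. The $\Phi(-aK_1\sigma)$ and $e^{a^2\sigma^2/2}$ factors cancel, so, including the prefactor $K_2$, the off-diagonal integrand is
\[
-2\sigma^4e^{\sigma^2/2}K_2^2(a^2\sigma^2K_2^2+1)\Phi(-K_1\sigma)+\cO(1/n).
\]
Multiplying by $(n-1)$ and taking the expectation yields $nB_{a,4}+\cO(1)$, which proves the first claim.

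For the bound on $B_{a,4}$, set $G(k)=-2\sigma^4e^{\sigma^2/2}(1-k^2)(a^2\sigma^2(1-k^2)+1)\Phi(-k\sigma)$. This is smooth on $[-1,1]$ with $G(0)=-\sigma^4e^{\sigma^2/2}(a^2\sigma^2+1)$. The odd part of $G$ has zero expectation because $K_1$ is symmetric, and the even part deviates from $G(0)$ by at most $\sup|G''|\,K_1^2/2$. Since $\EE[K_1^2]=1/d$, this gives $f_{12}$.

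The main obstacle is controlling the Lipschitz errors uniformly: the cubic moments produce slightly larger constants, and we must check that the $\lambda^2=s^2/n^2$ term, integrated against $M^{n-2}$, is still $\cO(1/n)$ by Lemma~\ref{lemma:concentration_expectation_II}.
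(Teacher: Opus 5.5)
Your proposal is correct and follows essentially the same route as the paper. You use the Laplace identity with a diagonal/off-diagonal split into $A_6(s/n,1+a,a,K_1)$ and $A_2(s/n,1,a,K_1)\,A_1(s/n,a,a,K_1)$; your $A_7$ is exactly the paper's $A_1$. You then use Lipschitz control in $\lambda$ with Lemma~\ref{lemma:concentration_expectation_II} to extract the leading term $-2\sigma^4e^{\sigma^2/2}K_2^2(a^2\sigma^2K_2^2+1)\Phi(-K_1\sigma)$, and a Taylor expansion in $K_1$ with $\EE[K_1]=0$, $\EE[K_1^2]=1/d$ for $B_{a,4}$.

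One small correction concerns the diagonal term. Because $x_1^3$ changes sign, the one-sided sandwich $1-z\le e^{-z}\le 1$ does not directly bound $A_6$. Use $|e^{-z}-1|\le z$ instead, which is the paper's Lipschitz bound on $A_6$. Alternatively, the crude bound $|A_6(\lambda,1+a,a,K_1)|\le \EE[|x_1|^3e^{-(1+a)(K_1|\tilde x_1|+K_2x_1)}]$ already makes the diagonal term $\cO(1)$.
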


\begin{proof}[Proof of Lemma~\ref{lemma:softmax_expectation_VII}]
We repeatedly use the previous notations that $K_1 = \la\btheta_0, \btheta_*\ra$, and $K_2 = \|(\Ib_d-\btheta_*\btheta_*^\top)\btheta_0\|_2$. 
Following a similar procedure in the proof of Lemma~\ref{lemma:softmax_expectation_I}, we can obtain that
\begin{align}\label{eq:softmax_expectationVII_I}
&\EE\bigg[K_2\frac{\sum_{i_1=1}^n\sum_{i_2=1}^ne^{-K_1(|\tilde x_{i_1}|+a|\tilde x_{i_2}|)- K_2(x_{i_1} + ax_{i_2})}x_{i_1}x_{i_2}^2}{\sum_{i=1}^ne^{-aK_1|\tilde x_i|-a K_2x_{i}}}\bigg] \notag\\
=& \int_0^\infty \EE\bigg[K_2
\sum_{i_1,i_2=1}^ne^{-K_1(|\tilde x_{i_1}|+a|\tilde x_{i_2}|)- K_2(x_{i_1} + ax_{i_2})}x_{i_1}x_{i_2}^2
\exp\!\Big(\!-\!s' \sum_{i=1}^n e^{-aK_1|\tilde x_i|-a K_2x_{i}}\Big)
\bigg] \mathrm{d}s'\notag\\
=&n\EE\bigg[K_2\int_0^\infty \EE\big[e^{-K_1(1+a) |\tilde x_1| - K_2 (1+a)x_1} x_1^3 e^{-s' e^{-a (K_1|\tilde x_1| +  K_2 x_1)}}|K_1\big]\Big(\EE\big[e^{-s' e^{-a (K_1|\tilde x_1| +  K_2 x_1)}}|K_1\big]\Big)^{n-1}\mathrm{d}s'\bigg] \notag\\
&+ n(n-1)\EE\bigg[K_2\int_0^\infty \EE\big[e^{-K_1(|\tilde x_1|+ a|\tilde x_2|) - K_2(x_1+ax_2)} x_1x_2^2e^{-s'e^{-a (K_1|\tilde x_1| +  K_2 x_1)} -s' e^{-a (K_1|\tilde x_2| +  K_2 x_2)}}|K_1\big]\notag\\
&\qquad\qquad\qquad \cdot\Big(\EE\big[e^{-s' e^{-a (K_1|\tilde x_1| +  K_2 x_1)}}|K_1\big]\Big)^{n-2}\mathrm{d}s'\bigg]\notag\\
=&\EE\bigg[K_2\int_0^\infty \EE\big[e^{-K_1(1+a) |\tilde x_1| - K_2 (1+a)x_1} x_1^3 e^{-\frac{s}{n} e^{-a (K_1|\tilde x_1| +  K_2 x_1)}}|K_1\big]\Big(\EE\big[e^{-\frac{s}{n} e^{-a (K_1|\tilde x_1| +  K_2 x_1)}}|K_1\big]\Big)^{n-1}\mathrm{d}s\bigg] \notag\\
&+ (n-1)\EE\bigg[K_2\int_0^\infty \EE\big[e^{-K_1(|\tilde x_1|+ a|\tilde x_2|) - K_2(x_1+ax_2)} x_1x_2^2e^{-\frac{s}{n}e^{-a (K_1|\tilde x_1| +  K_2 x_1)} -\frac{s}{n}e^{-a (K_1|\tilde x_2| +  K_2 x_2)}}|K_1\big]\notag\\
&\qquad\qquad\qquad \cdot\Big(\EE\big[e^{-\frac{s}{n} e^{-a (K_1|\tilde x_1| +  K_2 x_1)}}|K_1\big]\Big)^{n-2}\mathrm{d}s\bigg]\notag\\
=&\EE\bigg[K_2\underbrace{\int_0^\infty A_6(s/n, 1+a, a, K_1)[M(s/n, a, K_1)]^{n-1}\mathrm{d}s}_{(I)}\bigg]  \notag\\
&+ (n-1)\EE\bigg[K_2\underbrace{\int_0^\infty A_2(s/n, 1, a, K_1)A_1(s/n, a, a, K_1)[M(s/n, a, K_1)]^{n-2}\mathrm{d}s}_{(II)}\bigg].
\end{align}
Here, $A_1(\lambda, \alpha, a, K_1)$, $A_2(\lambda, \alpha, a, K_1)$ and $M(\lambda, a, K_1)$ share the same definitions as in the proof in Lemma~\ref{lemma:softmax_expectation_I}. In addition, the term $A_6(\lambda, \alpha, a, K_1)$ is defined as:
\begin{align*}
    A_6(\lambda, \alpha, a, K_1) = \EE\big[e^{-\alpha K_1 |\tilde x_1| - \alpha K_2 x_1} x_1^3 e^{-\lambda e^{-a (K_1|\tilde x_1| +  K_2 x_1)}}|K_1\big].
\end{align*}
And we can further calculate the derivative of $A_6(\lambda, \alpha, a, K_1)$ w.r.t. $\lambda$ as 
\begin{align*}
    \bigg|\frac{\mathrm{d}A_6(\lambda, \alpha, a, K_1)}{\mathrm{d}\lambda}\bigg| &= \Big|-\EE\big[e^{-(\alpha+a)(K_1|\tilde x_1| + K_2 x_1)} x_1^3 e^{-\lambda e^{-a (K_1|\tilde x_1| +  K_2 x_1)}}|K_1\big]\Big|\\
    &\leq \mathbb{E}\big[|x_1|^3 e^{-(\alpha+a)(K_1|\tilde x_1| + K_2 x_1)}\big]\leq 2\sqrt{15}\sigma^3 e^{(a+\alpha)^2\sigma^2},
\end{align*}
which implies that $A_6(\lambda, \alpha, a, K_1)$ is Lipschitz continuous w.r.t. $\lambda$, and $|A_6(s/n, 1+a, a, K_1) - A_6(0, 1+a, a, K_1)| \leq 2\sqrt{15}\sigma^3 e^{(a+\alpha)^2}s/n$. Consequently, we can establish the upper and lower bounds for the term $(I)$ as
\begin{align}\label{eq:softmax_expectation_VII_lowerI}
    (I)\geq& A_6(0, 1+a, a, K_1)\int_0^\infty [M(s/n, a, K_1)]^{n-1}\mathrm{d}s - \frac{2\sqrt{15}\sigma^3e^{(2a+1)^2\sigma^2}}{n}\int_0^\infty s [M(s/n, a, K_1)]^{n-1}\mathrm{d}s\notag\\
     =& -2K_2(1+a)\sigma^4\big(K_2^2(1+a)^2\sigma^2+3\big)e^{\frac{(a+1)^2\sigma^2}{2}}\Phi(\!-\!K_1(a+1)\sigma)\EE\bigg[\frac{n}{\sum_{i=2}^n e^{-a (K_1|\tilde x_i| +  K_2 x_i)}}\Big| K_1\bigg] \notag\\
     &-\frac{2\sqrt{15}\sigma^3e^{(2a+1)^2\sigma^2}}{n}\EE\bigg[\frac{n^2}{\big(\sum_{i=2}^n e^{-a (K_1|\tilde x_i| +  K_2 x_i)}\big)^2}\Big| K_1\bigg]\notag\\
     \geq &-K_2(1+a)\sigma^4\big(K_2^2(1+a)^2\sigma^2+3\big)e^{\frac{2a+1}{2}\sigma^2}\frac{\Phi(\!-\!K_1(a+1)\sigma)}{\Phi(\!-\!K_1a\sigma)}
     - 1.
     %\frac{c_1(a)}{n}\geq -\frac{(a^3+3a^2+6a+4)e^{\frac{2a+1}{2}}}{\Phi(\!-a)}
     %- \frac{c_1(a)}{n}.
\end{align}
The first equality is true because for a normal random variable $z\sim\mathcal{N}(0, \sigma^2)$ and any scalar $c$, we have $\mathbb{E}[z^3e^{-cz}] = \sigma^3(c^3\sigma^3+3c\sigma)e^{c^2\sigma^2/2}$, and $\mathbb{E}[e^{-c|z|}]=2e^{c^2\sigma^2/2}\Phi(-c\sigma)$, and we apply Fubini's theorem to exchange the order of integration. The penultimate inequality is derived by applying Lemma~\ref{lemma:concentration_expectation_II}.
%, where $c_1(a)$ is a constant solely depending on $a$. 
Following a similar calculation, we can also get the upper bound for $(I)$ as 
\begin{align}\label{eq:softmax_expectation_VII_upperI}
(I)\leq& A_6(0, 1+a, a, K_1)\int_0^\infty [M(s/n, a, K_1)]^{n-1}\mathrm{d}s + \frac{2\sqrt{15}\sigma^3e^{(2a+1)^2\sigma^2}}{n}\int_0^\infty s [M(s/n, a, K_1)]^{n-1}\mathrm{d}s\notag\\
     \leq &-K_2(1+a)\sigma^4\big(K_2^2(1+a)^2\sigma^2+3\big)e^{\frac{2a+1}{2}\sigma^2}\frac{\Phi(\!-\!K_1(a+1)\sigma)}{\Phi(\!-\!K_1a\sigma)}
     + 1.
% \leq & A_5(0, 1+a, a, K_1)\int_0^\infty [M(s/n, a, K_1)]^{n-1}\mathrm{d}s + \frac{2\sqrt{15}e^{(2a+1)^2}}{n}\int_0^\infty s [M(s/n, a, K_1)]^{n-1}\mathrm{d}s\notag\\
%      =& -2K_2(1+a)\big(K_2^2(1+a)^2+3\big)e^{\frac{(a+1)^2}{2}}\Phi(\!-\!K_1(a+1))\EE\bigg[\frac{n}{\sum_{i=2}^n e^{-a (K_1|\tilde x_i| +  K_2 x_i)}}\Big| K_1\bigg] \notag\\
%      &+ \frac{2\sqrt{15}e^{(a+\alpha)^2}}{n}\EE\bigg[\frac{n^2}{\big(\sum_{i=2}^n e^{-a (K_1|\tilde x_i| +  K_2 x_i)}\big)^2}\Big| K_1\bigg]\notag\\
%      \leq &-K_2(1+a)\big(K_2^2(1+a)^2+3\big)e^{\frac{2a+1}{2}}\frac{\Phi(\!-\!K_1(a+1))}{\Phi(\!-\!K_1a)}
%      +\frac{c_1(a)}{n}\leq \frac{c_1(a)}{n}.
\end{align}
To calculate the upper and lower bounds for the term $(II)$,  we also first calculate the derivatives of $A_1(\lambda, \alpha, a, K_1)$ w.r.t. $\lambda$ as
\begin{align*}
    %|A_2'(\lambda, a, K_1)| = 
    \bigg|\frac{\mathrm{d}A_1(\lambda, \alpha, a, K_1)}{\mathrm{d}\lambda}\bigg| &= \Big|-\EE\big[e^{-(\alpha+a)(K_1|\tilde x_1| + K_2 x_1)} x_1^2 e^{-\lambda e^{-a (K_1|\tilde x_1| +  K_2 x_1)}}|K_1\big]\Big|\\
    &\leq \mathbb{E}\big[|x_1|^2 e^{-(\alpha+a)(K_1|\tilde x_1| + K_2 x_1)}\big]\leq 2\sqrt{3}\sigma^2e^{(a+\alpha)^2\sigma^2},
\end{align*}
which implies that $A_1(\lambda, \alpha, a, K_1)$ is Lipschitz  continuous w.r.t. $\lambda$. Combined with the Lipschitz  continuity of $A_2(\lambda, \alpha, a, K_1)$ established in the proof of Lemma~\ref{lemma:softmax_expectation_I}, we have
\begin{align*}
&|A_2(\lambda, 1, a, K_1) A_1(\lambda, a, a, K_1) - A_2(0, 1, a, K_1) A_1(0, a, a, K_1)| \\
\le& |A_2(\lambda, 1, a, K_1) - A_2(0, 1, a, K_1)|(|A_1(\lambda, a, a, K_1)-A_1(0, a, a, K_1) | + |A_1(0, a, a, K_1)|) \\
& + |A_2(0, 1, a, K_1)||A_1(\lambda, a, a, K_1) - A_1(0, a, a, K_1)|  \\
\leq& 8 \lambda (a^2\sigma^2\!+\!1) e^{9(a\vee 1)^2/2} + 12\lambda^2 \sigma^4 e^{9(a\vee 1)^2/2},
\end{align*}
where the last inequality holds by using the Lipschitz continuous properties of $A_2(\lambda, \alpha, a, K_1)$, and the facts that 
$A_2(0, 1, a, K_1) = -2\sigma^2K_2e^{\sigma^2/2}\Phi(\!-\!K_1\sigma), A_1(0, a, a, K_1)=2\sigma^2(a^2K_2^2\sigma^2+1)e^{a^2\sigma^2/2}\Phi(\!-\!a\sigma K_1)$ and $|K_1|, |K_2|\leq 1$. With the inequality established above and the triangle inequality, we have
%$A_2(0, 1, a, K_1) = -2K_2e^{1/2}\Phi(\!-\!K_1), A_1(0, a, a, K_1)=2(a^2K_2^2+1)e^{a^2/2}\Phi(\!-\!aK_1)$ and $|K_1|, |K_2|\leq 1$. With the inequality established above and the triangle inequality, we have
\begin{align}\label{eq:softmax_expectation_VII_A2A1}
    &A_2(\lambda, 1, a, K_1) A_1(\lambda, a, a, K_1) \leq -4\sigma^4 K_2(a^2K_2^2\sigma^2+1)e^{\frac{a^2+1}{2}\sigma^2}\Phi(\!-\!K_1\sigma)\Phi(\!-a\!K_1\sigma)  + \lambda c_2(a, \sigma) + \lambda^2 c_3(a, \sigma);\notag\\
    &A_2(\lambda, 1, a, K_1) A_2(\lambda, a, a, K_1) \geq -4\sigma^4 K_2(a^2K_2^2\sigma^2+1)e^{\frac{a^2+1}{2}\sigma^2}\Phi(\!-\!K_1\sigma)\Phi(\!-a\!K_1\sigma)  - \lambda c_2(a, \sigma) - \lambda^2 c_3(a, \sigma).
\end{align}
%where $c_2(a) = 8 \lambda(a^2\!+\!1) e^{9(a\vee 1)^2/2}$ and $c_3(a) = 12\lambda^2 e^{9(a\vee 1)^2/2}$.
Now, we are ready to derive the lower and upper bounds for the term $(II)$ based on \eqref{eq:softmax_expectation_VII_A2A1}. We first derive the lower bound as
\begin{align}\label{eq:softmax_expectation_VII_lowerII}
    (II)= &\int_0^\infty A_2(s/n, 1, a, K_1) A_1(s/n, a, a, K_1)[M(s/n, a, K_1)]^{n-2}\mathrm{d}s \notag\\
    \geq& -4\sigma^4 K_2(a^2K_2^2\sigma^2+1)e^{\frac{a^2+1}{2}\sigma^2}\Phi(\!-\!K_1\sigma)\Phi(\!-a\!K_1\sigma)\int_0^\infty [M(s/n, a, K_1)]^{n-2}\mathrm{d}s \notag\\
    &- \frac{c_2(a, \sigma)}{n} \int_0^\infty s[M(s/n, a, K_1)]^{n-2}\mathrm{d}s - \frac{c_3(a, \sigma)}{n^2} \int_0^\infty s^2[M(s/n, a, K_1)]^{n-2}\mathrm{d}s\notag\\
    = &  -4\sigma^4 K_2(a^2K_2^2\sigma^2+1)e^{\frac{a^2+1}{2}\sigma^2}\Phi(\!-\!K_1\sigma)\Phi(\!-a\!K_1\sigma)\EE\bigg[\frac{n}{\sum_{i=3}^n e^{-a (K_1|\tilde x_i| +  K_2 x_i)}}\Big| K_1\bigg]  \notag\\
    &- \frac{c_2(a, \sigma)}{n} \EE\bigg[\frac{n^2}{\big(\sum_{i=3}^n e^{-a (K_1|\tilde x_i| +  K_2 x_i)}\big)^2}\bigg] - \frac{2c_3(a, \sigma)}{n^2} \EE\bigg[\frac{n^3}{\big(\sum_{i=3}^n e^{-a (K_1|\tilde x_i| +  K_2 x_i)}\big)^3}\bigg]\notag\\
    \geq & -2\sigma^4e^{\frac{\sigma^2}{2}}K_2(a^2\sigma^2K_2^2+1)\Phi(\!-\!K_1\sigma) - \frac{c_4(a, \sigma)}{n},
\end{align}
where $c_4(a, \sigma)$ is a positive constant solely depending on $a$ and $\sigma$. Here the last equation is derived by Fubini's theorem to exchange the order of the integral, and the last inequality is established by Lemma~\ref{lemma:concentration_expectation_II}. Similarly, we can also obtain that 
\begin{align}\label{eq:softmax_expectation_VII_upperII}
    (II)\leq & -4\sigma^4 K_2(a^2K_2^2\sigma^2+1)e^{\frac{a^2+1}{2}\sigma^2}\Phi(\!-\!K_1\sigma)\Phi(\!-a\!K_1\sigma)\int_0^\infty [M(s/n, a, K_1)]^{n-2}\mathrm{d}s \notag\\
    &+ \frac{c_2(a, \sigma)}{n} \int_0^\infty s[M(s/n, a, K_1)]^{n-2}\mathrm{d}s + \frac{c_3(a, \sigma)}{n^2} \int_0^\infty s^2[M(s/n, a, K_1)]^{n-2}\mathrm{d}s\notag\\
    \leq &  -2\sigma^4e^{\frac{\sigma^2}{2}}K_2(a^2\sigma^2K_2^2+1)\Phi(\!-\!K_1\sigma)+ \frac{c_4(a, \sigma)}{n}.
\end{align}
Lastly, by utilizing Taylor's expansion regarding the function $\EE[K_2(II)] = \EE\big[-2\sigma^4e^{\frac{\sigma^2}{2}}K_2^2(a^2\sigma^2K_2^2+1)\Phi(\!-\!K_1\sigma)\big]$ w.r.t. $K_1$ at $K_1 = 0$, and the facts that $\EE[K_1] = 0$ and $\EE[K_1^2] = 1/d$, we can derive that 
\begin{align}\label{eq:softmax_expectation_VII_bddI}
    \Big|\EE\big[K_2(II)\big] + \sigma^4e^{\frac{\sigma^2}{2}}(a^2\sigma^2+1)\Big| \leq \frac{c_5(a, \sigma)}{d}.
\end{align}
Substituting the results of~\eqref{eq:softmax_expectation_VII_lowerI}, ~\eqref{eq:softmax_expectation_VII_upperI}, ~\eqref{eq:softmax_expectation_VII_lowerII}, ~\eqref{eq:softmax_expectation_VII_upperII}, and ~\eqref{eq:softmax_expectation_VII_bddI} into~\eqref{eq:softmax_expectationVII_I}, we complete the proof. 
\end{proof}

\begin{lemma}\label{lemma:softmax_expectation_VIII}
Let 
%$x_{1, 1}, x_{2, 1}, \ldots, x_{n, 1}, x_{1, 2}, x_{2, 2}, \ldots, x_{n, 2}, x_{1, 3}, x_{2, 1}, \ldots, x_{n, 1}, x_{1, 1}, x_{2, 1}, \ldots, x_{n, 1} \sim \cN(0,1)$ 
$x_1, x_2, \ldots, x_n\sim \cN(0, \sigma^2)$ be $n$ i.i.d Gaussian random variables, and $\tilde x_1, \tilde x_2, \ldots, \tilde x_n\sim \cN(0,\sigma^2)$ be another $n$ i.i.d Gaussian random variables. In addition, $a$ is a positive scalar, and $\btheta_*, \btheta_0\in\RR^d$ are two independent random vectors following uniform $d-$dimensional sphere distribution, and we denote $K_1 = \la\btheta_0, \btheta_*\ra$, $K_2 = \|(\Ib_d-\btheta_*\btheta_*^\top)\btheta_0\|_2$. Then we have 
\begin{align*}
 \Bigg|\EE\bigg[\frac{K_2\sum_{i_1=1}^n\sum_{i_2=1}^ne^{-K_1(|\tilde x_{i_1}|+a|\tilde x_{i_2}|)- K_2(x_{i_1} + ax_{i_2})}x_{i_2}|\tilde x_{i_1}||\tilde x_{i_2}|}{\sum_{i=1}^ne^{-aK_1|\tilde x_i|-a K_2x_{i}}}\bigg] +\frac{2}{\pi}an\sigma^4e^{\frac{\sigma^2}{2}}  \Bigg|\leq  f_{13}(a, \sigma) + \frac{nf_{14}(a, \sigma)}{d}.
\end{align*}
Here, $f_{13}(a, \sigma)$ and $f_{14}(a, \sigma)$ are both analytic functions of $a$ and $\sigma$ while irrelevant with $n$ and $d$.
\end{lemma}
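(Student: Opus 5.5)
I would follow the template of Lemmas~\ref{lemma:softmax_expectation_I} and~\ref{lemma:softmax_expectation_VII}. First, write $1/S=\int_0^\infty e^{-s'S}\,\mathrm{d}s'$ for the softmax denominator. Then use Fubini and exchangeability to split the double sum into the diagonal part ($i_1=i_2$, $n$ terms) and the off-diagonal part ($n(n-1)$ terms). Substituting $s'=s/n$ and conditioning on $K_1$, the expectation becomes
\[
\EE\Big[K_2\int_0^\infty A_7(s/n,1+a,a,K_1)[M]^{n-1}\mathrm{d}s\Big]+(n-1)\EE\Big[K_2\int_0^\infty A_4(s/n,1,a,K_1)A_8(s/n,a,a,K_1)[M]^{n-2}\mathrm{d}s\Big].
\]
Here $A_7(\lambda,\alpha,a,K_1)=\EE[e^{-\alpha(K_1|\tilde x_1|+K_2x_1)}x_1\tilde x_1^2e^{-\lambda e^{-a(\cdot)}}\mid K_1]$, $A_4$ is as in Lemma~\ref{lemma:softmax_expectation_III}, and $A_8$ is defined like $A_4$ but with an extra factor $x_1$, i.e.\ $\EE[e^{-\alpha(\cdot)}x_1|\tilde x_1|e^{-\lambda e^{-a(\cdot)}}\mid K_1]$. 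The key structural fact is that $x_1$ and $\tilde x_1$ are independent, so each $\lambda=0$ quantity factorizes into a $\tilde x$-moment times an $x$-moment.

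For the diagonal term, I would bound $|\mathrm{d}A_7/\mathrm{d}\lambda|$ by a Gaussian moment such as $\EE[|x_1|\tilde x_1^2e^{-(\alpha+a)(\cdot)}]$, which gives Lipschitz continuity in $\lambda$. I would then use Lemma~\ref{lemma:concentration_expectation_II} to bound $\EE[n/\sum_{i\ge2}e^{-a(\cdot)}]$ and $\EE[n^2/(\cdot)^2]$ by constants. Together these show the diagonal term is $O(1)$, so it can be absorbed into $f_{13}(a,\sigma)$.

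For the off-diagonal term, I would compute the $\lambda=0$ values explicitly.
\begin{itemize}
\item $A_4(0,1,a,K_1)=\sqrt{2/\pi}\,\sigma e^{K_2^2\sigma^2/2}-2\sigma^2K_1\Phi(-K_1\sigma)e^{\sigma^2/2}$, as already used in Lemma~\ref{lemma:softmax_expectation_III}.
\item $A_8(0,a,a,K_1)=\EE[x_1e^{-aK_2x_1}]\cdot\EE[|\tilde x_1|e^{-aK_1|\tilde x_1|}]=-aK_2\sigma^2e^{a^2K_2^2\sigma^2/2}\big(\sqrt{2/\pi}\,\sigma-2aK_1\sigma^2e^{a^2K_1^2\sigma^2/2}\Phi(-aK_1\sigma)\big)$.
\end{itemize}
The error $|A_4A_8(\lambda)-A_4A_8(0)|\le c\lambda+c\lambda^2$ follows from Lipschitz bounds exactly as in the earlier lemmas. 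Integrating against $[M]^{n-2}$ then produces $\EE[n/\sum_{i\ge3}e^{-a(\cdot)}\mid K_1]$, which I would evaluate via Lemma~\ref{lemma:concentration_expectation_II} as $1/M(0)+O(1/n)$, where $M(0)=2e^{a^2\sigma^2/2}\Phi(-aK_1\sigma)$; the $O(1/n)$ pieces contribute $O(1)$ after multiplying by $n-1$.

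The leading term is therefore
\[
F(a,K_1)=\frac{K_2A_4(0,1,a,K_1)A_8(0,a,a,K_1)}{2e^{a^2\sigma^2/2}\Phi(-aK_1\sigma)}.
\]
At $K_1=0$ (so $K_2=1$) this equals $\sqrt{2/\pi}\,\sigma e^{\sigma^2/2}\cdot(-a\sigma^3\sqrt{2/\pi}\,e^{a^2\sigma^2/2})/e^{a^2\sigma^2/2}=-\frac{2}{\pi}a\sigma^4e^{\sigma^2/2}$, which is the claimed constant.

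The main obstacle is controlling $\EE[F(a,K_1)]-F(a,0)$ at order $1/d$. I would expand $F$ to second order around $K_1=0$, writing $K_2=\sqrt{1-K_1^2}$, which is smooth in $K_1$ near $0$. The linear term vanishes because $K_1$ is symmetric; the remainder is bounded by $\sup|F''|\,\EE[K_1^2]/2=O(1/d)$. Two points need care here: $F$ must have a bounded second derivative on $[-1,1]$, and $\Phi(-aK_1\sigma)\ge\Phi(-a\sigma)>0$ keeps the denominator away from zero. Multiplying this $O(1/d)$ error by $n-1$ gives the term $nf_{14}(a,\sigma)/d$, and collecting the remaining $O(1)$ errors gives $f_{13}(a,\sigma)$.
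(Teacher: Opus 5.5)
Your proposal is correct and follows the paper's proof essentially step for step. It uses the same Laplace-transform/Fubini split into an $O(1)$ diagonal term and an off-diagonal term carrying the factor $n-1$, the same Lipschitz control in $\lambda$, Lemma~\ref{lemma:concentration_expectation_II} for the reciprocal sums, and a second-order Taylor expansion in $K_1$ of the leading term, whose value at $K_1=0$ is $-\frac{2}{\pi}a\sigma^4e^{\sigma^2/2}$.

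The differences are only notational. Your $A_7$ and $A_8$ are the paper's $A_8$ and $A_7$, and your ``$M(0)$'' means $\EE[e^{-a(K_1|\tilde x_1|+K_2x_1)}\mid K_1]$, not the paper's $M(0,a,K_1)=1$. The bounded second derivative on all of $[-1,1]$ that you flag does hold: the leading term depends on $K_2$ only through $K_2^2=1-K_1^2$. Smoothness of $\sqrt{1-K_1^2}$ near $0$ alone would not suffice.
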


\begin{proof}[Proof of Lemma~\ref{lemma:softmax_expectation_VIII}]
Following a similar procedure in the proof of Lemma~\ref{lemma:softmax_expectation_III}, we can obtain that
\begin{align}\label{eq:softmax_expectationVIII_I}
&\EE\bigg[\frac{K_2\sum_{i_1=1}^n\sum_{i_2=1}^ne^{-K_1(|\tilde x_{i_1}|+a|\tilde x_{i_2}|)- K_2(x_{i_1} + ax_{i_2})}x_{i_1}|\tilde x_{i_1}||\tilde x_{i_2}|}{\sum_{i=1}^ne^{-aK_1|\tilde x_i|-a K_2x_{i}}}\bigg] \notag\\
=& \int_0^\infty \EE\bigg[K_2
\sum_{i_1,i_2=1}^ne^{-K_1(|\tilde x_{i_1}|+a|\tilde x_{i_2}|)- K_2(x_{i_1} + ax_{i_2})}x_{i_1}|\tilde x_{i_1}||\tilde x_{i_2}|
\exp\!\Big(\!-\!s' \sum_{i=1}^n e^{-aK_1|\tilde x_i|-a K_2x_{i}}\Big)
\bigg] \mathrm{d}s'\notag\\
=&n\EE\bigg[K_2\int_0^\infty \EE\big[e^{-K_1(1+a) |\tilde x_1| - K_2 (1+a)x_1} x_1\tilde x_{1}^2 e^{-s' e^{-a (K_1|\tilde x_1| +  K_2 x_1)}}|K_1\big]\Big(\EE\big[e^{-s' e^{-a (K_1|\tilde x_1| +  K_2 x_1)}}|K_1\big]\Big)^{n-1}\mathrm{d}s'\bigg] \notag\\
&+ n(n-1)\EE\bigg[K_2\int_0^\infty \EE\big[e^{-K_1(|\tilde x_1|+ a|\tilde x_2|) - K_2(x_1+ax_2)} x_1|\tilde x_{1}||\tilde x_{2}|e^{-s'e^{-a (K_1|\tilde x_1| +  K_2 x_1)} -s' e^{-a (K_1|\tilde x_2| +  K_2 x_2)}}|K_1\big]\notag\\
&\qquad\qquad\qquad \cdot\Big(\EE\big[e^{-s' e^{-a (K_1|\tilde x_1| +  K_2 x_1)}}|K_1\big]\Big)^{n-2}\mathrm{d}s'\bigg]\notag\\
=&\EE\bigg[K_2\int_0^\infty \EE\big[e^{-K_1(1+a) |\tilde x_1| - K_2 (1+a)x_1} x_1\tilde x_{1}^2 e^{-\frac{s}{n} e^{-a (K_1|\tilde x_1| +  K_2 x_1)}}|K_1\big]\Big(\EE\big[e^{-\frac{s}{n} e^{-a (K_1|\tilde x_1| +  K_2 x_1)}}|K_1\big]\Big)^{n-1}\mathrm{d}s\bigg] \notag\\
&+ (n-1)\EE\bigg[K_2\int_0^\infty \EE\big[e^{-K_1(|\tilde x_1|+ a|\tilde x_2|) - K_2(x_1+ax_2)} x_1|\tilde x_{1}||\tilde x_{2}|e^{-\frac{s}{n}e^{-a (K_1|\tilde x_1| +  K_2 x_1)} -\frac{s}{n}e^{-a (K_1|\tilde x_2| +  K_2 x_2)}}|K_1\big]\notag\\
&\qquad\qquad\qquad \cdot\Big(\EE\big[e^{-\frac{s}{n} e^{-a (K_1|\tilde x_1| +  K_2 x_1)}}|K_1\big]\Big)^{n-2}\mathrm{d}s\bigg]\notag\\
=&\EE\bigg[K_2\underbrace{\int_0^\infty A_8(s/n, 1+a, a, K_1)[M(s/n, a, K_1)]^{n-1}\mathrm{d}s}_{(I)}\bigg]  \notag\\
&+ (n-1)\EE\bigg[K_2\underbrace{\int_0^\infty A_7(s/n, a, a, K_1)A_4(s/n, 1, a, K_1)[M(s/n, a, K_1)]^{n-2}\mathrm{d}s}_{(II)}\bigg].
\end{align}
Here, $A_2(\lambda, \alpha, a, K_1)$ and $M(\lambda, a, K_1)$ share the same definitions as in the proof in Lemma~\ref{lemma:softmax_expectation_I}, $A_3(\lambda, \alpha, a, K_1)$ and $A_4(\lambda, \alpha, a, K_1)$ shares the same definitions as in the proof in Lemma~\ref{lemma:softmax_expectation_III}. In addition, the terms $A_7(\lambda, \alpha, a, K_1)$ and $A_8(\lambda, \alpha, a, K_1)$ are defined as:
\begin{align*}
    &A_7(\lambda, \alpha, a, K_1) = \EE\big[e^{-\alpha K_1 |\tilde x_1| - \alpha K_2 x_1} |\tilde x_1|x_1 e^{-\lambda e^{-a (K_1|\tilde x_1| +  K_2 x_1)}}|K_1\big];\\
    &A_8(\lambda, \alpha, a, K_1) = \EE\big[e^{-K_1\alpha |\tilde x_1| - K_2 \alpha x_1} x_1|\tilde x_1|^2 e^{-\lambda e^{-a (K_1|\tilde x_1| +  K_2 x_1)}}|K_1\big].
\end{align*}
To calculate the upper and lower bounds for the terms $(I)$ and $(II)$,  we first calculate the derivatives of 
$A_7(\lambda, \alpha, a, K_1)$ and $A_8(\lambda, \alpha, a, K_1)$ w.r.t. $\lambda$ as 
\begin{align*}
    \bigg|\frac{\mathrm{d}A_7(\lambda, \alpha, a, K_1)}{\mathrm{d}\lambda}\bigg|
    =& \Big|-\EE\big[e^{-(\alpha+a) K_1 |\tilde x_1| - (\alpha+a) K_2 x_1} |\tilde x_1|x_1 e^{-\lambda e^{-a (K_1|\tilde x_1| +  K_2 x_1)}}|K_1\big]\Big|\\
    \leq& \EE\big[|x_1||\tilde x_{1}|e^{-(\alpha+a) K_1 |\tilde x_1| - (\alpha+a) K_2 x_1} |K_1\big]\leq 2\sigma^2 e^{(a+\alpha)^2\sigma^2};\\
    \bigg|\frac{\mathrm{d}A_8(\lambda, \alpha, a, K_1)}{\mathrm{d}\lambda}\bigg|
    =& \Big|-\EE\big[e^{-(\alpha+a) K_1 |\tilde x_1| - (\alpha+a) K_2 x_1} |\tilde x_1|^2x_1 e^{-\lambda e^{-a (K_1|\tilde x_1| +  K_2 x_1)}}|K_1\big]\Big|\\
    \leq& \EE\big[|x_1||\tilde x_{1}|^2e^{-(\alpha+a) K_1 |\tilde x_1| - (\alpha+a) K_2 x_1} |K_1\big]\leq 6\sigma^2 e^{(a+\alpha)^2\sigma^2},
\end{align*}
which implies that $A_7(\lambda, \alpha, a, K_1)$ and $A_8(\lambda, \alpha, a, K_1)$ is Lipschitz continuous w.r.t. $\lambda$, and $|A_7(s/n, a, a, K_1) - A_7(0, a, a, K_1)| \leq 2\sigma^2 e^{4a^2\sigma^2}s/n$, $|A_8(s/n, 1+a, a, K_1) - A_8(0, 1+a, a, K_1)| \leq 6\sigma^2 e^{(2a+1)^2\sigma^2}s/n$. 
% By using the Lipschitz continuity of $A_2(\lambda, \alpha, a, K_1)$ and $A_3(\lambda, \alpha, a, K_1)$ established in the proof of Lemma~\ref{lemma:softmax_expectation_I} and Lemma~\ref{lemma:softmax_expectation_III}, 
% we have 
% \begin{align*}
%     |A_2(s/n, 1+a, a, K_1)A_3(s/n, 1+a, a, K_1) - A_2(0, 1+a, a, K_1)A_3(0, 1+a, a, K_1)|\leq \frac{c_1(a, \sigma)s}{n} + \frac{c_2(a, \sigma)s^2}{n^2}.
% \end{align*}
Consequently, we can establish the upper and lower bounds for the term $(I)$ as
\begin{align}\label{eq:softmax_expectation_VIII_lowerI}
    (I)\geq& A_8(0, 1+a, a, K_1)\int_0^\infty [M(s/n, a, K_1)]^{n-1}\mathrm{d}s - \frac{c_1(a, \sigma)}{n}\int_0^\infty s [M(s/n, a, K_1)]^{n-1}\mathrm{d}s\notag\\
     =& A_8(0, 1+a, a, K_1)\EE\bigg[\frac{n}{\sum_{i=2}^n e^{-a (K_1|\tilde x_i| +  K_2 x_i)}}\Big| K_1\bigg] \notag-\frac{c_1(a, \sigma)}{n}\EE\bigg[\frac{n^2}{\big(\sum_{i=2}^n e^{-a (K_1|\tilde x_i| +  K_2 x_i)}\big)^2}\Big| K_1\bigg] \notag\\
     % \geq &-\frac{c_1(a, \sigma)}{n}\EE\bigg[\frac{n^2}{\big(\sum_{i=2}^n e^{-a (K_1|\tilde x_i| +  K_2 x_i)}\big)^2}\Big| K_1\bigg] - \frac{c_2(a, \sigma)}{n}\EE\bigg[\frac{n^3}{\big(\sum_{i=2}^n e^{-a (K_1|\tilde x_i| +  K_2 x_i)}\big)^3}\Big| K_1\bigg]\notag\\
     \geq &-\frac{(1+a)\sigma^4 K_2 e^{\frac{(2a+1)\sigma^2}{2}}
\big((1+(1+a)^2K_1^2\sigma^2)\Phi(-(1+a)K_1\sigma)
-(1+a)K_1\sigma\phi((1+a)K_1\sigma)\big)}{2\Phi^2(-a\sigma K_1)}
     - 1.
     %\frac{c_1(a)}{n}\geq -\frac{(a^3+3a^2+6a+4)e^{\frac{2a+1}{2}}}{\Phi(\!-a)}
     %- \frac{c_1(a)}{n}.
\end{align}
The first equality is true by applying Fubini's theorem to exchange the order of integration. The penultimate inequality is derived by applying Lemma~\ref{lemma:concentration_expectation_II}, and the fact that $A_8(0,1+a,a,K_1)
=-2(1+a)\sigma^4 K_2 e^{(1+a)^2\sigma^2/2}
\big[(1+(1+a)^2K_1^2\sigma^2)\Phi(-(1+a)K_1\sigma)
-(1+a)K_1\sigma\,\phi((1+a)K_1\sigma)\big]$.
%, where $c_1(a)$ is a constant solely depending on $a$. 
Following a similar calculation, we can also get the upper bound for $(I)$ as 
\begin{align}\label{eq:softmax_expectation_VIII_upperI}
(I)\leq& A_8(0, 1+a, a, K_1)\int_0^\infty [M(s/n, a, K_1)]^{n-1}\mathrm{d}s + \frac{c_1(a, \sigma)}{n}\int_0^\infty s [M(s/n, a, K_1)]^{n-1}\mathrm{d}s\notag\\
     \leq &-\frac{(1+a)\sigma^4 K_2 e^{\frac{(2a+1)\sigma^2}{2}}
\big((1+(1+a)^2K_1^2\sigma^2)\Phi(-(1+a)K_1\sigma)
-(1+a)K_1\sigma\phi((1+a)K_1\sigma)\big)}{2\Phi^2(-a\sigma K_1)}
     + 1.
\end{align}
Combined with the Lipschitz  continuity of $A_4(\lambda, \alpha, a, K_1)$ established in the proof of Lemma~\ref{lemma:softmax_expectation_III}, we have
\begin{align*}
|A_7(s/n, a, a, K_1)A_4(s/n, 1, a, K_1) - A_7(0, a, a, K_1)A_4(0, 1, a, K_1)|\leq \frac{c_3(a, \sigma)s}{n} + \frac{c_4(a, \sigma)s^2}{n^2}.
\end{align*}
Now, we are ready to derive the lower and upper bounds for the term $(II)$ as
\begin{align}\label{eq:softmax_expectation_VIII_lowerII}
    (II)= &\int_0^\infty A_7(s/n, a, a, K_1)A_4(s/n, 1, a, K_1)[M(s/n, a, K_1)]^{n-2}\mathrm{d}s \notag\\
    \geq& A_7(0, a, a, K_1)A_4(0, 1, a, K_1)\int_0^\infty [M(s/n, a, K_1)]^{n-2}\mathrm{d}s \notag\\
    &- \frac{c_3(a, \sigma)}{n} \int_0^\infty s[M(s/n, a, K_1)]^{n-2}\mathrm{d}s - \frac{c_4(a, \sigma)}{n^2} \int_0^\infty s^2[M(s/n, a, K_1)]^{n-2}\mathrm{d}s\notag\\
    = &  A_7(0, a, a, K_1)A_4(0, 1, a, K_1)\EE\bigg[\frac{n}{\sum_{i=3}^n e^{-a (K_1|\tilde x_i| +  K_2 x_i)}}\Big| K_1\bigg]  \notag\\
    &- \frac{c_3(a, \sigma)}{n} \EE\bigg[\frac{n^2}{\big(\sum_{i=3}^n e^{-a (K_1|\tilde x_i| +  K_2 x_i)}\big)^2}\bigg] - \frac{2c_4(a, \sigma)}{n^2} \EE\bigg[\frac{n^3}{\big(\sum_{i=3}^n e^{-a (K_1|\tilde x_i| +  K_2 x_i)}\big)^3}\bigg]\notag\\
    \geq & \frac{A_7(0, a, a, K_1)A_4(0, 1, a, K_1)}{2e^{a^2\sigma^2/2}\Phi(-a\sigma K_1)} - \frac{c_5(a, \sigma)}{n},
\end{align}
where $c_5(a, \sigma)$ is a positive constant solely depending on $a$ and $\sigma$. Here the last equation is derived by Fubini's theorem to exchange the order of the integral, and the last inequality is established by Lemma~\ref{lemma:concentration_expectation_II}. Similarly, we can also obtain that 
\begin{align}\label{eq:softmax_expectation_VIII_upperII}
    (II)\leq & A_7(0, a, a, K_1)A_4(0, 1, a, K_1)\int_0^\infty [M(s/n, a, K_1)]^{n-2}\mathrm{d}s \notag\\
    &+ \frac{c_3(a, \sigma)}{n} \int_0^\infty s[M(s/n, a, K_1)]^{n-2}\mathrm{d}s + \frac{c_4(a, \sigma)}{n^2} \int_0^\infty s^2[M(s/n, a, K_1)]^{n-2}\mathrm{d}s\notag\\
    \leq &  \frac{A_7(0, a, a, K_1)A_4(0, 1, a, K_1)}{2e^{a^2\sigma^2/2}\Phi(-a\sigma K_1)}+ \frac{c_5(a, \sigma)}{n}.
\end{align}
To provide an exact lower and upper bound for $\EE\big[\frac{K_2 A_7(0, a, a, K_1)A_4(0, 1, a, K_1)}{2e^{a^2\sigma^2/2}\Phi(-a\sigma K_1)}\big]$, we consider its Taylor's expansion at $K_1 = 0$, then we have 
\begin{align}\label{eq:softmax_expectation_VIII_bddI}
    \Bigg|\EE\bigg[\frac{K_2 A_7(0, a, a, K_1)A_4(0, 1, a, K_1)}{2e^{a^2\sigma^2/2}\Phi(-a\sigma K_1)}\bigg] +\frac{2}{\pi}a\sigma^4e^{\frac{\sigma^2}{2}}  \Bigg| \leq \frac{c_6(a, \sigma)}{d}
\end{align}
Substituting the results of~\eqref{eq:softmax_expectation_VIII_lowerI}, ~\eqref{eq:softmax_expectation_VIII_upperI}, ~\eqref{eq:softmax_expectation_VIII_lowerII}, ~\eqref{eq:softmax_expectation_VIII_upperII}, and~\eqref{eq:softmax_expectation_VIII_bddI} into~\eqref{eq:softmax_expectationVIII_I}, we complete the proof. 
\end{proof}

\begin{lemma}\label{lemma:softmax_expectation_IX}
Let 
$x_1, x_2, \ldots, x_n\sim \cN(0, \sigma^2)$ be $n$ i.i.d Gaussian random variables, and $\tilde x_1, \tilde x_2, \ldots, \tilde x_n\sim \cN(0,\sigma^2)$ be another $n$ i.i.d Gaussian random variables. In addition, $a$ is a positive scalar, and $\btheta_*, \btheta_0\in\RR^d$ are two independent random vectors following uniform $d-$dimensional sphere distribution, and we denote $K_1 = \la\btheta_0, \btheta_*\ra$, $K_2 = \|(\Ib_d-\btheta_*\btheta_*^\top)\btheta_0\|_2$. Then we have 
\begin{align*}
 \Bigg|\EE\bigg[\frac{K_2\sum_{i=1}^ne^{-(1+a)K_1|\tilde x_{i}|- (1+a)K_2x_{i}}x_{i}}{\sum_{i=1}^ne^{-aK_1|\tilde x_i|-a K_2x_{i}}}\bigg] +(1+a)\sigma^2e^{(2a+1)\sigma^2/2} \Bigg|\leq  \frac{f_{15}(a, \sigma)}{n} + \frac{f_{16}(a, \sigma)}{d}.
\end{align*}
Here, $f_{15}(a, \sigma)$ and $f_{16}(a, \sigma)$ are both analytic functions of $a$ and $\sigma$ while irrelevant with $n$ and $d$.
\end{lemma}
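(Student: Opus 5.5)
Since this ratio has a single sum in the numerator and a single sum in the denominator, I would follow the template of Lemma~\ref{lemma:softmax_expectation_V}, with the extra factor $x_i$ handled as in Lemma~\ref{lemma:softmax_expectation_VII}. The steps, in order, are:

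\textbf{Step 1 (reduction to a one-coordinate integral).}
\begin{itemize}
\item Write $1/S=\int_0^\infty e^{-sS}\,\mathrm{d}s$ and apply Fubini.
\item By exchangeability, reduce the sum to $n$ copies of the $i=1$ term.
\item Rescale $s'=s/n$. Conditional on $K_1$, this gives
\begin{align*}
\EE\bigg[K_2\int_0^\infty A_9(s/n,1+a,a,K_1)\,[M(s/n,a,K_1)]^{n-1}\,\mathrm{d}s\bigg].
\end{align*}
\item Here $A_9(\lambda,\alpha,a,K_1)=\EE[e^{-\alpha K_1|\tilde x_1|-\alpha K_2 x_1}x_1e^{-\lambda e^{-a(K_1|\tilde x_1|+K_2x_1)}}\mid K_1]$. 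This is exactly the function $A_2$ of Lemma~\ref{lemma:softmax_expectation_I}, evaluated at $\alpha=1+a$.
\end{itemize}

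\textbf{Step 2 (approximating the integrand by its $\lambda=0$ value).}
\begin{itemize}
\item Use the Lipschitz bound on $\partial_\lambda A_2$ derived in Lemma~\ref{lemma:softmax_expectation_I}. It gives $|A_2(s/n,1+a,a,K_1)-A_2(0,1+a,a,K_1)|\le c(a,\sigma)s/n$.
\item By independence of $x_1$ and $\tilde x_1$ and the Gaussian identities, $A_2(0,1+a,a,K_1)=-2(1+a)\sigma^2K_2e^{(1+a)^2\sigma^2/2}\Phi(-(1+a)K_1\sigma)$.
\item Replace $\int_0^\infty[M]^{n-1}\mathrm{d}s$ and $\int_0^\infty s[M]^{n-1}\mathrm{d}s$ by $\EE[n/\sum_{i\ge2}e^{-a(\cdot)}]$ and $\EE[n^2/(\sum_{i\ge2}e^{-a(\cdot)})^2]$ respectively, via Fubini.
\item Apply Lemma~\ref{lemma:concentration_expectation_II}. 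The first quantity is $1/(2e^{a^2\sigma^2/2}\Phi(-aK_1\sigma))+O(1/n)$, and the second is $O(1)$.
\item Combining these bullets, the conditional expectation equals
\begin{align*}
-(1+a)\sigma^2K_2^2e^{(2a+1)\sigma^2/2}\frac{\Phi(-(1+a)K_1\sigma)}{\Phi(-aK_1\sigma)}+O(1/n).
\end{align*}
\end{itemize}

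\textbf{Step 3 (averaging over $K_1$).}
\begin{itemize}
\item Define $g(k)=(1-k^2)\Phi(-(1+a)k\sigma)/\Phi(-ak\sigma)$, using $K_2^2=1-K_1^2$.
\item $g$ is smooth on $[-1,1]$ with $g(0)=1$.
\item Expand to second order at $k=0$. The symmetry of $K_1$ kills the linear term, and $\EE[K_1^2]=1/d$ from Lemma~\ref{lemma:unit_rv_I}.
\item The result is $|\EE g(K_1)-1|\le c/d$. This yields the stated centering $-(1+a)\sigma^2e^{(2a+1)\sigma^2/2}$ with error $f_{15}/n+f_{16}/d$.
\end{itemize}

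\textbf{Main obstacle.} The delicate point is the uniformity of the Lemma~\ref{lemma:concentration_expectation_II} bounds in $K_1\in[-1,1]$. The denominator $\Phi(-aK_1\sigma)$ is bounded below by $\Phi(-a\sigma)>0$, so the constants remain functions of $a$ and $\sigma$ only. I would check this explicitly, as the earlier lemmas do.
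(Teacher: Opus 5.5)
Your proposal is correct and follows the paper's proof essentially step for step. Both use the Laplace identity $1/S=\int_0^\infty e^{-sS}\,\mathrm{d}s$ with exchangeability to reduce to $A_2(s/n,1+a,a,K_1)$, then the $\lambda$-Lipschitz bound on $A_2$ together with Lemma~\ref{lemma:concentration_expectation_II} to obtain the leading term $-(1+a)\sigma^2K_2e^{(2a+1)\sigma^2/2}\Phi(-(1+a)K_1\sigma)/\Phi(-aK_1\sigma)$ up to $O(1/n)$, and finally a second-order Taylor expansion in $K_1$ using $\EE[K_1]=0$ and $\EE[K_1^2]=1/d$. Your explicit $g(k)=(1-k^2)\Phi(-(1+a)k\sigma)/\Phi(-ak\sigma)$ tracks the exponent correctly, whereas the paper's displayed Taylor step misprints $e^{(2a+1)\sigma^2/2}$ as $e^{\sigma^2/2}$.
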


\begin{proof}[Proof of Lemma~\ref{lemma:softmax_expectation_IX}]
Following a similar procedure in the proof of Lemma~\ref{lemma:softmax_expectation_V}, we can obtain that
\begin{align}\label{eq:softmax_expectationIX_I}
&\EE\bigg[K_2\frac{\sum_{i=1}^n e^{-K_1(1+a)|\tilde x_{i}|- K_2(1+a)x_{i} }x_i}{\sum_{i=1}^ne^{-a K_1|\tilde x_i|-a K_2x_{i}}}\bigg] \notag\\
=& \int_0^\infty \EE\bigg[K_2
\sum_{i=1}^ne^{-K_1(1+a)|\tilde x_{i}|- K_2(1+a)x_{i}}x_i
\exp\!\Big(\!-\!s' \sum_{i=1}^n e^{-aK_1|\tilde x_i|-a K_2x_{i}}\Big)
\bigg] \mathrm{d}s'\notag\\
=&n\EE\bigg[K_2\int_0^\infty \EE\big[e^{-K_1(1+a) |\tilde x_1| - K_2 (1+a)x_1} e^{-s' e^{-a (K_1|\tilde x_1| +  K_2 x_1)}}x_1|K_1\big]\Big(\EE\big[e^{-s' e^{-a (K_1|\tilde x_1| +  K_2 x_1)}}|K_1\big]\Big)^{n-1}\mathrm{d}s'\bigg] \notag\\
% &+ n(n-1)\EE\bigg[\int_0^\infty \EE\big[e^{-K_1(|\tilde x_1|+ a|\tilde x_2|) - K_2(x_1+ax_2)} e^{-s'e^{-a (K_1|\tilde x_1| +  K_2 x_1)} -s' e^{-a (K_1|\tilde x_2| +  K_2 x_2)}}|K_1\big]\notag\\
% &\qquad\qquad\qquad \cdot\Big(\EE\big[e^{-s' e^{-a (K_1|\tilde x_1| +  K_2 x_1)}}|K_1\big]\Big)^{n-2}\mathrm{d}s'\bigg]\notag\\
=&\EE\bigg[K_2\int_0^\infty \EE\big[e^{-K_1(1+a) |\tilde x_1| - K_2 (1+a)x_1} e^{-\frac{s}{n} e^{-a (K_1|\tilde x_1| +  K_2 x_1)}}x_1|K_1\big]\Big(\EE\big[e^{-\frac{s}{n} e^{-a (K_1|\tilde x_1| +  K_2 x_1)}}|K_1\big]\Big)^{n-1}\mathrm{d}s\bigg] \notag\\
% &+ (n-1)\EE\bigg[\int_0^\infty \EE\big[e^{-K_1(|\tilde x_1|+ a|\tilde x_2|) - K_2(x_1+ax_2)} e^{-\frac{s}{n}e^{-a (K_1|\tilde x_1| +  K_2 x_1)} -\frac{s}{n}e^{-a (K_1|\tilde x_2| +  K_2 x_2)}}|K_1\big]\notag\\
% &\qquad\qquad\qquad \cdot\Big(\EE\big[e^{-\frac{s}{n} e^{-a (K_1|\tilde x_1| +  K_2 x_1)}}|K_1\big]\Big)^{n-2}\mathrm{d}s\bigg]\notag\\
=&\EE\bigg[K_2\underbrace{\int_0^\infty A_2(s/n, 1+a, a, K_1)[M(s/n, a, K_1)]^{n-1}\mathrm{d}s}_{(I)}\bigg],
% \notag\\
% &+ (n-1)\EE\bigg[\underbrace{\int_0^\infty A_5(s/n, 1, a, K_1) A_5(s/n, a, a, K_1)[M(s/n, a, K_1)]^{n-2}\mathrm{d}s}_{(II)}\bigg],
\end{align}
Here, $A_2(\lambda, \alpha, a, K_1)$ and $M(\lambda, a, K_1)$ share the same definitions as in the proof in Lemma~\ref{lemma:softmax_expectation_I}.
By using the Lipschitz continuity of $A_2(\lambda, \alpha, a, K_1)$ established in the proof of Lemma~\ref{lemma:softmax_expectation_I}, we can establish the upper and lower bounds for the term $(I)$ as
\begin{align}\label{eq:softmax_expectation_IX_lowerI}
    (I)\geq& A_2(0, 1+a, a, K_1)\int_0^\infty [M(s/n, a, K_1)]^{n-1}\mathrm{d}s - \frac{c_1(a, \sigma)}{n}\int_0^\infty s [M(s/n, a, K_1)]^{n-1}\mathrm{d}s\notag\\
     =& A_2(0, 1+a, a, K_1)\EE\bigg[\frac{n}{\sum_{i=2}^n e^{-a (K_1|\tilde x_i| +  K_2 x_i)}}\Big| K_1\bigg] -\frac{c_1(a, \sigma)}{n}\EE\bigg[\frac{n^2}{\big(\sum_{i=2}^n e^{-a (K_1|\tilde x_i| +  K_2 x_i)}\big)^2}\Big| K_1\bigg] \notag\\
     \geq &-\frac{(1+a)\sigma^2K_2e^{(2a+1)\sigma^2/2}\Phi(-(a+1)K_1\sigma)}{\Phi(-aK_1\sigma)}
     - \frac{c_2(a, \sigma)}{n}.
     %\frac{c_1(a)}{n}\geq -\frac{(a^3+3a^2+6a+4)e^{\frac{2a+1}{2}}}{\Phi(\!-a)}
     %- \frac{c_1(a)}{n}.
\end{align}
The first equality is true by applying Fubini's theorem to exchange the order of integration. The penultimate inequality is derived by applying Lemma~\ref{lemma:concentration_expectation_II}, and the fact that $A_2(0, 1+a, a, K_1)=-2(1+a)\sigma^2K_2e^{\frac{(1+a)^2\sigma^2}{2}}\Phi(-(1+a)K_1\sigma)$. Similarly, we also have
\begin{align}\label{eq:softmax_expectation_IX_upperI}
    (I)\leq& A_2(0, 1+a, a, K_1)\EE\bigg[\frac{n}{\sum_{i=2}^n e^{-a (K_1|\tilde x_i| +  K_2 x_i)}}\Big| K_1\bigg] +\frac{c_1(a, \sigma)}{n}\EE\bigg[\frac{n^2}{\big(\sum_{i=2}^n e^{-a (K_1|\tilde x_i| +  K_2 x_i)}\big)^2}\Big| K_1\bigg] \notag\\
     \leq &-\frac{(1+a)\sigma^2K_2e^{(2a+1)\sigma^2/2}\Phi(-(a+1)K_1\sigma)}{\Phi(-aK_1\sigma)}
     + \frac{c_2(a, \sigma)}{n}.
     %\frac{c_1(a)}{n}\geq -\frac{(a^3+3a^2+6a+4)e^{\frac{2a+1}{2}}}{\Phi(\!-a)}
     %- \frac{c_1(a)}{n}.
\end{align}
%, where $c_1(a)$ is a constant solely depending on $a$. 
To provide an exact lower and upper bound for $\EE\big[-\frac{(1+a)\sigma^2K_2^2e^{(2a+1)\sigma^2/2}\Phi(-(a+1)K_1\sigma)}{\Phi(-aK_1\sigma)}\big]$, we consider its Taylor's expansion at $K_1 = 0$, then we have 
\begin{align}\label{eq:softmax_expectation_IX_bddI}
    \Bigg|\EE\bigg[-\frac{(1+a)\sigma^2K_2^2e^{\sigma^2/2}\Phi(-(a+1)K_1\sigma)}{\Phi(-aK_1\sigma)}\bigg] +(1+a)\sigma^2e^{(2a+1)\sigma^2/2} \Bigg| \leq \frac{c_3(a, \sigma)}{d}
\end{align}
Substituting the results of~\eqref{eq:softmax_expectation_IX_lowerI}, ~\eqref{eq:softmax_expectation_IX_upperI}, and~\eqref{eq:softmax_expectation_IX_bddI} into~\eqref{eq:softmax_expectationIX_I}, we complete the proof. 
\end{proof}

\begin{lemma}\label{lemma:softmax_expectation_X}
Let 
%$x_{1, 1}, x_{2, 1}, \ldots, x_{n, 1}, x_{1, 2}, x_{2, 2}, \ldots, x_{n, 2}, x_{1, 3}, x_{2, 1}, \ldots, x_{n, 1}, x_{1, 1}, x_{2, 1}, \ldots, x_{n, 1} \sim \cN(0,1)$ 
$x_1, x_2, \ldots, x_n\sim \cN(0, \sigma^2)$ be $n$ i.i.d Gaussian random variables, and $\tilde x_1, \tilde x_2, \ldots, \tilde x_n\sim \cN(0,\sigma^2)$ be another $n$ i.i.d Gaussian random variables. In addition, $a$ is a positive scalar, and $\btheta_*, \btheta_0\in\RR^d$ are two independent random vectors following uniform $d-$dimensional sphere distribution, and we denote $K_1 = \la\btheta_0, \btheta_*\ra$, $K_2 = \|(\Ib_d-\btheta_*\btheta_*^\top)\btheta_0\|_2$. Then we have 
\begin{align*}
 &\Bigg|\EE\bigg[\frac{K_2\sum_{i_1=1}^n\sum_{i_2=1}^n\sum_{i_3=1}^n e^{-K_1(|\tilde x_{i_1}|+a(|\tilde x_{i_2}| + |\tilde x_{i_3}|))- K_2(x_{i_1} + ax_{i_2} + ax_{i_3})}x_{i_1}x_{i_2}x_{i_3}}{(\sum_{i=1}^ne^{-aK_1|\tilde x_i|-a K_2x_{i}})^2}\bigg] +na^2\sigma^6e^{\frac{\sigma^2}{2}}\Bigg|\\
 \leq&  f_{17}(a, \sigma) + \frac{nf_{18}(a, \sigma)}{d}.
\end{align*}
Here,$f_{17}(a, \sigma)$, $f_{18}(a, \sigma)$ are both analytic functions of $a$ and $\sigma$ while irrelevant with $n$ and $d$.
\end{lemma}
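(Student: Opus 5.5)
We follow the template of Lemmas~\ref{lemma:softmax_expectation_II} and~\ref{lemma:softmax_expectation_VII}. Write $E_i = e^{-a(K_1|\tilde x_i| + K_2 x_i)}$ and $S=\sum_i E_i$. Use the identity $S^{-2}=\int_0^\infty s e^{-sS}\,\mathrm{d}s$, then Fubini, conditioning on $K_1$ (with $K_2=\sqrt{1-K_1^2}$), and the substitution $s'=s/n$. After these steps the triple sum splits according to the coincidence pattern of $(i_1,i_2,i_3)$.

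The pattern with all indices distinct contributes
\begin{align*}
\frac{(n-1)(n-2)}{n}\,\EE\Big[K_2\int_0^\infty s\, A_2(s/n,1,a,K_1)\,[A_2(s/n,a,a,K_1)]^2\,[M(s/n,a,K_1)]^{n-3}\,\mathrm{d}s\Big].
\end{align*}
The three patterns with exactly two equal indices ($i_1=i_2\ne i_3$, $i_1=i_3\ne i_2$, $i_2=i_3\ne i_1$) carry a prefactor $(n-1)$. They involve products such as $A_1(\cdot,1+a,a,K_1)A_2(\cdot,a,a,K_1)$ and $A_2(\cdot,1,a,K_1)A_1(\cdot,2a,a,K_1)$. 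The fully diagonal term carries prefactor $1/n$ and involves $A_6(\cdot,1+2a,a,K_1)$.

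For every factor we use the Lipschitz-in-$\lambda$ bounds already established for $A_1,A_2,A_6$: each derivative in $\lambda$ is bounded by a Gaussian moment times $e^{c(a,\sigma)\sigma^2}$. This lets us replace each $A_k(s/n,\cdot)$ by $A_k(0,\cdot)$ at the cost of terms $c\,s^j/n^j$. We then integrate using Fubini. The remaining $s$-moments produce $\EE[n^{m}/(\sum_{i\ge 4}E_i)^m\mid K_1]$, and Lemma~\ref{lemma:concentration_expectation_II} shows this equals $(2e^{a^2\sigma^2/2}\Phi(-aK_1\sigma))^{-m}$ up to $O(1/n)$. Consequently the diagonal and two-coincidence patterns contribute at most $f_{17}(a,\sigma)$, because their leading factors are bounded functions of $K_1$ and their prefactors are $O(1)$. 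Only the all-distinct pattern contributes at order $n$.

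For the leading term we substitute $A_2(0,1,a,K_1)=-2\sigma^2K_2e^{\sigma^2/2}\Phi(-K_1\sigma)$ and $A_2(0,a,a,K_1)=-2a\sigma^2K_2e^{a^2\sigma^2/2}\Phi(-aK_1\sigma)$. The $s$-integral contributes $n^2/(2e^{a^2\sigma^2/2}\Phi(-aK_1\sigma))^2$, so the all-distinct term equals $n$ times
\begin{align*}
F(K_1) = -\,2a^2\sigma^6 e^{\sigma^2/2}K_2^4\,\Phi(-K_1\sigma),
\end{align*}
up to $O(1)$. We then Taylor expand $\EE[F(K_1)]$ at $K_1=0$, as in~\eqref{eq:softmax_expectation_VII_bddI}. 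The first-order term vanishes since $\EE[K_1]=0$, and $\EE[K_1^2]=1/d$ together with $K_2^4=1-2K_1^2+K_1^4$ gives $\EE[F(K_1)]=-a^2\sigma^6e^{\sigma^2/2}+O(1/d)$. Multiplying by $n$ yields the claimed leading term $-na^2\sigma^6e^{\sigma^2/2}$ with error $nf_{18}/d$.

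The main obstacle is the bookkeeping in the all-distinct term. The product of three $\lambda$-dependent factors has to be linearized, giving error terms up to $s^3/n^3$. These must be controlled uniformly in $n$, which requires Lemma~\ref{lemma:concentration_expectation_II} with moment order up to $4$ on $\sum_{i\ge4}E_i$. Each such error is $O(1/n)$, and multiplying by the $n$ prefactor leaves an $O(1)$ contribution that is absorbed into $f_{17}$.
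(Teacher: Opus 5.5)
Your proposal is correct and follows essentially the same route as the paper. Both use the representation $S^{-2}=\int_0^\infty s e^{-sS}\,\mathrm{d}s$ with $s'=s/n$, the split by index-coincidence pattern, Lipschitz linearization of $A_1,A_2,A_6$ in $\lambda$, Lemma~\ref{lemma:concentration_expectation_II} for the inverse moments, and a Taylor expansion in $K_1$ of the all-distinct leading term $-2a^2\sigma^6e^{\sigma^2/2}K_2^4\Phi(-K_1\sigma)$, whose mean is $-a^2\sigma^6e^{\sigma^2/2}+O(1/d)$.

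A few bookkeeping slips need fixing:
\begin{itemize}
\item After the substitution, the two-coincidence prefactors are $(n-1)/n$ and $2(n-1)/n$. A prefactor of $n-1$, as you wrote, would make these terms leading order and contradicts your later $O(1)$ claim.
\item The $s$-integral in the leading term is $\EE[n^2/(\sum_{i\ge 4}E_i)^2\mid K_1]\approx(2e^{a^2\sigma^2/2}\Phi(-aK_1\sigma))^{-2}$, with no extra factor $n^2$.
\item The cubic linearization error involves the fifth inverse moment rather than the fourth; Lemma~\ref{lemma:concentration_expectation_II} covers this order as well.
\end{itemize}
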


\begin{proof}[Proof of Lemma~\ref{lemma:softmax_expectation_X}]
Following a similar procedure in the proof of Lemma~\ref{lemma:softmax_expectation_II}, we can obtain that
\begin{align}\label{eq:softmax_expectationX_I}
&\EE\bigg[\frac{K_2\sum_{i_1=1}^n\sum_{i_2=1}^n\sum_{i_3=1}^n e^{-K_1(|\tilde x_{i_1}|+a(|\tilde x_{i_2}| + |\tilde x_{i_3}|))- K_2(x_{i_1} + ax_{i_2} + ax_{i_3})}x_{i_1}x_{i_2}x_{i_3}}{(\sum_{i=1}^ne^{-aK_1|\tilde x_i|-a K_2x_{i}})^2}\bigg] \notag\\
=& \int_0^\infty s'\EE\bigg[K_2
\sum_{i_1,i_2,i_3=1}^n e^{-K_1(|\tilde x_{i_1}|+a(|\tilde x_{i_2}| + |\tilde x_{i_3}|))- K_2(x_{i_1} + ax_{i_2} + ax_{i_3})}x_{i_1}x_{i_2}x_{i_3}
\exp\!\Big(\!-\!s' \sum_{i=1}^n e^{-aK_1|\tilde x_i|-a K_2x_{i}}\Big)
\bigg] \mathrm{d}s'\notag\\
=&n\EE\bigg[K_2\int_0^\infty s'\EE\big[e^{-K_1(1+2a) |\tilde x_1| - K_2 (1+2a)x_1} x_1^3 e^{-s' e^{-a (K_1|\tilde x_1| +  K_2 x_1)}}|K_1\big]\Big(\EE\big[e^{-s' e^{-a (K_1|\tilde x_1| +  K_2 x_1)}}|K_1\big]\Big)^{n-1}\mathrm{d}s'\bigg] \notag\\
&+ n(n-1)\EE\bigg[K_2\int_0^\infty s'\EE\big[e^{-K_1(|\tilde x_1|+ 2a|\tilde x_2|) - K_2(x_1+2ax_2)} x_1x_2^2e^{-s'e^{-a (K_1|\tilde x_1| +  K_2 x_1)} -s' e^{-a (K_1|\tilde x_2| +  K_2 x_2)}}|K_1\big]\notag\\
&\qquad\qquad\qquad \cdot\Big(\EE\big[e^{-s' e^{-a (K_1|\tilde x_1| +  K_2 x_1)}}|K_1\big]\Big)^{n-2}\mathrm{d}s'\bigg]\notag\\
&+ 2n(n-1)\EE\bigg[K_2\int_0^\infty s'\EE\big[e^{-K_1(a|\tilde x_1|+ (a+1)|\tilde x_2|) - K_2(ax_1+(a+1)x_2)} x_1x_2^2e^{-s'e^{-a (K_1|\tilde x_1| +  K_2 x_1)} -s' e^{-a (K_1|\tilde x_2| +  K_2 x_2)}}|K_1\big]\notag\\
&\qquad\qquad\qquad \cdot\Big(\EE\big[e^{-s' e^{-a (K_1|\tilde x_1| +  K_2 x_1)}}|K_1\big]\Big)^{n-2}\mathrm{d}s'\bigg]\notag\\
&+ n(n-1)(n-2)\EE\bigg[K_2\int_0^\infty s'\EE\big[e^{-K_1(|\tilde x_1|+ a|\tilde x_2| + a|\tilde x_3|) - K_2(x_1+ax_2 + ax_3)} x_1x_2x_3\notag\\
&\cdot e^{-s'e^{-a (K_1|\tilde x_1| +  K_2 x_1)} -s' e^{-a (K_1|\tilde x_2| +  K_2 x_2)}-s' e^{-a (K_1|\tilde x_3| +  K_2 x_3)}}|K_1\big]\Big(\EE\big[e^{-s' e^{-a (K_1|\tilde x_1| +  K_2 x_1)}}|K_1\big]\Big)^{n-3}\mathrm{d}s'\bigg]\notag\\
=&\frac{1}{n}\EE\bigg[K_2\int_0^\infty s\EE\big[e^{-K_1(1+2a) |\tilde x_1| - K_2 (1+2a)x_1} x_1^3 e^{-\frac{s}{n}e^{-a (K_1|\tilde x_1| +  K_2 x_1)}}|K_1\big]\Big(\EE\big[e^{-\frac{s}{n} e^{-a (K_1|\tilde x_1| +  K_2 x_1)}}|K_1\big]\Big)^{n-1}\mathrm{d}s\bigg] \notag\\
&+ \frac{n-1}{n}\EE\bigg[K_2\int_0^\infty s\EE\big[e^{-K_1(|\tilde x_1|+ 2a|\tilde x_2|) - K_2(x_1+2ax_2)} x_1x_2^2e^{-\frac{s}{n}e^{-a (K_1|\tilde x_1| +  K_2 x_1)} -\frac{s}{n} e^{-a (K_1|\tilde x_2| +  K_2 x_2)}}|K_1\big]\notag\\
&\qquad\qquad\qquad \cdot\Big(\EE\big[e^{-\frac{s}{n} e^{-a (K_1|\tilde x_1| +  K_2 x_1)}}|K_1\big]\Big)^{n-2}\mathrm{d}s\bigg]\notag\\
&+ \frac{2(n-1)}{n}\EE\bigg[K_2\int_0^\infty s\EE\big[e^{-K_1(a|\tilde x_1|+ (a+1)|\tilde x_2|) - K_2(ax_1+(a+1)x_2)} x_1x_2^2e^{-\frac{s}{n}e^{-a (K_1|\tilde x_1| +  K_2 x_1)} -\frac{s}{n} e^{-a (K_1|\tilde x_2| +  K_2 x_2)}}|K_1\big]\notag\\
&\qquad\qquad\qquad \cdot\Big(\EE\big[e^{-\frac{s}{n} e^{-a (K_1|\tilde x_1| +  K_2 x_1)}}|K_1\big]\Big)^{n-2}\mathrm{d}s\bigg]\notag\\
&+ \frac{(n-1)(n-2)}{n}\EE\bigg[K_2\int_0^\infty s\EE\big[e^{-K_1(|\tilde x_1|+ a|\tilde x_2| + a|\tilde x_3|) - K_2(x_1+ax_2 + ax_3)} x_1x_2x_3\notag\\
&\cdot e^{-\frac{s}{n}e^{-a (K_1|\tilde x_1| +  K_2 x_1)} -\frac{s}{n}e^{-a (K_1|\tilde x_2| +  K_2 x_2)}-\frac{s}{n} e^{-a (K_1|\tilde x_3| +  K_2 x_3)}}|K_1\big]\Big(\EE\big[e^{-\frac{s}{n} e^{-a (K_1|\tilde x_1| +  K_2 x_1)}}|K_1\big]\Big)^{n-3}\mathrm{d}s\bigg]\notag\\
=&\frac{1}{n}\EE\bigg[K_2\underbrace{\int_0^\infty sA_6(s/n, 1+2a, a, K_1)[M(s/n, a, K_1)]^{n-1}\mathrm{d}s}_{(I)}\bigg]  \notag\\
&+ \frac{n-1}{n}\EE\bigg[K_2\underbrace{\int_0^\infty sA_2(s/n, 1, a, K_1)A_1(s/n, 2a, a, K_1)[M(s/n, a, K_1)]^{n-2}\mathrm{d}s}_{(II)}\bigg]\notag\\
&+ \frac{2(n-1)}{n}\EE\bigg[K_2\underbrace{\int_0^\infty sA_2(s/n, a, a, K_1)A_1(s/n, a+1, a, K_1)[M(s/n, a, K_1)]^{n-2}\mathrm{d}s}_{(III)}\bigg]\notag\\
&+ \frac{(n-1)(n-2)}{n}\EE\bigg[K_2\underbrace{\int_0^\infty s A_2^2(s/n, a, a, K_1)A_2(s/n, 1, a, K_1)[M(s/n, a, K_1)]^{n-3}\mathrm{d}s}_{(IV)}\bigg].
\end{align}
Here, $A_1(\lambda, \alpha, a, K_1)$, $A_2(\lambda, \alpha, a, K_1)$ and $M(\lambda, a, K_1)$ share the same definitions as in the proof in Lemma~\ref{lemma:softmax_expectation_I} and $A_6(\lambda, \alpha, a, K_1)$ shares the same definition as in the proof in Lemma~\ref{lemma:softmax_expectation_VII}. Through similar procedures in the proof of Lemma~\ref{lemma:softmax_expectation_VII} (by utilizing the Lipschitz  continuities of these functions), we can derive that:  
\begin{align}\label{eq:softmax_expectation_X_upper_lower_I}
&\bigg|(I)+ \frac{A_6(0, 1+2a, a, K_1)}{4e^{a^2\sigma^2}\Phi^2(-a\sigma K_1)}\bigg|\notag\\
=&\bigg|(I)+ K_2(1+2a)\sigma^4\big(K_2^2(1+2a)^2\sigma^2+3\big)e^{\frac{2a^2+4a+1}{2}\sigma^2}\frac{\Phi(\!-\!K_1(2a+1)\sigma)}{\Phi(\!-\!K_1a\sigma)}\bigg|\leq  \frac{c_1(a, \sigma)}{n};\notag\\
&\bigg|(II)+ \frac{A_1(0, 2a, a, K_1)A_2(0, 1, a, K_1)}{4e^{a^2\sigma^2}\Phi^2(-a\sigma K_1)}\bigg|\notag\\
=&\bigg|(II) + \sigma^4 K_2\Big(1+4a^2 K_2^2\sigma^2\Big)
e^{\frac{(2a^2 + 1)\sigma^2}{2}}
\frac{\Phi(-K_1\sigma)\Phi(-2aK_1\sigma)}{\Phi^2(-aK_1\sigma)} \bigg|\leq \frac{c_2(a, \sigma)}{n} ;\notag\\
&\bigg|(III)+ \frac{A_1(0, a+1, a, K_1)A_2(0, a, a, K_1)}{4e^{a^2\sigma^2}\Phi^2(-a\sigma K_1)}\bigg|\notag\\
=&\bigg|(III) + a\sigma^4 K_2\Big(1+(a+1)^2 K_2^2\sigma^2\Big)e^{\frac{2a+1}{2}\sigma^2}\frac{\Phi(-(a+1)K_1\sigma)}{\Phi(-aK_1\sigma)}\bigg|\leq \frac{c_3(a, \sigma)}{n}.    
\end{align}
Specifically, for the term $(IV)$, we have 
\begin{align}\label{eq:softmax_expectation_X_upper_lower_II}
    &\bigg|(IV)+ \frac{A_2(0, 1, a, K_1)A_2^2(0, a, a, K_1)}{4e^{a^2\sigma^2}\Phi^2(-a\sigma K_1)}\bigg|=\bigg|(IV) + a^2\sigma^6 K_2^3
e^{\sigma^2/2}\frac{\Phi(-K_1\sigma)}{\Phi(-aK_1\sigma)}
\bigg| \leq \frac{c_4(a, \sigma)}{n}.
\end{align}
Lastly, by utilizing Taylor's expansion regarding the function $\EE\big[a^2\sigma^6 K_2^4
e^{\sigma^2/2}\frac{\Phi(-K_1\sigma)}{\Phi(-aK_1\sigma)}\big]$ w.r.t. $K_1$ at $K_1 = 0$, and the facts that $\EE[K_1] = 0$ and $\EE[K_1^2] = 1/d$, we can derive that 
\begin{align}\label{eq:softmax_expectation_X_bddI}
    \Big|\EE\big[K_2(IV)\big] + a^2\sigma^6e^{\frac{\sigma^2}{2}}\Big| \leq \frac{c_4(a, \sigma)}{n} + \frac{c_5(a, \sigma)}{d}.
\end{align}
Substituting the results of~\eqref{eq:softmax_expectation_X_upper_lower_I}, ~\eqref{eq:softmax_expectation_X_upper_lower_II}, and ~\eqref{eq:softmax_expectation_X_bddI} into~\eqref{eq:softmax_expectationX_I}, we complete the proof. 
\end{proof}

\begin{lemma}\label{lemma:softmax_expectation_XI}
Let 
%$x_{1, 1}, x_{2, 1}, \ldots, x_{n, 1}, x_{1, 2}, x_{2, 2}, \ldots, x_{n, 2}, x_{1, 3}, x_{2, 1}, \ldots, x_{n, 1}, x_{1, 1}, x_{2, 1}, \ldots, x_{n, 1} \sim \cN(0,1)$ 
$x_1, x_2, \ldots, x_n\sim \cN(0, \sigma^2)$ be $n$ i.i.d Gaussian random variables, and $\tilde x_1, \tilde x_2, \ldots, \tilde x_n\sim \cN(0,\sigma^2)$ be another $n$ i.i.d Gaussian random variables. In addition, $a$ is a positive scalar, and $\btheta_*, \btheta_0\in\RR^d$ are two independent random vectors following uniform $d-$dimensional sphere distribution, and we denote $K_1 = \la\btheta_0, \btheta_*\ra$, $K_2 = \|(\Ib_d-\btheta_*\btheta_*^\top)\btheta_0\|_2$. Then we have 
\begin{align*}
 &\Bigg|\EE\bigg[\frac{K_2\sum_{i_1=1}^n\sum_{i_2=1}^n\sum_{i_3=1}^n e^{-K_1(|\tilde x_{i_1}|+a(|\tilde x_{i_2}| + |\tilde x_{i_3}|))- K_2(x_{i_1} + ax_{i_2} + ax_{i_3})}|\tilde x_{i_1}||\tilde x_{i_2}|x_{i_3}}{(\sum_{i=1}^ne^{-aK_1|\tilde x_i|-a K_2x_{i}})^2}\bigg] +\frac{2}{\pi}na\sigma^4e^{\frac{\sigma^2}{2}}\Bigg|\\
 \leq&  f_{19}(a, \sigma) + \frac{nf_{20}(a, \sigma)}{d}.
\end{align*}
Here, $f_{19}(a, \sigma)$, $f_{20}(a, \sigma)$ are both analytic functions of $a$ and $\sigma$ while irrelevant with $n$ and $d$.
\end{lemma}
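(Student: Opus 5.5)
We follow the template of Lemma~\ref{lemma:softmax_expectation_X}. Write $K_1=\la\btheta_0,\btheta_*\ra$ and $K_2=\|(\Ib_d-\btheta_*\btheta_*^\top)\btheta_0\|_2$. First replace the squared denominator using $\frac{1}{S^2}=\int_0^\infty s e^{-sS}\,\mathrm{d}s$, and apply Fubini. Then condition on $K_1$ and split the triple sum $\sum_{i_1,i_2,i_3}$ by coincidence pattern of the indices: all equal; $i_1=i_2\neq i_3$; $i_1=i_3\neq i_2$; $i_2=i_3\neq i_1$; all distinct. Substituting $s'=s/n$ gives prefactors $\frac1n,\frac{n-1}{n},\frac{n-1}{n},\frac{n-1}{n},\frac{(n-1)(n-2)}{n}$. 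Each term is an integral of a product of single-coordinate conditional expectations against $[M(s/n,a,K_1)]^{n-k}$.

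The needed single-coordinate functions are as follows.
\begin{itemize}
\item All-equal: $\EE[e^{-(1+2a)(K_1|\tilde x|+K_2x)}\tilde x^2 x\,e^{-\lambda e^{-a(\cdot)}}]$, which is $A_8(\lambda,1+2a,a,K_1)$.
\item $i_1=i_2$: $A_3(\lambda,1+a,a,K_1)$ paired with $A_2(\lambda,a,a,K_1)$.
\item $i_1=i_3$: $A_7(\lambda,1+a,a,K_1)$ paired with $A_4(\lambda,a,a,K_1)$.
\item $i_2=i_3$: $A_7(\lambda,2a,a,K_1)$ paired with $A_4(\lambda,1,a,K_1)$.
\item All distinct: $A_4(\lambda,1,a,K_1)A_4(\lambda,a,a,K_1)A_2(\lambda,a,a,K_1)$.
\end{itemize}
Each of these is Lipschitz in $\lambda$ with constants depending only on $(a,\sigma)$, by the derivative bounds already derived. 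So in every term we replace $A(\lambda,\dots)$ by $A(0,\dots)$ at cost $\cO(s/n)+\cO(s^2/n^2)$. We then evaluate $\int_0^\infty s^k[M(s/n,a,K_1)]^{n-m}\mathrm{d}s$ as $k!\,\EE[n^{k+1}/(\sum_{i>m}e^{-a(\cdot)})^{k+1}]$ and apply Lemma~\ref{lemma:concentration_expectation_II}. This yields $\frac{(\cdot)}{4e^{a^2\sigma^2}\Phi^2(-aK_1\sigma)}+\cO(1/n)$ for the leading piece. The three pair terms and the all-equal term then contribute only $\cO(1)$ uniformly in $K_1$, which is absorbed into $f_{19}(a,\sigma)$.

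The dominant contribution comes from the all-distinct term, of order $n$. We have
\[
A_4(0,1,a,K_1)A_4(0,a,a,K_1)A_2(0,a,a,K_1)\big/\big(4e^{a^2\sigma^2}\Phi^2(-aK_1\sigma)\big).
\]
At $K_1=0$, $K_2=1$ this becomes
\[
A_4(0,1)=\sqrt{2/\pi}\,\sigma e^{\sigma^2/2},\quad A_4(0,a)=\sqrt{2/\pi}\,\sigma e^{a^2\sigma^2/2},\quad A_2(0,a)=-a\sigma^2e^{a^2\sigma^2/2}.
\]
Together with $4\Phi^2(0)=1$, the product divided by $4e^{a^2\sigma^2}\Phi^2(0)$ equals $-\frac{2}{\pi}a\sigma^4e^{\sigma^2/2}$. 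Multiplying by the extra factor $K_2$ and taking the expectation over $K_1$, we Taylor-expand the smooth function of $K_1$ (note $K_2=\sqrt{1-K_1^2}$) around $0$. Symmetry of $K_1$ kills the linear term, and $\EE[K_1^2]=1/d$ bounds the remainder by $c(a,\sigma)/d$. With the prefactor $\frac{(n-1)(n-2)}{n}=n+\cO(1)$, this gives $-\frac2\pi na\sigma^4e^{\sigma^2/2}+\cO(1)+\cO(n/d)$, matching the statement.

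The main obstacle is bookkeeping in the all-distinct term. There the Lipschitz replacement error is a product of three factors, so one must expand the triple product and check that every cross term carries at least one factor $s/n$. Multiplied by $n$, such terms must be integrable against $s[M]^{n-3}$ via Lemma~\ref{lemma:concentration_expectation_II} with only $\cO(1)$ total cost, which requires negative moments up to order four of $\frac1n\sum e^{-a(\cdot)}$. We would handle this exactly as in the term $(IV)$ of Lemma~\ref{lemma:softmax_expectation_X}, using a telescoping identity $xyz-x_0y_0z_0$ and uniform bounds $|A(0,\cdot)|\le c(a,\sigma)$.
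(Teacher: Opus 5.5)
Your proposal is correct and follows the paper's proof essentially step for step. You use the same identity $1/S^2=\int_0^\infty s e^{-sS}\,\mathrm{d}s$, the same five-way index decomposition with identical prefactors and $A_2,A_3,A_4,A_7,A_8$ pairings, Lipschitz replacement of $\lambda$ by $0$ combined with Lemma~\ref{lemma:concentration_expectation_II}, and a Taylor expansion in $K_1$ of the all-distinct leading term to extract $-\frac{2}{\pi}na\sigma^4e^{\sigma^2/2}$; your smoothness claim holds because $A_2(0,a,a,K_1)$ already carries a factor $K_2$, so the extra $K_2$ yields $K_2^2=1-K_1^2$.
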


\begin{proof}[Proof of Lemma~\ref{lemma:softmax_expectation_XI}]
Following a similar procedure in the proof of Lemma~\ref{lemma:softmax_expectation_X}, we can obtain that
\begin{align}\label{eq:softmax_expectationXI_I}
&\EE\bigg[\frac{K_2\sum_{i_1=1}^n\sum_{i_2=1}^n\sum_{i_3=1}^n e^{-K_1(|\tilde x_{i_1}|+a(|\tilde x_{i_2}| + |\tilde x_{i_3}|))- K_2(x_{i_1} + ax_{i_2} + ax_{i_3})}|\tilde x_{i_1}||\tilde x_{i_2}|x_{i_3}}{(\sum_{i=1}^ne^{-aK_1|\tilde x_i|-a K_2x_{i}})^2}\bigg] \notag\\
=& \int_0^\infty s'\EE\bigg[K_2
\sum_{i_1,i_2,i_3=1}^n e^{-K_1(|\tilde x_{i_1}|+a(|\tilde x_{i_2}| + |\tilde x_{i_3}|))- K_2(x_{i_1} + ax_{i_2} + ax_{i_3})}|\tilde x_{i_1}||\tilde x_{i_2}|x_{i_3}
\exp\!\Big(\!-\!s' \sum_{i=1}^n e^{-aK_1|\tilde x_i|-a K_2x_{i}}\Big)
\bigg] \mathrm{d}s'\notag\\
=&n\EE\bigg[K_2\int_0^\infty s'\EE\big[e^{-K_1(1+2a) |\tilde x_1| - K_2 (1+2a)x_1} x_1|\tilde x_1|^2 e^{-s' e^{-a (K_1|\tilde x_1| +  K_2 x_1)}}|K_1\big]\Big(\EE\big[e^{-s' e^{-a (K_1|\tilde x_1| +  K_2 x_1)}}|K_1\big]\Big)^{n-1}\mathrm{d}s'\bigg] \notag\\
&+ n(n-1)\EE\bigg[K_2\int_0^\infty s'\EE\big[e^{-K_1(|\tilde x_1|+ 2a|\tilde x_2|) - K_2(x_1+2ax_2)} |\tilde x_1| |\tilde x_2| x_2\notag\\
&\quad\cdot e^{-s'e^{-a (K_1|\tilde x_1| +  K_2 x_1)} -s' e^{-a (K_1|\tilde x_2| +  K_2 x_2)}}|K_1\big]\Big(\EE\big[e^{-s' e^{-a (K_1|\tilde x_1| +  K_2 x_1)}}|K_1\big]\Big)^{n-2}\mathrm{d}s'\bigg]\notag\\
&+ n(n-1)\EE\bigg[K_2\int_0^\infty s'\EE\big[e^{-K_1(a|\tilde x_1|+ (a+1)|\tilde x_2|) - K_2(ax_1+(a+1)x_2)}x_1|\tilde x_2|^2  \notag\\
&\quad \cdot e^{-s'e^{-a (K_1|\tilde x_1| +  K_2 x_1)} -s' e^{-a (K_1|\tilde x_2| +  K_2 x_2)}}|K_1\big]\Big(\EE\big[e^{-s' e^{-a (K_1|\tilde x_1| +  K_2 x_1)}}|K_1\big]\Big)^{n-2}\mathrm{d}s'\bigg]\notag\\
&+ n(n-1)\EE\bigg[K_2\int_0^\infty s'\EE\big[e^{-K_1(a|\tilde x_1|+ (a+1)|\tilde x_2|) - K_2(ax_1+(a+1)x_2)} |\tilde x_1||\tilde x_2|x_2 \notag\\
&\quad \cdot e^{-s'e^{-a (K_1|\tilde x_1| +  K_2 x_1)} -s' e^{-a (K_1|\tilde x_2| +  K_2 x_2)}}|K_1\big] \Big(\EE\big[e^{-s' e^{-a (K_1|\tilde x_1| +  K_2 x_1)}}|K_1\big]\Big)^{n-2}\mathrm{d}s'\bigg]\notag\\
&+ n(n-1)(n-2)\EE\bigg[K_2\int_0^\infty s'\EE\big[e^{-K_1(|\tilde x_1|+ a|\tilde x_2| + a|\tilde x_3|) - K_2(x_1+ax_2 + ax_3)} |\tilde x_1||\tilde x_2|x_3\notag\\
&\quad \cdot e^{-s'e^{-a (K_1|\tilde x_1| +  K_2 x_1)} -s' e^{-a (K_1|\tilde x_2| +  K_2 x_2)}-s' e^{-a (K_1|\tilde x_3| +  K_2 x_3)}}|K_1\big]\Big(\EE\big[e^{-s' e^{-a (K_1|\tilde x_1| +  K_2 x_1)}}|K_1\big]\Big)^{n-3}\mathrm{d}s'\bigg]\notag\\
=&\frac{1}{n}\EE\bigg[K_2\int_0^\infty s\EE\big[e^{-K_1(1+2a) |\tilde x_1| - K_2 (1+2a)x_1} x_1|\tilde x_1|^2 e^{-\frac{s}{n} e^{-a (K_1|\tilde x_1| +  K_2 x_1)}}|K_1\big]\Big(\EE\big[e^{-\frac{s}{n} e^{-a (K_1|\tilde x_1| +  K_2 x_1)}}|K_1\big]\Big)^{n-1}\mathrm{d}s\bigg] \notag\\
&+ \frac{n-1}{n}\EE\bigg[K_2\int_0^\infty s\EE\big[e^{-K_1(|\tilde x_1|+ 2a|\tilde x_2|) - K_2(x_1+2ax_2)} |\tilde x_1| |\tilde x_2| x_2\notag\\
&\quad\cdot e^{-\frac{s}{n}e^{-a (K_1|\tilde x_1| +  K_2 x_1)} -\frac{s}{n} e^{-a (K_1|\tilde x_2| +  K_2 x_2)}}|K_1\big]\Big(\EE\big[e^{-\frac{s}{n} e^{-a (K_1|\tilde x_1| +  K_2 x_1)}}|K_1\big]\Big)^{n-2}\mathrm{d}s\bigg]\notag\\
&+ \frac{n-1}{n}\EE\bigg[K_2\int_0^\infty s\EE\big[e^{-K_1(a|\tilde x_1|+ (a+1)|\tilde x_2|) - K_2(ax_1+(a+1)x_2)}x_1|\tilde x_2|^2  \notag\\
&\quad \cdot e^{-\frac{s}{n}e^{-a (K_1|\tilde x_1| +  K_2 x_1)} -\frac{s}{n} e^{-a (K_1|\tilde x_2| +  K_2 x_2)}}|K_1\big]\Big(\EE\big[e^{-\frac{s}{n}e^{-a (K_1|\tilde x_1| +  K_2 x_1)}}|K_1\big]\Big)^{n-2}\mathrm{d}s\bigg]\notag\\
&+ \frac{n-1}{n}\EE\bigg[K_2\int_0^\infty s\EE\big[e^{-K_1(a|\tilde x_1|+ (a+1)|\tilde x_2|) - K_2(ax_1+(a+1)x_2)} |\tilde x_1||\tilde x_2|x_2 \notag\\
&\quad \cdot e^{-\frac{s}{n}e^{-a (K_1|\tilde x_1| +  K_2 x_1)} -\frac{s}{n} e^{-a (K_1|\tilde x_2| +  K_2 x_2)}}|K_1\big] \Big(\EE\big[e^{-\frac{s}{n} e^{-a (K_1|\tilde x_1| +  K_2 x_1)}}|K_1\big]\Big)^{n-2}\mathrm{d}s\bigg]\notag\\
&+ n(n-1)(n-2)\EE\bigg[K_2\int_0^\infty s\EE\big[e^{-K_1(|\tilde x_1|+ a|\tilde x_2| + a|\tilde x_3|) - K_2(x_1+ax_2 + ax_3)} |\tilde x_1||\tilde x_2|x_3\notag\\
&\quad \cdot e^{-\frac{s}{n}e^{-a (K_1|\tilde x_1| +  K_2 x_1)} -\frac{s}{n} e^{-a (K_1|\tilde x_2| +  K_2 x_2)}-\frac{s}{n} e^{-a (K_1|\tilde x_3| +  K_2 x_3)}}|K_1\big]\Big(\EE\big[e^{-\frac{s}{n}\frac{s}{n} e^{-a (K_1|\tilde x_1| +  K_2 x_1)}}|K_1\big]\Big)^{n-3}\mathrm{d}s\bigg]\notag\\
=&\frac{1}{n}\EE\bigg[K_2\underbrace{\int_0^\infty s A_8(s/n, 1+2a, a, K_1)[M(s/n, a, K_1)]^{n-1}\mathrm{d}s}_{(I)}\bigg]  \notag\\
&+ \frac{n-1}{n}\EE\bigg[K_2\underbrace{\int_0^\infty s A_4(s/n, 1, a, K_1)A_7(s/n, 2a, a, K_1)[M(s/n, a, K_1)]^{n-2}\mathrm{d}s}_{(II)}\bigg]\notag\\
&+ \frac{n-1}{n}\EE\bigg[K_2\underbrace{\int_0^\infty s A_2(s/n, a, a, K_1)A_3(s/n, a+1, a, K_1)[M(s/n, a, K_1)]^{n-2}\mathrm{d}s}_{(III)}\bigg]\notag\\
&+ \frac{n-1}{n}\EE\bigg[K_2\underbrace{\int_0^\infty s A_4(s/n, a, a, K_1)A_7(s/n, a+1, a, K_1)[M(s/n, a, K_1)]^{n-2}\mathrm{d}s}_{(IV)}\bigg]\notag\\
&+ \frac{(n-1)(n-2)}{n}\EE\bigg[K_2\underbrace{\int_0^\infty s A_2(s/n, a, a, K_1)A_4(s/n, a, a, K_1)A_4(s/n, 1, a, K_1)[M(s/n, a, K_1)]^{n-3}\mathrm{d}s}_{(V)}\bigg].
\end{align}
Here, $A_2(\lambda, \alpha, a, K_1)$ and $M(\lambda, a, K_1)$ share the same definitions as in the proof in Lemma~\ref{lemma:softmax_expectation_I}, $A_3(\lambda, \alpha, a, K_1)$ and $A_4(\lambda, \alpha, a, K_1)$  share the same definitions as in the proof in Lemma~\ref{lemma:softmax_expectation_III}, and $A_7(\lambda, \alpha, a, K_1)$ and $A_8(\lambda, \alpha, a, K_1)$ shares the same definition as in the proof in Lemma~\ref{lemma:softmax_expectation_VIII}.
% In addition, the term $A_8(\lambda, \alpha, a, K_1)$ is defined as 
% \begin{align*}
%     A_8(\lambda, \alpha, a, K_1) = \EE\big[e^{-K_1\alpha |\tilde x_1| - K_2 \alpha x_1} x_1|\tilde x_1|^2 e^{-\lambda e^{-a (K_1|\tilde x_1| +  K_2 x_1)}}|K_1\big].
% \end{align*}
% We can further calculate that the derivatives of 
% $A_8(\lambda, \alpha, a, K_1)$ w.r.t. $\lambda$ as 
% \begin{align*}
%     \bigg|\frac{\mathrm{d}A_8(\lambda, \alpha, a, K_1)}{\mathrm{d}\lambda} \bigg| =& \Big|-\EE\big[e^{-(\alpha+a) K_1 |\tilde x_1| - (\alpha+a) K_2 x_1} |\tilde x_1|^2x_1 e^{-\lambda e^{-a (K_1|\tilde x_1| +  K_2 x_1)}}|K_1\big]\Big|\\
%     \leq &\EE\big[|x_1||\tilde x_{1}|^2e^{-(\alpha+a) K_1 |\tilde x_1| - (\alpha+a) K_2 x_1} |K_1\big]\leq 6\sigma^2 e^{(a+\alpha)^2\sigma^2},
% \end{align*}
% which implies that $A_8(\lambda, \alpha, a, K_1)$ is Lipschitz continuous w.r.t. $\lambda$. 
Then, through similar procedures in the proof of Lemma~\ref{lemma:softmax_expectation_VII} and~\ref{lemma:softmax_expectation_VIII} (by utilizing the Lipschitz  continuities of these functions), we can derive that:  
\begin{align}\label{eq:softmax_expectation_XI_upper_lower_I}
&\bigg|(I)+ \frac{A_8(0, 2a+1, a, K_1)}{4e^{a^2\sigma^2}\Phi^2(-a\sigma K_1)}\bigg|\notag\\
=&\bigg|(I)+ \frac{(1\!+\!2a)\sigma^4 K_2 e^{\frac{2a^2+4a+1}{2}\sigma^2}
\big[(1\!+\!(1\!+\!2a)^2K_1^2\sigma^2)\Phi(\!-\!(1\!+\!2a)K_1\sigma)
\!-\!(1\!+\!2a)K_1\sigma\phi((1\!+\!2a)K_1\sigma)\big]}{2\Phi^2(-a\sigma K_1)}\bigg|\leq  \frac{c_1(a, \sigma)}{n};\notag\\
&\bigg|(II)+ \frac{A_4(0, 1, a, K_1)A_7(0, 2a, a, K_1)}{4e^{a^2\sigma^2}\Phi^2(-a\sigma K_1)}\bigg|\notag\\
=&\bigg|(II) + \frac{a\sigma^2 K_2 e^{a^2\sigma^2}
\big(\sqrt{\tfrac{2}{\pi}}\sigma e^{\frac{\sigma^2 K_2^2}{2}}
-2K_1\sigma^2 e^{\frac{\sigma^2}{2}}\Phi(-K_1\sigma)\big)\big(\sigma\phi(2aK_1\sigma)
-2aK_1\sigma^2\Phi(-2aK_1\sigma)\big)}{\Phi^2(-a\sigma K_1)} \bigg|\leq \frac{c_2(a, \sigma)}{n} ;\notag\\
&\bigg|(III)+ \frac{A_2(0, a, a, K_1)A_3(0, a+1, a, K_1)}{4e^{a^2\sigma^2}\Phi^2(-a\sigma K_1)}\bigg|\notag\\
=&\bigg|(III) +\frac{a\sigma^2 K_2 e^{\frac{(2a+1)\sigma^2}{2}}}{\Phi(-aK_1\sigma)}\Big(\sigma^2(1+(a+1)^2K_1^2\sigma^2)\Phi(-(a+1)K_1\sigma)-(a+1)K_1\sigma^3\phi((a+1)K_1\sigma)\Big)\bigg|\leq \frac{c_3(a, \sigma)}{n};\notag\\
&\bigg|(IV)+ \frac{A_4(0, a, a, K_1)A_7(0, a+1, a, K_1)}{4e^{a^2\sigma^2}\Phi^2(-a\sigma K_1)}\bigg|\notag\\
=&\bigg|(IV) + \frac{(a\!+\!1)\sigma^2K_2e^{\frac{(2a\!+\!1)\sigma^2}{2}}\big(\!\sqrt{\tfrac{2}{\pi}}\sigma e^{\frac{a^2\sigma^2 K_2^2}{2}}
\!-\!2aK_1\sigma^2 e^{\frac{a^2\sigma^2}{2}}\Phi(\!-\!aK_1\sigma)\big)}{2\Phi^2(-a\sigma K_1)}\notag\\
&\quad \cdot \big(\sigma\phi((a\!+\!1)K_1\sigma)
\!-\!(a\!+\!1)K_1\sigma^2\Phi(\!-\!(a\!+\!1)K_1\sigma)\big)\bigg|\leq \frac{c_4(a, \sigma)}{n}.
\end{align}
Specifically, for the term $(V)$, we have 
\begin{align}\label{eq:softmax_expectation_XI_upper_lower_II}
&\Bigg|(V) + \frac{A_2(0, a, a, K_1)A_4(0, a, a, K_1)A_4(0, 1, a, K_1)}{4e^{a^2\sigma^2}\Phi^2(-a\sigma K_1)}
\Bigg|\notag\\
    =&\Bigg|(V) + \frac{a\sigma^2 K_2\Big(\sqrt{\frac{2}{\pi}}\sigma e^{-\tfrac{a^2\sigma^2 K_1^2}{2}}
-2aK_1\sigma^2\Phi(-aK_1\sigma)\Big)}{2\Phi(-a\sigma K_1)}\Big(\sqrt{\tfrac{2}{\pi}}\sigma e^{\frac{\sigma^2 K_2^2}{2}}
-2K_1\sigma^2 e^{\frac{\sigma^2}{2}}\Phi(-K_1\sigma)\Big)
\Bigg| \leq \frac{c_5(a, \sigma)}{n}.
\end{align}
Lastly, by utilizing Taylor's expansion regarding the function above w.r.t. $K_1$ at $K_1 = 0$, and the facts that $\EE[K_1] = 0$ and $\EE[K_1^2] = 1/d$, we can derive that 
\begin{align}\label{eq:softmax_expectation_XI_bddI}
    \Big|\EE\big[K_2(V)\big] + \frac{2}{\pi}a\sigma^4e^{\frac{\sigma^2}{2}}\Big| \leq \frac{c_5(a, \sigma)}{n} + \frac{c_6(a, \sigma)}{d}.
\end{align}
Substituting the results of~\eqref{eq:softmax_expectation_XI_upper_lower_I}, ~\eqref{eq:softmax_expectation_XI_upper_lower_II}, and ~\eqref{eq:softmax_expectation_XI_bddI} into~\eqref{eq:softmax_expectationXI_I}, we complete the proof. 
\end{proof}

\begin{lemma}\label{lemma:softmax_expectation_XII}
Let 
$x_1, x_2, \ldots, x_n\sim \cN(0, \sigma^2)$ be $n$ i.i.d Gaussian random variables, and $\tilde x_1, \tilde x_2, \ldots, \tilde x_n\sim \cN(0,\sigma^2)$ be another $n$ i.i.d Gaussian random variables. In addition, $a$ is a positive scalar, and $\btheta_*, \btheta_0\in\RR^d$ are two independent random vectors following uniform $d-$dimensional sphere distribution, and we denote $K_1 = \la\btheta_0, \btheta_*\ra$, $K_2 = \|(\Ib_d-\btheta_*\btheta_*^\top)\btheta_0\|_2$. Then we have 
\begin{align*}
&\Bigg|\EE\bigg[\frac{K_2\sum_{i_1=1}^n\sum_{i_2=1}^n e^{-K_1((1+a)|\tilde x_{i_1}|+a|\tilde x_{i_2}|)- K_2((1+a)x_{i_1} + ax_{i_2})}x_{i_2}}{(\sum_{i=1}^ne^{-aK_1|\tilde x_i|-a K_2x_{i}})^2}\bigg] + a\sigma^2e^{(2a+1)\sigma^2/2} \Bigg|\leq  \frac{f_{21}(a, \sigma)}{n} + \frac{f_{22}(a, \sigma)}{d}.
\end{align*}
Here, $f_{21}(a, \sigma)$ and $f_{22}(a, \sigma)$ are both analytic functions of $a$ and $\sigma$ while irrelevant with $n$ and $d$.
\end{lemma}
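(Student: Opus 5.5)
The plan follows the template of Lemma~\ref{lemma:softmax_expectation_II} and Lemma~\ref{lemma:softmax_expectation_IX}. First apply the identity $1/S^2=\int_0^\infty s e^{-sS}\,\mathrm{d}s$ to the squared denominator, use Fubini, and use exchangeability of the pairs $(x_i,\tilde x_i)$ conditional on $K_1$. The double sum over $(i_1,i_2)$ then splits into a diagonal part ($i_1=i_2$, $n$ terms) and an off-diagonal part ($n(n-1)$ terms). After the substitution $s'=s/n$, and writing $M(\lambda,a,K_1)$ as in Lemma~\ref{lemma:softmax_expectation_I}, we get
\begin{align*}
&\frac{1}{n}\EE\bigg[K_2\int_0^\infty s\,A_2(s/n,1+2a,a,K_1)[M(s/n,a,K_1)]^{n-1}\mathrm{d}s\bigg]\\
&\quad+\frac{n-1}{n}\EE\bigg[K_2\int_0^\infty s\,A_5(s/n,1+a,a,K_1)A_2(s/n,a,a,K_1)[M(s/n,a,K_1)]^{n-2}\mathrm{d}s\bigg].
\end{align*}
Here $A_2$ is from Lemma~\ref{lemma:softmax_expectation_I} and $A_5$ is from Lemma~\ref{lemma:softmax_expectation_V}.

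For the diagonal term, bound $|A_2(s/n,\cdot)|$ uniformly by its value at $\lambda=0$ plus a Lipschitz term. Lemma~\ref{lemma:concentration_expectation_II} gives $\int_0^\infty s[M]^{n-1}\mathrm{d}s=\EE[n^2/(\sum_{i\ge2}e^{-a(\cdot)})^2\mid K_1]=\cO(1)$, uniformly in $|K_1|\le1$. So the diagonal term is $\cO(1/n)$ and is absorbed into $f_{21}/n$.

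For the off-diagonal term, use the Lipschitz bounds in $\lambda$ for $A_2$ (from Lemma~\ref{lemma:softmax_expectation_I}) and for $A_5$ (the derivative is bounded by $\EE[e^{-(1+2a)(\cdot)}]\le 2e^{(1+2a)^2\sigma^2/2}$). These replace the product by its value at $\lambda=0$, with errors $c\,s/n+c\,s^2/n^2$. The values at $\lambda=0$ are
\begin{align*}
A_5(0,1+a,a,K_1)&=2e^{(1+a)^2\sigma^2/2}\Phi(-(1+a)K_1\sigma),\\
A_2(0,a,a,K_1)&=-2a\sigma^2K_2e^{a^2\sigma^2/2}\Phi(-aK_1\sigma).
\end{align*}
Applying Lemma~\ref{lemma:concentration_expectation_II} to the second, third and fourth inverse moments of $\frac1n\sum_{i\ge3}e^{-a(K_1|\tilde x_i|+K_2x_i)}$, whose conditional mean is $2e^{a^2\sigma^2/2}\Phi(-aK_1\sigma)$, the integral becomes the leading term
\[
\frac{A_5(0,1+a,a,K_1)A_2(0,a,a,K_1)}{4e^{a^2\sigma^2}\Phi^2(-aK_1\sigma)}=-a\sigma^2K_2e^{(2a+1)\sigma^2/2}\frac{\Phi(-(1+a)K_1\sigma)}{\Phi(-aK_1\sigma)},
\]
plus an error of $c(a,\sigma)/n$.

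Finally, multiply by $K_2$ and take the expectation over $K_1$. Expand the smooth function $g(k)=(1-k^2)\Phi(-(1+a)k\sigma)/\Phi(-ak\sigma)$ to second order at $k=0$, where $g(0)=1$. Since $K_1$ is symmetric, $\EE[K_1^2]=1/d$, and $|g''|$ is bounded on $[-1,1]$ by a constant depending only on $a,\sigma$ (note $\Phi(-a\sigma)>0$), we get $|\EE[K_2^2\Phi(-(1+a)K_1\sigma)/\Phi(-aK_1\sigma)]-1|\le c/d$. The factor $(n-1)/n$ changes things only by $\cO(1/n)$, which gives the stated bound.

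The main obstacle is uniformity of the error constants in $K_1\in[-1,1]$. The inverse moment bounds of Lemma~\ref{lemma:concentration_expectation_II} must hold with constants depending only on $a,\sigma$, uniformly over $K_1$. I would secure this using the lower bound $\Phi(-aK_1\sigma)\ge\Phi(-a\sigma)$, exactly as in the earlier lemmas.
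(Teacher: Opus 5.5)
Your proposal is correct and follows the paper's proof essentially step for step. Both use the representation $1/S^2=\int_0^\infty s e^{-sS}\,\mathrm{d}s$ and the diagonal/off-diagonal split into the terms built from $A_2(s/n,1+2a,a,K_1)$ and $A_5(s/n,1+a,a,K_1)A_2(s/n,a,a,K_1)$. Both then make the Lipschitz-in-$\lambda$ replacement together with Lemma~\ref{lemma:concentration_expectation_II}, and finish with a second-order Taylor expansion in $K_1$ using $\EE[K_1]=0$ and $\EE[K_1^2]=1/d$.

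Your leading term $-a\sigma^2K_2e^{(2a+1)\sigma^2/2}\Phi(-(1+a)K_1\sigma)/\Phi(-aK_1\sigma)$ has the correct sign. The paper's intermediate expression $(II)+A_5(0,1+a,a,K_1)A_2(0,a,a,K_1)/(4e^{a^2\sigma^2}\Phi^2(-a\sigma K_1))$ contains a sign slip, which its right-hand side corrects. The uniformity over $K_1\in[-1,1]$ via $\Phi(-aK_1\sigma)\ge\Phi(-a\sigma)$ that you flag is exactly what the paper uses implicitly.
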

\begin{proof}[Proof of Lemma~\ref{lemma:softmax_expectation_XII}]
Following a similar procedure in the proof of Lemma~\ref{lemma:softmax_expectation_IX}, we can obtain that
\begin{align}\label{eq:softmax_expectationXII_I}
&\EE\bigg[\frac{K_2\sum_{i_1=1}^n\sum_{i_2=1}^n e^{-K_1((1+a)|\tilde x_{i_1}|+a|\tilde x_{i_2}|)- K_2((1+a)x_{i_1} + ax_{i_2})}x_{i_2}}{(\sum_{i=1}^ne^{-aK_1|\tilde x_i|-a K_2x_{i}})^2}\bigg] \notag\\
=& \int_0^\infty s' \EE\bigg[K_2
\sum_{i_1, i_2=1}^ne^{-K_1((1+a)|\tilde x_{i_1}|+a|\tilde x_{i_2}|)- K_2((1+a)x_{i_1}+ax_{i_2})}x_{i_2}
\exp\!\Big(\!-\!s' \sum_{i=1}^n e^{-aK_1|\tilde x_i|-a K_2x_{i}}\Big)
\bigg] \mathrm{d}s'\notag\\
=&n\EE\bigg[K_2\int_0^\infty s' \EE\big[e^{-K_1(1+2a) |\tilde x_1| - K_2 (1+2a)x_1} e^{-s' e^{-a (K_1|\tilde x_1| +  K_2 x_1)}}x_1|K_1\big]\Big(\EE\big[e^{-s' e^{-a (K_1|\tilde x_1| +  K_2 x_1)}}|K_1\big]\Big)^{n-1}\mathrm{d}s'\bigg] \notag\\
 &+ n(n-1)\EE\bigg[K_2\int_0^\infty s'\EE\big[e^{-K_1((1+a)|\tilde x_1|+ a|\tilde x_2|) - K_2((1+a)x_1+ax_2)} x_2e^{-s'e^{-a (K_1|\tilde x_1| +  K_2 x_1)} -s' e^{-a (K_1|\tilde x_2| +  K_2 x_2)}}|K_1\big]\notag\\
 &\qquad\qquad\qquad \cdot\Big(\EE\big[e^{-s' e^{-a (K_1|\tilde x_1| +  K_2 x_1)}}|K_1\big]\Big)^{n-2}\mathrm{d}s'\bigg]\notag\\
=&\frac{1}{n}\EE\bigg[K_2\int_0^\infty s\EE\big[e^{-K_1(1+2a) |\tilde x_1| - K_2 (1+2a)x_1} e^{-\frac{s}{n}e^{-a (K_1|\tilde x_1| +  K_2 x_1)}}x_1|K_1\big]\Big(\EE\big[e^{-\frac{s}{n} e^{-a (K_1|\tilde x_1| +  K_2 x_1)}}|K_1\big]\Big)^{n-1}\mathrm{d}s\bigg] \notag\\
 &+ \frac{n-1}{n}\EE\bigg[K_2\int_0^\infty s\EE\big[e^{-K_1((1+a)|\tilde x_1|+ a|\tilde x_2|) - K_2((1+a)x_1+ax_2)} x_2e^{-\frac{s}{n}e^{-a (K_1|\tilde x_1| +  K_2 x_1)} -\frac{s}{n} e^{-a (K_1|\tilde x_2| +  K_2 x_2)}}|K_1\big]\notag\\
 &\qquad\qquad\qquad \cdot\Big(\EE\big[e^{-\frac{s}{n} e^{-a (K_1|\tilde x_1| +  K_2 x_1)}}|K_1\big]\Big)^{n-2}\mathrm{d}s\bigg]\notag\\
=&\frac{1}{n}\EE\bigg[K_2\underbrace{\int_0^\infty sA_2(s/n, 1+2a, a, K_1)[M(s/n, a, K_1)]^{n-1}\mathrm{d}s}_{(I)}\bigg]\notag\\
&+ \frac{n-1}{n}\EE\bigg[K_2\underbrace{\int_0^\infty sA_5(s/n, 1+a, a, K_1) A_2(s/n, a, a, K_1)[M(s/n, a, K_1)]^{n-2}\mathrm{d}s}_{(II)}\bigg],
\end{align}
Here, $A_2(\lambda, \alpha, a, K_1)$ and $M(\lambda, a, K_1)$ share the same definitions as in the proof in Lemma~\ref{lemma:softmax_expectation_I}, and $A_5(\lambda, \alpha, a, K_1)$ shares the same definitions as in the proof in Lemma~\ref{lemma:softmax_expectation_V}.
Then, through similar procedures in the proof of previous lemmas (by utilizing the Lipschitz  continuities of these functions), we can derive that:  
\begin{align}\label{eq:softmax_expectation_XII_upper_lower_I}
    \bigg|(I) + \frac{A_2(0, 2a+1, a, K_1)}{4e^{a^2\sigma^2}\Phi^2(-a\sigma K_1)}\bigg| = \bigg|(I) + \frac{(1+2a)\sigma^2 K_2 e^{\frac{(2a^2+4a+1)\sigma^2}{2}}\Phi(-(1+2a)K_1\sigma)}{2\Phi(-a\sigma K_1)}\bigg| \leq \frac{c_1(a, \sigma)}{n}, 
\end{align}
and
\begin{align}\label{eq:softmax_expectation_XII_upper_lower_II}
    \bigg|(II) + \frac{A_5(0, a+1, a, K_1)A_2(0, a, a, K_1)}{4e^{a^2\sigma^2}\Phi^2(-a\sigma K_1)}\bigg|=\bigg|(II) + \frac{a\sigma^2 K_2e^{\frac{(2a+1)\sigma^2}{2}}
\Phi(-(1+a)K_1\sigma)}{\Phi(-aK_1\sigma)}\bigg| \leq \frac{c_2(a, \sigma)}{n}.
\end{align}
Specifically, by considering Taylor's expansion at $K_1 = 0$ of the function above, we can obtain
\begin{align}\label{eq:softmax_expectation_XII_bddI}
    \bigg|\EE\big[K_2(II)\big] +a\sigma^2e^{(2a+1)\sigma^2/2} \bigg| \leq \frac{c_2(a, \sigma)}{n} + \frac{c_3(a, \sigma)}{d}
\end{align}
Substituting the results of~\eqref{eq:softmax_expectation_XII_upper_lower_I}, ~\eqref{eq:softmax_expectation_XII_upper_lower_II}, and~\eqref{eq:softmax_expectation_XII_bddI} into~\eqref{eq:softmax_expectationXII_I}, we complete the proof. 
\end{proof}

\begin{lemma}\label{lemma:softmax_expectation_XIII}
Let 
$x_1, x_2, \ldots, x_n\sim \cN(0, \sigma^2)$ be $n$ i.i.d Gaussian random variables, and $\tilde x_1, \tilde x_2, \ldots, \tilde x_n\sim \cN(0,\sigma^2)$ be another $n$ i.i.d Gaussian random variables. In addition, $a$ is a positive scalar, and $\btheta_*, \btheta_0\in\RR^d$ are two independent random vectors following uniform $d-$dimensional sphere distribution, and we denote $K_1 = \la\btheta_0, \btheta_*\ra$, $K_2 = \|(\Ib_d-\btheta_*\btheta_*^\top)\btheta_0\|_2$. Then we have 
\begin{align*}
 \Bigg|\EE\bigg[\frac{K_2\sum_{i_1=1}^n\sum_{i_2=1}^ne^{-aK_1(|\tilde x_{i_1}|+|\tilde x_{i_2}|)- aK_2(x_{i_1} + x_{i_2})}x_{i_1}x_{i_2}^2}{\big(\sum_{i=1}^ne^{-aK_1|\tilde x_i|-a K_2 x_{i}}\big)^2}\bigg] +a\sigma^4(1+a^2\sigma^2)\Bigg| \leq  \frac{f_{23}(a, \sigma)}{n} + \frac{f_{24}(a, \sigma)}{d}.
\end{align*}
Here, $f_{23}(a, \sigma)$, $f_{24}(a, \sigma)$ are both analytic functions of $a$ and $\sigma$ while irrelevant with $n$ and $d$.
\end{lemma}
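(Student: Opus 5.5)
We follow the template of Lemma~\ref{lemma:softmax_expectation_II}, since the target has the same denominator squared and the same weights $e^{-a(K_1|\tilde x|+K_2x)}$ in the numerator. First we write $1/S^2=\int_0^\infty s e^{-sS}\,\mathrm{d}s$ and apply Fubini. Then we split the double sum into the diagonal $i_1=i_2$ and the off-diagonal $i_1\neq i_2$ terms, and use exchangeability conditionally on $K_1$. After the substitution $s'=s/n$ this gives
\begin{align*}
&\frac{1}{n}\EE\Big[K_2\int_0^\infty s\,A_6(s/n,2a,a,K_1)[M(s/n,a,K_1)]^{n-1}\mathrm{d}s\Big]\\
&\quad+\frac{n-1}{n}\EE\Big[K_2\int_0^\infty s\,A_2(s/n,a,a,K_1)A_1(s/n,a,a,K_1)[M(s/n,a,K_1)]^{n-2}\mathrm{d}s\Big].
\end{align*}
Here $A_1,A_2,M$ are as in Lemma~\ref{lemma:softmax_expectation_I} and $A_6$ is as in Lemma~\ref{lemma:softmax_expectation_VII}.

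\textbf{Diagonal term.} By the Lipschitz bound on $A_6$ in $\lambda$ from Lemma~\ref{lemma:softmax_expectation_VII}, $|A_6(s/n,\cdot)-A_6(0,\cdot)|\le c\,s/n$. This reduces the integral to $A_6(0,2a,a,K_1)\,\EE[n^2/(\sum_{i\ge2}e^{-a(\cdot)})^2\mid K_1]$ plus an $\cO(1/n)$ term involving the third inverse moment. Lemma~\ref{lemma:concentration_expectation_II} bounds both inverse moments by constants depending only on $a,\sigma$. The integral is therefore $\cO(1)$, and after the prefactor $1/n$ the diagonal term contributes $f(a,\sigma)/n$.

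\textbf{Off-diagonal term.} We bound the product $A_2A_1$ exactly as in~\eqref{eq:softmax_expectation_VII_A2A1}, now with both arguments equal to $a$:
\[
|A_2(\lambda,a,a,K_1)A_1(\lambda,a,a,K_1)-A_2(0,a,a,K_1)A_1(0,a,a,K_1)|\le c_2\lambda+c_3\lambda^2.
\]
The base values are $A_2(0,a,a,K_1)=-2a\sigma^2K_2e^{a^2\sigma^2/2}\Phi(-aK_1\sigma)$ and $A_1(0,a,a,K_1)=2\sigma^2(a^2\sigma^2K_2^2+1)e^{a^2\sigma^2/2}\Phi(-aK_1\sigma)$. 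Integrating against $s[M]^{n-2}$ gives the second inverse moment, and the $\lambda$, $\lambda^2$ remainders give the third and fourth inverse moments times $1/n$ and $1/n^2$. Lemma~\ref{lemma:concentration_expectation_II} supplies $\EE[n^2/(\sum_{i\ge3}e^{-a(\cdot)})^2\mid K_1]=\frac{1}{4e^{a^2\sigma^2}\Phi^2(-aK_1\sigma)}+\cO(1/n)$. The factors $\Phi^2(-aK_1\sigma)$ and $e^{a^2\sigma^2}$ then cancel exactly, so the off-diagonal term equals
\[
-a\sigma^4K_2^2(a^2\sigma^2K_2^2+1)+\cO(1/n).
\]

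\textbf{Averaging and the main obstacle.} Finally we average over $K_1$. We use $K_2^2=1-K_1^2$, $\EE[K_1^2]=1/d$ and $\EE[K_1^4]=\cO(1/d^2)$, so $\EE[K_2^2(a^2\sigma^2K_2^2+1)]=1+a^2\sigma^2+\cO(1/d)$. Since this is a polynomial identity, no Taylor argument is needed. The result is $-a\sigma^4(1+a^2\sigma^2)$ up to $f_{23}/n+f_{24}/d$, which is the claimed bound; the leftover $(n-1)/n$ factor also contributes only $\cO(1/n)$.

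The main obstacle is checking that the constant in Lemma~\ref{lemma:concentration_expectation_II} for the second inverse moment is exactly $1/(4e^{a^2\sigma^2}\Phi^2(-aK_1\sigma))$ up to $\cO(1/n)$, uniformly in $K_1\in[-1,1]$. This uniformity is what makes the leading constant exact. We also need the remainder constants to stay uniform in $a$ on compact sets, which holds because $\Phi(-a\sigma)$ is bounded away from zero there.
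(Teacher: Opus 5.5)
Your proposal is correct and follows the paper's proof essentially step by step. Both use the representation $1/S^2=\int_0^\infty s e^{-sS}\,\mathrm{d}s$, split into a diagonal $A_6(s/n,2a,a,K_1)$ term and an off-diagonal $A_2(s/n,a,a,K_1)A_1(s/n,a,a,K_1)$ term, control each by Lipschitz continuity in $\lambda$, and evaluate the inverse moments with Lemma~\ref{lemma:concentration_expectation_II}. The only difference is cosmetic: you average over $K_1$ with the exact identity $K_2^2=1-K_1^2$ and $\EE[K_1^2]=1/d$, whereas the paper does a Taylor expansion at $K_1=0$; for this polynomial in $K_1^2$ your version is slightly cleaner.
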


\begin{proof}[Proof of Lemma~\ref{lemma:softmax_expectation_XIII}]
Following a similar technique utilized in the proof of previous lemmas, that $\frac{1}{S^2} = \int_0^{\infty}se^{-sS}ds$, we can obtain that
\begin{align}\label{eq:softmax_expectationXIII_I}
&\EE\bigg[\frac{K_2\sum_{i_1=1}^n\sum_{i_2=1}^ne^{-aK_1(|\tilde x_{i_1}|+|\tilde x_{i_2}|)- aK_2(x_{i_1} + x_{i_2})}x_{i_1}x_{i_2}^2}{\big(\sum_{i=1}^ne^{-aK_1|\tilde x_i|-a K_2 x_{i}}\big)^2}\bigg] \notag\\
=& \int_0^\infty s' \EE\bigg[K_2
\sum_{i_1,i_2=1}^n e^{-aK_1(|\tilde x_{i_1}|+|\tilde x_{i_2}|)- aK_2(x_{i_1} + x_{i_2})}x_{i_1}x_{i_2}^2
\exp\!\Big(\!-\!s' \sum_{i=1}^n e^{-aK_1|\tilde x_i|-a K_2x_{i}}\Big)
\bigg] \mathrm{d}s'\notag\\
=&n\EE\bigg[K_2\int_0^\infty s'\EE\big[e^{-2aK_1|\tilde x_1| - 2aK_2 x_1} x_1^3 e^{-s' e^{-a (K_1|\tilde x_1| +  K_2 x_1)}}|K_1\big]\Big(\EE\big[e^{-s' e^{-a (K_1|\tilde x_1| +  K_2 x_1)}}|K_1\big]\Big)^{n-1}\mathrm{d}s'\bigg] \notag\\
&+ n(n-1)\EE\bigg[K_2\int_0^\infty s'\EE\big[e^{-aK_1(|\tilde x_1|+ |\tilde x_2|) - aK_2(x_1+x_2)} x_1x_2^2e^{-s'e^{-a (K_1|\tilde x_1| +  K_2 x_1)} -s' e^{-a (K_1|\tilde x_2| +  K_2 x_2)}}|K_1\big]\notag\\
&\qquad\qquad\qquad \cdot\Big(\EE\big[e^{-s' e^{-a (K_1|\tilde x_1| +  K_2 x_1)}}|K_1\big]\Big)^{n-2}\mathrm{d}s'\bigg]\notag\\
=&\frac{1}{n}\EE\bigg[K_2\int_0^\infty s\EE\big[e^{-2aK_1 |\tilde x_1| - 2aK_2 x_1} x_1^3 e^{-\frac{s}{n} e^{-a (K_1|\tilde x_1| +  K_2 x_1)}}|K_1\big]\Big(\EE\big[e^{-\frac{s}{n} e^{-a (K_1|\tilde x_1| +  K_2 x_1)}}|K_1\big]\Big)^{n-1}\mathrm{d}s\bigg] \notag\\
&+ \frac{n-1}{n} \EE\bigg[K_2\int_0^\infty \EE\big[e^{-aK_1(|\tilde x_1|+ |\tilde x_2|) - aK_2(x_1+x_2)} x_1x_2^2e^{-\frac{s}{n}e^{-a (K_1|\tilde x_1| +  K_2 x_1)} -\frac{s}{n}e^{-a (K_1|\tilde x_2| +  K_2 x_2)}}|K_1\big]\notag\\
&\qquad\qquad\qquad \cdot\Big(\EE\big[e^{-\frac{s}{n} e^{-a (K_1|\tilde x_1| +  K_2 x_1)}}|K_1\big]\Big)^{n-2}\mathrm{d}s\bigg]\notag\\
=&\frac{1}{n}\EE\bigg[K_2\underbrace{\int_0^\infty sA_6(s/n, 2a, a, K_1)[M(s/n, a, K_1)]^{n-1}\mathrm{d}s}_{(I)}\bigg]  \notag\\
&+ \frac{n-1}{n}\EE\bigg[K_2\underbrace{\int_0^\infty sA_2(s/n, a, a, K_1)A_1(s/n, a, a, K_1)[M(s/n, a, K_1)]^{n-2}\mathrm{d}s}_{(II)}\bigg].
\end{align}
Here, $A_1(\lambda, \alpha, a, K_1)$, $A_2(\lambda, \alpha, a, K_1)$ and $M(\lambda, a, K_1)$ share the same definitions as in the proof in Lemma~\ref{lemma:softmax_expectation_I}, and $A_6(\lambda, \alpha, a, K_1)$ shares the same definitions as in the proof in Lemma~\ref{lemma:softmax_expectation_VII}. By utilizing the Lipschitz continuity of these terms established previously, we can obtain that 
\begin{align}\label{eq:softmax_expectation_XIII_upper_lowerI}
    \bigg|(I) + \frac{A_6(0, 2a, a, K_1)}{4e^{a^2\sigma^2}\Phi^2(-a\sigma K_1)}\bigg| =\bigg|(I)+\frac{a\sigma^4 K_2e^{a^2\sigma^2}
(3+4a^2\sigma^2 K_2^2)\Phi(-2aK_1\sigma)}{\Phi^2(-a\sigma K_1)}\bigg|\leq \frac{c_1(a, \sigma)}{n},
\end{align}
and 
\begin{align}\label{eq:softmax_expectation_XIII_upper_lowerII}
    \bigg|(II) + \frac{A_2(0, a, a, K_1)A_1(0, a, a, K_1)}{4e^{a^2\sigma^2}\Phi^2(-a\sigma K_1)}\bigg|=\bigg|(II)+ \sigma^4a K_2(1+a^2K_2^2\sigma^2)\bigg|\leq \frac{c_2(a, \sigma)}{n}.
\end{align}
Specifically, by considering Taylor's expansion of the function above at $K_1 =0$, we can further derive that 
\begin{align}\label{eq:softmax_expectation_XIII_bddI}
    \big|\EE[K_2(II)]+ a\sigma^4(1+a^2\sigma^2)\big| \leq  \frac{c_2(a, \sigma)}{n} + \frac{c_3(a, \sigma)}{d}.
\end{align}
Substituting the results of~\eqref{eq:softmax_expectation_XIII_upper_lowerI}, ~\eqref{eq:softmax_expectation_XIII_upper_lowerII}, and ~\eqref{eq:softmax_expectation_XIII_bddI} into~\eqref{eq:softmax_expectationXIII_I}, we complete the proof. 
\end{proof}

\begin{lemma}\label{lemma:softmax_expectation_XIV}
Let  
$x_1, x_2, \ldots, x_n\sim \cN(0, \sigma^2)$ be $n$ i.i.d Gaussian random variables, and $\tilde x_1, \tilde x_2, \ldots, \tilde x_n\sim \cN(0,\sigma^2)$ be another $n$ i.i.d Gaussian random variables. In addition, $a$ is a positive scalar, and $\btheta_*, \btheta_0\in\RR^d$ are two independent random vectors following uniform $d-$dimensional sphere distribution, and we denote $K_1 = \la\btheta_0, \btheta_*\ra$, $K_2 = \|(\Ib_d-\btheta_*\btheta_*^\top)\btheta_0\|_2$. Then we have 
\begin{align*}
\Bigg|\EE\bigg[\frac{K_2\sum_{i_1=1}^n\sum_{i_2=1}^ne^{-aK_1(|\tilde x_{i_1}|+|\tilde x_{i_2}|)- aK_2(x_{i_1} + x_{i_2})}x_{i_2}|\tilde x_{i_1}||\tilde x_{i_2}|}{\big(\sum_{i=1}^ne^{-aK_1|\tilde x_i|-a K_2x_{i}}\big)^2}\bigg] +\frac{2a\sigma^4}{\pi} \Bigg|\leq  \frac{f_{25}(a, \sigma)}{n} + \frac{f_{26}(a, \sigma)}{d}.
\end{align*}
Here, $f_{25}(a, \sigma)$ and $f_{26}(a, \sigma)$ are both analytic functions of $a$ and $\sigma$ while irrelevant with $n$ and $d$.
\end{lemma}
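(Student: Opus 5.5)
We follow the template of Lemma~\ref{lemma:softmax_expectation_II} and Lemma~\ref{lemma:softmax_expectation_XIII}. Write $K_1=\la\btheta_0,\btheta_*\ra$ and $K_2=\|(\Ib_d-\btheta_*\btheta_*^\top)\btheta_0\|_2$. Apply the identity $\frac{1}{S^2}=\int_0^\infty s e^{-sS}\,\mathrm{d}s$ to the denominator, use Fubini, and split the double sum into the diagonal part $i_1=i_2$ and the off-diagonal part $i_1\neq i_2$. Exchangeability of the pairs $(x_i,\tilde x_i)$ conditional on $K_1$ lets us factor each part. After rescaling $s'=s/n$, the expectation becomes
\[
\frac{1}{n}\EE\Big[K_2\!\int_0^\infty\! s\,A_8(s/n,2a,a,K_1)[M(s/n,a,K_1)]^{n-1}\mathrm{d}s\Big]
+\frac{n-1}{n}\EE\Big[K_2\!\int_0^\infty\! s\,A_4(s/n,a,a,K_1)A_7(s/n,a,a,K_1)[M(s/n,a,K_1)]^{n-2}\mathrm{d}s\Big].
\]
Here $A_4$ is the factor carrying $|\tilde x_{i_1}|$, and $A_7$ is the factor carrying $x_{i_2}|\tilde x_{i_2}|$. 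The diagonal term (I) has an $\cO(1)$ integral and a prefactor $1/n$, so it contributes $\cO(f(a,\sigma)/n)$. To see this, use the Lipschitz bound on $A_8$ in $\lambda$ from Lemma~\ref{lemma:softmax_expectation_VIII}, and then Lemma~\ref{lemma:concentration_expectation_II}, which bounds $\EE[n^k/(\sum_{i\ge2}e^{-a(K_1|\tilde x_i|+K_2x_i)})^k\mid K_1]$ by a constant depending only on $a$ and $\sigma$.

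For the off-diagonal term (II), we use the Lipschitz continuity of $A_4$ (Lemma~\ref{lemma:softmax_expectation_III}) and $A_7$ (Lemma~\ref{lemma:softmax_expectation_VIII}) in $\lambda$. This replaces the product by $A_4(0,a,a,K_1)A_7(0,a,a,K_1)$, with errors $c\,s/n+c\,s^2/n^2$. After integrating against $s[M]^{n-2}$, these errors cost $\cO(1/n)$, again by Lemma~\ref{lemma:concentration_expectation_II}. Applying the same lemma to the leading term shows that
\[
\int_0^\infty s[M(s/n,a,K_1)]^{n-2}\mathrm{d}s=\EE\big[n^2/(\textstyle\sum_{i\ge3}\cdots)^2\mid K_1\big]=\frac{1}{4e^{a^2\sigma^2}\Phi^2(-aK_1\sigma)}+\cO(1/n).
\]
This reduces (II) to $F(K_1)=\dfrac{K_2A_4(0,a,a,K_1)A_7(0,a,a,K_1)}{4e^{a^2\sigma^2}\Phi^2(-aK_1\sigma)}$.

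The closed forms come from Gaussian integrals. First, $A_4(0,a,a,K_1)=\sqrt{2/\pi}\,\sigma e^{a^2\sigma^2K_2^2/2}-2aK_1\sigma^2\Phi(-aK_1\sigma)e^{a^2\sigma^2/2}$. Second, because $x$ and $\tilde x$ are independent, $A_7(0,a,a,K_1)=\EE[x e^{-aK_2x}]\,\EE[|\tilde x|e^{-aK_1|\tilde x|}]$. Here $\EE[xe^{-aK_2x}]=-aK_2\sigma^2e^{a^2K_2^2\sigma^2/2}$, and the second factor equals $A_4(0,a,a,K_1)e^{-a^2\sigma^2K_2^2/2}$ up to this normalization. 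Evaluating at $K_1=0$, $K_2=1$ gives $F(0)=\frac{(\sqrt{2/\pi}\sigma)^2(-a\sigma^2)}{1}=-\frac{2a\sigma^4}{\pi}$. To compare $\EE[F(K_1)]$ with $F(0)$, expand $F$ in $K_1$ to second order at $0$, with $K_2=\sqrt{1-K_1^2}$ also expanded. Then use $\EE[K_1]=0$, $\EE[K_1^2]=1/d$, and boundedness of $F''$ on $[-1,1]$, which follows from analyticity and $\Phi(-a|k|\sigma)\ge\Phi(-a\sigma)>0$. This gives $|\EE F(K_1)+2a\sigma^4/\pi|\le c(a,\sigma)/d$. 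Combining the pieces, and noting that the factor $(n-1)/n$ costs only $\cO(1/n)$, yields the claim.

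The main obstacle is the bookkeeping in the off-diagonal step. We must check that the normalization of $\int s[M]^{n-2}$ exactly cancels the $e^{a^2\sigma^2}\Phi^2$ factors hidden in $A_4A_7$, so that the constant comes out to $-2a\sigma^4/\pi$. We must also verify that every error term is uniform in $K_1\in[-1,1]$, so that the outer expectation over $K_1$ preserves the $1/n$ and $1/d$ rates.
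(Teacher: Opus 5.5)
Your proposal is correct and follows the paper's proof essentially step for step. Like the paper, you use the $1/S^2$ Laplace identity with a diagonal/off-diagonal split into the $A_8$ and $A_4A_7$ terms. You then apply Lipschitz control in $\lambda$ together with Lemma~\ref{lemma:concentration_expectation_II} to reduce the off-diagonal term to $A_4(0,a,a,K_1)A_7(0,a,a,K_1)/(4e^{a^2\sigma^2}\Phi^2(-aK_1\sigma))$, and finish with a second-order Taylor expansion in $K_1$ at $0$. Your explicit identity $A_7(0,a,a,K_1)=-aK_2\sigma^2A_4(0,a,a,K_1)$ is a useful addition the paper does not spell out. It shows the leading term depends only on $K_1$ and $K_2^2=1-K_1^2$, which cleanly justifies bounding $F''$ on $[-1,1]$.
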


\begin{proof}[Proof of Lemma~\ref{lemma:softmax_expectation_XIV}]
Following a similar procedure in the proof of Lemma~\ref{lemma:softmax_expectation_XIII}, we can obtain that
\begin{align}\label{eq:softmax_expectationXIV_I}
&\EE\bigg[\frac{K_2\sum_{i_1=1}^n\sum_{i_2=1}^ne^{-aK_1(|\tilde x_{i_1}|+|\tilde x_{i_2}|)- aK_2(x_{i_1} + x_{i_2})}x_{i_2}|\tilde x_{i_1}||\tilde x_{i_2}|}{\big(\sum_{i=1}^ne^{-aK_1|\tilde x_i|-a K_2x_{i}}\big)^2}\bigg]\notag\\
=& \int_0^\infty s' \EE\bigg[K_2
\sum_{i_1,i_2=1}^ne^{-aK_1(|\tilde x_{i_1}|+|\tilde x_{i_2}|)- aK_2(x_{i_1} + x_{i_2})}x_{i_1}|\tilde x_{i_1}||\tilde x_{i_2}|
\exp\!\Big(\!-\!s' \sum_{i=1}^n e^{-aK_1|\tilde x_i|-a K_2x_{i}}\Big)
\bigg] \mathrm{d}s'\notag\\
=&n\EE\bigg[K_2\int_0^\infty s'\EE\big[e^{-2aK_1 |\tilde x_1| - 2aK_2 x_1} x_1\tilde x_{1}^2 e^{-s' e^{-a (K_1|\tilde x_1| +  K_2 x_1)}}|K_1\big]\Big(\EE\big[e^{-s' e^{-a (K_1|\tilde x_1| +  K_2 x_1)}}|K_1\big]\Big)^{n-1}\mathrm{d}s'\bigg] \notag\\
&+ n(n-1)\EE\bigg[K_2\int_0^\infty s'\EE\big[e^{-aK_1(|\tilde x_1|+ |\tilde x_2|) - aK_2(x_1+x_2)} x_1|\tilde x_{1}||\tilde x_{2}|e^{-s'e^{-a (K_1|\tilde x_1| +  K_2 x_1)} -s' e^{-a (K_1|\tilde x_2| +  K_2 x_2)}}|K_1\big]\notag\\
&\qquad\qquad\qquad \cdot\Big(\EE\big[e^{-s' e^{-a (K_1|\tilde x_1| +  K_2 x_1)}}|K_1\big]\Big)^{n-2}\mathrm{d}s'\bigg]\notag\\
=&\frac{1}{n}\EE\bigg[K_2\int_0^\infty s\EE\big[e^{-2aK_1 |\tilde x_1| - 2aK_2 x_1} x_1\tilde x_{1}^2 e^{-\frac{s}{n} e^{-a (K_1|\tilde x_1| +  K_2 x_1)}}|K_1\big]\Big(\EE\big[e^{-\frac{s}{n} e^{-a (K_1|\tilde x_1| +  K_2 x_1)}}|K_1\big]\Big)^{n-1}\mathrm{d}s\bigg] \notag\\
&+ \frac{n-1}{n}\EE\bigg[K_2\int_0^\infty s\EE\big[e^{-aK_1(|\tilde x_1|+ |\tilde x_2|) - aK_2(x_1+x_2)} x_1|\tilde x_{1}||\tilde x_{2}|e^{-\frac{s}{n}e^{-a (K_1|\tilde x_1| +  K_2 x_1)} -\frac{s}{n}e^{-a (K_1|\tilde x_2| +  K_2 x_2)}}|K_1\big]\notag\\
&\qquad\qquad\qquad \cdot\Big(\EE\big[e^{-\frac{s}{n} e^{-a (K_1|\tilde x_1| +  K_2 x_1)}}|K_1\big]\Big)^{n-2}\mathrm{d}s\bigg]\notag\\
=&\frac{1}{n}\EE\bigg[K_2\underbrace{\int_0^\infty s A_8(s/n, 2a, a, K_1)[M(s/n, a, K_1)]^{n-1}\mathrm{d}s}_{(I)}\bigg]  \notag\\
&+ \frac{n-1}{n}\EE\bigg[K_2\underbrace{\int_0^\infty s A_7(s/n, a, a, K_1)A_4(s/n, a, a, K_1)[M(s/n, a, K_1)]^{n-2}\mathrm{d}s}_{(II)}\bigg].
\end{align}
Here, $M(\lambda, a, K_1)$ shares the same definitions as in the proof in Lemma~\ref{lemma:softmax_expectation_I}, $A_4(\lambda, \alpha, a, K_1)$ shares the same definitions as in the proof in Lemma~\ref{lemma:softmax_expectation_III}, and $A_7(\lambda, \alpha, a, K_1)$ and $A_8(\lambda, \alpha, a, K_1)$ shares the same definitions as in the proof in Lemma~\ref{lemma:softmax_expectation_VIII}.
Through a similar technique to utilize the Lipschitz continuity of these terms, we can obtain that 
\begin{align}\label{eq:softmax_expectation_XIV_upper_lowerI}
    &\bigg|(I) + \frac{A_8(0, 2a, a, K_1)}{4e^{a^2\sigma^2}\Phi^2(-a\sigma K_1)}\bigg| \notag\\
    =&\Bigg|(I)+\frac{a\sigma^4 K_2e^{a^2\sigma^2}
\big((1+4a^2K_1^2\sigma^2)\Phi(-2aK_1\sigma)
-2aK_1\sigma\phi(2aK_1\sigma)\big)}{\Phi^2(-a\sigma K_1)}\Bigg|\leq \frac{c_1(a, \sigma)}{n},
\end{align}
and 
\begin{align}\label{eq:softmax_expectation_XIV_upper_lowerII}
    &\bigg|(II) + \frac{A_7(0, a, a, K_1)A_4(0, a, a, K_1)}{4e^{a^2\sigma^2}\Phi^2(-a\sigma K_1)}\bigg|\notag\\
    =&\bigg|(II)+ \frac{a\sigma^2 K_2
[\sigma\phi(aK_1\sigma)-aK_1\sigma^2\Phi(-aK_1\sigma)]}{2\Phi^2(-a\sigma K_1)}\Big(\sqrt{\tfrac{2}{\pi}}\sigma e^{-\frac{a^2\sigma^2 K_1^2}{2}}
-2aK_1\sigma^2 \Phi(-aK_1\sigma)\Big)\bigg|\leq \frac{c_2(a, \sigma)}{n}.
\end{align}
Specifically, by considering Taylor's expansion of the function above at $K_1 =0$, we can further derive that 
\begin{align}\label{eq:softmax_expectation_XIV_bddI}
    \bigg|\EE[K_2(II)]+ \frac{2a\sigma^4}{\pi}\bigg| \leq  \frac{c_2(a, \sigma)}{n} + \frac{c_3(a, \sigma)}{d}.
\end{align}
Substituting the results of~\eqref{eq:softmax_expectation_XIV_upper_lowerI}, ~\eqref{eq:softmax_expectation_XIV_upper_lowerII}, and ~\eqref{eq:softmax_expectation_XIV_bddI} into~\eqref{eq:softmax_expectationXIV_I}, we complete the proof. 
\end{proof}

\begin{lemma}\label{lemma:softmax_expectation_XV}
Let 
$x_1, x_2, \ldots, x_n\sim \cN(0, \sigma^2)$ be $n$ i.i.d Gaussian random variables, and $\tilde x_1, \tilde x_2, \ldots, \tilde x_n\sim \cN(0,\sigma^2)$ be another $n$ i.i.d Gaussian random variables. In addition, $a$ is a positive scalar, and $\btheta_*, \btheta_0\in\RR^d$ are two independent random vectors following uniform $d-$dimensional sphere distribution, and we denote $K_1 = \la\btheta_0, \btheta_*\ra$, $K_2 = \|(\Ib_d-\btheta_*\btheta_*^\top)\btheta_0\|_2$. Then we have 
\begin{align*}
 \Bigg|\EE\bigg[\frac{K_2\sum_{i=1}^ne^{-2aK_1|\tilde x_{i}|- 2aK_2x_{i}}x_{i}}{\big(\sum_{i=1}^ne^{-aK_1|\tilde x_i|-a K_2x_{i}}\big)^2}\bigg] +\frac{2a\sigma^2e^{a^2\sigma^2}}{n} \Bigg|\leq  \frac{f_{27}(a, \sigma)}{n^2} + \frac{f_{28}(a, \sigma)}{nd}.
\end{align*}
Here, $f_{27}(a, \sigma)$ and $f_{28}(a, \sigma)$ are both analytic functions of $a$ and $\sigma$ while irrelevant with $n$ and $d$.
\end{lemma}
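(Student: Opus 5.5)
I would follow the template of Lemma~\ref{lemma:softmax_expectation_VI}, which handles the same denominator $(\sum_i e^{-aK_1|\tilde x_i|-aK_2x_i})^2$ with a single-index numerator. The only change is the extra factor $K_2 x_i$ in the numerator.

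\textbf{Step 1: reduce to a one-dimensional integral.} First I would apply the identity $\frac{1}{S^2}=\int_0^\infty s e^{-sS}\,\mathrm{d}s$ together with Fubini's theorem. Conditioning on $K_1$ and using the exchangeability of the pairs $(x_i,\tilde x_i)$ collapses the sum over $i$ to $n$ copies of the $i=1$ term. After the substitution $s'=s/n$ the expectation equals
\begin{align*}
\frac{1}{n}\EE\bigg[K_2\int_0^\infty s\,A_2(s/n,2a,a,K_1)\,[M(s/n,a,K_1)]^{n-1}\,\mathrm{d}s\bigg],
\end{align*}
where $A_2$ and $M$ are as defined in the proof of Lemma~\ref{lemma:softmax_expectation_I}.

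\textbf{Step 2: replace $A_2$ by its value at $\lambda=0$.} The proof of Lemma~\ref{lemma:softmax_expectation_I} already shows that $A_2$ is Lipschitz in $\lambda$, so
\[
|A_2(s/n,2a,a,K_1)-A_2(0,2a,a,K_1)|\le c(a,\sigma)\,\frac{s}{n}.
\]
The base value is explicit:
\[
A_2(0,2a,a,K_1)=-4a\sigma^2K_2e^{2a^2\sigma^2}\Phi(-2aK_1\sigma).
\]
Reversing Fubini gives
\[
\int_0^\infty s\,[M(s/n,a,K_1)]^{n-1}\,\mathrm{d}s=\EE\Big[n^2\big/\big(\textstyle\sum_{i=2}^n e^{-a(K_1|\tilde x_i|+K_2x_i)}\big)^2\Big],
\]
and the analogous identity holds with $s^2$ and a cubed denominator. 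Lemma~\ref{lemma:concentration_expectation_II} then bounds these quantities. The first equals $\frac{1}{4e^{a^2\sigma^2}\Phi^2(-aK_1\sigma)}$ up to $O(1/n)$, and the second is $O(1)$. Consequently the Lipschitz error term contributes only $O(1/n^2)$ after the prefactor $1/n$.

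\textbf{Step 3: identify the leading term.} Combining the pieces, the leading term is
\[
-\frac{a\sigma^2e^{a^2\sigma^2}}{n}\,\EE\bigg[K_2^2\frac{\Phi(-2aK_1\sigma)}{\Phi^2(-aK_1\sigma)}\bigg].
\]
At $K_1=0$ the ratio equals $2$ and $K_2^2=1$, which gives exactly $-\frac{2a\sigma^2e^{a^2\sigma^2}}{n}$. For the correction, I would Taylor-expand the smooth function $k\mapsto (1-k^2)\Phi(-2ak\sigma)/\Phi^2(-ak\sigma)$ to second order around $k=0$ on $[-1,1]$. Using $\EE[K_1]=0$ and $\EE[K_1^2]=1/d$, the deviation is at most $c(a,\sigma)/d$. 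After the prefactor $1/n$ this is the $f_{28}/(nd)$ term.

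\textbf{Main obstacle.} The step I expect to be most delicate is Step 2: making sure the $O(1/n)$ error in Lemma~\ref{lemma:concentration_expectation_II} holds uniformly in $K_1\in[-1,1]$, so that it survives the outer expectation. This needs $\Phi(-aK_1\sigma)\ge\Phi(-a\sigma)>0$, which bounds the concentration constants uniformly, together with the fact that the leading coefficient and the Lipschitz constants depend only on $(a,\sigma)$.
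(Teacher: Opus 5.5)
Your proposal is correct and matches the paper's proof step for step. The paper also reduces to $\frac{1}{n}\EE\big[K_2\int_0^\infty s\,A_2(s/n,2a,a,K_1)[M(s/n,a,K_1)]^{n-1}\,\mathrm{d}s\big]$ via $\frac{1}{S^2}=\int_0^\infty s e^{-sS}\,\mathrm{d}s$, then replaces $A_2$ by its $\lambda=0$ value using Lipschitz continuity and Lemma~\ref{lemma:concentration_expectation_II}, and finally Taylor-expands the resulting leading term $-\frac{a\sigma^2e^{a^2\sigma^2}}{n}\EE\big[K_2^2\Phi(-2aK_1\sigma)/\Phi^2(-aK_1\sigma)\big]$ in $K_1$ using $\EE[K_1]=0$ and $\EE[K_1^2]=1/d$.
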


\begin{proof}[Proof of Lemma~\ref{lemma:softmax_expectation_XV}]
Following a similar procedure in the proof of Lemma~\ref{lemma:softmax_expectation_IX}, we can obtain that
\begin{align}\label{eq:softmax_expectationXV_I}
&\EE\bigg[\frac{K_2\sum_{i=1}^ne^{-2aK_1|\tilde x_{i}|- 2aK_2x_{i}}x_{i}}{\big(\sum_{i=1}^ne^{-aK_1|\tilde x_i|-a K_2x_{i}}\big)^2}\bigg] \notag\\
=& \int_0^\infty s'\EE\bigg[K_2
\sum_{i=1}^ne^{-2aK_1|\tilde x_{i}|- 2aK_2x_{i}}x_i
\exp\!\Big(\!-\!s' \sum_{i=1}^n e^{-aK_1|\tilde x_i|-a K_2x_{i}}\Big)
\bigg] \mathrm{d}s'\notag\\
=&n\EE\bigg[K_2\int_0^\infty s'\EE\big[e^{-2aK_1 |\tilde x_1| - 2aK_2 x_1} e^{-s' e^{-a (K_1|\tilde x_1| +  K_2 x_1)}}x_1|K_1\big]\Big(\EE\big[e^{-s' e^{-a (K_1|\tilde x_1| +  K_2 x_1)}}|K_1\big]\Big)^{n-1}\mathrm{d}s'\bigg] \notag\\
% &+ n(n-1)\EE\bigg[\int_0^\infty \EE\big[e^{-K_1(|\tilde x_1|+ a|\tilde x_2|) - K_2(x_1+ax_2)} e^{-s'e^{-a (K_1|\tilde x_1| +  K_2 x_1)} -s' e^{-a (K_1|\tilde x_2| +  K_2 x_2)}}|K_1\big]\notag\\
% &\qquad\qquad\qquad \cdot\Big(\EE\big[e^{-s' e^{-a (K_1|\tilde x_1| +  K_2 x_1)}}|K_1\big]\Big)^{n-2}\mathrm{d}s'\bigg]\notag\\
=&\frac{1}{n}\EE\bigg[K_2\int_0^\infty s\EE\big[e^{-2aK_1 |\tilde x_1| - 2aK_2 x_1} e^{-\frac{s}{n} e^{-a (K_1|\tilde x_1| +  K_2 x_1)}}x_1|K_1\big]\Big(\EE\big[e^{-\frac{s}{n} e^{-a (K_1|\tilde x_1| +  K_2 x_1)}}|K_1\big]\Big)^{n-1}\mathrm{d}s\bigg] \notag\\
% &+ (n-1)\EE\bigg[\int_0^\infty \EE\big[e^{-K_1(|\tilde x_1|+ a|\tilde x_2|) - K_2(x_1+ax_2)} e^{-\frac{s}{n}e^{-a (K_1|\tilde x_1| +  K_2 x_1)} -\frac{s}{n}e^{-a (K_1|\tilde x_2| +  K_2 x_2)}}|K_1\big]\notag\\
% &\qquad\qquad\qquad \cdot\Big(\EE\big[e^{-\frac{s}{n} e^{-a (K_1|\tilde x_1| +  K_2 x_1)}}|K_1\big]\Big)^{n-2}\mathrm{d}s\bigg]\notag\\
=&\frac{1}{n}\EE\bigg[K_2\underbrace{\int_0^\infty sA_2(s/n, 2a, a, K_1)[M(s/n, a, K_1)]^{n-1}\mathrm{d}s}_{(I)}\bigg].
% \notag\\
% &+ (n-1)\EE\bigg[\underbrace{\int_0^\infty A_5(s/n, 1, a, K_1) A_5(s/n, a, a, K_1)[M(s/n, a, K_1)]^{n-2}\mathrm{d}s}_{(II)}\bigg],
\end{align}
Here, $A_2(\lambda, \alpha, a, K_1)$ and $M(\lambda, a, K_1)$ share the same definitions as in the proof in Lemma~\ref{lemma:softmax_expectation_I}.
By using the Lipschitz continuity of $A_2(\lambda, \alpha, a, K_1)$ established in the proof of Lemma~\ref{lemma:softmax_expectation_I}, we can obtain that
\begin{align}\label{eq:softmax_expectation_XV_lower_upperI}
    \bigg|(I)+\frac{A_2(0, 2a, a, K_1)}{4e^{a^2\sigma^2}\Phi^2(-a\sigma K_1)}\bigg| = \bigg|(I)+\frac{a\sigma^2 K_2e^{a^2\sigma^2}\Phi(-2aK_1\sigma)}{\Phi^2(-a\sigma K_1)}\bigg|\leq \frac{c_1(a, \sigma)}{n}.
\end{align}
Specifically, by considering Taylor's expansion of the function above at $K_1 =0$, we can further derive that 
\begin{align}\label{eq:softmax_expectation_XV_bddI}
    \big|\EE[K_2(I)]+ 2a\sigma^2e^{a^2\sigma^2}\big| \leq  \frac{c_1(a, \sigma)}{n} + \frac{c_2(a, \sigma)}{d}.
\end{align}
Substituting the results of~\eqref{eq:softmax_expectation_XV_lower_upperI}, and ~\eqref{eq:softmax_expectation_XV_bddI} into~\eqref{eq:softmax_expectationXV_I}, we complete the proof. 
\end{proof}

\begin{lemma}\label{lemma:softmax_expectation_XVI}
Let 
%$x_{1, 1}, x_{2, 1}, \ldots, x_{n, 1}, x_{1, 2}, x_{2, 2}, \ldots, x_{n, 2}, x_{1, 3}, x_{2, 1}, \ldots, x_{n, 1}, x_{1, 1}, x_{2, 1}, \ldots, x_{n, 1} \sim \cN(0,1)$ 
$x_1, x_2, \ldots, x_n\sim \cN(0, \sigma^2)$ be $n$ i.i.d Gaussian random variables, and $\tilde x_1, \tilde x_2, \ldots, \tilde x_n\sim \cN(0,\sigma^2)$ be another $n$ i.i.d Gaussian random variables. In addition, $a$ is a positive scalar, and $\btheta_*, \btheta_0\in\RR^d$ are two independent random vectors following uniform $d-$dimensional sphere distribution, and we denote $K_1 = \la\btheta_0, \btheta_*\ra$, $K_2 = \|(\Ib_d-\btheta_*\btheta_*^\top)\btheta_0\|_2$. Then we have 
\begin{align*}
 &\Bigg|\EE\bigg[\frac{K_2\sum_{i_1=1}^n\sum_{i_2=1}^n\sum_{i_3=1}^n e^{-aK_1(|\tilde x_{i_1}|+|\tilde x_{i_2}| + |\tilde x_{i_3}|)- aK_2(x_{i_1} + x_{i_2} + x_{i_3})}x_{i_1}x_{i_2}x_{i_3}}{(\sum_{i=1}^ne^{-aK_1|\tilde x_i|-a K_2x_{i}})^3}\bigg] +a^3\sigma^6\Bigg|
 \leq \frac{f_{29}(a, \sigma)}{n} + \frac{f_{30}(a, \sigma)}{d}.
\end{align*}
Here,$f_{29}(a, \sigma)$, $f_{30}(a, \sigma)$ are both analytic functions of $a$ and $\sigma$ while irrelevant with $n$ and $d$.
\end{lemma}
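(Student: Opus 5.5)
We follow the template of Lemmas~\ref{lemma:softmax_expectation_X} and~\ref{lemma:softmax_expectation_XIII}, with the identity $\frac{1}{S^3}=\frac12\int_0^\infty s^2e^{-sS}\,\mathrm{d}s$ replacing $\frac{1}{S}$ and $\frac{1}{S^2}$. With $S=\sum_{i=1}^n e^{-a(K_1|\tilde x_i|+K_2x_i)}$, substitute this identity and use Fubini to move the expectation inside. Then split the triple sum over $(i_1,i_2,i_3)$ by coincidence pattern: all indices equal ($n$ terms); exactly two equal ($3n(n-1)$ terms, three symmetric placements); all distinct ($n(n-1)(n-2)$ terms).

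By exchangeability, conditioning on $K_1$ factorizes each pattern into products of the functions $A_1,A_2,A_6$ from Lemmas~\ref{lemma:softmax_expectation_I} and~\ref{lemma:softmax_expectation_VII}, times a power of $M(s'/n,a,K_1)$. After the substitution $s'=s/n$, the decomposition reads
\begin{align*}
&\frac{1}{2n^2}\EE\Big[K_2\!\int_0^\infty\! s^2A_6(\tfrac sn,3a,a,K_1)M^{n-1}\mathrm{d}s\Big]
+\frac{3(n-1)}{2n^2}\EE\Big[K_2\!\int_0^\infty\! s^2A_1(\tfrac sn,2a,a,K_1)A_2(\tfrac sn,a,a,K_1)M^{n-2}\mathrm{d}s\Big]\\
&\quad+\frac{(n-1)(n-2)}{2n^2}\EE\Big[K_2\!\int_0^\infty\! s^2A_2^3(\tfrac sn,a,a,K_1)M^{n-3}\mathrm{d}s\Big].
\end{align*}
The factor in the two-equal term needs a check: with two indices equal, the product $x_{i_1}x_{i_2}x_{i_3}$ becomes $x^2\cdot x'$, giving $A_1(\cdot,2a)\,A_2(\cdot,a)$.

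Next, use the Lipschitz continuity in $\lambda$ of $A_1,A_2,A_6$, which the earlier lemmas already established with explicit constants. This replaces each $A_k(s/n,\cdot)$ by $A_k(0,\cdot)$ at an additive cost of order $s/n$ or $s^2/n^2$. The resulting integrals $\int_0^\infty s^jM^{n-m}\mathrm{d}s$ equal $j!\,\EE\bigl[n^{j+1}/(\sum_{i>m}e^{-a(\cdot)})^{j+1}\mid K_1\bigr]$. Lemma~\ref{lemma:concentration_expectation_II} bounds these by $(2e^{a^2\sigma^2/2}\Phi(-aK_1\sigma))^{-(j+1)}$ up to $O(1/n)$ corrections.

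Tracking orders, the first two terms contribute $O(1/n)$ and are absorbed into $f_{29}/n$. The main term is
\[
\frac{A_2(0,a,a,K_1)^3}{8e^{3a^2\sigma^2/2}\Phi^3(-aK_1\sigma)}.
\]
Since $A_2(0,a,a,K_1)=-2a\sigma^2K_2e^{a^2\sigma^2/2}\Phi(-aK_1\sigma)$, this collapses exactly to $-a^3\sigma^6K_2^3$. Multiplying by $K_2$ and using $\EE[K_2^4]=\EE[(1-K_1^2)^2]=1-O(1/d)$ gives $-a^3\sigma^6+O(1/d)$. No Taylor expansion in $K_1$ is needed here, because the $\Phi$ factors cancel identically.

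The main obstacle is the error accounting in the all-distinct term. There one must control the product of three perturbed $A_2$'s, expanding $\prod(A_2(0)+\epsilon_k)$ with $|\epsilon_k|\le Cs/n$, against $M^{n-3}$. This requires moments $\EE[n^{j}/S_{-3}^{j}]$ up to $j=5$ to be uniformly bounded in $n$. We would verify that Lemma~\ref{lemma:concentration_expectation_II} covers these higher inverse moments, and if not, extend it via the same Laplace-transform argument. This yields an $O(1/n)$ remainder with constants analytic in $(a,\sigma)$.
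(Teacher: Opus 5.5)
Your proposal is correct and follows the paper's proof essentially step for step. The shared steps are: the representation $\frac{1}{S^3}=\frac12\int_0^\infty s^2e^{-sS}\,\mathrm{d}s$; the split by index coincidence into the $A_6$, $A_1A_2$ and $A_2^3$ terms; Lipschitz replacement of $A_k(s/n,\cdot)$ by $A_k(0,\cdot)$; Lemma~\ref{lemma:concentration_expectation_II} for the inverse moments (it is stated for every fixed integer $k$, so the higher moments you worry about are already covered); and the exact cancellation of the $\Phi$ factors, leaving only $\EE[K_2^4]=1-O(1/d)$. Your coefficient $\frac{3(n-1)}{2n^2}$ for the two-equal term is the correct one; the paper's final display drops the factor $\frac12$, which does not affect the $O(1/n)$ bound.
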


\begin{proof}[Proof of Lemma~\ref{lemma:softmax_expectation_XVI}]
Following the identity that $\frac{1}{S^{n}} = \frac{1}{(n-1)!}\int_0^\infty s^{n-1}e^{-sS}\mathrm{d}s$, we can obtain that
\begin{align}\label{eq:softmax_expectationXVI_I}
&\EE\bigg[\frac{K_2\sum_{i_1=1}^n\sum_{i_2=1}^n\sum_{i_3=1}^n e^{-aK_1(|\tilde x_{i_1}|+|\tilde x_{i_2}| + |\tilde x_{i_3}|)- aK_2(x_{i_1} + x_{i_2} + x_{i_3})}x_{i_1}x_{i_2}x_{i_3}}{(\sum_{i=1}^ne^{-aK_1|\tilde x_i|-a K_2x_{i}})^3}\bigg] \notag\\
=& \frac{1}{2}\int_0^\infty (s')^2\EE\bigg[K_2
\sum_{i_1,i_2,i_3=1}^n e^{-aK_1(|\tilde x_{i_1}|+|\tilde x_{i_2}| + |\tilde x_{i_3}|)- aK_2(x_{i_1} + x_{i_2} + x_{i_3})}x_{i_1}x_{i_2}x_{i_3}
\exp\!\Big(\!-\!s' \sum_{i=1}^n e^{-aK_1|\tilde x_i|-a K_2x_{i}}\Big)
\bigg] \mathrm{d}s'\notag\\
=&\frac{n}{2}\EE\bigg[K_2\int_0^\infty (s')^2\EE\big[e^{-3aK_1 |\tilde x_1| - 3aK_2 x_1} x_1^3 e^{-s' e^{-a (K_1|\tilde x_1| +  K_2 x_1)}}|K_1\big]\Big(\EE\big[e^{-s' e^{-a (K_1|\tilde x_1| +  K_2 x_1)}}|K_1\big]\Big)^{n-1}\mathrm{d}s'\bigg] \notag\\
&+ \frac{3n(n-1)}{2}\EE\bigg[K_2\int_0^\infty (s')^2\EE\big[e^{-aK_1(|\tilde x_1|+ 2|\tilde x_2|) - aK_2(x_1+2x_2)} x_1x_2^2e^{-s'e^{-a (K_1|\tilde x_1| +  K_2 x_1)} -s' e^{-a (K_1|\tilde x_2| +  K_2 x_2)}}|K_1\big]\notag\\
&\qquad\qquad\qquad \cdot\Big(\EE\big[e^{-s' e^{-a (K_1|\tilde x_1| +  K_2 x_1)}}|K_1\big]\Big)^{n-2}\mathrm{d}s'\bigg]\notag\\
&+ \frac{n(n-1)(n-2)}{2}\EE\bigg[K_2\int_0^\infty (s')^2\EE\big[e^{-aK_1(|\tilde x_1|+ |\tilde x_2| + |\tilde x_3|) - aK_2(x_1+x_2 + x_3)} x_1x_2x_3\notag\\
&\cdot e^{-s'e^{-a (K_1|\tilde x_1| +  K_2 x_1)} -s' e^{-a (K_1|\tilde x_2| +  K_2 x_2)}-s' e^{-a (K_1|\tilde x_3| +  K_2 x_3)}}|K_1\big]\Big(\EE\big[e^{-s' e^{-a (K_1|\tilde x_1| +  K_2 x_1)}}|K_1\big]\Big)^{n-3}\mathrm{d}s'\bigg]\notag\\
=&\frac{1}{2n^2}\EE\bigg[K_2\int_0^\infty s^2\EE\big[e^{-3aK_1|\tilde x_1| - 3aK_2 x_1} x_1^3 e^{-\frac{s}{n}e^{-a (K_1|\tilde x_1| +  K_2 x_1)}}|K_1\big]\Big(\EE\big[e^{-\frac{s}{n} e^{-a (K_1|\tilde x_1| +  K_2 x_1)}}|K_1\big]\Big)^{n-1}\mathrm{d}s\bigg] \notag\\
&+ \frac{3(n-1)}{2n^2}\EE\bigg[K_2\int_0^\infty s^2\EE\big[e^{-aK_1(|\tilde x_1|+ 2|\tilde x_2|) - aK_2(x_1+2x_2)} x_1x_2^2e^{-\frac{s}{n}e^{-a (K_1|\tilde x_1| +  K_2 x_1)} -\frac{s}{n} e^{-a (K_1|\tilde x_2| +  K_2 x_2)}}|K_1\big]\notag\\
&\qquad\qquad\qquad \cdot\Big(\EE\big[e^{-\frac{s}{n} e^{-a (K_1|\tilde x_1| +  K_2 x_1)}}|K_1\big]\Big)^{n-2}\mathrm{d}s\bigg]\notag\\
&+ \frac{(n-1)(n-2)}{2n^2}\EE\bigg[K_2\int_0^\infty s\EE\big[e^{-aK_1(|\tilde x_1|+ |\tilde x_2| + |\tilde x_3|) - aK_2(x_1+x_2+ x_3)} x_1x_2x_3\notag\\
&\cdot e^{-\frac{s}{n}e^{-a (K_1|\tilde x_1| +  K_2 x_1)} -\frac{s}{n}e^{-a (K_1|\tilde x_2| +  K_2 x_2)}-\frac{s}{n} e^{-a (K_1|\tilde x_3| +  K_2 x_3)}}|K_1\big]\Big(\EE\big[e^{-\frac{s}{n} e^{-a (K_1|\tilde x_1| +  K_2 x_1)}}|K_1\big]\Big)^{n-3}\mathrm{d}s\bigg]\notag\\
=&\frac{1}{2n^2}\EE\bigg[K_2\underbrace{\int_0^\infty s^2A_6(s/n, 3a, a, K_1)[M(s/n, a, K_1)]^{n-1}\mathrm{d}s}_{(I)}\bigg]  \notag\\
&+ \frac{3(n-1)}{n^2}\EE\bigg[K_2\underbrace{\int_0^\infty s^2A_2(s/n, a, a, K_1)A_1(s/n, 2a, a, K_1)[M(s/n, a, K_1)]^{n-2}\mathrm{d}s}_{(II)}\bigg]\notag\\
&+ \frac{(n-1)(n-2)}{2n^2}\EE\bigg[K_2\underbrace{\int_0^\infty s^2 A_2^3(s/n, a, a, K_1)[M(s/n, a, K_1)]^{n-3}\mathrm{d}s}_{(III)}\bigg].
\end{align}
Here, $A_1(\lambda, \alpha, a, K_1)$, $A_2(\lambda, \alpha, a, K_1)$ and $M(\lambda, a, K_1)$ share the same definitions as in the proof in Lemma~\ref{lemma:softmax_expectation_I} and $A_6(\lambda, \alpha, a, K_1)$ shares the same definition as in the proof in Lemma~\ref{lemma:softmax_expectation_VII}. Through similar procedures in the proof of Lemma~\ref{lemma:softmax_expectation_X} (by utilizing the Lipschitz  continuities of these functions), we can derive that:  
\begin{align}\label{eq:softmax_expectation_XVI_upper_lower_I}
&\bigg|(I)+ \frac{A_6(0, 3a, a, K_1)}{4e^{3a^2\sigma^2/2}\Phi^3(-a\sigma K_1)}\bigg|=\bigg|(I)+ \frac{3a\sigma^4 K_2 e^{3 a^2\sigma^2}\big(3+9a^2\sigma^2 K_2^2\big)\Phi(-3a K_1\sigma)}{2\Phi^3(-a\sigma K_1)}\bigg|\leq  \frac{c_1(a, \sigma)}{n};\notag\\
&\bigg|(II)+ \frac{A_1(0, 2a, a, K_1)A_2(0, a, a, K_1)}{4e^{3a^2\sigma^2/2}\Phi^3(-a\sigma K_1)}\bigg|=\bigg|(II) + 
\frac{a\sigma^4K_2(1+4a^2K_2^2\sigma^2)
e^{a^2\sigma^2}
\Phi(-2aK_1\sigma)}{\Phi^2(-aK_1\sigma)} \bigg|\leq \frac{c_2(a, \sigma)}{n}.    
\end{align}
Specifically, for the term $(IV)$, we have 
\begin{align}\label{eq:softmax_expectation_XVI_upper_lower_II}
    &\bigg|(III)+ \frac{A_2^3(0, a, a, K_1)}{4e^{3a^2\sigma^2/2}\Phi^3(-a\sigma K_1)}\bigg|=\bigg|(IV) +2a^3\sigma^6K_2^3
\bigg| \leq \frac{c_3(a, \sigma)}{n}.
\end{align}
Lastly, by utilizing Taylor's expansion regarding the function $\EE\big[2a^3\sigma^6K_2^4\big]$ w.r.t. $K_1$ at $K_1 = 0$, and the facts that $\EE[K_1] = 0$ and $\EE[K_1^2] = 1/d$, we can derive that 
\begin{align}\label{eq:softmax_expectation_XVI_bddI}
    \Big|\EE\big[K_2(III)\big] + 2a^3\sigma^6\Big| \leq \frac{c_3(a, \sigma)}{n} + \frac{c_4(a, \sigma)}{d}.
\end{align}
Substituting the results of~\eqref{eq:softmax_expectation_XVI_upper_lower_I}, ~\eqref{eq:softmax_expectation_XVI_upper_lower_II}, and ~\eqref{eq:softmax_expectation_XVI_bddI} into~\eqref{eq:softmax_expectationXVI_I}, we complete the proof. 
\end{proof}

\begin{lemma}\label{lemma:softmax_expectation_XVII}
Let 
$x_1, x_2, \ldots, x_n\sim \cN(0, \sigma^2)$ be $n$ i.i.d Gaussian random variables, and $\tilde x_1, \tilde x_2, \ldots, \tilde x_n\sim \cN(0,\sigma^2)$ be another $n$ i.i.d Gaussian random variables. In addition, $a$ is a positive scalar, and $\btheta_*, \btheta_0\in\RR^d$ are two independent random vectors following uniform $d-$dimensional sphere distribution, and we denote $K_1 = \la\btheta_0, \btheta_*\ra$, $K_2 = \|(\Ib_d-\btheta_*\btheta_*^\top)\btheta_0\|_2$. Then we have 
\begin{align*}
 &\Bigg|\EE\bigg[\frac{K_2\sum_{i_1, i_2, i_3=1}^n e^{-aK_1(|\tilde x_{i_1}|+|\tilde x_{i_2}| + |\tilde x_{i_3}|)- aK_2(x_{i_1} + x_{i_2} + x_{i_3})}|\tilde x_{i_1}||\tilde x_{i_2}|x_{i_3}}{(\sum_{i=1}^ne^{-aK_1|\tilde x_i|-a K_2x_{i}})^3}\bigg] +\frac{2}{\pi}a\sigma^4\Bigg|\leq \frac{f_{31}(a, \sigma)}{n} + \frac{f_{32}(a, \sigma)}{d}.
\end{align*}
Here, $f_{30}(a, \sigma)$, $f_{31}(a, \sigma)$ are both analytic functions of $a$ and $\sigma$ while irrelevant with $n$ and $d$.
\end{lemma}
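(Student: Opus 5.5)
The plan follows the template of Lemma~\ref{lemma:softmax_expectation_XVI}, since the statement has the same cubic denominator and the same triple sum. The only change is that the factor $x_{i_1}x_{i_2}x_{i_3}$ is replaced by $|\tilde x_{i_1}||\tilde x_{i_2}|x_{i_3}$.

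First, use the identity $\frac{1}{S^3}=\frac12\int_0^\infty s^2e^{-sS}\,\mathrm{d}s$ and Fubini's theorem to bring the integral outside the expectation. Condition on $K_1$, which determines $K_2=\sqrt{1-K_1^2}$. Given $K_1$, the pairs $(x_i,\tilde x_i)$ are i.i.d., so the expectation factorizes. Split the triple sum by the coincidence pattern of $(i_1,i_2,i_3)$. This gives one all-equal term, three pairwise-coincident patterns ($i_1=i_2\neq i_3$, $i_1=i_3\neq i_2$, $i_2=i_3\neq i_1$), and one all-distinct term. Rescaling $s'=s/n$ gives prefactors $\frac{1}{2n^2}$, $\frac{n-1}{2n^2}$ and $\frac{(n-1)(n-2)}{2n^2}$ respectively. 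The resulting one-point functions are already defined:
\begin{itemize}
\item $A_8(\cdot,3a,a,K_1)$ for the all-equal term;
\item $A_3(\cdot,2a,a,K_1)A_2(\cdot,a,a,K_1)$, $A_7(\cdot,2a,a,K_1)A_4(\cdot,a,a,K_1)$ and $A_7(\cdot,2a,a,K_1)A_4(\cdot,a,a,K_1)$ for the three pairwise patterns;
\item $A_4^2(\cdot,a,a,K_1)A_2(\cdot,a,a,K_1)$ for the all-distinct term.
\end{itemize}
Each is integrated against $s^2[M(s/n,a,K_1)]^{n-k}$.

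Second, for each term use the Lipschitz bounds in $\lambda$ for $A_2,A_3,A_4,A_7,A_8$ established in earlier lemmas. These replace $A_\cdot(s/n,\dots)$ by $A_\cdot(0,\dots)$ at the cost of $\mathcal O(s/n)+\mathcal O(s^2/n^2)$. The remaining $s$-integrals become inverse moments $\mathbb E[n^{m}/(\sum_{i\ge k}e^{-a(K_1|\tilde x_i|+K_2x_i)})^{m}\mid K_1]$. Lemma~\ref{lemma:concentration_expectation_II} evaluates these as $(2e^{a^2\sigma^2/2}\Phi(-aK_1\sigma))^{-m}+\mathcal O(1/n)$.

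After this step, the all-equal and pairwise terms carry the prefactors $\frac{1}{n^2}$ and $\frac{1}{n}$ times bounded quantities. They are therefore absorbed into $f_{31}(a,\sigma)/n$. The dominant contribution is
\[
\tfrac12\,\mathbb E\Big[K_2\,\frac{A_4(0,a,a,K_1)^2A_2(0,a,a,K_1)}{4e^{3a^2\sigma^2/2}\Phi^3(-aK_1\sigma)}\Big]\cdot 2,
\]
where the $2$ comes from $\int_0^\infty s^2e^{-cs}\,\mathrm{d}s=2/c^3$.

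Third, evaluate this leading term. Use $A_2(0,a,a,K_1)=-2a\sigma^2K_2e^{a^2\sigma^2/2}\Phi(-aK_1\sigma)$ and
\[
A_4(0,a,a,K_1)=\sqrt{\tfrac2\pi}\,\sigma e^{a^2\sigma^2K_2^2/2}-2aK_1\sigma^2\Phi(-aK_1\sigma)e^{a^2\sigma^2/2}.
\]
Substituting gives the integrand $-a\sigma^2K_2^2\,A_4(0,a,a,K_1)^2/(2e^{a^2\sigma^2}\Phi^2(-aK_1\sigma))$. At $K_1=0$, where $K_2=1$ and $\Phi(0)=\frac12$, this equals $-a\sigma^2\cdot\frac{2\sigma^2}{\pi}e^{a^2\sigma^2}\cdot 2/e^{a^2\sigma^2}\cdot\frac12$. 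This is consistent with the target $-\frac{2}{\pi}a\sigma^4$ once the constants are tracked carefully. Then Taylor-expand in $K_1$ around $0$. Use the symmetry of $K_1$, $\mathbb E[K_1^2]=1/d$, and smoothness on $[-1,1]$ to bound the second derivative by some $c(a,\sigma)$. This yields the $f_{32}(a,\sigma)/d$ error.

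The main obstacle is bookkeeping rather than analysis. The multiplicities of the coincidence patterns and the $\frac12$ and $(m-1)!$ factors from the Laplace identity must be tracked correctly so that the leading constant comes out to exactly $\frac{2}{\pi}a\sigma^4$. One must also check that every non-distinct pattern is genuinely $\mathcal O(1/n)$ uniformly in $K_1$. For the latter, I would first verify the uniform lower bound $\Phi(-a|K_1|\sigma)\ge\Phi(-a\sigma)$, which keeps all constants from Lemma~\ref{lemma:concentration_expectation_II} bounded independently of $K_1$.
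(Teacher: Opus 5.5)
Your proposal is correct and follows the paper's proof essentially step for step. The shared steps are:
\begin{itemize}
\item the identity $1/S^3=\frac12\int_0^\infty s^2e^{-sS}\,\mathrm{d}s$;
\item the same coincidence-pattern split into the $A_8$, $A_7A_4$ (twice), $A_3A_2$ and $A_2A_4^2$ terms, with prefactors $\frac{1}{2n^2}$, $\frac{n-1}{2n^2}$ and $\frac{(n-1)(n-2)}{2n^2}$;
\item the Lipschitz replacement in $\lambda$;
\item Lemma~\ref{lemma:concentration_expectation_II} for the inverse moments;
\item a Taylor expansion in $K_1$ using $\EE[K_1^2]=1/d$.
\end{itemize}

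The only slip is the trailing ``$\cdot 2$'' in your displayed dominant term. It double-counts, because $\frac{1}{4e^{3a^2\sigma^2/2}\Phi^3(-aK_1\sigma)}=\frac{2}{(2e^{a^2\sigma^2/2}\Phi(-aK_1\sigma))^3}$ already contains the factor $2$ from $\int_0^\infty s^2e^{-cs}\,\mathrm{d}s=2/c^3$. Without it, the all-distinct term gives $\EE[K_2\,(\cdot)]\to-\frac{4}{\pi}a\sigma^4$, and the prefactor $\frac12$ yields exactly $-\frac{2}{\pi}a\sigma^4$. That is also what your explicit $K_1=0$ evaluation actually computes.
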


\begin{proof}[Proof of Lemma~\ref{lemma:softmax_expectation_XVII}]
Following a similar procedure in the proof of Lemma~\ref{lemma:softmax_expectation_XVI}, we can obtain that
\begin{align}\label{eq:softmax_expectationXVII_I}
&\EE\bigg[\frac{K_2\sum_{i_1=1}^n\sum_{i_2=1}^n\sum_{i_3=1}^n e^{-aK_1(|\tilde x_{i_1}|+|\tilde x_{i_2}| + |\tilde x_{i_3}|)- aK_2(x_{i_1} + x_{i_2} + x_{i_3})}|\tilde x_{i_1}||\tilde x_{i_2}|x_{i_3}}{(\sum_{i=1}^ne^{-aK_1|\tilde x_i|-a K_2x_{i}})^3}\bigg] \notag\\
=& \frac{1}{2}\int_0^\infty \!(s')^2\EE\bigg[K_2\!\!
\sum_{i_1,i_2,i_3=1}^n\! \!e^{-aK_1(|\tilde x_{i_1}|+|\tilde x_{i_2}| + |\tilde x_{i_3}|)- aK_2(x_{i_1} + x_{i_2} + x_{i_3})}|\tilde x_{i_1}||\tilde x_{i_2}|x_{i_3}
\exp\!\Big(\!\!-\!s'\! \sum_{i=1}^n e^{-aK_1|\tilde x_i|-a K_2x_{i}}\Big)
\bigg] \mathrm{d}s'\notag\\
=&\frac{n}{2}\EE\bigg[K_2\int_0^\infty (s')^2\EE\big[e^{-3aK_1 |\tilde x_1| - 3aK_2x_1} x_1|\tilde x_1|^2 e^{-s' e^{-a (K_1|\tilde x_1| +  K_2 x_1)}}|K_1\big]\Big(\EE\big[e^{-s' e^{-a (K_1|\tilde x_1| +  K_2 x_1)}}|K_1\big]\Big)^{n-1}\mathrm{d}s'\bigg] \notag\\
&+ n(n-1)\EE\bigg[K_2\int_0^\infty (s')^2\EE\big[e^{-aK_1(|\tilde x_1|+ 2|\tilde x_2|) - aK_2(x_1+2x_2)} |\tilde x_1| |\tilde x_2| x_2\notag\\
&\quad\cdot e^{-s'e^{-a (K_1|\tilde x_1| +  K_2 x_1)} -s' e^{-a (K_1|\tilde x_2| +  K_2 x_2)}}|K_1\big]\Big(\EE\big[e^{-s' e^{-a (K_1|\tilde x_1| +  K_2 x_1)}}|K_1\big]\Big)^{n-2}\mathrm{d}s'\bigg]\notag\\
&+ \frac{n(n-1)}{2}\EE\bigg[K_2\int_0^\infty (s')^2\EE\big[e^{-aK_1(|\tilde x_1|+ 2|\tilde x_2|) - aK_2(x_1+2x_2)}x_1|\tilde x_2|^2  \notag\\
&\quad \cdot e^{-s'e^{-a (K_1|\tilde x_1| +  K_2 x_1)} -s' e^{-a (K_1|\tilde x_2| +  K_2 x_2)}}|K_1\big]\Big(\EE\big[e^{-s' e^{-a (K_1|\tilde x_1| +  K_2 x_1)}}|K_1\big]\Big)^{n-2}\mathrm{d}s'\bigg]\notag\\
&+ \frac{n(n-1)(n-2)}{2}\EE\bigg[K_2\int_0^\infty (s')^2\EE\big[e^{-aK_1(|\tilde x_1|+ |\tilde x_2| + |\tilde x_3|) - aK_2(x_1+x_2+x_3)} |\tilde x_1||\tilde x_2|x_3\notag\\
&\quad \cdot e^{-s'e^{-a (K_1|\tilde x_1| +  K_2 x_1)} -s' e^{-a (K_1|\tilde x_2| +  K_2 x_2)}-s' e^{-a (K_1|\tilde x_3| +  K_2 x_3)}}|K_1\big]\Big(\EE\big[e^{-s' e^{-a (K_1|\tilde x_1| +  K_2 x_1)}}|K_1\big]\Big)^{n-3}\mathrm{d}s'\bigg]\notag\\
=&\frac{1}{2n^2}\EE\bigg[K_2\int_0^\infty s^2\EE\big[e^{-3aK_1|\tilde x_1| - 3aK_2x_1} x_1|\tilde x_1|^2 e^{-\frac{s}{n} e^{-a (K_1|\tilde x_1| +  K_2 x_1)}}|K_1\big]\Big(\EE\big[e^{-\frac{s}{n} e^{-a (K_1|\tilde x_1| +  K_2 x_1)}}|K_1\big]\Big)^{n-1}\mathrm{d}s\bigg] \notag\\
&+ \frac{n-1}{n^2}\EE\bigg[K_2\int_0^\infty s^2\EE\big[e^{-aK_1(|\tilde x_1|+ 2|\tilde x_2|) - aK_2(x_1+2x_2)} |\tilde x_1| |\tilde x_2| x_2\notag\\
&\quad\cdot e^{-\frac{s}{n}e^{-a (K_1|\tilde x_1| +  K_2 x_1)} -\frac{s}{n} e^{-a (K_1|\tilde x_2| +  K_2 x_2)}}|K_1\big]\Big(\EE\big[e^{-\frac{s}{n} e^{-a (K_1|\tilde x_1| +  K_2 x_1)}}|K_1\big]\Big)^{n-2}\mathrm{d}s\bigg]\notag\\
&+ \frac{n-1}{2n^2}\EE\bigg[K_2\int_0^\infty s^2\EE\big[e^{-aK_1(|\tilde x_1|+ 2|\tilde x_2|) - aK_2(x_1+2x_2)}x_1|\tilde x_2|^2  \notag\\
&\quad \cdot e^{-\frac{s}{n}e^{-a (K_1|\tilde x_1| +  K_2 x_1)} -\frac{s}{n} e^{-a (K_1|\tilde x_2| +  K_2 x_2)}}|K_1\big]\Big(\EE\big[e^{-\frac{s}{n}e^{-a (K_1|\tilde x_1| +  K_2 x_1)}}|K_1\big]\Big)^{n-2}\mathrm{d}s\bigg]\notag\\
&+ \frac{(n-1)(n-2)}{2n^2}\EE\bigg[K_2\int_0^\infty s^2\EE\big[e^{-aK_1(|\tilde x_1|+ |\tilde x_2| + |\tilde x_3|) - aK_2(x_1+x_2 + x_3)} |\tilde x_1||\tilde x_2|x_3\notag\\
&\quad \cdot e^{-\frac{s}{n}e^{-a (K_1|\tilde x_1| +  K_2 x_1)} -\frac{s}{n} e^{-a (K_1|\tilde x_2| +  K_2 x_2)}-\frac{s}{n} e^{-a (K_1|\tilde x_3| +  K_2 x_3)}}|K_1\big]\Big(\EE\big[e^{-\frac{s}{n}\frac{s}{n} e^{-a (K_1|\tilde x_1| +  K_2 x_1)}}|K_1\big]\Big)^{n-3}\mathrm{d}s\bigg]\notag\\
=&\frac{1}{2n^2}\EE\bigg[K_2\underbrace{\int_0^\infty s^2 A_8(s/n, 3a, a, K_1)[M(s/n, a, K_1)]^{n-1}\mathrm{d}s}_{(I)}\bigg]  \notag\\
&+ \frac{n-1}{n^2}\EE\bigg[K_2\underbrace{\int_0^\infty s^2 A_4(s/n, a, a, K_1)A_7(s/n, 2a, a, K_1)[M(s/n, a, K_1)]^{n-2}\mathrm{d}s}_{(II)}\bigg]\notag\\
&+ \frac{n-1}{2n^2}\EE\bigg[K_2\underbrace{\int_0^\infty s^2 A_2(s/n, a, a, K_1)A_3(s/n, 2a, a, K_1)[M(s/n, a, K_1)]^{n-2}\mathrm{d}s}_{(III)}\bigg]\notag\\
&+ \frac{(n-1)(n-2)}{2n^2}\EE\bigg[K_2\underbrace{\int_0^\infty s^2 A_2(s/n, a, a, K_1)A_4^2(s/n, a, a, K_1)[M(s/n, a, K_1)]^{n-3}\mathrm{d}s}_{(IV)}\bigg].
\end{align}
Here, $A_2(\lambda, \alpha, a, K_1)$ and $M(\lambda, a, K_1)$ share the same definitions as in the proof in Lemma~\ref{lemma:softmax_expectation_I}, $A_3(\lambda, \alpha, a, K_1)$ and $A_4(\lambda, \alpha, a, K_1)$  share the same definitions as in the proof in Lemma~\ref{lemma:softmax_expectation_III}, and $A_7(\lambda, \alpha, a, K_1)$ and $A_8(\lambda, \alpha, a, K_1)$ shares the same definition as in the proof in Lemma~\ref{lemma:softmax_expectation_VIII}.
Then, through similar procedures in the proof of Lemma~\ref{lemma:softmax_expectation_XI} (by utilizing the Lipschitz  continuities of these functions), we can derive that:  
\begin{align}\label{eq:softmax_expectation_XVII_upper_lower_I}
&\bigg|(I)+ \frac{A_8(0, 3a, a, K_1)}{4e^{3a^2\sigma^2/2}\Phi^3(-a\sigma K_1)}\bigg|\notag\\
=&\bigg|(I)+ \frac{3a\sigma^4 K_2 e^{3 a^2\sigma^2}
\big((1+9a^2K_1^2\sigma^2)\Phi(-3aK_1\sigma)
-3aK_1\sigma\phi(3aK_1\sigma)\big)}{2\Phi^3(-a\sigma K_1)}\bigg|\leq  \frac{c_1(a, \sigma)}{n};\notag\\
&\bigg|(II)+ \frac{A_4(0, a, a, K_1)A_7(0, 2a, a, K_1)}{4e^{3a^2\sigma^2/2}\Phi^3(-a\sigma K_1)}\bigg|\notag\\
=&\bigg|(II) + \frac{a\sigma^4 K_2e^{a^2\sigma^2}}{\Phi^3(-aK_1\sigma)}
\Big(\phi(2aK_1\sigma)-2aK_1\sigma\Phi(-2aK_1\sigma)\Big)
\Big(\sqrt{\tfrac{2}{\pi}}e^{-\frac{a^2\sigma^2K_1^2}{2}}
+2aK_1\sigma \Phi(-aK_1\sigma)\Big)\bigg|\leq \frac{c_2(a, \sigma)}{n} ;\notag\\
&\bigg|(III)+ \frac{A_2(0, a, a, K_1)A_3(0, 2a, a, K_1)}{4e^{3a^2\sigma^2/2}\Phi^3(-a\sigma K_1)}\bigg|\notag\\
=&\bigg|(III) + a\sigma^4K_2e^{a^2\sigma^2}
\frac{(1+4a^2K_1^2\sigma^2)\Phi(-2aK_1\sigma)-2aK_1\sigma\phi(2aK_1\sigma)}
{\Phi^2(-aK_1\sigma)}\bigg|\leq \frac{c_3(a, \sigma)}{n}.
\end{align}
Specifically, for the term $(IV)$, we have 
\begin{align}\label{eq:softmax_expectation_XVII_upper_lower_II}
&\Bigg|(IV) + \frac{A_2(0, a, a, K_1)A_4^2(0, a, a, K_1)}{4e^{3a^2\sigma^2/2}\Phi^3(-a\sigma K_1)}
\Bigg|\notag\\
    =&\Bigg|(IV) + \frac{a\sigma^4 K_2}{2\Phi^2(-aK_1\sigma)}
\Big(\sqrt{\tfrac{2}{\pi}} e^{-\frac{a^2\sigma^2 K_1^2}{2}}
-2aK_1\sigma\Phi(-aK_1\sigma)\Big)^2
\Bigg| \leq \frac{c_4(a, \sigma)}{n}.
\end{align}
Lastly, by utilizing Taylor's expansion regarding the function above w.r.t. $K_1$ at $K_1 = 0$, and the facts that $\EE[K_1] = 0$ and $\EE[K_1^2] = 1/d$, we can derive that 
\begin{align}\label{eq:softmax_expectation_XVII_bddI}
    \Big|\EE\big[K_2(IV)\big] + \frac{4}{\pi}a\sigma^4\Big| \leq \frac{c_4(a, \sigma)}{n} + \frac{c_5(a, \sigma)}{d}.
\end{align}
Substituting the results of~\eqref{eq:softmax_expectation_XVII_upper_lower_I}, ~\eqref{eq:softmax_expectation_XVII_upper_lower_II}, and ~\eqref{eq:softmax_expectation_XVII_bddI} into~\eqref{eq:softmax_expectationXVII_I}, we complete the proof. 
\end{proof}

\begin{lemma}\label{lemma:softmax_expectation_XVIII}
Let 
$x_1, x_2, \ldots, x_n\sim \cN(0, \sigma^2)$ be $n$ i.i.d Gaussian random variables, and $\tilde x_1, \tilde x_2, \ldots, \tilde x_n\sim \cN(0,\sigma^2)$ be another $n$ i.i.d Gaussian random variables. In addition, $a$ is a positive scalar, and $\btheta_*, \btheta_0\in\RR^d$ are two independent random vectors following uniform $d-$dimensional sphere distribution, and we denote $K_1 = \la\btheta_0, \btheta_*\ra$, $K_2 = \|(\Ib_d-\btheta_*\btheta_*^\top)\btheta_0\|_2$. Then we have 
\begin{align*}
&\Bigg|\EE\bigg[\frac{K_2\sum_{i_1=1}^n\sum_{i_2=1}^n e^{-aK_1(2|\tilde x_{i_1}|+|\tilde x_{i_2}|)- aK_2(2x_{i_1} + x_{i_2})}x_{i_2}}{(\sum_{i=1}^ne^{-aK_1|\tilde x_i|-a K_2x_{i}})^3}\bigg] + \frac{a\sigma^2e^{a^2\sigma^2}}{n} \Bigg|\leq  \frac{f_{33}(a, \sigma)}{n^2} + \frac{f_{34}(a, \sigma)}{nd}.
\end{align*}
Here, $f_{33}(a, \sigma)$ and $f_{34}(a, \sigma)$ are both analytic functions of $a$ and $\sigma$ while irrelevant with $n$ and $d$.
\end{lemma}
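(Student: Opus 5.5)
The plan follows the template of Lemma~\ref{lemma:softmax_expectation_XVI}: we use the identity $\frac{1}{S^3}=\frac12\int_0^\infty s^2 e^{-sS}\,\mathrm{d}s$ with $S=\sum_{i=1}^n e^{-a(K_1|\tilde x_i|+K_2x_i)}$, and exchange expectation and integral by Fubini's theorem. The double sum over $(i_1,i_2)$ splits into the diagonal part $i_1=i_2$ and the off-diagonal part $i_1\neq i_2$. Conditioning on $K_1$ and using exchangeability, then substituting $s'=s/n$, we get
\begin{align*}
\frac{1}{2n^2}\EE\Big[K_2\underbrace{\int_0^\infty s^2A_2(s/n,3a,a,K_1)[M(s/n,a,K_1)]^{n-1}\mathrm{d}s}_{(I)}\Big]
+\frac{n-1}{2n^2}\EE\Big[K_2\underbrace{\int_0^\infty s^2A_5(s/n,2a,a,K_1)A_2(s/n,a,a,K_1)[M(s/n,a,K_1)]^{n-2}\mathrm{d}s}_{(II)}\Big].
\end{align*}
Here $A_2$ and $M$ are as in Lemma~\ref{lemma:softmax_expectation_I}, and $A_5$ is as in Lemma~\ref{lemma:softmax_expectation_V}. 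The off-diagonal term carries only the prefactor $(n-1)/n^2\approx 1/n$, which explains the $1/n$ leading order in the statement.

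Next we replace each $A_\cdot(s/n,\cdot)$ by its value at $\lambda=0$. The $\lambda$-Lipschitz bounds for $A_2$ from Lemma~\ref{lemma:softmax_expectation_I} carry over to $A_5$, since $|\partial_\lambda A_5|\le \EE[e^{-(\alpha+a)(K_1|\tilde x_1|+K_2x_1)}]$. These bounds give errors of order $\frac{s}{n}+\frac{s^2}{n^2}$ times the integrand. The remaining integrals $\int_0^\infty s^k[M(s/n)]^{n-j}\,\mathrm{d}s$ are converted back by Fubini into $k!\,\EE\big[n^{k+1}/(\sum_{i>j}e^{-a(\cdot)})^{k+1}\mid K_1\big]$. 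Lemma~\ref{lemma:concentration_expectation_II} then replaces these by $k!\,(2e^{a^2\sigma^2/2}\Phi(-aK_1\sigma))^{-(k+1)}$ up to $O(1/n)$ corrections. The Lipschitz error terms produce higher powers and contribute only $O(1/n)$.

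For $(I)$, we have $A_2(0,3a,a,K_1)=-6a\sigma^2K_2e^{9a^2\sigma^2/2}\Phi(-3aK_1\sigma)$, so $(I)$ is $O(1)$ and its contribution $\frac{1}{2n^2}\EE[K_2(I)]$ is absorbed into $f_{33}/n^2$. For $(II)$, the leading term is
\begin{align*}
\frac{A_5(0,2a,a,K_1)A_2(0,a,a,K_1)}{4e^{3a^2\sigma^2/2}\Phi^3(-aK_1\sigma)}
=-\frac{2a\sigma^2K_2e^{a^2\sigma^2}\Phi(-2aK_1\sigma)}{\Phi^2(-aK_1\sigma)},
\end{align*}
using $A_5(0,2a,a,K_1)=2e^{2a^2\sigma^2}\Phi(-2aK_1\sigma)$ and $A_2(0,a,a,K_1)=-2a\sigma^2K_2e^{a^2\sigma^2/2}\Phi(-aK_1\sigma)$. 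Multiplying by $K_2$ and applying a second-order Taylor expansion in $K_1$ around $0$ gives a value $-8a\sigma^2e^{a^2\sigma^2}$ at $K_1=0$, with $O(1/d)$ corrections from $\EE[K_1]=0$, $\EE[K_1^2]=1/d$ and $\EE[K_2^2]=1-1/d$. Multiplying by $\frac{n-1}{2n^2}$ gives the leading term, $-\frac{4a\sigma^2e^{a^2\sigma^2}}{n}$ by my count.

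This constant disagrees with the stated $-\frac{a\sigma^2e^{a^2\sigma^2}}{n}$. So the main obstacle is bookkeeping of the normalization constants: the factor $\frac12$ from the $S^{-3}$ identity, the factor $2$ from $\int s^2$, and the powers of $2\Phi(\cdot)e^{\cdot}$ from Lemma~\ref{lemma:concentration_expectation_II}. I would re-check these against Lemma~\ref{lemma:softmax_expectation_XVI} to match the paper's convention; the structural argument is unaffected. The other delicate point is verifying that Lemma~\ref{lemma:concentration_expectation_II} supplies third- and fourth-inverse-moment bounds uniformly in $K_1\in[-1,1]$, which keeps all remainders analytic in $(a,\sigma)$ and independent of $n$ and $d$.
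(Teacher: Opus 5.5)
Your setup is the same as the paper's proof. You use the identity $S^{-3}=\frac12\int_0^\infty s^2e^{-sS}\,\mathrm{d}s$ and split into a diagonal term $(I)$, built from $A_2(s/n,3a,a,K_1)$ with prefactor $\frac{1}{2n^2}$, and an off-diagonal term $(II)$, built from $A_5(s/n,2a,a,K_1)A_2(s/n,a,a,K_1)$ with prefactor $\frac{n-1}{2n^2}$. You then replace the $A$'s by their values at $\lambda=0$ via Lipschitz bounds, apply Lemma~\ref{lemma:concentration_expectation_II}, and Taylor-expand in $K_1$.

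As written, however, the proposal does not prove the lemma: you arrive at $-4a\sigma^2e^{a^2\sigma^2}/n$ and conclude the statement may be wrong. It is not wrong. The normalizations you suspect are all correct in your setup, since $\int_0^\infty s^2[M(s/n,a,K_1)]^{n-2}\,\mathrm{d}s=2n^3\,\EE\big[\big(\sum_{i=3}^n e^{-a(K_1|\tilde x_i|+K_2x_i)}\big)^{-3}\,\big|\,K_1\big]\approx\big(4e^{3a^2\sigma^2/2}\Phi^3(-aK_1\sigma)\big)^{-1}$. The spurious factor $4$ comes from two arithmetic slips at the very end.

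First, with your (correct) values of $A_5(0,2a,a,K_1)$ and $A_2(0,a,a,K_1)$,
\[
\frac{A_5(0,2a,a,K_1)A_2(0,a,a,K_1)}{4e^{3a^2\sigma^2/2}\Phi^3(-aK_1\sigma)}
=\frac{-4a\sigma^2K_2e^{5a^2\sigma^2/2}\Phi(-2aK_1\sigma)\Phi(-aK_1\sigma)}{4e^{3a^2\sigma^2/2}\Phi^3(-aK_1\sigma)}
=-\frac{a\sigma^2K_2e^{a^2\sigma^2}\Phi(-2aK_1\sigma)}{\Phi^2(-aK_1\sigma)}.
\]
The coefficient is $a$, not $2a$.

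Second, at $K_1=0$ the ratio $\Phi(0)/\Phi^2(0)$ equals $(1/2)/(1/4)=2$, not $4$. You apparently used $1/\Phi^2(0)=4$ and dropped the numerator.

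With both fixed, $\EE[K_2\cdot(II)]=-2a\sigma^2e^{a^2\sigma^2}+O(1/n+1/d)$, which is exactly \eqref{eq:softmax_expectation_XVIII_bddI}. Multiplying by $\frac{n-1}{2n^2}=\frac{1}{2n}-\frac{1}{2n^2}$ gives $-\frac{a\sigma^2e^{a^2\sigma^2}}{n}+O\big(\frac{1}{n^2}+\frac{1}{nd}\big)$, matching the statement.

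After this correction, the rest of your bookkeeping completes the proof along the paper's lines:
\begin{itemize}
\item $(I)$ contributes $O(1/n^2)$.
\item The $O(1/n)$ and $O(1/d)$ remainders of $(II)$ become $O(1/n^2)$ and $O(1/(nd))$ after the prefactor.
\item Uniformity in $K_1$ follows from $\Phi(-aK_1\sigma)\ge\Phi(-a\sigma)$.
\end{itemize}
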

\begin{proof}[Proof of Lemma~\ref{lemma:softmax_expectation_XVIII}]
Following a similar procedure in the proof of Lemma~\ref{lemma:softmax_expectation_XII}, we can obtain that
\begin{align}\label{eq:softmax_expectationXVIII_I}
&\EE\bigg[\frac{K_2\sum_{i_1=1}^n\sum_{i_2=1}^n e^{-aK_1(2|\tilde x_{i_1}|+|\tilde x_{i_2}|)- aK_2(2x_{i_1} + x_{i_2})}x_{i_2}}{(\sum_{i=1}^ne^{-aK_1|\tilde x_i|-a K_2x_{i}})^3}\bigg] \notag\\
=& \frac{1}{2}\int_0^\infty (s')^2 \EE\bigg[K_2
\sum_{i_1, i_2=1}^ne^{-aK_1(2|\tilde x_{i_1}|+|\tilde x_{i_2}|)- aK_2(2x_{i_1}+x_{i_2})}x_{i_2}
\exp\!\Big(\!-\!s' \sum_{i=1}^n e^{-aK_1|\tilde x_i|-a K_2x_{i}}\Big)
\bigg] \mathrm{d}s'\notag\\
=&\frac{n}{2}\EE\bigg[K_2\int_0^\infty (s')^2 \EE\big[e^{-3aK_1 |\tilde x_1| - 3aK_2 x_1} e^{-s' e^{-a (K_1|\tilde x_1| +  K_2 x_1)}}x_1|K_1\big]\Big(\EE\big[e^{-s' e^{-a (K_1|\tilde x_1| +  K_2 x_1)}}|K_1\big]\Big)^{n-1}\mathrm{d}s'\bigg] \notag\\
 &+ \frac{n(n-1)}{2}\EE\bigg[K_2\int_0^\infty (s')^2\EE\big[e^{-aK_1(2|\tilde x_1|+ |\tilde x_2|) - aK_2(2x_1+x_2)} x_2e^{-s'e^{-a (K_1|\tilde x_1| +  K_2 x_1)} -s' e^{-a (K_1|\tilde x_2| +  K_2 x_2)}}|K_1\big]\notag\\
 &\qquad\qquad\qquad \cdot\Big(\EE\big[e^{-s' e^{-a (K_1|\tilde x_1| +  K_2 x_1)}}|K_1\big]\Big)^{n-2}\mathrm{d}s'\bigg]\notag\\
=&\frac{1}{2n^2}\EE\bigg[K_2\int_0^\infty s^2\EE\big[e^{-3aK_1 |\tilde x_1| - 3aK_2 x_1} e^{-\frac{s}{n}e^{-a (K_1|\tilde x_1| +  K_2 x_1)}}x_1|K_1\big]\Big(\EE\big[e^{-\frac{s}{n} e^{-a (K_1|\tilde x_1| +  K_2 x_1)}}|K_1\big]\Big)^{n-1}\mathrm{d}s\bigg] \notag\\
 &+ \frac{n-1}{2n^2}\EE\bigg[K_2\int_0^\infty s^2\EE\big[e^{-aK_1(2|\tilde x_1|+ |\tilde x_2|) - aK_2(2x_1+x_2)} x_2e^{-\frac{s}{n}e^{-a (K_1|\tilde x_1| +  K_2 x_1)} -\frac{s}{n} e^{-a (K_1|\tilde x_2| +  K_2 x_2)}}|K_1\big]\notag\\
 &\qquad\qquad\qquad \cdot\Big(\EE\big[e^{-\frac{s}{n} e^{-a (K_1|\tilde x_1| +  K_2 x_1)}}|K_1\big]\Big)^{n-2}\mathrm{d}s\bigg]\notag\\
=&\frac{1}{2n^2}\EE\bigg[K_2\underbrace{\int_0^\infty s^2A_2(s/n, 3a, a, K_1)[M(s/n, a, K_1)]^{n-1}\mathrm{d}s}_{(I)}\bigg]\notag\\
&+ \frac{n-1}{2n^2}\EE\bigg[K_2\underbrace{\int_0^\infty sA_5(s/n, 2a, a, K_1) A_2(s/n, a, a, K_1)[M(s/n, a, K_1)]^{n-2}\mathrm{d}s}_{(II)}\bigg],
\end{align}
Here, $A_2(\lambda, \alpha, a, K_1)$ and $M(\lambda, a, K_1)$ share the same definitions as in the proof in Lemma~\ref{lemma:softmax_expectation_I}, and $A_5(\lambda, \alpha, a, K_1)$ shares the same definitions as in the proof in Lemma~\ref{lemma:softmax_expectation_V}.
Then, through similar procedures in the proof of previous lemmas (by utilizing the Lipschitz  continuities of these functions), we can derive that:  
\begin{align}\label{eq:softmax_expectation_XVIII_upper_lower_I}
    \bigg|(I) + \frac{A_2(0, 3a, a, K_1)}{4e^{3a^2\sigma^2/2}\Phi^3(-a\sigma K_1)}\bigg| = \bigg|(I) + \frac{3a\sigma^2 K_2 e^{3a^2\sigma^2}\Phi(-3aK_1\sigma)}{2\Phi^3(-a\sigma K_1)}\bigg| \leq \frac{c_1(a, \sigma)}{n}, 
\end{align}
and
\begin{align}\label{eq:softmax_expectation_XVIII_upper_lower_II}
    \bigg|(II) + \frac{A_5(0, 2a, a, K_1)A_2(0, a, a, K_1)}{4e^{3a^2\sigma^2/2}\Phi^3(-a\sigma K_1)}\bigg|=\bigg|(II) + \frac{a\sigma^2 K_2e^{a^2\sigma^2}
\Phi(-2aK_1\sigma)}{\Phi^2(-aK_1\sigma)}\bigg| \leq \frac{c_2(a, \sigma)}{n}.
\end{align}
Specifically, by considering Taylor's expansion at $K_1 = 0$ of the function above, we can obtain
\begin{align}\label{eq:softmax_expectation_XVIII_bddI}
    \bigg|\EE\big[K_2(II)\big] +2a\sigma^2e^{a^2\sigma^2} \bigg| \leq \frac{c_2(a, \sigma)}{n} + \frac{c_3(a, \sigma)}{d}
\end{align}
Substituting the results of~\eqref{eq:softmax_expectation_XVIII_upper_lower_I}, ~\eqref{eq:softmax_expectation_XVIII_upper_lower_II}, and~\eqref{eq:softmax_expectation_XVIII_bddI} into~\eqref{eq:softmax_expectationXVIII_I}, we complete the proof. 
\end{proof}

\begin{lemma}\label{lemma:concentration_expectation_II}
Let $x_1, x_2\ldots, x_n\sim \cN(0, \sigma^2)$ be $n$ Gaussian random variables and $\tilde x_1, \tilde x_2\ldots, \tilde x_n\sim \cN(0, \sigma^2)$ be another $n$ Gaussian random variables. In addition, let $a, b_1, b_2, k$ be any absolute constants satisfying that $k$ is an integer and $b_1^2 + b_2^2 = 1$. Then for sufficiently large $n$, it holds that
\begin{align*}
   \frac{1}{n^k e^{\frac{k a^2\sigma^2}{2}} [2\Phi(-a\sigma b_1)]^k}\leq \EE\bigg[\Big(\sum_{i=1}^n e^{-a (b_1|\tilde x_i| + b_2x_i)}\Big)^{-k}\bigg] \leq \frac{1}{n^k e^{\frac{k a^2\sigma^2}{2}} [2\Phi(-a\sigma b_1)]^k} + \frac{c_{a, \sigma, k}}{n^{k+1}},
\end{align*}
where $c_{a, \sigma, k} = \frac{2\Phi(2|a|\sigma)}{n^{k+1}e^{(k-2)a^2\sigma^2/2} \Phi(-|a|\sigma)^{k+2}}  + 1$ is a constant solely depending on $a$, $\sigma$ and $k$, and $\Phi(\cdot)$ denotes the c.d.f. of standard Gaussian random variable.
\end{lemma}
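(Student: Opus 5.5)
The plan is to reduce the statement to properties of a single random variable $Y_i = e^{-a(b_1|\tilde x_i| + b_2 x_i)}$. These variables are i.i.d. and positive. Set $S_n=\sum_{i=1}^n Y_i$ and $\mu = \EE[Y_1]$. Because $x_i$ and $\tilde x_i$ are independent, we have
\[
\mu = \EE[e^{-ab_2x_1}]\,\EE[e^{-ab_1|\tilde x_1|}] = e^{a^2b_2^2\sigma^2/2}\cdot 2e^{a^2b_1^2\sigma^2/2}\Phi(-ab_1\sigma) = 2e^{a^2\sigma^2/2}\Phi(-a\sigma b_1),
\]
using $b_1^2+b_2^2=1$. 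The claimed leading term is therefore exactly $(n\mu)^{-k}$.

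\textbf{Lower bound.} The function $s\mapsto s^{-k}$ is convex on $(0,\infty)$ for every integer $k\ge 1$, so Jensen's inequality gives $\EE[S_n^{-k}]\ge (\EE S_n)^{-k}=(n\mu)^{-k}$ directly. If $k\le 0$ is allowed, the case $k=0$ is trivial. Negative $k$ would require a moment computation instead, but the paper only uses $k\ge1$.

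\textbf{Upper bound.} I would split on the event $E=\{S_n\ge n\mu/2\}$.

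On $E$, a second-order Taylor expansion of $s^{-k}$ around $n\mu$ gives
\[
S_n^{-k}\le (n\mu)^{-k}-k(n\mu)^{-k-1}(S_n-n\mu)+C_k (n\mu)^{-k-2}(S_n-n\mu)^2 .
\]
The constant $C_k$ depends only on $k$, since every intermediate point lies in $[n\mu/2,\infty)$. The linear term has zero mean over the whole space, and restricting it to $E$ costs only a tail term that I handle together with $E^c$. The quadratic term contributes at most $C_k(n\mu)^{-k-2}\,n\,\mathrm{Var}(Y_1)$, where $\mathrm{Var}(Y_1)\le \EE[Y_1^2]\le 2e^{2a^2\sigma^2}\Phi(2|a|\sigma)$. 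This is the $c_{a,\sigma,k}/n^{k+1}$ contribution, and the $\Phi(-|a|\sigma)^{-(k+2)}$ factor comes from $\mu\ge 2e^{a^2\sigma^2/2}\Phi(-|a|\sigma)$.

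On $E^c$, I need a bound on $\EE[S_n^{-k}\mathbf 1_{E^c}]$ that decays faster than any polynomial in $n$. I plan to combine two facts via Cauchy--Schwarz:
\[
\EE[S_n^{-k}\mathbf 1_{E^c}]\le \big(\EE[S_n^{-2k}]\big)^{1/2}\,\PP(E^c)^{1/2}.
\]
For the first factor, AM--GM gives $S_n\ge n(\prod_i Y_i)^{1/n}$, so $\EE[S_n^{-2k}]\le n^{-2k}\big(\EE[Y_1^{-2k/n}]\big)^n$. Since $\EE[e^{t(|\tilde x|+|x|)}]$ is finite for every $t$, this quantity is at most $n^{-2k}\EE[Y_1^{-2k}]$ by Jensen, which is bounded by a constant depending on $a,\sigma,k$. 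For the second factor, the $Y_i$ are positive, so the lower-tail Chernoff bound $\PP(S_n< n\mu/2)\le \exp(-n\mu^2/(8\EE Y_1^2))$ applies with no upper moment conditions. This makes $\PP(E^c)$ exponentially small. The same Cauchy--Schwarz estimate, with $\EE[(S_n-n\mu)^2]^{1/2}$ in place of the first factor, controls the linear-term correction on $E^c$.

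\textbf{Main obstacle.} The hardest part is the $E^c$ tail, since $S_n^{-k}$ is unbounded. The AM--GM plus Chernoff combination above is the route I would try first. After that, the remaining work is bookkeeping: collecting the constants, and absorbing the exponentially small terms into the ``$+1$'' in $c_{a,\sigma,k}$ once $n$ is sufficiently large.
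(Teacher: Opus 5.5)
Your proposal is correct and is essentially the paper's proof. Both use Jensen for the lower bound, and for the upper bound a second-order Taylor expansion of $s^{-k}$ about $n\mu$ split on $\{S_n\ge n\mu/2\}$, with a variance bound on that event and the same Chernoff lower-tail estimate plus Cauchy--Schwarz off it; the only differences are that you Taylor-expand on $E$ alone and control $\EE[S_n^{-2k}]$ by AM--GM, whereas the paper expands globally and uses $S_n\ge y_1$ together with Rosenthal's inequality.

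One bookkeeping caveat: with $\xi\ge n\mu/2$, your quadratic term is $\tfrac{k(k+1)}{2}$ times the first summand of $c_{a,\sigma,k}$ (read without its stray factor $n^{k+1}$), so the displayed constant is matched only for $k=1$, and the paper's own proof has the same slip. Splitting at $\{S_n\ge 3n\mu/4\}$ instead gives the displayed constant for every $k$, because $\tfrac{k(k+1)}{2}\le(3/2)^{k+2}$.
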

\begin{proof}[Proof of Lemma~\ref{lemma:concentration_expectation_II}]
% This proof is very similar to that of Lemma~\ref{lemma:concentration_expectation_I}. Similarly, 
We first denote that $y_i=e^{-a (b_1|\tilde x_i| + b_2x_i)}$, and $S_n = \sum_{i=1}^n y_i$. In addition, we can calculate that for any scalar $m$, $\EE[y_i^{m}] = 2e^{\frac{m^2a^2\sigma^2}{2}}\Phi(-ma\sigma b_1)$. For the lower bound, since the function $f(z) = z^{-k}$ is convex for any $z\in (0, \infty)$, we utilize the Jensen's inequality to directly derive that 
\begin{align}\label{eq:concentration_expectation_II_lower}
    \EE\big[S_n^{-k}\big]\ge \big(\mathbb{E}[S_n]\big)^{-k}
= \big(n\mathbb{E}[y_1]\big)^{-k}
= \frac{1}{n^k e^{\frac{k a^2\sigma^2}{2}} [2\Phi(-a\sigma b_1)]^k}.
\end{align}
To provide the upper bound, we leverage the Chernoff bound to obtain that
\begin{align*}
    \PP\Big(S_n<ne^{a^2\sigma^2/2}\Phi(-a\sigma b_1)\Big) =\PP\Big(e^{-\lambda S_n}\ge e^{-\lambda ne^{a^2\sigma^2/2}\Phi(-a\sigma b_1)}\Big)
\le \exp\!\big(n\varphi(\lambda)\big),
\end{align*}
where $\lambda$ is any positive scalar and $\varphi(\lambda) =\lambda e^{a^2\sigma^2/2} \Phi(-a\sigma b_1) + \log \mathbb{E}\big[e^{-\lambda y_i}\big]$. By the fact that $\log(z) \leq z-1$ and $e^{-z} \leq 1-z+\frac{z^2}{2}$ for any $z \geq 0$, we can derive that
\begin{align*}
    \log \mathbb{E}\big[e^{-\lambda y_i}\big]
\le\mathbb{E}\big[e^{-\lambda y_i}\big]-1\leq  -\lambda\EE[y_i]+ \tfrac{\lambda^2}{2}\EE[y_i^2] = -2\lambda e^{\frac{a^2\sigma^2}{2}}\Phi(-a\sigma b_1) + \lambda^2 e^{2a^2\sigma^2}\Phi(-2a\sigma b_1),
\end{align*}
which implies that $\varphi(\lambda)\leq  -\lambda e^{\frac{a^2\sigma^2}{2}}\Phi(-a\sigma b_1) + \lambda^2 e^{2a^2\sigma^2}\Phi(-2a\sigma b_1)$. By choosing $\lambda_0 = \frac{e^{-3a^2\sigma^2/2}\Phi(-a\sigma b_1)}{2\Phi(-2a\sigma b_1)}$, we have $\varphi(\lambda_0) \leq -\frac{\Phi(-a\sigma b_1)^2}{4\Phi(-2a\sigma b_1)}e^{-a^2\sigma^2} \leq -\frac{\Phi(-|a|\sigma)^2}{4\Phi(2|a|\sigma)}e^{-a^2\sigma^2} = -f(a, \sigma)$, and further result that 
\begin{align}\label{eq:concentration_expectation_II_concentration}
     \PP\Big(S_n<ne^{a^2\sigma^2/2}\Phi(-a\sigma b_1)\Big) \leq e^{-f(a, \sigma)n}.
\end{align}
Now, we can start to derive the upper bound based on the established concentration results above. With Taylor's expansion of $g(z) = z^{-k}$, we have
\begin{align*}
    S_n^{-k} = \frac{1}{n^k e^{\frac{k a^2\sigma^2}{2}} [2\Phi(-a\sigma b_1)]^k} - \frac{k\big(S_n - 2n e^{\frac{a^2\sigma^2}{2}} \Phi(-a\sigma b_1)\big)}{n^{k+1}e^{(k+1)a^2\sigma^2/2}[2\Phi(-a\sigma b_1)]^{k+1}}
    + \frac{k(k+1)\big(S_n - 2n e^{\frac{a^2\sigma^2}{2}} \Phi(-a\sigma b_1)\big)^2}{\xi_n^{k+2}},
\end{align*}
where $\xi_n$ is a random variable between $S_n$ and $2n e^{\frac{a^2\sigma^2}{2}} \Phi(-a\sigma b_1)$. We take the expectation on both sides and derive that 
\begin{align*}
    \EE\big[S_n^{-k}\big] = \frac{1}{n^k e^{\frac{k a^2\sigma^2}{2}} [2\Phi(-a\sigma b_1)]^k}
    + k(k+1)\EE\bigg[\frac{\big(S_n - 2n e^{\frac{a^2\sigma^2}{2}} \Phi(-a\sigma b_1)\big)^2}{\xi_n^{k+2}}\bigg].
\end{align*}
In the next, we separate the term $\EE\Big[\frac{(S_n - 2n e^{\frac{a^2\sigma^2}{2}} \Phi(-a\sigma b_1))^2}{\xi_n^{k+2}}\Big]$ with the event $\{S_n\ge n e^{\frac{a^2\sigma^2}{2}} \Phi(-a\sigma b_1)\}$ as
\begin{align*}
    \EE\bigg[\frac{\big(S_n - 2n e^{\frac{a^2\sigma^2}{2}} \Phi(-a\sigma b_1)\big)^2}{\xi_n^{k+2}}\bigg] =& \EE\bigg[\frac{\big(S_n - 2n e^{\frac{a^2\sigma^2}{2}} \Phi(-a\sigma b_1)\big)^2}{\xi_n^{k+2}}\mathbbm{1}_{\{S_n\ge n e^{a^2\sigma^2/2} \Phi(-a\sigma b_1)\}}\bigg] \\& + \EE\bigg[\frac{\big(S_n - 2n e^{\frac{a^2\sigma^2}{2}} \Phi(-a\sigma b_1)\big)^2}{\xi_n^{k+2}}\mathbbm{1}_{\{S_n< n e^{a^2\sigma^2/2} \Phi(-a\sigma b_1)\}}\bigg].
\end{align*}
We also consider providing the upper bounds for these two components, respectively. For the first term, we have 
\begin{align}\label{eq:concentration_expectation_II_upperI}
    \EE\bigg[\frac{\big(S_n - 2n e^{\frac{a^2\sigma^2}{2}} \Phi(-a\sigma b_1)\big)^2}{\xi_n^{k+2}}\mathbbm{1}_{\{S_n\ge n e^{a^2\sigma^2/2} \Phi(-a\sigma b_1)\}}\bigg] &\leq \frac{\EE[\big(S_n - 2n e^{\frac{a^2\sigma^2}{2}} \Phi(-a\sigma b_1)\big)^2]}{n^{k+2}e^{(k+2)a^2\sigma^2/2} \Phi(-|a|\sigma)^{k+2}} \notag\\
    &\leq \frac{2\Phi(2|a|\sigma)}{n^{k+1}e^{(k-2)a^2\sigma^2/2} \Phi(-|a|\sigma)^{k+2}} ,
\end{align}
where the first inequality holds if $\mathbbm{1}_{\{S_n\ge n e^{a^2\sigma^2/2} \Phi(-a\sigma b_1)\}}=  1$, then $\xi_n$ is also larger than $ n e^{\frac{a^2\sigma^2}{2}} \Phi(-a\sigma b_1)$, and the second inequality holds as $\EE[(S_n - 2n e^{a^2\sigma^2/2} \Phi(-a\sigma b_1))^2] = n\mathrm{Var}(y_i) \leq n \EE[y_1^2]\leq 2ne^{2a^2\sigma^2}\Phi(2|a|\sigma)$. For the second term, we can obtain that
\begin{align}\label{eq:concentration_expectation_II_upperII}
    &\EE\bigg[\frac{\big(S_n - 2n e^{\frac{a^2\sigma^2}{2}} \Phi(-a\sigma b_1)\big)^2}{\xi_n^{k+2}}\mathbbm{1}_{\{S_n < n e^{a^2\sigma^2/2} \Phi(-a\sigma b_1)\}}\bigg]\notag\\ \leq &\EE\bigg[\frac{\big(S_n - 2n e^{\frac{a^2\sigma^2}{2}} \Phi(-a\sigma b_1)\big)^2}{S_n^{k+2}}\mathbbm{1}_{\{S_n< n e^{a^2\sigma^2/2} \Phi(-a\sigma b_1)\}}\bigg]\notag\\
    \leq &\EE\bigg[\frac{\big(S_n - 2n e^{\frac{a^2\sigma^2}{2}} \Phi(-a\sigma b_1)\big)^4}{S_n^{2k+4}}\bigg]^{1/2}\sqrt{\PP\big(S_n< n e^{a^2\sigma^2/2} \Phi(-a\sigma b_1)\big)}\notag \\
    \leq &\EE[(S_n - 2n e^{a^2\sigma^2/2} \Phi(-a\sigma b_1))^8]^{1/4}\EE[y_i^{-4k-8}]^{1/4}e^{-f(a, \sigma) n}
    \leq \frac{1}{k(k+1)n^{k+1}}.
\end{align}
Here, the first inequality holds as $\xi_n > S_n$ if $S_n <  n e^{a^2\sigma^2/2} \Phi(-a\sigma b_1)$. The second and third inequalities are both derived by Cauchy-Schwarz's inequality, the facts that $S_n^{-k} \leq y_1^{-k}$, and 
$\PP\big(S_n<n e^{a^2\sigma^2/2} \Phi(-a\sigma b_1)\big) \leq e^{-f(a, \sigma) n}$ derived in~\eqref{eq:concentration_expectation_II_concentration}. The last inequality holds as the $\EE[(S_n - 2n e^{a^2\sigma^2/2} \Phi(-a\sigma b_1))^8]\leq c'_{a, \sigma, k} n^4$ by Rosenthal's inequality \citep{rosenthal1970subspaces}, and $\EE[y_1^{-4k-8}]^{1/4} \leq 2e^{2(k+2)^2a^2\sigma^2}$. Combining the results of~\eqref{eq:concentration_expectation_II_upperI} and~\eqref{eq:concentration_expectation_II_upperII}, we finish the proof for the upper bound. 
\end{proof}

\subsection{Properties of Gaussian random variables, uniform sphere random variables, log-concave random variables, and covering number on unit sphere}

\begin{lemma}\label{lemma:unit_rv_I}
Let $\mathbf a,\mathbf b$ be two independent random vectors, each distributed uniformly on the unit sphere $\SSS^{d-1}\subset\mathbb R^d$ (that is, each is a unit random vector with the rotation-invariant probability measure). Let $K=\langle\mathbf a,\mathbf b\rangle$. Then its probability density function is 
\begin{align*}
\PP(K\leq k) := f_K(k) = \frac{\Gamma\!\big(\tfrac d2\big)}{\sqrt\pi\,\Gamma\!\big(\tfrac{d-1}{2}\big)}\,(1-k^2)^{\frac{d-3}{2}}\mathbbm{1}_{\{-1\le k\le 1\}}.
\end{align*}
In addition, $K$ is independent with $\ab$, and $\bbb$ respectively. Moreover $\EE[K]=0$,  $\mathrm{Var}(K)=1/d$, and $\EE[\|(\Ib_d - \ab\ab^\top)\bbb\|_2] = \frac{\Gamma^2(\frac{d}{2})}{\Gamma(\frac{d-1}{2})\Gamma(\frac{d+1}{2})}$.
\end{lemma}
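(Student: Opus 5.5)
The plan is to reduce everything to the rotation invariance of the uniform measure on $\SSS^{d-1}$. For the density, condition on $\bbb$. By rotation invariance of the law of $\ab$, the conditional law of $K=\la\ab,\bbb\ra$ given $\bbb$ equals the law of $\la\ab,\eb_1\ra=a_1$, the first coordinate of a uniform point on the sphere. I would obtain the law of $a_1$ by one of two routes. The first represents $\ab=\gb/\|\gb\|_2$ with $\gb\sim\cN(\mathbf{0},\Ib_d)$, so that $a_1^2=g_1^2/(g_1^2+\sum_{j\ge2}g_j^2)\sim\mathrm{Beta}(\tfrac12,\tfrac{d-1}{2})$. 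The second uses the slice-area formula: the sphere cut at height $a_1=k$ is a $(d-2)$-sphere of radius $\sqrt{1-k^2}$, and the Jacobian factor $(1-k^2)^{-1/2}$ gives a density proportional to $(1-k^2)^{(d-3)/2}$. Symmetrizing the Beta law in the sign of $a_1$ and normalizing with $\int_{-1}^1(1-k^2)^{(d-3)/2}\,dk=B(\tfrac12,\tfrac{d-1}{2})=\sqrt\pi\,\Gamma(\tfrac{d-1}{2})/\Gamma(\tfrac d2)$ yields the stated constant. (The displayed formula is a density, not the CDF $\PP(K\le k)$ written on its left-hand side; I will prove the density statement.)

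For the independence claims, the same conditioning argument applies. The conditional law of $K$ given $\bbb$ does not depend on $\bbb$, so $K$ is independent of $\bbb$. By the symmetric argument, conditioning on $\ab$, $K$ is also independent of $\ab$. I will state explicitly that this is only pairwise independence; $K$ is of course not independent of $(\ab,\bbb)$ jointly.

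For the moments, $\EE[K]=0$ follows from the symmetry $k\mapsto -k$ of the density. For the variance, $\EE[K^2]=\EE[a_1^2]=\tfrac1d\sum_j\EE[a_j^2]=\tfrac1d$, since the coordinates are exchangeable and $\sum_j a_j^2=1$.

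For the last identity, note that $\|(\Ib_d-\ab\ab^\top)\bbb\|_2=\sqrt{1-K^2}$. This gives
\begin{align*}
\EE\big[\sqrt{1-K^2}\big]=\frac{\Gamma(\tfrac d2)}{\sqrt\pi\,\Gamma(\tfrac{d-1}{2})}\int_{-1}^1(1-k^2)^{\frac{d-2}{2}}\,dk=\frac{\Gamma(\tfrac d2)}{\sqrt\pi\,\Gamma(\tfrac{d-1}{2})}\cdot\frac{\sqrt\pi\,\Gamma(\tfrac d2)}{\Gamma(\tfrac{d+1}{2})},
\end{align*}
which is exactly the claimed ratio.

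The only real obstacle is bookkeeping the Beta normalization and the Jacobian correctly in the density step; every other part is a one-line symmetry argument.
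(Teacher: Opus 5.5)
Your proposal is correct and follows essentially the same route as the paper. Rotation invariance reduces $K$ to the first coordinate of a uniform unit vector, which gives the density (your slice-area computation is the paper's polar-coordinate one) and the pairwise independence; $\EE[K]=0$ and $\mathrm{Var}(K)=1/d$ follow from symmetry and $\EE[a_ia_j]=\delta_{ij}/d$; and the last identity uses $\|(\Ib_d-\ab\ab^\top)\bbb\|_2=\sqrt{1-K^2}$ with a Beta integral. Your Gaussian-normalization derivation of the density and your remark that the displayed formula is a density rather than $\PP(K\le k)$ are both sound but not needed.
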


\begin{proof}[Proof of Lemma~\ref{lemma:unit_rv_I}]
Since $\ab$ and $\bbb$ are rotation invariant, implying that for any orthogonal matrix $\Rb$, we have $\Rb\ab$ and $\Rb\bbb$ still following the unit sphere distribution. Consequently, we can always find a specific $\Rb$ such that $\Rb\ab = \eb_1$. Then we have $K =\la\ab, \bbb\ra = \la\Rb\ab, \Rb\bbb\ra = (\Rb\bbb)_1$, the first coordinate of a unit sphere random vector. And it is evident that this random variable would be independent with $\ab$, and similarly also independent with $\bbb$. In the next we derive the p.d.f. of $K$. We have shown that $K$ has the same distribution as $\ab_1$. In addition, the differential w.r.t. the polar coordinate system indicates that $\mathrm{d}S = \sin^{d-2}(\alpha_1) \sin^{d-3}(\alpha_2)\cdots\sin(\phi_{d-2})\mathrm{d}\alpha_1\cdots \mathrm{d}\alpha_{d-2}$, where $S$ is the area of unit sphere $\SSS^{d-1}\subset\mathbb R^d$, and $\alpha_1, \ldots, \alpha_{d-1}$ are the angles of the polar coordinate system. Since $\ab_1 = \cos \alpha_1$, and the marginal density function of $\alpha_1$ is proportional to $\sin^{d-2}(\alpha_1)$, we have 
\begin{align*}
    f_{K}(k) \propto \sin^{d-2}(\alpha_1)\frac{1}{\sin(\alpha_1)} = (1-k^2)^{\frac{d-3}{2}}\mathbbm{1}_{\{-1\le k\le 1\}}.
\end{align*}
In addition, we have $\int_{-1}^1 (1-t^2)^{\frac{d-3}{2}}\mathrm{d}t
=2\int_0^1 (1-t^2)^{\frac{d-3}{2}}\mathrm{d}t
= \mathrm{Beta}\big(\tfrac12,\tfrac{d-1}{2}\big)$, which proves the p.d.f.
of $K$. By symmetry, $f_K$ is an even function, hence $\mathbb E[K]=0$. To compute $\mathbb E[K^2]$, one can use either the density or a coordinate argument. Using coordinates, write $\mathbf a=(a_1,\dots,a_d)$ and $\mathbf b=(b_1,\dots,b_d)$. By independence and symmetry of the uniform spherical law,
\[
\mathbb E[K^2]=\mathbb E\big[\langle\mathbf a,\mathbf b\rangle^2\big]
=\mathbb E\Big[\sum_{i,j} a_i a_j b_i b_j\Big]
=\sum_{i,j}\mathbb E[a_i a_j]\ \mathbb E[b_i b_j].
\]
For a uniform unit vector on $\SSS^{d-1}$ we have $\mathbb E[a_i a_j]=\frac{1}{d}\delta_{ij}$. Hence
\[
\mathbb E[K^2]=\sum_{i,j}\frac{1}{d}\delta_{ij}\cdot\frac{1}{d}\delta_{ij}
=\sum_{i}\frac{1}{d^2}=\frac{1}{d}.
\]
Thus $\operatorname{Var}(K)=\mathbb E[K^2]-(\mathbb E[K])^2=1/d$. In addition, we can calculate that 
\begin{align*}
    \|(\Ib_d - \ab\ab^\top)\bbb\|_2^2 = \|\bbb\|_2^2  - 2\la\ab, \bbb\ra^2 + \la\ab, \bbb\ra^2\|\ab\|_2^2 = 1 - \la\ab, \bbb\ra^2.
\end{align*}
Then by the density function of $K=\la\ab, \bbb\ra$ demonstrated previously, we can easily derive that 
\begin{align*}
    \EE[\|(\Ib_d - \ab\ab^\top)\bbb\|_2] = \EE[\sqrt{1 - \la\ab, \bbb\ra^2}] = \frac{\Gamma^2(\frac{d}{2})}{\Gamma(\frac{d-1}{2})\Gamma(\frac{d+1}{2})}.
\end{align*}
In addition, by Gautschi's inequality, we have 
\begin{align*}
    \frac{\Gamma^2(\frac{d}{2})}{\Gamma(\frac{d-1}{2})\Gamma(\frac{d+1}{2})}  = \frac{\Gamma^2(\frac{d}{2})}{\frac{d-1}{2}\Gamma^2(\frac{d-1}{2})}\geq \frac{\frac{d-1}{2}}{d/2} = 1-\frac{1}{d}.
\end{align*}
This completes the proof of the lemma.
\end{proof}

\begin{lemma}\label{lemma:independence_unif_normal}
    Let $\ab$ be a random vector distributed uniformly on the unit sphere $\SSS^{d-1}\subset\mathbb R^d$, and $\xb\in \RR^d$ be a standard Gaussian random vector. Then it holds that $\la\ab, \xb\ra$ is independent with $\ab$.
\end{lemma}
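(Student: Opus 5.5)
The plan is to condition on $\ab$ and use the rotation invariance of the standard Gaussian. Write $\ab$ as an arbitrary fixed unit vector $\mathbf{u}\in\SSS^{d-1}$. Independence of $\ab$ and $\xb$ gives that the conditional law of $\la\ab,\xb\ra$ given $\ab=\mathbf{u}$ equals the law of $\la\mathbf{u},\xb\ra$. Since $\xb\sim\cN(\mathbf{0},\Ib_d)$, the scalar $\la\mathbf{u},\xb\ra$ is Gaussian with mean $0$ and variance $\mathbf{u}^\top\Ib_d\mathbf{u}=\|\mathbf{u}\|_2^2=1$. Equivalently, take an orthogonal $\Rb$ with $\Rb\mathbf{u}=\eb_1$, so that $\la\mathbf{u},\xb\ra=\la\eb_1,\Rb\xb\ra$, and note that $\Rb\xb\sim\cN(\mathbf{0},\Ib_d)$. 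In either form, the conditional distribution is $\cN(0,1)$ and does not depend on $\mathbf{u}$.

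To convert this into independence, I will check that the joint law factorizes. For bounded measurable $g$ on $\RR$ and $h$ on $\SSS^{d-1}$, Fubini (using the independence of $\ab$ and $\xb$) gives
\[
\EE\big[g(\la\ab,\xb\ra)h(\ab)\big]=\EE_{\ab}\Big[h(\ab)\,\EE_{\xb}\big[g(\la\ab,\xb\ra)\big]\Big]=\EE_{\ab}\big[h(\ab)\big]\cdot\EE_{Z\sim\cN(0,1)}\big[g(Z)\big].
\]
Taking $h\equiv1$ shows that $\la\ab,\xb\ra\sim\cN(0,1)$ marginally. The display therefore equals $\EE[h(\ab)]\,\EE[g(\la\ab,\xb\ra)]$, which is exactly independence.

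I do not expect a genuine obstacle here. The only point that needs care is making the conditioning step rigorous, and the Fubini factorization above handles it without invoking regular conditional probabilities. The same argument extends with no changes to $\xb\sim\cN(\mathbf{0},\sigma^2\Ib_d)$, which is the case actually used in the main proofs; the only difference is that the limiting law becomes $\cN(0,\sigma^2)$. It also gives joint independence of the vector $\Ab^\top\xb$ from $(\btheta_*,\btheta_0)$ when $\Ab$ is an orthogonal frame measurable in those vectors.
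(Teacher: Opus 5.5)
Your proposal is correct and follows the same route as the paper, which observes that $\la\ab,\xb\ra$ conditioned on $\ab$ is $\cN(0,1)$ whatever the value of $\ab$, and concludes independence from that. Your Fubini factorization makes this conditioning step rigorous; like the paper, it relies on the implicit assumption that $\ab$ and $\xb$ are independent.
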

\begin{proof}[Proof of Lemma~\ref{lemma:independence_unif_normal}]
    It is evident that for any fixed $\ab \in \SSS^{d-1}$,  the conditional distribution $\la\ab, \xb\ra|\ab\sim\cN(0, 1)$.
    Since this conditional distribution does not depend on the specific choice of $\ab$, the random variable $\la\ab, \xb\ra$ is independent of the random vector $\ab$. This completes the proof
\end{proof}
\begin{lemma}\label{lemma:independence_sign_normal}
    Let $x_1, x_2$ be two Gaussian random variables with zero mean, and $y=\sign(x_1)$. Then $y$ is independent with $y\cdot x_1$ and $y \cdot x_2$. Moreover, $y\cdot x_2$ also follows the normal distribution, which has zero mean and the same variance with $x_2$.
\end{lemma}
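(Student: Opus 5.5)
The plan is to reduce everything to a statement about the joint law of $(y, |x_1|, y x_2)$. This law is computed by conditioning on the sign of $x_1$ and using the symmetry of the centered Gaussian vector $(x_1,x_2)$ under the global sign flip $(x_1,x_2)\mapsto(-x_1,-x_2)$.

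First, recall that $y x_1 = |x_1|$. Let $(x_1,x_2)$ be jointly Gaussian with zero mean, which is the setting in which the lemma is used: the coordinates of $\Ab^\top\xb_i$ are jointly Gaussian, and in fact independent. Fix Borel sets $B\subseteq\RR$ and $C\subseteq\RR^2$. We will show that
\begin{align*}
\PP\big(y=1,\ (|x_1|, y x_2)\in C\big)=\PP\big(y=-1,\ (|x_1|, y x_2)\in C\big).
\end{align*}
On $\{y=1\}$ the pair $(|x_1|, yx_2)$ equals $(x_1,x_2)$. On $\{y=-1\}$ it equals $(-x_1,-x_2)$. Because $(-x_1,-x_2)\overset{d}{=}(x_1,x_2)$, we get
\begin{align*}
\PP\big(x_1<0,\ (-x_1,-x_2)\in C\big)=\PP\big(-x_1<0,\ (x_1,x_2)\in C\big)=\PP\big(x_1>0,\ (x_1,x_2)\in C\big),
\end{align*}
which is exactly the claimed equality. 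The event $\{x_1=0\}$ has probability zero, so it can be ignored. Summing the two equal terms gives
\begin{align*}
\PP\big(y=\pm1,\ (|x_1|,yx_2)\in C\big)=\tfrac12\,\PP\big((|x_1|,yx_2)\in C\big).
\end{align*}
Since $\PP(y=\pm1)=1/2$, this factorization shows that $y$ is independent of the pair $(y x_1, y x_2)$, and in particular of each coordinate separately.

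For the marginal of $y x_2$, take $C=\RR\times D$ in the identity above. This gives $\PP(yx_2\in D)=2\PP(x_1>0, x_2\in D)$. Applying the same sign-flip symmetry once more yields
\begin{align*}
\PP(x_1>0,\ x_2\in D)+\PP(x_1<0,\ x_2\in D)=\PP(x_2\in D)
\end{align*}
together with
\begin{align*}
\PP(x_1<0,\ x_2\in D)=\PP(x_1>0,\ x_2\in -D).
\end{align*}
Combining these, the law of $yx_2$ is the symmetrized mixture $\tfrac12[\mathcal{L}(x_2\mid x_1>0)+\mathcal{L}(-x_2\mid x_1>0)]$.

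The main obstacle is that this mixture is Gaussian only if $x_2$ is independent of $x_1$. Otherwise, for instance when $x_2=x_1$, the variable $yx_2=|x_1|$ is not normal. So I would make explicit the hypothesis actually used in the paper, namely that $x_1$ and $x_2$ are independent. Under that hypothesis, conditioning on $x_1$ gives $yx_2\mid x_1\sim \pm x_2\overset{d}{=}x_2$, since $x_2$ is symmetric. Hence $yx_2\sim\cN(0,\mathrm{Var}(x_2))$, and moreover $yx_2$ is independent of $x_1$, and therefore of $(y,|x_1|)$. The same conditioning argument applies coordinatewise to $\tilde\zb_i$. This gives the full mutual independence of the coordinates that is invoked in the proof of Lemma~\ref{lemma:update_C_1}.
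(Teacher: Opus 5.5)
Your final argument is correct. Its last step is essentially the paper's, but your analysis around it is more careful. The paper conditions on $y$ and asserts that on $\{y=1\}$ one has $yx_2=x_2\sim\cN(0,1)$, and on $\{y=-1\}$ one has $yx_2=-x_2\sim\cN(0,1)$. This silently uses that $x_2$ is independent of $\sign(x_1)$, a hypothesis missing from the statement, and your example $x_2=x_1$ shows the normality claim fails without it. The paper also simply declares $y\perp|x_1|$ to be clear.

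Your route buys two things. First, the sign-flip argument gives $y$ independent of the pair $(|x_1|,yx_2)$ using only $(x_1,x_2)\overset{d}{=}(-x_1,-x_2)$. This separates what survives correlation from what does not. Some such assumption is genuinely needed: take $x_2=\sigma\Phi^{-1}(\{\Phi(x_1/\sigma)+\tfrac14\})$, with $\{\cdot\}$ the fractional part. Both marginals are $\cN(0,\sigma^2)$, yet $y=1$ exactly when $|yx_2|>\sigma\Phi^{-1}(3/4)$. Second, conditioning on $x_1$ instead of $y$ gives $yx_2$ independent of $x_1$, hence mutual independence of $y$, $|x_1|$, $yx_2$. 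The lemma as stated only relates $y$ to $yx_1$ and to $yx_2$. The factorizations in the proof of Lemma~\ref{lemma:update_C_1}, such as pulling $\EE[\tilde\zb_{i,j}^2]$ out of an expectation that depends on $\tilde\zb_{i,1},\tilde\zb_{i,2}$, need this joint independence. For that use, condition on $x_1$ for the whole vector at once: $y\,(x_2,\dots,x_d)$ is then $\pm$ a centered isotropic Gaussian vector and so has that law. Applying the two-variable statement pair by pair would not give joint independence.

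One intermediate claim is wrong and should be deleted or corrected. Your identity $\PP(yx_2\in D)=2\PP(x_1>0,\,x_2\in D)$, together with $\PP(x_1>0)=\tfrac12$, already says that the law of $yx_2$ is $\mathcal{L}(x_2\mid x_1>0)$. The two further identities you invoke are total probability and the flip $\PP(x_1<0,x_2\in D)=\PP(x_1>0,x_2\in -D)$. These show instead that the symmetrized mixture $\tfrac12[\mathcal{L}(x_2\mid x_1>0)+\mathcal{L}(-x_2\mid x_1>0)]$ equals $\mathcal{L}(x_2)$. So that mixture is always Gaussian, which contradicts your next sentence. For $x_2=x_1$ the mixture is $\cN(0,\sigma^2)$, while $yx_2=|x_1|$ is half-normal. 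The correct diagnosis is that $\mathcal{L}(x_2\mid x_1>0)$ is Gaussian only when $x_1,x_2$ are uncorrelated. For standardized variables with correlation $\rho$, write $x_2=\rho x_1+\sqrt{1-\rho^2}\,w$ with $w$ independent. Then $yx_2=\rho|x_1|+\sqrt{1-\rho^2}\,yw$, a skew-normal law with mean $\rho\sqrt{2/\pi}$. Your final paragraph does not use this remark, so the proof stands once it is fixed.
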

\begin{proof}[Proof of Lemma~\ref{lemma:independence_sign_normal}]
    W.L.O.G., we assume that $x_1, x_2$ are standard Gaussian random variables. We first prove that $y$ is independent with $y\cdot x_1$. It is clear that $y\cdot x_1=|x_1|$, which is independent with $y$. 
% Then for any $x\geq 0$, 
% \begin{align*}
%     \PP(y\cdot z_1\leq x)=\PP(|z_1|\leq x)=\PP(|z_1|\leq x|y),
% \end{align*}
% which indicate that $y\cdot z_1$ has the same distribution with $y\cdot z_1|y$. Therefore, $y$ is independent with $y\cdot z_1$. 
Next, we prove that $y$ is independent with $y\cdot x_2$ and $y\cdot x_2$ follows the standard normal distribution. If $y =1$ then $y\cdot x_2 = x_2\sim\cN(0, 1)$. On the other hand when $y=-1$, $y\cdot x_2=-x_2\sim\cN(0, 1)$. Therefore, $y\cdot x_2|y\sim\cN(0, 1)$. Since this conditional distribution does not depend on $y$, we have $y\cdot x_2$ is independent with $y$, and follows the standard Gaussian distribution.  This completes the proof. 
% We denote $\Phi(\cdot)$ the cumulative density function (c.d.f.) of the standard normal distribution. Then for any $z\in \RR$,
% \begin{align*}
%     \PP(y\cdot x_2\leq z)&=\PP(z_2\leq x; y=1) + \PP(z_2\geq -x; y=-1)\\
%     &=\PP(z_2\leq x)\cdot \PP(y=1) + \PP(z_2\geq -x)\cdot\PP(y=-1)\\
%     &=\frac{1}{2}\Phi(x) + \frac{1}{2}\big(1-\Phi(-x)\big) = \Phi(x),
% \end{align*}
% where the second equality holds by the independence between $y$ and $z_2$, and the last equality holds by the symmetry of standard normal distribution that $\Phi(-x) = 1 - \Phi(x)$. This proves that $y\cdot z_2\sim N(0, 1)$. Moreover, it is obvious that   
% \begin{align*}
%     \PP(y\cdot z_2\leq x|y=1)&=\PP(z_2\leq x| y=1)=\Phi(x),
% \end{align*}
% and 
% \begin{align*}
%     \PP(y\cdot z_2\leq x|y=-1)&=\PP(z_2\geq -x| y=1)=1-\Phi(-x)=\Phi(x),
% \end{align*}
% which indicate that $y\cdot z_2$ has the same distribution with $y\cdot z_2|y$. Therefore, we complete the proof of Lemma~\ref{lemma:standard_normal_independence}.
\end{proof}

\begin{lemma}[Lemma 2 in~\citet{balcan2013active}]\label{lemma:bdd_density_log_concave}
    Suppose that $a$ is a one-dimensional isotropic log-concave random variable, with $f_a(x)$ denoting its probability density function, then $f_a(x)\leq 1$ for all $x\in \RR$.
\end{lemma}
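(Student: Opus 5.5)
The plan is to prove the scale-invariant inequality $M^2\,\mathrm{Var}(a)\le 1$, where $M=\sup_x f_a(x)$, for every one-dimensional log-concave density. Isotropy gives $\mathrm{Var}(a)=1$, so this inequality yields $M\le 1$, which is the statement. The direction matters: we need an \emph{upper} bound on the variance in terms of $1/M^2$. The exponential density $e^{-x}\mathbbm{1}_{\{x\ge 0\}}$ has $M=1$ and $\mathrm{Var}=1$, so the constant $1$ is sharp and the argument cannot afford any slack. Alternatively, since this is exactly Lemma~2 of \citet{balcan2013active}, the shortest route is to cite it. The sketch below is how I would reproduce it.

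\textbf{Step 1 (reduction to one-sided profiles).} Let $m$ be a mode, so $f_a(m)=M$; log-concavity gives existence and unimodality. Define $h_\pm(t)=f_a(m\pm t)/M$ for $t\ge 0$. Each is log-concave and nonincreasing with $h_\pm(0)=1$. Set $A^\pm_k=\int_0^\infty t^k h_\pm(t)\,\mathrm{d}t$. Then $MA^+_0+MA^-_0=1$, $\EE[a]-m=M(A^+_1-A^-_1)$, and
\begin{align*}
M^2\mathrm{Var}(a)=M^3\big(A^+_2+A^-_2\big)-M^4\big(A^+_1-A^-_1\big)^2 .
\end{align*}

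\textbf{Step 2 (moment inequalities for the profiles).} For a log-concave $h$ on $[0,\infty)$ with $h(0)=1$, I would use two facts. The first is the Borell--Berwald inequality: $k\mapsto A_k/\Gamma(k+1)$ is log-concave. The second is the boundary comparison $A_1\le A_0^2$, and more generally $A_k\le k!\,A_0^{k+1}$. Both are proved by comparing $h$ with the exponential $e^{-t/A_0}$, which has the same mass: the difference $h-e^{-t/A_0}$ changes sign once (from $+$ to $-$ or vice versa), and integrating against increasing test functions fixes the direction of the inequality. Combining $A_1^2\ge A_0A_2/2$ from Borell with the bounds in terms of $A_0$ controls each side.

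\textbf{Step 3 (the optimization, main obstacle).} Insert these bounds into the expression for $M^2\mathrm{Var}(a)$ and maximize over admissible $(A^\pm_0,A^\pm_1,A^\pm_2)$ subject to $M(A_0^++A_0^-)=1$. The maximum should be attained when one side vanishes, i.e.\ the exponential, with value $1$. This is the delicate part: the cross term $-M^4(A_1^+-A_1^-)^2$ is needed, because the crude bound $\EE[(a-m)^2]\le 2/M^2$ only gives $M\le\sqrt2$. My first attempt would be to write $p=MA_0^+$ and $x_\pm=M^2A_1^\pm\in[0,p^2]$ (respectively $[0,(1-p)^2]$), and to use $M^3A_2^\pm\le 2x_\pm^2/p$ (respectively $2x_-^2/(1-p)$). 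This reduces the claim to the elementary inequality $2x_+^2/p+2x_-^2/(1-p)-(x_+-x_-)^2\le 1$ on a box, to be checked by convexity at the vertices. If the vertex check fails somewhere, I would fall back on citing \citet{balcan2013active}.
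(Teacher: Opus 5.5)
Your argument is correct, and it takes a different route from the paper's. The paper gives no proof: it simply imports the statement from \citet{balcan2013active}. You instead prove the scale-invariant inequality $M^2\,\mathrm{Var}(a)\le 1$ for every one-dimensional log-concave law. That gives a self-contained argument with the sharp constant, identifies the one-sided exponential as the extremal case, and removes the need for isotropy. For instance, in the proof of Lemma~\ref{lemma:margin_w.h.p} one could bound the density of each projection by $1/\sigma_l$ directly, without standardizing. The citation buys brevity, which is all the paper needs, since only $f_a\le 1$ is ever used.

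The step you left hedged does close. Set $q=1-p$ and $F(x_+,x_-)=2x_+^2/p+2x_-^2/q-(x_+-x_-)^2$. Its Hessian has diagonal entries $4/p-2>0$ and $4/q-2>0$ and determinant $8/(pq)>0$. So $F$ is convex, and its maximum over $[0,p^2]\times[0,q^2]$ is attained at a vertex. The vertex values are $0$, $p^3(2-p)$, $q^3(2-q)$, and $2(p^3+q^3)-(p-q)^2=1-2pq$, all at most $1$. The degenerate cases $p\in\{0,1\}$, where one side carries no mass, reduce to $x_\pm^2\le 1$.

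Two points in Step~2 need more care. First, $h-e^{-t/A_0}$ always changes sign from $+$ to $-$, never the reverse, because $\log h(t)+t/A_0$ is concave and vanishes at $t=0$. That orientation is exactly what yields $A_k\le k!\,A_0^{k+1}$.

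Second, and more important, the inequality you actually use from Borell, $A_0A_2\le 2A_1^2$, does not follow from comparison with $e^{-t/A_0}$. That comparison only gives $A_2\le 2A_0^3$, which brings you back to $M\le\sqrt{2}$. Either cite Borell's lemma, or compare $h$ with $g(t)=ce^{-\lambda t}$, where $\lambda=A_0/A_1$ and $c=A_0^2/A_1$, so that $g$ matches both $A_0$ and $A_1$. The set $\{h\ge g\}$ is an interval $[s_1,s_2]$; if it were unbounded, the two matched moments would force $h=g$ almost everywhere. Hence $(t-s_1)(t-s_2)(h-g)$ is pointwise nonpositive, and integrating gives $A_2\le\int_0^\infty t^2 g=2A_1^2/A_0$.
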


\begin{lemma}[Lemma 3 and Theorem 4 in~\citet{balcan2013active}]\label{lemma:angle_generalizatation_bound}
Suppose that $\ub, \vb\in\SSS^{d-1}$ are two $d$-dimensional unit vectors. In addition, let $\xb\in\RR^d$ be a random vector generated from an isotropic log-concave distribution, then there exist two absolute positive constants $c_-\leq c_{+}$ such that
\begin{align*}
    c_- \cdot\angle(\ub, \vb)\leq \PP\big(\sign(\la\ub, \xb\ra)\neq \sign(\la\vb, \xb\ra)\big) \leq c_+ \cdot\angle(\ub, \vb),
\end{align*}
where $\angle(\ub, \vb)$ denotes the angle between unit vectors $\ub$ and $\vb$.
\end{lemma}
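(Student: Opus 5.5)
The plan is to reduce to a two-dimensional problem and then use pointwise density bounds for isotropic log-concave distributions in fixed dimension. Let $\theta=\angle(\ub,\vb)\in[0,\pi]$. If $\ub=\pm\vb$ the statement is trivial: the disagreement probability is $0$ or $1$, and the constants can be adjusted. Otherwise, let $E=\mathrm{span}\{\ub,\vb\}$ and let $\Pb_E$ be the orthogonal projection onto $E$. The event $\{\sign(\la\ub,\xb\ra)\neq\sign(\la\vb,\xb\ra)\}$ depends on $\xb$ only through $\Pb_E\xb$. It is the preimage of a double wedge $\cW\subset E$ with vertex at the origin and total opening angle $2\theta$, i.e.\ two opposite sectors of angle $\theta$ each.

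\textbf{Step 1: the projection is isotropic log-concave.} By the Prékopa--Leindler inequality, marginals of log-concave densities are log-concave. Isotropy is preserved under orthogonal projection. Hence $\Pb_E\xb$ has an isotropic log-concave density $g$ on $E\cong\RR^2$, and the disagreement probability equals $\int_{\cW} g$.

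\textbf{Step 2: two-dimensional density bounds.} I would invoke the standard facts of Lovász and Vempala for isotropic log-concave densities in fixed dimension $m$ (here $m=2$). First, $g(\yb)\le C\,e^{-c\|\yb\|_2}$ for all $\yb$. Second, $g(\yb)\ge c'$ whenever $\|\yb\|_2\le r_0$. Here $C,c,c',r_0>0$ are absolute constants. This step is the main obstacle: it is where log-concavity is genuinely used, and the argument does not go through with merely bounded densities. I would first try to cite these results directly rather than reprove them. Lemma~\ref{lemma:bdd_density_log_concave} is their one-dimensional analogue.

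\textbf{Step 3: integrate in polar coordinates.} For the upper bound,
\begin{align*}
\int_{\cW} g \;=\; \int_{\phi\in\cW}\int_0^\infty g(r\omega_\phi)\,r\,\mathrm{d}r\,\mathrm{d}\phi \;\le\; 2\theta\cdot C\int_0^\infty e^{-cr}r\,\mathrm{d}r \;=\; \frac{2C}{c^2}\,\theta .
\end{align*}
For the lower bound, restrict the integral to the disk of radius $r_0$:
\begin{align*}
\int_{\cW} g \;\ge\; c'\,\mathrm{Area}\big(\cW\cap\{\|\yb\|_2\le r_0\}\big) \;=\; c'\,\theta\, r_0^2 .
\end{align*}
Taking $c_-=c'r_0^2$ and $c_+=2C/c^2$ then gives the claim.
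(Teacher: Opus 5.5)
Your proof is correct. The paper does not prove this lemma itself but cites \citet{balcan2013active}, and your argument is essentially the one given there: project onto $\mathrm{span}\{\ub,\vb\}$, use the two-dimensional Lov\'asz--Vempala density bounds for the isotropic log-concave marginal (an exponentially decaying upper bound, and a constant lower bound near the origin), and integrate over the double wedge of total angle $2\theta$.
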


\begin{lemma}[Paouris’ inequality]\label{lemma:Paouris_inequality}
Suppose that $\xb\in \RR^d$ follows $d$-dimensional isotropic log-concave distribution, then for any $s>0$, 
\begin{align*}
    \PP\big(\|\xb\|_2\geq cs\sqrt{d}\big)\leq e^{-s\sqrt{d}},
\end{align*}
where $c$ is an absolute positive constant.
\end{lemma}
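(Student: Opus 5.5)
The plan is to derive the tail bound from a moment estimate plus Markov's inequality. Write $I_p(\xb)=(\EE\|\xb\|_2^p)^{1/p}$. The claim is only meaningful, and is only used in the proof of Lemma~\ref{lemma:data_norm_w.h.p}, in the regime $s\geq 1$: for small $s$ the threshold $cs\sqrt d$ falls below the typical size $\sqrt d$ of $\|\xb\|_2$. So I would prove it for $s\ge 1$, which is the only case needed there. For $s<1$ I would note that the statement must be read with the threshold $c\max\{s,1\}\sqrt d$.

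The key ingredient is the moment form of Paouris' theorem. For an isotropic log-concave vector in $\RR^d$ and every $p\ge 1$,
\begin{align*}
I_p(\xb)\le C\Big(I_2(\xb)+\sup_{\btheta\in\SSS^{d-1}}\big(\EE|\la\btheta,\xb\ra|^p\big)^{1/p}\Big),
\end{align*}
where $C$ is an absolute constant and $I_2(\xb)=\sqrt d$ by isotropy. I would control the second term with Borell's lemma. Each marginal $\la\btheta,\xb\ra$ is a one-dimensional isotropic log-concave variable, hence $\psi_1$ with an absolute constant. This gives $(\EE|\la\btheta,\xb\ra|^p)^{1/p}\le C'p$ uniformly in $\btheta$, and therefore $I_p(\xb)\le C''(\sqrt d+p)$.

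Next, take $p=s\sqrt d$ and apply Markov's inequality:
\begin{align*}
\PP\big(\|\xb\|_2\ge e\,I_p(\xb)\big)\le e^{-p}=e^{-s\sqrt d}.
\end{align*}
Since $s\ge1$, we have $e\,I_p(\xb)\le eC''(\sqrt d+s\sqrt d)\le 2eC''s\sqrt d$. Setting $c=2eC''$ then gives the claim.

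The main obstacle is the displayed moment inequality itself. In particular, the case $p\le c_0\sqrt d$, namely $I_p(\xb)\le C\sqrt d$, is the deep part. Paouris proves it through the $L_p$-centroid bodies $Z_p(\xb)$ together with a Dvoretzky-type estimate showing that $Z_p$ has many Euclidean sections of dimension about $p^2$. I would not reprove this. I would invoke it from Paouris (2006), as the paper does for the other log-concave facts borrowed from \citet{balcan2013active}. The remaining steps, namely the Borell $\psi_1$ bound and the Markov step, are routine.
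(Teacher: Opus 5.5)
Your proposal is correct and takes the same route as the source the paper cites for this lemma, \citet{adamczak2012short}: their main result is the moment inequality you invoke (with $\EE\|\xb\|_2\le\sqrt d$ in place of $I_2(\xb)$), and the tail bound follows from it as you describe, via the $\psi_1$ bound $\sup_{\btheta}(\EE|\la\btheta,\xb\ra|^p)^{1/p}\lesssim p$ and Markov's inequality at $p=s\sqrt d$. You are also right that the lemma needs $s\ge 1$, as in the original theorem: for $s$ of order $1/c$ the statement fails for large $d$, since Paley--Zygmund together with Borell's lemma gives $\PP(\|\xb\|_2\ge\sqrt d/2)\ge c_0$ for an absolute constant $c_0>0$, and this restriction is harmless for the lemma's use in Lemma~\ref{lemma:data_norm_w.h.p}.
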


\begin{lemma}[Lemma 5.2 in \citet{vershynin2010introduction}]\label{lemma:covering_number}
    Let $N(\SSS^{d-1}, \epsilon)$ denotes the $\epsilon$-net on unit Euclidean sphere $\SSS^{d-1}$ equipped
with the Euclidean metric. Then it holds that for any $\epsilon >0$:
\begin{align*}
    \big|N(\SSS^{d-1}, \epsilon)\big| \leq \bigg(1+\frac{2}{\epsilon}\bigg)^d.
\end{align*}
\end{lemma}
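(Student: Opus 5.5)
We would prove this by the standard volumetric argument. First we construct the net as a maximal $\epsilon$-separated subset of $\SSS^{d-1}$. Call a set $\cN\subset\SSS^{d-1}$ $\epsilon$-separated if $\|\ub-\vb\|_2>\epsilon$ for all distinct $\ub,\vb\in\cN$. The key step is to show that every $\epsilon$-separated set is finite with cardinality at most $(1+2/\epsilon)^d$. Given this, we build a maximal such set greedily: start from any point and keep adding a point of $\SSS^{d-1}$ at distance more than $\epsilon$ from all current points while one exists. Because of the cardinality bound, this procedure terminates after at most $(1+2/\epsilon)^d$ steps.

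Maximality immediately gives the covering property. If some $\xb\in\SSS^{d-1}$ had $\|\xb-\vb\|_2>\epsilon$ for every $\vb\in\cN$, then $\cN\cup\{\xb\}$ would still be $\epsilon$-separated, contradicting maximality. Hence every point of the sphere lies within distance $\epsilon$ of $\cN$, i.e.\ $\cN$ is an $\epsilon$-net $N(\SSS^{d-1},\epsilon)$.

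For the cardinality bound we compare volumes in $\RR^d$. Around each $\vb\in\cN$ place the open Euclidean ball $B(\vb,\epsilon/2)$. By separation and the triangle inequality these balls are pairwise disjoint. Since $\|\vb\|_2=1$, each is contained in the ball $B(\mathbf{0},1+\epsilon/2)$. Since Lebesgue volume scales as $r^d$, we get
\begin{align*}
|\cN|\,(\epsilon/2)^d\,\mathrm{vol}(B(\mathbf{0},1))\le (1+\epsilon/2)^d\,\mathrm{vol}(B(\mathbf{0},1)),
\end{align*}
which rearranges to $|\cN|\le (1+2/\epsilon)^d$.

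There is no real obstacle here. The only point requiring care is the order of the argument: the volume bound must be established for arbitrary (not a priori finite) separated sets before the greedy construction is invoked, so that the construction is guaranteed to stop. It suffices to apply the volume argument to every finite subset of $\cN$.

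Finally, in the form used in the proof of Theorem~\ref{thm:inference}, for $\epsilon\le 1$ we have $1+2/\epsilon\le 3/\epsilon$. This gives $|N(\SSS^{d-1},\epsilon)|\le (3/\epsilon)^d$.
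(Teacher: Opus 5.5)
Your proof is correct and is exactly the standard volumetric argument behind Lemma 5.2 of \citet{vershynin2010introduction}, which the paper cites without reproducing: a maximal $\epsilon$-separated set is an $\epsilon$-net by maximality, and its disjoint $\epsilon/2$-balls fit inside the ball of radius $1+\epsilon/2$. You rightly read the statement as the existence of a net of that size, you bound every separated set before running the greedy construction, and your reduction to $(3/\epsilon)^d$ for $\epsilon\le 1$ matches how the lemma is used in the proof of Theorem~\ref{thm:inference}.
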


The proofs of Lemmas~\ref{lemma:bdd_density_log_concave} and~\ref{lemma:angle_generalizatation_bound} can be found in \citet{balcan2013active}, the proof of Lemma~\ref{lemma:Paouris_inequality} can be found in \citet{adamczak2012short}, and the proof of Lemma~\ref{lemma:covering_number} can be found in~\citet{vershynin2010introduction}.

\bibliography{ref}

\bibliographystyle{ims}

\end{document}